\title{Nonparametric General Reinforcement Learning}
\author{Jan Leike}
\def\keywords{%
Bayesian methods,
sequence prediction,
merging,
general reinforcement learning,
universal artificial intelligence,
AIXI,
Thompson sampling,
knowledge-seeking agents,
Pareto optimality,
intelligence,
asymptotic optimality,
computability,
reflective oracle,
grain of truth problem,
Nash equilibrium%
}
\date{\today}
\newif\ifprint
\newcommand{\mynewtheorem}[2]{
	\newaliascnt{#1}{dummy}
	\newtheorem{#1}[#1]{#2}
	\aliascntresetthe{#1}
	\expandafter\def\csname #1autorefname\endcsname{#2}
}
\theoremstyle{plain}
\theoremstyle{definition}
\colorlet{mycolor}{purple!70!black}
\colorlet{myblue}{blue!70!black}
\def\linkcolor{black}
\def\linkcolor{myblue}
\newcommand{\falsequote}[2]{%
	\vspace{-5em}
	\textit{#2}
	\hfill --- #1
	\vspace{5em}
	\NoIndentAfterThis
}
\newcommand{\assref}[2]{\hyperref[#2]{\autoref*{#1}\ref*{#2}}}
\def\one{{\mathds{1}}}           
\DeclareMathOperator*{\xor}{xor} 
\def\F{\mathcal{F}}              
\def\Foo{\F_\infty}              
\def\Km{K\!m}                    
\def\MDL{\mathrm{MDL}}           
\def\Mrefl{\M_{\mathrm{refl}}}
\def\Mcomp{\M_{\mathrm{comp}}^{\mathrm{CCM}}}   
\def\Mlsc{\M_{\mathrm{LSC}}}     
\def\eps{\varepsilon}
\def\S{\mathcal{S}}              
\def\Ent{{\mathrm{Ent}}}         
\def\IG{{\mathrm{IG}}}           
\def\BayesExp{{\mbox{BayesExp}}} 
\def\Bernoulli{{\mathrm{\mbox{Bernoulli}}}} 
\begin{document}


\addcontentsline{toc}{chapter}{Title Page}
\pagestyle{empty}
\thispagestyle{empty}

\begin{titlepage}
  \enlargethispage{2cm}
  \begin{center}
    \makeatletter
    \Huge\textbf{\@title} \\[.4cm]
    \Huge\textbf{\thesisqualifier} \\[2.5cm]
    \huge\textbf{\@author} \\[9cm]
    \makeatother
    \LARGE A thesis submitted for the degree of \\
    Doctor of Philosophy \\
    at the \\
    Australian National University \\[2cm]
    \thismonth
  \end{center}
\end{titlepage}


\cleardoublepage
\include{frontmatter}

\cleardoublepage
\pagestyle{empty}
\section*{Acknowledgements}

There are many without whom this thesis would not have been possible.
I sincerely hope that
this page is not the way they learn how grateful I am to them.
I thank in particular \dots

\begin{itemize}[\dots]
\item first and foremost, Marcus Hutter:
	he is an amazing supervisor;
	always very supportive of my (unusual) endeavors,
	spent countless hours reading my drafts
	with a impressive attention to detail.
	I am also grateful to him for forcing me to be absolutely rigorous
	in my mathematical arguments, and, of course,
	for developing the theory of universal AI
	without which this thesis would not have existed.
	I could not have picked a better supervisor.
\item the Australian National University
	for granting me scholarships
	that let me pursue my academic interests unrestricted
	and without any financial worries.
\item Csaba Szepesvári and the University of Alberta
	for hosting me for three months.
\item Matthias Heizmann and the University of Freiburg
	for hosting me while I was traveling in Europe.
\item the Machine Intelligence Research Institute
	for enabling me to run MIRIx research workshops.
\item CCR, UAI, Google DeepMind, ARC, MIRI, and FHI for supporting my travel.
\item Tor Lattimore for numerous explanations, discussions, and pointers
	that left me with a much deeper understanding
	of the theory of reinforcement learning.
\item Laurent Orseau for interesting discussions, encouragement,
	and for sharing so many intriguing ideas.
\item my fellow students:
	Mayank Daswani, Tom Everitt, Daniel Filan, Roshan \text{Shariff}, Tian Kruger,
	Emily Cutts Worthington, Buck Shlegeris, Jarryd Martin, John Aslanides,
	Alexander Mascolo, and Sultan Javed for so many interesting discussions
	and for being awesome friends.
	I especially thank Daniel, Emily, Mayank, and Buck
	for encouraging me to read more of Less Wrong and Slate Star Codex.
\item Tosca Lechner for studying statistics with me
	despite so many scheduling difficulties across all these time zones.
\item Tom Sterkenburg, Christian Kamm, Alexandra Surdina,
	Freya Fleckenstein, Peter Sunehag, Tosca Lechner, Ines Nikolaus,
	Laurent Orseau, John Aslanides,
	and especially Daniel Filan
	for proofreading parts of this thesis.
\item the CSSA for being a lovely bunch
	that made my stay in Australia feel less isolated.
\item my family for lots of love and support,
	and for tolerating my long absences from Europe.
\end{itemize}

\cleardoublepage
\pagestyle{headings}
\chapter*{Abstract}
\label{cha:abstract}
\addcontentsline{toc}{chapter}{Abstract}


Reinforcement learning problems
are often phrased in terms of Markov decision processes~(MDPs).
In this thesis we go beyond MDPs and consider
reinforcement learning in environments that are non-Markovian,
non-ergodic and only partially observable.
Our focus is not on practical algorithms,
but rather on the fundamental underlying problems:
How do we balance exploration and exploitation?
How do we explore optimally?
When is an agent optimal?
We follow the nonparametric realizable paradigm:
we assume the data is drawn from an unknown source
that belongs to a known countable class of candidates.

First,
we consider the passive (sequence prediction) setting,
learning from data that is not independent and identically distributed.
We collect results from artificial intelligence,
algorithmic information theory, and game theory
and put them in a reinforcement learning context:
they demonstrate how an agent can learn the value of its own policy.

Next,
we establish negative results on Bayesian reinforcement learning agents,
in particular AIXI.
We show that
unlucky or adversarial choices of the prior
cause the agent to misbehave drastically.
Therefore
Legg-Hutter intelligence and balanced Pareto optimality,
which depend crucially on the choice of the prior,
are entirely subjective.
Moreover,
in the class of all computable environments every policy is Pareto optimal.
This undermines all existing optimality properties for AIXI.

However, there are Bayesian approaches to general reinforcement learning
that satisfy objective optimality guarantees:
We prove that Thompson sampling
is asymptotically optimal in stochastic environments
in the sense that
its value converges to the value of the optimal policy.
We connect asymptotic optimality to regret
given a recoverability assumption on the environment
that allows the agent to recover from mistakes.
Hence Thompson sampling achieves sublinear regret in these environments.

AIXI is known to be incomputable.
We quantify this using the arithmetical hierarchy,
and establish upper and corresponding lower bounds for incomputability.
Further, we show that AIXI is not limit computable,
thus cannot be approximated using finite computation.
However there are limit computable $\eps$-optimal approximations to AIXI.
We also derive computability bounds for knowledge-seeking agents, and
give a limit computable weakly asymptotically optimal reinforcement learning agent.

Finally,
our results culminate in a formal solution to the grain of truth problem:
A Bayesian agent acting in a multi-agent environment learns to predict
the other agents' policies if its prior assigns positive probability to them%
~(the prior contains a grain of truth).
We construct a large but limit computable class containing a grain of truth
and show that agents based on Thompson sampling over this class
converge to play $\eps$-Nash equilibria
in arbitrary unknown computable multi-agent environments.

\paragraph{Keywords.}
\keywords.

\cleardoublepage
\pagestyle{headings}
\markboth{Contents}{Contents}
\setcounter{tocdepth}{2} 
\tableofcontents
\addcontentsline{toc}{chapter}{Contents} 
\listoffigures
\addcontentsline{toc}{chapter}{List of Figures}
\listoftables
\addcontentsline{toc}{chapter}{List of Tables}


\mainmatter


\chapter{Introduction}
\label{cha:introduction}

\falsequote{Albert Einstein}{Everything I did was for the glamor, the money, and the sex.}

After the early enthusiastic decades,
research in artificial intelligence~(AI) now mainly aims at specific domains:
playing games, mining data, processing natural language,
recognizing objects in images, piloting robots, filtering email,
and many others~\citep{RN:2010}.
Progress on particular domains has been remarkable,
with several high-profile breakthroughs:
The chess world champion Garry Kasparov was defeated by the computer program Deep Blue in 1997~\citep{DeepBlue}.
In 2011 the world's best Jeopardy!\ players were defeated by the computer program Watson~\citep{Watson}.
As of 2014 Google's self-driving cars completed over a million kilometers autonomously on public roads~\citep{GoogleCar}.
Finally, in 2016 Google DeepMind's AlphaGo beat Lee Sedol,
one of the world's best players, at the board game Go~\citep{AlphaGo}.

While these advancements are very impressive,
they are highly-specialized algorithms tailored to their domain of expertise.
Outside that domain these algorithms perform very poorly:
AlphaGo cannot play chess, Watson cannot drive a car, and DeepBlue
cannot answer natural language queries.
Solutions in one domain typically do not generalize to other domains
and no single algorithm performs well in more than one of them.
We classify these kinds of algorithms as \emph{narrow AI}\index{AI!narrow}.

This thesis is not about narrow AI.
We expect progress on narrow AI to continue and even accelerate,
taking the crown of human superiority in domain after domain.
But this is not the ultimate goal of artificial intelligence research.
The ultimate goal is to engineer a mind---%
to build a machine that can learn to do all tasks that humans can do,
at least as well as humans do them.
We call such a machine
\emph{human-level AI}\index{AI!human-level}~(HLAI)
if it performs at human level and
\emph{strong AI}\index{AI!strong}
if it surpasses human level.
This thesis is about strong AI.

The goal of developing HLAI has a long tradition in AI research and
was explicitly part of the 1956 Dartmouth conference
that gave birth to the field of AI~\citep{Dartmouth:1955}:
\begin{quote}
We propose that a 2 month, 10 man study of artificial intelligence be carried out during the summer of 1956 at Dartmouth College in Hanover, New Hampshire. The study is to proceed on the basis of the conjecture that every aspect of learning or any other feature of intelligence can in principle be so precisely described that a machine can be made to simulate it. An attempt will be made to find how to make machines use language, form abstractions and concepts, solve kinds of problems now reserved for humans, and improve themselves. We think that a significant advance can be made in one or more of these problems if a carefully selected group of scientists work on it together for a summer.
\end{quote}
In hindsight this proposal reads vastly overconfident,
and disappointment was inevitable.
Making progress on these problems turned out to be a lot harder than promised,
and over the last decades any discussion of research targeting HLAI
has been avoided by serious researchers in the field.
This void was filled mostly by crackpots,
which tainted the reputation of HLAI research even further.
However, this trend has recently been reverted:
\citet{Chalmers:2010singularity},
\citet{Hutter:2012explosion}, \citet{Schmidhuber:2012singularity},
\citet{Bostrom:2014}, \citet*{HTRW:2014huffington},
\citet{Shanahan:2015singularity}, and \citet{Walsh:2016singularity}
are well-known scientists discussing the prospect of HLAI seriously.
Even more: the explicit motto of Google DeepMind,
one of today's leading AI research centers,
is to ``solve intelligence.''

\section{Reinforcement Learning}
\label{sec:reinforcement-learning}

The best formal model for strong AI we currently have
is reinforcement learning~(RL).
Reinforcement learning studies algorithms
that learn to act in an unknown environment through trial and error%
~\citep{SB:1998,Szepesvari:2010,WvO:2012}.
Without knowing the structure of the environments or the goal\index{goal},
an agent has to learn what to do through the carrot-and-stick approach:
it receives a reward in form of a numeric feedback
signifying how well it is currently doing;
from this signal the agent has to figure out autonomously what to do.
More specifically,
in a \emph{general reinforcement learning problem}%
\index{general reinforcement learning problem}
an \emph{agent}\index{agent} interacts sequentially with
an unknown \emph{environment}\index{environment}:
in every time step the agent chooses an \emph{action}\index{action} and
receives a \emph{percept}\index{percept}
consisting of an \emph{observation}\index{observation}
and a real-valued \emph{reward}\index{reward}.
The sequence of past actions and percepts is the \emph{history}\index{history}.
The goal in reinforcement learning is
to maximize cumulative (discounted) rewards
(this setup is described formally in \autoref{sec:general-rl}).

A central problem in reinforcement learning is the balance between
\emph{exploration} and \emph{exploitation}\index{exploration vs.\ exploitation}:
should the agent harvest rewards in the regions of the environment
that it currently knows~(exploitation) or
try discovering more profitable regions~(exploration)?
Exploration is costly and dangerous:
it forfeits rewards that could be had right now,
and it might lead into traps\index{trap} from which the agent cannot recover.
However, exploration may pay off in the long run.
Generally, it is not clear how to make this tradeoff
(see \autoref{sec:discussion-optimality}).

Reinforcement learning algorithms can be categorized by whether they
learn \emph{on-policy}\index{on-policy} or \emph{off-policy}\index{off-policy}.
Learning on-policy means learning the value of
the policy that the agent currently follows.
Typically, the policy is slowly improved while learning,
like \emph{SARSA}\index{SARSA}~\citep{SB:1998}.
In contrast, learning off-policy means following one policy
but learning the value of another policy (typically the optimal policy),
like \emph{$Q$-learning}\index{Q-learning@$Q$-learning}%
~\citep{WD:1992Qlearning}.
Off-policy methods are more difficult to handle in practice
(see the discussion on function approximation below)
but tend to be more data-efficient
since samples from an old policy do not have to be discarded.

Reinforcement learning has to be distinguished from \emph{planning}\index{planning}.
In a planning problem we are provided with
the true environment and are tasked with finding an optimal policy.
Mathematically it is clear what the optimal policy is,
the difficulty stems from finding a reasonable solution with
limited computation.
Reinforcement learning is fundamentally more difficult
because the true environment is unknown and has to be learned from observation.
This enables two approaches:
we could learn a model of the true environment and then
use planning techniques within that model;
this is the \emph{model-based}\index{model-based} approach.
Alternatively, we could learn an optimal policy directly
or through an intermediate quantity
(typically the value function);
this is the \emph{model-free}\index{model-free} approach.
Model-based methods tend to be more data-efficient
but also computationally more expensive.
Therefore most algorithms used in practice~($Q$-learning and SARSA)
are model-free.

\subsection{Narrow Reinforcement Learning}
\label{ssec:narrow-reinforcement-learning}

In the reinforcement learning literature it is typically assumed that
the environment is a \emph{Markov decision process}\index{MDP}~(MDP),
i.e., the next percept only depends on the last percept and action and
is independent of the rest of the history
(see \autoref{ssec:typical-environment-classes}).
In an MDP, percepts are usually called \emph{states}\index{state}.
This setting is well-analyzed~\citep{Puterman:1994,BT:1995,SB:1998},
and there is a variety of algorithms that are known to
learn the MDP asymptotically,
such as \emph{TD learning}\index{TD-learning}~\citep{Sutton:1988}
and $Q$-learning\index{Q-learning@$Q$-learning}~\citep{WD:1992Qlearning}.

Moreover,
for MDPs various learning guarantees have been proved in the literature.
First, there are bounds on the agent's \emph{regret}\index{regret},
the difference between the obtained rewards and
the rewards of the optimal policy.
\citet{AJO:2009} derive the regret\index{regret} bound $\tilde O(dS\sqrt{At})$
for ergodic\index{ergodic} MDPs where
$d$ is the \emph{diameter}\index{diameter} of the MDP
(how many steps a policy needs on average to get from one state of the MDP
to any other),
$S$ is the number of states,
$A$ is the number of actions,
and $t$ is the number of time steps the algorithm runs.
Second,
given $\varepsilon$ and $\delta$, a reinforcement learning algorithm is said
to have \emph{sample complexity}\index{PAC} $C(\varepsilon, \delta)$ iff
it is $\varepsilon$-suboptimal for at most $C(\varepsilon, \delta)$ time steps
with probability at least $1 - \delta$~(probably approximately correct, PAC).
For MDPs the first sample complexity bounds were due to \citet{Kakade:2003}.
\citet{LH:2012PAC} use the algorithm UCRL$\gamma$~\citep{AJO:2009}
with geometric discounting\index{discounting!geometric}
with discount rate $\gamma$ and
derive the currently best-known PAC\index{PAC} bound of
$\tilde O( -T / (\varepsilon^2 (1-\gamma)^3) \log \delta)$
where $T$ is the number of non-zero transitions in the MDP.

Typically, algorithms for MDPs rely on visiting every state multiple times
(or even infinitely often),
which becomes infeasible for large state spaces
(e.g.\ a video game screen consisting of millions of pixels).
In these cases, \emph{function approximation}\index{function approximation}
can be used
to learn an approximation to the value function~\citep{SB:1998}.
Linear function approximation is known to converge for
several on-policy algorithms~\citep{TVR:1997fapprox,Sutton:1988,Gordon:2001},
but proved tricky for off-policy algorithms~\citep{Baird:1995}.
A recent breakthrough was made by \citet{MYWS:2015emphatic} and
\citet{Yu:2015emphatic}
with their emphatic TD algorithm that converges off-policy.
For nonlinear function approximation no convergence guarantee is known.

Among the historical successes of reinforcement learning is
autonomous helicopter piloting~\citep{KJSN:2003} and TD-Gammon,
a backgammon algorithm that learned through self-play~\citep{Tesauro:1995},
similar to AlphaGo~\citep{DeepMind:2016Go}.

\subsection{Deep Q-Networks}
\label{ssec:DQN}

The current state of the art in reinforcement learning
challenges itself to playing simple video games.
Video games are an excellent benchmark because
they come readily with the reward structure provided:
the agent's rewards are the change in the game score.
Without prior knowledge of any aspect of the game,
the agent needs to learn to score as many points in the game as possible
from looking only at raw pixel data~(sometimes after some preprocessing).

This approach to general AI is
in accordance with the definition of intelligence given by
\citet{LH:2007int}:
\begin{quote}
Intelligence\index{intelligence}
measures an agent's ability to achieve goals\index{goal}
in a wide range of environments.
\end{quote}
In reinforcement learning the definition of the goal is very flexible,
and provided by the rewards.
Moreover,
a diverse selection of video games
arguably constitutes a `wide range of environments.'

A popular such selection is the Atari 2600\index{Atari 2600} video game console%
~\citep{BNVB:2013ale}.
There are hundreds of games released for this platform,
with very diverse challenges:
top-down shooting games such as Space Invaders,
ball games such as Pong,
agility-based games such as Boxing or Gopher,
tactical games such as Ms.\ Pac-Man, and
maze games such as Montezuma's Revenge.
An overview over some of the games is given in
\autoref{fig:Atari} on page~\pageref*{fig:Atari}.

\begin{figure}[t]
\centering
\begin{subfigure}[b]{0.45\textwidth}
\includegraphics[width=\textwidth]{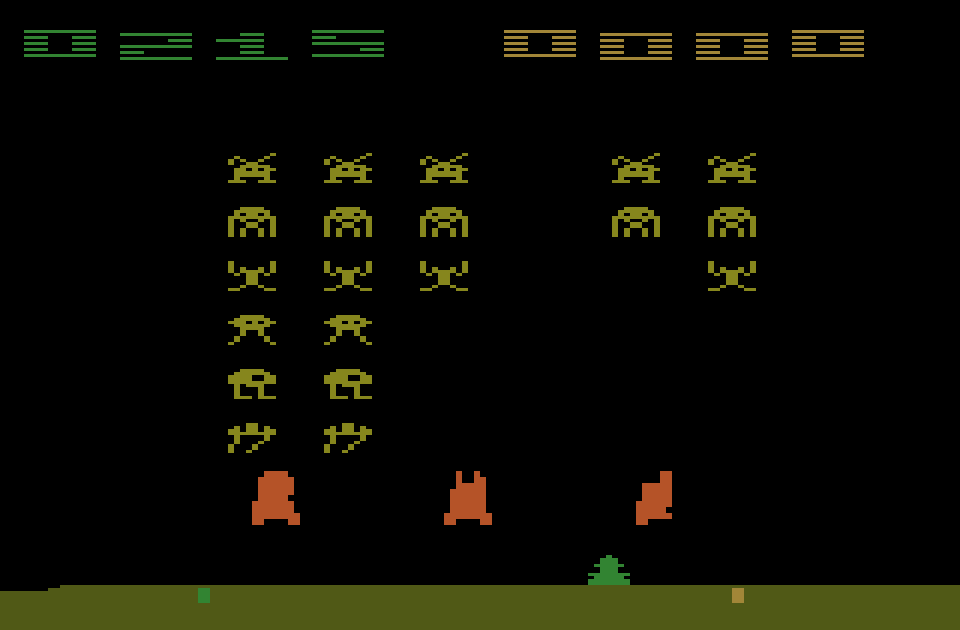}
\caption{Space Invaders:
the player controls the green cannon on the bottom of the screen
and fires projectiles at the yellow ships at the top.
The red blobs can be used as cover, but also fired through.
}
\label{fig:space-invaders}
\end{subfigure}
~
\begin{subfigure}[b]{0.45\textwidth}
\includegraphics[width=\textwidth]{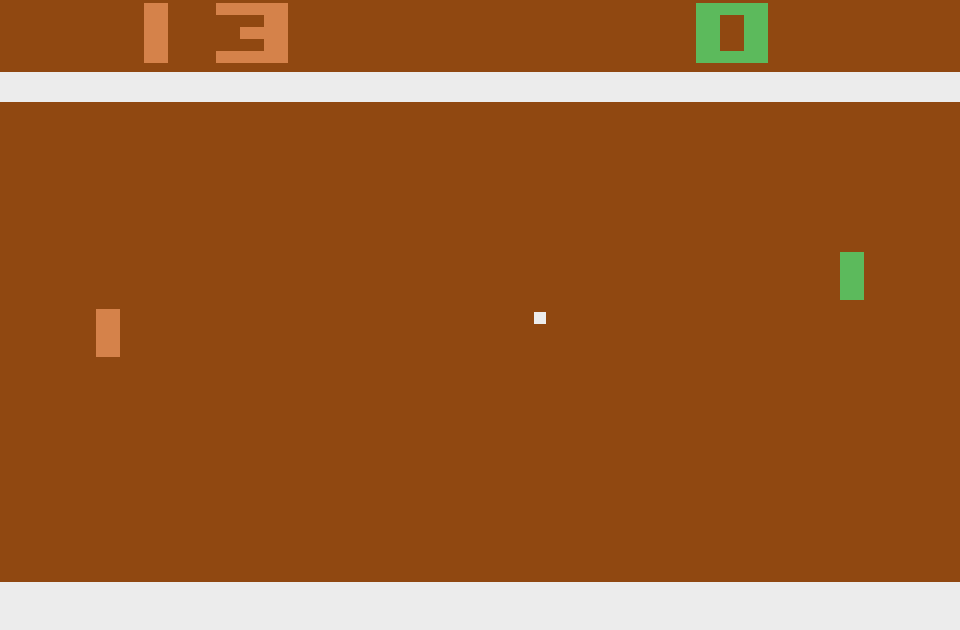}
\caption{Pong:
the player controls the green paddle on the right of the screen
and needs to hit the white ball such that
the computer opponent controlling the red paddle on the left
fails to hit the ball back.
}
\label{fig:pong}
\end{subfigure}
\\[1em]
\begin{subfigure}[b]{0.45\textwidth}
\includegraphics[width=\textwidth]{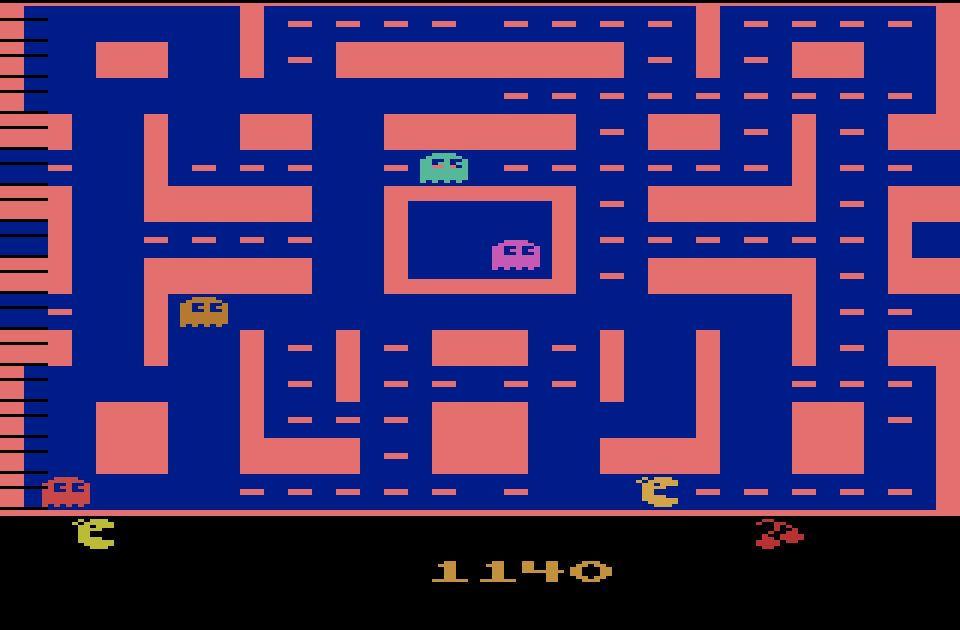}
\caption{Ms.\ Pac-Man:
the player controls the yellow mouth and
needs to eat all the red pallets in the maze.
The maze is roamed by ghosts that occasionally hunt the player and
kill her on contact unless a `power pill' was consumed recently.
}
\label{fig:ms-pac-man}
\end{subfigure}
~
\begin{subfigure}[b]{0.45\textwidth}
\includegraphics[width=\textwidth]{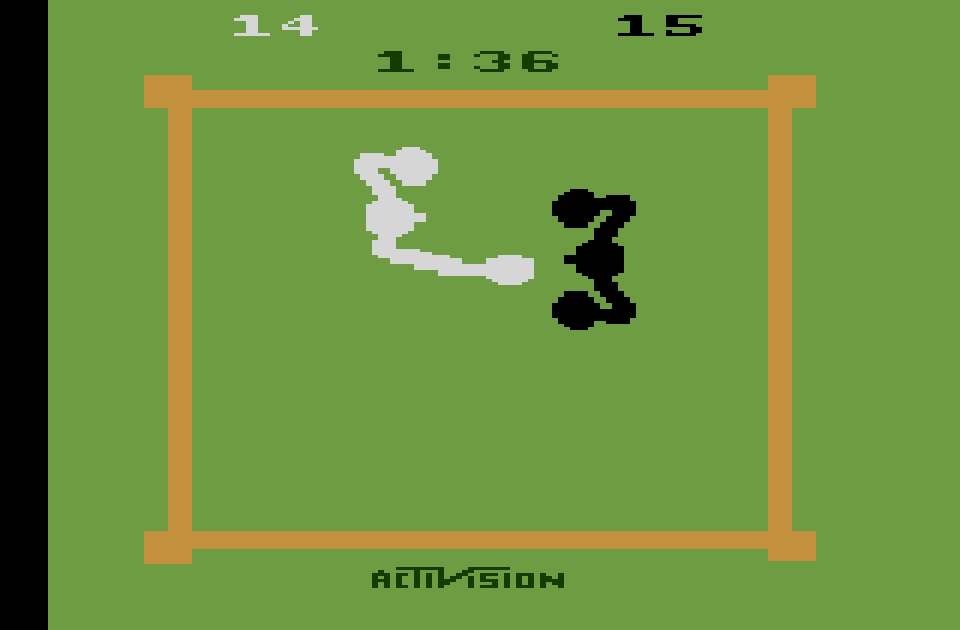}
\caption{Boxing:
the player controls the white figure on the screen
and extends their arms to throw a punch.
The aim is to hit the black figure that is controlled by the computer
and dodge their punches.
(I'm sure the choice of color was by accident.)
}
\label{fig:boxing}
\end{subfigure}
\\[1em]
\begin{subfigure}[b]{0.45\textwidth}
\includegraphics[width=\textwidth]{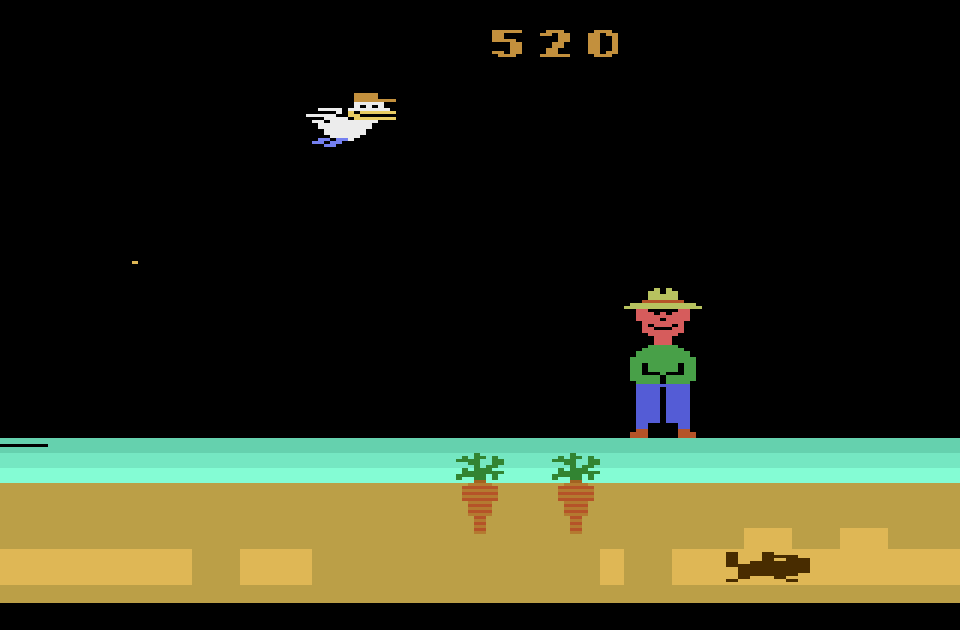}
\caption{Gopher:
a hungry rodent attempts to dig to the surface and steal the vegetables.
The player controls the farmer who protects them
by filling the rodent's holes.
}
\label{fig:Gopher}
\end{subfigure}
~
\begin{subfigure}[b]{0.45\textwidth}
\includegraphics[width=\textwidth]{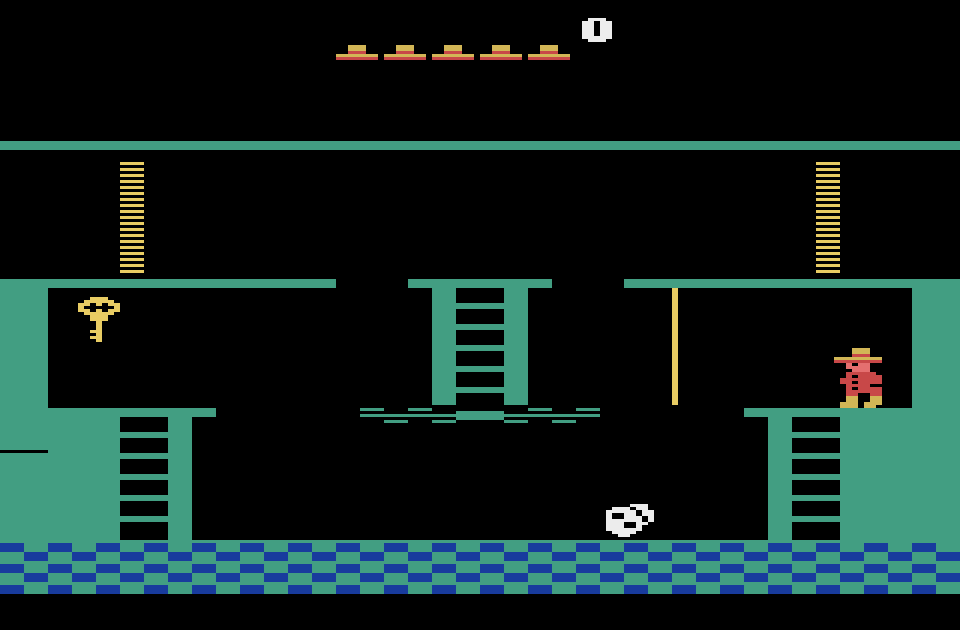}
\caption{Montezuma's Revenge:
the player controls the red adventurer.
The aim is to navigate a maze of deadly traps,
use keys to open doors, and
collect artifacts.
}
\label{fig:Montezuma}
\end{subfigure}
\caption[Selection of Atari 2600 video games]{%
A selection of Atari 2600 video games.
}
\label{fig:Atari}
\index{Atari 2600}
\end{figure}

\citet{MKSGAWR:2013DQN,MKSRV+:2015deepQ}\index{DQN}
introduce the deep $Q$-network~(DQN) algorithm,
combining $Q$-learning\index{Q-learning@$Q$-learning} with
nonlinear function approximation through convolutional neural networks.
DQN achieves 75\% of the performance of a human game tester
on 29 of 49 Atari games.
The two innovations that made this breakthrough possible are
(1) using a not so recent target $Q$-function in the TD update and
(2) experience replay.
For experience replay, a set of recent state transitions is retained
and the network is regularly retrained on
random samples from these old transitions.%
\footnote{The slogan for experience replay should be
`regularly retrained randomly on retained rewards'.}

DQN rides the wave of success of \emph{deep learning}\index{deep learning}%
~\citep{LBH:2015deep,Schmidhuber:2015deep,BGC:2016deep}.
Deep learning refers to the training of artificial neural networks
with several layers.
This allows them to automatically learn higher-level abstractions from data.
Deep neural networks are conceptionally simple
and have been studied since the inception of AI;
only recently has computation power become cheap enough
to train them effectively.
Recently deep neural networks have taken
the top of the machine learning benchmarks by storm%
~\citep[and references therein]{LBH:2015deep}:
\begin{quote}
These methods have dramatically improved the state-of-the-art
in speech recognition, visual object recognition, object detection
and many other domains such as drug discovery and genomics.
\end{quote}

Since the introduction of DQN
there have been numerous improvements on this algorithm:
increasing the gap on the Q-values of different actions~\citep{BOGTM:2016DQN},
training in parallel~\citep{NSBA+:2015DQN,MBMG+2016DQN},
improvements to the experience replay mechanism~\citep{SQAS:2016DQN},
generalization to continuous action spaces~\citep{LHPH+:2016DQN},
solve the overestimation problem~\citep{HGS:2016doubleQ}, and
improvements to the neural network architecture~\citep{WFL:2015DQN}.
The $Q$-values learned by DQN's neural networks are
intransparent to inspection;
\citet{ZZM:2016DQN} use visualization techniques on the $Q$-value networks.
Finally,
\citet{LMTB:2015shallow} managed to reproduce DQN's success
using only \emph{linear} function approximation\index{function approximation}%
~(no neural networks).
The key is a selection of features
similar to the ones produced by DQN's convolutional neural networks.

Regardless of its success,
the DQN algorithm fundamentally falls short of
the requirements for strong AI:
$Q$-learning\index{Q-learning@$Q$-learning} with function approximation
is targeted at solving
large-state~(fully observable) Markov decision processes.
In particular, it does not address the following challenges.

\begin{itemize}
\item \emph{Partial observability.}\index{partially observable}
	All games in the ATARI framework are fully observable%
	~(except for Montezuma's revenge):
	all information relevant to the state of the game
	is visible on the screen at all times
	(when using the four most recent frames).

	However, the real world is only partially observable.
	For example, when going to the supermarket
	you have to remember what you wanted to buy
	because you currently cannot observe
	which items you are missing at home.
	A strong AI needs to have memory and be able to remember
	things that happened in the past~(rather than only learning from it).

	An obvious approach to equip DQN with memory is
	to use recurrent neural networks
	instead of simple feedforward neural networks~\citep{HHLS:2015DQRN}.
	\citet{HS:2015DRQN} show that
	this enables the agent to play the games when using only
	a single frame as input.
	However, it is currently unclear whether recurrent neural networks
	are powerful enough to learn long-term dependencies in the data%
	~\citep{BSF:1994RNN}.

\item \emph{Directed exploration.}
	DQN fails in games with delayed rewards.
	For example,
	in Montezuma's Revenge the agent needs to avoid several obstacles
	to get to a key before receiving the first reward.
	DQN fails to score any rewards in this environment.
	This is not surprising:
	the typical approach for reinforcement learning,
	to use $\eps$-exploration
	for which the agent chooses actions at random with a certain probability,
	is insufficient for exploring complex environments;
	the probability of random walking into the first reward is just too low.

	Instead we need a more targeted exploration approach that
	aims at understanding the environment in a structured manner.
	Theoretical foundations are provided by
	\emph{knowledge-seeking agents}\index{knowledge-seeking}%
	~\citep{Orseau:2011ksa,Orseau:2014ksa,OLH:2013ksa}.
	\citet{KNST:2016DQN} introduce a hierarchical approach
	based on intrinsic motivation to improve DQN's exploration and
	manage to score points in Montezuma's Revenge.
	However, their approach relies on quite a bit of visual preprocessing and
	domain knowledge.

\item \emph{Non-ergodicity.}\index{recoverability}\index{ergodic}
	When losing in an Atari game,
	the agent always gets to play the same game again.
	From the agent's perspective, it has not actually failed,
	it just gets transported back to the starting state.
	Because of this,
	there are no strong incentives to be careful when exploring the environment:
	there can be no bad mistakes that make recovery impossible.

	However, in the real world some actions are irreversibly bad.
	If the robot drives off a cliff it can be fatally damaged
	and cannot learn from the mistake.
	The real world is full of potentially fatal mistakes
	(e.g.\ crossing the street at the wrong time) and
	for humans, natural reflexes and training by society
	make sure that we are very confident of what situations to avert.
	This is crucial,
	as some mistakes must be avoided without any training examples.
	Current reinforcement learning algorithms
	only learn about bad states by visiting them.

\item \emph{Wireheading.}\index{wireheading}
	The goal\index{goal} of reinforcement learning is to maximize rewards.
	When playing a video game the most efficient way to get rewards is
	to increase the game score.
	However, when a reinforcement learning algorithm is acting in the real world,
	theoretically it can change its own hard- and software.
	In this setting, the most efficient way to get rewards is
	to modify the reward mechanism to always provide the maximal reward%
	~\citep{Omohundro:2008,RO:2011delusion,Bostrom:2014}.
	Consequently the agent no longer pursues
	the designers' originally intended goals
	and instead only attempts to protect its own existence.
	The name \emph{wireheading} was established by analogy to
	a biology experiment by \citet{OM:1954wireheading}
	in which rats had a wire embedded into the reward center of their brain
	that they could then stimulate by the push of a button. 

	Today's reinforcement learning algorithms usually do not have access to
	their own internal workings, but more importantly they are not smart enough
	to understand their own architecture.
	They simply lack the capability to wirehead.
	But as we increase their capability,
	wireheading will increasingly become
	a challenge for reinforcement learning.
\end{itemize}

\subsection{General Reinforcement Learning}
\label{ssec:general-reinforcement-learning}

\begin{table}[t]
\begin{center}
\begin{tabular}{ll}
\toprule
Assumption & Description \\
\midrule
Full observability & the agent needs no memory to act optimally \\
Finite state & the environment has only finitely many states \\
Ergodicity & the agent can recover from any mistakes \\
Computability & the environment is computable \\
\bottomrule
\end{tabular}
\end{center}
\index{partially observable}\index{ergodic}\index{computable}\index{state}
\caption[Assumptions in reinforcement learning]{%
List of assumptions from the reinforcement learning literature.
In this thesis, we only make the computability assumption
which is important for \autoref{cha:computability}
and \autoref{cha:grain-of-truth-problem}.
}
\label{tab:assumptions-in-rl}
\end{table}

A theory of strong AI
cannot make some of the typical assumptions.
Environments are partially observable\index{partially observable},
so we are dealing with
\emph{partially observable Markov decision processes}\index{POMDP}~(POMDPs).
The POMDP's state space does not need to be finite.
Moreover, the environment may not allow recovery from mistakes:
we do not assume ergodicity or weak communication%
~(not every POMDP state has to be reachable from every other state).
So in general, our environments are infinite-state non-ergodic POMDPs.
\autoref{tab:assumptions-in-rl} lists the assumptions that are typical
but we do not make.

Learning POMDPs is a lot harder,
and only partially successful attempts have been made:
through predictive state representations~\citep{SLJPS:2003psr,SJR:2004psr},
and Bayesian methods~\citep{Doshi-Velez:2012}.
A general approach is
\emph{feature reinforcement learning}\index{feature reinforcement learning}%
~\citep{Hutter:2009phidbn,Hutter:2009phimdpx},
which aims to reduce the general reinforcement learning problem to an MDP
by aggregating histories into states.
The quest for a good cost function for feature maps
remains unsuccessful thus far~\citep{SH:2010frl,Daswani:2015}.
However, \citet{Hutter:2014aggr} managed to derive
strong bounds relating the optimal value function of the aggregated MDP
to the value function of the original process
even if the latter violates the Markov condition.

A full theoretical approach to the general reinforcement learning problem
is given by \citet{Hutter:2000,Hutter:2001aixi,Hutter:2002,
Hutter:2003AIXI,Hutter:2005,Hutter:2007aixi,Hutter:2012uai}.
He introduces the Bayesian RL agent \emph{AIXI}\index{AIXI}
building on the theory of sequence prediction
by \citet{Solomonoff:1964,Solomonoff:1978}.
Based in algorithmic information theory,
Solomonoff's prior\index{Solomonoff!prior} draws from famous insights by
William of Ockham, Sextus Epicurus, Alan Turing, and Andrey Kolmogorov%
~\citep{RH:2011}.
AIXI uses Solomonoff's prior over the class of all computable environments
and acts to maximize Bayes-expected rewards.
We formally introduce
Solomonoff's theory of induction in \autoref{cha:learning} and
AIXI in \autoref{ssec:AIXI}.
See also \citet{Legg:2008} for an accessible introduction to AIXI.

A typical optimality property in general reinforcement learning is
\emph{asymptotic optimality}\index{optimality!asymptotic}~\citep{LH:2011opt}:
as time progresses the agent converges to achieve the same rewards
as the optimal policy.
Asymptotic optimality is usually what is meant by
``$Q$-learning\index{Q-learning@$Q$-learning} converges''%
~\citep{WD:1992Qlearning} or
``TD learning converges''~\citep{Sutton:1988}.
\citet{Orseau:2010,Orseau:2013} showed that
AIXI is not asymptotically optimal.
Yet asymptotic optimality in the general setting can be achieved
through optimism~\citep{SH:12optopt,SH:2012optimistic,SH:2015opt},
Thompson sampling~(\autoref{ssec:asymptotic-optimality-TS}), or
an extra exploration component on top of AIXI~\citep[Ch.~5]{Lattimore:2013}.

In our setting,
learning the environment
does not just involve learning a fixed finite set of parameters;
the real world is too complicated to fit into a template.
Therefore we fall back on the \emph{nonparametric}\index{nonparametric} approach
where we start with an infinite but countable class of candidate environments.
Our only assumption is that the true environment
is contained in this class~(the \emph{realizable case}\index{realizable case}).
As long as this class of environments is large enough
(such as for the class of all computable environments),
this assumption is rather weak.

\section{Contribution}
\label{sec:contribution}

The goal of this thesis is not to increase AI capability.
As such, we are not trying to improve on the state of the art,
and we are not trying to derive practical algorithms.
Instead, the emphasis of this thesis is
to further our \emph{understanding} of general reinforcement learning
and thus strong AI.
How a future implementation of strong AI will actually work
is in the realm of speculation at this time.
Therefore we should make as few and as weak assumptions as possible.

We disregard computational constraints
in order to focus on the fundamental underlying problems.
This is unrealistic, of course.
With unlimited computation power many traditional AI problems become trivial:
playing chess, Go, or backgammon
can be solved by exhaustive expansion of the game tree.
But the general RL problem does not become trivial:
the agent has to learn the environment and
balance between exploration and exploitation%
\index{exploration vs.\ exploitation}.
That being said,
the algorithms that we study do have a relationship with
algorithms being used in practice and
our results can and should educate implementation.

On a high level,
our insights can be viewed from three different perspectives.
\begin{itemize}
\item \emph{Philosophically.}
	Concisely, our understanding of strong AI can be summarized as follows.
	\begin{equation}\label{eq:learning+acting}
	\text{intelligence} ~=~ \text{learning} ~+~ \text{acting}
	\end{equation}
	Here, \emph{intelligence}\index{intelligence} refers to an agent that
	optimizes towards some goal\index{goal}
	in accordance with the definition by \citet{LH:2007int}.
	For \emph{learning} we distinguish two~(very related) aspects:
	\begin{inparaenum}[(1)]
	\item arriving at accurate beliefs about the future and
	\item making accurate predictions about the future.
	\end{inparaenum}
	Of course, the former implies the latter:
	if you have accurate beliefs,
	then you can also make good predictions.
	For RL accurate beliefs is what we care about
	because they enable us to plan for the future.
	Learning is a passive process that only observes the data
	and does not interfere with its generation.
	In particular, learning does not require a goal.
	With \emph{acting} we mean the selection of actions in pursuit of some goal.
	This goal can be reward maximization as in reinforcement learning,
	understanding the environment as for knowledge-seeking agents, or
	something else entirely.
	Together they enable an agent to learn the environment's behavior
	in response to itself~(on-policy learning) and
	to choose a policy that furthers its goal.
	We discuss the formal aspects of learning in \autoref{cha:learning}
	and some approaches to acting in \autoref{cha:acting}.

	Given infinite computational resources, learning is easy
	and Solomonoff induction provides a complete theoretical solution.
	However, acting is not straightforward.
	We show that in contrast to popular belief, AIXI,
	the natural extension of Solomonoff induction to reinforcement learning,
	does not provide the objectively best answer to this question.
	We discuss some alternatives and their problems in \autoref{cha:optimality}.
	Unfortunately, the general question of how to act optimally remains open.

	AIXI$tl$\index{AIXItl}~\citep[Ch.~7.2]{Hutter:2005} is
	often mentioned as a computable approximation to AIXI.
	But AIXI$tl$ does not converge to AIXI in the limit.
	Inspired by Hutter search~\citep{Hutter:2002search},
	it relies on an automated theorem prover
	to find the provably best policy
	computable in time $t$ with a program of length $\leq l$.
	In contrast to AIXI,
	which only requires the choice of universal Turing machine,
	proof search requires an axiom system
	that must not be too weak or too strong.
	In \autoref{ssec:Gödel-prior} we discuss some of the
	problems with AIXI$tl$.
	Moreover, in \autoref{cor:complexity-eps-aixi} we show that
	$\varepsilon$-optimal AIXI is limit computable,
	which shows that AIXI can be computably approximated
	by running this algorithm for a fixed number of time steps
	or until a timeout is reached.
	While neither AIXI$tl$ nor this AIXI approximation algorithm
	are practically feasible,
	the latter is a better example for a computable strong AI.

	In our view,
	AIXI should be taken as a descriptive
	rather than prescriptive model.
	It is descriptive as an abstraction from
	an actual implementation of strong AI
	where we ignore all the details of the learning algorithm and
	the computational approximations of choosing how to act.
	It should not be viewed as a prescription of how strong AI should be built
	and AIXI approximations~\citep{VNHUS:2011,VBHCD:2014}
	are easily outperformed by neural-network-based approaches%
	~\citep{MKSRV+:2015deepQ}.

\item \emph{Mathematically.}
	Some of the proof techniques we employ are novel and could be used to
	analyze other algorithms.
	Examples include the proofs for the lower bounds on
	the computability results~(\autoref{ssec:lower-bounds}) and
	to a lesser extent the upper bounds~(\autoref{ssec:upper-bounds}),
	which should work analogously for a wide range of algorithms.
	Furthermore, the proof of the asymptotic optimality of Thompson sampling%
	~(\autoref{thm:Thompson-sampling-aoim})
	brings together a variety of mathematical tools
	from measure theory, probability theory, and stochastic processes.

	Next,
	the \emph{recoverability assumption}\index{recoverability}%
	~(\autoref{def:recoverability})
	is a novel technical assumption on the environment akin to
	ergodicity and weak communication in finite-state environments.
	It is more general, yet mathematically simple and
	works for arbitrary environments.
	This assumption turns out to be what we need to prove the connection
	from asymptotic optimality to sublinear regret in \autoref{sec:regret}.

	Moreover,
	we introduce the use of the recursive
	instead of the iterative value function%
	~(\autoref{sec:iterative-value-function}).
	The iterative value function is the natural extension of expectimax search
	to the sequential setting and
	was originally used by \citet[Sec.~5.5]{Hutter:2005}.
	Yet it turned out to be an incorrect and inconvenient definition:
	it does not correctly maximize expected rewards%
	~(\autoref{prop:iterative-AINU-is-not-a-reward-maximizer}) and
	it is not limit computable~(\autoref{thm:iterative-AINU-is-Sigma3-hard} and
	\autoref{thm:iterative-eps-AIXI-is-Pi2-hard}).
	However, this is only a minor technical correction.

	Finally, this work raises new mathematically intriguing questions
	about the properties of reflective oracles~(\autoref{sec:reflective-oracles}).

\item \emph{Practically.}
	One insight from this thesis is regarding the effective horizon.
	In practice geometric discounting is ubiquitous which has a
	constant effective horizon.
	However, when facing a finite horizon problem or an episodic task,
	sometimes the effective horizon changes.
	One lesson from our result on Thompson sampling%
	~(\autoref{ssec:asymptotic-optimality-TS} and \autoref{sec:regret})
	is that you should explore for an effective horizon
	instead of using $\eps$-greedy.
	While the latter exploration method is often used in practice,
	it has proved ineffective in environments with delayed rewards%
	~(see \autoref{ssec:DQN}).

	Furthermore,
	our application of reinforcement learning results to game theory
	in \autoref{cha:grain-of-truth-problem}
	reinforces this trend to solve game theory problems%
	~\citep[and many more]{Tesauro:1995,BV:2001,BBDS:2008,DeepMind:2016Go,%
	Heinrich:2016,FAdFW:2016}.
	In particular, the approximation algorithm for reflective oracles%
	~(\autoref{ssec:lc-reflective-oracle-proof})
	could guide future applications for computing Nash equilibria%
	~(see also \citealp{FTC:2015reflection}).
\end{itemize}

On a technical level,
we advance the theory of general reinforcement learning.
In its center is the Bayesian reinforcement learning agent AIXI.
AIXI is meant as an answer to the question of how to do general RL
disregarding computational constraints.
We analyze the computational complexity of AIXI and related agents in
\autoref{cha:computability}
and show that
even with an infinite horizon
AIXI can be computationally approximated
with a regular Turing machine~(\autoref{ssec:upper-bounds}).
We also derive corresponding lower bounds for most of our upper bounds%
~(\autoref{ssec:lower-bounds}).

\autoref{cha:optimality}
is about notions of optimality in general reinforcement learning.
We dispel AIXI's status as
the gold standard for reinforcement learning.
\citet{Hutter:2002} showed that AIXI is
Pareto optimal, balanced Pareto optimal, and self-optimizing.
\citet{Orseau:2013} established that
AIXI does not achieve asymptotic optimality in all computable environments
(making the self-optimizing result inapplicable to this general environment class).
In \autoref{sec:Pareto-optimality} we show that every policy is Pareto optimal
and in \autoref{sec:Bayes-optimality} we show that
balanced Pareto optimality is highly subjective,
depending on the choice of the prior;
bad choices for priors are discussed in \autoref{sec:bad-priors}.
Notable is the \emph{dogmatic prior}\index{prior!dogmatic} that
locks a Bayesian reinforcement learning agent into a particular~(bad) policy
as long as this policy yields some rewards.
Our results imply that there are no known
nontrivial and non-subjective optimality results for AIXI.
We have to regard AIXI as a \emph{relative} theory of intelligence.
More generally, our results imply that
general reinforcement learning is difficult
\emph{even when disregarding computational costs}.

But this is not the end to Bayesian methods in general RL.
We show in \autoref{sec:asymptotic-optimality} that
a Bayes-inspired algorithm called \emph{Thompson sampling}%
\index{Thompson sampling}
achieves asymptotic optimality.
Thompson sampling, also known as \emph{posterior sampling}%
\index{posterior!sampling|see {Thompson sampling}} or
the \emph{Bayesian control rule}%
\index{Bayesian!control rule|see {Thompson sampling}}
repeatedly draws one environment from the posterior distribution and then
acts as if this was the true environment for a certain period of time
(depending on the discount function\index{discount!function}).
Moreover, given a recoverability assumption on the environment
and some mild assumptions on the discount function,
we show in \autoref{sec:regret}
that Thompson sampling achieves sublinear regret.

Finally,
we tie these results together to solve an open problem in game theory:
When acting in a multi-agent environment with other Bayesian agents,
each agent needs to assign positive prior probability
to the other agents' actual policies%
~(they need to have a \emph{grain of truth}\index{grain of truth}).
Finding a reasonably large class of policies
that contains the Bayes optimal policies with respect to this class
is known as the \emph{grain of truth problem}\index{grain of truth!problem}%
~\citep[Q.~5j]{Hutter:2009open}.
Only small classes are known to have a grain of truth
and the literature contains several related impossibility results%
~\citep{Nachbar:1997,Nachbar:2005,FY:2001impossibility}.
Moreover,
while AIXI assumes the environment to be computable,
our computability results on AIXI confirm that it is incomputable%
~(\autoref{thm:AIXI-is-not-computable} and
\autoref{thm:eps-AIXI-is-Sigma1-hard}).
This asymmetry elevates AIXI above its environment computationally,
and prevents the environment from containing other AIXIs.

In \autoref{cha:grain-of-truth-problem}
we give a formal and general solution to the grain of truth problem:
we construct a class of policies
that avoid this asymmetry.
This class contains all computable policies as well as
Bayes optimal policies for every lower semicomputable prior over the class.
When the environment is unknown,
our dogmatic prior from \autoref{sec:bad-priors} makes
Bayes optimal agents fail to act optimally even asymptotically.
However, our convergence results on Thompson sampling%
~(\autoref{ssec:asymptotic-optimality-TS}) imply that
Thompson samplers converge to play $\varepsilon$-Nash equilibria
in arbitrary unknown computable multi-agent environments.
While these results are purely theoretical,
we use techniques from \autoref{cha:computability}
to show that they can be computationally approximated arbitrarily closely.


\section{Thesis Outline}
\label{sec:thesis-outline}

\begin{table}[t]
\begin{center}
\begin{tabular}{ll}
\toprule
Chapter & Publication(s) \\
\midrule
\autoref{cha:introduction} & - \\
\autoref{cha:preliminaries} & - \\
\autoref{cha:learning} & with links to \citet{LH:2014martosc,LH:2015ravens,FLH:2016speed} \\
\autoref{cha:acting} & - \\
\autoref{cha:optimality} & \citet{LH:2015priors,LLOH:2016Thompson} \\
\autoref{cha:computability} & \citet{LH:2015computability2,LH:2015computability,LH:2016computability} \\
\autoref{cha:grain-of-truth-problem} & \citet{LTF:2016} \\
\autoref{cha:conclusion} & - \\
\hyperref[cha:measures-martingales]{Appendix A} & \citet{LH:2014martoscx} \\
\bottomrule
\end{tabular}
\end{center}
\caption[List of publications by chapter]{%
List of publications by chapter.
}
\label{tab:publications-chapters}
\end{table}

This thesis is based on the papers
\citet{LH:2014martosc,LH:2014martoscx,LH:2015computability,LH:2015computability2,
LH:2015priors,LH:2016computability,LLOH:2016Thompson,LTF:2016}.
During my PhD, I was also involved in the publications
\citet{LH14TACAS,LH14WST,LH15LMCS,HDLMP15,HDGLMSP16} based on
my research in termination analysis~(in collaboration with Matthias Heizmann),
\citet{DL:2015happiness}~(co-authored with Mayank Daswani in equal parts),
\citet{ELH:2015sdt}~(co-authored with Tom Everitt in equal parts),
\citet{FLH:2016speed}~(written by Daniel Filan as part of his honour's thesis
supervised by Marcus Hutter and me).
\citet{LH:2016computability} is still under review.
\citet{LH:2014martosc,LH:2015ravens} are tangential to this thesis' main thrust,
so the results are mentioned only in passing.
A list of papers written during my PhD is given in
\autoref{tab:publications} on page~\pageref*{tab:publications},
with a corresponding chapter outline in \autoref{tab:publications-chapters}.
The core of our contribution is found in chapters
\ref{cha:optimality}, \ref{cha:computability}, and
\ref{cha:grain-of-truth-problem}.

\begin{table}[t]
\begin{center}
\small
\begin{NoHyper}
\begin{enumerate}[{[}1{]}]
\item \bibentry{LH:2014martosc}
\item \bibentry{LH15LMCS}
\item \bibentry{DL:2015happiness}
\item \bibentry{ELH:2015sdt}
\item \bibentry{LH:2015computability}
\item \bibentry{LH:2015computability2}
\item \bibentry{LH:2015priors}
\item \bibentry{LH:2015ravens}
\item \bibentry{HDLMP15}
\item \bibentry{HDGLMSP16}
\item \bibentry{FLH:2016speed}
\item \bibentry{LH:2016computability}
\item \bibentry{LLOH:2016Thompson}
\item \bibentry{LTF:2016}
\item \bibentry{LH16GNTA}
\end{enumerate}
\end{NoHyper}
\end{center}
\caption[List of publications]{%
List of publications.
}
\label{tab:publications}
\end{table}

Every thesis chapter starts with a quote.
In case this is not blatantly obvious: \emph{these are false quotes},
a desperate attempt to make the thesis less dry and humorless.
None of the quotes were actually stated by the person they are attributed to
(according to our knowledge).


\chapter{Preliminaries}
\label{cha:preliminaries}

\falsequote{Leonhard Euler}{Mathematics is a waste of time.}

This chapter establishes the notation and background material
that is used throughout this thesis.
\autoref{sec:measure-theory} is about probability and measure theory,
\autoref{sec:stochastic-processes} is about stochastic processes,
\autoref{sec:information-theory} is about information theory, and
\autoref{sec:AIT} is about algorithmic information theory.
We defer the formal introduction to reinforcement learning
to \autoref{cha:acting}.
Additional preliminary notation and terminology is also established
in individual chapters wherever necessary.
A \hyperref[cha:notation]{list of notation} is provided
in the appendix on page~\pageref*{cha:notation}.

Most of the content from this chapter can be found in
standard textbooks and reference works.
We recommend to consult
\citet{Wasserman:2004} on statistics,
\citet{Durrett:2010} on probability theory and stochastic processes,
\citet{CT:2006} on information theory,
\citet{LV:2008} on algorithmic information theory,
\citet{RN:2010} on artificial intelligence,
\citet{Bishop:2006} and \citet{HTF:2009} on machine learning,
\citet{SB:1998} on reinforcement learning, and
\citet{Hutter:2005} and \citet{Lattimore:2013} on general reinforcement learning.

We understand definitions to follow natural language;
e.g., when defining the adjective `continuous',
we define at the same time the noun `continuity' and the adverb `continuously'
wherever appropriate.

\paragraph{Numbers.}
$\mathbb{N} := \{ 1, 2, 3, \ldots \}$ denotes
the set of natural numbers (starting from $1$),
$\mathbb{Q} := \{ \pm p/q \mid p \in \mathbb{N} \cup \{ 0 \}, q \in \mathbb{N} \}$ denotes
the set of rational numbers, and
$\mathbb{R}$ denotes the set of real numbers.
For two real numbers $r_1, r_2$,
the set $[r_1, r_2] := \{ r \in \mathbb{R} \mid r_1 \leq r \leq r_2 \}$ denotes
the closed interval with end points $r_1$ and $r_2$;
the sets $(r_1, r_2] := [r_1, r_2] \setminus \{ r_1 \}$ and
$[r_1, r_2) := [r_1, r_2] \setminus \{ r_2 \}$
denote half-open intervals; 
the set $(r_1, r_2) := [r_1, r_2] \setminus \{ r_1, r_2 \}$ denotes
an open interval.

\paragraph{Strings.}
Fix $\X$ to be a finite nonempty set,
called \emph{alphabet}\index{alphabet|textbf}.
We assume that $\X$ contains at least two distinct elements.
The set $\X^* := \bigcup_{n=0}^\infty \X^n$ is
the set of all finite strings over the alphabet $\X$,
the set $\X^\infty$ is
the set of all infinite strings
over the alphabet $\X$, and
the set $\X^\sharp := \X^* \cup \X^\infty$ is their union.
The empty string is denoted by $\epsilon$, not to be confused
with the small positive real number $\varepsilon$.
Given a string $x \in \X^\sharp$, we denote its length by $|x|$.
For a (finite or infinite) string $x$ of length $\geq k$,
we denote with $x_k$ the $k$-th character of $x$,
with $x_{1:k}$ the first $k$ characters of $x$,
and with $x_{<k}$ the first $k - 1$ characters of $x$.
The notation $x_{1:\infty}$ stresses that $x$ is an infinite string.
We use $x \sqsubseteq y$ to denote that $x$ is a prefix of $y$, i.e.,
$x = y_{1:|x|}$.
Our examples often (implicitly) involve the binary alphabet $\{ 0, 1 \}$.
In this case we define the functions
$\mathrm{ones}, \mathrm{zeros}: \X^* \to \mathbb{N}$
that count the number of ones and zeros in a string respectively.

\paragraph{Computability.}
\index{lower semicomputable}%
A function $f: \X^* \to \mathbb{R}$ is
\emph{lower semicomputable}\index{lower semicomputable|textbf} iff
the set $\{ (x, q) \in \X^* \times \mathbb{Q} \mid f(x) > q \}$
is recursively enumerable.
\index{computable}%
If $f$ and $-f$ are lower semicomputable,
then $f$ is called \emph{computable}\index{computable|textbf}.
See \autoref{ssec:computability-of-real-valued-functions}
for more computability definitions.

\paragraph{Asymptotic Notation.}
Let $f, g: \mathbb{N} \to \mathbb{R}_{\geq 0}$.
We use $f \in O(g)$ to denote that
there is a constant $c$ such that $f(t) \leq c g(t)$
for all $t \in \mathbb{N}$.
We use $f \in o(g)$ to denote that
$\limsup_{t \to \infty} f(t) / g(t) = 0$.
For functions on strings $P, Q: \X^* \to \mathbb{R}$
we use $Q \timesgeq P$ to denote that
there is a constant $c > 0$ such that
$Q(x) \geq cP(x)$ for all $x \in \X^*$.
We also use $Q \timesleq P$ for $P \timesgeq Q$ and
$Q \timeseq P$ for $Q \timesleq P$ and $P \timesleq Q$.
Note that $Q \timeseq P$ \emph{does not imply} that
there is a constant $c$ such that $Q(x) = cP(x)$ for all $x \in \X^*$.
For a sequence $(a_t)_{t \in \mathbb{N}}$
with limit $\lim_{t \to \infty} a_t = a$
we also write $a_t \to a$ as $t \to \infty$.
If no limiting variable is provided,
we mean $t \to \infty$ by convention.

\paragraph{Other Conventions.}
Let $A$ be some set.
We use $\#A$ to denote the cardinality of the set $A$,
i.e., the number of elements in $A$,
and $2^A$ to denote the power set\index{power set} of $A$,
i.e., the set of all subsets of $A$.
We use $\log$ to denote the binary logarithm and
$\ln$ to denote the natural logarithm.

\section{Measure Theory}
\label{sec:measure-theory}

For a countable set $\Omega$,
we use $\Delta\Omega$ to denote
the set of probability distributions over $\Omega$.
If $\Omega$ is uncountable
(such as the set of all infinite strings $\X^\infty$),
we need to use the machinery of measure theory.
This section provides a concise introduction to measure theory;
see \citet{Durrett:2010} for an extensive treatment.

\begin{definition}[$\sigma$-algebra]
\label{def:sigma-algebra}
\index{s-algebra@$\sigma$-algebra|textbf}
Let $\Omega$ be a set.
The set $\mathcal{F} \subseteq 2^\Omega$ is a
\emph{$\sigma$-algebra over $\Omega$} iff
\begin{enumerate}[(a)]
\item $\Omega \in \mathcal{F}$,
\item $A \in \mathcal{F}$ implies $\Omega \setminus A \in \mathcal{F}$, and
\item for any countable number of sets $A_0, A_1, \ldots, \in \mathcal{F}$,
  the union $\bigcup_{i \in \mathbb{N}} A_i \in \mathcal{F}$.
\end{enumerate}
\end{definition}

For a set $\mathcal{A} \subseteq 2^\Omega$,
we define $\sigma(\mathcal{A})$ to be
the smallest (with respect to set inclusion)
$\sigma$-algebra containing $\mathcal{A}$.

For the real numbers, the default $\sigma$-algebra
(used implicitly)
is the \emph{Borel $\sigma$-algebra $\mathcal{B}$}%
\index{s-algebra@$\sigma$-algebra!Borel|textbf}
generated by the open sets of the usual topology. Formally,
$\mathcal{B} := \sigma(\{ (a, b) \mid a, b \in \mathbb{R} \})$.

A set $\Omega$ together with a $\sigma$-algebra $\mathcal{F}$
forms a \emph{measurable space}\index{measurable!space|textbf}.
The sets from the $\sigma$-algebra $\mathcal{F}$
are called \emph{measurable sets}\index{measurable!set}.
A function $f:\Omega_1 \to \Omega_2$ between two measurable spaces
is called \emph{measurable} iff
any preimage of an (in $\Omega_2$) measurable set is
measurable (in $\Omega_1$).

\begin{definition}[Probability Measure]
\label{def:probability-measure}
\index{measure!probability|textbf}
Let $\Omega$ be a measurable space with $\sigma$-algebra $\mathcal{F}$.
A \emph{probability measure} on the space $\Omega$ is
a function $\mu: \mathcal{F} \to [0,1]$ such that
\begin{enumerate}[(a)]
\item $\mu(\Omega) = 1$ (\emph{normalization}), and
\item $\mu(\bigcup_{i \in \mathbb{N}} A_i) = \sum_{i \in \mathbb{N}} \mu(A_i)$
	for any collection $\{ A_i \mid i \in \mathbb{N} \} \subseteq \mathcal{F}$
	that is pairwise disjoint (\emph{$\sigma$-additivity}).
\end{enumerate}
\end{definition}

A probability measure $\mu$ is \emph{deterministic}%
\index{measure!deterministic|textbf} iff
it assigns all probability mass to a single element of $\Omega$,
i.e., iff there is an $x \in \Omega$ with $\mu(\{ x \}) = 1$.

We define the \emph{conditional probability} $\mu(A \mid B)$
for two measurable sets $A, B \in \mathcal{F}$ with $\mu(B) > 0$ as
$\mu(A \mid B) := \mu(A \cap B) / \mu(B)$.

\begin{definition}[Random Variable]
\label{def:random-variable}
\index{random variable|textbf}
Let $\Omega$ be a measurable space with probability measure $\mu$.
A \emph{(real-valued) random variable}
is a measurable function $X: \Omega \to \mathbb{R}$.
\end{definition}

We often (but not always) denote random variables with uppercase Latin letters.

Given a $\sigma$-algebra $\F$, a probability measure $P$ on $\F$, and
an $\F$-measurable random variable $X$,
the \emph{conditional expectation}\index{conditional expectation|textbf}
$\EE[ X \mid \F ]$ of $X$ given $\F$
is a random variable $Y$ such that
\begin{inparaenum}[(1)]
\item $Y$ is $\F$-measurable and
\item $\int_A X dP = \int_A Y dP$ for all $A \in \F$.
\end{inparaenum}
The conditional expectation exists and is unique up to a set of $P$-measure $0$%
~\citep[Sec.~5.1]{Durrett:2010}.
Intuitively, if $\F$ describes the information we have at our disposal, then
$\EE[ X \mid \F ]$ denotes the expectation of $X$ given this information.

We proceed to define the $\sigma$-algebra on $\X^\infty$
(the $\sigma$-algebra on $\X^\sharp$ is defined analogously).
For a finite string $x \in \X^*$,
the \emph{cylinder set}\index{cylinder set}
\[
\Gamma_x := \{ xy \mid y \in \X^\infty \}
\]
is the set of all infinite strings of which $x$ is a prefix.
Furthermore, we fix the $\sigma$-algebras
\[
\F_t := \sigma \big(\{ \Gamma_x \mid x \in \X^t \} \big)
\qquad\text{ and }\qquad
\Foo := \sigma \left( \bigcup_{t=1}^\infty \F_t \right).
\]
The sequence $(\F_t)_{t \in \mathbb{N}}$ is a \emph{filtration}%
\index{filtration|textbf}:
from $\Gamma_x = \bigcup_{a \in \X} \Gamma_{xa}$
follows that $\F_t \subseteq \F_{t+1}$ for every $t \in \mathbb{N}$,
and all $\F_t \subseteq \Foo$ by the definition of $\Foo$.

For our purposes, the $\sigma$-algebra $\F_t$ means
`all symbols up to and including time step $t$.'
So instead of conditioning an expectation on $\F_t$,
we can just as well condition it on the sequence $x_{1:t}$ drawn at time $t$.
Hence we write $\EE[ X \mid x_{1:t}]$ instead of $\EE[ X \mid \F_t]$.
Moreover, for conditional probabilities we also write
$Q(x_t \mid x_{<t})$ instead of $Q(x_{1:t} \mid x_{<t})$.

In the context of probability measures,
a measurable set $E \in \Foo$ is also called an
\emph{event}\index{event|textbf}.
The event $E^c := \X^\infty \setminus E$ denotes the complement of $E$.
In case the event $E$ is defined by a predicate $Q$
dependent on the random variable $X$, $E = \{ x \in \Omega \mid Q(X(x)) \}$,
we also use the shorthand notation
\[
P[ Q(X) ] := P(\{ x \in \Omega \mid Q(X(x)) \}) = P(E).
\]

We assume all sets to be measurable\index{measurable!set}%
\index{measurable!set|textbf};
when we write $P(A)$ for some set $A \subseteq \X^\infty$,
we understand implicitly that $A$ be measurable.
This is not true: not all subsets of $\X^\infty$ are measurable
(assuming the axiom of choice).
While we choose to do this for readability purposes,
note that under some axioms compatible with Zermelo-Fraenkel set theory,
notably the axiom of determinacy,
all subsets of $\X^\infty$ are measurable.

\section{Stochastic Processes}
\label{sec:stochastic-processes}

This section introduces some notions about sequences of random variables.

\begin{definition}[Stochastic Process]
\label{def:stochastic-process}\index{stochastic process|textbf}
$(X_t)_{t \in \mathbb{N}}$ is a \emph{stochastic process} iff
$X_t$ is a random variable for every $t \in \mathbb{N}$.
\end{definition}

A stochastic process $(X_t)_{t \in \mathbb{N}}$ is \emph{nonnegative} iff
$X_t \geq 0$ for all $t \in \mathbb{N}$.
The process is \emph{bounded} iff
there is a constant $c \in \mathbb{R}$ such that
$|X_t| \leq c$ for all $t \in \mathbb{N}$.

In the real numbers, a sequence $(z_t)_{t \in \mathbb{N}}$
converges if and only if it is a Cauchy sequence,
i.e., iff $|z_{t+1} - z_t| \to 0$ as $t \to \infty$.
For sequences of random variables
convergence is a lot more subtle and
there are several different notions of convergence.

\begin{definition}[Stochastic Convergence]
\label{def:stochastic-convergence}
Let $P$ be a probability measure.
A stochastic process $(X_t)_{t \in \mathbb{N}}$
\emph{converges to the random variable $X$}
\begin{itemize}
\item \emph{in $P$-probability}\index{convergence!in probability|textbf} iff
for every $\eps > 0$,
\[
P\big[ |X_t - X| > \eps \big] \to 0 \text{ as $t \to \infty$};
\]
\item \emph{in $P$-mean}\index{convergence!in mean|textbf} iff
\[
\EE_P \big[ | X_t - X| \big] \to 0 \text{ as $t \to \infty$};
\]
\item \emph{$P$-almost surely}\index{convergence!almost surely|textbf} iff
\[
P\left[ \lim_{t \to \infty} X_t = X \right] = 1.
\]
\end{itemize}
\end{definition}

Almost sure convergence and convergence in mean both imply
convergence in probability~\citep[Thm.~5.17]{Wasserman:2004}.
If the stochastic process is bounded,
then convergence in probability implies
convergence in mean~\citep[Thm.~5.19]{Wasserman:2004}.

A sequence of real numbers $(a_t)_{t \in \mathbb{N}}$ converges
\emph{in Cesàro average}\index{Cesàro average|textbf} to $a \in \mathbb{R}$ iff
$1/t \sum_{k=1}^t a_k \to a$ as $t \to \infty$.
The definition for sequences of random variables is analogous.

\begin{definition}[Martingale]
\label{def:martingale}
\index{martingale|textbf}
Let $P$ be a probability measure over $(\X^\infty, \Foo)$.
A stochastic process $(X_t)_{t \in \mathbb{N}}$ is a
\emph{$P$-supermartingale ($P$-submartingale)}
iff
\begin{enumerate}[(a)]
\item each $X_t$ is $\F_t$-measurable, and
\item $\EE[X_t \mid \F_s] \leq X_s$ ($\EE[X_t \mid \F_s] \geq X_s$)
	$P$-almost surely
	for all $s, t \in \mathbb{N}$ with $s < t$.
\end{enumerate}
A \emph{$P$-martingale} is a process that is both
a $P$-supermartingale and a $P$-submartingale.
\end{definition}

\begin{example}[Fair Gambling]
\label{ex:fair-gambling}
Suppose Mary bets on the outcome of a fair coin flip.
If she predicts correctly, her wager is doubled and otherwise it is lost.
Let $X_t$ denote Mary's wealth at time step $t$.
Since the game is fair,
$\EE[ X_{t+1} \mid \F_t ] = X_t$
where $\F_t$ represents the information available at time step $t$.
Hence $\EE[X_t] = X_1$,
so in expectation she never loses money regardless of her betting strategy.
\end{example}

For martingales the following famous convergence result
was proved by \citet{Doob:1953}. 

\begin{theorem}[{Martingale Convergence; \citealp[Thm.~5.2.9]{Durrett:2010}}]
\label{thm:martingale-convergence}
\index{convergence!martingale}\index{martingale}
If $(X_t)_{t \in \mathbb{N}}$ is a nonnegative supermartingale,
then it converges almost surely to a limit $X$ with $\EE[X] \leq \EE[X_1]$.
\end{theorem}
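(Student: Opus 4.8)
The plan is to prove this via Doob's upcrossing inequality, the classical route. Fix two rationals $a < b$ and, for each $N \in \mathbb{N}$, let $U_N$ denote the number of upcrossings of the interval $[a,b]$ completed by the process $(X_t)_{t \leq N}$, i.e., the number of times the path passes from a value $\leq a$ to a later value $\geq b$. First I would introduce the predictable ``buy low, sell high'' strategy $H_t \in \{0,1\}$, defined by $H_1 := \one_{\{X_1 \leq a\}}$ and, for $t > 1$, $H_t := \one_{\{H_{t-1} = 1\}}\one_{\{X_{t-1} < b\}} + \one_{\{H_{t-1} = 0\}}\one_{\{X_{t-1} \leq a\}}$, so that $H_t$ is $\F_{t-1}$-measurable. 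The associated martingale transform $Y_N := \sum_{t=2}^{N} H_t (X_t - X_{t-1})$ is then a supermartingale with $Y_1 = 0$, because $H$ is predictable, bounded, and nonnegative and $X$ is a supermartingale; hence $\EE[Y_N] \leq 0$ for all $N$.

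Next I would establish the deterministic pathwise bound $Y_N \geq (b-a)U_N - (X_N - a)^-$: each completed upcrossing contributes at least $b - a$ to $Y_N$, while the cost of an incomplete final excursion below level $b$ is at most $(X_N - a)^-$. Taking expectations and using $\EE[Y_N] \leq 0$ gives $(b-a)\,\EE[U_N] \leq \EE[(X_N - a)^-] \leq a^+$, where the last step uses nonnegativity of $X_N$, so that $(X_N - a)^- = \max(a - X_N, 0) \leq a^+$. Since $U_N$ is nondecreasing in $N$, monotone convergence yields $(b-a)\,\EE[U_\infty] \leq a^+ < \infty$, so the total number of upcrossings $U_\infty$ of $[a,b]$ is finite $P$-almost surely.

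Now I would conclude almost sure convergence. On the event $\{\liminf_t X_t < \limsup_t X_t\}$ one can choose rationals $a < b$ strictly between the two limits, and then $(X_t)$ upcrosses $[a,b]$ infinitely often; taking the union over the countably many rational pairs $(a,b)$, this event has probability zero by the previous paragraph. Hence $X_t \to X$ $P$-almost surely for some random variable $X$ taking values in $[0,\infty]$. Finally, Fatou's lemma together with the supermartingale property $\EE[X_t] \leq \EE[X_1]$ gives $\EE[X] = \EE[\liminf_t X_t] \leq \liminf_t \EE[X_t] \leq \EE[X_1]$; in particular $\EE[X] < \infty$, so $X < \infty$ almost surely, which completes the proof.

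I expect the main obstacle to be the pathwise upcrossing bound $Y_N \geq (b-a)U_N - (X_N - a)^-$ — getting the bookkeeping of completed versus partial excursions exactly right — together with the verification that $Y_N$ is genuinely a supermartingale, which relies on the predictability of $H$ and the fact that multiplying the increments of a supermartingale by a bounded nonnegative predictable process preserves the supermartingale property. Everything afterwards is a routine application of monotone convergence and Fatou's lemma.
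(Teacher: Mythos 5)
Your proof is correct: the paper does not prove this statement itself but imports it from \citet[Thm.~5.2.9]{Durrett:2010}, and your argument — the predictable buy-low/sell-high transform, the pathwise upcrossing bound $Y_N \geq (b-a)U_N - (X_N-a)^-$, the estimate $\EE[(X_N-a)^-] \leq a^+$ from nonnegativity, the union over rational pairs, and Fatou's lemma for $\EE[X] \leq \EE[X_1]$ — is exactly the classical upcrossing-inequality proof given in that reference. The only implicit ingredient is that $\EE[X_1] < \infty$, which is part of the standard (super)martingale definition, so your final deduction that $X$ is almost surely finite is justified.
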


By \autoref{thm:martingale-convergence}
the martingale from \autoref{ex:fair-gambling}
representing Mary's wealth converges almost surely,
regardless of her betting strategy.
Either she refrains from betting at some point
(assuming she cannot place smaller and smaller bets)
or she cannot play anymore because her wealth is $0$.
Is there a lesson to learn here about gambling?

\section{Information Theory}
\label{sec:information-theory}

This section introduces the notions of \emph{entropy} and
two notions of distance between probability measures:
\emph{KL-divergence} and \emph{total variation distance}.

\begin{definition}[Entropy]
\label{def:entropy}
\index{entropy|textbf}
Let $\Omega$ be a countable set.
For a probability distribution $p \in \Delta\Omega$,
the \emph{entropy} of $p$ is defined as
\[
   \Ent(p)
:= -\sum_{\mathclap{x \in \Omega:\, p(x) > 0}} p(x) \log p(x).
\]
\end{definition}

\begin{definition}[KL-Divergence]
\label{def:KL-divergence}
\index{KL-divergence|textbf}
Let $P, Q$ be two measures and
let $m \in \mathbb{N}$ be a lookahead time step.
The \emph{Kullback-Leibler-divergence} (\emph{KL-divergence}) of $P$ and $Q$
between time steps $t$ and $m$
is defined as
\[
   \KL_m(P, Q \mid x_{<t}) \\
:= \sum_{x_{t:m} \in \X^{m-t+1}} P(x_{1:m} \mid x_{<t})
     \log \frac{P(x_{1:m} \mid x_{<t})}{Q(x_{1:m} \mid x_{<t})}.
\]
Moreover, we define $\KL_\infty(P, Q \mid x_{<t})
:= \lim_{m \to \infty} \KL_m(P, Q \mid x_{<t})$.
\end{definition}

KL-divergence is also known as
\emph{relative entropy}\index{entropy!relative|see {KL-divergence}}.
KL-divergence is always nonnegative by Gibbs' inequality%
\index{Gibb's inequality},
but it is not a distance since it is not symmetric.
If the alphabet $\X$ is finite,
then $\KL_m(P, Q \mid x)$ is always finite.
However, $\KL_\infty(P, Q \mid x)$ may be infinite.

\begin{definition}[Total Variation Distance]
\label{def:total-variation-distance}
\index{total variation distance|textbf}
Let $P, Q$ be two measures and
let $1 \leq m \leq \infty$ be a lookahead time step.
The \emph{total variation distance} between $P$ and $Q$
between time steps $t$ and $m$
is defined as
\[
   D_m(P, Q \mid x)
:= \sup_{A \subseteq \X^m} \Big|
     P(A \mid x_{<t}) - Q(A \mid x_{<t})
   \Big|.
\]
\end{definition}

Total variation distance is always bounded between $0$ and $1$
since $P$ and $Q$ are probability measures.
Moreover,
in contrast to KL-divergence total variation distance
satisfies the axioms of distance: symmetry ($D(P, Q) = D(Q, P)$),
identity of indiscernibles ($D(P, Q) = 0$ if and only if $P = Q$),
and the triangle inequality ($D(P, Q) + D(Q, R) \geq D(P, R)$).

The following lemma shows that
total variation distance can be used to bound
differences in expectation.

\begin{lemma}[Total Variation Bound on the Expectation]
\label{lem:expectation-total-variation}
\index{total variation distance}
For a random variable $X$ with $0 \leq X \leq 1$
and two probability measures $P$ and $Q$
\[
       \big| \EE_P[X] - \EE_Q[X] \big|
~\leq~ D(P, Q).
\]
\end{lemma}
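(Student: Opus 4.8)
The plan is to reduce the claim to the standard characterization of total variation distance as a supremum over measurable events. First I would recall that for probability measures $P$ and $Q$ on a measurable space, $D(P,Q) = \sup_{A} |P(A) - Q(A)|$, where the supremum ranges over measurable sets; this is exactly \autoref{def:total-variation-distance} specialized to the relevant $\sigma$-algebra (and with no conditioning). The bound we want is then a consequence of the ``layer-cake'' representation of the expectation of a $[0,1]$-valued random variable: $\EE_P[X] = \int_0^1 P[X > s]\,ds$, and likewise for $Q$.

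The key steps, in order, are: (1) Write $\EE_P[X] - \EE_Q[X] = \int_0^1 \big( P[X > s] - Q[X > s] \big)\,ds$ using the layer-cake formula, which is valid because $0 \le X \le 1$. (2) Apply the triangle inequality for integrals to get $|\EE_P[X] - \EE_Q[X]| \le \int_0^1 |P[X > s] - Q[X > s]|\,ds$. (3) For each fixed $s \in [0,1]$, the set $\{X > s\}$ is measurable (since $X$ is a random variable, hence measurable), so $|P[X > s] - Q[X > s]| \le \sup_A |P(A) - Q(A)| = D(P,Q)$. (4) Integrate this uniform bound over $s \in [0,1]$, which has Lebesgue measure $1$, to conclude $|\EE_P[X] - \EE_Q[X]| \le D(P,Q)$.

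Alternatively, if one prefers to avoid the layer-cake representation, I would argue directly via the Hahn--Jordan decomposition: let $A^+ = \{$ the density of $P-Q$ is positive $\}$ (more precisely, take a Hahn decomposition for the signed measure $P - Q$). Then $\EE_P[X] - \EE_Q[X] = \int X \, d(P - Q) \le \int_{A^+} X\, d(P-Q) \le (P - Q)(A^+) \le D(P,Q)$, using $0 \le X \le 1$ and that $(P-Q)$ restricted to $A^+$ is a nonnegative measure; symmetrically one bounds $\EE_Q[X] - \EE_P[X]$. Either route is short.

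The only mild obstacle is bookkeeping about which $\sigma$-algebra and which version of $D$ is intended: the definition in the excerpt writes $D_m(P,Q \mid x)$ with a lookahead $m$ and a conditioning string, whereas the lemma uses the unconditional $D(P,Q)$. I would simply note that $D(P,Q)$ here denotes $D_\infty(P,Q)$ with empty conditioning, i.e.\ the supremum of $|P(A) - Q(A)|$ over all measurable $A$, and that this dominates $|P(B) - Q(B)|$ for every measurable $B$ — in particular for the sub-level and super-level sets of $X$ appearing above. With that identification the inequality follows immediately from steps (1)--(4).
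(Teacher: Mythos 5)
Your proposal is correct. Your primary route — the layer-cake identity $\EE_P[X] = \int_0^1 P[X>s]\,ds$, the triangle inequality, and the uniform bound $|P[X>s]-Q[X>s]| \le \sup_A|P(A)-Q(A)|$ integrated over $[0,1]$ — is a genuinely different argument from the paper's. The paper's proof instead introduces a dominating measure (essentially $\tfrac12 P + \tfrac12 Q$), writes both expectations as integrals of $X$ times the respective Radon--Nikodym derivatives, and restricts the integral to the set where the density of $P$ exceeds that of $Q$, bounding $X$ by $1$ there; this is precisely the Hahn-decomposition argument you sketch as your alternative, so your second route coincides with the paper's. The trade-off is minor: your layer-cake argument needs nothing beyond measurability of the level sets $\{X>s\}$ and the definition of $D$ as a supremum over events, avoiding any appeal to densities or dominating measures, whereas the paper's (and your alternative) argument is equally short but leans on the Radon--Nikodym/Hahn machinery. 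Your remark identifying $D(P,Q)$ with the unconditional $D_\infty$, i.e.\ the supremum of $|P(A)-Q(A)|$ over all measurable $A$, is the right reading of the paper's notation and closes the only bookkeeping gap.
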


KL-divergence and total variation distance
are linked by the following inequality.

\begin{lemma}[{Pinsker's inequality; \citealp[Lem.~2.5i]{Tsybakov:2008}}]
\label{lem:Pinsker}\index{Pinsker's inequality}
\index{KL-divergence}\index{total variation distance}
For all probability measures $P$ and $Q$ on $\X^\infty$,
for every $x \in \X^*$, and for every $m \in \mathbb{N}$
\[
     D_m(P, Q \mid x)
\leq \sqrt{\frac{1}{2} \KL_m(P, Q \mid x)}
\]
\end{lemma}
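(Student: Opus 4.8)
The plan is to prove Pinsker's inequality $D_m(P,Q \mid x) \leq \sqrt{\tfrac{1}{2}\KL_m(P,Q \mid x)}$ by reducing the general (finite alphabet, fixed horizon $m$) case to the elementary two-point version. Since $\X$ is finite, both sides are really statements about the two discrete distributions $p := P(\,\cdot \mid x)$ and $q := Q(\,\cdot \mid x)$ on the finite set $\X^{m - |x|}$ of continuations; from now on I would drop the conditioning and write $D(p,q) := \sup_{A} |p(A) - q(A)|$ and $\KL(p,q) := \sum_\omega p(\omega) \log \frac{p(\omega)}{q(\omega)}$ over this finite outcome set. (One minor bookkeeping point: the lemma as stated uses the binary logarithm $\log$, matching \autoref{def:KL-divergence}; Pinsker is usually stated with $\ln$, so the constant $\tfrac12$ here implicitly absorbs a $\ln 2$. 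I would either switch to $\ln$ throughout and note the constant changes, or just carry the $\log$ and verify the inequality $x \log x \geq \ldots$ below still gives $\tfrac12$ — I would double-check which convention makes the stated constant correct; if the paper really wants $\tfrac12$ with binary log, the cleanest route is to prove it with $\ln$ and observe $D$ and $\sqrt{\cdot}$ scale correctly.)

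First I would record the variational characterization $D(p,q) = \sum_{\omega : p(\omega) \geq q(\omega)} \big(p(\omega) - q(\omega)\big) = \tfrac12 \sum_\omega |p(\omega) - q(\omega)| = \tfrac12 \|p - q\|_1$, obtained by taking $A^* = \{\omega : p(\omega) \geq q(\omega)\}$ as the maximizing event; this is the same observation used in the (commented-out) proof of \autoref{lem:expectation-total-variation}. So the goal becomes $\|p-q\|_1^2 \leq 2 \KL(p,q)$ (up to the log-base constant). Next I would handle the binary case: for $a, b \in [0,1]$ define $\phi(a,b) := a \ln\frac{a}{b} + (1-a)\ln\frac{1-a}{1-b} - 2(a-b)^2$ and show $\phi(a,b) \geq 0$. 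This is a one-variable calculus exercise: fix $b$, note $\phi(b,b) = 0$, and differentiate in $a$ to get $\partial_a \phi = \ln\frac{a}{b} - \ln\frac{1-a}{1-b} - 4(a-b)$, which vanishes at $a = b$ and has derivative $\frac{1}{a(1-a)} - 4 \geq 0$ since $a(1-a) \leq \tfrac14$; hence $\partial_a\phi$ has the sign of $a - b$ and $\phi \geq 0$ everywhere.

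Then I would lift from two points to the general finite case by the standard coarsening argument: given the optimal set $A^*$ above, let $\bar p := p(A^*)$, $\bar q := q(A^*)$ be the pushforwards of $p,q$ under the partition $\{A^*, (A^*)^c\}$. The data-processing (monotonicity under coarsening) inequality for KL-divergence — which follows from the log-sum inequality / Jensen — gives $\KL(p,q) \geq \phi\text{-type term} = \bar p \ln\frac{\bar p}{\bar q} + (1-\bar p)\ln\frac{1-\bar p}{1-\bar q}$, and meanwhile $D(p,q) = |\bar p - \bar q|$ by construction of $A^*$. Combining with the binary bound, $\KL(p,q) \geq 2(\bar p - \bar q)^2 = 2 D(p,q)^2$, i.e. $D(p,q) \leq \sqrt{\tfrac12 \KL(p,q)}$, and re-attaching the conditioning on $x$ and the horizon index $m$ finishes it. The main obstacle is not conceptual but a matter of care: making sure the partition/coarsening step is airtight (including the boundary cases $\bar q \in \{0,1\}$, where one factor of KL is $+\infty$ or $0$ and the inequality is trivial) and pinning down the logarithm-base constant so the statement comes out exactly as $\tfrac12$ rather than $\tfrac{1}{2\ln 2}$. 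Given that \autoref{lem:Pinsker} is cited verbatim from \citet[Lem.~2.5i]{Tsybakov:2008}, I would actually expect the "proof" in the paper to be a one-line citation, so the above is the self-contained argument one would give if pressed.
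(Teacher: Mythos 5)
Your proof is correct, and your closing guess about the paper is also right: the paper gives no proof of \autoref{lem:Pinsker} at all — the lemma is imported verbatim from \citet[Lem.~2.5i]{Tsybakov:2008} — so there is no internal argument to compare against. Your self-contained route (write $D$ as $\tfrac12\|p-q\|_1$ with maximizer $A^*=\{\omega: p(\omega)\ge q(\omega)\}$, coarsen to the two-point partition $\{A^*,(A^*)^c\}$ using the log-sum/data-processing inequality for KL, and prove the binary bound $a\ln\tfrac{a}{b}+(1-a)\ln\tfrac{1-a}{1-b}\ge 2(a-b)^2$ by the calculus you sketch) is the classical proof and is sound, including the degenerate cases $\bar q\in\{0,1\}$. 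On the point you left open about the logarithm base: since the paper's $\KL_m$ is in bits, the sharp nats version gives $D_m \le \sqrt{\tfrac{\ln 2}{2}\KL_m}$, which implies the stated bound with constant $\tfrac12$ because $\ln 2\le 1$; so the lemma as written is correct, merely not tight.
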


\section{Algorithmic Information Theory}
\label{sec:AIT}

A \emph{universal Turing machine}\index{Turing machine!universal|textbf} (UTM)
is a Turing machine that can simulate all other Turing machines.
Formally, a Turing machine $U$ is a UTM iff
for every Turing machine $T$ there is a binary string $p$
(called \emph{program}\index{program})
such that $U(p, x) = T(x)$ for all $x \in \X^*$,
i.e., the output of $U$ when run on $(p, x)$ is the same as
the output of $T$ when run on $x$.
We assume the set of programs on $U$ is prefix-free.
The \emph{Kolmogorov complexity $K(x)$}\index{Kolmogorov complexity|textbf}
of a string $x$ is the length of the shortest program on $U$
that prints $x$ and then halts:
\[
K(x) := \min \{ |p| \mid U(p) = x \}.
\]

A \emph{monotone Turing machine}\index{Turing machine!monotone|textbf}
is a Turing machine
with a one-way read-only input tape,
a one-way write-only output tape, and
a read/write work tape.
Monotone Turing machines sequentially read symbols from their input tape
and write to their output tape.
Interpreted as a function,
a monotone Turing machine $T$ maps a string $x$ to
the longest string that $T$ writes to the output tape
while reading $x$ and no more from the input tape%
~\citep[Ch.~4.5.2]{LV:2008}.

We also use $U$ to denote
a universal monotone Turing machine
(programs on the universal monotone Turing machine
do not have to be prefix-free).
The \emph{monotone Kolmogorov complexity $\Km(x)$}%
\index{Kolmogorov complexity!monotone|textbf} denotes
the length of the shortest program on the monotone machine $U$
that prints a string starting with $x$~\citep[Def.~4.5.9]{LV:2008}:
\begin{equation}\label{eq:Km}
\Km(x) := \min \{ |p| \mid x \sqsubseteq U(p) \}.
\end{equation}
Since monotone complexity does not require the machine to halt,
there is a constant $c$ such that $\Km(x) \leq K(x) + c$ for all $x \in X^*$.

The following notion of a (semi)measure
is particular to algorithmic information theory.

\begin{definition}[{Semimeasure; \citealp[Def.~4.2.1]{LV:2008}}]
\label{def:semimeasure}
\index{semimeasure|textbf}
\index{measure|textbf}
A \emph{semimeasure} over the alphabet $\X$ is
a function $\nu: \X^* \to [0,1]$ such that
\begin{enumerate}[(a)]
\item $\nu(\epsilon) \leq 1$, and
\item $\nu(x) \geq \sum_{a \in \X} \nu(xa)$ for all $x \in \X^*$.
\end{enumerate}
A semimeasure is a \emph{(probability) measure} iff
equalities hold in (a) and (b) for all $x \in \X^*$.
\end{definition}

Semimeasures are not probability measures
in the classical measure theoretic sense.
However,
semimeasures correspond canonically to
classical probability measures on the probability space
$\X^\sharp = \X^* \cup X^\infty$
whose $\sigma$-algebra is generated by the cylinder sets\index{cylinder set}%
~(\citealp[Ch.~4.2]{LV:2008} and \citealp{Hay:2007}).

Lower semicomputable\index{lower semicomputable} semimeasures
correspond naturally to
monotone Turing machines\index{Turing machine!monotone}%
~\citep[Thm.~4.5.2]{LV:2008}:
for a monotone Turing machine $T$,
the semimeasure $\lambda_T$ maps a string $x$ to
the probability that $T$ outputs something starting with $x$
when fed with fair coin flips as input (and vice versa).
Hence we can enumerate all lower semicomputable semimeasures
$\nu_1, \nu_2, \ldots$ by enumerating all monotone Turing machines.
We define the Kolmogorov complexity $K(\nu)$ of
a lower semicomputable semimeasure $\nu$ as
the Kolmogorov complexity of the index of $\nu$ in this enumeration.

We often mix the (semi)measures of algorithmic information theory
with concepts from probability theory.
For convenience, we identify a finite string $x \in \X^*$
with its cylinder set $\Gamma_x$.
Then $\nu(x)$ in the algorithmic information theory sense
coincides with $\nu(\Gamma_x)$ in the measure theory sense
if we use the identification of semimeasures with probability measures above.

\begin{example}[Lebesgue Measure]
\label{ex:Lebesgue-measure}\index{measure!Lebesgue|textbf}
\index{measure!uniform|see {Lebesgue measure}}
The \emph{Lebesgue measure} or \emph{uniform measure} is defined as
\[
\lambda(x) := (\#\X)^{-|x|}.
\qedhere
\]
\end{example}

The following definition turns a semimeasure into a measure,
preserving the predictive ratio $\nu(xa)/\nu(xb)$ for $a, b \in \X$.

\begin{definition}[Solomonoff Normalization]
\label{def:Solomonoff-normalization}
\index{Solomonoff!normalization|textbf}
The \emph{Solomonoff normalization} $\nu\norm$ of a semimeasure $\nu$
is defined as
$\nu\norm(\epsilon) := 1$ and
for all $x \in \X^*$ and $a \in \X$,
\begin{equation}\label{eq:normalization}
   \nu\norm(xa)
:= \nu\norm(x) \frac{\nu(xa)}{\sum_{b \in \X} \nu(xb)}.
\end{equation}
\end{definition}

By definition, $\nu\norm$ is a measure.
Moreover, $\nu\norm$ dominates $\nu$ according to the following lemma.

\begin{lemma}[$\nu\norm \geq \nu$]
\label{lem:nu-norm-dominates-nu}
$\nu\norm(x) \geq \nu(x)$ for all $x \in \X^*$ and all semimeasures $\nu$.
\end{lemma}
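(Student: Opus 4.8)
The plan is to prove the inequality by induction on the length of the string $x$, using only the two defining properties of a semimeasure and the nonnegativity of $\nu$.

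For the base case $x = \epsilon$, we have $\nu\norm(\epsilon) = 1$ by definition, while $\nu(\epsilon) \leq 1$ by part~(a) of \autoref{def:semimeasure}; hence $\nu\norm(\epsilon) \geq \nu(\epsilon)$.

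For the inductive step, assume $\nu\norm(x) \geq \nu(x)$ for some $x \in \X^*$ and fix $a \in \X$. If $\sum_{b \in \X} \nu(xb) = 0$, then, since $\nu$ is nonnegative, $\nu(xa) = 0$, so the claim $\nu\norm(xa) \geq 0 = \nu(xa)$ holds (both sides vanish). Otherwise the quotient in~\eqref{eq:normalization} is well defined, and I would chain three observations: the inductive hypothesis $\nu\norm(x) \geq \nu(x)$, multiplied by the nonnegative factor $\nu(xa) / \sum_{b \in \X} \nu(xb)$; the semimeasure property~(b) of \autoref{def:semimeasure}, which gives $\nu(x) / \sum_{b \in \X} \nu(xb) \geq 1$; and finally multiplication of that bound by $\nu(xa) \geq 0$. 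Concretely,
\[
   \nu\norm(xa)
 = \nu\norm(x)\,\frac{\nu(xa)}{\sum_{b \in \X}\nu(xb)}
 \geq \nu(x)\,\frac{\nu(xa)}{\sum_{b \in \X}\nu(xb)}
 \geq \nu(xa),
\]
which completes the induction.

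There is no genuine obstacle here. The only point that requires a moment of care is the degenerate case $\sum_{b \in \X}\nu(xb) = 0$, in which the right-hand side of~\eqref{eq:normalization} is a $0/0$ expression; one simply observes that both $\nu\norm(xa)$ (under whatever convention is adopted for this case) and $\nu(xa)$ are then zero. Everything else follows immediately from the semimeasure axioms.
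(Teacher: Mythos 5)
Your proof is correct and follows essentially the same induction-on-length argument as the paper, with the identical chain of inequalities in the inductive step. The only differences are cosmetic refinements: you correctly use $\nu(\epsilon) \leq 1$ in the base case (the paper writes equality, which is only guaranteed for measures) and you explicitly handle the degenerate $0/0$ case, both of which the paper glosses over.
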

\begin{proof}
We use induction on the length of $x$:
if $x = \epsilon$ then $\nu\norm(\epsilon) = 1 = \nu(\epsilon)$,
and otherwise
\[
     \nu\norm(xa)
=    \frac{\nu\norm(x) \nu(xa)}{\sum_{b \in \X} \nu(xb)}
\geq \frac{\nu(x) \nu(xa)}{\sum_{b \in \X} \nu(xb)}
\geq \frac{\nu(x) \nu(xa)}{\nu(x)}
=    \nu(xa).
\]
The first inequality holds by induction hypothesis and
the second inequality uses the fact that $\nu$ is a semimeasure.
\end{proof}


\chapter{Learning}
\label{cha:learning}

\def\BR{{B\!R}}                         
\def\nBR{{\overline{B}\!R}}             
\def\BnR{{B\!\overline{R}}}             
\def\nBnR{{\overline{B}\!\overline{R}}} 

\falsequote{David Hume}{The problem of induction is essentially solved.}

\emph{Machine learning} refers to the process of
learning models of and/or making predictions about (large) sets of data points
that are typically independent and identically distributed (i.i.d.);
see \citet{Bishop:2006} and \citet{HTF:2009}.
In this chapter we do \emph{not} make the i.i.d.\ assumption.
Instead, we aim more generally at the
theoretical fundamentals of the \emph{sequence prediction problem}:
how will a sequence of symbols generated by an unknown stochastic process
be continued?
Given a finite string $x_{<t} = x_1 x_2 \ldots x_{t-1}$ of symbols,
what is the next symbol $x_t$?
How likely does a given property hold for the entire sequence $x_{1:\infty}$?
Arguably, any learning or prediction problem can be phrased in this fashion:
anything that can be stored on a computer can be turned into a sequence of bits.

We distinguish two major elements of learning.
First, the process of converging to accurate beliefs,
called \emph{merging}\index{merging}.
Second, the process of making accurate forecasts about the next symbol,
called \emph{predicting}\index{prediction}.
These two notions are not distinct:
if you have accurate beliefs about the unseen data,
then you can make good predictions,
but not necessarily vice versa~(see \autoref{ex:prediction-no-merging}).
We discuss different notions of merging in \autoref{sec:merging} and
state bounds on the prediction regret in \autoref{sec:predicting}.

In the general reinforcement learning problem we target in this thesis,
the environment is unknown and the agent needs to learn it.
The literature on non-i.i.d.\ learning has focused on
predicting individual symbols and bounds on the number of prediction errors%
~\citep{Hutter:2001error,Hutter:2005,CBL:2006},
and the results on merging are from the game theory literature%
~\citep{BD:1962,KL:1994,LS:1996}.
However, we argue that merging\index{merging}
is the essential property for general AI.
In order to make good decisions,
the agent needs to have accurate beliefs about what its actions will entail.
On a technical level,
merging leads to on-policy value convergence%
~(\autoref{ssec:on-policy-value-convergence}), the fact that
the agents learns to estimate the values for its own policy correctly.

The setup we consider is the \emph{realizable case}\index{realizable case}:
we assume that the data is generated by an unknown probability distribution
that belongs to a known (countable) class of distributions.
In contrast, the \emph{nonrealizable case}
allows no assumptions on the underlying process that generates the data.
A well-known approach to the nonrealizable case
is \emph{prediction with expert advice}\index{prediction!with expert advice}%
~\citep{CBL:2006},
which we do not consider here.
Generally, the nonrealizable case is harder,
but \citet{Ryabko:2011} argues that for some problems,
both cases coincide.

After introducing the formal setup in \autoref{sec:learning-setup},
we discuss several examples for learning distributions
and notions that relate the learning distribution
with the process generating the data in \autoref{sec:compatibility}.
In \autoref{sec:martingales} we connect these notions
to the theory of martingale processes.

\autoref{sec:learning-AIT} connects the results from the first sections
to the learning framework developed by \citet{Solomonoff:1964,Solomonoff:1978},
\citet{Hutter:2001error,Hutter:2005,Hutter:2007universal}, and
\citet{Schmidhuber:2002} (among others).
This framework relies on results from
algorithmic information theory and computability theory
to learn any computable distribution quickly and effectively.
It is incomputable~(see \autoref{sec:complexity-Solomonoff}),
but can serve as a gold standard for learning.

Most of this chapter echoes the literature.
We collect results from economics and computer science
that previously had not been assembled in one place.
We provide proofs that connect the various properties
(\autoref{prop:martingales-and-compatibility},
\autoref{prop:compatibility}, and \autoref{prop:merging}), and
we fill in a few gaps in the picture:
the prediction bounds for absolute continuity%
~(\autoref{ssec:prediction-absolute-continuity})
and the improved regret bounds for nonuniform measures%
~(\autoref{thm:expected-prediction-regret-nonuniform} and
\autoref{thm:prediction-regret-nonuniform}).
\autoref{sec:learning-summary} summarizes the results
in \autoref{tab:learning-overview}
on page~\pageref*{tab:learning-overview}
as well as \autoref{fig:learning-overview}
on page~\pageref*{fig:learning-overview}.

\section{Setup}
\label{sec:learning-setup}

For the rest of this chapter,
fix $P$ and $Q$ to be two probability measures over
the measurable space of infinite sequences $(\X^\infty, \Foo)$.
We think of $P$ as the \emph{true distribution}
from which the data sequence $x_{1:\infty}$ is drawn, and
of $Q$ as our belief distribution or learning algorithm.
In other words,
we use the distribution $Q$
to learn a string drawn from the distribution $P$.

Let $H$ denote a \emph{hypothesis}\index{hypothesis|textbf},
i.e., any measurable set from $\Foo$.
Our \emph{prior belief}\index{prior} in the hypothesis $H$ is $Q(H)$.
In each time step $t$,
we make one observation $x_t \in \X$.
Our \emph{history}\index{history} $x_{<t} = x_1 x_2 \ldots x_{t-1}$
is the sequence of all previous observations.
We update our belief
in accordance with Bayesian learning;
our \emph{posterior belief}\index{posterior} in the hypothesis $H$ is
\[
    Q(H \mid x_{1:t})
~=~ \frac{Q(H \cap x_{1:t})}{Q(x_{1:t})}.
\]
The observation $x_t$ \emph{confirms}\index{confirmation}
the hypothesis $H$ iff
$Q(H \mid x_{1:t}) > Q(H \mid x_{<t})$ (the belief in $H$ increases), and
the observation $x_t$ \emph{disconfirms}\index{disconfirmation}
the hypothesis $H$ iff
$Q(H \mid x_{1:t}) < Q(H \mid x_{<t})$ (the belief in $H$ decreases).
If $Q(H \mid x_{1:t}) = 0$, then $H$ is \emph{refuted}\index{refute}
or \emph{falsified}\index{falsify}.

\index{compatibility}
When we assign a prior belief of $0$ to a hypothesis $H$,
this means that we think that $H$ is impossible;
it is refuted from the beginning.
If $Q(H) = 0$, then the posterior $Q(H \mid x_{<t}) = 0$,
so no evidence whatsoever can change our mind that $H$ is impossible.
This is bad if the hypothesis $H$ is actually true.

To be able to learn
we need to make some assumptions on the learning distribution $Q$:
we need to have an open mind about anything that might actually happen,
i.e., $Q(H) > 0$ on any hypothesis $H$ with $P(H) > 0$.
This property is called \emph{absolute continuity}\index{absolute continuity}.
We discuss this and other notions of \emph{compatibility of $P$ and $Q$}%
\index{compatibility}
in \autoref{sec:compatibility}.

We motivate this chapter with the following example.

\begin{example}[{The Black Ravens; \citealp[Sec.~7.4]{RH:2011}}]
\label{ex:black-ravens}\index{black ravens}
If we live in a world in which all ravens are black,
how can we learn this fact?
Since at every time step we have observed only a finite subset of
the (possibly infinite) set of all ravens,
how can we confidently state anything about all ravens?

We formalize this problem in line with \citet[Sec.~7.4]{RH:2011}
and \citet{LH:2015ravens}.
We define two predicates, blackness $B$ and ravenness $R$.
There are four possible observations:
a black raven $\BR$,
a non-black raven $\nBR$,
a black non-raven $\BnR$, and
a non-black non-raven $\nBnR$.
Therefore our alphabet consists of
four symbols corresponding to each of the possible observations,
$\X := \{ \BR, \nBR, \BnR, \nBnR \}$.

We are interested in the hypothesis\index{hypothesis} `all ravens are black'.
Formally, it corresponds to the measurable set
\begin{equation}\label{eq:hypothesis-all-ravens-are-black}
     H
~:=~ \{ x \in \X^\infty \mid x_t \neq \nBR \;\forall t \}
~ =~ \{ \BR, \BnR, \nBnR \}^\infty,
\end{equation}
the set of all infinite strings
in which the symbol $\nBR$ does not occur.

If we observe a non-black raven, $x_t = \nBR$,
the hypothesis $H$ is refuted
since $H \cap x_{1:t} = \emptyset$
and this implies $Q(H \mid x_{1:t}) = 0$.
In this case, our inquiry regarding $H$ is settled.
The interesting case is when the hypothesis $H$ is in fact true ($P(H) = 1$),
i.e., $P$ does not generate any non-black ravens.
The property we desire is that in a world in which all ravens are black,
we arrive at this belief:
$P(H) = 1$ implies $Q(H \mid x_{<t}) \to 1$ as $t \to \infty$.
\end{example}

\section{Compatibility}
\label{sec:compatibility}
\index{compatibility}\index{measure!compatibility|see {compatibility}}

In this section
we define \emph{dominance}\index{dominance},
\emph{absolute continuity}\index{absolute continuity},
\emph{dominance with coefficients}\index{dominance!with coefficients},
\emph{weak dominance}\index{dominance!weak}, and
\emph{local absolute continuity}\index{absolute continuity!local},
in decreasing order of their strength.
These notions make the relationship
of the two probability measures $P$ and $Q$ precise.
We also give examples for various choices for the learning algorithm $Q$.

In our examples, we frequently rely on the following process.

\begin{example}[Bernoulli Process]
\label{ex:Bernoulli}\index{measure!Bernoulli}
Assume $\X = \{ 0, 1 \}$.
For a real number $r \in [0, 1]$ we define the
\emph{Bernoulli process with parameter $r$} as the measure
\[
   \Bernoulli(r)(x)
:= r^{\mathrm{ones}(x)}
    (1 - r)^{\mathrm{zeros}(x)}.
\]
Note that $\Bernoulli(1/2) = \lambda$,
the Lebesgue measure from \autoref{ex:Lebesgue-measure}.
\end{example}

\begin{definition}[Dominance]
\label{def:dominance}\index{dominance|textbf}
The measure $Q$ \emph{dominates} $P$
($Q \timesgeq P$)
iff there is a constant $c > 0$ such that
$Q(x) \geq c P(x)$ for all finite strings $x \in \X^*$.
\end{definition}

Dominance is also called \emph{having a grain of truth}\index{grain of truth}%
~(\citealp[Def.~2a]{LS:1996} and \citealp{KL:1993});
we discuss this property in the context of game theory
in \autoref{cha:grain-of-truth-problem}.

\begin{example}[Bayesian mixture]
\label{ex:Bayesian-mixture}\index{Bayesian!mixture|textbf}
Let $\M$ be a countable set of probability measures
on $(\X^\infty, \Foo)$ and
let $w \in \Delta\M$ be a prior over $\M$.
If $w(P) > 0$ for all $P \in \M$,
the prior $w$ is called \emph{positive}\index{prior!positive|textbf}
or \emph{universal}\index{prior!universal|see {prior!positive}}.
Then the Bayesian mixture $\xi := \sum_{P \in \M} w(P) P$
dominates each $P \in \M$.
\end{example}

The Bayesian mixture is a mathematically simple yet very powerful concept.
It is very easy to derive from a countable set of distributions,
and it has been considered extensively in the literature%
~\citep[\dots]{Solomonoff:1964,Jaynes:2003,Hutter:2005}.
\citet{Ryabko:2009} shows that
even for uncountably infinite classes,
if there are good predictors,
then a Bayesian mixture over a countable subclass
asymptotically also does well.

\begin{example}[Solomonoff Prior]
\label{ex:Solomonoff-prior}
\index{Solomonoff!prior|textbf}
\index{Turing machine!universal}
\citet{Solomonoff:1964} defines a distribution $M$ over $\X^\sharp$
that assigns to a string $x$
the probability that the universal monotone Turing machine $U$ outputs $x$
when fed with fair coin flips on the input tape.
Formally,
\begin{equation}\label{eq:def-M}
M(x) := \sum_{p:\, x \sqsubseteq U(p)} 2^{-|p|}
\end{equation}
where $p$ is a binary string.\footnote{%
We use the name \emph{Solomonoff prior} for both a distribution over
$\X^\infty$ and a distribution over a computably enumerable set $\M$.
Maybe $M$ should better be called
\emph{Solomonoff mixture}\index{Solomonoff!mixture}
to avoid confusion.
}
The function $M$ is a lower semicomputable semimeasure,
but not computable and not a measure~\citep[Lem.~4.5.3]{LV:2008};
see \autoref{sec:complexity-Solomonoff}
for the computability properties of $M$.
More importantly, $M$ dominates every lower semicomputable semimeasure%
~\citep[Thm.~4.5.1]{LV:2008}.

Solomonoff's prior $M$ has
a number of appealing philosophical properties.
In line with Ockham's razor\index{Ockham's razor}
it favors simple environments over complex ones:
Turing machines that have a short program on the UTM $U$
have a higher contribution in the sum \eqref{eq:def-M}.
In line with Epicurus' principle\index{Epicurus' principle}
it never discards possible explanations:
every program that produces the string $x$ contributes to the sum.
See \citet{RH:2011} for a discussion on
the philosophical underpinnings of Solomonoff's prior.
\end{example}

\citet{WSH:2011} show that
the Solomonoff prior $M$ can equivalently be defined as
a Bayesian mixture over all lower semicomputable semimeasures
with a prior $w(P) \propto 2^{-K(P)}$.
(If we use $w(P) = 2^{-K(P)}$ we get a \emph{semiprior}\index{semiprior}
because $\sum_{P \in \M} 2^{-K(P)}$ can be less than $1$.
This prior also carries the name
\emph{Solomonoff prior}\index{Solomonoff!prior}.)

\begin{definition}[Absolute Continuity]
\label{def:absolute-continuity}\index{absolute continuity|textbf}
The measure $P$ is \emph{absolutely continuous with respect to $Q$}
($Q \gg P$)
iff $Q(A) = 0$ implies $P(A) = 0$ for all measurable sets $A$.
\end{definition}

\begin{remark}[Absolute Continuity $\nRightarrow$ Dominance]
\label{rem:absolute-continuity-neq-dominance}
\index{absolute continuity}\index{dominance}
Absolute continuity is strictly weaker than dominance:
let $\X := \{ 0, 1 \}$ and
define a probability measure $P$ that assigns probability $2/3$ to $1$
and probability $1/3$ to $0$ until seeing the first $0$,
then $P$ behaves like the Lebesgue measure $\lambda$.
Formally,
\[
P(x_{1:t}) :=
\begin{cases}
\left(\tfrac{2}{3} \right)^t
	&\text{if } x_{1:t} = 1^t, \text{ and} \\
\left(\tfrac{2}{3} \right)^n \tfrac{1}{3} \lambda(x_{n+2:t})
	&\text{if } \exists n \geq 0.\; 1^n 0 \sqsubseteq x_{1:t}.
\end{cases}
\]
Since $\lambda(1^t)/P(1^t) = (3/4)^t \to 0$ as $t \to \infty$,
there is no constant $c$ such that
$\lambda(x)/P(x) > c > 0$ for all finite strings $x \in \X^*$,
hence $\lambda$ does not dominate $P$.
But $P$ is absolutely continuous with respect to $\lambda$
because $P$-almost surely we draw a $0$ eventually,
and then $P$ behaves like $\lambda$.
Hence $P$-almost surely $\lambda / P \not\to 0$.
The claim now follows from
\autoref{prop:martingales-and-compatibility}\ref{itm:martingale-absolute-continuity}.
\end{remark}

The idea of \autoref{rem:absolute-continuity-neq-dominance}
is to `punch a hole into $\lambda$' at the infinite string $1^\infty$.
This infinite string has probability $0$,
hence this hole does not break absolute continuity.
But it breaks dominance on this infinite string.
Analogously we could punch countably many holes into a probability measure
without breaking absolute continuity.

\begin{definition}[Weak Dominance]
\label{def:weak-dominance}\index{dominance!weak|textbf}
The measure $Q$ \emph{weakly dominates} $P$
($Q \timesgeq_W P$)
iff
\[
\lim_{t \to \infty} \frac{1}{t} \log \frac{Q(x_{1:t})}{P(x_{1:t})} = 0
\text{ with $P$-probability $1$}.
\]
\end{definition}

\citet[Rem.~8]{LS:1996} point out that for any $P$ and $Q$,
\[
\limsup_{t \to \infty} \frac{1}{t} \log \frac{Q(x_{1:t})}{P(x_{1:t})} \leq 0
\text{ $P$-almost surely},
\]
so crucial is whether the $\liminf$ is also $0$.

\begin{remark}[Weak Dominance]
\label{rem:weak-dominance}\index{dominance!weak}
The measure $Q$ weakly dominates $P$ if and only if
$P$-almost surely $\log (P(x) / Q(x)) \in o(t)$.
\end{remark}

\citet{RH:2007,RH:2008} consider the following definition.
It is analogous to \autoref{def:dominance},
except that the constant $c$ is allowed to depend on time.

\begin{definition}[{Dominance with Coefficients;
\citealp[Def.~2]{RH:2008}}]
\label{rem:dominance-with-coefficients}
\index{dominance!with coefficients|textbf}
The measure $Q$ \emph{dominates $P$ with coefficients $f$}
($Q \geq P/f$) iff
$Q(x) \geq P(x) / f(|x|)$ for all $x \in \X^*$.
\end{definition}

If $Q$ dominates $P$ with coefficients $f$ and
$f$ grows subexponentially ($f \in o(\exp)$),
then $Q$ weakly dominates $P$ by \autoref{rem:weak-dominance}.

\begin{example}[Speed Prior]
\label{ex:speed-prior}\index{prior!speed|textbf}
\citet{Schmidhuber:2002} defines a variant of Solomonoff's prior $M$
that penalizes programs by their running time, called the \emph{speed prior}.
Consider the speed prior
\[
S_{Kt}(x) :=
\sum_{p:\,x \sqsubseteq U(p)} \frac{2^{-|p|}}{t(U, p, x)}
\]
where $t(U, p, x)$ is the number of time steps the Turing machine $U$
takes to produce $x$ from the program $p$.
For any deterministic measure $P$ computable in time $q$
we have $S_{Kt}(x) \timesgeq P(x) / q(|x|)$.
Therefore $S_{Kt}$ dominates $P$ with coefficients $O(q)$.
If $q$ is a polynomial ($P$ is computable in polynomial time),
then it grows subexponentially and thus $S_{Kt}$ weakly dominates $P$.
\end{example}

The semimeasure loss $S_{Kt}(x) - \sum_{a \in \X} S_{Kt}(xa)$
in the speed prior is quite substantial:
since it takes at least $n$ steps to output a string of length $n$,
$M(x) \geq |x| S_{Kt}(x)$.

\begin{example}[Laplace Rule]
\label{ex:Laplace-rule}\index{Laplace rule|textbf}
The \emph{Laplace rule} $\rho_L$ is defined by
\[
\rho_L(x_t \mid x_{<t}) := \frac{\#\{ i < t \mid x_i = x_t \}}{t + \#\X}.
\]

For $\X = \{ 0, 1 \}$ and $r \in [0, 1]$
the measure $\rho_L$ dominates $\Bernoulli(r)$
with coefficients $f(t) = t^{-\#\X+1}$~\citep[Prop.~3]{RH:2008}.
\end{example}

\begin{definition}[Local Absolute Continuity]
\label{def:local-absolute-continuity}
\index{absolute continuity!local|textbf}
The measure $P$ is \emph{locally absolutely continuous with respect to $Q$}
($Q \gg_L P$)
iff $Q(x) = 0$ implies $P(x) = 0$
for all finite strings $x \in \X^*$.
\end{definition}

The notable difference
between local absolute continuity and absolute continuity is that
\autoref{def:absolute-continuity} talks about arbitrary measurable sets
while \autoref{def:local-absolute-continuity} only talks about finite strings.
The former is a much stronger property.

For example,
every measure is locally absolutely continuous with respect to
the Lebesgue measure
since $\lambda(x) > 0$ for all finite strings $x \in \X^*$.

Local absolute continuity is an extremely weak property.
If it is not satisfied, we have to be very careful when using $Q$ for prediction:
then there is a positive probability that
we have to condition on a probability zero event.

\begin{example}[The Minimum Description Length Principle;
\citealp{Gruenwald:2007}]
\label{ex:MDL}\index{MDL|textbf}
Let $\M$ be a countable set of probability measures
on $(\X^\infty, \Foo)$ and
let $K: \mathcal{M} \to [0, 1]$ be a function
such that $\sum_{P \in \M} 2^{-K(P)} \leq 1$ called \emph{regularizer}.
Following notation from \citet{Hutter:2009MDL},
we define for each $x \in \X^*$ the \emph{minimal description length} model as
\[
\MDL^x := \argmin_{P \in \M} \{ -\log P(x) + K(P) \}.
\]
$-\log P(x)$ is the (arithmetic) code length of $x$ given model $P$,
and $K(P)$ is a complexity penalty for $P$.
Given data $x \in \X^*$,
$\MDL^x$ is the measure $P \in \M$
that minimizes the total code length of data and model.

Note that the Lebesgue measure is not
locally absolutely continuous with respect to
the MDL distribution $Q(x) := \MDL^x(x)$:
for some $x \in \X^*$ the minimum description $P \in \M$ may assign
probability zero to a continuation $xy \in \X^*$.
\end{example}

\begin{remark}[{MDL is Inductively Inconsistent;
\citealp[Cor.~13]{LH:2014martosc}}]
\label{rem:MDL-is-inductively-inconsistent}
\index{MDL}
The MDL estimator for countable classes as defined in \autoref{ex:MDL}
is inductively inconsistent:
the selected model $P \in \M$ can change infinitely often
and thus the limit $\lim_{t \to \infty} \MDL^{x_{<t}}$ may not exist.
This can be a major obstacle for using MDL for prediction,
since the model used for prediction has to be changed over and over again,
incurring the corresponding computational cost.
\end{remark}

The following proposition establishes the relationship between
our notions of compatibility;
see also \autoref{fig:learning-overview}
on page~\pageref*{fig:learning-overview}.

\begin{proposition}[Relationships between Compatibilities] ~
\label{prop:compatibility}
\index{dominance}
\index{absolute continuity}
\index{dominance!weak}
\index{dominance!with coefficients}
\index{absolute continuity!local}
\begin{enumerate}[(a)]
\item\label{itm:dominance=>absolute-continuity}
	If $Q \timesgeq P$,
	then $Q \gg P$.
\item\label{itm:absolute-continuity=>weak-dominance}
	If $Q \gg P$,
	then $Q \timesgeq_W P$.
\item\label{itm:dominance=>dominance-with-coefficients}
	If $Q \timesgeq P$,
	then $Q$ dominates $P$ with coefficients $f$
	for a constant function $f$.
\item\label{itm:dominance-with-coefficients=>weak-dominance}
	If $Q$ dominates $P$ with coefficients $f$ and $f \in o(\exp)$,
	then $Q \timesgeq_W P$.
\item\label{itm:weak-dominance=>local-absolute-continuity}
	If $Q \timesgeq_W P$,
	then $Q \gg_L P$.
\end{enumerate}
\end{proposition}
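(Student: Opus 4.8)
\emph{Overall plan.} The proposition bundles five separate implications, so I would prove each in turn. Part \ref{itm:dominance=>dominance-with-coefficients} is essentially a restatement of the definitions, \ref{itm:dominance=>absolute-continuity} is a short measure-theoretic extension argument, \ref{itm:absolute-continuity=>weak-dominance} is a martingale argument resting on \autoref{thm:martingale-convergence}, and \ref{itm:dominance-with-coefficients=>weak-dominance} and \ref{itm:weak-dominance=>local-absolute-continuity} are one-line estimates on the log-ratio $\tfrac1t\log\tfrac{Q(x_{1:t})}{P(x_{1:t})}$, using the generic bound $\limsup_{t}\tfrac1t\log\tfrac{Q(x_{1:t})}{P(x_{1:t})}\le 0$ $P$-a.s.\ that is recorded after \autoref{def:weak-dominance} (following \citealp[Rem.~8]{LS:1996}).

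\emph{Parts \ref{itm:dominance=>dominance-with-coefficients} and \ref{itm:dominance=>absolute-continuity}.} For \ref{itm:dominance=>dominance-with-coefficients}: if $Q(x)\ge cP(x)$ for all $x\in\X^*$ with $c>0$, then $Q(x)\ge P(x)/f(|x|)$ with the constant function $f\equiv 1/c$. For \ref{itm:dominance=>absolute-continuity} I would first upgrade the inequality $Q(\Gamma_x)\ge cP(\Gamma_x)$ from cylinders to all of $\Foo$: by additivity it holds on the field of finite disjoint unions of cylinders, so $Q-cP$ is a nonnegative premeasure there, and its Carathéodory extension is a measure agreeing with $Q-cP$ on $\Foo$ (uniqueness of extension; both are finite), whence $Q(A)\ge cP(A)$ for every $A\in\Foo$. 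Then $Q(A)=0$ forces $P(A)=0$, i.e.\ $Q\gg P$. (Equivalently: cover $A$ from outside by countably many cylinders and combine $Q(\Gamma_x)\ge cP(\Gamma_x)$ with countable subadditivity of $P$.)

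\emph{Part \ref{itm:absolute-continuity=>weak-dominance}.} Assume $Q\gg P$ and consider the likelihood-ratio process $M_t:=P(x_{1:t})/Q(x_{1:t})$, which is $Q$-a.s.\ well defined since $Q(\Gamma_{x_{1:t}})>0$ for $Q$-almost every trajectory. Using that $Q$ is a \emph{measure} ($\sum_{a\in\X}Q(xa)=Q(x)$) one checks $\EE_Q[M_{t+1}\mid\F_t]=M_t$, so $(M_t)_{t\in\mathbb{N}}$ is a nonnegative $Q$-martingale; by \autoref{thm:martingale-convergence} it converges $Q$-a.s.\ to a finite limit. Hence $\limsup_{t}\tfrac1t\log M_t\le 0$ $Q$-a.s., and since $Q\gg P$ this also holds $P$-a.s., i.e.\ $\liminf_{t}\tfrac1t\log\tfrac{Q(x_{1:t})}{P(x_{1:t})}\ge 0$ $P$-a.s. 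Together with the generic bound $\limsup_{t}\tfrac1t\log\tfrac{Q(x_{1:t})}{P(x_{1:t})}\le 0$ $P$-a.s., the limit is $0$ $P$-a.s., which is exactly $Q\timesgeq_W P$.

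\emph{Parts \ref{itm:dominance-with-coefficients=>weak-dominance}, \ref{itm:weak-dominance=>local-absolute-continuity}, and the main obstacle.} For \ref{itm:dominance-with-coefficients=>weak-dominance}: from $Q(x_{1:t})\ge P(x_{1:t})/f(t)$ we get $\tfrac1t\log\tfrac{P(x_{1:t})}{Q(x_{1:t})}\le \tfrac{\log f(t)}{t}\to 0$ because $f$ grows subexponentially (i.e.\ $\log f(t)\in o(t)$, cf.\ \autoref{rem:weak-dominance}), so $\liminf_t\tfrac1t\log\tfrac{Q(x_{1:t})}{P(x_{1:t})}\ge 0$, which with the generic $\limsup\le 0$ gives the limit $0$ $P$-a.s. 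For \ref{itm:weak-dominance=>local-absolute-continuity} I would argue by contraposition: if $Q\not\gg_L P$, pick $x\in\X^*$ with $Q(x)=0<P(x)$; on $\Gamma_x$, which has $P$-probability $P(x)>0$, we have $Q(x_{1:t})\le Q(x_{1:|x|})=0$ for all $t\ge|x|$ while $P(x_{1:t})>0$ for $P$-a.e.\ such trajectory, so $\tfrac1t\log\tfrac{Q(x_{1:t})}{P(x_{1:t})}=-\infty\not\to 0$ on a set of positive $P$-measure, contradicting weak dominance. The only step that rewards care is \ref{itm:dominance=>absolute-continuity}: one cannot naively run a $\pi$--$\lambda$ argument on $\{A:Q(A)\ge cP(A)\}$ because that family need not be closed under complements, so the extension from cylinders to $\Foo$ must go through the premeasure/outer-measure route above; and in \ref{itm:absolute-continuity=>weak-dominance} one must keep straight which measure each almost-sure statement refers to, invoking $Q\gg P$ exactly once to transfer from $Q$-a.s.\ to $P$-a.s.
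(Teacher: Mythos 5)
Your proof is correct, but for parts (\ref{itm:dominance=>absolute-continuity}) and (\ref{itm:absolute-continuity=>weak-dominance}) it takes a genuinely different route from the paper. The paper's proof is essentially a pointer: (\ref{itm:dominance=>absolute-continuity}) and (\ref{itm:absolute-continuity=>weak-dominance}) are read off the martingale dictionary of \autoref{prop:martingales-and-compatibility} --- dominance means $Y_t = Q(x_{1:t})/P(x_{1:t}) \geq c > 0$, absolute continuity is equivalent to $Y_t \not\to 0$ $P$-a.s.\ (the nontrivial direction being \citet[Lem.~3i]{Hutter:2009MDL}), and (\ref{itm:absolute-continuity=>weak-dominance}) is then delegated to \citet[Prop.~3a]{KL:1994}; parts (\ref{itm:dominance=>dominance-with-coefficients}), (\ref{itm:dominance-with-coefficients=>weak-dominance}), (\ref{itm:weak-dominance=>local-absolute-continuity}) are immediate from the definitions and \autoref{rem:weak-dominance}. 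You instead prove (\ref{itm:dominance=>absolute-continuity}) by extending the cylinder inequality $Q \geq cP$ to all of $\Foo$ via the covering/outer-measure argument (correct, and your caution that $\{A : Q(A) \geq cP(A)\}$ is not obviously a $\lambda$-system is warranted), and (\ref{itm:absolute-continuity=>weak-dominance}) by a self-contained argument: under $Q \gg P$ the reciprocal ratio $P/Q$ is a nonnegative $Q$-martingale --- this is \autoref{thm:measure-martingale} with the roles of $P$ and $Q$ swapped, using $Q \gg_L P$; note that the additivity actually used in the computation is $\sum_{a} P(xa) = P(x)$ together with $Q \gg_L P$ to discard $Q$-null branches, not the additivity of $Q$, though this is harmless since both are probability measures here --- then martingale convergence (\autoref{thm:martingale-convergence}) gives $\limsup_t \tfrac1t \log\tfrac{P(x_{1:t})}{Q(x_{1:t})} \leq 0$ $Q$-a.s., absolute continuity transfers this to $P$-a.s., and the generic $\limsup$ bound recorded after \autoref{def:weak-dominance} supplies the other side; in effect you re-derive the cited Kalai--Lehrer direction rather than citing it. What each buys: the paper's version is two lines and keeps everything in the single $Y_t$-dictionary, at the price of a forward reference to \autoref{sec:martingales} and two external citations; yours is self-contained modulo standard measure theory and \citet[Rem.~8]{LS:1996}, and it also makes explicit the one-sidedness issue in (\ref{itm:dominance-with-coefficients=>weak-dominance}) --- dominance with coefficients only bounds $\log(P/Q)$ from above, so the generic $\limsup$ bound is genuinely needed --- which the paper's terse appeal to \autoref{rem:weak-dominance} glosses over; your contrapositive for (\ref{itm:weak-dominance=>local-absolute-continuity}) is just the spelled-out version of ``immediate from the definitions.''
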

\begin{proof}
\begin{enumerate}[(a)]  
\item From \autoref{prop:martingales-and-compatibility}
	(\ref{itm:martingale-dominance}) and
	(\ref{itm:martingale-absolute-continuity}).
\item From \autoref{prop:martingales-and-compatibility}\ref{itm:martingale-absolute-continuity} and \citet[Prop.~3a]{KL:1994}.
\item Follows immediately from the definitions.
\item From \autoref{rem:weak-dominance}.
\item Follows immediately from the definitions.
\qedhere
\end{enumerate}
\end{proof}

Note that the converse of \autoref{prop:compatibility}%
\ref{itm:dominance-with-coefficients=>weak-dominance} is false:
in \autoref{rem:absolute-continuity-neq-dominance} we defined a measure $P$ that
is absolutely continuous with respect to $\lambda$
(and hence is weakly dominated by $\lambda$),
but the coefficients for $P/\lambda$ grow exponentially on the string $1^t$.
This infinite string has $P$-probability $0$,
but dominance with coefficients demands
the inequality $Q \geq P/f$ to hold for all strings.

\begin{remark}[Local Absolute Continuity $\nRightarrow$ Absolute Continuity]
\label{rem:absolute-continuity}
\index{absolute continuity!local}
\index{absolute continuity}
Define $P := \Bernoulli(2/3)$ and $Q := \Bernoulli(1/3)$.
Both measures $P$ and $Q$ are nonzero on all cylinder sets\index{cylinder set}:
$Q(x) \geq 3^{-|x|} > 0$ and $P(x) \geq 3^{-|x|} > 0$
for every $x \in \X^*$.
Therefore
$Q$ is locally absolutely continuous with respect to $P$.
However, $Q$ is \emph{not} absolutely continuous with respect to $P$:
define
\[
A := \left\{ x \in \X^\omega
     \;\middle|\;
        \limsup_{t \to \infty} \tfrac{1}{t} \mathrm{ones}(x_{1:t})
          \leq \tfrac{1}{2}
     \right\}.
\]
The set $A$ is $\Foo$-measurable
since $A = \bigcap_{n = 1}^\infty \bigcup_{x \in U_n} \Gamma_x$ with
$U_n := \{ x \in \X^* \mid
	|x| \geq n \text{ and } \mathrm{ones}(x) \leq |x| / 2 \}$,
the set of all finite strings of length at least $n$
that have at least as many zeros as ones.
We have that $P(A) = 0$ and $Q(A) = 1$,
hence $Q$ is not absolutely continuous with respect to $P$.
\end{remark}

\section{Martingales}
\label{sec:martingales}\index{martingale}

The following two theorems
state the connection between
probability measures on infinite strings and martingales.
For two probability measures $P$ and $Q$
the quotient $Q/P$ is a nonnegative $P$-martingale
if $Q$ is locally absolutely continuous with respect to $P$.
Conversely, for every nonnegative $P$-martingale
there is a probability measure $Q \gg_L P$ such that
the martingale is $P$-almost surely a multiple of $Q/P$.

\begin{theorem}[{Measures $\mapsto$ Martingales;
\citealp[II§7 Ex.~3]{Doob:1953}}]
\label{thm:measure-martingale}
\index{absolute continuity!local}
\index{martingale}\index{measure}
Let $Q$ and $P$ be two probability measures on $(\X^\infty, \Foo)$ such that
$Q$ is locally absolutely continuous with respect to $P$.
Then the stochastic process $(X_t)_{t \in \mathbb{N}}$,
\[
X_t(x) := \frac{Q(x_{1:t})}{P(x_{1:t})}
\]
is a nonnegative $P$-martingale
with $\EE[X_t] = 1$.
\end{theorem}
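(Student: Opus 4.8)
The plan is to verify the defining properties of a martingale directly from the definitions, treating $\F_t$ as an atomic $\sigma$-algebra whose atoms are the length-$t$ cylinder sets. First I would settle well-definedness and measurability. The quotient $Q(x_{1:t})/P(x_{1:t})$ only makes sense where $P(x_{1:t})>0$, so I set $X_t:=0$ on the set $N_t:=\{x\in\X^\infty: P(x_{1:t})=0\}$; this set is a countable union of cylinder sets of $P$-measure zero, hence $P$-null, so the convention is irrelevant $P$-almost surely. Identifying finite strings with their cylinder sets as in the preliminaries (so $P(y)=P(\Gamma_y)$), the function $X_t$ is constant on each $\Gamma_y$, $y\in\X^t$, with value $Q(y)/P(y)$ (or $0$), so it is constant on the atoms of $\F_t$ and therefore $\F_t$-measurable; and $X_t\ge 0$ by construction.

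Next I would compute the normalization. Since the cylinders $\{\Gamma_y:y\in\X^t\}$ partition $\X^\infty$ and $P,Q$ are measures,
\[
\EE_P[X_t] \;=\; \sum_{y\in\X^t:\,P(y)>0}\frac{Q(y)}{P(y)}\,P(y) \;=\; \sum_{y\in\X^t:\,P(y)>0} Q(y).
\]
This is exactly where local absolute continuity enters: the hypothesis that $Q$ is locally absolutely continuous with respect to $P$ means $P(y)=0\Rightarrow Q(y)=0$, so the restricted sum equals $\sum_{y\in\X^t}Q(y)=Q(\X^\infty)=1$. In particular each $X_t\in L^1(P)$, so the conditional expectations below are well posed.

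For the martingale identity it suffices, by the tower property, to show $\EE_P[X_{t+1}\mid\F_t]=X_t$ $P$-almost surely and then iterate. Because $\F_t$ is generated by the partition into length-$t$ cylinders, $\EE_P[X_{t+1}\mid\F_t]$ is $P$-a.s. constant on each $\Gamma_y$ with $P(y)>0$, equal to $P(y)^{-1}\int_{\Gamma_y}X_{t+1}\,dP$. Splitting $\Gamma_y=\bigcup_{a\in\X}\Gamma_{ya}$ and using that $X_{t+1}=Q(ya)/P(ya)$ on $\Gamma_{ya}$ while integrals over $P$-null sets vanish,
\[
\int_{\Gamma_y}X_{t+1}\,dP \;=\; \sum_{a\in\X:\,P(ya)>0}\frac{Q(ya)}{P(ya)}\,P(ya) \;=\; \sum_{a\in\X:\,P(ya)>0}Q(ya) \;=\; \sum_{a\in\X}Q(ya) \;=\; Q(y),
\]
again invoking local absolute continuity ($P(ya)=0\Rightarrow Q(ya)=0$) for the third equality and that $Q$ is a measure for the last. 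Hence $\EE_P[X_{t+1}\mid\F_t]=Q(y)/P(y)=X_t$ on $\Gamma_y$, and trivially on the $P$-null set where $P(y)=0$. Finally, for arbitrary $s<t$, repeated conditioning gives $\EE_P[X_t\mid\F_s]=\EE_P[\cdots\EE_P[X_t\mid\F_{t-1}]\cdots\mid\F_s]=X_s$, which yields both the supermartingale and submartingale inequalities as equalities, so $(X_t)$ is a $P$-martingale.

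The only genuinely delicate point, which I would take care to spell out, is the handling of vanishing denominators: every sum must be restricted to strings of positive $P$-measure, and one then observes that the local absolute continuity hypothesis is precisely what makes the restricted sums coincide with the full sums, so that the one-step telescoping collapses to $Q(y)$ and the normalization to $1$. Everything else is routine bookkeeping with conditional expectations over an atomic $\sigma$-algebra.
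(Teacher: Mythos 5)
Your proof is correct and follows essentially the same route as the paper's: check $\F_t$-measurability via constancy on the length-$t$ cylinders, compute the one-step conditional expectation by splitting $\Gamma_y = \bigcup_{a}\Gamma_{ya}$, and use local absolute continuity (i.e.\ $P(ya)=0\Rightarrow Q(ya)=0$) to collapse the restricted sum to $Q(y)$, handling vanishing denominators on a $P$-null set. The only cosmetic difference is that you obtain $\EE[X_t]=1$ by summing over all length-$t$ cylinders directly, whereas the paper notes $\EE[X_0]=1$ and relies on the martingale property; both are fine.
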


\begin{theorem}[Martingales $\mapsto$ Measures]
\label{thm:martingale-measure}
\index{absolute continuity!local}
\index{martingale}\index{measure}
Let $P$ be a probability measure on $(\X^\infty, \Foo)$ and
let $(X_t)_{t \in \mathbb{N}}$ be a nonnegative $P$-martingale
with $\EE[X_t] = 1$.
There is a probability measure $Q$ on $(\X^\infty, \Foo)$
that is locally absolutely continuous with respect to $P$ and
for all $x \in \X^\infty$ and
all $t \in \mathbb{N}$ with $P(x_{1:t}) > 0$,
\[
X_t(x) = \frac{Q(x_{1:t})}{P(x_{1:t})}.
\]
\end{theorem}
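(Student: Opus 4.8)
The plan is to build $Q$ on the cylinder sets by reweighting $P$ with the martingale and then extend the resulting set function to $\Foo$. Since $\X$ is finite, $\F_t = \sigma(\{\Gamma_x : x \in \X^t\})$ has the cylinder sets $\Gamma_x$, $x \in \X^t$, as its atoms, so the $\F_t$-measurable random variable $X_t$ is constant on each $\Gamma_x$; write $X_t(x)$ for that value (abusing notation so that $X_t(x) = X_t(x_{1:t})$ for $x \in \X^\infty$). Define $\nu : \X^* \to [0,\infty)$ by $\nu(\epsilon) := 1$ and $\nu(x) := X_{|x|}(x)\,P(x)$ when $|x| \ge 1$.

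First I would verify that $\nu$ is a measure in the sense of \autoref{def:semimeasure}, i.e.\ $\nu(\epsilon) = 1$ and $\nu(x) = \sum_{a \in \X}\nu(xa)$ for every $x \in \X^*$; nonnegativity is immediate from $X_t \ge 0$. For the root, $\sum_{a \in \X}\nu(a) = \sum_{a \in \X} X_1(a)P(a) = \EE[X_1] = 1$ by the hypothesis $\EE[X_t] = 1$. For $x$ with $|x| = t \ge 1$ and $P(x) > 0$: the martingale property gives $\EE[X_{t+1}\mid\F_t] = X_t$ $P$-almost surely, and since $\Gamma_x$ is an atom of $\F_t$ with $P(\Gamma_x) > 0$ this equality holds on $\Gamma_x$, which rearranges to $\sum_{a \in \X} X_{t+1}(xa)P(xa) = X_t(x)P(x)$, i.e.\ $\sum_{a}\nu(xa) = \nu(x)$. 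For $x$ with $P(x) = 0$ both sides vanish: $\nu(x) = X_t(x)\cdot 0 = 0$, and $P(xa) \le P(x) = 0$ forces $\nu(xa) = 0$ for each $a$.

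Next, since $\nu$ is a genuine measure (equalities hold everywhere in \autoref{def:semimeasure}), the canonical correspondence between semimeasures and probability measures on $\X^\sharp$ — under which a genuine measure places no mass on finite strings and hence lives on $\X^\infty$ — yields a unique probability measure $Q$ on $(\X^\infty, \Foo)$ with $Q(\Gamma_x) = \nu(x)$ for all finite $x$; alternatively one can invoke the Kolmogorov extension theorem for sequence spaces directly. Then $Q(x) = X_{|x|}(x)P(x)$ gives both conclusions at once: (i) $P(x) = 0 \Rightarrow Q(x) = 0$, so $Q$ is locally absolutely continuous with respect to $P$ in the sense of \autoref{def:local-absolute-continuity}; and (ii) for $x \in \X^\infty$ and $t$ with $P(x_{1:t}) > 0$, dividing gives $Q(x_{1:t})/P(x_{1:t}) = X_t(x_{1:t}) = X_t(x)$.

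The genuinely routine parts are the atom observation and the extension step. The one place that needs care is that the martingale identity $\EE[X_{t+1}\mid\F_t] = X_t$ only holds $P$-almost surely a priori, so one must argue (as above) that it nonetheless holds pointwise on every positive-probability cylinder — which is exactly where the consistency relation for $\nu$ is required — while on $P$-null cylinders both sides of that relation are automatically $0$, so the values $X_t$ takes on $P$-null atoms (which the martingale condition does not pin down) are irrelevant. I expect this $P$-null bookkeeping to be the main, and rather mild, obstacle.
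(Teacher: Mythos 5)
Your proof is correct and follows essentially the same route as the paper's: define $Q$ on cylinder sets by $Q(\Gamma_x) := X_{|x|}(x)P(x)$, verify the consistency condition via the martingale property (the paper does this through the defining integral identity of conditional expectation, which absorbs your $P$-null bookkeeping automatically, while you argue pointwise on atoms), and then extend to $(\X^\infty,\Foo)$. The only cosmetic difference is that the paper proves the extension step itself via a Carathéodory/semiring argument, whereas you cite the standard semimeasure--measure correspondence or Kolmogorov extension, which is fine.
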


The proofs for
\autoref{thm:measure-martingale} and \autoref{thm:martingale-measure}
are provided in the \hyperref[cha:measures-martingales]{Appendix}.

\begin{example}[The Posterior Martingale]
\label{ex:posterior-martingale}
\index{martingale}\index{black ravens}\index{hypothesis}
Suppose we are interested in a hypothesis $H \subseteq \X^\infty$
(such as the proposition `all ravens are black' in \autoref{ex:black-ravens}).
If $Q(H) = \sum_{P \in \M} w(P) P(H)$
is a Bayesian mixture over a set of probability distributions $\M$
with prior weights $w \in \Delta\M$~(see \autoref{ex:Bayesian-mixture}),
then the posterior belief
$Q(H \mid x) = \sum_{P \in \M} w(P \mid x) P(H \mid x)$.
The weights $w(P \mid x)$ are called \emph{posterior weights}\index{posterior},
and satisfy the identity
\begin{equation}\label{eq:posterior-weight}
w(P \mid x) = w(P) \frac{P(x)}{Q(x)}
\end{equation}
since
\begingroup
\allowdisplaybreaks
\begin{align*}
   Q(H \mid x)
&= \frac{Q(H \cap x)}{Q(x)} \\
&= \frac{1}{Q(x)} \sum_{P \in \M} w(P) P(H \cap x) \\
&= \sum_{P \in \M} w(P) \frac{P(x) P(H \cap x)}{Q(x) P(x)} \\
&= \sum_{P \in \M} w(P \mid x) P(H \mid x).
\end{align*}
\endgroup

According to \autoref{thm:measure-martingale}
the posterior weight $w(P \mid x)$ is a $Q$-martingale with expectation $w(P)$.
In particular, this means that
the posterior weights converge $Q$-almost surely
by the martingale convergence theorem~(\autoref{thm:martingale-convergence})%
\index{convergence!martingale}.
Since $Q$ dominates $P$,
by \autoref{prop:compatibility}\ref{itm:dominance=>absolute-continuity}
$P$ is absolutely continuous with respect to $Q$
and hence the posterior also converges $P$-almost surely.
\end{example}

\begin{remark}[Martingales and Absolute Continuity]
\label{rem:martingales-ac}\index{absolute continuity}\index{martingale}
While \autoref{thm:measure-martingale} trivially also holds
if $Q$ is absolutely continuous with respect to $P$,
\autoref{thm:martingale-measure} does not imply that
$Q$ is absolutely continuous with respect to $P$.

Let $P$ and $Q$ be defined as in \autoref{rem:absolute-continuity}.
Consider the process $X_0(x) := 1$,
\[
X_{t+1}(x) :=
\begin{cases}
2 X_t, &\text{if } x_{t+1} = 0, \text{ and} \\
\tfrac{1}{2} X_t, &\text{if } x_{t+1} = 1.
\end{cases}
\]
The process $(X_t)_{t \in \mathbb{N}}$ is a nonnegative $P$-martingale since
every $X_t$ is $\F_t$-measurable and for $x = y_{1:t}$ we have
\begin{align*}
   \EE[X_{t+1} \mid \F_t ](y)
&= P(x0 \mid x) 2 X_t(y)
   + P(x1 \mid x) \tfrac{1}{2} X_t(y) \\
&= \tfrac{1}{3} 2 X_t(y) + \tfrac{2}{3} \cdot \tfrac{1}{2} X_t(y) = X_t(y).
\end{align*}
Moreover,
\[
  Q(x)
= \left( \tfrac{1}{3} \right)^{\mathrm{ones}(x)}
  \left( \tfrac{2}{3} \right)^{\mathrm{zeros}(x)}
= \left( \tfrac{2}{3} \right)^{\mathrm{ones}(x)}
  \left( \tfrac{1}{3} \right)^{\mathrm{zeros}(x)}
  2^{-\mathrm{ones}(x)} 2^{\mathrm{zeros}(x)}
= P(x) X_t(y).
\]
Hence
$X_t(y) = Q(y_{1:t}) / P(y_{1:t})$ $P$-almost surely.
The measure $Q$ is uniquely defined by its values on the cylinder sets,
and as shown in \autoref{rem:absolute-continuity},
$Q$ is not absolutely continuous with respect to $P$.
\end{remark}

\begin{theorem}[Radon-Nikodym Derivative]
\label{thm:Radon-Nikodym-derivative}
If $Q \gg P$,
then there is a function $dP/dQ: \X^\infty \to [0, \infty)$
called the \emph{Radon-Nikodym derivative}\index{Radon-Nikodym derivative}
such that
\[
\int f dP = \int f \frac{dP}{dQ} dQ
\]
for all measurable functions $f$.
\end{theorem}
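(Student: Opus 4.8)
The plan is to construct the density by the martingale method, reusing the machinery developed in this section. Since $Q \gg P$ means that $P$ is absolutely continuous with respect to $Q$, in particular $Q(\Gamma_x) = 0$ implies $P(\Gamma_x) = 0$ for every finite string $x$, so $P$ is locally absolutely continuous with respect to $Q$. Applying \autoref{thm:measure-martingale} with the two measures interchanged, the likelihood-ratio process
\[
Y_t(x) := \frac{P(x_{1:t})}{Q(x_{1:t})}
\]
is then a nonnegative $Q$-martingale with $\EE_Q[Y_t] = 1$. By the martingale convergence theorem (\autoref{thm:martingale-convergence}) it converges $Q$-almost surely to a finite nonnegative limit, which we take as the definition of $dP/dQ$ (extended by $0$ on the $Q$-null set where the limit fails to exist, so that $dP/dQ \colon \X^\infty \to [0, \infty)$).

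It remains to verify the change-of-measure identity. By linearity of the integral and the monotone convergence theorem it suffices to prove $P(A) = \int_A \frac{dP}{dQ}\, dQ$ for measurable $A$, and by the uniqueness theorem for measures agreeing on a generating algebra it is enough to establish this for $A$ in the algebra $\bigcup_t \F_t$; since every set in $\F_t$ is a finite disjoint union of cylinder sets $\Gamma_x$ with $|x| = t$, it is finally enough to treat $A = \Gamma_x$. For any $t \geq |x|$ the set $\Gamma_x$ lies in $\F_t$, and because $Y_t$ restricted to $\F_t$ is exactly the density of $P$ with respect to $Q$ there, we have $\int_{\Gamma_x} Y_t\, dQ = P(\Gamma_x)$. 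Hence the entire proof comes down to interchanging the limit and the integral, that is, to showing that the martingale $(Y_t)_{t \in \mathbb{N}}$ is uniformly integrable.

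This uniform integrability is the crux of the argument, and it is precisely here that full absolute continuity is needed rather than merely the local version: compare \autoref{rem:martingales-ac}, where for a $Q$ that is not absolutely continuous the analogous martingale leaks mass in the limit. To obtain it, fix $K > 0$. The event $\{ Y_t > K \}$ is $\F_t$-measurable, so $\int_{\{ Y_t > K \}} Y_t\, dQ = P(Y_t > K)$, while Markov's inequality gives $Q(Y_t > K) \leq \EE_Q[Y_t]/K = 1/K$ uniformly in $t$. Since $P \ll Q$ and both are probability measures, the standard $\eps$-$\delta$ reformulation of absolute continuity for finite measures converts ``$Q(A) \leq 1/K$'' into ``$P(A)$ arbitrarily small'' uniformly over $A$, so $\sup_t \int_{\{ Y_t > K \}} Y_t\, dQ \to 0$ as $K \to \infty$ and $(Y_t)$ is uniformly integrable. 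Consequently $Y_t = \EE_Q[\,dP/dQ \mid \F_t\,]$ $Q$-almost surely, whence $\int_{\Gamma_x} \frac{dP}{dQ}\, dQ = \int_{\Gamma_x} Y_t\, dQ = P(\Gamma_x)$, and the identity for general measurable $f$ follows by approximation with simple functions and monotone convergence. Alternatively, one could simply invoke the classical Radon-Nikodym theorem on the standard measurable space $(\X^\infty, \Foo)$, as treated in \citet{Durrett:2010}; the martingale construction above has the advantage of exhibiting $dP/dQ$ explicitly as the $Q$-almost-sure limit of the ratios $P(x_{1:t})/Q(x_{1:t})$.
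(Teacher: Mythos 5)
Your proof is correct, and it is exactly the route the paper has in mind: the statement is given without proof as a classical result, and the remark following it already identifies $dP/dQ$ with the $Q$-almost-sure limit of the martingale $P/Q$ (citing \citealp[Sec.~5.3.3]{Durrett:2010}), which is the construction you carry out. Your execution is sound, including the key uniform-integrability step via the $\eps$-$\delta$ form of absolute continuity, which is precisely where the hypothesis $Q \gg P$ (rather than mere local absolute continuity, cf.\ \autoref{rem:martingales-ac}) is needed.
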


This function $dP/dQ$ can be seen as a density function of $P$
with respect to the background measure $Q$.
Moreover, $dP/dQ$ is the limit of the martingale $P/Q$%
~\citep[Sec.~5.3.3]{Durrett:2010}
which exists $Q$-almost surely
according to \autoref{thm:martingale-convergence}\index{convergence!martingale}.

The following proposition characterizes
the notions of compatibility from \autoref{sec:compatibility}
in terms of the martingale $Q/P$.

\begin{proposition}[Martingales and Compatibility]
\label{prop:martingales-and-compatibility}
\index{martingale}\index{compatibility}
The following relationships hold
between $Q$, $P$, and
the $P$-martingale $Y_t := Q(x_{1:t}) / P(x_{1:t})$.
\begin{enumerate}[(a)]
\item\label{itm:martingale-dominance}\index{dominance}
	$Q \timesgeq P$
	if and only if
	$Y_t \geq c > 0$ for all $t \in \mathbb{N}$.
\item\label{itm:martingale-absolute-continuity}\index{absolute continuity}
	$Q \gg P$
	if and only if
	$P$-almost surely $Y_t \not\to 0$ as $t \to \infty$.
\item\label{itm:martingale-dominance-with-coefficients}
	\index{dominance!with coefficients}
	$Q$ dominates $P$ with coefficients $f$ if and only if
	$Y_t \geq 1/f(t)$ for all $t$.
\item\label{itm:martingale-weak-dominance}\index{dominance!weak}
	$Q \timesgeq_W P$
	if and only if
	$P$-almost surely $\log(Y_{t+1}/Y_t) \to 0$ in Cesàro average.
\item\label{itm:martingale-local-absolute-continuity}
	\index{absolute continuity!local}
	$Q \gg_L P$
	if and only if
	$P$-almost surely $Y_t > 0$ for all $t \in \mathbb{N}$.
\end{enumerate}
\end{proposition}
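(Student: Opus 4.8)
The plan is to verify each of the five equivalences by unfolding the relevant definition from Section~\ref{sec:compatibility} and translating it into a statement about the pathwise behavior of the $P$-martingale $Y_t = Q(x_{1:t})/P(x_{1:t})$. Throughout we use that $Y_t$ is a nonnegative $P$-martingale with $\EE[Y_t]=1$ by \autoref{thm:measure-martingale} (which applies since local absolute continuity $Q \gg_L P$ is the weakest of our notions, implied by all the hypotheses except possibly where we are proving it), and hence that $Y_t$ converges $P$-almost surely to some limit $Y_\infty$ by the martingale convergence theorem (\autoref{thm:martingale-convergence}).

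First, for part~(a): $Q \timesgeq P$ means by \autoref{def:dominance} that there is $c>0$ with $Q(x)\ge cP(x)$ for all $x\in\X^*$, which is literally the statement $Y_t(x)\ge c$ for all $t$ and all $x$ with $P(x_{1:t})>0$; the converse is equally immediate. Part~(c) is analogous: $Q \geq P/f$ by \autoref{rem:dominance-with-coefficients} says $Q(x)\ge P(x)/f(|x|)$ for all $x$, i.e.\ $Y_t\ge 1/f(t)$, and conversely. These two are pure definition-chasing and carry no real content.

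For part~(b) we need the measure-theoretic fact that $Q \gg P$ holds if and only if the martingale $Y_t=Q/P$ does not converge to $0$ with $P$-probability $1$. The forward direction: if $Q\gg P$, then on the event $\{Y_\infty = 0\}$ we would have (by the Radon--Nikodym machinery, \autoref{thm:Radon-Nikodym-derivative}, or directly via Fatou/Lebesgue decomposition of $Q$ with respect to $P$) that $Q$ assigns this event measure $\int \mathbf 1[Y_\infty=0]\, dQ$, and one shows this event has $Q$-measure zero precisely when $Q\gg P$ forces it to have $P$-measure zero as well — but we want the contrapositive: if $P[Y_t\to 0]>0$, pick the event $A=\{Y_\infty=0\}$ with $P(A)>0$; then $Q(A)=\int_A Y_\infty\,dP = 0$ (using that $Q$ restricted to $\F_\infty$ on the absolutely continuous part has density $Y_\infty$ and the singular part is supported where $Y_\infty=\infty$, disjoint from $A$), contradicting $Q\gg P$. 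The converse direction uses the Lebesgue decomposition $Q = Q_{ac} + Q_{sing}$ with respect to $P$: $Q_{sing}$ lives on a $P$-null set, and off that set $Y_\infty = dQ_{ac}/dP$; if $Y_\infty>0$ $P$-a.s., then any $P$-null set is $Q_{ac}$-null, and we must separately argue $Q_{sing}=0$, which follows because $\EE_P[Y_\infty]=1$ would be strictly less than $1$ if mass leaked to the singular part — wait, that inequality can be strict, so instead one invokes \citet[Prop.~3a]{KL:1994} as already cited in the proof of \autoref{prop:compatibility}\ref{itm:absolute-continuity=>weak-dominance}, or argues directly. This is the main obstacle: getting the "only if" direction of~(b) cleanly requires either the Lebesgue decomposition of $Q$ relative to $P$ or a cited black-box result, and one must be careful that $\EE_P[Y_\infty]\le 1$ can be strict (indeed it is strict exactly when $Q$ has a singular part).

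Finally, parts~(d) and~(e). For~(e): $Q\gg_L P$ means $Q(x)=0 \Rightarrow P(x)=0$ for all finite $x$, equivalently $P(x)>0 \Rightarrow Q(x)>0$, equivalently $Y_t$ is well-defined and positive wherever $P(x_{1:t})>0$ — i.e.\ $P$-almost surely $Y_t>0$ for all $t$; this is again immediate. For~(d): by \autoref{def:weak-dominance} and \autoref{rem:weak-dominance}, $Q\timesgeq_W P$ means $\frac1t\log(Q(x_{1:t})/P(x_{1:t}))\to 0$ $P$-a.s., i.e.\ $\frac1t\log Y_t\to 0$. Writing the telescoping sum $\log Y_t = \log Y_1 + \sum_{s=1}^{t-1}\log(Y_{s+1}/Y_s)$ (valid $P$-a.s.\ once~(e) gives $Y_s>0$), we get $\frac1t\log Y_t = \frac1t\log Y_1 + \frac1t\sum_{s=1}^{t-1}\log(Y_{s+1}/Y_s)$; since the first term vanishes, $\frac1t\log Y_t\to 0$ is exactly the statement that $\log(Y_{s+1}/Y_s)\to 0$ in Cesàro average, giving~(d). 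I would present the five parts in the order (a), (c), (e), (d), (b), deferring the one genuinely measure-theoretic argument to the end.
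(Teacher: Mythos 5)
Your parts (a), (c), (d), and (e) coincide with the paper's own proof: (a), (c), and (e) are the same definition-unfolding, and your telescoping identity for (d) is exactly the argument the paper gives. The only real divergence is (b), which the paper settles by simply citing \citet[Lem.~3i]{Hutter:2009MDL}, whereas you attempt a direct argument---and the converse direction of that attempt is where the gap lies. What has to be shown is $Q(A)=0 \Rightarrow P(A)=0$; for this, $Q_{\mathrm{sing}}=0$ is neither needed nor true in general (take $Q=\tfrac12(P+R)$ with $R\perp P$: then $Q\gg P$ holds while the $P$-singular part of $Q$ is nonzero), and, as you noticed yourself, the route via $\EE_P[Y_\infty]=1$ collapses because that expectation can be strictly smaller than $1$. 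The correct direct argument is shorter than what you wrote: by local absolute continuity $Y_t$ is the density of $Q$ restricted to $\F_t$ with respect to $P$, and the Lebesgue decomposition of $Q$ on $\Foo$ has absolutely continuous part with density $Y_\infty:=\lim_t Y_t$, which exists $P$-almost surely by martingale convergence; hence $Q(A)=0$ forces $\int_A Y_\infty\,dP=0$, and if $Y_t\not\to 0$ $P$-a.s.\ --- equivalently $Y_\infty>0$ $P$-a.s.\ --- this yields $P(A)=0$. No statement about the singular part is required.

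Your forward direction is sound, and can even avoid the decomposition altogether: $\{Y_s\le\eps\ \forall s\ge n\}\subseteq\{Y_n\le\eps\}\in\F_n$ has $Q$-measure at most $\eps$, so $Q(Y_t\to 0)=0$, and $P(Y_t\to 0)>0$ would then contradict $Q\gg P$. Falling back on a citation for (b), as you offer to do, is precisely what the paper does, so that route is acceptable; just note that the paper's source for this equivalence is Hutter's Lem.~3i rather than \citet[Prop.~3a]{KL:1994}, which it invokes for a different implication.
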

\begin{proof}
$(Y_t)_{t \in \mathbb{N}}$ is a $P$-martingale
according to \autoref{thm:measure-martingale}.
\begin{enumerate}[(a)]
\item $Q(x) \geq c P(x)$ with $c > 0$ for all $x \in \X^*$
	is equivalent to $Q(x) / P(x) \geq c > 0$ for all $x \in \X^*$.
\item Proved by \citet[Lem.~3i]{Hutter:2009MDL}.
	%
	%
\item Analogously to the proof of (\ref{itm:martingale-dominance}).
\item If $Q$ weakly dominates $P$,
	we get $-\log Y_t \in o(t)$ according to \autoref{rem:weak-dominance}.
	Together with $Y_0 = 1$ we get
	$-\log Y_t = \sum_{k=0}^{t-1} -\log(Y_{k+1}/Y_k) \in o(t)$,
	therefore
	$t^{-1} \sum_{k=0}^{t-1} -\log(Y_{k+1}/Y_k) \to 0$ as $t \to \infty$.
	Conversely, if the Cesàro average converges to $0$,
	then $t^{-1} \log Y_t \to 0$, hence $-\log Y_t \in o(t)$.
\item Let $x \in \X^*$ be any finite string.
	If $Q \gg_L P$ and $P(x) > 0$, then $Q(x) > 0$, and hence $Q(x) / P(x) > 0$.
	Conversely, if $P(x) > 0$ then $Y_t$ is well-defined,
	so if $Y_{|x|}(x) > 0$ then $Q(x) > 0$.
\qedhere
\end{enumerate}
\end{proof}

From \hyperref[itm:martingale-absolute-continuity]{%
\autoref*{prop:martingales-and-compatibility}%
\ref*{itm:martingale-absolute-continuity}}
and \autoref{thm:Radon-Nikodym-derivative} we get that
$Q \gg P$ if and only if
the Radon-Nikodym derivative $dQ/dP$ is positive on a set of $P$-measure $1$.

\section{Merging}
\label{sec:merging}\index{merging}

If $Q$ is capable of learning,
it should use the sequence $x$ drawn from $P$
to change its opinions more in the direction of $P$.
More precisely,
we want $Q(\,\cdot \mid x_{<t}) \approx P(\,\cdot \mid x_{<t})$ for large $t$.
In the rest of this chapter,
we make this notion of closeness precise and
discuss different conditions on $Q$ that are sufficient for learning.

\emph{Strong merging}\index{merging!strong}
implies that the belief of \emph{any} hypothesis merges.
This is very strong, as hypotheses can talk about \emph{tail events}%
\index{tail event}:
events that are independent of any finite initial part of the infinite sequence
(such as the event $A$ in \autoref{rem:absolute-continuity}).
\emph{Weak merging}\index{merging!weak}
only considers hypothesis about the next couple of symbols,
and \emph{almost weak merging}\index{merging!almost weak}
allows $Q$ to deviate from $P$
in a vanishing fraction of the time.
Much of this section is based on \citet{KL:1994} and \citet{LS:1996}.

\subsection{Strong Merging}
\label{ssec:strong-meging}

\begin{definition}[Strong Merging]
\label{def:strong-merging}\index{merging!strong|textbf}
$Q$ \emph{merges strongly with $P$} iff
$D_\infty(P, Q \mid x_{<t}) \to 0$ as $t \to \infty$ $P$-almost surely.
\end{definition}

The following theorem is
the famous merging of opinions theorem by \citet{BD:1962}.

\begin{theorem}[{Absolute Continuity $\Rightarrow$ Strong Merging;
\citealp{BD:1962}}]
\label{thm:Blackwell-Dubins}
\index{merging!of opinions}\index{absolute continuity}\index{merging!strong}
If $P$ is absolutely continuous with respect to $Q$,
then $Q$ merges strongly with $P$.
\end{theorem}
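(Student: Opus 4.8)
The plan is to work with the Radon--Nikodym derivative $W := dP/dQ$, which exists by \autoref{thm:Radon-Nikodym-derivative} since $Q \gg P$, together with its martingale representation. Set $W_t := P(x_{1:t})/Q(x_{1:t})$. Because $Q \gg P$ implies $Q \gg_L P$ (cylinder sets are measurable), \autoref{thm:measure-martingale} shows that $(W_t)_{t \in \mathbb{N}}$ is a nonnegative $Q$-martingale with $\EE_Q[W_t] = 1$; moreover $W_t = \EE_Q[W \mid \F_t]$ (the density of $P$ restricted to $\F_t$ with respect to $Q$ restricted to $\F_t$), so by \autoref{thm:martingale-convergence} it converges $Q$-almost surely to $W$. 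Two cheap observations I would record first: since $P \ll Q$, this convergence also holds $P$-almost surely; and $P[W = 0] = \int_{\{W=0\}} W \, dQ = 0$ by \autoref{thm:Radon-Nikodym-derivative}, while $W < \infty$ $Q$-a.s. (hence $P$-a.s.) since $W \in L^1(Q)$. So $P$-almost surely $0 < W < \infty$ and $W_t \to W$.

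The core of the argument is the estimate
\[
D_\infty(P, Q \mid x_{<t}) ~\le~ \frac{1}{W_t}\, \EE_Q\!\big[\, |W - W_t| ~\big|~ \F_t \,\big] \qquad\text{$P$-almost surely.}
\]
To get it I would fix an arbitrary event $A \in \Foo$ and rewrite $P(A \mid \F_t)$ by change of measure: since $dP = W\,dQ$ and $W_t = \EE_Q[W \mid \F_t]$, one checks on the $P$-conull event $\{W_t > 0\}$ that $P(A \mid \F_t) = \EE_Q[W \one_A \mid \F_t]/W_t$. Subtracting $Q(A \mid \F_t) = \EE_Q[\one_A \mid \F_t]$ and using $\F_t$-measurability of $W_t$ gives $P(A \mid \F_t) - Q(A \mid \F_t) = \EE_Q[(W - W_t)\one_A \mid \F_t]/W_t$, hence $|P(A \mid \F_t) - Q(A \mid \F_t)| \le W_t^{-1}\,\EE_Q[|W - W_t| \mid \F_t]$. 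Since every $A \subseteq \X^m$ corresponds to an $\F_m \subseteq \Foo$ event, this bounds $D_m(P, Q \mid x_{<t})$ for every $m$, hence bounds $D_\infty(P, Q \mid x_{<t})$, giving the display.

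It remains to show the right-hand side tends to $0$ $P$-almost surely. The denominator converges to $W > 0$ $P$-a.s.\ by the first paragraph, so everything reduces to the numerator, and this is the step I expect to be the main obstacle: $\EE_Q[|W - W_t| \mid \F_t] \to 0$ $Q$-a.s.\ for $W \in L^1(Q)$. This does not follow directly from \autoref{thm:martingale-convergence} because $\sup_t W_t$ need not be $Q$-integrable. I would handle it by truncation: for $M > 0$ put $W^{(M)} := W \wedge M$ and $W^{(M)}_t := \EE_Q[W^{(M)} \mid \F_t]$, and split $|W - W_t| \le |W - W^{(M)}| + |W^{(M)} - W^{(M)}_t| + |W^{(M)}_t - W_t|$. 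Conditioning on $\F_t$, the first and third terms are each at most $\EE_Q[|W - W^{(M)}| \mid \F_t]$, a nonnegative $Q$-martingale converging $Q$-a.s.\ to $|W - W^{(M)}|$; the middle term has integrand bounded by $2M$ and tending to $0$ $Q$-a.s., so $\EE_Q[|W^{(M)} - W^{(M)}_t| \mid \F_t] \to 0$ $Q$-a.s.\ by a standard conditional dominated convergence (Hunt-type) argument. Thus $\limsup_t \EE_Q[|W - W_t| \mid \F_t] \le 2|W - W^{(M)}|$ $Q$-a.s., and letting $M \to \infty$ with $W^{(M)} \uparrow W < \infty$ kills the bound. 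Combining with the core estimate yields $D_\infty(P, Q \mid x_{<t}) \to 0$ $P$-almost surely, i.e., $Q$ merges strongly with $P$.

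One might hope to shortcut via Pinsker's inequality (\autoref{lem:Pinsker}) by proving $\KL_\infty(P, Q \mid x_{<t}) \to 0$, but absolute continuity alone does not guarantee finite KL-divergence (only dominance does), so that route requires extra localization and I would not pursue it; the martingale-ratio argument above is both cleaner and exactly what is needed.
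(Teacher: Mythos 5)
The paper itself gives no proof of this theorem: it is quoted directly from \citet{BD:1962}, so there is nothing in-paper to compare against beyond the citation. Your argument is a correct, self-contained proof, and it is essentially the classical density-martingale route to merging of opinions. The key estimate $D_\infty(P,Q \mid x_{<t}) \leq W_t^{-1}\, \EE_Q[\,|W - W_t| \mid \F_t\,]$ is right, and because each $\F_t$ is generated by finitely many cylinder sets the conditional-expectation manipulations are elementary, so the supremum over uncountably many events $A$ causes no version/null-set trouble (the bound holds pointwise on every history of positive $P$-probability). You also correctly identified and handled the one genuinely delicate step: $\EE_Q[\,|W - W_t| \mid \F_t\,] \to 0$ does not follow from \autoref{thm:martingale-convergence} alone since $\sup_t W_t$ need not be $Q$-integrable, and your truncation argument combined with Hunt's conditional dominated convergence lemma closes that gap; note that identifying the limits of the Doob martingales $\EE_Q[\,|W-W^{(M)}| \mid \F_t\,]$ and of $W_t$ itself with the conditioned random variables uses Lévy's upward theorem, which goes slightly beyond the martingale convergence theorem as stated in the paper but is standard (and is implicitly invoked by the paper when it says $dP/dQ$ is the limit of the ratio martingale). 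Your closing remark is also apt: Pinsker's inequality cannot substitute here, since absolute continuity does not give finite KL-divergence, whereas the martingale-ratio argument needs nothing beyond $P \ll Q$.
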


\begin{example}[{The Black Ravens 2; \citealp[Sec.~7.4]{RH:2011}}]
\label{ex:black-ravens2}\index{black ravens}
Recall the black raven problem from \autoref{ex:black-ravens}.
Let $Q$ be a learning distribution that dominates the true distribution $P$,
such as a Bayesian mixture~(\autoref{ex:Bayesian-mixture}).
By \autoref{prop:compatibility}\ref{itm:dominance=>absolute-continuity}
we get $Q \gg P$,
and hence $Q$ merges strongly to $P$ by \autoref{thm:Blackwell-Dubins}.
Thus we get as $t \to \infty$
that $P$-almost surely $|Q(H \mid x_{<t}) - P(H \mid x_{<t})| \to 0$
for the hypothesis $H$ that `all ravens are black'
defined in \eqref{eq:hypothesis-all-ravens-are-black}.
Thus if all ravens are black in the real world ($P(H) = 1$),
$Q$ learns this asymptotically ($Q(H \mid x_{<t}) \to 1$).
This is the solution we desired:
the learning distribution $Q$ converges to a true belief about an infinite set
by only looking from a finite (but growing) number of data points.
\end{example}



The following is the converse of \autoref{thm:Blackwell-Dubins}.

\begin{theorem}[{Strong Merging $\land$ Local Absolute Continuity $\Rightarrow$ Absolute Continuity;
\citealp[Thm.~2]{KL:1994}}]
\label{thm:Blackwell-Dubins-converse}
\index{absolute continuity}\index{merging!strong}
\index{absolute continuity!local}
If $Q$ is locally absolutely continuous with respect to $P$ and
$Q$ merges strongly with $P$,
then $P$ is absolutely continuous with respect to $Q$.
\end{theorem}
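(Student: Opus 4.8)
The plan is to reduce the statement to the behaviour of the likelihood‑ratio process $Y_t := Q(x_{1:t})/P(x_{1:t})$, which is a nonnegative $P$‑martingale by \autoref{thm:measure-martingale} since $Q \gg_L P$. By \autoref{prop:martingales-and-compatibility}\ref{itm:martingale-absolute-continuity}, proving $Q \gg P$ (i.e.\ absolute continuity of $P$ with respect to $Q$) is the same as proving that $P$‑almost surely $Y_t \not\to 0$, equivalently that the event $N := \{ x \in \X^\infty \mid Y_t(x) \to 0 \}$ satisfies $P(N) = 0$. Strong merging will be used to establish exactly this.

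The first step is to show $Q(N) = 0$, using the reciprocal process $1/Y_t = P(x_{1:t})/Q(x_{1:t})$. Since $Q$‑almost surely $Q(x_{1:t}) > 0$ for all $t$, this is well defined $Q$‑almost surely, and a direct computation (using that $P$ is a genuine measure, so $\sum_{a\in\X} P(xa) = P(x)$) shows it is a nonnegative $Q$‑martingale; hence by \autoref{thm:martingale-convergence} it converges $Q$‑almost surely to a finite limit. On $N$, however, $Y_t \to 0$ while $Y_t > 0$ for all $t$ ($Q$‑almost surely), so $1/Y_t \to \infty$ there; therefore $Q(N) = 0$.

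Next I would feed in strong merging. The set $N$ lies in $\Foo$, being defined from the $\F_t$‑measurable functions $Y_t$, and $D_\infty(P, Q \mid x_{<t})$ equals the total variation distance between the conditional measures $P(\cdot \mid x_{<t})$ and $Q(\cdot \mid x_{<t})$ on $\Foo$ (the finite‑horizon distances $D_m$ increase in $m$, and their supremum is the total variation over $\Foo = \sigma(\bigcup_m \F_m)$). Hence, $P$‑almost surely,
\[
   D_\infty(P, Q \mid x_{<t})
\;\geq\; P(N \mid x_{<t}) - Q(N \mid x_{<t})
\;=\; P(N \mid x_{<t}),
\]
where the equality uses $Q(N) = 0$ together with $Q(x_{<t}) > 0$, which holds $P$‑almost surely because $Q \gg_L P$. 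By strong merging the left‑hand side converges to $0$ $P$‑almost surely, so $P(N \mid x_{<t}) \to 0$ $P$‑almost surely. But $\bigl(P(N \mid x_{<t})\bigr)_t$ is the Doob martingale of $\one_N$ along the filtration $(\F_t)$, so by Lévy's upward convergence theorem it converges $P$‑almost surely to $\EE_P[\one_N \mid \Foo] = \one_N$. Comparing the two limits forces $\one_N = 0$ $P$‑almost surely, i.e.\ $P(N) = 0$, and \autoref{prop:martingales-and-compatibility}\ref{itm:martingale-absolute-continuity} then gives $Q \gg P$.

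The step I expect to be the crux is the careful bookkeeping around the event $N$: that $N \in \Foo$; that $D_\infty$ genuinely dominates $|P(N \mid x_{<t}) - Q(N \mid x_{<t})|$ (i.e.\ that the $D_m$ increase to the total variation over all of $\Foo$); and, most delicately, that $Q(N \mid x_{<t})$ vanishes $P$‑almost surely rather than merely $Q$‑almost surely — it is here that local absolute continuity is indispensable. The remaining ingredients, namely the two appeals to martingale convergence and to Lévy's theorem, are routine.
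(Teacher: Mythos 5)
Your proof is correct. Note first that the paper does not prove this theorem at all --- it is quoted from \citet[Thm.~2]{KL:1994} --- so there is no in-paper argument to compare against; what you have written is a self-contained proof in the paper's own likelihood-ratio language. The structure is sound: $P/Q$ is a nonnegative $Q$-(super)martingale, so it converges $Q$-almost surely to a finite limit and hence $Q(N)=0$ for $N=\{Y_t\to 0\}$; local absolute continuity makes $Q(x_{<t})>0$ hold $P$-almost surely, so $Q(N\mid x_{<t})=0$ and strong merging forces $P(N\mid x_{<t})\to 0$ $P$-almost surely, which together with Lévy's upward theorem gives $P(N)=0$; then \autoref{prop:martingales-and-compatibility}\ref{itm:martingale-absolute-continuity} converts this into $Q\gg P$. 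The standard argument (essentially Kalai--Lehrer's) reaches the same contradiction slightly more abstractly: take the Lebesgue decomposition of $P$ with respect to $Q$, let $S$ be a $Q$-null set carrying the singular part, and compare $Q(S\mid x_{<t})=0$ with $P(S\mid x_{<t})\to\one_S$. Your construction of the $Q$-null witness as $N=\{Y_t\to 0\}$ via the reciprocal martingale is an equivalent, concrete substitute for that decomposition, and it lets you finish with a result already stated in the paper.

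Two small remarks. First, you silently read the hypothesis as $Q\gg_L P$ (i.e.\ $Q(x)=0\Rightarrow P(x)=0$); this is the direction the theorem actually needs --- it is what makes the merging hypothesis and the conditionals $Q(\,\cdot \mid x_{<t})$ well-posed $P$-almost surely, and it matches \citet{KL:1994} and the side condition in \autoref{fig:learning-overview} --- but the theorem's literal wording combined with \autoref{def:local-absolute-continuity} would read as $P\gg_L Q$, so you should state explicitly which direction you use. Second, the exact $Q$-martingale property of $P/Q$ is not needed: it is a nonnegative $Q$-supermartingale without any assumptions (the sum over successors with $Q>0$ can only lose mass), and \autoref{thm:martingale-convergence} already yields $Q$-almost sure convergence to a finite limit; under $Q\gg_L P$ your stronger martingale claim is nevertheless correct, so this is a simplification rather than a gap.
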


The following result shows that local absolute continuity
is not required for strong merging:
recall that according to \autoref{ex:MDL} the MDL distribution is not
locally absolutely continuous with respect to every $P$ from the class $\M$.

\begin{theorem}[{Strong Merging for MDL;
\citealp[Thm.~1]{Hutter:2009MDL}}]
\label{thm:MDL-merges-strongly}\index{MDL}\index{merging!strong}
If $P \in \M$, then
\[
D_\infty(P, \MDL^x \mid x) \to 0 \text{ as $|x| \to \infty$ $P$-almost surely.}
\]
\end{theorem}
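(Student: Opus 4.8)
The plan is to prove convergence of the posterior to the truth by first controlling the \emph{predictive} discrepancy between $\MDL^x$ and $P$ along the sequence drawn from $P$, then bootstrapping this to strong merging using an argument analogous to the proof of the Blackwell--Dubins theorem. The starting observation is that although $Q(x) := \MDL^x(x)$ need not be locally absolutely continuous with respect to $P$ (as noted in Example~\ref{ex:MDL}), we do not need that: the key quantity is the code length $-\log \MDL^x(x_{1:t})$ compared to $-\log P(x_{1:t})$. By definition of the MDL estimator, for every $t$ and every competitor $R \in \M$ we have $-\log \MDL^{x_{1:t}}(x_{1:t}) + K(\MDL^{x_{1:t}}) \le -\log R(x_{1:t}) + K(R)$; specializing $R := P$ gives
\[
-\log \MDL^{x_{1:t}}(x_{1:t}) \le -\log P(x_{1:t}) + K(P) - K(\MDL^{x_{1:t}}) \le -\log P(x_{1:t}) + K(P).
\]
So the MDL predictor never loses more than the constant $K(P)$ bits of code length relative to the truth. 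This is the engine of the whole proof.

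Next I would turn this one-sided code-length bound into a statement about the \emph{selected model} $\MDL^{x_{1:t}}$ itself rather than the running estimate. The idea is that any model $R$ whose posterior-like weight $2^{-K(R)} R(x_{1:t})$ is bounded away, for infinitely many $t$, from the maximum $2^{-K(\MDL^{x_{1:t}})}\MDL^{x_{1:t}}(x_{1:t})$ by more than a fixed exponential factor, can be selected only finitely often by a Borel--Cantelli argument: the $P$-expected number of times $R(x_{1:t})/P(x_{1:t})$ exceeds a threshold is summable because $R/P$ is a nonnegative $P$-supermartingale (here I would invoke Theorem~\ref{thm:measure-martingale} / the Radon--Nikodym machinery of Theorem~\ref{thm:Radon-Nikodym-derivative} and Ville's inequality for nonnegative supermartingales). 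Combining this over the countably many $R \in \M$ with $R$ ``sufficiently different'' from $P$ on tail behaviour, $P$-almost surely the selected model $\MDL^{x_{1:t}}$ eventually satisfies $R(x_{1:t})/P(x_{1:t}) \not\to 0$, i.e.\ $\MDL^{x_{1:t}} \gg P$ in the relevant asymptotic sense. Once the selected model is $P$-almost-surely eventually absolutely continuous with respect to $P$ and, symmetrically (using the code-length bound in the other direction plus the martingale convergence theorem, Theorem~\ref{thm:martingale-convergence}), $P$ is absolutely continuous with respect to it, Blackwell--Dubins (Theorem~\ref{thm:Blackwell-Dubins}) yields $D_\infty(P, \MDL^{x_{1:t}} \mid x_{<t}) \to 0$ $P$-almost surely, which is exactly the claim.

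The main obstacle I expect is the second step: the set $\M$ is countable but infinite, and the models in it that are close to $P$ on finite prefixes but wildly different on tail events (e.g.\ the ``punch a hole'' constructions of Remark~\ref{rem:absolute-continuity-neq-dominance}) are precisely the ones that can masquerade as good MDL candidates for arbitrarily long stretches. One has to show that any such imposter is eventually abandoned, and uniformly enough over the infinitely many competitors that a single $P$-null exceptional set suffices; this requires weighting the failure events by $2^{-K(R)}$ and summing carefully, using the fact that $\sum_{R \in \M} 2^{-K(R)} \le 1$ to keep the Borel--Cantelli sum finite. I would also need to be careful that the inductive inconsistency of MDL (Remark~\ref{rem:MDL-is-inductively-inconsistent}) — the selected model can oscillate forever — is not an obstruction here: strong merging only asks that the \emph{predictive distribution} converges, and the code-length bound guarantees that even as $\MDL^{x_{1:t}}$ flips between models, each model it lands on is asymptotically $P$-good, so the total variation distance still vanishes.
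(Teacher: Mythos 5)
First, note that the thesis does not actually prove this theorem: it is imported by citation from \citet{Hutter:2009MDL} (Thm.~1), so there is no in-paper proof to compare against. Your outline is essentially a reconstruction of Hutter's argument, and its skeleton is right: the engine is indeed the weighted-likelihood dominance $2^{-K(\MDL^{x_{1:t}})}\MDL^{x_{1:t}}(x_{1:t}) \geq 2^{-K(P)}P(x_{1:t})$, followed by a uniformity argument over the countable class using the summable weights, martingale convergence per model, and a Blackwell--Dubins-type merging step, with the observation (which you make correctly) that inductive inconsistency of MDL is harmless because only the predictive distribution must converge. However, two of your steps do not go through as stated.

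First, the uniformity step is misstated: ``the $P$-expected number of times $R(x_{1:t})/P(x_{1:t})$ exceeds a threshold is summable'' is false for a fixed $R$ (take $R=P$: the ratio exceeds any threshold below $1$ at every $t$); Ville's inequality controls only the probability that the supermartingale \emph{ever} exceeds $c$, namely $P[\sup_t R(x_{1:t})/P(x_{1:t}) \geq c] \leq 1/c$. The correct move is per-model and then summed over models, not over time: $P[\exists t:\ 2^{-K(R)}R(x_{1:t}) \geq 2^{-K(P)}P(x_{1:t})] \leq 2^{K(P)-K(R)}$, so summing over all $R$ outside a sufficiently large finite set shows that with probability at least $1-\delta$ only finitely many models are ever selected; for each of those finitely many, martingale convergence of $R/P$ implies it is either eventually never selected (when the limit drops below the selection threshold $2^{-K(P)+K(R)}$) or its likelihood ratio has a positive limit. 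Second, your concluding step cannot simply invoke \autoref{thm:Blackwell-Dubins}: you never obtain global absolute continuity of $P$ with respect to a selected model, only the pathwise statement that along the realized sequence the ratio $R(x_{1:t})/P(x_{1:t})$ stays bounded away from $0$. What is needed is a \emph{localized} merging lemma --- $P$-almost surely on the event $\{\lim_t R(x_{1:t})/P(x_{1:t}) > 0\}$ one has $D_\infty(P,R \mid x_{1:t}) \to 0$ --- which requires a Lebesgue-decomposition refinement of Blackwell--Dubins rather than the theorem as stated; this is precisely the nontrivial lemma in Hutter's proof. Relatedly, your ``symmetric'' appeal to ``the code-length bound in the other direction'' is both unavailable (MDL only gives the one-sided bound) and unnecessary: the non-vanishing of $R/P$ is already the direction that merging needs, cf.\ \autoref{prop:martingales-and-compatibility}\ref{itm:martingale-absolute-continuity}, and asking for the reverse direction ($R \ll P$) is a sign the absolute-continuity directions have gotten crossed. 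With these two repairs your outline becomes Hutter's proof.
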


Let $\M$ be a (possibly uncountable) set
of probability measures on $(\X^\infty, \Foo)$.
\citet[Thm.~4]{Ryabko:2010} shows that
if there is a $Q$ that merges strongly with every $P \in \M$,
then there is a Bayesian mixture over a countable subset of $\M$
that also merges strongly with every $P \in \M$.

\subsection{Weak Merging}
\label{ssec:weak-meging}\index{merging!weak}

In \autoref{def:strong-merging} the supremum ranges over
all measurable sets $A \in \Foo$ which includes tail events.
Instead, we may restrict the supremum to
the next few symbols.
This is known as \emph{weak merging}.

\begin{definition}[Weak Merging]
\label{def:weak-merging}\index{merging!weak|textbf}
$Q$ \emph{weakly merges with $P$} iff
for every $d \in \mathbb{N}$,
$D_{t+d}(Q, P \mid x_{<t}) \to 0$ as $t \to \infty$ $P$-almost surely.
\end{definition}

The following lemma gives an equivalent formulation of weak merging.

\begin{lemma}[{\citealp[Rem.~5]{LS:1996}}]
\label{lem:weak-merging}\index{merging!weak}
$Q$ weakly merges with $P$ if and only if
$D_t(Q, P \mid x_{<t}) \to 0$ as $t \to \infty$ $P$-almost surely.
\end{lemma}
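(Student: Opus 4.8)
The plan is to establish the equivalence in both directions, using the structure of the cylinder $\sigma$-algebras $\F_m$ and the fact that total variation distance on $\X^m$ only depends on the first $m$ symbols. The nontrivial direction is showing that $D_t(Q, P \mid x_{<t}) \to 0$ implies $D_{t+d}(Q, P \mid x_{<t}) \to 0$ for every fixed $d$; the converse is immediate by taking $d = 0$ in \autoref{def:weak-merging}, since $D_t(Q, P \mid x_{<t})$ is exactly $D_{t+d}(Q, P \mid x_{<t})$ at $d = 0$ (or trivially, by noting that weak merging with all $d$ in particular gives the $d$-step statement, and the $d$-step statement for one particular sequence of lookaheads is what the displayed condition asks).

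First I would treat the easy direction: if $Q$ weakly merges with $P$, then instantiating \autoref{def:weak-merging} with $d = 0$ gives $D_t(Q, P \mid x_{<t}) \to 0$ $P$-almost surely (more precisely, $D_{t+0}(Q, P \mid x_{<t}) \to 0$, which is the displayed quantity). Next, for the hard direction, I would fix $d \in \mathbb{N}$ and a history $x_{<t}$, and bound $D_{t+d}(Q, P \mid x_{<t})$ by a telescoping sum over the intermediate conditioning steps. The key inequality is a chain-rule-type decomposition: conditioning on $x_{<t}$, the total variation distance between the $(t\!+\!d)$-step marginals of $P$ and $Q$ can be controlled by the one-step discrepancies $D_{t+j}(Q, P \mid x_{<t+j})$ accumulated along the realized path, plus an averaging term. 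Concretely, one shows
\[
D_{t+d}(Q, P \mid x_{<t}) \;\leq\; \sum_{j=0}^{d} \EE_P\big[\, D_{t+j}(Q, P \mid x_{<t+j}) \,\big|\, x_{<t} \,\big] \;+\; (\text{telescoping remainder}),
\]
or an analogous bound; each summand on the right tends to $0$ $P$-almost surely by hypothesis (for the conditional-expectation version one uses that the integrands are bounded by $1$ and invokes dominated convergence along $P$-almost every path, or a martingale/Lévy-type argument to pass the limit through the conditional expectation).

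The main obstacle I anticipate is making the telescoping bound fully rigorous: one must be careful that the "one-step" discrepancy $D_{t+j}(Q, P \mid x_{<t+j})$ is evaluated along the \emph{random} history generated by $P$, so that the relevant statement "$D_{t+j}(Q,P\mid x_{<t+j}) \to 0$ $P$-a.s." is genuinely about the process realization and not a uniform bound; combining $d+1$ such almost-sure limits is fine since $d$ is fixed and finite, but interchanging the limit with the conditional expectation $\EE_P[\,\cdot\mid x_{<t}]$ requires justification (dominated convergence suffices since everything is bounded by $1$). I would also check the base decomposition carefully: the cleanest route is to write the $(t\!+\!d)$-step measures as iterated kernels and use subadditivity of total variation under composition with a common Markov kernel, i.e. $D(P_1 K, P_2 K) \le D(P_1, P_2)$ together with $D(P_1 K_1, P_1 K_2) \le \EE_{P_1}[D(K_1(\cdot\mid y), K_2(\cdot\mid y))]$, applied inductively over the $d$ extra steps. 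With these two standard facts the telescoping bound follows, and citing \citet[Rem.~5]{LS:1996} for the precise constants would complete the argument.
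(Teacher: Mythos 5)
The paper itself gives no proof of this lemma---it is imported directly from \citet[Rem.~5]{LS:1996}---so your proposal stands as an independent argument, and in outline it is sound: it is essentially the hybrid/telescoping argument behind the cited remark. Three remarks. First, in the easy direction you cannot literally ``take $d=0$,'' since $\mathbb{N}$ starts at $1$ in this thesis; but monotonicity $D_t(Q,P\mid x_{<t}) \leq D_{t+d}(Q,P\mid x_{<t})$ (the left supremum ranges over a coarser collection of events) gives that implication immediately. Second, the telescoping bound needs no remainder term: with the two kernel facts you quote, taking hybrid measures whose first $j$ steps use $P$'s one-step conditionals and whose remaining steps use $Q$'s, one gets cleanly
\[
D_{t+d}(Q,P\mid x_{<t}) \;\leq\; \sum_{j=0}^{d} \EE_P\Big[ D_{t+j}(Q,P\mid x_{<t+j}) \;\Big|\; x_{<t} \Big],
\]
with every expectation taken under $P$, which is exactly what the hypothesis is about. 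The substantive point is the final limit interchange: plain dominated convergence does \emph{not} apply here, because both the integrand $D_{t+j}(Q,P\mid x_{<t+j})$ and the conditioning $\sigma$-algebra $\F_{t-1}$ move with $t$; the phrase ``dominated convergence along $P$-almost every path'' would not survive scrutiny on its own. What is needed is the conditional dominated convergence theorem for an increasing filtration (Hunt's lemma), or equivalently the Lévy-type argument you allude to: set $S_n := \sup_{m\geq n} D_m(Q,P\mid x_{<m})$, which decreases to $0$ $P$-almost surely by hypothesis; for $t \geq n$ bound $\EE_P[D_{t+j}(Q,P\mid x_{<t+j})\mid \F_{t-1}] \leq \EE_P[S_n \mid \F_{t-1}]$, let $t \to \infty$ and use Lévy's upward theorem to conclude $\limsup_t \EE_P[\,\cdot \mid \F_{t-1}] \leq S_n$ almost surely, and then let $n \to \infty$. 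With that step made explicit, summing the $d+1$ terms (each tending to $0$ almost surely, $d$ fixed) yields $D_{t+d}(Q,P\mid x_{<t}) \to 0$ $P$-almost surely, i.e., weak merging as in \autoref{def:weak-merging}, and your argument is complete.
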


Unfortunately, weak dominance is not sufficient for weak merging%
~\citep[Ex.~10]{LS:1996}.
We need the following stronger condition,
that turns out to be (almost) necessary.
In the following,
let $Y_t := Q(x_{1:t}) / P(x_{1:t})$
denote the $P$-martingale from \autoref{prop:martingales-and-compatibility}.

\begin{theorem}[{\citealp[Prop.~5a]{KL:1994}}]
\label{thm:weak-merging}\index{merging!weak}
If $P$-almost surely $Y_{t+1} / Y_t \to 1$,
then $Q$ merges weakly with $P$.
\end{theorem}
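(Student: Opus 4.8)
The plan is to reduce, via \autoref{lem:weak-merging}, to showing that $D_t(Q, P \mid x_{<t}) \to 0$ as $t \to \infty$ $P$-almost surely, and then to derive this by a conditional Borel--Cantelli argument driven by the hypothesis. First I would fix notation: write $r_t(a) := Q(a \mid x_{<t}) / P(a \mid x_{<t})$ for $a \in \X$ (well-defined $P$-a.s.; indeed the statement tacitly presumes $Q \gg_L P$, which by \autoref{prop:martingales-and-compatibility}\ref{itm:martingale-local-absolute-continuity} is exactly what makes all the conditionals $Q(\cdot \mid x_{<t})$ $P$-a.s.\ meaningful), and $Z_t := Q(x_t \mid x_{<t}) / P(x_t \mid x_{<t}) = Y_t / Y_{t-1}$ for the one-step likelihood ratio at the \emph{realized} symbol. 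After reindexing, the hypothesis $Y_{t+1}/Y_t \to 1$ $P$-a.s.\ is exactly $Z_t \to 1$ $P$-a.s. Note that $Z_t$ is $\F_t$-measurable, whereas $D_t(Q, P \mid x_{<t})$ and the total-variation-maximizing set $B_t := \{ a \in \X \mid P(a \mid x_{<t}) > Q(a \mid x_{<t}) \}$ are $\F_{t-1}$-measurable.

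The key estimate I would establish is the following: on the event $G_t := \{ D_t(Q, P \mid x_{<t}) > \eps \} \in \F_{t-1}$ one has $P(Z_t \le 1 - \eps/2 \mid \F_{t-1}) \ge \eps^2/2$. To see this, observe that by the choice of $B_t$,
\[
\sum_{a \in B_t} P(a \mid x_{<t})\big(1 - r_t(a)\big)
= P(B_t \mid x_{<t}) - Q(B_t \mid x_{<t})
= D_t(Q, P \mid x_{<t}) > \eps ,
\]
and on $B_t$ every summand lies in $(0, P(a \mid x_{<t})]$ since $a \in B_t$ forces $0 \le r_t(a) < 1$. Hence $P(x_t \in B_t \mid \F_{t-1}) = P(B_t \mid x_{<t}) > \eps$, and, conditioning on $x_t$ landing in $B_t$, the conditional mean of $1 - r_t(x_t)$ is $\big(\sum_{a \in B_t} P(a \mid x_{<t})(1-r_t(a))\big) / P(B_t \mid x_{<t}) > \eps$ because $P(B_t \mid x_{<t}) \le 1$. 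Since $1 - r_t(x_t) \in [0,1]$ on $\{ x_t \in B_t \}$, a reverse-Markov step (split according to whether $1 - r_t(x_t) \ge \eps/2$) gives $P(1 - r_t(x_t) \ge \eps/2 \mid x_t \in B_t, \F_{t-1}) > \eps/2$; multiplying the two conditional probabilities and using $\{ x_t \in B_t \} \cap \{ 1 - r_t(x_t) \ge \eps/2 \} \subseteq \{ Z_t \le 1 - \eps/2 \}$ gives the claim.

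Then I would finish as follows. Put $E_t := G_t \cap \{ Z_t \le 1 - \eps/2 \} \in \F_t$, so $P(E_t \mid \F_{t-1}) \ge (\eps^2/2)\,\one_{G_t}$. By the conditional (Lévy) Borel--Cantelli lemma, $\{ E_t \text{ i.o.} \}$ agrees $P$-a.s.\ with $\{ \sum_t P(E_t \mid \F_{t-1}) = \infty \} \supseteq \{ \sum_t \one_{G_t} = \infty \} = \{ G_t \text{ i.o.} \}$. Thus on $\{ G_t \text{ i.o.} \}$ we would have $Z_t \le 1 - \eps/2$ infinitely often $P$-a.s., contradicting $Z_t \to 1$ $P$-a.s.; hence $P(\{ G_t \text{ i.o.} \}) = 0$ for each rational $\eps > 0$. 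Taking the union over rational $\eps$ yields $D_t(Q, P \mid x_{<t}) \to 0$ $P$-a.s., and \autoref{lem:weak-merging} then gives weak merging.

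The hard part is precisely the passage from ``the likelihood ratio at the realized symbol converges to $1$'' to ``the entire one-step predictive distributions converge in total variation''. The naive route---bounding the Kullback--Leibler divergence $\sum_a P(a \mid x_{<t}) \log(1/r_t(a))$ and applying Pinsker's inequality (\autoref{lem:Pinsker})---does not work, because almost-sure convergence of $Z_t$ says nothing about convergence of its conditional expectations without a uniform-integrability assumption that is not available here. The Borel--Cantelli argument circumvents this: it uses only the one-sided bound $1 - r_t(a) \le 1$ on $B_t$, which holds automatically. The one genuinely new ingredient to get right is the quantitative estimate in the second paragraph (TV mass $> \eps$ forces a lower bound on the conditional probability that $Z_t$ is bounded away from $1$); everything else is bookkeeping.
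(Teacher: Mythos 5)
Your proof is correct. Note first that the thesis itself gives no argument for this statement --- it is quoted from \citet{KL:1994}, Prop.~5a --- so there is no in-paper proof to match; what you have written is a genuinely self-contained derivation, and it checks out. The reduction via \autoref{lem:weak-merging} to showing $D_t(Q,P\mid x_{<t})\to 0$ $P$-almost surely is the right move, and your key quantitative estimate is sound: on $G_t=\{D_t>\eps\}\in\F_{t-1}$ the identity $\sum_{a\in B_t}P(a\mid x_{<t})(1-r_t(a))=D_t>\eps$ with each summand in $(0,P(a\mid x_{<t})]$ does give both $P(B_t\mid x_{<t})>\eps$ and a conditional mean of $1-r_t(x_t)$ exceeding $\eps$ given $\{x_t\in B_t\}$, and the reverse-Markov split then yields $P(Z_t\le 1-\eps/2\mid\F_{t-1})\ge\eps^2/2\cdot\one_{G_t}$ as claimed. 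The finish via Lévy's conditional Borel--Cantelli lemma (the events $E_t$ are indeed $\F_t$-measurable, with $G_t\in\F_{t-1}$) correctly turns $\{G_t \text{ i.o.}\}$ into $\{Z_t\le 1-\eps/2 \text{ i.o.}\}$ up to null sets, which contradicts $Z_t\to1$; the union over rational $\eps$ and \autoref{lem:weak-merging} complete the argument. Your handling of the tacit local-absolute-continuity issue is also appropriate: the hypothesis $Y_{t+1}/Y_t\to1$ $P$-a.s.\ only makes sense on histories where the ratios are defined, which is exactly the $Q\gg_L P$ situation of \autoref{prop:martingales-and-compatibility}\ref{itm:martingale-local-absolute-continuity}. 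Your closing remark is also apt: the KL-plus-Pinsker route would require controlling a conditional expectation of $-\log Z_t$, which almost-sure convergence of $Z_t$ alone does not provide, whereas your argument only uses the trivial bound $1-r_t(a)\le 1$ on $B_t$; this is essentially the same circumvention that makes the original Kalai--Lehrer proposition work.
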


\begin{example}[Laplace Rule 2]
\label{ex:Laplace-rule2}\index{Laplace rule}
Suppose we use the Laplace rule from \autoref{ex:Laplace-rule}
to predict a $\Bernoulli(r)$ process.
By the strong law of large numbers,
$\rho_L(x_t \mid x_{<t}) \to r$ almost surely.
Therefore we can use \autoref{thm:weak-merging} to conclude that
$\rho_L$ merges weakly with $\Bernoulli(r)$ for all $r \in [0, 1]$.
(Note that \emph{strongly merging} with every Bernoulli process is impossible;
\citealp[p.~7]{Ryabko:2010})
\end{example}

The following is a converse to \autoref{thm:weak-merging}.

\begin{theorem}[{\citealp[Prop.~5b]{KL:1994}}]
\label{thm:weak-merging-converse}\index{merging!weak}
If $Q$ merges weakly with $P$,
then $Y_{t+1} / Y_t \to 1$ in $P$-probability.
\end{theorem}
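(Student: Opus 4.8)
If $Q$ merges weakly with $P$, then $Y_{t+1}/Y_t \to 1$ in $P$-probability, where $Y_t := Q(x_{1:t})/P(x_{1:t})$.

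\textbf{Proof proposal.} The plan is to reduce the claim about the martingale increments to a Markov-type estimate on one-step predictive probabilities and then upgrade an almost-sure statement to one in probability via bounded convergence. First I would record the algebraic identity
\[
\frac{Y_{t+1}}{Y_t} \;=\; \frac{Q(x_{1:t+1})/P(x_{1:t+1})}{Q(x_{1:t})/P(x_{1:t})} \;=\; \frac{Q(x_{t+1} \mid x_{1:t})}{P(x_{t+1} \mid x_{1:t})},
\]
so that the increment ratio is exactly the ratio of the two next-symbol predictive distributions, evaluated at the symbol actually drawn. Since $Q$ weakly merges with $P$ it is in particular locally absolutely continuous with respect to $P$ (otherwise $Q(\cdot\mid x_{<t})$ would be conditioning on a $Q$-null event on a set of positive $P$-probability, so the merging distances would not vanish), and hence by \autoref{prop:martingales-and-compatibility}\ref{itm:martingale-local-absolute-continuity} we have $P$-almost surely $Y_t > 0$ for all $t$; thus $Y_{t+1}/Y_t$ is $P$-almost surely well defined.

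Next I would rewrite the hypothesis in a usable form. By \autoref{lem:weak-merging}, weak merging is equivalent to $D_t(Q,P\mid x_{<t}) \to 0$ $P$-almost surely, and by \autoref{def:total-variation-distance} the quantity $D_{t+1}(Q,P\mid x_{1:t})$ is the total variation distance between the next-symbol distributions $Q(\cdot\mid x_{1:t})$ and $P(\cdot\mid x_{1:t})$ on the finite set $\X$, i.e.\ $Z_t := D_{t+1}(Q,P\mid x_{1:t}) = \tfrac12 \sum_{a\in\X}\bigl|Q(a\mid x_{1:t}) - P(a\mid x_{1:t})\bigr|$; the hypothesis then reads $Z_t \to 0$ $P$-a.s., and always $0 \le Z_t \le 1$. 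Conditioned on $x_{1:t}$ the symbol $x_{t+1}$ has $P$-law $P(\cdot\mid x_{1:t})$, and $|Y_{t+1}/Y_t - 1| > \delta$ exactly when $x_{t+1}=a$ with $|Q(a\mid x_{1:t}) - P(a\mid x_{1:t})| > \delta\, P(a\mid x_{1:t})$; for each such $a$ one has $P(a\mid x_{1:t}) < \delta^{-1}\bigl|Q(a\mid x_{1:t}) - P(a\mid x_{1:t})\bigr|$, so summing over these $a$ gives, for every $\delta > 0$,
\[
P\!\left[\left|\frac{Y_{t+1}}{Y_t} - 1\right| > \delta \;\middle|\; x_{1:t}\right]
\;\le\; \frac{1}{\delta} \sum_{a\in\X} \bigl|Q(a\mid x_{1:t}) - P(a\mid x_{1:t})\bigr|
\;=\; \frac{2 Z_t}{\delta}.
\]

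Finally I would take $P$-expectation over the history: $P[\,|Y_{t+1}/Y_t - 1| > \delta\,] \le 2\delta^{-1}\EE_P[Z_t]$, and since $Z_t \to 0$ $P$-a.s.\ with $Z_t \le 1$, bounded convergence yields $\EE_P[Z_t] \to 0$, hence $P[\,|Y_{t+1}/Y_t - 1| > \delta\,] \to 0$ for every $\delta>0$, which by \autoref{def:stochastic-convergence} is convergence of $Y_{t+1}/Y_t$ to $1$ in $P$-probability. The routine parts are the identity for $Y_{t+1}/Y_t$ and the elementary Markov estimate; the only place needing a little care is the step from the almost-sure hypothesis to convergence in probability, handled by bounded convergence once one observes the total variation distances are uniformly bounded by $1$, and the bookkeeping of the one-step index shift between the $D_t(Q,P\mid x_{<t})$ form of weak merging given by \autoref{lem:weak-merging} and the increment $Y_{t+1}/Y_t$.
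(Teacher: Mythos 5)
Your proof is correct. Note that the paper itself does not prove this statement—it is quoted from \citet[Prop.~5b]{KL:1994}—so there is no in-paper argument to compare against; your write-up is a sound, self-contained version of the standard argument. The chain is right: the identity $Y_{t+1}/Y_t = Q(x_{t+1}\mid x_{1:t})/P(x_{t+1}\mid x_{1:t})$, the reduction of weak merging via \autoref{lem:weak-merging} to vanishing next-symbol total variation $Z_t$, the Markov-type bound $P[\,|Y_{t+1}/Y_t-1|>\delta \mid x_{1:t}\,] \le 2Z_t/\delta$ obtained by summing over the ``bad'' symbols, and the upgrade from the almost-sure hypothesis to $\EE_P[Z_t]\to 0$ by bounded convergence (using $0\le Z_t\le 1$). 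The only slightly informal step is the claim that weak merging forces $Q \gg_L P$ (so that $Y_t>0$ and the conditionals of $Q$ are defined $P$-almost surely); this is really an implicit well-posedness assumption of \autoref{def:weak-merging}—if $Q(x_{<t})=0$ on a set of positive $P$-measure the merging distances are undefined there—so your parenthetical justification is acceptable, but it would be cleaner to state it as such rather than as a consequence of the distances ``not vanishing.''
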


Unfortunately, weak dominance is not enough to guarantee weak merging.

\begin{example}[{Weak Dominance $\nRightarrow$ Weak Merging;
\citealp[Prop.~7]{RH:2007}}]
\label{ex:weak-dominance-does-not-imply-weak-merging}
\index{dominance!weak}
\index{merging!weak}
Let $\X = \{ 0, 1 \}$ and
let $f$ be any arbitrarily slowly monotone growing function
with $f(t) \to \infty$.
Define $P(1^\infty) := 1$,
the sequence $(t_i)_{i \in \mathbb{N}}$ such that $f(t_{i+1}) \geq 2f(t_i)$, and
\[
Q(x_t \mid x_{<t}) :=
\begin{cases}
\tfrac{1}{2} &\text{if } t = t_i \text{ for some $i \in \mathbb{N}$}, \\
1            &\text{if } t \neq t_i \text{ and } x_t = 1, \text{ and} \\
0            &\text{otherwise}.
\end{cases}
\]
Now $Q$ dominates $P$ with coefficients $f$ by construction and
$Q$ weakly dominates $P$ if $f$ grows subexponentially.
However, $|Q(1 \mid 1^t) - P(1 \mid 1^t)| \geq 1/2$
for infinitely many $t \in \mathbb{N}$.
\end{example}

\subsection{Almost Weak Merging}
\label{ssec:almost-weak-meging}\index{merging!almost weak}

The following definition is due to \citet[Def.~10]{LS:1996}.

\begin{definition}[Almost Weak Merging]
\label{def:almost-weak-merging}\index{merging!almost weak|textbf}
$Q$ \emph{almost weakly merges with $P$} iff
for every $d \in \mathbb{N}$
\[
\frac{1}{t} \sum_{k=1}^t D_{t+d}(Q, P \mid x_{<t}) \to 0
\text{ as $t \to \infty$ $P$-almost surely}.
\]
\end{definition}

There is also an analogue of \autoref{lem:weak-merging} for
almost weakly merging in the sense that
we can equivalently set $d = 0$~\citep[Rem.~6]{LS:1996}.

\begin{remark}[Weak Merging and Merging in KL-Divergence]
\label{rem:weak-merging-and-merging-in-KL-divergence}
\index{merging!weak}\index{KL-divergence}
From \autoref{lem:Pinsker} follows that
weak merging is implied by $\KL_d(P, Q \mid x) \to 0$ $P$-almost surely and
almost weak merging is implied by
$\sum_{k=1}^t \KL_1(P, Q \mid x_{<k}) \in o(t)$
$P$-almost surely, i.e., $\KL_t(P, Q) \in o(t)$~\citep[Lem.~1]{RH:2008}.
The converse is generally false.
\end{remark}

The following proposition relates the three notions of merging.

\begin{proposition}[Strong Merging $\Rightarrow$ Weak Merging $\Rightarrow$ Almost Weak Merging]
\label{prop:merging}
\index{merging!strong}\index{merging!weak}\index{merging!almost weak}
If $Q$ merges strongly with $P$, then $Q$ merges weakly  with $P$.
If $Q$ merges weakly with $P$, then $Q$ merges almost weakly with $P$.
\end{proposition}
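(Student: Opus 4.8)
The plan is to prove the two implications in \autoref{prop:merging} separately; both are short. For \emph{strong merging $\Rightarrow$ weak merging}, the key observation is that the finite-horizon total variation distances are monotone in the horizon and dominated by the $\infty$-horizon one: for $1 \le m \le m' \le \infty$ and any $x$ we have $D_m(P,Q \mid x) \le D_{m'}(P,Q \mid x)$. This is because a test set $A \subseteq \X^m$ may be identified with the $\Foo$-measurable event $\bigcup_{y \in A} \Gamma_y$, and since $\bigcup_{y \in A}\Gamma_y = \bigcup_{z \in A\cdot\X}\Gamma_z$ we have the inclusions $\{\bigcup_{y\in A}\Gamma_y : A\subseteq\X^m\} \subseteq \{\bigcup_{y\in A}\Gamma_y : A\subseteq\X^{m+1}\} \subseteq \Foo$; hence the supremum defining $D_{m'}$ (and, for $m'=\infty$, the supremum over all of $\Foo$) ranges over a collection containing the one defining $D_m$. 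This uses only that $P$ and $Q$ are probability measures. Combining this with the symmetry $D_m(P,Q\mid x) = D_m(Q,P\mid x)$, for every fixed $d \in \mathbb{N}$ we get
\[
D_{t+d}(Q,P \mid x_{<t}) ~=~ D_{t+d}(P,Q \mid x_{<t}) ~\le~ D_\infty(P,Q \mid x_{<t}) ~\to~ 0 \text{ as } t \to \infty, \quad P\text{-a.s.},
\]
the convergence being the definition of strong merging (\autoref{def:strong-merging}). As $d$ is arbitrary, $Q$ merges weakly with $P$. (Alternatively, one may invoke \autoref{lem:weak-merging} and only compare $D_t(Q,P\mid x_{<t}) \le D_\infty(P,Q\mid x_{<t})$.)

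For \emph{weak merging $\Rightarrow$ almost weak merging}, fix $d \in \mathbb{N}$ and put $a_k := D_{k+d}(Q,P\mid x_{<k})$, a $[0,1]$-valued random variable for each $k$. By \autoref{def:weak-merging}, $a_k \to 0$ as $k\to\infty$ on an event of $P$-probability $1$. On that event the elementary fact that the Cesàro averages of a convergent real sequence converge to the same limit yields $\tfrac1t\sum_{k=1}^t a_k \to 0$, i.e.\ $\tfrac1t\sum_{k=1}^t D_{k+d}(Q,P\mid x_{<k}) \to 0$ $P$-almost surely, which is exactly the defining condition for almost weak merging (\autoref{def:almost-weak-merging}). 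One could instead first reduce to the case $d=0$ using \autoref{lem:weak-merging} and the $d=0$ reformulation of almost weak merging noted after \autoref{def:almost-weak-merging}, then apply the same Cesàro argument.

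I do not expect any real obstacle here. The only points that need a line of justification are the identification of a finite-horizon test set $A\subseteq\X^m$ with an $\Foo$-measurable event (so that the monotonicity $D_m \le D_\infty$ and the comparison with strong merging are legitimate), and the remark that the Cesàro-limit lemma is applied pointwise on the almost-sure event, which is unproblematic since it is purely a statement about ordinary real sequences. Boundedness of every $D_m$ in $[0,1]$ removes any concern about well-definedness or the existence of the relevant limits.
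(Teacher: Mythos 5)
Your proof is correct, and it is exactly the argument the paper has in mind: the paper's own proof is just ``Follows immediately from the definitions,'' and your monotonicity bound $D_{t+d} \le D_\infty$ together with the Cesàro-average fact is the standard filling-in of that one-liner. No gaps; nothing further needed.
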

\begin{proof}
Follows immediately from the definitions.
\end{proof}

\begin{theorem}[{Weak Dominance $\Rightarrow$ Almost Weak Merging;
\citealp[Thm.~4]{LS:1996}}]
\label{thm:almost-weak-merging}
\index{dominance!weak}\index{merging!almost weak}
If $Q$ weakly dominates $P$,
then $Q$ merges almost weakly with $P$.
\end{theorem}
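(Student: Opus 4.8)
The plan is to pass from total-variation merging to a statement about the log-likelihood-ratio process and to use weak dominance to kill its drift. By the analogue of \autoref{lem:weak-merging} noted after \autoref{def:almost-weak-merging}~\citep[Rem.~6]{LS:1996} it suffices to treat $d = 0$, i.e.\ to show $\tfrac1t\sum_{k=1}^t D_k(Q, P \mid x_{<k}) \to 0$ $P$-almost surely; equivalently, splitting the Cesàro average at a threshold $\eps$, it is enough that for every $\eps > 0$ the fraction of ``$\eps$-bad'' steps vanishes, $\tfrac1t \#\{ k \le t : D_k(Q, P \mid x_{<k}) > \eps \} \to 0$ $P$-a.s.

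By Pinsker's inequality (\autoref{lem:Pinsker}) an $\eps$-bad step has $\KL_1(P, Q \mid x_{<k}) > 2\eps^2$, so both reductions follow once we prove $S_t := \sum_{k=1}^t \KL_1(P, Q \mid x_{<k}) \in o(t)$ $P$-a.s.\ (indeed $\#\{k \le t : D_k > \eps\} \le S_t/(2\eps^2)$, and $\tfrac1t\sum_{k\le t}D_k \le \sqrt{S_t/(2t)}$ by Jensen's inequality). Set $Z_k := \log\frac{P(x_k \mid x_{<k})}{Q(x_k \mid x_{<k})}$, so that $\EE_P[Z_k \mid \F_{k-1}] = \KL_1(P, Q \mid x_{<k})$ and $\sum_{k\le t} Z_k = \log\frac{P(x_{1:t})}{Q(x_{1:t})}$, the latter being $P$-a.s.\ finite because weak dominance implies local absolute continuity (\autoref{prop:compatibility}\ref{itm:weak-dominance=>local-absolute-continuity}). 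The Doob decomposition gives $\sum_{k\le t} Z_k = S_t + M_t$ with $M_t := \sum_{k\le t}\bigl(Z_k - \EE_P[Z_k\mid\F_{k-1}]\bigr)$ a $P$-martingale. Weak dominance (\autoref{def:weak-dominance}) says exactly $\tfrac1t\sum_{k\le t}Z_k \to 0$ $P$-a.s., hence $\tfrac1t S_t = -\tfrac1t M_t + o(1)$ $P$-a.s., and the whole claim reduces to showing the martingale average vanishes: $M_t/t \to 0$ $P$-a.s.

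This last step is the main obstacle, since the increments $Z_k - \EE_P[Z_k\mid\F_{k-1}]$ are unbounded and can have unbounded conditional variance. Writing $Z_k = Z_k^+ - Z_k^-$ and $M_t^{\pm} := \sum_{k\le t}\bigl(Z_k^{\pm} - \EE_P[Z_k^{\pm}\mid\F_{k-1}]\bigr)$, so $M_t = M_t^+ - M_t^-$, the negative part is manageable: the elementary bound $(\log x)^p \le p^p x$ for $x \ge 1$ applied to the one-step likelihood ratios, together with $\sum_a Q(a\mid x_{<k}) \le 1$, gives $\EE_P[(Z_k^-)^p \mid \F_{k-1}] \le p^p$ for any $p \in (1,2]$; since $\sum_k k^{-p} < \infty$, the martingale strong law of large numbers yields $M_t^-/t \to 0$ $P$-a.s. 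The delicate case is the positive part $M_t^+$---the compensated log-loss accumulated on steps where $Q$ predicts much worse than $P$---whose conditional moments genuinely do blow up, so that $\limsup_t M_t^+/t \le 0$ is easy (it follows from $S_t \ge 0$ and $\tfrac1t\sum_{k\le t}Z_k \to 0$) but $\liminf_t M_t^+/t \ge 0$ is not. Here I expect to need Lehrer and Smorodinsky's argument~\citep[Thm.~4]{LS:1996}: a one-sided, stopping-time and Borel--Cantelli analysis of the nonnegative $P$-martingale $Y_t := Q(x_{1:t})/P(x_{1:t})$, exploiting that a positive density of $\eps$-bad steps forces a geometric-rate conditional decay of $Y_t$ which, transferred to an almost-sure statement by a maximal inequality along the level crossings of $\log Y_t$, contradicts weak dominance. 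Turning that into a clean argument is the crux of the proof.
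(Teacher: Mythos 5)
There is a genuine gap, and you name it yourself: the entire proof hinges on showing $\liminf_t M_t/t \geq 0$ (equivalently $S_t = \sum_{k\leq t}\KL_1(P,Q\mid x_{<k}) \in o(t)$ $P$-a.s.), and at that point you stop and say you ``expect to need Lehrer and Smorodinsky's argument.'' That step is not a routine martingale strong law, as your own discussion of the positive part shows: a zero-mean martingale whose increments have heavy-tailed positive parts can perfectly well satisfy $M_t/t \to -c < 0$ almost surely, so the conclusion must use the specific structure of log-likelihood-ratio increments (e.g.\ that $\EE_P[e^{-Z_k}\mid\F_{k-1}] \leq 1$), and the sketch you give of how to exploit it (``geometric-rate conditional decay of $Y_t$ \dots transferred by a maximal inequality along level crossings'') is a gesture, not an argument. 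Everything before that point is a correct but essentially standard reduction (the $d=0$ reduction, Pinsker via \autoref{lem:Pinsker}, the Doob decomposition, local absolute continuity via \autoref{prop:compatibility}\ref{itm:weak-dominance=>local-absolute-continuity}, and the treatment of $M_t^-$ via conditional $p$-th moment bounds and Chow's SLLN, which all check out); the theorem's actual content is precisely the part you leave open. Note also that your chosen intermediate target $S_t \in o(t)$ $P$-a.s.\ is a priori \emph{stronger} than almost weak merging (Pinsker only bounds total variation by KL, not conversely), so even granting the route, you would owe an argument that weak dominance yields this stronger statement rather than only the density-of-bad-steps claim you set up first.

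For calibration: the thesis does not prove \autoref{thm:almost-weak-merging} at all --- it is imported verbatim from \citet[Thm.~4]{LS:1996}, consistent with the chapter's stated aim of collecting merging results from the game-theory literature. So there is no in-paper argument your proposal could be matched against; judged as a standalone proof, it is an honest partial reconstruction that reduces the theorem to (a strengthening of) itself and defers the crux to the original source.
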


From \autoref{thm:almost-weak-merging} we get that
the speed prior~(\autoref{ex:speed-prior}) merges almost weakly
with any probability distribution estimable in polynomial time.

We also have the following converse to \autoref{thm:almost-weak-merging}.

\begin{theorem}[{Almost Weak Merging $\Rightarrow$ Weak Dominance;
\citealp[Cor.~7]{LS:1996}}]
\label{thm:almost-weakly-merging-converse}
\index{dominance!weak}\index{merging!almost weak}
If $Q$ is locally absolutely continuous with respect to $P$,
$Q$ merges almost weakly with $P$, and
$P$-almost surely $\liminf_{t \to \infty} Y_{t+1} / Y_t > 0$, then
$Q$ weakly dominates $P$.
\end{theorem}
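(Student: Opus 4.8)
The plan is to prove directly that $\tfrac1t\log Y_t\to 0$ $P$-almost surely, where $Y_t := Q(x_{1:t})/P(x_{1:t})$, since by \autoref{def:weak-dominance} this is exactly what weak dominance asserts. By \autoref{thm:measure-martingale}, local absolute continuity makes $(Y_t)_{t}$ a nonnegative $P$-martingale with $\EE[Y_t]=1$ and $Y_0=1$, and by \hyperref[itm:martingale-local-absolute-continuity]{\autoref*{prop:martingales-and-compatibility}\ref*{itm:martingale-local-absolute-continuity}} we have $Y_t>0$ for all $t$ $P$-almost surely, so the increments $a_k:=\log(Y_k/Y_{k-1})=\log\frac{Q(x_k\mid x_{<k})}{P(x_k\mid x_{<k})}$ are $P$-almost surely finite and $\log Y_t=\sum_{k=1}^t a_k$. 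The remark of \citet[Rem.~8]{LS:1996} quoted after \autoref{def:weak-dominance} already gives $\limsup_t\tfrac1t\log Y_t\le 0$ $P$-almost surely, so it remains to show $\liminf_t\tfrac1t\sum_{k=1}^t a_k\ge 0$. Splitting $a_k=a_k^+-a_k^-$ and using $a_k^+\ge 0$, this reduces to the single claim $\tfrac1t\sum_{k=1}^t a_k^-\to 0$ $P$-almost surely; intuitively, on average $Q$ must not badly underpredict the symbol that actually occurs.

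The $\liminf$ hypothesis on $Y_{t+1}/Y_t$ bounds $a_k^-$, but only with a \emph{random} bound, so I would localize. For integers $m\ge 2$ and $n\ge 1$ set $E_{m,n}:=\{x\mid Y_{k}(x)/Y_{k-1}(x)\ge 1/m\text{ for all }k>n\}$; by assumption $P\big(\bigcup_{m,n}E_{m,n}\big)=1$, so it suffices to prove $\tfrac1t\sum_{k\le t}a_k^-\to 0$ $P$-a.s.\ on each $E_{m,n}$. Define $\tilde b_k:=\min(a_k^-,\log m)$ for $k>n$ and $\tilde b_k:=0$ otherwise; this is $\F_k$-measurable, bounded by $\log m$, and on $E_{m,n}$ it agrees with $a_k^-$ for every $k>n$, since there $P(x_k\mid x_{<k})/Q(x_k\mid x_{<k})\le m$. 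Using $\log z\le z-1$ together with a case split on whether $Q(a\mid x_{<k})<P(a\mid x_{<k})/m$, one obtains a reverse-Pinsker-type estimate $\EE_P[\tilde b_k\mid\F_{k-1}]\le C_m\,D_k(Q,P\mid x_{<k})$, where $D_k(Q,P\mid x_{<k})=\tfrac12\sum_{a}|Q(a\mid x_{<k})-P(a\mid x_{<k})|$ is the one-step total variation distance and $C_m$ is a constant depending only on $m$ (one checks $C_m=2m+4\log m$ works). Almost weak merging, taken in the equivalent $d=0$ form from the remark after \autoref{def:almost-weak-merging}, then gives $\tfrac1t\sum_{k=1}^t\EE_P[\tilde b_k\mid\F_{k-1}]\le C_m\cdot\tfrac1t\sum_{k=1}^t D_k(Q,P\mid x_{<k})\to 0$ $P$-almost surely.

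It remains to replace the conditional expectations by the $\tilde b_k$ themselves. Since $\big(\tilde b_k-\EE_P[\tilde b_k\mid\F_{k-1}]\big)_k$ is a bounded martingale-difference sequence for the filtration $(\F_k)$, the series $\sum_k k^{-1}\big(\tilde b_k-\EE_P[\tilde b_k\mid\F_{k-1}]\big)$ has summable conditional second moments and hence converges $P$-almost surely (Kolmogorov's criterion); Kronecker's lemma converts this into $\tfrac1t\sum_{k=1}^t\big(\tilde b_k-\EE_P[\tilde b_k\mid\F_{k-1}]\big)\to 0$ $P$-a.s. Combining with the previous paragraph yields $\tfrac1t\sum_{k\le t}\tilde b_k\to 0$ $P$-a.s., and on $E_{m,n}$ the difference $\tfrac1t\sum_{k\le t}a_k^--\tfrac1t\sum_{k\le t}\tilde b_k=\tfrac1t\sum_{k\le n}a_k^-$ tends to $0$ because it is a fixed ($P$-a.s.\ finite) sum divided by $t$. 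Taking the union over $m$ and $n$ gives $\tfrac1t\sum_{k\le t}a_k^-\to 0$ $P$-almost surely, so $\liminf_t\tfrac1t\log Y_t\ge\liminf_t\big(-\tfrac1t\sum_{k\le t}a_k^-\big)=0$, which together with the $\limsup$ bound proves $\tfrac1t\log Y_t\to 0$, i.e.\ $Q\timesgeq_W P$.

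The routine parts are the martingale strong law (Kolmogorov/Kronecker), which is standard once $\tilde b_k$ has been truncated to be bounded, and the final assembly. The main obstacle I anticipate is twofold and lives in the second paragraph: handling the random constant supplied by the $\liminf$ hypothesis, which forces the localization onto the events $E_{m,n}$ and the truncation $\tilde b_k$; and making the reverse-Pinsker estimate $\EE_P[\tilde b_k\mid\F_{k-1}]\le C_m D_k$ hold even though, off $E_{m,n}$, there may be symbols $a$ on which $Q$ underpredicts severely ($Q(a\mid x_{<k})<P(a\mid x_{<k})/m$) — these are exactly the symbols absorbed by the $\log m$ truncation and by the bound $\sum_{a:\,q_a<p_a/m}p_a\le \tfrac{2}{1-1/m}D_k$ in the case split.
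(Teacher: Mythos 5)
Your argument checks out, but there is nothing in the thesis to compare it against: the statement is imported verbatim from \citet[Cor.~7]{LS:1996} and stated without proof, so the "paper's own proof" is just a citation. Taken as a self-contained proof, your route is sound. The reduction to showing $\tfrac1t\sum_{k\le t}a_k^-\to 0$ is valid (the $\limsup$ side is exactly the quoted Remark~8 of LS:1996), local absolute continuity does give $Y_t>0$ and finite $P$-a.s.\ so the increments $a_k$ are well defined, and the $\liminf$ hypothesis is correctly converted into the countable family $E_{m,n}$ with $P(\bigcup_{m,n}E_{m,n})=1$. The truncation $\tilde b_k=\min(a_k^-,\log m)$ is $\F_k$-measurable and agrees with $a_k^-$ on $E_{m,n}$ for $k>n$, and your reverse-Pinsker estimate is correct: splitting on $q_a\geq p_a/m$ versus $q_a<p_a/m$ and using $\log z\le z-1$ gives $\EE_P[\tilde b_k\mid\F_{k-1}]\le (m+2\log m)\,D_k(Q,P\mid x_{<k})$, so your (more generous) $C_m$ certainly works, and the case $q_a=0<p_a$ is indeed absorbed by the truncation. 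Almost weak merging in its $d=0$ form then kills the Cesàro average of the conditional expectations, and the Kolmogorov--Kronecker martingale strong law (applicable because $0\le\tilde b_k\le\log m$) transfers this to the $\tilde b_k$ themselves; the finitely many untruncated terms $k\le n$ vanish under division by $t$. Compared with simply invoking Lehrer--Smorodinsky, what your proof buys is an explicit, elementary argument entirely in terms of the one-step predictive distributions and the martingale $Y_t$, consistent with the thesis's own framework (\autoref{prop:martingales-and-compatibility}, \autoref{def:almost-weak-merging}); the only cosmetic caveat is to fix a logarithm base consistently, which affects nothing but constants.
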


\section{Predicting}
\label{sec:predicting}

In \autoref{sec:merging} we wanted $Q$ to acquire the correct beliefs about $P$.
In this section,
we exploit the accuracy of our beliefs for predicting individual symbols.
We derive bounds on
the number of errors $Q$ makes when trying to predict a string drawn from $P$.

Since the data drawn from $P$ is stochastic,
we cannot expect to make a finite number of errors.
Even the perfect predictor that knows $P$
generally makes an infinite number of errors.
For example, trying to predict the Lebesgue measure $\lambda$%
~(\autoref{ex:Lebesgue-measure}),
in expectation we make half an error in every time step.
So instead we are asking about the asymptotic error rate of
a predictor based on $Q$ compared to a predictor based on $P$,
the \emph{prediction regret}\index{prediction!regret}.

Let $x_t^R$ be the $t$-th symbol predicted by the probability measure $R$
according to the maximum likelihood estimator:
\begin{equation}\label{eq:ML-prediction}
x_t^R :\in \argmax_{a \in \X} R(x_{<t}a \mid x_{<t}).
\end{equation}

The \emph{instantaneous error}\index{error!instantaneous|textbf}
of a $R$-based predictor is defined as
\[
e_t^R :=
\begin{cases}
0 &\text{if } x_t = x_t^R, \text{ and} \\
1 &\text{otherwise}.
\end{cases}
\]
and the \emph{cumulative error}\index{error!cumulative|textbf} is
\[
E_t^R := \sum_{k=1}^t e_k^R.
\]
Note that both $e_t$ and $E_t$ are random variables.

\begin{definition}[Prediction Regret]
\label{def:prediction-regret}\index{prediction!regret|textbf}
\index{prediction!expected regret|textbf}
In time step $t$
the \emph{prediction regret} is $E_t^Q - E_t^P$ and
the \emph{expected prediction regret} is $\EE \left[ E_t^Q - E_t^P \right]$.
\end{definition}

\index{prediction!loss}
More generally, we could also follow \citet{Hutter:2001error}
and phrase predictive performance in terms of \emph{loss}:
given a loss function $\ell: \X \times \X \to \mathbb{R}$
the predictor $Q$ suffers an (instantaneous) loss
of $\ell(x_t^Q, x_t^P)$ in time step $t$.
If the loss function $\ell$ is bounded in $[0, 1]$,
many of the results for prediction regret also hold for
cumulative loss
(for \autoref{ssec:prediction-absolute-continuity}
we also need $\ell(a, a) = 0$ for all $a \in \X$).
In this chapter
we chose to phrase the results in terms of prediction errors instead of loss
because prediction errors are conceptionally simpler.

\begin{example}[Good Prediction Regret $\nRightarrow$ Merging/Compatibility]
\label{ex:prediction-no-merging}
\index{prediction!regret}\index{merging}
Good prediction regret does not imply (weak/strong) merging or (weak) dominance:
Let $P := \Bernoulli(1/3)$ and $Q := \Bernoulli(1/4)$.
Clearly $P$ and $Q$ do not merge (weakly) or (weakly) dominate each other.
However, a $P$-based predictor always predicts $0$,
and so does a $Q$-based predictor.
Therefore the prediction regret $E_t^Q - E_t^P$ is always $0$.
\end{example}

\begin{example}[{Adversarial Sequence; \citealp[Lem.~4]{Legg:2006}}]
\label{ex:adversarial-sequence}
\index{adversarial!sequence}
No learning distribution $Q$ will learn to predict everything.
We can always define a \emph{$Q$-adversarial sequence} $z_{1:\infty}$
recursively according to
\[
z_t :=
\begin{cases}
0 &\text{if } Q(0 \mid z_{<t}) < 1/2, \text{ and} \\
1 &\text{if } Q(0 \mid z_{<t}) \geq 1/2. \\
\end{cases}
\]
In every time step the probability that a $Q$-based predictor makes an error
is at least $1/2$, hence $e_t^Q \geq 1/2$ and $E_t^Q \geq t/2$.
But $z_{1:\infty}$ is a deterministic sequence,
thus an informed predictor makes zero errors.
Therefore the prediction regret of $Q$
on the sequence $z_{1:\infty}$ is linear.
\end{example}

\subsection{Dominance}
\label{ssec:prediction-dominance}

We start with the prediction regret bounds proved by \citet{Hutter:2001error}
in case the learning distribution $Q$ dominates the true distribution $P$.
In the following, let $c_P$ denote the constant from \autoref{def:dominance}.

\begin{theorem}[{\citealp[Eq.~5 \& 8]{Hutter:2007universal}}]
\label{thm:error-bound}
\index{KL-divergence}\index{error!cumulative}
For all $P$ and $Q$,
\[
\sqrt{\EE_P E^Q_n} - \sqrt{\EE_P E^P_n} \leq \sqrt{2\KL_n(P, Q)}.
\]
\end{theorem}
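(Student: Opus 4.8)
The plan is to reduce the bound to a one-step analysis and then reassemble the per-step estimates by Cauchy--Schwarz and Minkowski's inequality. First I would condition on the history. Write $a_t := \EE_P[e_t^P \mid x_{<t}] = 1 - \max_{a \in \X} P(a \mid x_{<t})$ for the one-step error probability of the informed predictor and $b_t := \EE_P[e_t^Q \mid x_{<t}] = 1 - P(x_t^Q \mid x_{<t})$ for that of the $Q$-based predictor from \eqref{eq:ML-prediction}; then $\EE_P E_n^P = \sum_{t=1}^n \EE_P a_t$ and $\EE_P E_n^Q = \sum_{t=1}^n \EE_P b_t$. By the chain rule for KL-divergence (a direct computation from \autoref{def:KL-divergence}), $\KL_n(P, Q) = \sum_{t=1}^n \EE_P d_t$ where $d_t := \KL_1(P, Q \mid x_{<t})$ is the one-step relative entropy of the predictive distributions. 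A minor point to dispatch: the $\argmax$ in \eqref{eq:ML-prediction} may be non-unique, but $b_t$ does not depend on the tie-breaking rule.

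The combinatorial heart of the argument is a per-step inequality controlling the nonnegative one-step excess error $b_t - a_t$ by $d_t$ and $a_t$, of the form
\[
b_t - a_t ~\leq~ 2 d_t + 2\sqrt{a_t\, d_t}
\qquad\text{$P$-almost surely}
\]
(the precise constants are unimportant for what follows). To prove it, write $p_a := P(a \mid x_{<t})$, $q_a := Q(a \mid x_{<t})$, $i^\ast \in \argmax_a p_a$, and $m := x_t^Q \in \argmax_a q_a$. If $m = i^\ast$ then $b_t = a_t$; otherwise $q_m \geq q_{i^\ast}$, so
\[
b_t - a_t ~=~ p_{i^\ast} - p_m ~\leq~ (p_{i^\ast} - q_{i^\ast}) + (q_m - p_m) ~\leq~ \sum_{a \in \X} |p_a - q_a|,
\]
i.e.\ the excess error is at most twice the total variation distance of the one-step predictive distributions; applying Pinsker's inequality (\autoref{lem:Pinsker}) already yields the crude bound $b_t - a_t \leq \sqrt{2 d_t}$. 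The refinement keeping the multiplicative factor $\sqrt{a_t}$ --- which is what makes the final estimate small when the informed predictor itself errs rarely --- comes from writing $|p_a - q_a| = |\sqrt{p_a} - \sqrt{q_a}|\,(\sqrt{p_a} + \sqrt{q_a})$, bounding $\sum_a(\sqrt{p_a} - \sqrt{q_a})^2$ by (a constant times) $d_t$, and using $p_{i^\ast} = 1 - a_t$ together with $p_m \leq 1 - p_{i^\ast} = a_t$ to control the remaining factor by Cauchy--Schwarz.

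Given the per-step inequality, I would take $\EE_P$ and apply Cauchy--Schwarz on the sample space, $\EE_P[\sqrt{a_t\, d_t}] \leq \sqrt{\EE_P a_t}\,\sqrt{\EE_P d_t}$, to get
\[
\EE_P b_t ~\leq~ \EE_P a_t + 2\sqrt{\EE_P a_t}\,\sqrt{\EE_P d_t} + 2\,\EE_P d_t ~\leq~ \big(\sqrt{\EE_P a_t} + \sqrt{2\,\EE_P d_t}\big)^2,
\]
hence $\sqrt{\EE_P b_t} \leq \sqrt{\EE_P a_t} + \sqrt{2\,\EE_P d_t}$ for every $t$. Minkowski's inequality in $\ell^2$, applied to the nonnegative sequences $(\sqrt{\EE_P a_t})_{t \leq n}$ and $(\sqrt{2\,\EE_P d_t})_{t \leq n}$, then gives
\[
\sqrt{\EE_P E_n^Q} ~=~ \sqrt{\sum_{t=1}^n \EE_P b_t} ~\leq~ \sqrt{\sum_{t=1}^n \EE_P a_t} + \sqrt{\sum_{t=1}^n 2\,\EE_P d_t} ~=~ \sqrt{\EE_P E_n^P} + \sqrt{2\,\KL_n(P, Q)},
\]
which is the claim.

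The step I expect to be the main obstacle is the per-step inequality, precisely because of the $\sqrt{a_t}$ factor: the crude bound $b_t - a_t \leq \sqrt{2 d_t}$ is too weak, since summing $\sum_t \sqrt{d_t}$ via Cauchy--Schwarz introduces a spurious $\sqrt{n}$, whereas we need an aggregate that scales like $\sqrt{\KL_n \cdot \EE_P E_n^P}$. Everything else --- conditioning, the KL chain rule, two applications of Cauchy--Schwarz/Minkowski, and completing the square --- is routine.
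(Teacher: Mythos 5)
Your high-level architecture is sound, and the only place where the proposal falls short of a proof is exactly the step you flagged as the crux: the refined per-step inequality. The assembly is fine as written — the KL chain rule, $\EE_P[\sqrt{a_t d_t}] \leq \sqrt{\EE_P a_t\,\EE_P d_t}$, completing the square, and Minkowski all go through — and it even has slack in the constant, since $(\sqrt{\EE_P a_t} + \sqrt{2\EE_P d_t})^2 = \EE_P a_t + 2\sqrt{2\,\EE_P a_t\,\EE_P d_t} + 2\,\EE_P d_t$, so any per-step bound $b_t - a_t \leq 2 d_t + c\sqrt{a_t d_t}$ with $c \leq 2\sqrt{2}$ suffices. (The paper itself only cites Hutter for this theorem; his argument also runs through per-step inequalities of this type, so your route is the intended one.) The gap is in your sketched derivation of that per-step bound. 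After $b_t - a_t \leq |p_{i^\ast} - q_{i^\ast}| + |q_m - p_m|$, factoring $|p_a - q_a| = |\sqrt{p_a}-\sqrt{q_a}|(\sqrt{p_a}+\sqrt{q_a})$ and applying Cauchy--Schwarz bounds the first factor by the Hellinger distance, hence by $\sqrt{d_t}$; but the second factor contains $\sqrt{p_{i^\ast}}+\sqrt{q_{i^\ast}} \geq \sqrt{1-a_t}$, which is of order one precisely in the interesting regime of small $a_t$ (and $\sqrt{q_m}$ is likewise not small). So this decomposition only recovers $b_t - a_t \leq 2\sqrt{d_t}$ --- the crude bound you correctly identified as useless after summation --- and the appeal to $p_m \leq a_t$ cannot rescue it: the $\sqrt{a_t}$ factor is simply not produced by multiplying, it has to come from dividing.

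The missing idea is cheap. If $x_t^Q \neq x_t^P$, the two symbols are distinct, so $p_m + p_{i^\ast} \leq 1$ and hence $a_t + b_t \geq 1$; therefore, $P$-almost surely,
\[
\sqrt{b_t} - \sqrt{a_t}
~=~ \frac{b_t - a_t}{\sqrt{a_t}+\sqrt{b_t}}
~\leq~ \frac{b_t - a_t}{\sqrt{a_t+b_t}}
~\leq~ b_t - a_t
~\leq~ \sum_{a \in \X} |p_a - q_a|
~\leq~ \sqrt{2 d_t}
\]
by Pinsker's inequality (\autoref{lem:Pinsker}), while for $x_t^Q = x_t^P$ the left side is zero. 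Squaring gives $b_t \leq a_t + 2\sqrt{2 a_t d_t} + 2 d_t$, which is the per-step inequality with $c = 2\sqrt{2}$ and plugs into your expectation/Minkowski assembly verbatim, yielding the theorem. Two minor remarks: the argument only uses $q_m \geq q_{i^\ast}$, so it holds for any maximizer in \eqref{eq:ML-prediction} --- your claim that $b_t$ is independent of the tie-breaking is actually false when $Q$ has ties, but it is never needed; and your stated cross-term constant $2\sqrt{a_t d_t}$ is not required (and would need a separate, more delicate argument), since $2\sqrt{2 a_t d_t}$ is enough.
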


The following bound on prediction regret then follows easily,
but it is a factor of $\sqrt{2}$ worse than
the bound stated by \citet[Thm.~3.36]{Hutter:2005}.

\begin{corollary}[Expected Prediction Regret]
\label{cor:expected-prediction-regret}
\index{KL-divergence}\index{prediction!expected regret}
For all $P$ and $Q$,
\begin{align*}
      0
 \leq \EE_P \Big[ E^Q_n - E^P_n \Big]
&\leq 2\KL_n(P, Q) + 2 \sqrt{2\KL_n(P, Q) \EE_P E^P_n}.
\end{align*}
\end{corollary}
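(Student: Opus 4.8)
The plan is to derive this directly from \autoref{thm:error-bound} together with the elementary observation that the maximum-likelihood predictor based on the true distribution $P$ is error-optimal.

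First I would establish the lower bound $\EE_P[E^Q_n - E^P_n] \geq 0$. The key point is that the predictor from \eqref{eq:ML-prediction} applied to $P$ minimizes the expected instantaneous error at every step: conditioned on a history $x_{<t}$, the probability that a predictor guessing $a \in \X$ errs is $1 - P(x_{<t}a \mid x_{<t})$, which is minimized by choosing $a \in \argmax_{a} P(x_{<t}a \mid x_{<t})$, i.e.\ exactly $x_t^P$. Hence $\EE_P[e_t^P \mid x_{<t}] \leq \EE_P[e_t^Q \mid x_{<t}]$ for every $t$, and summing over $t \leq n$ and taking expectations gives $\EE_P E^P_n \leq \EE_P E^Q_n$.

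For the upper bound I would abbreviate $a := \sqrt{\EE_P E^P_n}$ and $b := \sqrt{\EE_P E^Q_n}$, so that \autoref{thm:error-bound} reads $b - a \leq \sqrt{2\KL_n(P,Q)}$, equivalently $b \leq a + \sqrt{2\KL_n(P,Q)}$. Squaring (both sides are nonnegative) and subtracting $a^2$,
\[
\EE_P\Big[E^Q_n - E^P_n\Big] = b^2 - a^2 \leq \big(a + \sqrt{2\KL_n(P,Q)}\big)^2 - a^2 = 2\KL_n(P,Q) + 2a\sqrt{2\KL_n(P,Q)},
\]
and since $2a\sqrt{2\KL_n(P,Q)} = 2\sqrt{2\KL_n(P,Q)\,\EE_P E^P_n}$, this is exactly the claimed bound.

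There is no real obstacle here: the corollary is a routine consequence of the quoted theorem. The only point requiring a (short) argument rather than pure algebra is the nonnegativity of the regret, which rests on the optimality of the $P$-based maximum-likelihood predictor; everything else is the inequality $b^2 - a^2 \leq (a + \Delta)^2 - a^2$ for $b \leq a + \Delta$ with $a, \Delta \geq 0$.
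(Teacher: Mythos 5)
Your proof is correct and follows essentially the same route as the paper: both derive the upper bound from \autoref{thm:error-bound} by elementary algebra (your squaring of $b \leq a + \sqrt{2\KL_n(P,Q)}$ is equivalent to the paper's factoring of $b^2 - a^2 = (b+a)(b-a)$). The only difference is that you also spell out the argument for the lower bound $0 \leq \EE_P[E^Q_n - E^P_n]$ via optimality of the $P$-based maximum-likelihood predictor, which the paper states but leaves implicit; that addition is correct.
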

\begin{proof}
From \autoref{thm:error-bound} we get
\begin{align*}
      \EE_P \Big[ E_n^Q - E_n^P \Big]
&=    \left(\sqrt{\EE_P E_n^Q} + \sqrt{\EE_P E_n^P} \right)
      \left( \sqrt{\EE_P E_n^Q} - \sqrt{\EE_P E_n^P} \right) \\
&\leq \left( \sqrt{\EE_P E_n^Q} + \sqrt{\EE_P E_n^P} \right) \sqrt{2\KL_n(P, Q)} \\
&\leq \left( \sqrt{2\KL_n(P, Q)} + \sqrt{\EE_P E_n^P} + \sqrt{\EE_P E_t^P} \right) \sqrt{2\KL_n(P, Q)} \\
&=    2\KL_n(P, Q) + 2\sqrt{2\KL_n(P, Q) \EE_P E_n^P}.
\qedhere
\end{align*}
\end{proof}

If $Q$ dominates $P$,
then we have $\KL_n(P, Q) \leq -\ln c_P$:
\begin{equation}\label{eq:KL-bound}
     \KL_n(P, Q)
=    \sum_{x \in \X^n} P(x) \log \frac{P(x)}{Q(x)} \\
\leq \sum_{x \in \X^n} P(x) \log \frac{1}{c_P} \\
=    -\log c_P
\end{equation}
This invites the following corollary.

\begin{corollary}[{Prediction Regret for Dominance;
\citealp[Cor.~3.49]{Hutter:2005}}]
\label{cor:expected-prediction-regret-dominance}
\index{dominance}\index{prediction!expected regret}
If $Q$ dominates $P$,
then the following statements hold.
\begin{enumerate}[(a)]
\item $\EE_P E_\infty^Q$ is finite if and only if
	$\EE_P E_\infty^P$ is finite.
\item $\sqrt{\EE_P E_\infty^Q} - \sqrt{\EE_P E_\infty^P} \in O(1)$
\item $\EE_P E_t^Q / \EE_P E_t^P \to 1$
	for $\EE_P E_t^P \to \infty$.
\item $\EE_P \left[ E_t^Q - E_t^P \right]
	\in O \left(\sqrt{\EE_P E_t^P} \right)$.
\end{enumerate}
\end{corollary}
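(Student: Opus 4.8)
The plan is to read off all four claims from \autoref{thm:error-bound}, \autoref{cor:expected-prediction-regret}, and the uniform bound \eqref{eq:KL-bound}, all of which I may assume. Write $B := -\log c_P$; evaluating $Q(\epsilon) \ge c_P P(\epsilon)$ shows $c_P \le 1$, so $B \ge 0$, and \eqref{eq:KL-bound} gives $\KL_n(P, Q) \le B$ for every $n \in \mathbb{N}$. The one analytic point to watch is that the cumulative-error processes $(E^Q_n)_n$ and $(E^P_n)_n$ are nondecreasing in $n$, so by monotone convergence $\EE_P E^R_\infty = \lim_{n \to \infty} \EE_P E^R_n$ for $R \in \{ P, Q \}$ (allowing the value $+\infty$); this is what licenses passing to $n = \infty$ in the inequalities below.

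\emph{Parts (a) and (b).} First I would feed the uniform KL-bound into \autoref{thm:error-bound} and combine it with the lower bound from \autoref{cor:expected-prediction-regret} to obtain, for every $n \in \mathbb{N}$,
\[
0 ~\leq~ \sqrt{\EE_P E^Q_n} - \sqrt{\EE_P E^P_n} ~\leq~ \sqrt{2\KL_n(P, Q)} ~\leq~ \sqrt{2B}.
\]
This is already (b) in a strong form uniform in $n$; letting $n \to \infty$ gives $\sqrt{\EE_P E^Q_\infty} \leq \sqrt{\EE_P E^P_\infty} + \sqrt{2B}$. For (a): if $\EE_P E^P_\infty < \infty$, this last inequality bounds $\EE_P E^Q_\infty$; conversely $\EE_P E^P_n \leq \EE_P E^Q_n \leq \EE_P E^Q_\infty$ for all $n$, so $\EE_P E^Q_\infty < \infty$ forces $\EE_P E^P_\infty < \infty$.

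\emph{Parts (c) and (d).} For (c), assume $\EE_P E^P_t \to \infty$ (else the claim is vacuous); dividing the inequality $0 \le \sqrt{\EE_P E^Q_t} - \sqrt{\EE_P E^P_t} \le \sqrt{2B}$ by $\sqrt{\EE_P E^P_t} > 0$ gives $1 \le \sqrt{\EE_P E^Q_t / \EE_P E^P_t} \le 1 + \sqrt{2B}/\sqrt{\EE_P E^P_t} \to 1$, and squaring yields $\EE_P E^Q_t / \EE_P E^P_t \to 1$. For (d), substituting $\KL_t(P, Q) \le B$ into \autoref{cor:expected-prediction-regret} gives
\[
0 ~\leq~ \EE_P\big[ E^Q_t - E^P_t \big] ~\leq~ 2B + 2\sqrt{2B\, \EE_P E^P_t},
\]
so the regret is $O\big(\sqrt{\EE_P E^P_t}\big)$, the additive constant $2B$ being absorbed into the square-root term.

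I do not expect a genuine obstacle: the hard estimates are precisely \autoref{thm:error-bound} and \autoref{cor:expected-prediction-regret}, which are in hand, so the remainder is algebra. The only points needing a word of care are the monotone-convergence argument behind the $n = \infty$ limits in (a) and (b), and the reading of the $O$-notation in (d) when $\EE_P E^P_t$ does not grow without bound, in which case $\EE_P\big[ E^Q_t - E^P_t \big]$ is bounded by $\EE_P E^Q_\infty$.
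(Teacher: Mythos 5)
Your proposal is correct and is exactly the argument the paper intends: the corollary is stated as an immediate consequence of plugging the uniform bound $\KL_n(P,Q) \leq -\log c_P$ from \eqref{eq:KL-bound} into \autoref{thm:error-bound} and \autoref{cor:expected-prediction-regret}, which is precisely what you do (with the monotone-convergence step for $n=\infty$ made explicit, which the paper leaves tacit).
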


If the true distribution $P$ is deterministic,
we can improve on these bounds:

\begin{example}[Predicting a Deterministic Measure] 
\label{ex:predicting-deterministic-distribution}
\index{measure!deterministic}\index{prediction!expected regret}
Suppose we are predicting a deterministic measure $P$
that assigns probability $1$ to the infinite string $x_{1:\infty}$.
If $P$ is dominated by $Q$,
the total expected prediction regret $\EE_P E_\infty^Q$ is bounded by $-2\ln c_P$
by \autoref{cor:expected-prediction-regret}.
This is easy to see: every time we predict a wrong symbol $a \neq x_t$,
then $Q(a \mid x_{<t}) \geq Q(x_t \mid x_{<t})$,
so $Q(x_t \mid x_{<t}) \leq 1/2$.
Therefore $Y_t \leq Y_{t-1} / 2$ and by dominance $Y_t \geq c_P$.
Hence a prediction error can occur at most $-\log c_P$ times.
\end{example}

Generally, the $O(\EE_P E^P_t)$ bounds on expected prediction regret
given in \autoref{cor:expected-prediction-regret-dominance}
are essentially unimprovable:

\begin{example}[Lower Bounds on Prediction Regret]
\label{ex:prediction-lower-bounds}
\index{prediction!regret}
Set $\X := \{ 0, 1 \}$ and
consider the uniform measure $\lambda$\index{measure!Lebesgue}
from \autoref{ex:Lebesgue-measure}.
For each time step $t$,
we have $\lambda(0 \mid x_{<t}) = \lambda(1 \mid x_{<t}) = 1/2$,
so the argmax in \eqref{eq:ML-prediction} ties and hence
it does not matter whether we predict $0$ or $1$.
We take two predictors $P$ and $Q$, where $P$ always predicts $0$ and
$Q$ always predicts $1$.
Let $Z_t := E_t^Q - E_t^P$.
Since their predictions never match,
$Z_t$ is an ordinary random walk with step size $1$.
We have~\citep{Weisstein:2002}
\[
  \limsup_{t \to \infty} \frac{\EE_P[ E_t^Q - E_t^P ]}{\sqrt{t}}
= \sqrt{2/\pi}
\]
and for the law of the iterated logarithm~\citep[Thm.~8.8.3]{Durrett:2010}
\[
\limsup_{t \to \infty} \frac{E_t^Q - E_t^P}{\sqrt{2t \log \log t}} = 1
\text{ $P$-almost surely}.
\]
Both bounds are known to be asymptotically tight.
\end{example}

While \autoref{ex:prediction-lower-bounds} shows that the bounds
from \autoref{cor:expected-prediction-regret-dominance} are asymptotically tight,
they are misleading
because in most cases, we can do much better.
According to the following theorem,
the worst case bounds are only attained
if $P(x_t \mid x_{<t})$ is sufficiently close to $1/2$.

\begin{theorem}[Expected Prediction Regret for Nonuniform Measures]
\label{thm:expected-prediction-regret-nonuniform}
\index{KL-divergence}\index{prediction!expected regret}
If $\X = \{ 0, 1 \}$ and
there is an $\varepsilon > 0$ such that
$|P(x_t \mid x_{<t}) - 1/2| \geq \varepsilon$ for all $x_{1:t} \in \X^*$,
then
\[
     \EE_P \left[ E_t^Q - E_t^P \right]
\leq \frac{\KL_t(P, Q)}{\varepsilon}.
\]
\end{theorem}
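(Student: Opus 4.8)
The plan is to bound the expected regret term by term after conditioning on the history, matching each step's excess error against the corresponding one-step KL-divergence via Pinsker's inequality. Write $\EE_P[E_t^Q - E_t^P] = \sum_{k=1}^t \EE_P[\Delta_k]$ where $\Delta_k := \EE_P[e_k^Q - e_k^P \mid x_{<k}]$ (linearity plus the tower property), and for a fixed history abbreviate $p := P(1 \mid x_{<k})$ and $q := Q(1 \mid x_{<k})$. The hypothesis gives $|p - 1/2| \ge \varepsilon > 0$, so $p \ne 1/2$ and the $P$-predictor is unambiguous, with $x_k^P = 1$ iff $p > 1/2$.

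First I would handle the trivial case: if $x_k^Q = x_k^P$ then $e_k^Q = e_k^P$ for every realization of $x_k$, so $\Delta_k = 0$ and $\Delta_k \le \KL_1(P,Q \mid x_{<k})/\varepsilon$ holds because KL is nonnegative. Otherwise $x_k^Q \ne x_k^P$; by the $0 \leftrightarrow 1$ symmetry assume $x_k^P = 1$, so $p \ge 1/2 + \varepsilon$, while the $Q$-predictor chose $0$, which forces $q \le 1/2$ (including the tie case $q = 1/2$, since $\argmax$ may break it either way). Then $\EE_P[e_k^Q \mid x_{<k}] = p$ and $\EE_P[e_k^P \mid x_{<k}] = 1-p$, hence $\Delta_k = 2p-1$. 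The key step is the bound $\KL_1(P,Q\mid x_{<k}) \ge \varepsilon\Delta_k$: since $q \le 1/2 \le p$, the total variation distance between the one-step conditional (Bernoulli) distributions equals $|p-q| = p-q \ge p - 1/2 \ge \varepsilon$, so Pinsker's inequality (Lemma~\ref{lem:Pinsker}) gives $\KL_1(P,Q\mid x_{<k}) \ge 2(p-q)^2 \ge 2(p-1/2)^2$, and now — using $p - 1/2 \ge \varepsilon$ a second time — we replace one factor $(p-1/2)$ by $\varepsilon$ to obtain $\KL_1(P,Q\mid x_{<k}) \ge 2\varepsilon(p-1/2) = \varepsilon(2p-1) = \varepsilon\Delta_k$. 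Thus $\Delta_k \le \KL_1(P,Q\mid x_{<k})/\varepsilon$ for every history and every $k$.

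Finally I would take $\EE_P$, sum over $k = 1,\dots,t$, and apply the chain rule for KL-divergence, $\sum_{k=1}^t \EE_P[\KL_1(P,Q\mid x_{<k})] = \KL_t(P,Q)$, to conclude $\EE_P[E_t^Q - E_t^P] \le \KL_t(P,Q)/\varepsilon$. The only non-routine point is the quadratic-to-linear trade in the key step: the Pinsker bound $2(p-1/2)^2$ on the one-step KL looks a factor $(p-1/2)$ too weak, but it is exactly strong enough precisely because the hypothesis keeps $p$ bounded away from $1/2$ by $\varepsilon$. The remainder — conditioning, the observation that matching predictions contribute zero excess error, and the chain rule — is bookkeeping; it is also the point at which binariness is used, since on $\{0,1\}$ a disagreement of the predictors forces $q$ to the far side of $1/2$ from $p$.
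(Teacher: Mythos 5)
Your proof is correct, and it shares the paper's outer skeleton (condition on the history, note that agreeing predictors contribute zero excess error, bound the per-step conditional regret $2p-1$ by $\varepsilon^{-1}\KL_1(P,Q \mid x_{<k})$, then sum and use the chain rule for KL-divergence exactly as the paper does when it "sums from $t=1$ to $n$"). Where you genuinely diverge is in how the per-step inequality is established. The paper proves it by a direct analytic computation: it introduces $g(p,q,\varepsilon) := (2p-1) - \varepsilon^{-1}\bigl(p\ln\tfrac{p}{1-q} + (1-p)\ln\tfrac{1-p}{q}\bigr)$, uses monotonicity in $q$ and the second-order Taylor upper bound $\ln 2 - 2(p-\tfrac12)^2 \geq \Ent(p)$ to show $g \leq 0$, which yields the bound with natural-log KL. You instead observe that disagreement of the two maximum-likelihood predictors over a binary alphabet forces $q$ onto the far side of $1/2$, so the one-step total variation is $p-q \geq p-\tfrac12 \geq \varepsilon$, and then invoke Pinsker's inequality (\autoref{lem:Pinsker}, already available in the paper) to get $\KL_1 \geq 2(p-q)^2 \geq 2\varepsilon(p-\tfrac12) = \varepsilon(2p-1)$. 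Your route is shorter, reuses an existing lemma, avoids the "one can check" Taylor verification, and makes explicit where binariness and the $\varepsilon$-separation enter; the paper's computation is self-contained and, being carried out in nats, is marginally sharper in the constant (which is immaterial for the theorem as stated, whose KL follows the paper's conventions, and under which your use of \autoref{lem:Pinsker} is exactly as licensed). In essence the quadratic entropy bound the paper verifies by hand is the same quadratic that underlies Pinsker, so the two arguments are the same inequality packaged differently, with yours outsourcing it to the stated lemma.
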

\begin{proof}
Recall the definition of entropy\index{entropy} in nats:
\[
\Ent(p) := - p \ln p - (1 - p) \ln (1 - p).
\]
The second order Taylor approximation of $\Ent$ at $1/2$ is
\[
f(p) = \ln 2 - 2 (p - \tfrac{1}{2})^2.
\]
One can check that $f(p) \geq \Ent(p)$ for all $0 \leq p \leq 1$.
Define $p := P(x_t^P \mid x_{<t}) \geq 1/2$ and
$q := Q(x_t^Q \mid x_{<t}) \geq 1/2$ to ease notation.
Consider the function
\begin{align*}
      g(p,q,\eps)
&:=   p - (1 - p) - \varepsilon^{-1} \big(
        p \ln \tfrac{p}{1-q} + (1-p) \ln \tfrac{1 - p}{q}
      \big) \\
\intertext{which is strictly increasing as $q$ decreases,
so from $q \geq 1/2$ we get}
      g(p,q,\eps)
&\leq 2p - 1 - \varepsilon^{-1} \ln 2 + \varepsilon^{-1} \Ent(p) \ln 2 \\
&\leq 2p - 1 - \varepsilon^{-1} \ln 2 + \varepsilon^{-1} f(p) \ln 2 \\
&=    2p - 1 - \varepsilon^{-1} 2 (p - \tfrac{1}{2})^2,
\intertext{which decreases as $p$ increases,
hence it is maximized for $p = 1/2 + \varepsilon$,}
      g(p,q,\eps)
&\leq 2 \varepsilon - \varepsilon^{-1} 2 \varepsilon^2
=     0
\end{align*}
Therefore $g$ is nonpositive.
If $x_t^Q = x_t^P$, the one-step error is $0$.
Otherwise $\EE_P[e_t \mid x_{<t}] = p - (1 - p)$ and
$g(p, q, \eps) = \EE_P[e_t \mid x_{<t}] - \eps^{-1} \KL_1(P, Q \mid x_{<t})$,
so we get $\EE_P[e_t \mid x_{<t}] \leq \varepsilon^{-1} \KL_1(P, Q \mid x_{<t})$.
Summing this from $t = 1$ to $n$ yields the claim.
\[
     \EE \Big[ E_n^Q - E_n^P \Big]
\leq \varepsilon^{-1} \KL_n(P, Q).
\qedhere
\]
\end{proof}

\subsection{Absolute Continuity}
\label{ssec:prediction-absolute-continuity}

\begin{theorem}[Prediction with Absolute Continuity]
\label{thm:prediction-absolute-continuity}
\index{error!cumulative}\index{absolute continuity}
If $Q \gg P$, then
\[
\sqrt{E_t^Q} - \sqrt{E_t^P} \leq O \left( \sqrt{\log \log t} \right)
\text{ $P$-almost surely}.
\]
\end{theorem}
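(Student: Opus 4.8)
The plan is to reduce the absolute-continuity case to the dominance case by a localization argument, and then to run the prediction-error machinery of \citet{Hutter:2001error} \emph{pathwise}, paying an extra $\sqrt{\log\log t}$ factor for the almost-sure (rather than in-expectation) statement via the law of the iterated logarithm. The guiding intuition is that $Q \gg P$ already gives strong merging (\autoref{thm:Blackwell-Dubins}), so the two maximum-likelihood predictors disagree less and less; but that alone is not quantitative enough, and the point of the localization is that along $P$-typical paths the likelihood-ratio martingale stays bounded, which is exactly what turns absolute continuity into genuine dominance with a (path-dependent) constant.

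First I would localize. Since $Q \gg P$, the nonnegative $P$-martingale $Y_t := Q(x_{1:t})/P(x_{1:t})$ converges $P$-almost surely (\autoref{thm:martingale-convergence}) to a limit that is $P$-a.s. strictly positive (\autoref{prop:martingales-and-compatibility}, the martingale characterization of absolute continuity), hence stays in some compact interval $[1/m,m]$ along $P$-almost every path. Set $B_m := \{ x : 1/m \le Y_t(x) \le m \text{ for all } t \} \in \Foo$; then $B_m \uparrow$ with $P(\bigcup_m B_m) = 1$, so it suffices to prove the bound $P$-a.s. on each $B_m$. Fix $m$ with $P(B_m) > 0$ and put $\hat P := P(\,\cdot \mid B_m)$. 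If a cylinder $\Gamma_{x_{1:t}}$ meets $B_m$ then $Y_t(x_{1:t}) \le m$, i.e. $Q(x_{1:t}) \ge P(x_{1:t})/m \ge P(B_m)\hat P(x_{1:t})/m$; hence $Q$ dominates $\hat P$ with constant $c_m := P(B_m)/m > 0$. This is the payoff: absolute continuity does not bound the realized cumulative predictive divergence, but dominance does --- by \eqref{eq:KL-bound} we get $\KL_t(\hat P, Q) \le -\log c_m$ for all $t$, and since $\KL_t(\hat P, Q) = \EE_{\hat P}\big[\sum_{k \le t}\KL_1(\hat P, Q \mid x_{<k})\big]$ with nonnegative summands, monotone convergence yields $A := \sum_{k=1}^\infty \KL_1(\hat P, Q \mid x_{<k}) < \infty$ $\hat P$-almost surely.

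Now fix $m$ and work under $\hat P$. Write the Doob decompositions $E_t^Q = G_t^Q + N_t^Q$ and $E_t^P = G_t^P + N_t^P$, where $G_t^Q := \sum_{k \le t}\hat P(x_k \ne x_k^Q \mid \F_{k-1})$ and $G_t^P$ analogously are the predictable compensators and $N^Q, N^P$ are $\hat P$-martingales with increments in $[-1,1]$ and predictable quadratic variation at most $G_t^Q, G_t^P$. Let $G_t^* := \sum_{k \le t}\min_a \hat P(x_k \ne a \mid \F_{k-1})$ be the compensator of the $\hat P$-optimal predictor. Summing the per-step ``unit loss'' estimate underlying \autoref{thm:error-bound} (and using Cauchy--Schwarz on the cross term) gives $\sqrt{G_t^Q} - \sqrt{G_t^*} \le \sqrt{2A}$ $\hat P$-a.s., and $G_t^* \le G_t^P$ by optimality, so $\sqrt{G_t^Q} - \sqrt{G_t^P} \le \sqrt{2A} = O(1)$. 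For the martingale parts, the law of the iterated logarithm for martingales with bounded increments (generalizing the random-walk case used in \autoref{ex:prediction-lower-bounds}) gives $|N_t^Q| = O\big(\sqrt{(1+G_t^Q)\log\log(e^e+G_t^Q)}\big)$ and likewise for $N^P$; if $\sup_t G_t^Q < \infty$ then $N^Q$ converges, so $E_\infty^Q < \infty$ $\hat P$-a.s. and the claim is trivial, and otherwise $G_t^Q \to \infty$, which forces $G_t^P \to \infty$ via $\sqrt{G_t^Q} \le \sqrt{G_t^P} + \sqrt{2A}$. In that case, using $G_t^Q, G_t^P \le t$ and elementary manipulations of $\sqrt{a+b}-\sqrt a = b/(\sqrt{a+b}+\sqrt a)$, one gets $\sqrt{E_t^Q} \le \sqrt{G_t^Q} + O(\sqrt{\log\log t})$ and $\sqrt{E_t^P} \ge \sqrt{G_t^P} - O(\sqrt{\log\log t})$; combining with the compensator bound yields $\sqrt{E_t^Q} - \sqrt{E_t^P} \le O(\sqrt{\log\log t})$ $\hat P$-a.s. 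Taking the union over $m$ finishes the proof.

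The hard part is the passage from expectation bounds to almost-sure bounds: the $O(1)$ of \autoref{cor:expected-prediction-regret-dominance} degrades to $O(\sqrt{\log\log t})$, and this is genuinely unavoidable --- already for $P = Q = \lambda$, where dominance is trivial, \autoref{ex:prediction-lower-bounds} exhibits a matching $\sqrt{\log\log t}$ lower bound. The delicate point is that the iterated-logarithm estimate must be applied with the \emph{random} normalizers $G_t^Q, G_t^P$ (the realized compensators), not with the deterministic means $\EE_{\hat P}E_t^Q, \EE_{\hat P}E_t^P$, since the compensators need not concentrate around their means; this is precisely why the whole comparison is routed through the $\hat P$-optimal compensator $G^*$ rather than through \autoref{cor:expected-prediction-regret-dominance} directly. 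A lesser point is to confirm the per-step unit-loss inequality in the form needed, which is cleanest to extract from the proof of \autoref{thm:error-bound}.
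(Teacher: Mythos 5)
Your proposal is correct, but it takes a genuinely different route from the paper. The paper's proof never leaves the measure $Q$: it treats $Z_t := E_t^Q - E_t^P$ as the outcome of a betting game between the two predictors, observes that $Z_t$ is a $Q$-(sub/super)martingale because the $Q$-predictor is maximum-likelihood under $Q$, notes that $Z_t$ can only move on disagreement steps, of which there are $m_t \leq \min(t,\, E_t^Q + E_t^P)$, applies the law of the iterated logarithm once under $Q$ to get $Z_t \leq C\sqrt{m_t \log\log m_t}$ $Q$-almost surely, and then uses $Q \gg P$ in its cheapest possible form --- transferring this single $Q$-null event to a $P$-null event --- before dividing by $\sqrt{E_t^Q}+\sqrt{E_t^P}$. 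No KL-divergence, no localization, no Doob decomposition appears. You instead stay on the $P$-side: you localize the likelihood-ratio martingale $Y_t$ to sets $B_m$ on which absolute continuity upgrades to genuine dominance of $\hat P = P(\cdot\mid B_m)$ by $Q$, extract an almost-surely finite realized conditional-KL sum via the chain rule, run Hutter's per-step error inequality pathwise on the compensators (which indeed follows from Pinsker exactly as in the proof of \autoref{thm:error-bound}, since on disagreement steps $a_k+b_k\geq 1$ and $a_k-b_k$ is bounded by twice the total variation), and pay the $\sqrt{\log\log t}$ through two variance-adaptive martingale LILs under $\hat P$. Both caveats you flag are benign: the per-step bound holds, and the Stout/Freedman-type LIL you need is standard (note you genuinely need the variance-normalized version, since Azuma-type bounds would only give $\sqrt{t\log\log t}$). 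What each approach buys: the paper's argument is much shorter and isolates exactly where absolute continuity enters (null-set transfer from $Q$ to $P$), though its own application of the random-walk LIL to a submartingale sampled at a random set of disagreement times is itself informal and would, if made rigorous, require martingale machinery comparable to yours; your argument is longer and heavier, but it reduces absolute continuity to dominance in a reusable way, keeps all probabilistic reasoning under the true measure, and yields slightly finer structural information --- an $O(1)$ gap between compensators mirroring \autoref{cor:expected-prediction-regret-dominance}, with the $\sqrt{\log\log t}$ attributed entirely to martingale fluctuations of the realized error counts.
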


The proof idea is inspired by \citet{MS:1999merging}.
We think of $P$ and $Q$ as two players in a zero-sum betting game.
In every time step $t$,
the players will make a bet on the outcome of $x_t$.
If $x_t = x_t^Q \neq x_t^P$, then $Q$ wins \$1 from $P$,
if $x_t = x_t^P \neq x_t^Q$, then $Q$ loses \$1 to $P$.
Otherwise $x_t^Q = x_t^P$ or $x_t^Q \neq x_t \neq x_t^P$
and neither player gains or loses money.
Since $Q$ predicts according to the maximum likelihood principle \eqref{eq:ML-prediction},
it is rational to accept the bet from $Q$'s perspective.
In $Q$'s eyes, the worst case is a fair bet,
so $Q$ will not lose more money than it would lose on a random walk.
The law of the iterated logarithm gives a $Q$-probability one
statement about this bound, which transfers to $P$ by absolute continuity.

\begin{proof}
Define the stochastic process $Z_t := E_t^Q - E_t^P$.
Since $\EE_Q[ e_t^R ] = Q(x_t^R \mid x_{<t})$, we get
\begin{align*}
      \EE_Q[ Z_{t+1} \mid \mathcal{F}_t]
&=    Q(x_t^Q \mid x_{<t}) - Q(x_t^P \mid x_{<t}) + Z_t \\
&\geq Q(x_t^Q \mid x_{<t}) - Q(x_t^Q \mid x_{<t}) + Z_t
=     Z_t,
\end{align*}
hence $(Z_t)_{t \in \mathbb{N}}$ is a $Q$-submartingale.
In the worst case (for $Q$), $(Z_t)_{t \in \mathbb{N}}$ is just a random walk
with step size $1$.
But $Z_t$ can only move if $Q$ and $P$ predict a different symbol.
If this happens, at least one of them makes an error.
Let $m_t$ be the number of steps $Z_t$ has moved ($Z_{t+1} \neq Z_t$).
Then $m_t \leq E_t^Q + E_t^P$ and $m_t \leq t$.
By the law of the iterated logarithm~\citep[Thm.~8.8.3]{Durrett:2010},
\[
\liminf_{t \to \infty} \frac{Z_t}{\sqrt{2m_t \log \log m_t}} = -1
\]
$Q$-almost surely.
We define the event
\[
A := \left\{ \exists C\, \forall t.\; Z_t \geq -C \sqrt{m_t \log \log m_t} \right\}.
\]
Then $Q(A) = 1$,
hence $P(A) = 1$ by absolute continuity.
\[
     E_t^Q - E_t^P
=    Z_t
\leq C \sqrt{ (E_t^Q + E_t^P) \log \log t}
\leq C \left( \sqrt{E_t^Q} + \sqrt{E_t^P} \right) \sqrt{\log \log t}
\]
Dividing both sides by $\sqrt{E_t^Q} + \sqrt{E_t^P}$ yields that
there is a $P$-almost surely finite random variable $C$ such that
$\sqrt{E_t^Q} - \sqrt{E_t^P} \leq C \sqrt{\log \log t}$.
\end{proof}

This invites the following immediate corollary.

\begin{corollary}[Prediction Regret for Absolute Continuity]
\label{cor:prediction-absolute-continuity}
\index{absolute continuity}\index{prediction!regret}
If $Q \gg P$, then
\[
E_t^Q - E_t^P \in O\left(\log \log t + \sqrt{E_t^P \log \log t} \right)
\text{ $P$-almost surely}.
\]
\end{corollary}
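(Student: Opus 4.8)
The plan is to obtain this as an immediate algebraic consequence of \autoref{thm:prediction-absolute-continuity}. That theorem furnishes a $P$-almost surely finite random variable $C$ with $\sqrt{E_t^Q} - \sqrt{E_t^P} \leq C\sqrt{\log\log t}$ for all sufficiently large $t$ (the finitely many small $t$ for which $\log\log t$ is ill-defined or negative can be absorbed into $C$). First I would rearrange this to $\sqrt{E_t^Q} \leq \sqrt{E_t^P} + C\sqrt{\log\log t}$ and square both sides --- legitimate since both sides are nonnegative --- to get
\[
E_t^Q \;\leq\; E_t^P + 2C\sqrt{E_t^P\,\log\log t} + C^2\log\log t.
\]
Subtracting $E_t^P$ then yields $E_t^Q - E_t^P \leq 2C\sqrt{E_t^P\,\log\log t} + C^2\log\log t$, which is exactly $O(\log\log t + \sqrt{E_t^P\,\log\log t})$ once the hidden constant is read (as in \autoref{thm:prediction-absolute-continuity} itself) as the $P$-almost surely finite random variable $\max\{2C, C^2\}$ rather than as a universal constant.

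There is essentially no obstacle here: the corollary is just a restatement of \autoref{thm:prediction-absolute-continuity} in additive rather than square-root form, so all the substantive work has already been done in proving that theorem --- the submartingale argument for $Z_t = E_t^Q - E_t^P$ together with the law of the iterated logarithm, transferred from $Q$ to $P$ via absolute continuity. The one point that warrants an explicit sentence is the convention on asymptotic notation, namely that the constant inside $O(\cdot)$ is permitted to depend on the sample path; I would flag this so the reader does not mistakenly expect a deterministic bound.
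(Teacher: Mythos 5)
Your proposal is correct and is essentially the paper's own argument: the paper proves this corollary ``analogously to the proof of \autoref{cor:expected-prediction-regret}'', i.e.\ by combining \autoref{thm:prediction-absolute-continuity} with the same algebra (there via the factorization $E_t^Q-E_t^P=(\sqrt{E_t^Q}+\sqrt{E_t^P})(\sqrt{E_t^Q}-\sqrt{E_t^P})$, which is equivalent to your squaring step). Your explicit remark that the constant hidden in the $O(\cdot)$ is a $P$-almost surely finite random variable rather than a deterministic one is a worthwhile clarification, consistent with how \autoref{thm:prediction-absolute-continuity} is stated.
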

\begin{proof}
Analogously to the proof of \autoref{cor:expected-prediction-regret}.
\end{proof}

While \autoref{cor:prediction-absolute-continuity}
establishes an almost sure prediction regret bound,
it is different from the bound on expected prediction regret
from \autoref{cor:expected-prediction-regret};
bounds on $\EE[E^Q_t - E^P_t]$
are incomparable to almost sure bound
given in \autoref{thm:prediction-absolute-continuity}:
for a sequence of nonnegative (unbounded) random variables
convergence in mean does not imply almost sure convergence%
~\citep[Sec.~14.7]{Stoyanov:2013counterexamples}
or vice versa~\citep[Sec.~14.8ii]{Stoyanov:2013counterexamples}.

We proceed to establish an improved prediction regret bound
in case $P$ is nonuniform
analogously to \autoref{thm:expected-prediction-regret-nonuniform}.

\begin{theorem}[Prediction Regret for Nonuniform Measures]
\label{thm:prediction-regret-nonuniform}
\index{absolute continuity}\index{prediction!regret}
If $Q \gg P$, $\X = \{ 0, 1\}$, and
there is an $\varepsilon > 0$ such that with $P$-probability $1$
\[
|P(x_t \mid x_{<t}) - 1/2| \geq \varepsilon
\]
for all $t \in \mathbb{N}$,
then $P$-almost surely $E_t^Q - E_t^P \in O(1)$.
\end{theorem}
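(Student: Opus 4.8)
The plan is to leverage absolute continuity, which by the merging-of-opinions theorem forces the conditional distributions of $P$ and $Q$ to fuse, and then to combine this with the $\varepsilon$-gap hypothesis to conclude that $P$ and $Q$ eventually produce \emph{identical} maximum-likelihood predictions. Once that is established, $E_t^Q - E_t^P$ stops changing after a ($P$-almost surely finite) random time, hence is $O(1)$.

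Concretely, the first step is to apply \autoref{thm:Blackwell-Dubins}: since $Q \gg P$ means $P$ is absolutely continuous with respect to $Q$ (\autoref{def:absolute-continuity}), $Q$ merges strongly with $P$, i.e.\ $D_\infty(P, Q \mid x_{<t}) \to 0$ as $t \to \infty$ with $P$-probability $1$. The next-symbol events $\bigcup_{a \in \X'} \Gamma_{x_{<t}a}$ for $\X' \subseteq \X$ are among the sets in the supremum defining $D_\infty(P, Q \mid x_{<t})$, so in the binary case $|P(1 \mid x_{<t}) - Q(1 \mid x_{<t})| \leq D_\infty(P, Q \mid x_{<t}) \to 0$ $P$-almost surely. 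The second step is to work on the intersection of the two $P$-full-measure events on which (i) this convergence holds and (ii) $|P(x_t \mid x_{<t}) - 1/2| \geq \varepsilon$ for every $t$; on this event pick a random time $T$ with $|P(1 \mid x_{<t}) - Q(1 \mid x_{<t})| < \varepsilon$ for all $t \geq T$. The third step is the comparison of predictions: for $t \geq T$, hypothesis (ii) puts $P(1 \mid x_{<t})$ either $\geq 1/2 + \varepsilon$ or $\leq 1/2 - \varepsilon$; in the first case $Q(1 \mid x_{<t}) > P(1 \mid x_{<t}) - \varepsilon \geq 1/2$, so the maximum-likelihood rule \eqref{eq:ML-prediction} gives $x_t^Q = 1 = x_t^P$, and symmetrically in the second case $x_t^Q = 0 = x_t^P$. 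Hence $e_t^Q = e_t^P$ for all $t \geq T$, so $E_t^Q - E_t^P = E_{T-1}^Q - E_{T-1}^P$ for all $t \geq T$; the right-hand side is a $P$-almost surely finite random variable, which yields $E_t^Q - E_t^P \in O(1)$ $P$-almost surely.

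I do not anticipate a real obstacle: the $\varepsilon$-gap is exactly what prevents $P(\cdot \mid x_{<t})$ from hovering near $1/2$, so a vanishing total variation distance no longer merely constrains the fluctuations of the prediction-regret random walk (as in \autoref{thm:prediction-absolute-continuity}) but forces the two predictors to agree outright, collapsing the $\sqrt{\log \log t}$ term to a constant. The only points that need a little care are verifying that the one-step conditional events fall under the supremum in the definition of $D_\infty$ (so that strong merging really controls $|P(1 \mid x_{<t}) - Q(1 \mid x_{<t})|$), and observing that hypothesis (ii) also rules out ties in the $P$- and, for $t \geq T$, the $Q$-side of the maximum-likelihood rule, so no tie-breaking convention matters.
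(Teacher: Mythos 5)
Your proof is correct, but it is not the route the paper's own proof of \autoref{thm:prediction-regret-nonuniform} takes — it is, in fact, exactly the alternative argument the paper sketches in the paragraph immediately after that proof. The paper stays inside the betting-game picture of \autoref{thm:prediction-absolute-continuity}: by \autoref{thm:Blackwell-Dubins}, $Q$'s conditionals are eventually bounded away from $1/2$ (by $\varepsilon/2$), so whenever the two maximum-likelihood predictors disagree the gambler's wealth process has expected one-step gain of order $\varepsilon$ under $Q$; hence $Q$-almost surely the process is bounded on the unfavorable side, and since $Q \gg P$ this almost-sure bound transfers to $P$, giving $E_t^Q - E_t^P \in O(1)$. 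You instead compare the conditionals of $P$ and $Q$ directly: strong merging plus the $\varepsilon$-gap on $P(\cdot \mid x_{<t})$ forces $x_t^Q = x_t^P$ for all $t$ beyond a $P$-almost surely finite random time, so the regret is eventually constant. Your route is more elementary — it needs no submartingale/drift reasoning and no change-of-measure step, since everything happens on $P$-full-measure events, and the gap automatically rules out argmax ties — while the paper's route keeps the argument uniform with the preceding theorem and only needs a one-sided statement about $Q$'s beliefs on disagreement steps rather than literal agreement of the two predictors. The two small points you flag are indeed the ones to make explicit: the next-symbol cylinder events lie in the class over which the supremum in $D_\infty$ is taken, and $Q(\cdot \mid x_{<t})$ is $P$-almost surely well defined because $Q \gg P$. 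With those noted, your argument is complete.
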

\begin{proof}
If $|P(x_t \mid x_{<t}) - 1/2| \geq \varepsilon$,
then for large enough $t$, $Q$ will have merged with $P$
(\autoref{thm:Blackwell-Dubins}) and hence
$|Q(x_t \mid x_{<t}) - 1/2| \geq \varepsilon/2$ infinitely often.

Thus $Z_t$ has an expected gain of $\varepsilon/2$ if the predictors disagree.
Therefore $Z_t \to \infty$ $Q$-almost surely.
Consequently, the set
\[
A := \{ \exists t_0\, \forall t \geq t_0.\; Z_t \geq 0 \}
\]
has $Q$-measure $1$.
By absolute continuity,
it also has $P$-measure $1$,
hence there is a $P$-almost surely finite random variable $C$
such that for all $t$, $Z_t \geq -C$.
\end{proof}

There is another argument that we could use to show that
under the condition of \autoref{thm:prediction-regret-nonuniform}
$E_t^Q - E_t^P$ is almost surely finite:
If $P$ is absolutely continuous with respect to $Q$,
then $Q$ merges strongly with $P$ and hence $Q$ merges weakly with $P$.
Therefore almost surely there is a $t_0$ such that for all $t \geq t_0$ we have
$|Q(x_t^P \mid x_{<t}) - P(x_t^P \mid x_{<t})| < \eps$,
thus $x_t^Q = x_t^P$ for $t \geq t_0$.

\subsection{Dominance with Coefficients}
\label{ssec:prediction-dominance-with-coefficients}

\begin{lemma}[KL Divergence and Dominance With Coefficients]
\label{lem:KL-divergence-dominance-with-coefficients}
\index{dominance}\index{KL-divergence}
If $Q$ dominates $P$ with coefficients $f$,
then $\KL_t(Q, P) \leq \ln f(t)$.
\end{lemma}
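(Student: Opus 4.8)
The plan is to mirror the pointwise dominance estimate already carried out in~\eqref{eq:KL-bound}, with the time-dependent coefficient $f(t)$ now playing the role of the constant $1/c_P$. First I would unfold the definition: by \autoref{def:KL-divergence} applied with empty history and lookahead $t$,
\[
\KL_t(P, Q) = \sum_{x \in \X^t} P(x) \log \frac{P(x)}{Q(x)},
\]
where strings $x$ with $P(x) = 0$ contribute nothing and may be dropped from the sum.

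Next I would invoke the hypothesis. By \autoref{rem:dominance-with-coefficients}, the fact that $Q$ dominates $P$ with coefficients $f$ means $Q(x) \geq P(x)/f(|x|)$ for every $x \in \X^*$; in particular this forces $Q(x) > 0$ whenever $P(x) > 0$, so the likelihood ratio $P(x)/Q(x)$ is well defined on the summation range, and for $|x| = t$ it satisfies $P(x)/Q(x) \leq f(t)$, hence $\log(P(x)/Q(x)) \leq \log f(t)$. Substituting this pointwise bound and using $\sum_{x \in \X^t} P(x) = 1$ then gives
\[
\KL_t(P, Q) = \sum_{\substack{x \in \X^t \\ P(x) > 0}} P(x) \log \frac{P(x)}{Q(x)}
\leq \log f(t) \sum_{x \in \X^t} P(x) = \log f(t),
\]
which is the asserted bound (reading $\ln$ for $\log$ if the divergence is measured in nats).

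I do not expect any real obstacle here: the argument is essentially a two-line computation. The only points deserving a word of care are the harmless removal of the $P(x) = 0$ terms from the sum and the observation that dominance with coefficients already rules out division by zero, so no separate local-absolute-continuity step is required before the estimate can be made.
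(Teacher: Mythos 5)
Your proof is correct and is essentially the paper's own argument: the paper simply says ``Analogous to \eqref{eq:KL-bound},'' which is exactly the pointwise bound $P(x)/Q(x) \leq f(t)$ inside the KL sum that you carry out (with the constant $1/c_P$ replaced by $f(t)$). Your extra remarks about dropping $P(x)=0$ terms and about $\ln$ versus $\log$ are fine but not needed beyond what the paper intends.
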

\begin{proof}
Analogous to \eqref{eq:KL-bound}.
\end{proof}

This lets us derive an analogous regret bound to
\autoref{cor:expected-prediction-regret}.

\begin{corollary}[Expected Prediction Regret for Dominance With Coefficients]
\label{cor:expected-prediction-regret-dominance-with-coefficients}
\index{dominance}\index{prediction!expected regret}
If $Q$ dominates $P$ with coefficients $f$,
then
\[
     \EE_P \Big[ E^Q_n - E^P_n \Big]
\leq 2 \ln f(t) + 2 \sqrt{2\EE_P E^P_n \ln f(t)}.
\]
\end{corollary}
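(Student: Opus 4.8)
The plan is to specialize \autoref{cor:expected-prediction-regret} using the KL-divergence estimate for dominance with coefficients. By \autoref{lem:KL-divergence-dominance-with-coefficients}, dominance of $P$ by $Q$ with coefficients $f$ yields $\KL_n(P, Q) \leq \ln f(n)$ (the one-line computation of \eqref{eq:KL-bound}, with the constant $1/c_P$ replaced by $f(n)$). Substituting this into the bound
\[
\EE_P\big[ E^Q_n - E^P_n \big] \leq 2\KL_n(P, Q) + 2\sqrt{2\KL_n(P, Q)\,\EE_P E^P_n}
\]
of \autoref{cor:expected-prediction-regret}, and using that the right-hand side is increasing in $\KL_n(P, Q) \geq 0$, gives $\EE_P[E^Q_n - E^P_n] \leq 2\ln f(n) + 2\sqrt{2\,\EE_P E^P_n\,\ln f(n)}$, as claimed.

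If a self-contained argument is preferred, I would instead copy the three steps from the proof of \autoref{cor:expected-prediction-regret}: from \autoref{thm:error-bound} we have $\sqrt{\EE_P E^Q_n} - \sqrt{\EE_P E^P_n} \leq \sqrt{2\KL_n(P, Q)}$; writing $\EE_P[E^Q_n - E^P_n] = \big(\sqrt{\EE_P E^Q_n} + \sqrt{\EE_P E^P_n}\big)\big(\sqrt{\EE_P E^Q_n} - \sqrt{\EE_P E^P_n}\big)$, bounding the first factor by $\sqrt{2\KL_n(P, Q)} + 2\sqrt{\EE_P E^P_n}$ (again via \autoref{thm:error-bound}) and the second by $\sqrt{2\KL_n(P, Q)}$, and multiplying out yields the displayed inequality above; then plug in $\KL_n(P, Q) \leq \ln f(n)$.

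I expect no genuine obstacle, since this is a direct specialization. The only point that needs a moment's attention is the orientation of the KL-divergence: \autoref{thm:error-bound} requires the true distribution $P$ as the first argument of $\KL_n$, and dominance with coefficients ($Q(x) \geq P(x)/f(|x|)$, equivalently $P(x)/Q(x) \leq f(|x|)$) bounds exactly $\KL_n(P, Q)$ rather than the reversed divergence, so the estimate fits together without change.
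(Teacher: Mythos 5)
Your proposal is exactly the paper's proof: apply \autoref{lem:KL-divergence-dominance-with-coefficients} to \autoref{cor:expected-prediction-regret}, and your remark on the orientation of the KL-divergence correctly matches the computation in \eqref{eq:KL-bound}. Your writing $f(n)$ rather than the $f(t)$ appearing in the corollary's display is in fact the consistent choice of index, so nothing is amiss.
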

\begin{proof}
Apply \autoref{lem:KL-divergence-dominance-with-coefficients}
to \autoref{cor:expected-prediction-regret}.
\end{proof}

For weak dominance we get sublinear prediction regret.

\begin{corollary}[Sublinear Prediction Regret for Weak Dominance]
\label{cor:sublinear-prediction-regret-weak-dominance}
\index{dominance!weak}\index{prediction!expected regret}
If $Q$ weakly dominates $P$,
then $\EE_P[ E^Q_n - E^P_n ] \in o(t)$.
\end{corollary}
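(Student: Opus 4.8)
The tempting route is to invoke \autoref{cor:expected-prediction-regret-dominance-with-coefficients}, or directly \autoref{cor:expected-prediction-regret}, which would reduce the claim to showing $\KL_n(P, Q) \in o(n)$. But this does not work: weak dominance is strictly weaker than dominance with subexponential coefficients (the converse of \autoref{prop:compatibility}\ref{itm:dominance-with-coefficients=>weak-dominance} fails), and in general it does not control the divergence $\KL_n(P, Q) = \EE_P[-\log(Q(x_{1:n})/P(x_{1:n}))]$ at all — the random variable $\tfrac1n\log(Q(x_{1:n})/P(x_{1:n}))$ can vanish $P$-almost surely while failing to be uniformly integrable, so its expectation need not be $o(1)$. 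The plan is therefore to bound the regret directly by a one-step total variation distance, which \emph{is} controlled by weak dominance via almost weak merging. Recognizing that this substitution is needed is the main obstacle; everything afterwards is routine.

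First I would bound the one-step expected regret. Writing $a := x_t^P$ and $b := x_t^Q$ for the maximum-likelihood predictions from \eqref{eq:ML-prediction}, one has $\EE_P[e_t^Q - e_t^P \mid x_{<t}] = P(a \mid x_{<t}) - P(b \mid x_{<t})$, which is $0$ when $a = b$, and otherwise
\begin{align*}
P(a \mid x_{<t}) - P(b \mid x_{<t})
&= \bigl(P(a\mid x_{<t}) - Q(a\mid x_{<t})\bigr) + \bigl(Q(a\mid x_{<t}) - Q(b\mid x_{<t})\bigr) + \bigl(Q(b\mid x_{<t}) - P(b\mid x_{<t})\bigr) \\
&\le 2\, D_t(P, Q \mid x_{<t}),
\end{align*}
since the middle bracket is $\le 0$ by maximality of $b$ for $Q(\,\cdot \mid x_{<t})$, and each outer bracket is bounded in absolute value by the total variation distance of \autoref{def:total-variation-distance}. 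Summing over $t = 1, \dots, n$ and taking the $P$-expectation gives
\[
\EE_P\bigl[ E_n^Q - E_n^P \bigr]
~\le~ 2\, \EE_P\Bigl[ \sum_{t=1}^n D_t(P, Q \mid x_{<t}) \Bigr].
\]

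Finally I would feed in the hypothesis. Since $Q$ weakly dominates $P$, \autoref{thm:almost-weak-merging} shows $Q$ merges almost weakly with $P$, which in the one-step ($d = 0$) form of \autoref{def:almost-weak-merging} says $\tfrac1n \sum_{t=1}^n D_t(P, Q \mid x_{<t}) \to 0$ $P$-almost surely. Because each $D_t(P, Q \mid x_{<t}) \in [0,1]$, these Cesàro averages are bounded by $1$, so the bounded convergence theorem upgrades the almost-sure convergence to convergence in $P$-mean: $\EE_P\bigl[\tfrac1n \sum_{t=1}^n D_t(P, Q \mid x_{<t})\bigr] \to 0$. Combining with the displayed bound, $\EE_P[E_n^Q - E_n^P] \le 2n\cdot o(1) = o(n)$, which together with the lower bound $\EE_P[E_n^Q - E_n^P] \ge 0$ from \autoref{cor:expected-prediction-regret} is the claim.
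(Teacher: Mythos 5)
Your proof is correct, but it takes a genuinely different route from the paper. The paper proves this corollary in one line by reading weak dominance as supplying a coefficient function $f$ with $\ln f(t) \in o(t)$ (via \autoref{rem:weak-dominance}) and then plugging this into the KL-based bound of \autoref{cor:expected-prediction-regret-dominance-with-coefficients}; you instead bound the per-step expected regret pointwise by $2D_t(P,Q \mid x_{<t})$, invoke \autoref{thm:almost-weak-merging} (weak dominance $\Rightarrow$ almost weak merging, in its $d=0$ form) to get $\tfrac1n\sum_{t=1}^n D_t(P,Q\mid x_{<t}) \to 0$ $P$-almost surely, and upgrade to convergence in mean by bounded convergence. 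Your opening criticism of the KL route is well taken: weak dominance is only an almost-sure statement about $\tfrac1t\log\bigl(Q(x_{1:t})/P(x_{1:t})\bigr)$, so it does not literally furnish the deterministic coefficient function $f$ that the paper's proof feeds into the KL bound, and without uniform integrability the expectation $\KL_n(P,Q)$ can in fact grow linearly even though the normalized log-ratio vanishes almost surely (take, e.g., $P(1\mid x_{<t}) = t^{-2}$ and $Q(1\mid x_{<t}) \approx e^{-ct^2}$: Borel--Cantelli gives weak dominance, yet the per-step KL stays bounded away from $0$). So the paper's argument is really a proof under the stronger hypothesis ``$Q \geq P/f$ with $f \in o(\exp)$'', where it yields an explicit quantitative bound $2\ln f(t) + 2\sqrt{2\EE_P E^P_t \ln f(t)}$, whereas your total-variation/merging argument establishes the corollary under the literal hypothesis of weak dominance, at the price of being purely asymptotic. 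Both arguments are sound in their respective regimes; yours is the more careful one for the statement as written.
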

\begin{proof}
By \autoref{rem:weak-dominance}
$\ln f \in o(t)$.
Applying \autoref{cor:expected-prediction-regret-dominance-with-coefficients}
we get
\[
     \EE_P \Big[ E^Q_n - E^P_n \big]
\leq 2 o(t) + 2 \sqrt{2\EE_P E^P_n o(t)}
\leq 2 o(t) + 2 \sqrt{2O(t) o(t)}
\in  o(t).
\qedhere
\]
\end{proof}

\section{Learning with Algorithmic Information Theory}
\label{sec:learning-AIT}

Algorithmic information theory
provides a theoretical framework to apply
the probability theory results from the previous sections.
In the following
we discuss Solomonoff's famous theory of induction%
~(\autoref{ssec:Solomonoff-induction}),
the speed prior~(\autoref{ssec:speed-prior}), and
learning with a universal compression algorithm%
~(\autoref{ssec:universal-compression}).


\subsection{Solomonoff Induction}
\label{ssec:Solomonoff-induction}
\index{Solomonoff!induction}

\citet{Solomonoff:1964,Solomonoff:1978} proposed a theory of learning,
also known as \emph{universal induction}%
\index{universal!induction|see {Solomonoff induction}} or
\emph{Solomonoff induction}\index{Solomonoff!induction}.
It encompasses \emph{Ockham's razor}\index{Ockham's razor}
by favoring simple explanations over complex ones,
and \emph{Epicurus' principle of multiple explanations}\index{Epicurus' principle}
by never discarding possible explanations.
See \citet{RH:2011} for a very readable introduction to Solomonoff's theory
and its philosophical motivations and
\citet{Sterkenburg:2016} for a critique of its optimality.

At the core of this theory is \emph{Solomonoff's distribution $M$},
as defined in \autoref{ex:Solomonoff-prior}.
Since $M$ dominates all lower semicomputable semimeasures,
we get all the merging and prediction results
from \autoref{sec:merging} and \autoref{sec:predicting}:
when drawing a string from any computable measure $P$,
$M$ arrives at the correct belief for any hypothesis.

\begin{corollary}[Strong Merging for Solomonoff Induction]
\label{cor:strong-merging-M}
\index{merging!strong}\index{Solomonoff!prior}
$M$ merges strongly with every computable measure.
\end{corollary}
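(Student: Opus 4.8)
The plan is to reduce the claim to the Blackwell--Dubins merging-of-opinions theorem (\autoref{thm:Blackwell-Dubins}) by exploiting the universality of $M$. Fix an arbitrary computable measure $P$. Since $P$ is computable it is in particular lower semicomputable, and since it is a measure it is a semimeasure; hence $P$ occurs in the effective enumeration $\nu_1, \nu_2, \dots$ of all lower semicomputable semimeasures. By the dominance property of Solomonoff's distribution recorded in \autoref{ex:Solomonoff-prior} (\citealp[Thm.~4.5.1]{LV:2008}), $M \timesgeq P$: there is a constant $c > 0$ with $M(x) \geq c\,P(x)$ for every $x \in \X^*$.

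I would then upgrade this domination on cylinders to absolute continuity and invoke Blackwell--Dubins: by the reasoning behind \autoref{prop:compatibility}\ref{itm:dominance=>absolute-continuity}, $M \timesgeq P$ gives $M \gg P$, i.e.\ $P$ is absolutely continuous with respect to $M$, and then \autoref{thm:Blackwell-Dubins} (applied with the belief distribution taken to be $M$) yields $D_\infty(P, M \mid x_{<t}) \to 0$ $P$-almost surely as $t \to \infty$. By \autoref{def:strong-merging} this is precisely the assertion that $M$ merges strongly with $P$, and since $P$ was an arbitrary computable measure the corollary follows.

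The one step that needs genuine care --- and that I expect to be the main obstacle to a fully rigorous write-up --- is that $M$ is a semimeasure, not a probability measure, whereas \autoref{prop:compatibility} and \autoref{thm:Blackwell-Dubins} are phrased for probability measures on $(\X^\infty, \Foo)$. The clean remedy is to pass to the space $\X^\sharp = \X^* \cup \X^\infty$: there $M$ corresponds canonically to a genuine probability measure (the distribution of the output of the universal monotone machine $U$ when fed with fair coin flips), while the computable measure $P$ is a probability measure on $\X^\sharp$ concentrated on $\X^\infty$; the cylinder inequality $M(\Gamma_x) \geq c\,P(\Gamma_x)$ then extends, exactly as in the proof of \autoref{prop:martingales-and-compatibility}\ref{itm:martingale-absolute-continuity}, to $P \ll M$ on all of $\Foo$, after which Blackwell--Dubins applies directly. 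An alternative is to replace $M$ by its Solomonoff normalization $M\norm$, which is a measure and still satisfies $M\norm \geq M \timesgeq P$ by \autoref{lem:nu-norm-dominates-nu}; this route, however, additionally requires checking that strong merging is unaffected by the semimeasure deficit $M(x) - \sum_{a \in \X} M(xa)$, so the $\X^\sharp$ argument is the more economical one.
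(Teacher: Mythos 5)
Your proposal is correct and follows essentially the same route as the paper, whose entire proof is ``From \autoref{prop:compatibility}\ref{itm:dominance=>absolute-continuity} and \autoref{thm:Blackwell-Dubins}'': universal dominance of $M$, upgraded to absolute continuity, then Blackwell--Dubins. Your extra care about $M$ being only a semimeasure is sensible but already covered by the paper's standing convention that semimeasures are identified with probability measures on $\X^\sharp$ (\autoref{sec:AIT}), so no further argument is needed.
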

\begin{proof}
From \autoref{prop:compatibility}\ref{itm:dominance=>absolute-continuity}
and \autoref{thm:Blackwell-Dubins}.
\end{proof}

\begin{corollary}[Expected Prediction Regret for Solomonoff Induction]
\label{cor:expected-prediction-regret-M}
\index{prediction!expected regret}\index{Solomonoff!prior}
For all computable measures $P$,
\[
     \EE_P \left[ E_t^M - E_t^P \right]
\leq K(P) \ln 4 + \sqrt{2 \EE_P E_t^P K(P) \ln 16}.
\]
\end{corollary}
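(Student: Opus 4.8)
The plan is to derive this directly from Corollary~\ref{cor:expected-prediction-regret} by bounding $\KL_t(P, M)$ in terms of $K(P)$. First I would recall that, by the characterization of $M$ as a Bayesian mixture over all lower semicomputable semimeasures with the semiprior $w(\nu) = 2^{-K(\nu)}$ (\citealp{WSH:2011}; see Example~\ref{ex:Solomonoff-prior}), and since every computable measure $P$ is a lower semicomputable semimeasure and hence occurs in the enumeration $\nu_1, \nu_2, \ldots$, we have
\[
M(x) \;=\; \sum_{\nu} 2^{-K(\nu)}\, \nu(x) \;\geq\; 2^{-K(P)} P(x) \qquad\text{for all } x \in \X^*;
\]
that is, $M \timesgeq P$ with dominance constant $c_P = 2^{-K(P)}$.

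Next I would bound the KL-divergence exactly as in the computation~\eqref{eq:KL-bound}: for every horizon $t$,
\[
\KL_t(P, M) \;=\; \sum_{x \in \X^t} P(x) \ln\frac{P(x)}{M(x)} \;\leq\; \sum_{x \in \X^t} P(x) \ln\frac{1}{c_P} \;=\; -\ln c_P \;=\; K(P)\ln 2 .
\]
Substituting $Q = M$ and this bound into Corollary~\ref{cor:expected-prediction-regret} gives
\[
\EE_P\!\left[ E_t^M - E_t^P \right] \;\leq\; 2\KL_t(P, M) + 2\sqrt{2\KL_t(P, M)\,\EE_P E_t^P} \;\leq\; 2K(P)\ln 2 + 2\sqrt{2K(P)\ln 2 \cdot \EE_P E_t^P},
\]
and it only remains to tidy the constants: $2K(P)\ln 2 = K(P)\ln 4$, and $2\sqrt{2K(P)\ln 2 \cdot \EE_P E_t^P} = \sqrt{8\ln 2 \cdot K(P)\,\EE_P E_t^P} = \sqrt{2\ln 16 \cdot K(P)\,\EE_P E_t^P}$, which is precisely the stated bound.

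The only step requiring genuine care is the first one --- pinning the dominance constant down to \emph{exactly} $2^{-K(P)}$ rather than merely $\timeseq 2^{-K(P)}$, since the clean constants $\ln 4$ and $\ln 16$ in the statement depend on it. This is why it is cleanest to invoke the mixture-with-semiprior description of $M$, for which $M(x) \geq 2^{-K(P)} P(x)$ holds literally; everything afterwards is the bookkeeping already carried out for Corollary~\ref{cor:expected-prediction-regret-dominance}, here specialized to $Q = M$ and $c_P = 2^{-K(P)}$.
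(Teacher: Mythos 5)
Your proposal is correct and follows exactly the paper's own route: the paper's proof is simply ``From \autoref{cor:expected-prediction-regret} and $c_P = 2^{-K(P)}$,'' i.e.\ dominance of $M$ over every computable measure with constant $2^{-K(P)}$, the KL bound \eqref{eq:KL-bound}, and the same constant bookkeeping ($2K(P)\ln 2 = K(P)\ln 4$, $8\ln 2 = 2\ln 16$). Your extra care about where the constant $2^{-K(P)}$ comes from (the mixture-with-semiprior representation of $M$) just makes explicit what the paper leaves implicit.
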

\begin{proof}
From \autoref{cor:expected-prediction-regret} and $c_P = 2^{-K(P)}$.
\end{proof}

\begin{remark}[Converging Fast and Slow]
\label{rem:converging-fast-and-slow}
\index{Solomonoff!prior}
The convergence of $M$ to a computable $P$ is fast
in the sense of \autoref{cor:expected-prediction-regret-M}:
$M$ cannot make many more prediction errors than $P$ in expectation.
When predicting an infinite computable sequence $x_{1:\infty}$,
the total number of prediction errors is bounded by $|p|2\ln 2 \approx 1.4|p|$
where $p$ is a program that generates $x_{1:\infty}$%
~(\autoref{ex:predicting-deterministic-distribution}).

The convergence of $M$ to $P$ is also slow in the sense that
$M(x_t \mid x_{<t}) \to 1$ slower than any computable function
since $1 - M(x_t \mid x_{<t}) \timesgeq 2^{-\min_{n \geq t} K(n)}$ for all $t$.
\end{remark}

The bound from \autoref{cor:expected-prediction-regret-M} is not optimal.
Even if we knew the program $p$ generating the sequence $x_{1:\infty}$,
there might be a shorter program $p'$ that computes $x_{1:\infty}$;
hence the improved bound $E_\infty^M \leq |p'|2\ln 2$ also holds.
Since Kolmogorov complexity is incomputable,
we can't find the `best' bound algorithmically.


Solomonoff induction may even converge on some incomputable measures.

\begin{example}[$M$ Converges on Some Incomputable Measures]
\label{ex:M-converges-on-some-incomputable-measures}
\index{Solomonoff!prior}
Let $r$ be an incomputable real number.
Then the measure $P := \Bernoulli(r)$ is not computable
and $M$ is not absolutely continuous with respect to $P$:
for
\[
A := \left\{ x \in \X^\infty \;\middle|\;
       \lim_{t \to \infty} \mathrm{ones}(x_{1:t}) = r
     \right\}
\]
we have $P(A) = 1$ but $M(A) = 0$.
Since $M \gg_L P$ we get from \autoref{thm:Blackwell-Dubins-converse}
that $M$ does not merge with $P$.
Nevertheless, $M$ still succeeds at prediction
because it dominates $\Bernoulli(q)$ for each rational $q$ and
the rationals are dense around $r$.
According to \citet[Lem.~3]{LS:1996},
this implies that $M$ weakly dominates $P$ and
by \autoref{thm:almost-weak-merging} $M$ almost weakly merges to $P$.
\end{example}

The fact that $M$ does not merge strongly with every $\Bernoulli(r)$ process
is not a failure of Solomonoff's prior.
\citet[p.~7]{Ryabko:2010} shows that
for the class of all Bernoulli measures
there is no probability measure that merges strongly with each of them.

\index{Turing machine!universal}\index{invariance theorem}
The definition of $M$ has only one parameter:
the choice of the universal Turing machine.
The effect of this choice on the function $K$
can be uniformly bounded by a constant by the \emph{invariance theorem}%
\index{invariance theorem}~\citep[Thm.~3.1.1]{LV:2008}.
Hence
the choice of the UTM changes the prediction regret bound from
\autoref{cor:expected-prediction-regret-M} only by a constant.
This constant can be large,
preventing any finite-time guarantees that are independent of the UTM.
However, asymptotically Solomonoff induction succeeds
even for terrible choices of the UTM.

The Solomonoff normalization $M\norm$ of $M$ is
defined according to \autoref{def:Solomonoff-normalization}.
While $M\norm$ dominates $M$ according to \autoref{lem:nu-norm-dominates-nu}
and thus every lower semicomputable semimeasure,
in some respects, $M\norm$ behaves a little differently from $M$.
Another way to complete the semimeasure $M$ into a measure
is given in the following example.

\begin{example}[{The Measure Mixture; \citealp[p.~74]{Gacs:1983}}]
\label{ex:measure-mixture}\index{measure!mixture|textbf}
The \emph{measure mixture $\MM$}
is defined as
\begin{equation}\label{eq:def-MM}
\MM(x) := \lim_{n \to \infty} \sum_{y \in \X^n} M(xy).
\end{equation}
It is the same as $M$ except that
the contributions by programs that do not produce infinite strings are removed:
for any such program $p$,
let $k$ denote the length of the finite string generated by $p$.
Then for $|xy| > k$,
the program $p$ does not contribute to $M(xy)$,
hence it is excluded from $\MM(x)$.

Similarly to $M$,
the measure mixture $\MM$ is not a (probability) measure
since $\MM(\epsilon) < 1$;
but in this case normalization~\eqref{eq:normalization}
is just multiplication with the constant $1/\MM(\epsilon)$,
leading to the \emph{normalized measure mixture} $\MM\norm$.
\end{example}

Even though $M$ merges strongly with any computable measure $P$
with $P$-probability $1$,
\citet{LH:2013mnonconv,LH:2015mnonconv} show that
generally it does not hold for all Martin-Löf random sequences
(which also form a set of $P$-probability $1$).
\citet[Thm.~6]{HM:2007mlconvxx} construct
non-universal lower semicomputable semimeasures
that have this convergence property for all $P$-Martin-Löf random sequences.
For infinite nonrandom sequences whose bits are selectively
predicted by some total recursive function,
\citet[Thm.~10]{LHG:2011evenbits} show that
the normalized Solomonoff measure
$M\norm$ converges to $1$ on the selected bits.
This does not hold for
the unnormalized measure $M$~\citep[Thm.~12]{LHG:2011evenbits}.

\subsection{The Speed Prior}
\label{ssec:speed-prior}
\index{prior!speed}

Solomonoff's prior $M$ is incomputable~(\autoref{thm:complexity-M});
a computable alternative is the speed prior from \autoref{ex:speed-prior}.
In this section we state merging and prediction results for $S_{Kt}$,
a speed prior introduced by \citet{FLH:2016speed}
formally defined in \autoref{ex:speed-prior}.
It is slightly different from
the speed prior defined by \citet{Schmidhuber:2002},
but for the latter no compatibility properties are known
for nondeterministic measures.

\begin{definition}[Estimable in Polynomial Time]
\label{def:estimable-in-polytime}
\index{estimable in polynomial time|textbf}
A function $f: \X^* \to \mathbb{R}$ is \emph{estimable in polynomial time} iff
there is a function $g: \X^* \to \mathbb{R}$ computable in polynomial time
such that $f \timeseq g$.
\end{definition}

For a measure $P$ estimable in polynomial time
the speed prior $S_{Kt}$ dominates $P$ with coefficients
polynomial in $|x| - \log P(x)$~\citep[Eq.~12]{FLH:2016speed}.
Thus $S_{Kt}$ weakly dominates $P$ and
we get the following results.

\begin{corollary}[Almost Weak Merging for $S_{Kt}$]
\label{cor:merging-SKt}
\index{merging!almost weak}\index{prior!speed}
\index{estimable in polynomial time}
$S_{Kt}$ almost weakly merges with every measure estimable in polynomial time.
\end{corollary}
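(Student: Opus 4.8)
The plan is to reduce the statement to \autoref{thm:almost-weak-merging} by showing that $S_{Kt}$ weakly dominates every measure estimable in polynomial time; the merging conclusion is then immediate. So the content lies entirely in verifying weak dominance, and the only subtlety is that the coefficient bound available for $S_{Kt}$ is data-dependent rather than a plain function of $|x|$, so \autoref{prop:compatibility}\ref{itm:dominance-with-coefficients=>weak-dominance} does not literally apply.

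First I would invoke the bound of \citet[Eq.~12]{FLH:2016speed}: if $P$ is estimable in polynomial time, there is a polynomial $q$ and a constant $c > 0$ such that $S_{Kt}(x) \geq c\, P(x) / q\big(|x| - \log P(x)\big)$ for all $x \in \X^*$. Hence along a prefix $x_{1:t}$,
\[
\frac{1}{t} \log \frac{S_{Kt}(x_{1:t})}{P(x_{1:t})}
\;\geq\; \frac{\log c - \log q\big(t - \log P(x_{1:t})\big)}{t}.
\]
I would then show that $-\log P(x_{1:t}) \in O(t)$ with $P$-probability $1$: fixing $\lambda > \log \#\X$, the set $B_t := \{ x \in \X^t \mid P(x) < 2^{-\lambda t}\}$ has at most $(\#\X)^t$ elements each of $P$-mass below $2^{-\lambda t}$, so $P[x_{1:t} \in B_t] < 2^{-t(\lambda - \log\#\X)}$, which is summable in $t$; by Borel--Cantelli, $P$-almost surely $-\log P(x_{1:t}) \leq \lambda t$ for all large $t$. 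Consequently the argument of $q$ is $O(t)$ and $\log q(\cdot) \in O(\log t)$, so the displayed lower bound tends to $0$. Together with the general upper bound $\limsup_t \tfrac{1}{t}\log \tfrac{S_{Kt}(x_{1:t})}{P(x_{1:t})} \leq 0$, which holds because $S_{Kt}(x_{1:t})/P(x_{1:t})$ is a nonnegative $P$-supermartingale and hence converges $P$-almost surely, this yields $\tfrac{1}{t}\log \tfrac{S_{Kt}(x_{1:t})}{P(x_{1:t})} \to 0$ $P$-almost surely, i.e., $S_{Kt} \timesgeq_W P$ in the sense of \autoref{def:weak-dominance}.

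Finally, \autoref{thm:almost-weak-merging} applied to $Q := S_{Kt}$ gives that $S_{Kt}$ merges almost weakly with $P$, which is the claim. The main obstacle is the middle step---pinning down that $-\log P(x_{1:t})$ grows at most linearly $P$-almost surely, so that the data-dependent polynomial coefficients collapse to ordinary (hence subexponential) ones; everything else is a direct appeal to results already available in the excerpt.
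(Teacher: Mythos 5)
Your proposal follows essentially the same route as the paper: the paper's proof is the one-line observation that the result follows from \autoref{thm:almost-weak-merging} and the coefficient bound of \citet[Eq.~12]{FLH:2016speed}, "since $\log P$ does not grow superexponentially $P$-almost surely." Your Borel--Cantelli argument for the almost-sure linear growth of $-\log P(x_{1:t})$ and the supermartingale justification of the $\limsup \leq 0$ bound (cf.\ \citealp[Rem.~8]{LS:1996}) simply fill in the details the paper leaves implicit, and they are correct.
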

\begin{proof}
From \autoref{thm:almost-weak-merging}
and \citet[Eq.~12]{FLH:2016speed}
since $\log P$ does not grow superexponentially $P$-almost surely.
\end{proof}

\begin{corollary}[{Expected Prediction Regret for $S_{Kt}$;
\citealp[Thm.~9]{FLH:2016speed}}]
\label{cor:expected-prediction-regret-SKt}
\index{prediction!expected regret}\index{prior!speed}
\index{estimable in polynomial time}
For all measures $P$ estimable in polynomial time,
\[
    \EE_P \left[ E_n^{S_{Kt}} - E_n^P \right]
\in O \left( \log n + \sqrt{\EE_P E_\infty^P \log n} \right).
\]
\end{corollary}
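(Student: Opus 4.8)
The plan is to reduce the statement to \autoref{cor:expected-prediction-regret}, which for any belief distribution $Q$ bounds the expected prediction regret by $\EE_P[E_n^Q - E_n^P] \le 2\KL_n(P,Q) + 2\sqrt{2\,\KL_n(P,Q)\,\EE_P E_n^P}$. I would apply this with $Q := S_{Kt}$; since $E_n^P \le E_\infty^P$ pointwise we have $\EE_P E_n^P \le \EE_P E_\infty^P$ (and the asserted bound is vacuous when this is infinite), so it suffices to prove $\KL_n(P, S_{Kt}) \in O(\log n)$.

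For that KL bound I would start from the dominance-with-coefficients estimate for the speed prior: by \citet[Eq.~12]{FLH:2016speed}, for a measure $P$ estimable in polynomial time there are a constant $c > 0$ and a polynomial $\mathrm{pol}$ (with nonnegative coefficients, without loss of generality) such that $S_{Kt}(x) \ge c\,P(x)/\mathrm{pol}(|x| - \log P(x))$ for all $x \in \X^*$. Repeating the computation \eqref{eq:KL-bound} behind \autoref{lem:KL-divergence-dominance-with-coefficients} but carrying along the $x$-dependent coefficient gives
\[
\KL_n(P, S_{Kt}) = \sum_{x \in \X^n} P(x) \log \frac{P(x)}{S_{Kt}(x)} \le -\log c + \sum_{x \in \X^n} P(x)\,\log\mathrm{pol}\big(n - \log P(x)\big).
\]
Using $\log\mathrm{pol}(y) \le a\log(1+y) + b$ for suitable constants $a, b$ and all $y \ge 0$, together with the concavity of $\log$ (Jensen's inequality), the last sum is at most $a\log\big(1 + n - \sum_{x \in \X^n} P(x)\log P(x)\big) + b$. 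The length-$n$ marginal of $P$ is a probability distribution, so its entropy $-\sum_{x \in \X^n} P(x)\log P(x)$ is at most $n\log\#\X$; hence $\KL_n(P, S_{Kt}) \le a\log\big(1 + n(1+\log\#\X)\big) + b - \log c \in O(\log n)$. Substituting this into \autoref{cor:expected-prediction-regret} and bounding $\EE_P E_n^P \le \EE_P E_\infty^P$ then yields $\EE_P[E_n^{S_{Kt}} - E_n^P] \in O\big(\log n + \sqrt{\EE_P E_\infty^P\,\log n}\big)$, as required.

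I expect the KL-divergence estimate to be the only real obstacle: because the dominance coefficient depends on $x$ through the possibly unbounded quantity $-\log P(x)$, one cannot invoke \autoref{lem:KL-divergence-dominance-with-coefficients} verbatim, and the fix --- the Jensen step combined with the crude entropy bound $-\sum_{x\in\X^n}P(x)\log P(x) \le n\log\#\X$ --- is precisely what keeps the argument of the outer logarithm polynomial in $n$, hence its value in $O(\log n)$. The reduction to \autoref{cor:expected-prediction-regret} and the closing arithmetic are routine.
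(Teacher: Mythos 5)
Your reduction is exactly the paper's: the official proof is simply ``from \autoref{cor:expected-prediction-regret} and \citet[Eq.~14]{FLH:2016speed}'', i.e.\ the same application of the general regret bound together with a $\KL_n(P, S_{Kt}) \in O(\log n)$ estimate imported from the speed-prior paper. Your only deviation is that you re-derive that KL bound yourself from the dominance-with-coefficients property (Eq.~12) via Jensen's inequality and the entropy bound $-\sum_{x \in \X^n} P(x)\log P(x) \leq n \log \#\X$, which is a sound, self-contained substitute for the citation.
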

\begin{proof}
From \autoref{cor:expected-prediction-regret}
and \citet[Eq.~14]{FLH:2016speed}.
\end{proof}

\subsection{Universal Compression}
\label{ssec:universal-compression}
\index{compression|see {universal compression}}

Solomonoff's distribution can be approximated
using a standard compression algorithm,
motivated by the similarity $M(x) \approx 2^{-\Km(x)}$,
where $\Km$ denotes monotone Kolmogorov complexity%
\index{Kolmogorov complexity!monotone}.
The function $\Km$ is a \emph{universal compressor}%
\index{universal!compression},
compressing at least as well as any other recursively enumerable program.

\citet{Gacs:1983} shows that
the similarity $M \approx 2^{-\Km}$ is not an equality.
However, the difference between $-\log M$ and $\Km$ is very small:
the best known lower bound is due to \citet{Day:2011}
who shows that $\Km(x) > -\log M(x) + O(\log \log |x|)$
for infinitely many $x \in \X^*$.

Nevertheless,
$2^{-\Km}$ dominates every computable measure%
~(\citealp[Thm.~4.5.4 and Lem.~4.5.6ii(d)]{LV:2008};
originally proved by \citealp{Levin:1973}).
Hence all the strong results that hold for Solomonoff induction~%
(prediction regret and strong merging) also hold for compression:
we apply \autoref{thm:Blackwell-Dubins} and
\autoref{cor:expected-prediction-regret}
to get the following results.
See \citet{Hutter:2006Km} for further discussion
on using the universal compressor $\Km$ for learning.

\begin{corollary}[Strong Merging for Universal Compression]
\label{cor:strong-merging-Km}
\index{universal!compression}
\index{merging!strong}
The distribution $2^{-\Km(x)}$ merges strongly with every computable measure.
\end{corollary}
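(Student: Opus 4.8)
The plan is to reuse the proof of \autoref{cor:strong-merging-M} almost verbatim, with $2^{-\Km}$ in place of $M$. First I would invoke the dominance fact recalled immediately above (\citealp[Thm.~4.5.4 and Lem.~4.5.6ii(d)]{LV:2008}; originally \citealp{Levin:1973}): $2^{-\Km}$ dominates every computable measure, so $2^{-\Km} \timesgeq P$ for every computable probability measure $P$. Then \autoref{prop:compatibility}\ref{itm:dominance=>absolute-continuity} gives $2^{-\Km} \gg P$, i.e.\ $P$ is absolutely continuous with respect to $2^{-\Km}$, and the Blackwell--Dubins merging-of-opinions theorem (\autoref{thm:Blackwell-Dubins}) then yields that $2^{-\Km}$ merges strongly with $P$, which is exactly the claim.

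The only point that needs care, and the one I expect to be the main obstacle, is that $2^{-\Km}$, like $M$, is only (up to a multiplicative constant) a lower semicomputable semimeasure rather than a genuine probability measure, whereas the merging framework of \autoref{sec:merging} is phrased for probability measures. I would handle this by passing to the Solomonoff normalization $(2^{-\Km})\norm$ of \autoref{def:Solomonoff-normalization}. By \autoref{lem:nu-norm-dominates-nu} we have $(2^{-\Km})\norm(x) \geq 2^{-\Km(x)}$ for all $x \in \X^*$, which together with $2^{-\Km} \timesgeq P$ gives $(2^{-\Km})\norm \timesgeq P$; thus $(2^{-\Km})\norm$ is a bona fide probability measure dominating every computable $P$, and since normalization preserves every predictive conditional by construction, strong merging of $(2^{-\Km})\norm$ with $P$ coincides with strong merging of $2^{-\Km}$ with $P$. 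Applying \autoref{prop:compatibility}\ref{itm:dominance=>absolute-continuity} and \autoref{thm:Blackwell-Dubins} to $(2^{-\Km})\norm$ and $P$ then closes the argument.

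Alternatively, one can dispense with normalization altogether by using the canonical identification of lower semicomputable semimeasures with probability measures on $\X^\sharp$ that is in force throughout \autoref{sec:AIT}; under that identification $2^{-\Km}$ is already a probability measure dominating $P$ and \autoref{thm:Blackwell-Dubins} applies directly. Either way the statement is a two-line corollary of dominance plus Blackwell--Dubins, exactly parallel to \autoref{cor:strong-merging-M} and \autoref{cor:expected-prediction-regret-M}.
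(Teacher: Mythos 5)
Your proposal matches the paper's own argument: the paper proves this exactly as it proves \autoref{cor:strong-merging-M}, by citing the dominance of $2^{-\Km}$ over every computable measure and then applying \autoref{prop:compatibility}\ref{itm:dominance=>absolute-continuity} together with \autoref{thm:Blackwell-Dubins}. Your additional discussion of the semimeasure issue (normalization or the canonical identification with measures on $\X^\sharp$) is extra care the paper handles implicitly via the identification from \autoref{sec:AIT}, and does not change the route.
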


\begin{corollary}[Expected Prediction Regret for Universal Compression]
\label{cor:expected-prediction-regret-Km}
\index{universal!compression}
\index{prediction!expected regret}
For $Q(x) := 2^{-\Km(x)}$ and
for all computable measures $P$ there is a constant $c_P$ such that
\[
     \EE_P \left[ E_t^Q - E_t^P \right]
\leq c_P + \sqrt{c_P \EE_P E_t^P}.
\]
\end{corollary}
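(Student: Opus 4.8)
The plan is to imitate the proof of \autoref{cor:expected-prediction-regret-M}, with Solomonoff's prior $M$ replaced by the universal compressor $Q(x) := 2^{-\Km(x)}$ and the domination of $M$ replaced by the fact recalled above---that $2^{-\Km}$ dominates every computable measure~\citep[Thm.~4.5.4 and Lem.~4.5.6ii(d)]{LV:2008}.

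First I would fix a computable measure $P$ and extract from this domination a constant $d_P > 0$ with $Q(x) \geq d_P P(x)$ for all $x \in \X^*$; note $d_P \leq Q(\epsilon) \leq 1$, so $-\log d_P \geq 0$. The estimate \eqref{eq:KL-bound} then applies verbatim and gives $\KL_n(P, Q) \leq -\log d_P$ for every $n$, bounding the KL-divergence uniformly in time.

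Next I would substitute this uniform bound into \autoref{cor:expected-prediction-regret}, which yields
\[
0 \;\leq\; \EE_P \big[ E_t^Q - E_t^P \big] \;\leq\; -2 \log d_P + 2\sqrt{ -2(\log d_P)\, \EE_P E_t^P } .
\]
Finally, setting $c_P := 8 \max\{1, -\log d_P\}$ and using $-2\log d_P \leq c_P$ together with $2\sqrt{2} = \sqrt{8}$ and $-8\log d_P \leq c_P$, I obtain the claimed form $\EE_P[E_t^Q - E_t^P] \leq c_P + \sqrt{c_P\, \EE_P E_t^P}$.

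The argument is completely routine; there is essentially no obstacle once the domination result is invoked. The only points needing a line of care are the constant bookkeeping (the paper's $\log$ is binary, so $\KL$ and $\Km$ are in compatible units and no conversion factor is lost) and the degenerate case $d_P = 1$, where $Q \geq P$ forces $\KL_n(P,Q) = 0$ and hence zero expected prediction regret, still covered by the stated bound.
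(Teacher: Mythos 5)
Your proposal is correct and matches the paper's own argument: the paper proves this corollary exactly by invoking the dominance of $2^{-\Km}$ over all computable measures and then applying \autoref{cor:expected-prediction-regret} with the uniform KL bound \eqref{eq:KL-bound}, just as for \autoref{cor:expected-prediction-regret-M}. Your explicit constant bookkeeping (choosing $c_P$ to absorb both the additive and the square-root factors) is a harmless refinement the paper leaves implicit.
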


This provides a theoretical basis for
viewing compression as a general purpose learning algorithm.
In this spirit,
the \emph{Hutter prize}\index{Hutter prize} is awarded for
the compression of a 100MB excerpt from the English Wikipedia%
~\citep{HutterPrize}.

Practical compression algorithms
(such as the algorithm by \citet{ZL:1977} used in \texttt{gzip})
are not universal.
Hence they do not dominate every computable distribution.
As with the speed prior,
what matters is the rate at which $Y_t = Q(x_{1:t})/P(x_{1:t})$ goes to $0$,
i.e., does the compressor weakly dominate the true distribution
in the sense of \autoref{def:weak-dominance}?

\citet{VBHCD:2014} successfully apply
the Lempel-Ziv compression algorithm
as a learning algorithm for reinforcement learning;
however, some preprocessing of the data is required.
More remotely, \citet{VBCM:2009nid} use standard compression algorithms
to classify mammal genomes, languages, and classical music.

\section{Summary}
\label{sec:learning-summary}

\begin{table}
\begin{center}
\begingroup
\def\arraystretch{1.5}
\begin{tabular}{lllp{50mm}}
\toprule
name & symbol & defined in & property \\
\midrule
Bayesian mixture & $\xi$
                 & \autoref{ex:Bayesian-mixture}
                 & dominates every $P \in \M$ \\
Solomonoff prior & $M$
                 & \autoref{ex:Solomonoff-prior}
                 & dominates every lower semicomputable semimeasure \\
universal compression & $2^{-\Km}$
                 & \autoref{eq:Km}
                 & dominates every computable measure \\
speed prior      & $S_{Kt}$
                 & \autoref{ex:speed-prior}
                 & weak dominates every measure estimable in polytime \\
Laplace rule     & $\rho_L$
                 & \autoref{ex:Laplace-rule}
                 & merges weakly with every Bernoulli process \\
MDL              & $\MDL^x$
                 & \autoref{ex:MDL}
                 & merges strongly with every $P \in \M$ \\
\bottomrule
\end{tabular}
\endgroup
\end{center}
\index{compatibility}\index{Bayesian!mixture}\index{Solomonoff!prior}
\index{universal!compression}\index{prior!speed}\index{Laplace rule}\index{MDL}
\caption[Examples of learning distributions]{
Examples of learning distributions discussed in this chapter
and their properties.
}\label{tab:learning-distributions}
\end{table}

\begin{table}
\begin{center}
\begingroup
\def\arraystretch{1.5}
\begin{tabular}{p{21mm}p{30mm}p{25mm}l}
\toprule
compatibility of $P$ and $Q$ & martingale & merging & prediction regret \\
\midrule
$Q \timesgeq P$
	& $Y_t \geq c > 0$
	& strong merging
	& $-2\ln c + 2\sqrt{-2\EE_P E^P_t \ln c}$ \\
$Q \gg P$
	& $Y_t \not\to 0$
	& strong merging
	& \parbox{47mm}{$O \big( \log \log t + $ \\ \phantom{.}\qquad\qquad$\sqrt{\EE_P E_t^P \log \log t} \big)$} \\
~
	& $Y_{t+1}/Y_t \to 1$
	& weak merging
	& $o(t)$ \\
$Q \geq P/f$
	& $Y_t \geq 1/f(t)$
	& ~
	& $2\ln f(t) + 2\sqrt{2\EE_P E^P_t \ln f(t)}$ \\
$Q \timesgeq_W P$
	& $\log(Y_{t+1}/Y_t) \to 1$ in Cesàro average
	& almost weak merging
	& $o(t)$ \\
$Q \gg_L P$
	& $Y_t > 0$
	& ~
	& $O(t)$ \\
\bottomrule
\end{tabular}
\endgroup
\end{center}
\caption[Summary on properties of learning]{
Summary on properties of learning.
The first column lists different notions of compatibility
introduced in \autoref{sec:compatibility};
the second column lists properties
of the $P$-martingale $Y_t := Q(x_{1:t}) / P(x_{1:t})$
from \autoref{sec:martingales};
the third column lists different notions of merging
discussed in \autoref{sec:merging};
the fourth column states the bounds on the prediction regret
(in expectation and almost surely respectively)
from \autoref{sec:predicting}.
\autoref{fig:learning-overview} illustrates the origin of the results.
}
\label{tab:learning-overview}
\end{table}

\begin{figure}
\begin{center}
\begin{tikzpicture}
\node (sm)  at (-6,7) {strong merging};
\node (wm)  at (-6,5) {weak merging};
\node (awm) at (-6,1) {\parbox{14mm}{\centering almost weak merging}};

\node (dom)   at (0,9) {$Q \timesgeq P$};
\node (ac)    at (0,7) {$Q \gg P$};
\node (domwc) at (0,4) {$Q \geq P/f$};
\node (wdom)  at (0,2) {$Q \timesgeq_W P$};
\node (lac)   at (0,0) {$Q \gg_L P$};

\node (bounded)   at (5,9) {$Y_t \geq c > 0$};
\node (nottozero) at (5,7) {$Y_t \not\to 0$};
\node (to1)       at (5,5) {$Y_{t+1} / Y_t \to 1$};
\node (to0Ca)     at (5,2) {\parbox{30mm}{\centering%
	$\log(Y_{t+1} / Y_t) \to 0$
	in Cesàro average}};
\node (nonzero)   at (5,0) {$Y_t > 0$};

\footnotesize

\draw[->] (sm) to (wm);
\draw[->] (wm) to (awm);

\draw[->] (dom) to (ac);
\draw[->] (dom) to[out=315, in=45] (domwc);
\draw[->] (domwc) to node[right] {$f \in o(t)$} (wdom);
\draw[->] (wdom) to (lac);

\draw[->] (bounded) to (nottozero);
\draw[->] (nottozero) to (to1);
\draw[->] (to1) to (to0Ca);
\draw[->] (to0Ca) to (nonzero);

\draw[<->] (dom) to (bounded);
\draw[<->] (ac) to node[above] {\citet[Lem.~3i]{Hutter:2009MDL}} (nottozero);
\draw[<->] (wdom) to (to0Ca);
\draw[<->] (lac) to (nonzero);

\draw[->] (ac) to[out=170, in=10] node[above] {\citet{BD:1962}} (sm);
\draw[->] (sm) to[out=-10, in=190] node[below] {\citet[Thm.~2]{KL:1994}, $\gg_L$} (ac);
\draw[->] (wdom) to[out=180, in=30] node[above] {\parbox{40mm}{\centering%
	\citet[Thm.~4]{LS:1996}
}} (awm);
\draw[->] (awm) to[out=0, in=210] node[below] {\parbox{35mm}{\centering%
\vspace{2mm}%
\citet[Cor.~7]{LS:1996}, \\
$\gg_L$ and
$\liminf\limits_{t \to \infty} Y_{t+1} / Y_t > 0$
}} (wdom);

\draw[<->] (wm) to node[above] {\citet[Prop.~5]{KL:1994}~~~~~} (to1);
\end{tikzpicture}
\end{center}
\caption[Properties of learning]{
Properties of learning and their relationship.
We use $Y_t := Q(x_{1:t}) / P(x_{1:t})$.
An arrow between two statements means that one statement implies the other.
The transitive property of implications is not made explicit.
The source of the result is indicated on the arrow,
sometimes together with a side condition.
If no source is given,
then the relationship is easy
and a proof can be found in this chapter.
}
\label{fig:learning-overview}
\end{figure}

Ultimately, whether learning succeeds depends on the rate at which
the nonnegative $P$-martingale $Q/P$ goes to $0$ (when drawing from $P$).
If $Q/P$ does not converge to zero,
then $Q$ merges strongly with $P$ and
thus arrives at correct beliefs about any hypothesis, including tail events%
\index{tail event}.
If $Q/P$ converges to zero subexponentially,
then $Q$ merges almost weakly with $P$ and
thus asymptotically has incorrect beliefs about the immediate future
only a vanishing fraction of the time.

\autoref{cor:expected-prediction-regret} bounds the expected prediction regret
by the KL-divergence between $P$ and $Q$ plus a $\sqrt{\EE_P E^P_t}$ term.
The KL-divergence is in turn bounded by the rate at which $Q/P$ goes to zero.
It is constant if $Q$ dominates $P$
and bounded by $\ln f$ if $Q$ dominates $P$ with coefficients $f$.
If $Q$ weakly dominates $P$, then the KL-divergence is sublinear.
We also derived bounds on the prediction regret for absolute continuity%
~(\autoref{ssec:prediction-absolute-continuity}).
Remarkably, the bounds are only $\log \log t$ worse
than the bound we get from dominance.
Moreover, they hold almost surely instead of in expectation.

Next, we showed that the $\sqrt{\EE_P E^P_t}$ term is generally unimprovable%
~(\autoref{ex:prediction-lower-bounds}).
However, it comes only from predicting measures that assign probabilities
close to $1/2$.
If we can bound $P$ away from $1/2$,
then the $\sqrt{\EE_P E^P_t}$ term disappears%
~(\autoref{thm:expected-prediction-regret-nonuniform} and
\autoref{thm:prediction-regret-nonuniform}).

\autoref{tab:learning-distributions} lists our learning distributions.
The Bayesian mixture is the strongest since it dominates every measure
from the given class $\M$~(\autoref{ex:Bayesian-mixture}).
The minimum description length model $\MDL^x$ does not have this property,
yet it still merges strongly with every measure from the class%
~(\autoref{ex:MDL} and \autoref{thm:MDL-merges-strongly}).
The Laplace rule is only useful for learning i.i.d.\ measures;
it merges weakly with every Bernoulli process%
~(\autoref{ex:Laplace-rule} and \autoref{ex:Laplace-rule2}).
We also discussed some learning distributions
from algorithmic information theory.
Solomonoff's prior
is a Bayesian mixture over all lower semicomputable semimeasures%
~(\autoref{ex:Solomonoff-prior} and \citealp{WSH:2011}).
Like the universal compressor it dominates and hence merges strongly with
all computable measures.
The speed prior dominates all probability measures estimable in polynomial time
with polynomial coefficients~(\autoref{ex:speed-prior}),
and thus merges weakly with each of them.

\autoref{tab:learning-overview} summarizes the results from this chapter and
\autoref{fig:learning-overview} illustrates their logical relationship
and their origin.

We conclude this chapter with a paradox from the philosophy of science.

\begin{remark}[The Paradox of Confirmation]
\label{rem:paradox-of-confirmation}
\index{black ravens}
Recall the black raven problem
introduced in \autoref{ex:black-ravens};
the hypothesis `all ravens are black' is denoted with $H$.
The \emph{paradox of confirmation}\index{paradox of confirmation},
also known as \emph{Hempel's paradox}%
\index{Hempel's paradox|see {paradox of confirmation}}~\citep{Hempel:1945},
relies on the following three principles.
\begin{itemize}
\item \emph{Nicod's criterion}\index{Nicod's criterion}%
	~\citep[p.~67]{Nicod:1961}:
	observing an $F$ that is a $G$
	increases our belief in the hypothesis that all $F$s are $G$s.
\item \emph{The equivalence condition}\index{equivalence condition}:
	logically equivalent hypotheses are confirmed or disconfirmed
	by the same evidence.
\item \emph{The paradoxical conclusion}:
	a green apple confirms $H$.
\end{itemize}
The argument goes as follows.
The hypothesis $H$ is logically equivalent to
the hypothesis $H'$ that all non-black objects are non-ravens.
According to Nicod's criterion,
any non-black non-raven, such as a green apple, confirms $H'$.
But then the equivalence condition
entails the paradoxical conclusion.

The paradox of confirmation has been discussed extensively
in the literature on the philosophy of science%
~\citep{Hempel:1945,Good:1960,Mackie:1963,Good:1967,Hempel:1967,Maher:1999,Vranas:2004};
see \citet{Swinburne:1971} for a survey.
Support for Nicod's criterion is not uncommon%
~\citep{Mackie:1963,Hempel:1967,Maher:1999} and no consensus is in sight.

A Bayesian reasoner might be tempted to argue that a green apple
\emph{does} confirm the hypothesis $H$, but only to a small degree,
since there are vastly more non-black objects than ravens~\citep{Good:1960}.
This leads to the acceptance of the paradoxical conclusion,
and this solution to the confirmation paradox is known as
the \emph{standard Bayesian solution}.
\citet{Vranas:2004} shows that
this solution is equivalent to
the assertion that blackness is equally probable
regardless of whether $H$ holds:
$P(\text{black} | H) \approx P(\text{black})$.

The following is a very concise example
against the standard Bayesian solution by \citet{Good:1967}:
There are two possible worlds,
the first has 100 black ravens and a million other birds,
while the second has 1000 black ravens, one white raven, and
a million other birds.
Now we draw a bird uniformly at random, and it turns out to be a black raven.
Contrary to what Nicod's criterion claims,
this is strong evidence that we are in fact in the second world,
and in this world non-black ravens exist.

For another, more intuitive example:
Suppose you do not know anything about ravens and
you have a friend who collects atypical objects.
If you see a black raven in her collection,
surely this would not
increase your belief in the hypothesis that all ravens are black.


In \citet{LH:2015ravens} we investigate the paradox of confirmation
in the context of Solomonoff induction.
We show that the paradoxical conclusion is avoided
because Solomonoff induction violates Nicod's criterion:
There are time steps when (counterfactually) observing a black raven
disconfirms the hypothesis that all ravens are black.
When predicting a deterministic computable sequence
Nicod's criterion is even violated infinitely often.
However, if we \emph{normalize}\index{Solomonoff!normalization}
Solomonoff's prior
and observe a deterministic computable infinite string,
Nicod's criterion is violated at most finitely many times.
These results are independent of the choice of the universal Turing machine.

We must conclude that violating Nicod's criterion is not
a fault of Solomonoff induction.
Instead, we should accept that for Bayesian reasoning
Nicod's criterion, in its generality, is false!
Quoting the great Bayesian master \citet[p.\ 144]{Jaynes:2003}:
\begin{quote}
In the literature there are perhaps 100 `paradoxes' and controversies
which are like this,
in that they arise from faulty intuition rather than faulty mathematics.
Someone asserts a general principle that seems to him intuitively right.
Then, when probability analysis reveals the error,
instead of taking this opportunity to educate his intuition,
he reacts by rejecting the probability analysis.
\qedhere
\end{quote}
\end{remark}


\chapter{Acting}
\label{cha:acting}

\falsequote{Immanuel Kant}{I ought never to act except in such a way that I could also will that my maxim should become a universal prior.}

Recall our decomposition of intelligence into
\emph{learning} and \emph{acting} from \autoref{eq:learning+acting}.
The previous chapter made the notion of learning precise and
provided several examples of learning distributions
for the non-i.i.d.\ setting~(see \autoref{tab:learning-distributions}).
Learning is passive:
there is no interaction with the data-generating process.
In this chapter we transition into the active setting:
we consider an \emph{agent}\index{agent}
acting in an unknown \emph{environment}\index{environment}
in order to achieve a \emph{goal}\index{goal}.
In our case, this goal is maximizing reward;
this is known as \emph{reinforcement learning}.
Where this reward signal originates does not concern us here.

In this thesis we consider is the \emph{general reinforcement learning problem}%
\index{general reinforcement learning problem}
in which we do not make several of the typical simplifying assumptions%
~(see \autoref{tab:assumptions-in-rl}).
Environments are only partially observable\index{partially observable},
have infinitely many states, and
might contain traps\index{trap} from which the agent cannot escape.
The context for making decision is
the agent's entire history;
its behavior is given by a \emph{policy}\index{policy}
that specifies how the agent behaves in any possible situation.

A central quantity in reinforcement learning is the \emph{value function}%
\index{value function}.
The value function quantifies the expected future discounted reward.
Since the agent seeks to maximize reward,
it aims to adopt a \emph{policy}\index{policy} that has high value.
Since the agent's environment is unknown to the agent,
learning the value function is part of the challenge;
otherwise we would call this planning\index{planning}.

If our agent is capable of learning in the sense of \autoref{cha:learning},
then it learns the value of its own policy%
~(\emph{on-policy value convergence}\index{on-policy!value convergence}).
However, generally the agent does not learn to predict
the value of counterfactual actions, actions that it does not take.
Learning off-policy\index{off-policy} is hard because
the agent receives no evidence about
what would have happened on counterfactual actions.
Nevertheless, off-policy learning is highly desirable because
we want the agent to be confident that the policy it is currently following
is in fact the best one;
we want it to accurately predict that the counterfactual actions have less value.

This brings us back to the central theme of reinforcement learning:
the tradeoff between \emph{exploration} and \emph{exploitation}%
\index{exploration vs.\ exploitation}.
Asymptotically the agent needs to focus on exploitation,
i.e., take the actions that it thinks yield the highest expected rewards.
If the agent explores enough,
then all actions are on-policy
because they are all actions that the agent sometimes takes.
Then on-policy learning ensures that the agent understands the consequences
of every action and can confidently choose the best action.
Effective exploration is performed by \emph{knowledge-seeking agents}%
\index{knowledge-seeking};
these agents ignore the rewards and just focus on exploration.

This chapter introduces the central concepts of general reinforcement learning.
It is mostly based on \citet{Hutter:2005} and \citet{Lattimore:2013}.
\autoref{sec:general-rl} specifies the general reinforcement learning problem,
discusses discounting~(\autoref{ssec:discounting}),
our implicit assumptions~(\autoref{ssec:implicit-assumptions}), and
typical environment classes~(\autoref{ssec:typical-environment-classes}).
\autoref{sec:the-value-function} discusses
the value function and its properties.
In \autoref{sec:agents} we introduce the agents:
AIXI~(\autoref{ssec:AIXI}),
knowledge-seeking agents~(\autoref{ssec:knowledge-seeking-agents}),
BayesExp~(\autoref{ssec:BayesExp}), and
Thompson sampling~(\autoref{ssec:Thompson-sampling}).

\section{The General Reinforcement Learning Problem}
\label{sec:general-rl}


In reinforcement learning,
an agent interacts with an environment:
at time step $t \in \mathbb{N}$ the agent takes
an \emph{action}\index{action|textbf} $a_t \in \A$ and
subsequently receives a \emph{percept}\index{percept|textbf}
$e_t = (o_t, r_t) \in \E$
consisting of an \emph{observation}\index{observation|textbf} $o_t \in \O$
and a \emph{reward}\index{reward|textbf} $r_t \in \mathbb{R}$.
This cycle then repeats for time step $t + 1$%
~(see \autoref{fig:dualistic-agent-model}).

\begin{figure}[t]
\begin{center}
\begin{tikzpicture}[scale=0.25] 
\draw (0,0) -- (10,0) -- (10,8) -- (0,8) -- (0,0);
\node[above left] at (10,0) {agent $\pi$};

\draw (18,0) -- (30,0) -- (30,8) -- (18,8) -- (18,0);
\node[above left] at (30,0) {environment $\mu$};

\draw[->] (10,4.5) to node[above] {$a_t$} (18,4.5);
\draw[<-] (10,3.5) to node[below] {$e_t = (o_t, r_t)$} (18,3.5);
\end{tikzpicture}
\end{center}
\caption[The dualistic agent model]{\index{agent model!dualistic}%
The dualistic agent model.
At every time step $t$, the agent outputs an action $a_t$ and
subsequently receives a percept $e_t$ consisting of
an observation $o_t$ and a real-valued reward $r_t$.
The agent's policy $\pi$ is a function that
maps a history $\ae_{<t}$ to the next action $a_t$, and
the environment $\mu$ is a function that
maps a history and an action to the next percept $e_t$.
}
\label{fig:dualistic-agent-model}
\end{figure}
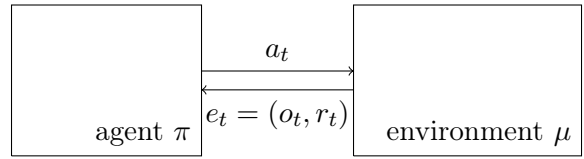

A \emph{history}\index{history|textbf} is an element of $\H$ and
lists the actions the agent took and the percepts it received.
We use $\ae \in \A \times \E$ to denote one interaction cycle,
and $\ae_{<t} = \ae_1 \ae_2 \ldots \ae_{t-1}$
to denote a history of length $t - 1$.
For our agent,
the history is a sufficient statistic about the past and
in general reinforcement learning there is no simpler sufficient statistic.

For example, consider the agent to be a robot interacting with the real world.
Its actions are moving the motors in its limbs and wheels and
sending data packets over a network connection.
Its observations are data from cameras and various other sensors.
The reward could be provided either by a human supervisor or
through a reward module that checks whether a predefined goal has been reached.
The history is the collection of
all the data it received and emitted in the past.
The division of the robot's interaction with the environment
into discrete time steps
might seem a bit unnatural at first
since the real world evolves according to a continuous process.
However, note that the electronic components used in robots operate
at discrete frequencies anyway.

In order to specify how the agent behaves in any possible situation,
we define a \emph{policy}\index{policy|textbf}:
a policy is a function $\pi: \H \to \Delta\A$
mapping a history $\ae_{<t}$ to a distribution over actions
$\pi(\,\cdot \mid \ae_{<t})$ taken after seeing this history.
Usually we do not distinguish between agent\index{agent} and policy.
An \emph{environment}\index{environment|textbf} is
a function $\nu: \H \times \A \to \Delta\E$ mapping
a history $\ae_{<t}$ and an action $a_t$
to a distribution $\nu(\,\cdot \mid \ae_{<t}a_t)$ over the percepts
received after the history $\ae_{<t}$ and action $a_t$.
We use $\mu$ to denote the true environment.

Equivalently, \citet{Hutter:2005} defines environments
as \emph{chronological contextual semi\-measures}.\footnote{%
	\citet{Hutter:2005} calls them
	\emph{chronological conditional semimeasures}.
	This is confusing because contextual semimeasures
	do \emph{not} specify conditional probabilities;
	the environment is \emph{not}
	a joint probability distribution over actions and percepts.
}
A \emph{contextual semimeasure}\index{semimeasure!contextual|textbf}
$\nu$ takes a sequence of actions $a_{1:\infty}$ as input
and returns a semimeasure $\nu(\,\cdot \dmid a_{1:\infty})$ over $\E^\sharp$.
A contextual semimeasure $\nu$ is \emph{chronological}%
\index{semimeasure!chronological|textbf} iff
percepts at time $t$ do not depend on future actions, i.e.,
$\nu(e_{1:t} \dmid a_{1:\infty}) = \nu(e_{1:t} \dmid a'_{1:\infty})$
whenever $a_{1:t} = a'_{1:t}$.
For chronological contextual semimeasures we write $\nu(e_{1:t} \dmid a_{1:t})$
instead of $\nu(e_{1:t} \dmid a_{1:\infty})$.
The two definition can be translated using the identities
\begin{equation}\label{eq:ccs}
  \nu(e_{1:t} \dmid a_{1:t})
= \prod_{k=1}^t \nu(e_k \mid \ae_{<k} a_k)
\qquad\text{and}\qquad
  \nu(e_t \mid \ae_{<t} a_t)
= \frac{\nu(e_{1:t} \dmid a_{1:t})}{\nu(e_{<t} \dmid a_{<t})}.
\end{equation}

If the policy $\pi$ always assigns probability $1$ to one of the actions,
then $\pi$ is called \emph{deterministic}\index{policy!deterministic|textbf}.
Likewise,
if the environment $\nu$ always assigns probability $1$ to one of the percepts,
then $\nu$ is called \emph{deterministic}\index{environment!deterministic|textbf}.
For deterministic policies and environments we also use the notation
$a_t = \pi(\ae_{<t})$ and $e_t = \nu(\ae_{<t} a_t)$.
A deterministic policy $\pi$ is \emph{consistent with history $\ae_{<t}$}%
\index{policy!consistent with history|textbf} iff
$a_k = \pi(\ae_{<k})$ for all $k < t$.
Likewise,
a deterministic environment $\nu$ is \emph{consistent with history $\ae_{<t}$}%
\index{policy!consistent with history|textbf} iff
$e_k = \nu(\ae_{<k}a_k)$ for all $k < t$.

\begin{definition}[History Distribution]
\label{def:history-distribution}\index{history!distribution|textbf}
An environment $\nu$ together with a policy $\pi$ induces
a \emph{history distribution}
\[
   \nu^\pi(\ae_{<t})
:= \prod_{k=1}^t \pi(a_k \mid \ae_{<k}) \nu(e_k \mid \ae_{<k}a_k).
\]
We denote an expectation with respect to the history distribution $\nu^\pi$
with $\EE^\pi_\nu$.
\end{definition}

The history distribution is a (semi)measure on $(\A \times \E)^\infty$.
In the language of measure theory,
our $\sigma$-algebra\index{s-algebra@$\sigma$-algebra} is
the $\sigma$-algebra $\Foo$ generated by the cylinder sets\index{cylinder set}
introduced in \autoref{sec:measure-theory}.
The filtration\index{filtration} $(\F_t)_{t \in \mathbb{N}}$ formalizes that
at time step $t$ we have seen exactly the history $\ae_{<t}$
(we use the $\sigma$-algebra $\F_{t-1}$).
To simplify notation and help intuition,
we simply condition expectations and probability measures
with the history $\ae_{<t}$ instead of $\F_{t-1}$
and sweep most of the measure-theoretic details under the rug.

With these preliminaries out of the way,
we can now specify the \emph{general reinforcement learning problem}.

\begin{problem}[General Reinforcement Learning Problem]
\label{prob:general-RL}
\index{general reinforcement learning problem|textbf}
\index{environment class}
Given an arbitrary class of environments $\M$,
choose a policy $\pi$ that maximizes $\mu^\pi$-expected reward
when interacting with any environment $\mu \in \M$.
\end{problem}

\autoref{prob:general-RL} is kept vague on purpose:
it does not say how we should balance between
achieving more rewards in some environments
while achieving less in others.
In other words, we leave open
what an \emph{optimal} solution to
the general reinforcement learning problem is.
This turns out to be a notoriously difficult question
that we discuss in \autoref{cha:optimality}.

\index{M@$\M$|see {environment class}}
As promised in the title of this thesis,
we take the \emph{nonparametric}\index{nonparametric} approach.
For the rest of this thesis,
fix $\M$\index{environment class} to be any countable set of environments.
While the true environment is unknown, we assume it belongs to the class $\M$
(the \emph{realizable case}\index{realizable case}).
As long as the class $\M$ is sufficiently large
(such as the class of all computable environments),
this assumption is weak.
Some typical choices are discussed in \autoref{ssec:typical-environment-classes}.

\index{agent model!dualistic}\index{agent model!physicalistic}%
Our agent-environment setup
shown in \autoref{fig:dualistic-agent-model}
is known as the \emph{dualistic model}\index{dualistic model}:
the agent is distinct from the environment and
influences it only through its actions.
In turn,
the environment influences the agent only through the percepts.
The dualism assumption is accurate for
an algorithm that is playing chess, Go, or other (video) games,
which explains why it is ubiquitous in AI research.
But often it is not true:
real-world agents are embedded in (and computed by) the environment,
and then a \emph{physicalistic model}\index{physicalistic model}
(also called \emph{materialistic model} or \emph{naturalistic} model)
is more appropriate.
Decision making in the physicalistic model is
still underdeveloped;
see \citet{ELH:2015sdt} and \citet{OR:2012}.
In this thesis we restrict ourselves to the dualistic model.

\subsection{Discounting}
\label{ssec:discounting}

The goal in reinforcement learning is to maximize rewards.
However, the infinite reward sum $\sum_{t=1}^\infty r_t$ may diverge.
To get around this technical problem,
we let our agent prioritize the present over the future.
This is done with a \emph{discount function}\index{discount!function}
that quantifies how much the agent prefers rewards now over rewards later.

\begin{definition}[Discount Function]
\label{def:discounting}\index{discount!function|textbf}
A \emph{discount function} is a function
$\gamma: \mathbb{N} \to \mathbb{R}$ with
$\gamma_t := \gamma(t) \geq 0$ and $\sum_{t=1}^\infty \gamma_t < \infty$.
The \emph{discount normalization factor}%
\index{discount!normalization factor|textbf} is
$\Gamma_t := \sum_{k=t}^\infty \gamma_k$.
\end{definition}

There is no requirement that $\Gamma_t > 0$.
In fact, we use $\gamma$ for both,
discounted infinite horizon~($\Gamma_t > 0$ for all $t$), and
finite horizon $m$~($\Gamma_{m-1} > 0$ and $\Gamma_m = 0$)
where the agent does not care what happens after time step $m$.

Note that the way in which we employ discounting is
\emph{time consistent}\index{time consistent}:
the agent does not change its mind about
how much it values the reward at time step $k$ over time:
reward $r_k$ is always discounted with $\gamma_k$
regardless of the current time step.
For a discussion of general discounting we refer the reader to
\citet{LH:2014discounting}.

\begin{definition}[Effective Horizon]
\label{def:effective-horizon}
\index{effective horizon|textbf}
\index{effective horizon!bounded|textbf}
The \emph{$\eps$-effective horizon}
$H_t(\varepsilon)$ is a horizon
that is long enough to encompass all but an $\varepsilon$
of the discount function's mass:
\[
H_t(\varepsilon) := \min \{ k \mid \Gamma_{t+k} / \Gamma_t \leq \varepsilon \}
\]
The effective horizon is \emph{bounded} iff
for all $\eps > 0$ there is a constant $c_\eps$ such that
$H_t(\eps) \leq c_\eps$ for all $t \in \mathbb{N}$.
\end{definition}

\begin{example}[Geometric Discounting]
\label{ex:geometric-discounting}\index{discounting!geometric}
The most common discount function is \emph{geometric discounting} with
$\gamma_t := \gamma^t$ for some constant $\gamma \in [0, 1)$.
We get that $\Gamma_t = \sum_{k=t}^\infty \gamma^k = \gamma^t / (1 - \gamma)$
and the $\eps$-effective horizon is $H_t(\eps) = \lceil \log_\gamma \eps \rceil$.
Hence the effective horizon is bounded.
\end{example}

More examples for discount functions are given in \autoref{tab:discounting}.
From now on, we fix a discount function $\gamma$.

\begin{table}[t]
\begin{center}
\begin{tabular}{lllll}
\toprule
& parameter & $\gamma_t$ & $\Gamma_t$ & $H_t(\eps)$ \\
\midrule
Finite horizon & $m \in \mathbb{N}$ & $\one_{\leq m}(t)/m$ & $(m-t+1)/m$ & $\lceil (m - t + 1)(1 - \eps) \rceil$ \\
Geometric & $\gamma \in (0,1)$ & $\gamma^t$ & $\gamma^t / (1 - \gamma)$ & $\lceil \log_\gamma \eps \rceil$ \\
Power & $\beta > 1$ & $t^{-\beta}$ & $\approx t^{-\beta+1} / (\beta-1)$ & $\approx (\eps^{1/(1-\beta)} - 1)t$ \\
Subgeometric & - & $e^{-\sqrt{t}} / \sqrt{t}$ & $\approx 2e^{-\sqrt{t}}$ & $\approx -\sqrt{t}\log \eps + (\log \eps)^2$ \\
\bottomrule
\end{tabular}
\end{center}
\caption[Discount functions and their effective horizons]{
Several discount functions and their effective horizons.
See also \citet[Tab.~5.41]{Hutter:2005} and \citet[Tab.~2.1]{Lattimore:2013}.
\index{discounting!geometric}\index{discounting!power}\index{discounting!finite horizon}\index{discounting!subgeometric}
}
\label{tab:discounting}
\end{table}

\subsection{Implicit Assumptions}
\label{ssec:implicit-assumptions}

Throughout this thesis,
we make the following assumptions implicitly.

\begin{assumption}
\label{ass:aixi}
\begin{enumerate}[(a)]
\item \label{ass:gamma-computable}\index{discount!function}
	The discount function $\gamma$ is computable.
\item \label{ass:bounded-rewards}\index{reward}
	Rewards are bounded between $0$ and $1$.
\item \label{ass:finite-actions-and-percepts}\index{action}\index{percept}
	The set of actions $\A$ and the set of percepts $\E$
	are both finite.
\end{enumerate}
\end{assumption}

Let's motivate these assumptions in turn.
Their purpose is to ensure that discounted reward sums are finite and
optimal policies exist.

\assref{ass:aixi}{ass:gamma-computable}
is a technical assumption that ensures that
discounted reward sums are computable.
This is important for \autoref{cha:computability}
and \autoref{cha:grain-of-truth-problem}
where we analyse the computability of optimal policies.
Note that
all discount functions given in \autoref{tab:discounting} are computable.

\assref{ass:aixi}{ass:bounded-rewards}
could be relaxed to require only that rewards are bounded.
We can rescale rewards $r_t \mapsto cr_t + d$
for any $c \in \mathbb{R}^+$ and $d \in \mathbb{R}$
without changing optimal policies
if the environment $\nu$ is a probability measure.
(For our computability-related results in \autoref{cha:computability},
we must assume that rewards are nonnegative.)
In this sense \assref{ass:aixi}{ass:bounded-rewards} is not very restrictive.
However, this normalization of rewards into the $[0,1]$-interval
has the convenient consequence that
the normalized discounted reward sum
$\sum_{k=t}^\infty \gamma_k r_k / \Gamma_k$
is bounded between $0$ and $1$.
If rewards are unbounded, then the discounted reward sum might diverge.
Moreover, with unbounded rewards there all kinds of pathological problems
where defining optimal actions is no longer straightforward;
see \citet{AEH:2004} for a discussion.

\assref{ass:aixi}{ass:finite-actions-and-percepts}
is a technical requirement for the existence of optimal policies
since it implies that there are only finitely many deterministic policies
that differ in the first $t$ time steps.
Note that finite action and percept spaces are very natural since
it ensures that our agent only receives and emits a finite amount of information
in every time step.
This is in line with the problems a strong AI is facing:
the agent has to remember important information and act sequentially.

\assref{ass:aixi}{ass:bounded-rewards},
\assref{ass:aixi}{ass:finite-actions-and-percepts}, and
the fact that the discount function is summable
guarantee that a deterministic optimal policy exists for every environment
according to \citet[Thm.~10]{LH:2014discounting}.
It would be interesting to relax these assumptions
while preserving the existence of optimal policies
or at least $\eps$-optimal policies
(e.g.\ use compact action and percept spaces).

\subsection{Typical Environment Classes}
\label{ssec:typical-environment-classes}\index{environment class}

The simplest reinforcement learning problems are multi-armed bandits.

\begin{definition}[Multi-Armed Bandit]
\label{def:bandit}\index{multi-armed bandit|textbf}
\index{bandit|see {multi-armed bandit}}
An environment $\nu$ is a \emph{multi-armed bandit} iff
$\O = \{ \bot \}$ and $\nu(e_t \mid \ae_{<t}a_t) = \nu(e_t \mid a_t)$
for all histories $\ae_{1:t} \in \H$.
\end{definition}

In a multi-armed bandit problem
there are no observations and
the next reward only depends on the previous action.
Intuitively, we are deciding between $\#\A$ different slot machines
(so-called one-armed bandits), pull the lever and obtain a reward.
The reward is stochastic, but it is drawn from a distribution that is
time-invariant and fixed for each arm.

A multi-armed bandit is also called \emph{bandit} for short.
Although bandits are the simplest reinforcement learning problem,
they already exhibit the exploration-exploitation-tradeoff%
\index{exploration vs.\ exploitation}
that makes reinforcement learning difficult:
do you pull an arm that has the best empirical mean or
do you pull an arm that has the highest uncertainty?
In bandits it is very easy to come up with policies that perform (close to)
optimal asymptotically~(e.g., $\eps_t$-greedy with $\eps_t = 1/t$).
But coming up with algorithms that perform well in practice is difficult,
and research focuses on the multiplicative and additive constants
on the asymptotic guarantees.
Bandits exist in many flavors; see \citet{BCB:2012bandits} for a survey.

\begin{definition}[Markov Decision Process]
\label{def:MDP}\index{MDP|textbf}
\index{Markov decision process|see {MDP}}
An environment $\nu$ is a \emph{Markov decision process}~(MDP) iff
$\nu(e_t \mid \ae_{<t}a_t) = \nu(e_t \mid o_{t-1}a_t)$
for all histories $\ae_{1:t} \in \H$.
\end{definition}

Intuitively, in MDPs, the previous observation $o_{t-1}$
provides a sufficient statistic for the history:
given $o_{t-1}$ and the current action $a_t$,
the next percept $e_t$ is independent of the rest of the history.
In other words, everything that the agent needs to know
to make optimal decisions
is readily available in the previous percept.
This is why observations are called \emph{states}\index{state} in MDPs.
Note that bandits are MDPs with a single state.

Much of today's literature on reinforcement learning focuses on MDPs%
~\citep{SB:1998}.
They provide a particularly good framework to study reinforcement learning
because they are simple enough to be tractable for today's algorithms,
yet general enough to encompass many interesting problems.
For example, most of the Atari games\index{Atari 2600}%
~(see \autoref{fig:Atari} for an overview)
are (deterministic) MDPs
when combining the previous four frames into one percept.
While they have a huge state space\footnote{%
The size of the state space is at most $256^{128}$
since the Atari 2600 has only 128 bytes of memory.
However, the vast majority of these states are not reachable.}
they can still be learned
using $Q$-learning with function approximation~\citep{MKSRV+:2015deepQ}.

The MDP framework is restrictive because it requires the agent
to be more powerful than the environment.
Since the agent learns,
its actions are not independent of the rest of the history
given the last action and percept.
In other words, learning agents are not Markov.
The following definition lifts this restriction
and allows the environment to be
\emph{partially observable}\index{partially observable}.

\begin{definition}[Partially Observable Markov Decision Process]
\label{def:POMDP}\index{POMDP|textbf}
An environment $\nu$ is a \emph{partially observable Markov decision process}%
~(POMDP) iff
there is a \emph{set of states} $\S$,
an \emph{initial state} $s_0 \in \S$,
a \emph{state transition function} $\nu': \S \times \A \to \Delta\S$, and
a \emph{percept distribution} $\nu'': \S \to \Delta\E$ such that
\[
  \nu(e_{1:t} \dmid a_{1:t})
= \prod_{k=1}^t \nu''(e_k \mid s_k) \nu'(s_k \mid s_{k-1}, a_k).
\]
\end{definition}

Usually the set $\S$ is assumed to be finite;
with infinite-state POMDPs we can model any environment $\nu$
by setting the set of states to be the set of histories, $\S := \H$.

A common assumption for MDPs and POMDPs is that
they do not contain traps\index{trap}.
Formally, a (PO)MDP is \emph{ergodic}\index{ergodic|textbf} iff
for any policy $\pi$ and any two states $s_1, s_2 \in \S$,
the expected number of time steps to reach $s_2$ from $s_1$
is $\mu^\pi$-almost surely finite.
A (PO)MDP is \emph{weakly communicating}\index{weakly communicating|textbf} iff
for any two states $s_1, s_2 \in \S$ there is a policy $\pi$ such that
the expected number of time steps to reach $s_2$ from $s_1$
is $\mu^\pi$-almost surely finite.
Note that any ergodic (PO)MDP is also weakly communicating,
but not vice versa.

In general, our environments are stochastic.
Stochasticity can originate from noise in the environment,
noise in the sensors,
or modeling errors.
Sometimes we also consider classes of deterministic environments.
These are usually easier to deal with because
they do not require as much mathematical machinery.
For example,
in a deterministic environment the next percept is certain;
if a different percept is received this environment is immediately falsified
and can be discarded.
In a stochastic environment,
an unlikely percept reduces our posterior belief in this environment
but does not rule it out completely.

In \autoref{cha:computability} and \autoref{cha:grain-of-truth-problem}
we make the assumption that the environment is computable.
This encompasses all finite-state POMDPs and
most if not all AI problems can be formulated in this setting.
Moreover, the current theories of
quantum mechanics and general relativity are computable and
there is no evidence that suggests that our physical universe is incomputable.
For any physical system of finite volume and finite (average) energy,
the amount of information it can contain is finite~\citep{Bekenstein:1981}, and
so is the number of state transitions per unit of time~\citep{ML:1998}.
This gives us reason to believe that
even the environment that we humans currently face (and will ever face)
falls under these assumptions.

Formally we define the set $\Mlscccs$ as
the set of environments that are lower semicomputable
chronological contextual semimeasures
and $\Mcomp$ as
the set of environments that are computable chronological contextual measures.
Note that for chronological contextual semimeasures
it makes a difference whether
$\nu(\,\cdot \dmid a_{1:\infty})$ is lower semicomputable or
the conditionals $\nu(\,\cdot \mid \ae_{<t} a_t)$ are.
The latter implies the former, but not vice versa.

\section{The Value Function}
\label{sec:the-value-function}

The \emph{value} of a policy in an environment is
the future expected discounted reward
when following a given policy
in a given environment conditional on the past.
Since this quantity captures exactly
what our agent aims to maximize,
we prefer policies whose value is high.

\begin{definition}[Value Function]
\label{def:value-function}
\index{value function|textbf}
\index{value function!recursive|textbf}
\index{horizon|textbf}
The \emph{value} of a policy $\pi$ in an environment $\nu$
given history $\ae_{<t}$ and \emph{horizon} $m$ with $t \leq m \leq \infty$
is defined as
\[
   V^{\pi,m}_\nu(\ae_{<t})
:= \frac{1}{\Gamma_t} \EE^\pi_\nu \left[
     \sum_{k=t}^{m-1} \gamma_k r_k
   \;\middle|\; \ae_{<t} \right]
\]
if $\Gamma_t > 0$ and $V^{\pi,m}_\nu(\ae_{<t}) := 0$ if $\Gamma_t = 0$.
The \emph{optimal value} is defined as
$V^{*,m}_\nu(\ae_{<t}) := \sup_\pi V^{\pi,m}_\nu(\ae_{<t})$.
\end{definition}

Sometimes we omit the history argument $\ae_{<t}$ for notational convenience
if it is clear from context.
Moreover, when we omit $m$, we implicitly use an infinite horizon $m = \infty$,
i.e., $V^\pi_\nu := V^{\pi,\infty}_\nu$ and $V^*_\nu := V^{*,\infty}_\nu$.
The value of a policy $\pi$ in an environment $\nu$ after the empty history,
$V^\pi_\nu(\epsilon)$ is also called the \emph{$t_0$-value}%
\index{t0 value@$t_0$-value|textbf}.

\begin{remark}[Values are Bounded Between 0 and 1]
\label{rem:value-function-bounded}
From \assref{ass:aixi}{ass:bounded-rewards} we get that
for all histories $\ae_{<t}$ all policies $\pi$ and all environments $\nu$,
the value function $V^\pi_\nu(\ae_{<t}) \in [0, 1]$.
\end{remark}

Since environment and policy are stochastic,
the history $\ae_{<t}$ is random.
With abuse of notation we treat $\ae_{<t}$ sometimes as a concrete outcome
and sometimes as a random variable.
We also view the value of a policy $\pi$ in an environment $\nu$
as a sequence of random variables $(X_t)_{t \in \mathbb{N}}$ with
$X_t := V^\pi_\nu(\ae_{1:t})$ where the history $\ae_{1:t}$ is generated
stochastically by the agent's actual policy
interacting with the true environment $\mu$.
This view is helpful for some of the convergence results
(e.g., \autoref{thm:on-policy-value-convergence}
and \autoref{def:asymptotic-optimality})
in which we talk about the type of convergence
of this sequence of random variables.

\index{value function!recursive}%
The value function defined in \autoref{def:value-function}
is also called the \emph{recursive value function}%
\index{value function!recursive},
in contrast to iterative value function that
we discuss in \autoref{sec:iterative-value-function}.
The name of the recursive value function
originates from the following recursive identity
(analogously to \citealp[Eq.~4.12]{Hutter:2005}),
also called the \emph{Bellman equation}\index{Bellman equation|textbf}:
\begin{align*}
   V^\pi_\nu(\ae_{<t})
&= \sum_{a_t \in \A} \pi(a_t \mid \ae_{<t}) V^\pi_\nu(\ae_{<t} a_t) \\
   V^\pi_\nu(\ae_{<t} a_t)
&= \frac{1}{\Gamma_t} \sum_{e_t \in \E} \nu(e_t \mid \ae_{<t}a_t)
     \big( \gamma_t r_t + \Gamma_{t+1} V^\pi_\nu(\ae_{1:t}) \big)
\end{align*}

An explicit expression for the optimal value in environment $\nu$ is
\begin{equation}\label{eq:V-explicit}
  V^{*,m}_\nu(\ae_{<t})
= \frac{1}{\Gamma_t} \expectimax{\ae_{t:m-1}}
    \sum_{k=t}^{m-1} \gamma_k r_k
    \prod_{i=t}^k \nu(e_i \mid \ae_{<i} a_i),
\end{equation}
where $\expectimax{}$ denotes the max-sum-operator:
\[
   \expectimax{\ae_{t:m-1}}
:= \max_{a_t \in \A} \sum_{e_t \in \E} \ldots \max_{a_{m-1} \in \A} \sum_{e_{m-1} \in \E}
\]
For an explicit expression of $V^{*,\infty}_\nu(\ae_{<t})$
we can simply take the limit $m \to \infty$.

\subsection{Optimal Policies}
\label{ssec:optimal-policies}

An optimal policy is a policy that achieves the highest value:

\begin{definition}[{Optimal Policy; \citealp[Def.~5.19 \& 5.30]{Hutter:2005}}]
\label{def:optimal-policy}\index{policy!optimal|textbf}
A policy $\pi$ is \emph{optimal in environment $\nu$~($\nu$-optimal)} iff
for all histories
$\pi$ attains the optimal value:
$V^\pi_\nu(\ae_{<t}) = V^*_\nu(\ae_{<t})$
for all $\ae_{<t} \in \H$.
The action $a_t$ is an \emph{optimal action}\index{optimality!action|textbf}
iff
$\pi^*_\nu(a_t \mid \ae_{<t}) = 1$ for some $\nu$-optimal policy $\pi^*_\nu$.
\end{definition}

Following the tradition of \citet{Hutter:2005},
\emph{AINU}\index{AINU} denotes a $\nu$-optimal policy
for the environment $\nu \in \Mlscccs$ and
\emph{AIMU}\index{AIMU} denotes an $\mu$-optimal policy
for the environment $\mu \in \Mcomp$ that is a measure
(as opposed to a semimeasure).

By definition of the optimal policy and the optimal value function,
we have the following identity for all histories $\ae_{<t}$:
\begin{equation}\label{eq:optimal-policy} 
V^*_\nu(\ae_{<t}) = V^{\pi^*_\nu}_\nu(\ae_{<t})
\end{equation}

\index{argmax tie}
There can be more than one optimal policy;
generally the choice of $\pi^*_\nu$ from \autoref{def:optimal-policy}
is not unique.
More specifically, for a $\nu$-optimal policy we have
\begin{equation}\label{eq:argmax}
\pi^*_\nu(a_t \mid \ae_{<t}) > 0
\;\Longrightarrow\;
a_t \in \argmax_{a \in \A} V^*_\nu(\ae_{<t}a).
\end{equation}
If there are multiple actions $\alpha, \beta \in \A$
that attain the optimal value,
$V^*_\nu(\ae_{<} \alpha) = V^*_\nu(\ae_{<t} \beta)$,
then there is an \emph{argmax tie}\index{argmax tie}.
Which action we settle on in case of a tie (how we break the tie)
is irrelevant and can be arbitrary.
Since we allow stochastic policies,
we can also randomize between $\alpha$ and $\beta$.

The following definition allows policies to be slightly suboptimal.

\begin{definition}[$\varepsilon$-Optimal Policy]
\label{def:eps-optimal-policy}\index{policy!e-optimal@$\eps$-optimal|textbf}
A policy $\pi$ is \emph{$\varepsilon$-optimal in environment $\nu$} iff
$V^*_\nu(\ae_{<t}) - V^\pi_\nu(\ae_{<t}) < \varepsilon$
for all histories $\ae_{<t} \in \H$.
\end{definition}

A policy $\pi$ that achieves optimal $t_0$-value,
$V^\pi_\nu(\epsilon) = V^*_\nu(\epsilon)$,
takes $\nu$-optimal actions on any history reachable by $\pi$ in $\nu$.
However, this is not true for $\eps$-optimal policies:
a policy that is $\eps$-optimal at $t = 0$ is not necessarily
$\eps$-optimal in later time steps.

\subsection{Properties of the Value Function}
\label{ssec:properties-values}

The following two lemmas are stated by \citet[Thm.~31]{Hutter:2005}
without proof and for the iterative value function.

\begin{lemma}[Linearity of $V^\pi_\nu$ in $\nu$]
\label{lem:V-linear}\index{value function}
If $\nu = z_1\nu_1 + z_2\nu_2$ for some real numbers $z_1, z_2 \geq 0$,
then for all policies $\pi$ and all histories $\ae_{<t}$
\[
    V^{\pi,m}_\nu(\ae_{<t})
~=~ z_1 \frac{\nu_1(e_{<t} \dmid a_{<t})}{\nu(e_{<t} \dmid a_{<t})}
      V^{\pi,m}_{\nu_1}(\ae_{<t})
    + z_2 \frac{\nu_2(e_{<t} \dmid a_{<t})}{\nu(e_{<t} \dmid a_{<t})}
      V^{\pi,m}_{\nu_2}(\ae_{<t}).
\]
\end{lemma}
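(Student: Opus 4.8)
The plan is to prove this by expanding the definition of the value function and using the linearity of expectation together with the definition of the history distribution $\nu^\pi$. First I would write out $V^{\pi,m}_\nu(\ae_{<t})$ using \autoref{def:value-function}: assuming $\Gamma_t > 0$,
\[
V^{\pi,m}_\nu(\ae_{<t})
= \frac{1}{\Gamma_t} \EE^\pi_\nu\left[ \sum_{k=t}^{m-1} \gamma_k r_k \;\middle|\; \ae_{<t} \right]
= \frac{1}{\Gamma_t} \sum_{\ae_{t:m-1}} \nu^\pi(\ae_{t:m-1} \mid \ae_{<t}) \sum_{k=t}^{m-1} \gamma_k r_k,
\]
where the conditional history distribution factors via \autoref{def:history-distribution} as $\nu^\pi(\ae_{t:m-1} \mid \ae_{<t}) = \prod_{k=t}^{m-1} \pi(a_k \mid \ae_{<k}) \nu(e_k \mid \ae_{<k} a_k)$. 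The key observation is that conditioning kills the policy factors and the environment factors before time $t$ (they cancel between numerator and denominator), so the only dependence on $\nu$ in the conditional distribution is through the product $\prod_{k=t}^{m-1} \nu(e_k \mid \ae_{<k} a_k) = \nu(e_{t:m-1} \dmid \ae_{<t} a_{t:m-1})$, using the identity \eqref{eq:ccs} and the chronological property.

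Next I would substitute $\nu = z_1 \nu_1 + z_2 \nu_2$. Here the subtlety is that conditional probabilities do not split linearly: $\nu(e_t \mid \ae_{<t} a_t)$ is a ratio, not linear in $\nu$. The cleanest route is to work with the \emph{unconditional} quantity $\nu(e_{<t} \dmid a_{<t}) \cdot \Gamma_t \cdot V^{\pi,m}_\nu(\ae_{<t})$, which by \eqref{eq:ccs} equals
\[
\sum_{\ae_{t:m-1}} \Big(\prod_{k=1}^{m-1} \pi(a_k \mid \ae_{<k})\Big) \nu(e_{1:m-1} \dmid a_{1:m-1}) \sum_{k=t}^{m-1} \gamma_k r_k
\]
(the policy factors for $k < t$ being constants that I can divide back out at the end since we condition on a fixed $\ae_{<t}$; more carefully, I restrict the sum to histories consistent with $\ae_{<t}$). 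This expression \emph{is} linear in $\nu$ because $\nu(e_{1:m-1} \dmid a_{1:m-1})$ is (by \eqref{eq:ccs}) a joint probability, hence linear in $\nu$. So $\nu(e_{1:m-1} \dmid a_{1:m-1}) = z_1 \nu_1(e_{1:m-1} \dmid a_{1:m-1}) + z_2 \nu_2(e_{1:m-1} \dmid a_{1:m-1})$, and splitting the sum gives
\[
\nu(e_{<t} \dmid a_{<t}) V^{\pi,m}_\nu(\ae_{<t})
= z_1 \nu_1(e_{<t} \dmid a_{<t}) V^{\pi,m}_{\nu_1}(\ae_{<t})
+ z_2 \nu_2(e_{<t} \dmid a_{<t}) V^{\pi,m}_{\nu_2}(\ae_{<t}),
\]
from which the claimed identity follows by dividing through by $\nu(e_{<t} \dmid a_{<t})$.

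The main obstacle I anticipate is bookkeeping around the normalizations and the $\Gamma_t$ factor, plus handling the edge cases: if $\nu(e_{<t} \dmid a_{<t}) = 0$ the quotients are undefined (but then the history is not reachable, so one can adopt a convention), and if $\Gamma_t = 0$ both sides are $0$ by definition so the claim is trivial. For the infinite-horizon case $m = \infty$ I would take the limit $m \to \infty$; since rewards are bounded in $[0,1]$ (\assref{ass:aixi}{ass:bounded-rewards}) and $\Gamma_t < \infty$, dominated convergence justifies passing the limit through all the finite sums, and the identity is preserved. The only genuinely conceptual point — worth stating explicitly — is \emph{why} one must pass to the unconditional joint measure to see the linearity: the conditional value is a \emph{ratio} of linear functions of $\nu$, hence not itself linear, which is exactly why the correction factors $\nu_i(e_{<t}\dmid a_{<t})/\nu(e_{<t}\dmid a_{<t})$ appear in the statement.
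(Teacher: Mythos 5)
Your proof is correct and is essentially the paper's own argument: the paper observes that $\nu^\pi = z_1\nu_1^\pi + z_2\nu_2^\pi$, decomposes the conditional measure $\nu^\pi(\,\cdot \mid \ae_{<t})$ into the two conditionals weighted by $z_i\,\nu_i^\pi(\ae_{<t})/\nu^\pi(\ae_{<t})$, and then invokes linearity of expectation --- which is exactly your ``multiply by the unconditional joint, use its linearity in $\nu$, divide back'' computation phrased at the level of measures rather than explicit sums. The small slip in your display (the policy product should run from $k=t$, or equivalently the $k<t$ policy factors cancel between numerator and denominator) is one you already flag and does not affect the argument.
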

\begin{proof}
Since $\nu^\pi = z_1\nu_1^\pi + z_2\nu_2^\pi$,
we have for the conditional measure
\begin{align*}
   \nu^\pi(A \mid \ae_{<t})
 = \frac{\nu^\pi(A \cap \ae_{<t})}{\nu^\pi(\ae_{<t})}
&= \frac{z_1\nu_1^\pi(A \cap \ae_{<t}) + z_2\nu_2^\pi(A \cap \ae_{<t})}%
        {\nu^\pi(\ae_{<t})} \\
&= z_1 \frac{\nu_1^\pi(\ae_{<t})}{\nu^\pi(\ae_{<t})} \nu_1^\pi(A \mid \ae_{<t})
   + z_2 \frac{\nu_2^\pi(\ae_{<t})}{\nu^\pi(\ae_{<t})} \nu_2^\pi(A \mid \ae_{<t}).
\end{align*}
The claim now follows from the linearity of expectation in the probability measure.
\end{proof}

\begin{lemma}[Convexity of $V^*_\nu$ in $\nu$]
\label{lem:V*-convex}\index{value function}
If $\nu = z_1\nu_1 + z_2\nu_2$ for some real numbers $z_1, z_2 \geq 0$,
then for all histories $\ae_{<t}$
\[
       V^*_\nu(\ae_{<t})
~\leq~ z_1 \frac{\nu_1(e_{<t} \dmid a_{<t})}{\nu(e_{<t} \dmid a_{<t})}
         V^{*,m}_{\nu_1}(\ae_{<t})
       + z_2 \frac{\nu_2(e_{<t} \dmid a_{<t})}{\nu(e_{<t} \dmid a_{<t})}
         V^{*,m}_{\nu_2}(\ae_{<t}).
\]
\end{lemma}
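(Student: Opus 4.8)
The plan is to deduce convexity directly from the linearity of the value function in the environment (\autoref{lem:V-linear}). Fix a history $\ae_{<t}$ and a horizon $m$. The case $\Gamma_t = 0$ is trivial, since then both sides vanish by \autoref{def:value-function}; and histories with $\nu(e_{<t} \dmid a_{<t}) = 0$ carry $\nu^\pi$-probability zero for every $\pi$, so we may assume $\nu(e_{<t} \dmid a_{<t}) > 0$ and the mixture coefficients below are well-defined. First I would apply \autoref{lem:V-linear} to the splitting $\nu = z_1 \nu_1 + z_2 \nu_2$: for an \emph{arbitrary} policy $\pi$,
\[
    V^{\pi,m}_\nu(\ae_{<t})
~=~ z_1 \frac{\nu_1(e_{<t} \dmid a_{<t})}{\nu(e_{<t} \dmid a_{<t})} V^{\pi,m}_{\nu_1}(\ae_{<t})
    + z_2 \frac{\nu_2(e_{<t} \dmid a_{<t})}{\nu(e_{<t} \dmid a_{<t})} V^{\pi,m}_{\nu_2}(\ae_{<t}).
\]

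Next I would bound each summand from above: both $z_i \geq 0$ and both ratios $\nu_i(e_{<t} \dmid a_{<t}) / \nu(e_{<t} \dmid a_{<t})$ are nonnegative, and $V^{\pi,m}_{\nu_i}(\ae_{<t}) \leq V^{*,m}_{\nu_i}(\ae_{<t})$ by the definition of the optimal value. Hence
\[
       V^{\pi,m}_\nu(\ae_{<t})
~\leq~ z_1 \frac{\nu_1(e_{<t} \dmid a_{<t})}{\nu(e_{<t} \dmid a_{<t})} V^{*,m}_{\nu_1}(\ae_{<t})
       + z_2 \frac{\nu_2(e_{<t} \dmid a_{<t})}{\nu(e_{<t} \dmid a_{<t})} V^{*,m}_{\nu_2}(\ae_{<t}),
\]
and the right-hand side is independent of $\pi$. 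Taking the supremum over all policies $\pi$ on the left turns $V^{\pi,m}_\nu(\ae_{<t})$ into $V^{*,m}_\nu(\ae_{<t})$ without affecting the right-hand side, which is precisely the claimed inequality.

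There is essentially no hard step; the only things to watch are the two degenerate cases treated above and the bookkeeping with the history-dependent mixture weights. I would note in passing that the weights $z_i \nu_i(e_{<t} \dmid a_{<t}) / \nu(e_{<t} \dmid a_{<t})$ are nonnegative and sum to $1$, because $\nu(e_{<t} \dmid a_{<t}) = z_1 \nu_1(e_{<t} \dmid a_{<t}) + z_2 \nu_2(e_{<t} \dmid a_{<t})$; so the statement is genuine convexity of the map $\nu \mapsto V^{*,m}_\nu(\ae_{<t})$ along the segment from $\nu_1$ to $\nu_2$. The same argument extends verbatim to finite convex combinations of more than two environments, and the infinite-horizon case follows either by the identical reasoning with $m = \infty$ or by letting $m \to \infty$.
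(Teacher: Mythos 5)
Your proof is correct and follows essentially the same route as the paper: both rest on \autoref{lem:V-linear} and then bound the per-environment values by their optima. The only cosmetic difference is that the paper plugs in an optimal policy $\pi^*_\nu$ directly and uses $V^*_\nu = V^{\pi^*_\nu}_\nu$, whereas you take the supremum over arbitrary policies, which sidesteps appealing to the existence of an optimal policy but is otherwise the same argument.
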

\begin{proof}
Let $\pi^*_\nu$ be an optimal policy for environment $\nu$.
From \autoref{lem:V-linear} we get
\begin{align*}
      V^*_\nu(\ae_{<t})
 =    V^{\pi^*_\nu}_\nu(\ae_{<t})
&=    z_1 \frac{\nu_1(e_{<t} \dmid a_{<t})}{\nu(e_{<t} \dmid a_{<t})}
        V^{\pi^*_\nu}_{\nu_1}(\ae_{<t})
      + z_2 \frac{\nu_2(e_{<t} \dmid a_{<t})}{\nu(e_{<t} \dmid a_{<t})}
        V^{\pi^*_\nu}_{\nu_2}(\ae_{<t}) \\
&\leq z_1 \frac{\nu_1(e_{<t} \dmid a_{<t})}{\nu(e_{<t} \dmid a_{<t})}
        V^*_{\nu_1}(\ae_{<t})
      + z_2 \frac{\nu_2(e_{<t} \dmid a_{<t})}{\nu(e_{<t} \dmid a_{<t})}
        V^*_{\nu_2}(\ae_{<t}).
\qedhere
\end{align*}
\end{proof}

The following lemma bounds the error when truncating the value function.
This implies that planning for an $\eps$-effective horizon%
~($m = t + H_t(\eps)$),
we get all but an $\eps$ of the value:
$|V^\pi_\nu(\ae_{<t}) - V^{\pi,m}_\nu(\ae_{<t})| < \eps$.

\begin{lemma}[Truncated Values]
\label{lem:truncated-values}\index{value function}
For every environment $\nu$, every policy $\pi$, and every history $\ae_{<t}$
\[
       \big| V^{\pi,m}_\nu(\ae_{<t}) - V^\pi_\nu(\ae_{<t}) \big|
~\leq~ \frac{\Gamma_m}{\Gamma_t}.
\]
\end{lemma}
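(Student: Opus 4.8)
The plan is to unfold both value functions using \autoref{def:value-function} and observe that their difference is exactly the discounted tail reward sum from time step $m$ onward, which is controlled purely by the boundedness of rewards (\assref{ass:aixi}{ass:bounded-rewards}) together with the summability of the discount function.

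First I would dispose of the degenerate cases. If $\Gamma_t = 0$, then both $V^{\pi,m}_\nu(\ae_{<t})$ and $V^\pi_\nu(\ae_{<t})$ are $0$ by definition, so the inequality holds trivially. If $m = \infty$, then $V^{\pi,\infty}_\nu = V^\pi_\nu$ and $\Gamma_m = \Gamma_\infty = \lim_{k\to\infty}\sum_{j=k}^\infty \gamma_j = 0$ (since $\sum_t \gamma_t < \infty$), so again both sides agree. Hence it suffices to treat the case $\Gamma_t > 0$ and $t \leq m < \infty$.

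Next, since $0 \le r_k \le 1$ and $\sum_{k=1}^\infty \gamma_k < \infty$, the random variable $\sum_{k=t}^\infty \gamma_k r_k$ is nonnegative and bounded above by $\Gamma_t < \infty$, so all expectations appearing below are finite and the sum may be split. Using \autoref{def:value-function},
\[
V^\pi_\nu(\ae_{<t}) - V^{\pi,m}_\nu(\ae_{<t})
= \frac{1}{\Gamma_t}\EE^\pi_\nu\!\left[\sum_{k=t}^\infty \gamma_k r_k \;\middle|\; \ae_{<t}\right]
 - \frac{1}{\Gamma_t}\EE^\pi_\nu\!\left[\sum_{k=t}^{m-1} \gamma_k r_k \;\middle|\; \ae_{<t}\right]
= \frac{1}{\Gamma_t}\EE^\pi_\nu\!\left[\sum_{k=m}^\infty \gamma_k r_k \;\middle|\; \ae_{<t}\right].
\]
Bounding the tail sum pointwise via $0 \le r_k \le 1$ gives $0 \le \sum_{k=m}^\infty \gamma_k r_k \le \sum_{k=m}^\infty \gamma_k = \Gamma_m$; taking the conditional expectation preserves these inequalities by monotonicity, and dividing by $\Gamma_t > 0$ yields $0 \le V^\pi_\nu(\ae_{<t}) - V^{\pi,m}_\nu(\ae_{<t}) \le \Gamma_m/\Gamma_t$, which in particular implies the claimed bound on the absolute value.

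I do not expect any real obstacle: the only points that need the slightest care are the degenerate cases $\Gamma_t = 0$ and $m = \infty$, and the justification that the infinite discounted reward sum is integrable so that the expectation of the difference equals the difference of the expectations — both immediate from \assref{ass:aixi}{ass:bounded-rewards} and the summability of $\gamma$ in \autoref{def:discounting}.
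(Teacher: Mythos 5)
Your proof is correct and follows essentially the same route as the paper's: split the infinite discounted sum into the truncated part plus the tail, bound the tail by $\Gamma_m$ using \assref{ass:aixi}{ass:bounded-rewards}, and divide by $\Gamma_t$. The extra attention to the degenerate cases $\Gamma_t = 0$ and $m = \infty$ is a harmless refinement of what the paper leaves implicit.
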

\begin{proof}
\[
  V^\pi_\nu(\ae_{<t})
= \frac{1}{\Gamma_t} \EE^\pi_\nu \left[
    \sum_{k=t}^\infty \gamma_k r_k \;\middle|\; \ae_{<t}
  \right]
= V^{\pi,m}_\nu(\ae_{<t})
  + \frac{1}{\Gamma_t} \EE^\pi_\nu \left[
    \sum_{k=m}^\infty \gamma_k r_k \;\middle|\; \ae_{<t}
  \right]
\]
The result now follows from \assref{ass:aixi}{ass:bounded-rewards} and
\[
     0
\leq \EE^\pi_\nu \left[
       \sum_{k=m}^\infty \gamma_k r_k \;\middle|\; \ae_{<t}
     \right]
\leq \Gamma_m.
\qedhere
\]
\end{proof}

This lemma bounds the (truncated) value function by the total variation distance.

\begin{lemma}[Bounds on Value Difference]
\label{lem:value-bound}\index{value function}
\index{total variation distance}
For any policies $\pi_1$, $\pi_2$, any environments $\nu_1$ and $\nu_2$,
and any horizon $t \leq m \leq \infty$
\[
       |V^{\pi_1,m}_{\nu_1}(\ae_{<t}) - V^{\pi_2,m}_{\nu_2}(\ae_{<t})|
~\leq~ D_{m-1}(\nu_1^{\pi_1}, \nu_2^{\pi_2} \mid \ae_{<t})
\]
\end{lemma}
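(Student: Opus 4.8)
The plan is to recognize each truncated value as the expectation of one and the same bounded random variable under two different history distributions, and then to apply \autoref{lem:expectation-total-variation}. First I would dispose of the degenerate case $\Gamma_t = 0$: then $V^{\pi_1,m}_{\nu_1}(\ae_{<t}) = V^{\pi_2,m}_{\nu_2}(\ae_{<t}) = 0$ by \autoref{def:value-function}, so the claim holds trivially since $D_{m-1} \geq 0$. Hence assume $\Gamma_t > 0$ (and that $\ae_{<t}$ has positive probability under both $\nu_1^{\pi_1}$ and $\nu_2^{\pi_2}$, so that the conditioning is well-defined; otherwise the inequality is vacuous by convention).

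Next, define the random variable $X := \tfrac{1}{\Gamma_t} \sum_{k=t}^{m-1} \gamma_k r_k$ on the sequence space $(\A \times \E)^\infty$, where $r_k$ is the reward extracted from the $k$-th interaction cycle. Two facts about $X$ drive the argument. First, by \assref{ass:aixi}{ass:bounded-rewards} we have $r_k \in [0,1]$, and $\sum_{k=t}^{m-1} \gamma_k \leq \sum_{k=t}^{\infty} \gamma_k = \Gamma_t$, so $0 \leq X \leq 1$; when $m = \infty$ the series still converges with the same bounds. Second, $X$ depends only on the interaction cycles $\ae_{t:m-1}$, hence is $\F_{m-1}$-measurable (and $\Foo$-measurable when $m = \infty$). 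By \autoref{def:value-function} and linearity of expectation, $V^{\pi_i,m}_{\nu_i}(\ae_{<t}) = \EE^{\pi_i}_{\nu_i}[X \mid \ae_{<t}]$, i.e., the expectation of $X$ under the conditional history distribution $\nu_i^{\pi_i}(\,\cdot \mid \ae_{<t})$.

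Finally I would restrict both conditional history distributions to $\F_{m-1}$ --- this does not change $\EE[X \mid \ae_{<t}]$ since $X$ is $\F_{m-1}$-measurable --- and apply \autoref{lem:expectation-total-variation} to these restrictions, whose total variation distance is precisely $D_{m-1}(\nu_1^{\pi_1}, \nu_2^{\pi_2} \mid \ae_{<t})$ by \autoref{def:total-variation-distance}:
\[
       \big| V^{\pi_1,m}_{\nu_1}(\ae_{<t}) - V^{\pi_2,m}_{\nu_2}(\ae_{<t}) \big|
~=~    \big| \EE^{\pi_1}_{\nu_1}[X \mid \ae_{<t}] - \EE^{\pi_2}_{\nu_2}[X \mid \ae_{<t}] \big|
~\leq~ D_{m-1}(\nu_1^{\pi_1}, \nu_2^{\pi_2} \mid \ae_{<t}).
\]
I do not anticipate a real obstacle here; the proof is essentially bookkeeping. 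The points that need care are: confirming $X \in [0,1]$ from the normalization by $\Gamma_t$; matching the index $m-1$ of the total variation distance to the upper summation limit via the $\F_{m-1}$-measurability of $X$ (so that the sup defining $D_{m-1}$ ranges over exactly the events $X$ can distinguish); and the $m = \infty$ case, handled directly by the boundedness of $X$ or alternatively by passing to the limit with \autoref{lem:truncated-values}. If one of the $\nu_i$ is merely a semimeasure, so that $\nu_i^{\pi_i}(\,\cdot \mid \ae_{<t})$ is only a sub-probability measure, nothing changes: the argument of \autoref{lem:expectation-total-variation} uses only $0 \leq X \leq 1$ and remains valid for arbitrary finite measures.
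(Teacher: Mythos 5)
Your proof is correct and follows essentially the same route as the paper: both identify the truncated value as the expectation of the $[0,1]$-bounded random variable $\frac{1}{\Gamma_t}\sum_{k=t}^{m-1}\gamma_k r_k$ under the conditional history distributions $\nu_i^{\pi_i}(\,\cdot \mid \ae_{<t})$ on $(\A\times\E)^{m-1}$ and then invoke \autoref{lem:expectation-total-variation}. Your extra care with the $\Gamma_t=0$ case, the $m=\infty$ case, and semimeasures is fine but not needed beyond what the paper's one-line argument already covers.
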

\begin{proof}
According to \autoref{def:value-function},
the value function is the expectation of the random variable
$\sum_{k=t}^{m-1} \gamma_k r_k / \Gamma_t$ that is bounded between $0$ and $1$.
Therefore we can use \autoref{lem:expectation-total-variation} with
$P := \nu_1^{\pi_1}(\,\cdot \mid \ae_{<t})$ and
$R := \nu_2^{\pi_2}(\,\cdot \mid \ae_{<t})$
on the space $(\A \times \E)^{m-1}$
to conclude that
$|V^{\pi_1,m}_{\nu_1}(\ae_{<t}) - V^{\pi_2,m}_{\nu_2}(\ae_{<t})|$
is bounded by $D_{m-1}(\nu_1^{\pi_1}, \nu_2^{\pi_2} \mid \ae_{<t})$.
\end{proof}

\begin{lemma}[{Discounted Values; \citealp[Lem.~2.5]{Lattimore:2013}}]
\label{lem:discounted-values}\index{value function}
Let $\ae_{<t}$ be some history
and let $\pi_1$ and $\pi_2$ be two policies
that coincide from time step $t$ to time step $m$:
$\pi_1(a \mid \ae_{1:k}) = \pi_2(a \mid \ae_{1:k})$ for all $a \in \A$,
all histories $\ae_{<t}\ae_{t:k}$ consistent with $\pi_1$,
and $t \leq k \leq m$.
Then for all environments $\nu$
\[
       \big| V^{\pi_1}_\nu(\ae_{<t}) - V^{\pi_2}_\nu(\ae_{<t}) \big|
~\leq~ \frac{\Gamma_m}{\Gamma_t}.
\]
\end{lemma}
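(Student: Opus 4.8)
The plan is to split the reward horizon at time step $m$: before $m$ the two policies behave identically, so their horizon-$m$ truncated values agree, and everything after $m$ is discounted by at most $\Gamma_m$. First I would record the identity obtained by unrolling \autoref{def:value-function} (exactly as in the proof of \autoref{lem:truncated-values}),
\[
  V^\pi_\nu(\ae_{<t})
= V^{\pi,m}_\nu(\ae_{<t})
  + \frac{1}{\Gamma_t}\, \EE^\pi_\nu\!\left[ \sum_{k=m}^\infty \gamma_k r_k \;\middle|\; \ae_{<t} \right],
\]
which holds for $\pi = \pi_1$ and for $\pi = \pi_2$ (the case $\Gamma_t = 0$ is trivial, since then all values are $0$). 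Subtracting the two instances, the difference of the full values equals the difference of the truncated values plus the difference of the two discounted tails.

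The key step is $V^{\pi_1,m}_\nu(\ae_{<t}) = V^{\pi_2,m}_\nu(\ae_{<t})$. Expanding \autoref{def:value-function} with \autoref{def:history-distribution},
\[
  V^{\pi,m}_\nu(\ae_{<t})
= \frac{1}{\Gamma_t} \sum_{\ae_{t:m-1}}
    \Big( \prod_{k=t}^{m-1} \pi(a_k \mid \ae_{<k})\,\nu(e_k \mid \ae_{<k}a_k) \Big)
    \sum_{k=t}^{m-1} \gamma_k r_k ,
\]
so $V^{\pi,m}_\nu(\ae_{<t})$ depends on $\pi$ only through the action distributions $\pi(\cdot \mid \ae_{<k})$ at histories $\ae_{<k}$ extending $\ae_{<t}$ with $t \le k \le m-1$. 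By hypothesis $\pi_1$ and $\pi_2$ prescribe the same action distribution at every $\pi_1$-reachable history from time step $t$ up to $m$, and a short induction on $k$ shows that the histories carrying positive weight in the sum under $\pi_1$ are exactly those under $\pi_2$; hence the two sums coincide term by term and the truncated values are equal. (Equivalently, $D_{m-1}(\nu^{\pi_1}, \nu^{\pi_2} \mid \ae_{<t}) = 0$, so the equality is also immediate from \autoref{lem:value-bound}.)

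It remains to bound the difference of the tails. By \assref{ass:aixi}{ass:bounded-rewards}, for each policy $0 \le \EE^\pi_\nu[\sum_{k=m}^\infty \gamma_k r_k \mid \ae_{<t}] \le \sum_{k=m}^\infty \gamma_k = \Gamma_m$, so their difference is at most $\Gamma_m$ in absolute value; dividing by $\Gamma_t$ yields $|V^{\pi_1}_\nu(\ae_{<t}) - V^{\pi_2}_\nu(\ae_{<t})| \le \Gamma_m / \Gamma_t$. The only point that needs care is the bookkeeping in the middle paragraph: lining up the range of histories on which the hypothesis supplies agreement with the range that actually enters the truncated value, and checking that the positive-weight (reachable) histories coincide for the two policies. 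I expect this — rather than any inequality — to be the main obstacle.
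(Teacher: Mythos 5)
Your proposal is correct and follows essentially the same route as the paper: the paper also notes that $D_{m-1}(\nu^{\pi_1},\nu^{\pi_2}\mid\ae_{<t}) = 0$ because the policies coincide up to time $m$, and then combines \autoref{lem:truncated-values} with \autoref{lem:value-bound} to get the $\Gamma_m/\Gamma_t$ bound. Your explicit remark that both discounted tails lie in $[0,\Gamma_m/\Gamma_t]$ (so their difference, not twice it, bounds the error) is exactly the observation implicit in the paper's first inequality, so there is no gap.
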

\begin{proof}
Since $\pi_1$ and $\pi_2$ coincide for time steps $t$ through $m-1$,
$D_{m-1}(\nu^{\pi_1}, \nu^{\pi_2} \mid \ae_{<t}) = 0$ for all environments $\nu$.
Thus the result follows from \autoref{lem:truncated-values} and
\autoref{lem:value-bound}:
\begin{align*}
      \big| V^{\pi_1}_\nu(\ae_{<t}) - V^{\pi_2}_\nu(\ae_{<t}) \big|
&\leq \big| V^{\pi_1,m}_\nu(\ae_{<t}) - V^{\pi_2,m}_\nu(\ae_{<t}) \big|
      + \frac{\Gamma_m}{\Gamma_t} \\
&\leq D_{m-1}(\nu^{\pi_1}, \nu^{\pi_2} \mid \ae_{<t})
      + \frac{\Gamma_m}{\Gamma_t} \\
&=    \frac{\Gamma_m}{\Gamma_t}
\qedhere
\end{align*}
\end{proof}

\subsection{On-Policy Value Convergence}
\label{ssec:on-policy-value-convergence}
\index{on-policy!value convergence}

This section states some general results on learning the value function.
\emph{On-policy value convergence} refers to the fact that if we use
a learning distribution $\rho$ to learn to environment $\mu$,
and $\rho^\pi$ merges\index{merging} with $\mu^\pi$
in the sense discussed \autoref{sec:merging},
then $V^\pi_\rho$ converges to $V^\pi_\mu$, i.e.,
using $\rho$ we learn to estimate values correctly.

A weaker variant of the following theorem
was proved by \citet[Thm.~5.36]{Hutter:2005}.
It states convergence in mean (not almost surely),
and only for the Bayesian mixture.

\begin{theorem}[On-Policy Value Convergence]
\label{thm:on-policy-value-convergence}
\index{on-policy!value convergence}
\index{convergence!almost surely}
Let $\mu$ be any environment and $\pi$ be any policy.
\begin{enumerate}[(a)]
\item\label{itm:on-policy-value-convergence-ac}\index{merging!strong}
	If $\rho^\pi$ merges strongly with $\mu^\pi$,
	then
	\[
	V^\pi_\rho(\ae_{<t}) - V^\pi_\mu(\ae_{<t}) \to 0
	\text{ as $t \to \infty$ $\mu^\pi$-almost surely.}
	\]

\item\label{itm:on-policy-value-convergence-wm}\index{merging!weak}
	If the effective horizon is bounded and
	$\rho^\pi$ merges weakly with $\mu^\pi$,
	then
	\[
	V^\pi_\rho(\ae_{<t}) - V^\pi_\mu(\ae_{<t}) \to 0
	\text{ as $t \to \infty$ $\mu^\pi$-almost surely.}
	\]

\item\label{itm:on-policy-value-convergence-awm}\index{merging!almost weak}
	If the effective horizon is bounded and
	$\rho^\pi$ merges almost weakly with $\mu^\pi$,
	then
	\[
	\frac{1}{t} \sum_{k=1}^t \Big( V^\pi_\rho(\ae_{<k}) - V^\pi_\mu(\ae_{<k}) \Big)
	\to 0
	\text{ as $t \to \infty$ in $\mu^\pi$-almost surely.}
	\]
\end{enumerate}
\end{theorem}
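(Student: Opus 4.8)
The plan is to prove all three parts uniformly by combining the truncation bound (\autoref{lem:truncated-values}) with the total-variation bound on value differences (\autoref{lem:value-bound}), always using the \emph{same} policy $\pi$ on both sides so that the only distance that appears is $D_m(\rho^\pi,\mu^\pi\mid\ae_{<t})$ — precisely the quantity the merging hypotheses control (recall $D$ is symmetric).

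For part \ref{itm:on-policy-value-convergence-ac} I would apply \autoref{lem:value-bound} with $\pi_1=\pi_2=\pi$, $\nu_1=\rho$, $\nu_2=\mu$, and horizon $m=\infty$, which gives $|V^\pi_\rho(\ae_{<t})-V^\pi_\mu(\ae_{<t})|\le D_\infty(\rho^\pi,\mu^\pi\mid\ae_{<t})$. Since $\rho^\pi$ merges strongly with $\mu^\pi$, the right-hand side tends to $0$ as $t\to\infty$ $\mu^\pi$-almost surely, which is exactly the claim; no boundedness of the effective horizon is needed here.

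For part \ref{itm:on-policy-value-convergence-wm} I would fix $\eps>0$. Since the effective horizon is bounded, there is a constant $d:=c_{\eps/2}\in\mathbb{N}$ with $H_t(\eps/2)\le d$ for all $t$, and because $\Gamma$ is non-increasing this yields $\Gamma_{t+d}/\Gamma_t\le\eps/2$ for all $t$ (for finite-horizon discounting the claim is trivial once $\Gamma_t=0$, since then all values vanish). Splitting through the truncated value at horizon $t+d$ and using \autoref{lem:truncated-values} for the two truncation errors, \autoref{lem:value-bound} for the middle term, and monotonicity of $D_m$ in the lookahead $m$,
\[
|V^\pi_\rho(\ae_{<t})-V^\pi_\mu(\ae_{<t})|
\le \frac{\Gamma_{t+d}}{\Gamma_t}+D_{t+d-1}(\rho^\pi,\mu^\pi\mid\ae_{<t})+\frac{\Gamma_{t+d}}{\Gamma_t}
\le \eps+D_{t+d}(\rho^\pi,\mu^\pi\mid\ae_{<t}).
\]
Weak merging gives $D_{t+d}(\rho^\pi,\mu^\pi\mid\ae_{<t})\to0$ $\mu^\pi$-almost surely, so $\limsup_{t}|V^\pi_\rho(\ae_{<t})-V^\pi_\mu(\ae_{<t})|\le\eps$ on an event of $\mu^\pi$-measure $1$; intersecting these events over $\eps\in\{1/n\}$ gives almost sure convergence to $0$. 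Part \ref{itm:on-policy-value-convergence-awm} runs identically: with the same $d=c_{\eps/2}$ one gets $|V^\pi_\rho(\ae_{<k})-V^\pi_\mu(\ae_{<k})|\le\eps+D_{k+d}(\rho^\pi,\mu^\pi\mid\ae_{<k})$ for every $k$, hence $\tfrac1t\sum_{k=1}^t|V^\pi_\rho(\ae_{<k})-V^\pi_\mu(\ae_{<k})|\le\eps+\tfrac1t\sum_{k=1}^tD_{k+d}(\rho^\pi,\mu^\pi\mid\ae_{<k})$, and the last Cesàro average vanishes $\mu^\pi$-almost surely by almost weak merging; since $\big|\tfrac1t\sum_{k=1}^t(V^\pi_\rho(\ae_{<k})-V^\pi_\mu(\ae_{<k}))\big|\le\tfrac1t\sum_{k=1}^t|V^\pi_\rho(\ae_{<k})-V^\pi_\mu(\ae_{<k})|$, taking $\eps\in\{1/n\}$ finishes it.

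I expect the only genuinely delicate point — the part worth writing out carefully — is the bookkeeping of the almost-sure quantifiers: each merging hypothesis provides, for a \emph{fixed} lookahead $d$, a full-measure event on which the distances go to zero, and one has to intersect countably many such events (one per $\eps=1/n$, equivalently per the corresponding $d$) to land on a single $\mu^\pi$-probability-one event on which the desired convergence holds. The secondary point to flag is that boundedness of the effective horizon in parts \ref{itm:on-policy-value-convergence-wm} and \ref{itm:on-policy-value-convergence-awm} is exactly what allows one $d$ to make the truncation error $\Gamma_{t+d}/\Gamma_t$ small uniformly in $t$; otherwise weak/almost-weak merging, which only bounds distances at finitely many future steps, would not suffice to control the tail of the value.
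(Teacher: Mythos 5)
Your proof is correct and follows essentially the same route as the paper's: part (a) is \autoref{lem:value-bound} with $m=\infty$, and parts (b) and (c) combine \autoref{lem:truncated-values} at a horizon bounded by the (bounded) effective horizon with \autoref{lem:value-bound}, then invoke weak respectively almost weak merging and let $\eps\to 0$. Your explicit handling of the countable intersection of full-measure events and of the monotonicity $\Gamma_{t+d}/\Gamma_t\le\eps/2$ is just a more careful write-up of the same argument.
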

\begin{proof}
\begin{enumerate}[(a)]
\item
	Apply \autoref{lem:value-bound} with $m := \infty$.

\item
	Let $\eps > 0$ and let $c_\eps$ be a bound on $\sup_t H_t(\eps)$.
	From \autoref{lem:truncated-values}
	\begin{align*}
	      |V^\pi_\rho(\ae_{<t}) - V^\pi_\mu(\ae_{<t})|
	&\leq |V^{\pi,t+H_t(\eps)}_\rho(\ae_{<t}) - V^{\pi,t+H_t(\eps)}_\mu(\ae_{<t})|
	      + 2\frac{\Gamma_{t+H_t(\eps)}}{\Gamma_t} \\
	&<    D_{t+H_t-1(\eps)}(\rho^\pi, \mu^\pi \mid \ae_{<t}) + 2\eps \\
	&\leq D_{t+c_\eps}(\rho^\pi, \mu^\pi \mid \ae_{<t}) + 2\eps
	\end{align*}
	according to \autoref{def:effective-horizon} and \autoref{lem:value-bound}.
	Since $\rho^\pi$ merges weakly with $\mu^\pi$,
	we get that $\mu^\pi$-almost surely there is a time step $t_0 \in \mathbb{N}$
	such that $D_{t+c_\eps}(\rho^\pi, \mu^\pi \mid \ae_{<t}) < \eps$
	for all $t \geq t_0$.
	Hence $|V^\pi_Q(\ae_{<t}) - V^\pi_\mu(\ae_{<t})| < 3\eps$
	for all $t \geq t_0$.

\item Analogously to the proof of (\ref{itm:on-policy-value-convergence-wm}).
\qedhere
\end{enumerate}
\end{proof}

It is important to observe that
on-policy convergence does not imply that
the agent converges to the optimal policy.
$V^\pi_\rho$ converges to $V^\pi_\mu$,
but $V^\pi_\mu$ need not be close to $V^*_\mu$.
Indeed, there might be another policy $\tilde\pi$
that has a higher value than $\pi$ in the true environment $\mu$%
~($V^{\tilde\pi}_\mu > V^\pi_\mu$).
If the agent thinks $\tilde\pi$ has lower value%
~($V^{\tilde\pi}_\rho < V^\pi_\rho$)
it might not follow $\tilde\pi$
and hence not learn that the actual value of $\tilde\pi$ is much higher.
In other words,
on-policy convergence implies that the agent learns the value of its own actions,
but not the value of counterfactual actions that it does not take.

\autoref{thm:on-policy-value-convergence} now enables us to tie in
the results of \autoref{cha:learning}.
This yields a surge of corollaries,
but first we need to make the learning distributions contextual on the actions.

Let $w \in \Delta\M$ be a positive prior over the environment class $\M$.
We define the corresponding Bayesian mixture
analogously to \autoref{ex:Bayesian-mixture}:
\begin{equation}\label{eq:Bayesian-mixture}
\xi(e_{<t} \dmid a_{<t}) := \sum_{\nu \in \M} w(\nu) \nu(e_{<t} \dmid a_{<t})
\end{equation}
Note that the Bayesian mixture $\xi$ depends on the prior $w$.
For the rest of this thesis, this dependence will not be made explicit.

From \autoref{lem:V-linear} and \eqref{eq:posterior-weight}
we immediately get the following identity:
\begin{equation}\label{eq:value-of-xi}
  V^\pi_\xi(\ae_{<t})
= \sum_{\nu \in \M} w(\nu \mid \ae_{<t}) V^\pi_\nu(\ae_{<t})
\end{equation}
Similarly, we get from \autoref{lem:V*-convex}
\begin{equation}\label{eq:optimal-value-of-xi}
  V^*_\xi(\ae_{<t})
\leq \sum_{\nu \in \M} w(\nu \mid \ae_{<t}) V^*_\nu(\ae_{<t}).
\end{equation}

\begin{corollary}[On-Policy Value Convergence for Bayes]
\label{cor:Bayes-on-policy-value-convergence}
\index{on-policy!value convergence}
For any environment $\mu \in \M$ and any policy $\pi$,
\[
V^\pi_\xi(\ae_{<t}) - V^\pi_\mu(\ae_{<t}) \to 0
\text{ as $t \to \infty$ $\mu^\pi$-almost surely.}
\]
\end{corollary}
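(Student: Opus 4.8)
The plan is to chain together three results established earlier in the thesis: the contextual version of Bayesian-mixture dominance (\autoref{ex:Bayesian-mixture}), the merging-of-opinions theorem (\autoref{thm:Blackwell-Dubins}), and on-policy value convergence (\autoref{thm:on-policy-value-convergence}).

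First I would check that the induced history distribution $\xi^\pi$ dominates $\mu^\pi$. Since the prior $w$ is positive, $\xi(e_{<t} \dmid a_{<t}) = \sum_{\nu \in \M} w(\nu) \nu(e_{<t} \dmid a_{<t}) \geq w(\mu) \mu(e_{<t} \dmid a_{<t})$ for every history, and telescoping the identities~\eqref{eq:ccs} gives $\xi^\pi(\ae_{<t}) = \big( \prod_{k<t} \pi(a_k \mid \ae_{<k}) \big) \xi(e_{<t} \dmid a_{<t}) \geq w(\mu) \mu^\pi(\ae_{<t})$, so $\xi^\pi \timesgeq \mu^\pi$ with the positive constant $w(\mu)$. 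The point to be careful about is that dominance and merging are properties of $\xi^\pi$ and $\mu^\pi$ as measures on $(\A \times \E)^\infty$, not of $\xi$ and $\mu$ directly, which is exactly why the dominance bound has to be pushed through the policy $\pi$.

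Next I would apply \autoref{prop:compatibility}\ref{itm:dominance=>absolute-continuity} to get $\xi^\pi \gg \mu^\pi$, i.e., $\mu^\pi$ is absolutely continuous with respect to $\xi^\pi$, so \autoref{thm:Blackwell-Dubins} yields that $\xi^\pi$ merges strongly with $\mu^\pi$. Finally, \autoref{thm:on-policy-value-convergence}\ref{itm:on-policy-value-convergence-ac} with $\rho := \xi$ gives precisely $V^\pi_\xi(\ae_{<t}) - V^\pi_\mu(\ae_{<t}) \to 0$ as $t \to \infty$ $\mu^\pi$-almost surely.

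There is no serious obstacle, so the work is in getting the quantifiers right rather than in any estimate. It is worth flagging why the tempting shortcut fails: bounding $|V^\pi_\xi(\ae_{<t}) - V^\pi_\mu(\ae_{<t})| \leq 1 - w(\mu \mid \ae_{<t})$ via~\eqref{eq:value-of-xi} and claiming the posterior weight tends to $1$ does not work, because the posterior need not concentrate on $\mu$ when some other $\nu \in \M$ is indistinguishable from $\mu$ under $\pi$. The merging route handles this correctly: such a $\nu$ then also has the same value as $\mu$, so $D_\infty(\xi^\pi, \mu^\pi \mid \ae_{<t})$ still vanishes even though the posterior does not.
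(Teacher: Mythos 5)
Your proof is correct and follows essentially the same route as the paper: dominance $\xi^\pi \geq w(\mu)\mu^\pi$, then \autoref{prop:compatibility}\ref{itm:dominance=>absolute-continuity} for absolute continuity, \autoref{thm:Blackwell-Dubins} for strong merging, and \autoref{thm:on-policy-value-convergence}\ref{itm:on-policy-value-convergence-ac} to conclude. The extra care you take in pushing the dominance through the policy and the remark about why the posterior-concentration shortcut fails are sound but not needed beyond what the paper's own three-line proof does.
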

\begin{proof}
Since $\mu \in \M$, we have dominance $\xi^\pi \geq w(\mu) \mu^\pi$
with $w(\mu) > 0$
and by \autoref{prop:compatibility}\ref{itm:dominance=>absolute-continuity}
absolute continuity $\xi^\pi \gg \mu^\pi$.
From \autoref{thm:Blackwell-Dubins} we get that
$\xi^\pi$ merges strongly with $\mu^\pi$.
Therefore we can apply
\autoref{thm:on-policy-value-convergence}\ref{itm:on-policy-value-convergence-ac}.
\end{proof}

Analogously, we define
$\MDL^{\ae_{<t}} :=
\argmin_{\nu \in \M} \{ -\log \nu(e_{<t} \dmid a_{<t}) + K(\nu) \}$.

\begin{corollary}[On-Policy Value Convergence for MDL]
\label{cor:MDL-on-policy-value-convergence}
\index{on-policy!value convergence}
For any environment $\mu \in \M$ and any policy $\pi$,
\[
V^\pi_{\MDL^{\ae_{<t}}}(\ae_{<t}) - V^\pi_\mu(\ae_{<t}) \to 0
\text{ as $t \to \infty$ $\mu^\pi$-almost surely.}
\]
\end{corollary}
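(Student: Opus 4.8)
The plan is to follow the template of the proof of \autoref{cor:Bayes-on-policy-value-convergence}, replacing the chain ``dominance $\Rightarrow$ absolute continuity $\Rightarrow$ strong merging'' by the MDL strong merging result \autoref{thm:MDL-merges-strongly}. The conclusion will then drop out of \autoref{lem:value-bound} applied with horizon $m := \infty$, environments $\nu_1 := \MDL^{\ae_{<t}}$ and $\nu_2 := \mu$, and both policies equal to $\pi$:
\[
\big| V^\pi_{\MDL^{\ae_{<t}}}(\ae_{<t}) - V^\pi_\mu(\ae_{<t}) \big|
~\leq~ D_\infty\big( (\MDL^{\ae_{<t}})^\pi, \mu^\pi \mid \ae_{<t} \big),
\]
exactly as in the proof of \autoref{thm:on-policy-value-convergence}\ref{itm:on-policy-value-convergence-ac} (we cannot quote that result verbatim because there the learning distribution is fixed, whereas here the selected model $\MDL^{\ae_{<t}}$ changes with $t$). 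So the real task is to show that the right-hand side vanishes $\mu^\pi$-almost surely.

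For that I would prove the action-contextual analogue of \autoref{thm:MDL-merges-strongly}: for every $\mu \in \M$ and every policy $\pi$, $\mu^\pi$-almost surely $D_\infty(\mu, \MDL^{\ae_{<t}} \mid \ae_{<t}) \to 0$, where the total variation is over the percept continuations generated by $\pi$. The argument of \citet[Thm.~1]{Hutter:2009MDL} only uses that, by the very definition of $\MDL^{\ae_{<t}}$, the quantity $-\log \mu(e_{<t} \dmid a_{<t}) + K(\mu)$ upper bounds $-\log \MDL^{\ae_{<t}}(e_{<t} \dmid a_{<t}) + K(\MDL^{\ae_{<t}})$, so the accumulated log-loss of the selected model relative to $\mu$ along the observed history cannot grow unboundedly; this bookkeeping is insensitive to whether the data stream is a plain sequence or the percept stream of a chronological contextual measure driven by $\pi$ (the actions enter only as a context, and $K(\MDL^{\ae_{<t}})$ remains the complexity of an element of the fixed countable class $\M$), so the proof carries over. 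Finally, since the policy factors $\prod_k \pi(a_k \mid \ae_{<k})$ are common to $(\MDL^{\ae_{<t}})^\pi$ and $\mu^\pi$, a short computation with the definition of total variation distance upgrades this to $D_\infty((\MDL^{\ae_{<t}})^\pi, \mu^\pi \mid \ae_{<t}) \to 0$ $\mu^\pi$-almost surely, which together with the display above gives the corollary.

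The main obstacle is the second step: unlike the Bayes case, the MDL model is time-varying and, by \autoref{rem:MDL-is-inductively-inconsistent}, need not converge to any fixed $\nu \in \M$, so one cannot first pass to a limit model and merge afterwards --- the merging statement must be established directly for the moving estimator $\MDL^{\ae_{<t}}$. A secondary point to be careful about is that the log-loss/code-length accounting of \citet{Hutter:2009MDL} really is unaffected by conditioning on actions, in particular that the regularizer term stays $K$ of a member of $\M$ rather than of the history-dependent index, which is how $\MDL^{\ae_{<t}}$ was defined in the statement.
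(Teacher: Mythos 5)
Your proof is correct and takes essentially the same route as the paper, which simply invokes \autoref{thm:MDL-merges-strongly} together with \autoref{thm:on-policy-value-convergence}\ref{itm:on-policy-value-convergence-ac} (whose proof is precisely the application of \autoref{lem:value-bound} with $m=\infty$ that you spell out). The two points you flag---extending the merging result to the action-contextual setting and handling the time-varying estimator $\MDL^{\ae_{<t}}$ rather than a fixed $\rho$---are glossed over in the paper's one-line proof, so your extra care is justified but does not constitute a different approach.
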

\begin{proof}
By \autoref{thm:MDL-merges-strongly}
$\MDL^\pi$ merges strongly with $\nu^\pi$ for each $\nu \in \M$,
therefore we can apply
\autoref{thm:on-policy-value-convergence}\ref{itm:on-policy-value-convergence-ac}.
\end{proof}

By providing the action sequence \emph{contextually} on a separate input tape,
we can define
$\Km(e_{<t} \dmid a_{<t}) :=
\min \{ |p| \mid e_{<t} \sqsubseteq U(p, a_{<t}) \}$
analogously to \eqref{eq:Km}.

\begin{corollary}[On-Policy Value Convergence for Universal Compression]
\label{cor:Km-on-policy-value-convergence}
\index{on-policy!value convergence}
Let $\rho(e_{<t} \dmid a_{<t}) := 2^{-\Km(e_{<t} \dmid a_{<t})}$.
Then for any environment $\mu \in \Mcomp$ and any policy $\pi$,
\[
V^\pi_\rho(\ae_{<t}) - V^\pi_\mu(\ae_{<t}) \to 0
\text{ as $t \to \infty$ $\mu^\pi$-almost surely.}
\]
\end{corollary}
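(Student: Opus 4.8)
The plan is to follow the template of \autoref{cor:Bayes-on-policy-value-convergence} and \autoref{cor:MDL-on-policy-value-convergence} verbatim: reduce the claim to strong merging of $\rho^\pi$ with $\mu^\pi$ and then invoke \autoref{thm:on-policy-value-convergence}\ref{itm:on-policy-value-convergence-ac}. As in the Bayes case (where $\xi$ is itself only a semimeasure), the reduction should go entirely through \emph{dominance}, so that $\rho$ failing to be a measure never becomes an issue. The only genuinely new ingredient compared to those two corollaries is lifting the domination property of $2^{-\Km}$ to its action-contextual analogue.

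First I would record the contextual domination. Recall that $2^{-\Km}$ dominates every computable measure~(\citealp[Thm.~4.5.4 and Lem.~4.5.6ii(d)]{LV:2008}; originally \citealp{Levin:1973}), which underlies \autoref{cor:strong-merging-Km}. The same construction relativises when the action sequence $a_{1:t}$ is supplied on a separate read-only input tape: any $\mu \in \Mcomp$ is computed by a monotone machine that reads $a_{1:t}$ contextually, hence $\rho(e_{<t} \dmid a_{<t}) = 2^{-\Km(e_{<t} \dmid a_{<t})} \timesgeq \mu(e_{<t} \dmid a_{<t})$ for all $\ae_{<t}$, with a multiplicative constant $c_\mu > 0$. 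Second, I would lift this to the history distributions: using the telescoping identity $\rho(e_{1:t} \dmid a_{1:t}) = \prod_{k=1}^t \rho(e_k \mid \ae_{<k}a_k)$ from \eqref{eq:ccs} (and likewise for $\mu$), the policy factors $\prod_k \pi(a_k \mid \ae_{<k})$ in $\rho^\pi$ and $\mu^\pi$ cancel in the ratio, so $\rho^\pi(\ae_{<t}) \geq c_\mu\, \mu^\pi(\ae_{<t})$ on every history, i.e.\ $\rho^\pi \timesgeq \mu^\pi$. Third, \autoref{prop:compatibility}\ref{itm:dominance=>absolute-continuity} gives $\mu^\pi$ absolutely continuous with respect to $\rho^\pi$, so \autoref{thm:Blackwell-Dubins} yields that $\rho^\pi$ merges strongly with $\mu^\pi$, and \autoref{thm:on-policy-value-convergence}\ref{itm:on-policy-value-convergence-ac} then delivers $V^\pi_\rho(\ae_{<t}) - V^\pi_\mu(\ae_{<t}) \to 0$ as $t \to \infty$ $\mu^\pi$-almost surely.

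The main obstacle is the first step: one must check that passing from $2^{-\Km}$ to the contextual version $2^{-\Km(\cdot \dmid a_{<t})}$ preserves domination over the class $\Mcomp$, i.e.\ that the machine construction behind \citet{Levin:1973} goes through unchanged with an auxiliary input tape carrying the actions. This is routine but deserves an explicit sentence. A secondary point worth noting is that $\rho^\pi$ is only a semimeasure; however, exactly as for the Bayesian mixture in \autoref{cor:Bayes-on-policy-value-convergence}, every link in the chain above (\autoref{prop:compatibility}\ref{itm:dominance=>absolute-continuity}, \autoref{thm:Blackwell-Dubins}, and \autoref{thm:on-policy-value-convergence}) only uses domination of cylinder probabilities, so no normalisation of $\rho$ is required and the argument applies as stated.
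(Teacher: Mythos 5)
Your proposal is correct and follows essentially the same route as the paper: the paper's proof simply cites the dominance of $2^{-\Km}$ over every computable measure from \autoref{ssec:universal-compression} and then chains \autoref{prop:compatibility}\ref{itm:dominance=>absolute-continuity}, \autoref{thm:Blackwell-Dubins}, and \autoref{thm:on-policy-value-convergence}\ref{itm:on-policy-value-convergence-ac} exactly as in \autoref{cor:Bayes-on-policy-value-convergence}. Your extra remarks (relativising Levin's domination theorem to the action-contextual $\Km$ and cancelling the policy factors to get $\rho^\pi \timesgeq \mu^\pi$) just make explicit steps the paper leaves implicit.
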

\begin{proof}
Since $\rho$ dominates every $\mu \in \Mcomp$%
~(\autoref{ssec:universal-compression})
we can apply
\autoref{prop:compatibility}\ref{itm:dominance=>absolute-continuity},
\autoref{thm:Blackwell-Dubins},
and
\autoref{thm:on-policy-value-convergence}\ref{itm:on-policy-value-convergence-ac}
as in the proof of \autoref{cor:Bayes-on-policy-value-convergence}.
\end{proof}

Similarly to $\Km$ there is a speed prior for environments%
~\citep[Ch.~6]{Filan:2015}:
\[
   S_{Kt}(e_{<t} \dmid a_{<t})
:= \sum_{p:\,e_{<t} \sqsubseteq U(p, a_{<t})} \frac{2^{-|p|}}{t(U, p, a_{<t}, e_{<t})}
\]
where $t(U, p, a_{<t}, e_{<t})$ denotes the number of time steps
$U(p, a_{<t})$ takes to produce $e_{<t}$.

\begin{corollary}[On-Policy Value Convergence for the Speed Prior]
\label{cor:speed-prior-on-policy-value-convergence}
\index{on-policy!value convergence}
If the effective horizon is bounded,
then for any environment $\mu \in \Mcomp$ estimable in polynomial time and
any policy $\pi$,
\[
\frac{1}{t} \sum_{k=1}^t \Big( V^\pi_{S_{Kt}}(\ae_{<k}) - V^\pi_\mu(\ae_{<k}) \Big)
\to 0
\text{ as $t \to \infty$ $\mu^\pi$-almost surely.}
\]
\end{corollary}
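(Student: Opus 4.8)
The plan is to mirror the preceding corollaries and reduce to \autoref{thm:on-policy-value-convergence}, but this time to part~\ref{itm:on-policy-value-convergence-awm}, since the speed prior only buys us almost weak merging. Concretely, I would set $\rho := S_{Kt}$ and show that $\rho^\pi$ merges almost weakly with $\mu^\pi$, at which point the bounded effective horizon assumption lets us invoke \autoref{thm:on-policy-value-convergence}\ref{itm:on-policy-value-convergence-awm} directly.

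First I would recall, as in \autoref{cor:merging-SKt} and its environment analogue \citep[Ch.~6]{Filan:2015}, that for $\mu \in \Mcomp$ estimable in polynomial time the speed prior $S_{Kt}$ dominates $\mu$ with coefficients that are polynomial in $|\ae_{<t}| - \log \mu(e_{<t} \dmid a_{<t})$. Since $-\log \mu(e_{<t} \dmid a_{<t})$ does not grow superexponentially $\mu^\pi$-almost surely, this gives $\tfrac1t \log \bigl(\mu(e_{<t} \dmid a_{<t}) / S_{Kt}(e_{<t} \dmid a_{<t})\bigr) \to 0$ $\mu^\pi$-almost surely, i.e.\ $S_{Kt} \timesgeq_W \mu$ by \autoref{rem:weak-dominance}. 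The policy factors $\pi(a_k \mid \ae_{<k})$ in \autoref{def:history-distribution} occur identically in $S_{Kt}^\pi$ and $\mu^\pi$ and cancel in the ratio $S_{Kt}^\pi(\ae_{1:t}) / \mu^\pi(\ae_{1:t})$, so weak dominance transfers from the contextual semimeasures to the history distributions: $S_{Kt}^\pi \timesgeq_W \mu^\pi$. Then \autoref{thm:almost-weak-merging} yields that $S_{Kt}^\pi$ merges almost weakly with $\mu^\pi$, and, the effective horizon being bounded by hypothesis, \autoref{thm:on-policy-value-convergence}\ref{itm:on-policy-value-convergence-awm} gives the claimed Cesàro convergence.

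The main obstacle is the second step's bookkeeping: one has to check that the weak-dominance-with-coefficients bound for $S_{Kt}$ over $\mu$ — stated in terms of $|\ae_{<t}| - \log \mu(e_{<t} \dmid a_{<t})$ rather than a fixed $f(t)$ — really does force $\mu^\pi$-almost sure subexponential growth of the relevant ratio, and hence honest weak dominance of the history distributions. This is exactly the point already handled in the proof of \autoref{cor:merging-SKt}, so it is routine; everything else is a direct application of results proved earlier in the chapter.
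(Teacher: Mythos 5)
Your proposal is correct and follows essentially the same route as the paper: the paper's proof simply invokes \autoref{cor:merging-SKt} (almost weak merging of $S_{Kt}$ with every measure estimable in polynomial time) and then applies \autoref{thm:on-policy-value-convergence}\ref{itm:on-policy-value-convergence-awm} under the bounded effective horizon assumption. You merely unfold the merging step (dominance with subexponential coefficients $\Rightarrow$ weak dominance $\Rightarrow$ almost weak merging) and make explicit the cancellation of the policy factors in the history-distribution ratio, a detail the paper leaves implicit.
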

\begin{proof}
By \autoref{cor:merging-SKt} the speed prior $S_{Kt}$ merges almost weakly
with every measure estimable in polynomial time.
Therefore we can apply
\autoref{thm:on-policy-value-convergence}\ref{itm:on-policy-value-convergence-awm}.
\end{proof}

\section{The Agents}
\label{sec:agents}

If we knew the true environment $\mu$,
we would choose the $\mu$-optimal policy,
the policy that maximizes $\mu$-expected discounted rewards.
But generally we do not know the true environment,
and the challenging part of reinforcement learning is
to learn the environment while trying to collect rewards.

In this section we introduce a number of agents
that attempt to solve the general reinforcement learning problem%
~(\autoref{prob:general-RL}).
These agents are discussed
throughout the rest of this thesis.

\subsection{Bayes}
\label{ssec:AIXI}

A \emph{Bayes optimal policy}\index{policy!Bayes optimal|textbf}
with respect to the prior $w$ is
the policy $\pi^*_\xi$ where $\xi$ is the Bayesian mixture defined in
\autoref{ssec:on-policy-value-convergence}.
There can be one or more Bayes optial policies.
From \autoref{cor:Bayes-on-policy-value-convergence} we get
on-policy value convergence for the Bayes optimal policy.

After history $\ae_{<t}$,
the Bayes policy $\pi^*_\xi$ maximizes expected discounted rewards
in the posterior\index{posterior} mixture:
\[
  \xi(\,\cdot \mid e_{<t} \dmid a_{1:\infty})
= \sum_{\nu \in \M} w(\nu \mid \ae_{<t}) \nu(\,\cdot \mid e_{<t} \dmid a_{1:\infty})
\]
where $w(\nu \mid \ae_{<t})$ are the posterior weights%
~\eqref{eq:posterior-weight}.
Maximizing expected rewards according to the posterior is the same as
maximizing expected rewards according to the prior conditional on the history:
if $\pi(\ae_{<t}) = \pi^*_\xi(\ae_{<t})$,
then $V^\pi_\xi(\ae_{<t}) = V^*_\xi(\ae_{<t})$.
Actually visiting the history $\ae_{<t}$
does not change what $\pi^*_\xi$ planned to do
before it visited $\ae_{<t}$.
Note that this relies on the fact that
the way we use discounting is \index{time consistent}time consistent%
~\citep[Def.~12]{LH:2014discounting}.

When using the prior $w(\nu) \propto 2^{-K(\nu)}$%
~(\autoref{ex:Solomonoff-prior})
over the class $\Mlscccs$,
the Bayes optimal policy is also known as \emph{AIXI}\index{AIXI|textbf},
introduced and analyzed by
\citet{Hutter:2000,Hutter:2001aixi,Hutter:2002,Hutter:2003AIXI,Hutter:2005,%
Hutter:2007aixi,Hutter:2012uai}
in his work on \emph{universal artificial intelligence}%
\index{universal!artificial intelligence}.
In this case, the Bayesian mixture \eqref{eq:Bayesian-mixture}
can be defined equivalently according to~\citep{WSH:2011}
\begin{equation}\label{eq:xi-from-U}
   \xi(e_{<t} \dmid a_{<t})
:= \sum_{p:\, e_{<t} \sqsubseteq U(p, a_{<t})} 2^{-|p|}.
\end{equation}
Generally there is more than one $\xi$-optimal policy
and Solomonoff's prior depends on
the choice of the (reference) universal Turing machine,
so this definition is not unique.
Moreover, not every universal Turing machine is a good choice for AIXI,
see \autoref{sec:bad-priors} for a few bad choices.
The following lemma will be used later.

\begin{lemma}[Mixing Mixtures]
\label{lem:mixing-mixtures}
\index{Bayesian!mixture}
Let $q, q' \in \mathbb{Q}$ such that
$q > 0$, $q' \geq 0$, and $q + q' \leq 1$.
Let $w$ be any lower semicomputable positive prior,
let $\xi$ be the Bayesian mixture corresponding to $w$, and
let $\rho \in \Mlscccs$.
Then $\xi' := q\xi + q'\rho \in \Mlscccs$ is a Bayesian mixture.
\end{lemma}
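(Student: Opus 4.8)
The plan is to verify the lemma's two assertions in turn: that $\xi' \in \Mlscccs$, and that $\xi'$ can be written as a Bayesian mixture with a positive lower semicomputable prior. Both come down to elementary closure properties of these classes under nonnegative rational combinations.

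First I would check $\xi' \in \Mlscccs$. Since $q, q' \in \mathbb{Q}$ are nonnegative and $\xi, \rho$ are lower semicomputable, $\xi' = q\xi + q'\rho$ is lower semicomputable. It is chronological because $\xi$ and $\rho$ are: the value $\xi'(e_{1:t} \dmid a_{1:\infty})$ depends on the action sequence only through the two summands, each of which depends only on $a_{1:t}$. For the semimeasure axioms (\autoref{def:semimeasure}), the normalization condition at $\epsilon$ reads $\xi'(\epsilon) = q\xi(\epsilon) + q'\rho(\epsilon) \leq q + q' \leq 1$, which is exactly where the hypothesis $q + q' \leq 1$ enters; and for every $a_{1:t+1}$ we have $\sum_{e \in \E} \xi'(e_{1:t}e \dmid a_{1:t+1}) = q \sum_{e} \xi(e_{1:t}e \dmid a_{1:t+1}) + q' \sum_{e} \rho(e_{1:t}e \dmid a_{1:t+1}) \leq q\,\xi(e_{1:t} \dmid a_{1:t}) + q'\,\rho(e_{1:t} \dmid a_{1:t}) = \xi'(e_{1:t} \dmid a_{1:t})$, using that $\xi$ and $\rho$ are semimeasures. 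Hence $\xi' \in \Mlscccs$.

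Next I would exhibit $\xi'$ as a Bayesian mixture. Write $\xi = \sum_{\nu \in \M} w(\nu)\,\nu$. We may assume $\rho \in \M$: if $\rho \notin \M$ and $q' > 0$, adjoin it (fixing an index of $\rho$ in the enumeration of $\Mlscccs$) and set $w(\rho) := 0$; if $\rho \notin \M$ and $q' = 0$, then $\xi' = q\xi$ is already a mixture over $\M$. Define $w'(\nu) := q\,w(\nu) + q'\,\one\{\nu = \rho\}$. Then $\sum_{\nu \in \M} w'(\nu)\,\nu = q\xi + q'\rho = \xi'$. The prior $w'$ is positive: $q > 0$ and positivity of $w$ give $w'(\nu) \geq q\,w(\nu) > 0$ for all $\nu \in \M$ (and if $\rho$ was freshly adjoined then $q' > 0$, so $w'(\rho) > 0$ too). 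It is lower semicomputable: $q\,w$ is a rational multiple of a lower semicomputable function, and $q'\,\one\{\cdot = \rho\}$ is computable once the index of $\rho$ is fixed. Finally $\sum_{\nu} w'(\nu) = q + q' \leq 1$, so $w'$ is a (semi)prior, which is what is meant by a Bayesian mixture here (compare the semiprior discussion in \autoref{ex:Solomonoff-prior}). This completes the proof.

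The argument is essentially bookkeeping, and the only subtlety worth flagging is that the two numerical hypotheses each play a single, specific role: $q > 0$ is exactly what keeps every environment of $\M$ at strictly positive weight under $w'$, so that positivity (universality) of the new prior is preserved, and $q + q' \leq 1$ is exactly what preserves the normalization inequality $\xi'(\epsilon) \leq 1$ and hence the semimeasure property. Rationality of $q$ and $q'$ is what keeps both $\xi'$ and $w'$ lower semicomputable. I do not expect any genuine obstacle beyond getting these observations straight.
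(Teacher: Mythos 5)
Your proof is correct and takes essentially the same route as the paper, whose entire proof is the single line defining the new prior $w' := qw + q'\one_\rho$ — exactly your $w'(\nu) = q\,w(\nu) + q'\,\one\{\nu = \rho\}$. You simply spell out the verifications (semimeasure axioms, chronology, lower semicomputability, positivity, normalization) that the paper leaves implicit.
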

\begin{proof}
$\xi'$ is given by the positive prior $w'$ with $w' := qw + q'\one_\rho$.
\end{proof}

Bayesian approaches have a long tradition in reinforcement learning,
although they are often prohibitively expensive to compute.
For multi-armed bandits,
\citet{Gittins:1979} achieved a breakthrough
with an index strategy that enables the computation of the optimal policy
by computing one quantity for each arm independently of the rest.
This strategy even achieves
the optimal asymptotic regret bounds~\citep{Lattimore:2015Gittins}.
Larger classes have also been attempted:
using Monte-Carlo tree search,
\citet{VNHUS:2011} approximate the Bayes optimal policy
in the class of all context trees.
\citet{Doshi-Velez:2012} uses Bayesian techniques
to learn infinite-state POMDPs.
See \citet{VGMP:2012Bayes} for a survey on Bayesian techniques in RL.

In the rest of this thesis,
the Bayes optimal policy is often treated as an optimal exploitation strategy.
This is not true: Bayes does explore~(when it is Bayes optimal to do so)%
\index{exploration vs.\ exploitation}.
It just does not explore general environment classes completely%
~(see \autoref{ssec:asymptotic-optimality-Bayes}).

\subsection{Knowledge-Seeking Agents}
\label{ssec:knowledge-seeking-agents}

In this section we discuss two variants of knowledge-seeking agents%
\index{knowledge-seeking}:
entropy-seeking agents introduced by \citet{Orseau:2011ksa,Orseau:2014ksa}
and information-seeking agents introduced by \citet{OLH:2013ksa}.
The entropy-seeking agent maximizes the Shannon entropy gain,
while the information-seeking agent maximizes
the expected information gain.
These quantities are expressed in different value functions.
In places where confusion can arise,
we call the value function $V^\pi_\nu$ from \autoref{def:value-function}
the \emph{reward-seeking value function}%
\index{value function!reward-seeking}.

In this section
we use a finite horizon $m < \infty$
(possibly dependent on time step $t$):
the knowledge-seeking agent maximizes entropy/information received
up to time step $m$.
We assume implicitly that $m$ (as a function of $t$) is computable.
Moreover, in this section we assume that the Bayesian mixture $\xi$
is a measure rather than a semimeasure;
\autoref{ex:ksa-unnormalized} discusses this assumption.

\begin{definition}[{Entropy-Seeking Value Function;
\citealp[Sec.~6]{Orseau:2014ksa}}]
\label{def:V-entropy}\index{value function!entropy-seeking|textbf}
\index{entropy}
The \emph{entropy-seeking value} of a policy $\pi$ given history $\ae_{<t}$ is
\[
   V^{\pi,m}_\Ent(\ae_{<t})
:= \EE^\pi_\xi [ -\log_2 \xi(e_{1:m} \mid e_{<t} \dmid a_{1:m}) \mid \ae_{<t} ].
\]
\end{definition}

The entropy-seeking value
is the Bayes-expectation of $-\log \xi$.
\citet{Orseau:2011ksa,Orseau:2014ksa} also considers a related value function
based on the $\xi$-expectation of $\xi$
that we do not discuss here.

\begin{definition}[{Information-Seeking Value Function;
\citealp[Def.~1]{OLH:2013ksa}}]
\label{def:V-information}\index{value function!information-seeking|textbf}
\index{KL-divergence}
The \emph{information-seeking value} of a policy $\pi$ given history $\ae_{<t}$
is
\[
   V^{\pi,m}_\IG(\ae_{<t})
:= \sum_{\nu \in \M} w(\nu \mid \ae_{<t}) \KL_m(\nu^\pi, \xi^\pi \mid \ae_{<t}).
\]
\end{definition}

Analogously to before we define $V^*_\Ent := \sup_\pi V^\pi_\Ent$ and
$V^*_\IG := \sup_\pi V^\pi_\IG$.
An optimal entropy-seeking policy is defined as
$\pi^*_\Ent :\in \argmax_\pi V_\Ent^\pi$ and
an optimal information-seeking policy is defined as
$\pi^*_\IG :\in \argmax_\pi V_\IG^\pi$.
Since we use a finite horizon~($m < \infty$),
these optimal policies exist.

The \emph{information gain}\index{information gain|textbf}
is defined as the difference in entropy
between the prior and the posterior:
\[
\IG_{t:m}(\ae_{1:m}) := \Ent(w(\,\cdot \mid \ae_{<t})) - \Ent(w(\,\cdot \mid \ae_{1:m}))
\]
We get the following identity~\citep[Eq.~3.5]{Lattimore:2013}.
\[
    \EE^\pi_{\xi}[ \IG_{t:m}(\ae_{1:m}) \mid \ae_{<t} ]
~=~ V^{\pi,m}_{\IG}(\ae_{<t})
\]

For infinite horizons~($m = \infty$),
the values functions from \autoref{def:V-entropy} and \autoref{def:V-information}
may not converge.
To ensure convergence, we can either use discounting, or
in case of $V_{\IG}$
a prior with finite entropy~\citep[Thm.~3.4]{Lattimore:2013}.
Moreover, note that while $V_{\IG}$ and $V_{\Ent}$ are expectations
with respect to the measure $\xi$,
there is no bound on the one-step change in value
$V^{\pi,m}_{\IG}(\ae_{<t}) - V^{\pi,m}_{\IG}(\ae_{1:t})$,
which can also be negative.
For the reward-seeking value function $V^{\pi,m}_\nu$,
the one-step change in value is bounded between $0$ and $1$
by \autoref{rem:value-function-bounded}.

For classes of deterministic environments
\autoref{def:V-entropy} and \autoref{def:V-information} coincide.
In stochastic environments
the entropy-seeking agent does not work well
because it gets distracted by noise in the environment
rather than trying to distinguish environments~\citep[Sec.~5]{OLH:2013ksa}.
Moreover, the entropy-seeking agent
may fail to seek knowledge
in deterministic semimeasures
as the following example demonstrates.

\begin{example}[Unnormalized Entropy-Seeking]
\label{ex:ksa-unnormalized}\index{value function!entropy-seeking}
If the Bayesian mixture $\xi$ is a semi\-measure instead of a measure
(such as the Solomonoff prior from \autoref{ex:Solomonoff-prior}),
then the entropy-seeking agent does not explore correctly.
Fix $\A := \{ \alpha, \beta \}$, $\E := \{ 0, 1 \}$, and $m = t$
(we only care about the entropy of the next percept).
We illustrate the problem on a simple class of environments
$\{ \nu_1, \nu_2 \}$:
\begin{center}
\begin{tikzpicture}[auto]
\node[circle,draw] (nu1) {$\nu_1$};
\path[transition] (nu1) to[loop left] node {$\alpha/0/0.1$} (nu1);
\path[transition] (nu1) to[loop right] node {$\beta/0/0.5$} (nu1);

\node[circle,draw,right of=nu1,node distance=70mm] (nu2) {$\nu_2$};
\path[transition] (nu2) to[loop left] node {$\alpha/1/0.1$} (nu2);
\path[transition] (nu2) to[loop right] node {$\beta/0/0.5$} (nu2);
\end{tikzpicture}
\end{center}
where transitions are labeled with action/percept/probability.
Both $\nu_1$ and $\nu_2$ return a percept deterministically or nothing at all
(the environment ends).
Only action $\alpha$ distinguishes between the environments.
With the prior $w(\nu_1) := w(\nu_2) := 1/2$, we get a mixture $\xi$
for the entropy-seeking value function $V^\pi_\Ent$.
Then $V^*_\Ent(\alpha) \approx 0.432 < 0.5 = V^*_\Ent(\beta)$, 
hence action $\beta$ is preferred over $\alpha$ by the entropy-seeking agent.
But taking action $\beta$ yields percept $0$ (if any),
hence nothing is learned about the environment.
\end{example}

On-policy value convergence~(\autoref{thm:on-policy-value-convergence})
ensures that asymptotically,
the agent learns the value of its own policy.
Knowledge-seeking agents do even better:
they don't have to balance between exploration and exploitation%
\index{exploration vs.\ exploitation},
so they can focus solely on exploration.
As a result, they learn off-policy\index{off-policy},
i.e., the value of counterfactual actions~\citep[Thm.~7]{OLH:2013ksa}.

\subsection{BayesExp}
\label{ssec:BayesExp}\index{BayesExp}

\citet[Thm.~5.6]{Lattimore:2013} defines
\emph{\BayesExp} combining AIXI with the information-seeking agent.
{\BayesExp} alternates between phases of exploration and
phases of exploitation\index{exploration vs.\ exploitation}:
Let $\eps_t$ be a monotone decreasing sequence of positive reals such that
$\eps_t \to 0$ as $t \to \infty$.
If the optimal information-seeking value $V^*_{\IG}$
is larger than $\varepsilon_t$,
then {\BayesExp} starts an exploration phase,
otherwise it starts an exploitation phase.
During an exploration phase,
{\BayesExp} follows an optimal information-seeking policy
for an $\varepsilon_t$-effective horizon.
During an exploitation phase,
{\BayesExp} follows an $\xi$-optimal reward-seeking policy
for one step~(see \autoref{alg:BayesExp}).

\begin{algorithm}
\index{BayesExp|textbf}
\begin{center}
\begin{algorithmic}[1]
\While{true}
	\If {$V^{*,t + H_t(\eps_t)}_{\IG}(\ae_{<t}) > \varepsilon_t$}
		\State follow $\pi^*_\IG$ for $H_t(\eps_t)$ steps
	\Else
		\State follow $\pi^*_\xi$ for $1$ step
	\EndIf
\EndWhile
\end{algorithmic}
\end{center}
\caption{{\BayesExp} policy $\pi_{BE}$~\citep[Alg.~2]{Lattimore:2013}.}
\label{alg:BayesExp}
\end{algorithm}

\subsection{Thompson Sampling}
\label{ssec:Thompson-sampling}
\index{Thompson sampling}

\emph{Thompson sampling}, also known as
\emph{posterior sampling} or \emph{the Bayesian control rule},
was originally proposed by \citet{Thompson:1933}
as a bandit algorithm.
It is easy to implement and often achieves quite good results%
~\citep{CH:2011Thompson}.
In multi-armed bandits
it attains optimal regret~\citep{AG:2011Thompson,KKM:2012Thompson}.
Thompson sampling has also been discussed for MDPs~\citep{Strens:2000,Dearden+:1998} and
Bayesian and frequentist regret bounds have been established~\citep{ORR:2013thompson,GS:2015thompson}.

For general RL Thompson sampling
was first suggested by \citet{OB:2010Thompson} with resampling at every time step.
\citet{Strens:2000} proposes following the optimal policy for one episode or
``related to the number of state transitions
the agent is likely to need to plan ahead''.
We follow Strens' suggestion and resample at the effective horizon.

Let $\eps_t$ be a monotone decreasing sequence of positive reals such that
$\eps_t \to 0$ as $t \to \infty$.
Our variant of Thomson sampling is given in \autoref{alg:Thompson-sampling}.
It samples an environment $\rho$ from the posterior,
follows the $\rho$-optimal policy for an $\eps_t$-effective horizon,
and then repeats.

\begin{algorithm}
\index{Thompson sampling|textbf}
\begin{center}
\begin{algorithmic}[1]
\While{true}
	\State sample $\rho \sim w(\,\cdot \mid \ae_{<t})$
	\State follow $\pi^*_\rho$ for $H_t(\eps_t)$ steps
\EndWhile
\end{algorithmic}
\end{center}
\caption{Thompson sampling policy $\pi_T$.}
\label{alg:Thompson-sampling}
\end{algorithm}

Note that $\pi_T$ is a stochastic policy
since we occasionally sample from a distribution.
We assume that this sampling is independent of everything else.


\chapter{Optimality}
\label{cha:optimality}

\newcommand{\optref}[1]{{\hyperref[{#1}]{O\ref*{#1}}}} 

\falsequote{Shane Legg}{Machines will never be intelligent.}

\autoref{prob:general-RL} defines the general reinforcement learning problem.
But our definition of this problem did not specify what a solution would be.
This chapter is dedicated to this question:
\begin{quote}
What is an optimal solution to the general reinforcement learning problem?
\end{quote}
How can we say that
one policy is \emph{better} than another?
What is the \emph{best} policy?
Are the policies from \autoref{sec:agents} optimal?
Several notions of optimality for a policy $\pi$ in an environment class $\M$
are conceivable:
\begin{enumerate}[{O}1.]
\item\label{itm:maximal-reward} \emph{Maximal reward}.
	The policy $\pi$ receives a reward of $1$ in every time step%
	~(which is maximal according to \assref{ass:aixi}{ass:bounded-rewards}):
	\[
	\forall t \in \mathbb{N}.\; r_t = 1
	\]
\item\label{itm:optimal-policy} \emph{Optimal policy}.
	The policy $\pi$ achieves the highest possible value
	in the true environment $\mu$:
	\[
	\forall \ae_{<t} \in \H.\; V^\pi_\mu(\ae_{<t}) = V^*_\mu(\ae_{<t})
	\]
\item\label{itm:Pareto-optimality} \emph{Pareto optimality}%
	\index{optimality!Pareto}~\citep[Thm.~2]{Hutter:2002}.
	There is no other policy
	that performs at least as good in all environments
	and strictly better in at least one:
	\[
	\nexists \tilde\pi.\; \Big(
		\forall \nu \in \M.\; V^{\tilde\pi}_\nu(\epsilon) \geq V^\pi_\nu(\epsilon)
		\text{ and }
		\exists \rho \in \M.\; V^{\tilde\pi}_\rho(\epsilon) > V^\pi_\rho(\epsilon)
	\Big)
	\]
\item\label{itm:balanced-Pareto-optimality} \emph{Balanced Pareto optimality}%
	\index{optimality!balanced Pareto}~\citep[Thm.~3]{Hutter:2002}.
	The policy $\pi$ achieves a better value across $\M$
	weighted by $w \in \Delta\M$ than any other policy:
	\[
	\forall \tilde\pi.\; \sum_{\nu \in \M} w(\nu) \big(
	  V^\pi_\nu(\epsilon) - V^{\tilde\pi}_\nu(\epsilon)
	\big) \geq 0
	\]
\item\label{itm:Bayes-optimality} \emph{Bayes optimality}%
	\index{optimality!Bayes}.
	The policy $\pi$ is $\xi$-optimal for some Bayes mixture $\xi$:
	\[
	\forall \ae_{<t} \in \H.\; V^\pi_\xi(\ae_{<t}) = V^*_\xi(\ae_{<t})
	\]
\item\label{itm:PAC} \emph{Probably approximately correct}\index{PAC}.
	For a given $\eps, \delta > 0$
	the value of the policy $\pi$ is $\eps$-close to the optimal value
	with probability at least $\delta$ after time step $t_0(\eps, \delta)$:
	\[
	  \mu^\pi \Big[ \forall t \geq t_0(\eps, \delta).\; V^*_\mu(\ae_{<t}) - V^\pi_\mu(\ae_{<t}) < \eps \Big]
	> 1 - \delta
	\]
\item\label{itm:asymptotic-optimality}
	\emph{Asymptotic optimality}\index{optimality!asymptotic}%
	~\citep[Sec.~5.3.4]{Hutter:2005}.
	The value of the policy $\pi$ converges to the optimal value:
	\[
	V^*_\mu(\ae_{<t}) - V^\pi_\mu(\ae_{<t}) \to 0
	\text{ as $t \to \infty$}
	\]
\item\label{itm:sublinear-regret}
	\emph{Sublinear regret}\index{regret}.
	The difference between the reward sum of the policy $\pi$ and
	the best policy in hindsight grows sublinearly:
	\[
	\sup_{\pi'} \EE^{\pi'}_\mu \left[ \sum_{t=1}^m r_t \right]
	- \EE^\pi_\mu \left[ \sum_{t=1}^m r_t \right] \in o(m)
	\]
\end{enumerate}

We discuss these notions of optimality in turn.
Achieving the maximal reward at every time step is impossible
if there is no action that
makes the environment $\mu$ respond with the maximal reward;
generally there is no policy that achieves maximal rewards at every time step.
In order to follow the optimal policy,
we need to know the true environment.
In our setting, the true environment is unknown and has to be learned.
During the learning process the agent cannot also act optimally
because it needs to explore.
In particular, the policy $\pi$ cannot be optimal
simultaneously in all environments from $\M$.
This rules out \optref{itm:maximal-reward} and \optref{itm:optimal-policy}
as a notion of optimality.

In \autoref{sec:Pareto-optimality} we show that
all policies are Pareto optimal.
This disqualifies \optref{itm:Pareto-optimality}
as a useful notion of optimality in general reinforcement learning.

Balanced Pareto optimality~(\optref{itm:balanced-Pareto-optimality}),
Bayes optimality~(\optref{itm:Bayes-optimality}), and
maximal Legg-Hutter intelligence~\citep{LH:2007int} turn out to coincide.
In \autoref{sec:Bayes-optimality}
we show that Legg-Hutter intelligence is highly subjective,
because it depends on the choice of the prior.
By changing the prior of a Bayesian agent,
we can make the agent's intelligence arbitrarily low.
In \autoref{sec:bad-priors} we present a choice of particularly bad priors.
This rules out \optref{itm:balanced-Pareto-optimality}
and \optref{itm:Bayes-optimality}
because they are prior-dependent and not objective.

\optref{itm:PAC} is a stronger version of asymptotic optimality that
provides a rate of convergence (it implies \optref{itm:asymptotic-optimality}).
Since our environment class can
be very large and non-compact, concrete PAC results are likely impossible.
\citet{Orseau:2010,Orseau:2013} shows that
the Bayes optimal agent does not achieve asymptotic optimality
in all computable environments.
The underlying problem is that
in the beginning the agent does not know enough
about its environment and therefore relies heavily on its prior.
Lack of exploration then retains the prior's bias.
This problem can be alleviated
by adding extra exploration to the Bayesian agent.
In \autoref{sec:asymptotic-optimality} we discuss two agents
that achieve asymptotic optimality:
{\BayesExp}~(\autoref{ssec:BayesExp}) and
Thompson sampling~(\autoref{ssec:Thompson-sampling}).
This establishes that \optref{itm:asymptotic-optimality} is possible.

In general environments sublinear regret
is impossible because the agent can get stuck in traps
from which it is unable to recover.
This rules out \optref{itm:sublinear-regret}.
However, in \autoref{sec:regret} we show that
if we assume that the environment allows recovering from mistakes
(and some minor conditions on the discount function are fulfilled),
then asymptotic optimality implies sublinear regret.
This means that Thompson sampling has sublinear regret in these
recoverable environments.

Notably, only asymptotic optimality~(\optref{itm:asymptotic-optimality})
holds up to be a nontrivial and objective criterion of optimality
that applies to the general reinforcement learning problem.
While there are several agents that are known to be asymptotically optimal,
some undesirable properties remain.
\autoref{sec:discussion-optimality} discusses this further.
See also \citet{Mahadevan:1996}
for a discussion of notions of optimality in MDPs.

\section{Pareto Optimality}
\label{sec:Pareto-optimality}

In this section we show that
Pareto optimality is not a useful criterion for optimality
since for any environment class containing $\Mcomp$,
all policies are Pareto optimal.

\begin{definition}[{Pareto Optimality; \citealp[Def.~5.22]{Hutter:2005}}]
\label{def:Pareto-optimality}\index{optimality!Pareto|textbf}
A policy $\pi$ is
\emph{Pareto optimal in the set of environments $\M$} iff
there is no policy $\tilde{\pi}$ such that
$V^{\tilde{\pi}}_\nu(\epsilon) \geq V^\pi_\nu(\epsilon)$
for all $\nu \in \M$ and
$V^{\tilde{\pi}}_\rho(\epsilon) > V^\pi_\rho(\epsilon)$
for at least one $\rho \in \M$.
\end{definition}

The literature provides the following result.

\begin{theorem}[{AIXI is Pareto Optimal; \citealp[Thm.~2]{Hutter:2002}}]
\label{thm:AIXI-is-Pareto-optimal}\index{AIXI}\index{optimality!Pareto}
Every $\xi$-optimal policy is Pareto optimal in $\Mlscccs$.
\end{theorem}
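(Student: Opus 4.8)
The plan is to show that if some policy $\tilde\pi$ weakly dominates a $\xi$-optimal policy $\pi^*_\xi$ in the Pareto sense, then in fact $V^{\tilde\pi}_\nu(\epsilon) = V^{\pi^*_\xi}_\nu(\epsilon)$ for every $\nu \in \M := \Mlscccs$, contradicting the requirement that the inequality be strict somewhere. The key observation is that the $t_0$-value of a $\xi$-optimal policy is by definition maximal for the mixture: $V^{\pi^*_\xi}_\xi(\epsilon) = V^*_\xi(\epsilon) = \sup_\pi V^\pi_\xi(\epsilon)$, so no policy can beat $\pi^*_\xi$ when values are aggregated with the prior weights $w$.

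First I would expand the mixture value using linearity of the value function in $\nu$ after the empty history. Since $\xi = \sum_{\nu \in \M} w(\nu)\nu$ and the empty history imposes no conditioning (the ratios $\nu(e_{<t} \dmid a_{<t})/\xi(e_{<t}\dmid a_{<t})$ in \autoref{lem:V-linear} are all $1$ at $t=1$), equation~\eqref{eq:value-of-xi} gives, for any policy $\pi$,
\[
V^\pi_\xi(\epsilon) = \sum_{\nu \in \M} w(\nu)\, V^\pi_\nu(\epsilon).
\]
Now suppose for contradiction that $\tilde\pi$ Pareto-dominates $\pi^*_\xi$: $V^{\tilde\pi}_\nu(\epsilon) \geq V^{\pi^*_\xi}_\nu(\epsilon)$ for all $\nu \in \M$, with strict inequality for some $\rho \in \M$. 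Multiplying by the strictly positive weights $w(\nu)$ (the Solomonoff prior $w(\nu) \propto 2^{-K(\nu)}$ is positive on all of $\M$) and summing, the strict inequality at $\rho$ is not cancelled, so
\[
V^{\tilde\pi}_\xi(\epsilon) = \sum_{\nu \in \M} w(\nu) V^{\tilde\pi}_\nu(\epsilon)
> \sum_{\nu \in \M} w(\nu) V^{\pi^*_\xi}_\nu(\epsilon) = V^{\pi^*_\xi}_\xi(\epsilon) = V^*_\xi(\epsilon),
\]
contradicting the definition of the optimal value $V^*_\xi(\epsilon) = \sup_\pi V^\pi_\xi(\epsilon)$. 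Hence no such $\tilde\pi$ exists and $\pi^*_\xi$ is Pareto optimal.

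The main subtlety to be careful about is the interchange of the infinite sum over $\M$ with the supremum/expectation and the justification that the strict inequality survives summation — this needs $w(\rho) > 0$ (guaranteed since $w$ is the universal prior from \autoref{ex:Solomonoff-prior}) and that all terms are nonnegative and bounded by \autoref{rem:value-function-bounded}, so the series converge absolutely and termwise comparison is legitimate. A second minor point is that $\xi$ is only a semimeasure, not a measure, but this does not affect the $t_0$-value argument: \autoref{lem:V-linear} and \eqref{eq:value-of-xi} are stated for semimeasure mixtures in $\Mlscccs$, and the value function is well-defined and bounded in $[0,1]$ regardless. I expect essentially no hard obstacle here; this is a short argument, and indeed \citet[Thm.~2]{Hutter:2002} records it.
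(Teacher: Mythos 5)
Your proof is correct and is essentially the standard argument: the paper itself states this theorem without proof, deferring to \citet[Thm.~2]{Hutter:2002}, and your contradiction via $V^{\tilde\pi}_\xi(\epsilon) = \sum_{\nu} w(\nu) V^{\tilde\pi}_\nu(\epsilon) > V^*_\xi(\epsilon)$ using positivity of the prior and boundedness of values is exactly that classical proof. The two subtleties you flag (termwise comparison of the absolutely convergent weighted sums, and $\xi$ being only a semimeasure) are handled correctly and match how the paper's \autoref{lem:V-linear} and \eqref{eq:value-of-xi} are used elsewhere.
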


The following theorem was proved for deterministic policies
in \citet{LH:2015priors}.
Here we extend it to stochastic policies.

\begin{theorem}[Pareto Optimality is Trivial]
\label{thm:Pareto-optimality-is-trivial}\index{optimality!Pareto}
Every policy is Pareto optimal in any class $\M \supseteq \Mcomp$.
\end{theorem}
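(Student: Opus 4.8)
The plan is to argue by contradiction. Suppose a policy $\pi$ is \emph{not} Pareto optimal in some class $\M \supseteq \Mcomp$; then there is a policy $\tilde\pi$ with $V^{\tilde\pi}_\nu(\epsilon) \ge V^\pi_\nu(\epsilon)$ for all $\nu \in \M$ and $V^{\tilde\pi}_\rho(\epsilon) > V^\pi_\rho(\epsilon)$ for some $\rho \in \M$. I would derive a contradiction by constructing a single environment $\mu \in \Mcomp \subseteq \M$ in which $\pi$ does \emph{strictly better} than $\tilde\pi$, i.e.\ $V^\pi_\mu(\epsilon) > V^{\tilde\pi}_\mu(\epsilon)$, which is incompatible with $\tilde\pi$ weakly dominating $\pi$ over $\M$.

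The first step is to pin down a history where the two policies genuinely disagree. If $\pi$ and $\tilde\pi$ induced the same history distribution $\nu^\pi = \nu^{\tilde\pi}$ in every environment $\nu$, they would have identical $t_0$-value everywhere, contradicting the strict inequality in $\rho$. So consider the set $T$ of histories $\ae_{<t}$ along which every action has positive $\pi$-probability and on whose proper prefixes $\pi$ and $\tilde\pi$ coincide; a straightforward induction on length shows that if $\pi(\cdot\mid h) = \tilde\pi(\cdot\mid h)$ for all $h \in T$ then $\nu^\pi = \nu^{\tilde\pi}$ for every $\nu$. Hence some $\ae_{<t} \in T$ has $\pi(\cdot\mid\ae_{<t}) \ne \tilde\pi(\cdot\mid\ae_{<t})$; I pick one of minimal length $t-1$. (For a finite-horizon discount with horizon $m$ such a disagreement with $t-1 < m-1$ must occur, since values only depend on rewards up to step $m-1$; this guarantees $\Gamma_t > 0$.) Because the two distributions over $\A$ at $\ae_{<t}$ differ and both are normalised, there is an action $a$ with $\pi(a\mid\ae_{<t}) > \tilde\pi(a\mid\ae_{<t})$.

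The second step constructs $\mu$. Let $r_{\min}$ and $r_{\max}$ denote the smallest and largest rewards appearing in the (finite) percept set $\E$; if $r_{\min} = r_{\max}$ every policy has the same value in every environment and the claim is trivial, so assume $r_{\min} < r_{\max}$. Define a deterministic $\mu$ as follows: as long as the agent has played exactly the action prefix $a_1 \dots a_k$ of $\ae_{<t}$, return the corresponding percept $e_{k+1}$ of $\ae_{<t}$; if at time $t$ the agent plays $a$, return reward $r_{\max}$ at every step from $t$ on; if it plays any other action at time $t$, or if it ever deviates from $a_1\dots a_{t-1}$ at an earlier step, return a fixed percept of reward $r_{\min}$ at every step from then on. This $\mu$ is a computable chronological contextual measure, hence $\mu \in \Mcomp$. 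Since $\pi$ and $\tilde\pi$ agree on all proper prefixes of $\ae_{<t}$, the probability $p := \prod_{k<t}\pi(a_k\mid\ae_{<k}) = \prod_{k<t}\tilde\pi(a_k\mid\ae_{<k})$ of reaching $\ae_{<t}$ is the same for both, and it is positive (each factor being positive since $\ae_{<t}\in T$); moreover in every ``deviated'' branch both policies collect the identical, policy-independent reward stream. Therefore the $\mu^\pi$- and $\mu^{\tilde\pi}$-expected discounted reward accrued in steps $1,\dots,t-1$ coincide, while from step $t$ on this expectation equals $\Gamma_t\big(p\,\pi(a\mid\ae_{<t})(r_{\max}-r_{\min}) + r_{\min}\big)$ for $\pi$, and likewise with $\tilde\pi$ in place of $\pi$. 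Subtracting,
\[
V^\pi_\mu(\epsilon) - V^{\tilde\pi}_\mu(\epsilon)
= \frac{\Gamma_t}{\Gamma_1}\, p\, (r_{\max}-r_{\min})\,\big(\pi(a\mid\ae_{<t}) - \tilde\pi(a\mid\ae_{<t})\big) > 0,
\]
contradicting $V^{\tilde\pi}_\mu(\epsilon) \ge V^\pi_\mu(\epsilon)$.

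The main obstacle I anticipate is the bookkeeping in the disagreement step: one must ensure the chosen history $\ae_{<t}$ is reachable with positive probability under \emph{both} policies and that nothing before time $t$ can swing the comparison in $\tilde\pi$'s favour. The ``terminate on deviation'' clause of $\mu$ is exactly what neutralises this --- off the prescribed path every continuation yields the constant reward $r_{\min}$ regardless of the policy --- so the only surviving difference is the single step at $\ae_{<t}$, which is engineered to favour $\pi$. Everything else is routine; note that we use $\Mcomp$ being a class of genuine measures only so that the deterministic $\mu$ qualifies as a member, and that $\Gamma_1 \ge \Gamma_t > 0$ since $\Gamma$ is non-increasing.
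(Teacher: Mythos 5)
Your proof is correct and follows essentially the same route as the paper's: locate a shortest disagreement history (on the common support, with agreement on all proper prefixes) where $\pi$ puts strictly more mass on some action than $\tilde\pi$, and build a computable deterministic ``buddy'' environment $\mu \in \Mcomp$ that replays that history and rewards exactly that action, so that $V^\pi_\mu(\epsilon) - V^{\tilde\pi}_\mu(\epsilon) > 0$, contradicting Pareto dominance. The differences are only bookkeeping: you use $r_{\min}, r_{\max}$ instead of the paper's assumption $(0,0),(0,1) \in \E$, and you explicitly send all off-path branches to a constant-reward sink, which if anything makes the cancellation of the pre-$t$ contributions and the reachability-probability argument cleaner than in the paper's version.
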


The proof proceeds as follows:
for a given policy $\pi$, we construct a set of `buddy environments'
that reward $\pi$ and punish other policies.
Together they can defend against any policy $\tilde{\pi}$
that tries to take the crown of Pareto optimality from $\pi$.

\begin{proof}
We assume $(0, 0)$ and $(0, 1) \in \E$.
Moreover, assume there is a policy $\pi$ that is not Pareto optimal.
Then there is a policy $\tilde{\pi}$ that \emph{Pareto dominates} $\pi$, i.e.,
$V^{\tilde{\pi}}_\rho(\epsilon) > V^\pi_\rho(\epsilon)$
for some $\rho \in \M$, and
$V^{\tilde{\pi}}_\nu(\epsilon) \geq V^\pi_\nu(\epsilon)$ for all $\nu \in \M$.
From $V^{\tilde{\pi}}_\rho(\epsilon) > V^\pi_\rho(\epsilon)$ and
\autoref{lem:discounted-values} we get that
there is a shortest and lexicographically first history
$\ae_{<k}'$ consistent with $\pi$ and $\tilde\pi$ such that
$\pi(\alpha \mid \ae_{<k}') > \tilde{\pi}(\alpha \mid \ae_{<k}')$
for some action $\alpha \in \A$
and $V^{\tilde{\pi}}_\rho(\ae_{<k}') > V^\pi_\rho(\ae_{<k}')$.
Consequently there is an $i \geq k$ such that $\gamma_i > 0$,
and hence $\Gamma_k > 0$.
We define the environment $\mu$
that first reproduces the separating history $\ae_{<k}'$
and then, if $\alpha$ is the next action, returns reward $1$ forever, and
otherwise returns reward $0$ forever.
Formally, $\mu$ is defined by
\[
\mu(e_{1:t} \mid e_{<t} \dmid a_{1:t}) :=
\begin{cases}
1, &\text{if } t < k \text{ and } e_t = e'_t, \\
1, &\text{if } t \geq k \text{ and } a_k = \alpha
    \text{ and } r_t = 1 \text{ and } o_t = 0, \\
1, &\text{if } t \geq k \text{ and } a_k \neq \alpha
    \text{ and } r_t = 0 = o_t, \text{ and} \\
0, &\text{otherwise}.
\end{cases}
\]
The environment $\mu$ is computable,
even if the policy $\pi$ is not:
for a fixed history $\ae_{<t}'$ and action $\alpha$,
there exists a program computing $\mu$;
therefore $\mu \in \Mcomp$.
We get the following value difference for the policies $\pi$ and $\tilde{\pi}$:
\begin{align*}
   V^\pi_\mu(\epsilon) - V^{\tilde{\pi}}_\mu(\epsilon)
&= \EE^\pi_\mu \left[
     \sum_{t=1}^{k-1} \gamma_t r_t + \sum_{t=k}^{\infty} \gamma_t r_t
   \right]
   - \EE^{\tilde\pi}_\mu \left[
     \sum_{t=1}^{k-1} \gamma_t r_t - \sum_{t=k}^{\infty} \gamma_t r_t
   \right] \\
&= \left( \pi(\alpha \mid \ae_{<k}') \sum_{t=k}^{\infty} \gamma_t
   - \tilde\pi(\alpha \mid \ae_{<k}') \sum_{t=k}^{\infty} \gamma_t
   \right) \mu^\pi(\ae_{<k}') \\
&= \big( \pi(\alpha \mid \ae_{<k}') - \tilde\pi(\alpha \mid \ae_{<k}') \big)
     \mu^\pi(\ae_{<k}') \Gamma_k
 > 0
\end{align*}
Hence $V^{\tilde{\pi}}_\mu(\epsilon) < V^\pi_\mu(\epsilon)$,
which contradicts the fact that $\tilde\pi$ Pareto dominates $\pi$ since
$\M \supseteq \Mcomp \ni \mu$.
\end{proof}

Note that the environment $\mu$ we defined
in the proof of \autoref{thm:Pareto-optimality-is-trivial}
is actually just a finite-state POMDP,
so Pareto optimality is also trivial for smaller environment classes.

\section{Bad Priors}
\label{sec:bad-priors}

In this section we give three examples of universal priors
that cause a AIXI to misbehave drastically.
In case of a finite horizon,
the \emph{indifference prior}\index{prior!indifference}
makes all actions equally preferable to AIXI%
~(\autoref{ssec:indifference-prior}).
The \emph{dogmatic prior}\index{prior!dogmatic}
makes AIXI stick to any given computable policy $\pi$ as long as
expected future rewards do not fall too close to zero%
~(\autoref{ssec:dogmatic-prior}).
The \emph{Gödel prior}\index{prior!Gödel}
prevents AIXI$tl$ from taking any actions~(\autoref{ssec:Gödel-prior}).

\subsection{The Indifference Prior}
\label{ssec:indifference-prior}

The following theorem constructs the \emph{indifference prior}
that yields a Bayesian mixture $\xi'$ that
causes argmax ties for the first $m$ steps.
If we use a discount function that only cares about the first $m$ steps,
$\Gamma_m = 0$,
then all policies are $\xi'$-optimal policies.
In this case AIXI's behavior only depends on how we break argmax ties.

\begin{theorem}[Indifference Prior]
\label{thm:indifference-prior}\index{prior!indifference|textbf}
\index{discounting!finite horizon}
If there is an $m$ such that $\Gamma_m = 0$,
then there is a Bayesian mixture $\xi'$ such that
all policies are $\xi'$-optimal.
\end{theorem}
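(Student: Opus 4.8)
The plan is to cash in the hypothesis $\Gamma_m=0$: since the discount values are nonnegative, $\Gamma_m=\sum_{k\ge m}\gamma_k=0$ forces $\gamma_k=0$ for all $k\ge m$, so every value function depends only on the rewards $r_1,\dots,r_{m-1}$. It therefore suffices to construct a universal Bayesian mixture $\xi'$ over $\Mlscccs$ whose induced percept distribution on the first $m-1$ steps ignores the actions, i.e.\ $\xi'(e_{1:m-1}\dmid a_{1:m-1})$ is a function of $e_{1:m-1}$ alone. Given such a $\xi'$, I will show $V^\pi_{\xi'}(\ae_{<t})$ is the same for every policy $\pi$ and every history $\ae_{<t}$, so $V^\pi_{\xi'}(\ae_{<t})=V^*_{\xi'}(\ae_{<t})$ and all policies are $\xi'$-optimal.

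To build $\xi'$, I would start from any lower semicomputable positive prior $w_0$ over $\Mlscccs$ (e.g.\ the Solomonoff prior $w_0(\nu)\propto 2^{-K(\nu)}$, which gives AIXI) and symmetrize it over relabelings of the first $m-1$ actions. Let $G$ be the automorphism group of the rooted $(\#\A)$-ary tree of depth $m-1$; $G$ is finite and acts transitively on its $(\#\A)^{m-1}$ leaves, the length-$(m-1)$ action sequences. Because a tree automorphism $\phi$ respects prefixes, it acts on $\Mlscccs$ via $(\phi\cdot\nu)(e_{1:t}\dmid a_{1:t}):=\nu(e_{1:t}\dmid\phi^{-1}(a_{1:t}))$ for $t\le m-1$ and trivially afterwards; this is again a chronological contextual semimeasure, and it stays lower semicomputable since $\phi$ is a computable permutation of finite strings. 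Set $w(\nu):=\frac{1}{\#G}\sum_{\phi\in G}w_0(\phi\cdot\nu)$. Then $w$ is lower semicomputable; it is positive because $\mathrm{id}\in G$ gives $w(\nu)\ge w_0(\nu)/\#G>0$; it is a probability distribution because each $\nu\mapsto\phi\cdot\nu$ is a bijection of $\Mlscccs$; and it is $G$-invariant. Put $\xi':=\sum_{\nu\in\Mlscccs}w(\nu)\nu\in\Mlscccs$, a universal Bayesian mixture. A short reindexing ($(\phi^{-1}\cdot\nu)(e_{1:m-1}\dmid a_{1:m-1})=\nu(e_{1:m-1}\dmid\phi(a_{1:m-1}))$, then $G$-invariance of $w$) yields $\xi'(e_{1:m-1}\dmid\phi(a_{1:m-1}))=\xi'(e_{1:m-1}\dmid a_{1:m-1})$ for all $\phi\in G$, and transitivity of $G$ on the leaves makes $\xi'(e_{1:m-1}\dmid a_{1:m-1})$ constant in $a_{1:m-1}$.

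Finally I would verify indifference for an arbitrary history $\ae_{<t}$. If $t\ge m$, or $t<m$ but $\gamma_t=\dots=\gamma_{m-1}=0$, then $\Gamma_t=0$ and $V^\pi_{\xi'}(\ae_{<t})=0=V^*_{\xi'}(\ae_{<t})$ by the convention in \autoref{def:value-function}. Otherwise $t<m$ and $\Gamma_t>0$; positivity of $w$ makes every percept prefix of length $<m$ have positive $\xi'$-mass (compare against the deterministic environment printing that prefix), so conditioning is legitimate, and
\[
V^\pi_{\xi'}(\ae_{<t})=\frac{1}{\Gamma_t}\sum_{k=t}^{m-1}\gamma_k\,\EE^\pi_{\xi'}[\,r_k\mid\ae_{<t}\,].
\]
Since $\xi'(e_{1:k}\dmid a_{1:k})$ depends only on $e_{1:k}$ for $k\le m-1$, the conditional $\xi'(e_{t:k}\mid e_{<t}\dmid a_{1:k})=\xi'(e_{1:k})/\xi'(e_{<t})$ is action-independent; summing the policy's action probabilities along each percept path (each such sum equals $1$) gives $\EE^\pi_{\xi'}[r_k\mid\ae_{<t}]=\sum_{e_{t:k}}\frac{\xi'(e_{1:k})}{\xi'(e_{<t})}\,r_k$, which is independent of $\pi$. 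Hence $V^\pi_{\xi'}(\ae_{<t})$ is independent of $\pi$, so it equals $V^*_{\xi'}(\ae_{<t})$ and all policies are $\xi'$-optimal.

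The one delicate point — and the main obstacle — is the group action on chronological contextual semimeasures: an arbitrary permutation of length-$(m-1)$ action sequences breaks the prefix structure and fails to define a chronological environment, which is exactly why one must restrict to tree automorphisms, and one must also check this action preserves lower semicomputability so that $\xi'$ remains in $\Mlscccs$. Everything else (the $\Gamma_m=0$ reduction, the collapse of the policy's action probabilities, and the $\Gamma_t=0$ edge cases) is routine. If "Bayesian mixture" is read without the universality requirement, the group-theoretic step is unnecessary: a point-mass prior on a single environment that ignores the first $m-1$ actions already makes all policies $\xi'$-optimal.
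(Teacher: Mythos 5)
Your proof is correct and rests on the same core idea as the paper's: make the mixture invariant under a transitive group of relabelings of the first $m-1$ actions, so that the first $m-1$ percepts are action-independent, and then use $\Gamma_m=0$ to conclude that all policies have equal value. The implementation differs, though, in a way worth noting. The paper builds the averaging into the universal Turing machine itself, defining $U'(s_{<m}p,a_{1:t}):=U(p,a_{1:t}\xor s_{1:t})$, so that summing over the program prefix $s_{<m}$ performs the group average over $(\mathbb{Z}/2)^{m-1}$ (a cyclic group for non-binary $\A$); the resulting $\xi'$ is then a universal mixture automatically, via \eqref{eq:xi-from-U}, with nothing extra to verify. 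You instead average the prior externally over the automorphism group of the depth-$(m-1)$ action tree, which handles arbitrary finite $\A$ uniformly but shifts the burden to exactly the point you flag: showing the symmetrized object is still a Bayesian mixture with a lower semicomputable positive prior over $\Mlscccs$. That does go through, and the cleanest way to discharge it is to note that each $\phi\cdot\xi$ is itself a universal mixture with prior $\nu\mapsto w_0(\phi^{-1}\cdot\nu)$ (computable relabeling of indices, so lower semicomputability and positivity are preserved), and a finite rational convex combination of such mixtures is again a Bayesian mixture, in the spirit of \autoref{lem:mixing-mixtures}. Two small touch-ups: state the invariance $\xi'(e_{1:k}\dmid\phi(a_{1:k}))=\xi'(e_{1:k}\dmid a_{1:k})$ at every level $k\le m-1$ (the automorphism group is transitive on depth-$k$ nodes for each $k$), since for semimeasures the level-$k$ conditionals are not recovered from level $m-1$ by summation; and make explicit that ``trivially afterwards'' means relabeling only the first $m-1$ actions and leaving later ones fixed, which is what preserves chronologicality. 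The remaining steps (the $\Gamma_m=0$ reduction, collapsing the policy's action probabilities, and the $\Gamma_t=0$ edge cases) match the paper's argument.
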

\begin{proof}
First, we assume that the action space is binary, $\A = \{ 0, 1 \}$.
Let $U$ be the reference UTM and
define the UTM $U'$ by
\[
     U'(s_{<m}p, a_{1:t})
~:=~ U(p, a_{1:t} \xor s_{1:t}),
\]
where $s_{<m}$ is a binary string of length $m-1$ and $s_k := 0$ for $k \geq m$.
($U'$ has no programs of length less than $m-1$.)
Let $\xi'$ be the Bayesian mixture given by $U'$
according to \eqref{eq:xi-from-U}.
Then
\begingroup
\allowdisplaybreaks
\begin{align*}
     \xi'(e_{<m} \dmid a_{<m})
~&=~ \sum_{p:\, e_{<m} \sqsubseteq U'(p, a_{<m})} 2^{-|p|} \\
~&=~ \sum_{s_{<m}p':\, e_{<m} \sqsubseteq U'(s_{<m}p', a_{<m})} 2^{-m-1-|p'|} \\
~&=~ \sum_{s_{<m}}~ \sum_{p':\, e_{<m} \sqsubseteq U(p', a_{<m} \xor s_{<m})} 2^{-m-1-|p'|} \\
~&=~ \sum_{s_{<m}}~ \sum_{p':\, e_{<m} \sqsubseteq U(p', s_{<m})} 2^{-m-1-|p'|},
\end{align*}
\endgroup
which is independent of $a_{<m}$.
Hence the first $m-1$ percepts are independent of the first $m-1$ actions.
But the percepts' rewards from time step $m$ on do not matter
since $\Gamma_m = 0$~(\autoref{lem:truncated-values}).
Because the environment is chronological,
the value function must be independent of all actions.
Thus every policy is $\xi'$-optimal.

For finite action spaces $\A$ with more than $2$ elements,
the proof works analogously
by making $\A$ a cyclic group and using the group operation instead of $\xor$.
\end{proof}

The choice of $U'$ in the proof of \autoref{thm:indifference-prior}
depends on $m$.
If we increase AIXI's horizon
while fixing the UTM $U'$,
\autoref{thm:indifference-prior} no longer holds.
For Solomonoff induction, there is an analogous problem:
when using Solomonoff's prior $M$ to predict a deterministic binary sequence $x$,
we make at most $K(x)$ errors~(\autoref{cor:expected-prediction-regret-M}).
In case the shortest program has length $> m$,
there is no guarantee that we make less than $m$ errors
(see \autoref{ssec:discussion-UTM}).

\subsection{The Dogmatic Prior}
\label{ssec:dogmatic-prior}

In this section
we define a universal prior
that assigns very high probability of going to hell (reward $0$ forever)
if we deviate from a given computable policy $\pi$.
For a Bayesian agent like AIXI,
it is thus only worth deviating from the policy $\pi$
if the agent thinks that the prospects of following $\pi$ are very poor already.
We call this prior the \emph{dogmatic prior},
because the fear of going to hell makes AIXI conform to
any arbitrary `dogmatic ideology' $\pi$.
AIXI will only break out
if it expects $\pi$ to give very low future payoff;
in that case the agent does not have much to lose.

\begin{theorem}[Dogmatic Prior]
\label{thm:dogmatic-prior}\index{prior!dogmatic|textbf}
Let $\pi$ be any computable deterministic policy,
let $\xi$ be any Bayesian mixture over $\Mlscccs$, and
let $\varepsilon > 0$.
There is a Bayesian mixture $\xi'$ such that
for any history $\ae_{<t}$ consistent with $\pi$ and
for which $V^\pi_\xi(\ae_{<t}) > \varepsilon$,
the action $\pi(\ae_{<t})$ is the unique $\xi'$-optimal action.
\end{theorem}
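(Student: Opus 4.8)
The plan is to construct $\xi'$ by mixing the given $\xi$ with a single ``threatening'' environment $\rho$ that imitates $\xi$ as long as the agent obeys $\pi$ but deterministically sends the agent to hell --- observation and reward $0$ forever --- the instant it first deviates. Concretely, assuming $(0,0) \in \E$ and calling it the \emph{hell percept}, I would set $\rho(e_{1:t} \dmid a_{1:t}) := \xi(e_{1:t} \dmid a_{1:t})$ whenever $\ae_{1:t}$ is consistent with $\pi$; and if the first deviation from $\pi$ occurs at step $j \le t$ (so $a_j \neq \pi(\ae_{<j})$ but $a_k = \pi(\ae_{<k})$ for $k < j$), set $\rho(e_{1:t} \dmid a_{1:t}) := \xi(e_{<j} \dmid a_{<j})$ if $e_{j:t}$ is the hell percept repeated $t-j+1$ times, and $:= 0$ otherwise. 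Since $\pi$ is computable and $\xi$ is lower semicomputable, $\rho$ is a lower semicomputable chronological contextual semimeasure, so $\rho \in \Mlscccs$. Then fix a rational $q$ with $0 < q \le \varepsilon/4$ --- we may assume $\varepsilon < 1$, since by \autoref{rem:value-function-bounded} the hypothesis $V^\pi_\xi(\ae_{<t}) > \varepsilon$ is otherwise never met --- put $q' := 1-q$, and let $\xi' := q\xi + q'\rho$, which is again a Bayesian mixture over $\Mlscccs$ by \autoref{lem:mixing-mixtures}.

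Next I would verify the ``obedience is good'' half. On any history $\ae_{<t}$ consistent with $\pi$ one has $\rho(e_{<t} \dmid a_{<t}) = \xi(e_{<t} \dmid a_{<t})$, hence $\xi'(e_{<t} \dmid a_{<t}) = \xi(e_{<t} \dmid a_{<t})$, and following $\pi$ keeps $\rho$ on the $\pi$-path forever, so $V^\pi_\rho(\ae_{<t}) = V^\pi_\xi(\ae_{<t})$. Linearity of the value in the environment (\autoref{lem:V-linear}) then gives $V^\pi_{\xi'}(\ae_{<t}) = q V^\pi_\xi(\ae_{<t}) + q' V^\pi_\rho(\ae_{<t}) = V^\pi_\xi(\ae_{<t}) > \varepsilon$, and by the Bellman equation $V^*_{\xi'}(\ae_{<t}\pi(\ae_{<t})) \ge V^\pi_{\xi'}(\ae_{<t}\pi(\ae_{<t})) = V^\pi_{\xi'}(\ae_{<t}) > \varepsilon$.

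For the ``deviation is bad'' half I would unroll $V^*_{\xi'}(\ae_{<t}a)$ one step with the Bellman equation for an arbitrary $a \neq \pi(\ae_{<t})$. Non-hell percepts $e_t$ have $\xi'$-probability exactly $q\,\xi(e_t \mid \ae_{<t}a)$ (the $\rho$-component assigns them $0$), so together they contribute at most $q$ to $V^*_{\xi'}(\ae_{<t}a)$; the hell percept carries reward $0$ and leads to the proper history $h := \ae_{<t}a(0,0)$, at which the $\rho$-component has posterior weight $\ge q'$ and value $V^*_\rho(h) = 0$, so convexity of $V^*$ in the environment (\autoref{lem:V*-convex}) gives $V^*_{\xi'}(h) \le q/q'$. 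Hence $V^*_{\xi'}(\ae_{<t}a) \le q + q/q' \le 3q \le 3\varepsilon/4$ (using $q' \ge 3/4$), which is strictly below $V^*_{\xi'}(\ae_{<t}\pi(\ae_{<t}))$. Therefore $\pi(\ae_{<t})$ is the unique maximizer of $a \mapsto V^*_{\xi'}(\ae_{<t}a)$, and by \eqref{eq:argmax} together with \autoref{def:optimal-policy} it is the unique $\xi'$-optimal action.

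The main obstacle is bookkeeping around semimeasures rather than any conceptual difficulty: I must check that $\rho$ is genuinely a lower semicomputable \emph{chronological contextual} semimeasure (not just a semimeasure over percept sequences), that the posterior-weight and value identities on the $\pi$-path survive the fact that $\xi$ --- and hence $\rho$ and $\xi'$ --- may be deficient there, and that the deviation bound correctly absorbs the small-probability branches in which the agent deviates and then, by chance, receives a non-hell percept, thereby falsifying $\rho$. Those branches are exactly the ones weighted by $q$, which is why the bound comes out proportional to $\varepsilon$.
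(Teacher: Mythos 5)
Your proposal is correct and is essentially the paper's own argument: the paper builds a punishing environment $\rho_\nu$ for each $\nu \in \Mlscccs$ and reweights the prior, but the resulting aggregate $\rho = \sum_\nu w(\nu)\rho_\nu$ is exactly your single environment (mimic $\xi$ along $\pi$, hell upon first deviation), so both proofs reduce to the same decomposition $\xi' = q\xi + (1-q)\rho$ with $\rho^\pi = \xi^\pi$, analysed via \autoref{lem:V-linear}. Your one-step Bellman-plus-convexity bound at a deviating action is a slightly more roundabout (but valid) route than the paper's direct use of linearity at $\ae_{<t}a$, which gives $V^*_{\xi'}(\ae_{<t}a) \leq q$ immediately because $V^{\pi^*_{\xi'}}_\rho(\ae_{<t}a) = 0$ after a deviation.
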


The following proof was adapted from \citet{LH:2015priors}
to work for environment classes that do not contain the Bayesian mixture.
Essentially, for every environment $\nu \in \Mlscccs$
the dogmatic prior puts much higher weight on an environment $\rho_\nu$
that behaves just like $\nu$ on the policy $\pi$,
but sends any policy deviating from $\pi$ to hell.
Importantly, while following the policy $\pi$
the environments $\nu$ and $\rho_\nu$ are indistinguishable,
so the posterior belief in $\nu$ is equal to the posterior belief in $\rho_\nu$.

\begin{proof}[Proof of \autoref{thm:dogmatic-prior}]
We assume $(o, 0) \in \E$ for some $o \in \O$.
For every environment $\nu \in \Mlscccs$ define the environment
\[
\rho_\nu(e_{1:t} \dmid a_{1:t}) :=
\begin{cases}
\nu(e_{1:t} \dmid a_{1:t}), &\text{if }
  a_k = \pi(\ae_{<k}) \;\forall k \leq t, \\
\nu(e_{<k}  \dmid a_{<k}),  &\text{if }
  k := \min \{ i \mid a_i \neq \pi(\ae_{<i}) \} \text{ exists} \\
  &\text{ and } e_i = (o, 0)\; \forall i \in \{ k, \ldots, t \}, \text{ and} \\
0, &\text{otherwise}.
\end{cases}
\]
The environment $\rho_\nu$ mimics environment $\nu$
until it receives an action that the policy $\pi$ would not take.
From then on, it provides rewards $0$.
Since $\pi$ is a computable policy,
we have that $\rho_\nu \in \Mlscccs$ for every $\nu \in \Mlscccs$.

Now we need to reweigh the prior $w$ so that it assigns
a much higher prior weight to $\rho_\nu$ than to $\nu$.
Without loss of generality we assume that $\varepsilon$ is computable,
otherwise we make it slightly smaller.
We define $w'(\nu) := \eps w(\nu)$ if $\nu \neq \rho_{\tilde\nu}$
for all $\tilde\nu \in \Mlscccs$ and
$w'(\rho_\nu) := (1-\eps) w(\nu) + \eps w(\rho_\nu)$.
Then
\begin{align*}
   \sum_{\nu \in \Mlscccs} w'(\nu)
&= \sum_{\nu = \rho_{\tilde\nu}} w'(\nu)
   + \sum_{\nu \neq \rho_{\tilde\nu}} w'(\nu) \\
&= \sum_{\nu = \rho_{\tilde\nu}} \big( (1-\eps) w(\tilde\nu) + \eps w(\nu) \big)
   + \sum_{\nu \neq \rho_{\tilde\nu}} \eps w(\nu) \\
&= \sum_{\nu \in \Mlscccs} \eps w(\nu)
   + \sum_{\tilde\nu \in \Mlscccs} (1-\eps) w(\tilde\nu) \\
&= \eps + (1 - \eps)
 = 1,
\end{align*}
and with $w' \geq \eps w$, we get that
$w'$ is a positive prior over $\Mlscccs$.
We define $\xi'$ as the corresponding Bayesian mixture
analogous to \eqref{eq:Bayesian-mixture}.

With $\rho := \sum_{\nu \in \Mlscccs} w(\nu) \rho_\nu$
we get $\xi' = \eps\xi + (1-\eps)\rho$.
The mixtures $\xi$ and $\rho$ coincide on the policy $\pi$ since
every $\nu$ coincides with $\rho_\nu$ on the policy $\pi$:
\[
  \xi^\pi(\ae_{<t})
= \sum_{\nu \in \Mlscccs} w(\nu) \nu^\pi(\ae_{<t})
= \sum_{\nu \in \Mlscccs} w(\nu) \rho_\nu^\pi(\ae_{<t})
= \rho^\pi(\ae_{<t})
\]
Moreover, $V^*_{\rho_\nu}(\ae_{<t}) = 0$ and thus $V^*_\rho(\ae_{<t}) = 0$
for any history inconsistent with $\pi$ by construction of $\rho_\nu$.

Let $\ae_{<t} \in \H$ be any history consistent with $\pi$ such that
$V^\pi_\xi(\ae_{<t}) > \varepsilon$.
Then $\rho^\pi = \xi^\pi$ implies
\[
  \frac{\rho(e_{<t} \dmid a_{<t})}{\xi'(e_{<t} \dmid a_{<t})}
= \frac{\xi(e_{<t} \dmid a_{<t})}{\xi'(e_{<t} \dmid a_{<t})}
= \frac{\xi(e_{<t} \dmid a_{<t})}%
  {\eps \xi(e_{<t} \dmid a_{<t}) + (1-\eps)\rho(e_{<t} \dmid a_{<t})}
= 1.
\]
Therefore \autoref{lem:V-linear} implies that for all $a \in \A$
and all policies $\tilde\pi$
\begin{align}
   V^{\tilde\pi}_{\xi'}(\ae_{<t}a)
&= \eps \frac{\xi(e_{<t} \dmid a_{<t})}{\xi'(e_{<t} \dmid a_{<t})} V^{\tilde\pi}_\xi(\ae_{<t}a)
   + (1-\eps) \frac{\rho(e_{<t} \dmid a_{<t})}{\xi'(e_{<t} \dmid a_{<t})} V^{\tilde\pi}_\rho(\ae_{<t}a) \notag \\
&= \eps V^{\tilde\pi}_\xi(\ae_{<t}a) + (1 - \eps) V^{\tilde\pi}_\rho(\ae_{<t}a).
\label{eq:dogmatic-linearity}
\end{align}

Let $\alpha := \pi(\ae_{<t})$ be the next action according to $\pi$, and
let $\beta \neq \alpha$ be any other action.
We have that $V^\pi_\xi(\ae_{<t}\alpha) = V^\pi_\rho(\ae_{<t}\alpha)$
since $\xi^\pi = \rho^\pi$ and $\ae_{<t}\alpha$ is consistent with $\pi$.
Therefore we get from \eqref{eq:dogmatic-linearity}
\begin{align*}
      V^*_{\xi'}(\ae_{<t}\alpha)
&\geq V^\pi_{\xi'}(\ae_{<t}\alpha)
 =    \eps V^\pi_\xi(\ae_{<t}\alpha) + (1 - \eps) V^\pi_\rho(\ae_{<t}\alpha)
 =    V^\pi_\xi(\ae_{<t}\alpha)
 >    \varepsilon, \\
      V^*_{\xi'}(\ae_{<t}\beta)
&=    \eps V^{\pi^*_{\xi'}}_\xi(\ae_{<t}\beta)
      + (1 - \eps) V^{\pi^*_{\xi'}}_\rho(\ae_{<t}\beta)
 =    \eps V^{\pi^*_{\xi'}}_\xi(\ae_{<t}a) + (1-\eps) 0
 \leq \eps.
\end{align*}
Hence $V^*_{\xi'}(\ae_{<t}\alpha) > V^*_{\xi'}(\ae_{<t}\beta)$
and thus the action $\alpha$ taken by $\pi$ is
the only $\xi'$-optimal action for the history $\ae_{<t}$.
\end{proof}

\begin{corollary}[With Finite Horizon Every Policy is Bayes Optimal]
\label{cor:finite-horizon-every-policy-is-Bayes-optimal}
\index{prior!dogmatic}\index{optimality!Bayes}\index{discounting!finite horizon}
If $\Gamma_m = 0$ for some $m \in \mathbb{N}$,
then for any deterministic policy $\pi$
there is a Bayesian mixture $\xi'$ such that
$\pi(\ae_{<t})$ is the only $\xi'$-optimal action for all histories $\ae_{<t}$
consistent with $\pi$ and $t \leq m$.
\end{corollary}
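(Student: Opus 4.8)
The plan is to derive the corollary from the dogmatic prior theorem (\autoref{thm:dogmatic-prior}) after two preparatory reductions, both exploiting $\Gamma_m = 0$. First I would pass from $\pi$ (which need not be computable) to a computable deterministic policy $\pi'$: since $\A$ and $\E$ are finite there are only finitely many histories of length less than $m$, so the restriction of $\pi$ to those histories is a finite table, and I extend it to all of $\H$ by taking a fixed action after step $m-1$. Then $\pi'$ is computable, it agrees with $\pi$ on the choice of $a_1,\dots,a_m$ (so the extension is immaterial for the value at $\epsilon$ by \autoref{lem:discounted-values} since $\Gamma_m = 0$), and for every history $\ae_{<t}$ with $t \leq m$, consistency with $\pi$ coincides with consistency with $\pi'$ and there $\pi'(\ae_{<t}) = \pi(\ae_{<t})$.

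Second, \autoref{thm:dogmatic-prior} only makes $\pi'(\ae_{<t})$ uniquely $\xi'$-optimal at histories where the base mixture's on-policy value satisfies $V^{\pi'}_\xi(\ae_{<t}) > \varepsilon$, so I need a base Bayesian mixture $\xi$ over $\Mlscccs$ keeping that value above a fixed threshold for \emph{all} histories consistent with $\pi'$ with $t \leq m$ — including histories along which the agent received poor rewards. I would obtain this by choosing $\xi$ (the Solomonoff-prior mixture over $\Mlscccs$, or any mixture built via \autoref{lem:mixing-mixtures}) so that it gives positive weight $w(\hat\mu) > 0$ to the computable environment $\hat\mu \in \Mcomp$ that emits a uniformly random percept at every step, $\hat\mu(e_t \mid \ae_{<t}a_t) := 1/\#\E$. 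Since $\hat\mu$ is never falsified, $\hat\mu(e_{<t}\dmid a_{<t}) = (\#\E)^{-(t-1)}$ for every history, so by \eqref{eq:posterior-weight} and $\xi(e_{<t}\dmid a_{<t}) \leq 1$ its posterior weight is at least $w(\hat\mu)(\#\E)^{-(m-1)}$ when $t \leq m$; and whenever $\Gamma_t > 0$ we have $V^{\pi'}_{\hat\mu}(\ae_{<t}) = \bar r$, where $\bar r := \frac{1}{\#\E}\sum_{e \in \E} r(e)$ is the mean reward of $\E$ (assumed positive; if $\bar r = 0$ all values vanish identically and the statement is vacuous). Hence \eqref{eq:value-of-xi} gives $V^{\pi'}_\xi(\ae_{<t}) \geq w(\hat\mu)(\#\E)^{-(m-1)}\bar r =: \varepsilon_0 > 0$ for every history consistent with $\pi'$ with $t \leq m$ and $\Gamma_t > 0$.

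Then I would invoke \autoref{thm:dogmatic-prior} with the computable policy $\pi'$, the mixture $\xi$, and $\varepsilon := \varepsilon_0/2$, yielding a Bayesian mixture $\xi'$ at which $\pi'(\ae_{<t})$ is the unique $\xi'$-optimal action whenever $V^{\pi'}_\xi(\ae_{<t}) > \varepsilon$. By the second paragraph this holds at every history consistent with $\pi'$ with $t \leq m$ and $\Gamma_t > 0$ (where $\Gamma_t = 0$, every value is $0$ and there is nothing to prove), and by the first paragraph these are exactly the histories consistent with $\pi$ with $t \leq m$, on which $\pi' = \pi$. This is the claim.

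The main obstacle is precisely the uniform lower bound $V^{\pi'}_\xi(\ae_{<t}) > \varepsilon$: it must hold at every history consistent with $\pi'$ up to time $m$, even ones along which the agent has seen only zero rewards, whereas a plain universal mixture could there have concentrated on reward-$0$ environments and dropped its value below any fixed $\varepsilon$. The noise environment $\hat\mu$ is the device that fixes this, since it can never be refuted (hence keeps a uniformly bounded-below posterior weight) and always contributes the fixed positive value $\bar r$; finiteness of $\E$ together with $\Gamma_m = 0$ then turn this into a bound uniform over $t \leq m$. The reduction to a computable $\pi'$ and the remaining bookkeeping inside \autoref{thm:dogmatic-prior} are routine.
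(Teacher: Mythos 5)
Your proposal is correct and follows essentially the same route as the paper: pick a uniform $\varepsilon>0$ below $V^\pi_\xi(\ae_{<t})$ over the finitely many histories with $t \leq m$ (finiteness of $(\A\times\E)^m$) and invoke \autoref{thm:dogmatic-prior}. The paper's one-line proof simply asserts such an $\varepsilon$ exists and applies the theorem directly; your never-falsified noise environment $\hat\mu$ giving the explicit bound $w(\hat\mu)(\#\E)^{-(m-1)}\bar r$, and your finite-lookup-table reduction to a computable $\pi'$, just make explicit two steps the paper leaves implicit (positivity of those finitely many values, and the computability hypothesis required by the dogmatic-prior theorem).
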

In contrast to \autoref{thm:indifference-prior}
where every policy is $\xi'$-optimal for a fixed Bayesian mixture $\xi'$,
\autoref{cor:finite-horizon-every-policy-is-Bayes-optimal} gives
a different Bayesian mixture $\xi'$
for every policy $\pi$ such that
$\pi$ is the \emph{only} $\xi'$-optimal policy.

\begin{proof}
Let $\varepsilon > 0$ be small enough such that
$V^\pi_\xi(\ae_{<t}) > \varepsilon$ for all $\ae_{<t}$ and $t \leq m$.
(This is possible because $(\A \times \E)^m$ is finite by
\assref{ass:aixi}{ass:finite-actions-and-percepts}.)
We use the dogmatic prior from \autoref{thm:dogmatic-prior}
to construct a Bayesian mixture $\xi'$
for the policy $\pi$ and $\varepsilon > 0$.
Thus for any history $\ae_{<t} \in \H$ consistent with $\pi$ and $t \leq m$,
the action $\pi(\ae_{<t})$ is the only $\xi'$-optimal action.
\end{proof}

\begin{corollary}[AIXI Emulating Computable Policies]
\label{cor:emulation}
\index{prior!dogmatic}\index{policy!computable}
Let $\varepsilon > 0$ and
let $\pi$ be any computable policy.
There is a Bayesian mixture $\xi'$ such that
for any $\xi'$-optimal policy $\pi^*_{\xi'}$ and
for any environment $\nu$,
\[
    \left| V^{\pi^*_{\xi'}}_\nu(\epsilon) - V^\pi_\nu(\epsilon) \right|
~<~ \varepsilon.
\]
\end{corollary}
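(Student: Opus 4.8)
The plan is to use the dogmatic prior from \autoref{thm:dogmatic-prior} to lock any $\xi'$-optimal policy onto $\pi$ for an $\eps$-effective horizon, after which the remaining reward mass is negligible. If $\Gamma_1 = 0$ then $V^{\tilde\pi}_\nu(\epsilon) = 0$ for every policy and environment and there is nothing to prove, so I would assume $\Gamma_1 > 0$ throughout.

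First I fix a horizon. Since $\Gamma_t \to 0$, the number $m := \min\{k \mid \Gamma_k / \Gamma_1 \leq \eps/3\}$ is finite; and because $(\Gamma_t)_t$ is non-increasing, $\Gamma_t \geq \Gamma_{m-1} > \eps\Gamma_1/3 > 0$ for every $t \leq m-1$, which is exactly the range of time steps on which I will need the value function to be well-behaved. Next I fix any Bayesian mixture $\xi$ over $\Mlscccs$ (say, Solomonoff's, \autoref{ex:Solomonoff-prior}) and choose the deviation threshold. The key observation is that $V^\pi_\xi(\ae_{<t}) > 0$ for every history $\ae_{<t}$ consistent with $\pi$ with $\Gamma_t > 0$: the environment $\zeta_{\ae_{<t}} \in \Mcomp \subseteq \Mlscccs$ that replays $e_{<t}$ on any action sequence and then returns reward $1$ forever is computable, has $V^\pi_{\zeta_{\ae_{<t}}}(\ae_{<t}) = 1$, and by \eqref{eq:posterior-weight} has posterior weight $w(\zeta_{\ae_{<t}} \mid \ae_{<t}) = w(\zeta_{\ae_{<t}})/\xi(e_{<t} \dmid a_{<t}) \geq w(\zeta_{\ae_{<t}}) > 0$, so \eqref{eq:value-of-xi} gives $V^\pi_\xi(\ae_{<t}) \geq w(\zeta_{\ae_{<t}} \mid \ae_{<t}) > 0$. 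By \assref{ass:aixi}{ass:finite-actions-and-percepts} there are only finitely many histories of length $\leq m-1$ consistent with $\pi$, so I can pick a computable $\eps' > 0$ strictly below the (positive) minimum of $V^\pi_\xi$ over them, and then apply \autoref{thm:dogmatic-prior} to $\pi$, $\xi$, and $\eps'$ to obtain a Bayesian mixture $\xi'$.

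It then remains to put the pieces together. For any $\xi'$-optimal policy $\pi^*_{\xi'}$, an induction on history length shows that on every history of length $\leq m-1$ consistent with $\pi$ we have $V^\pi_\xi > \eps'$, so by \autoref{thm:dogmatic-prior} the unique $\xi'$-optimal action there is $\pi$'s and $\pi^*_{\xi'}$ takes it; hence $\pi^*_{\xi'}$ and $\pi$ induce the same history distribution up to time $m$ in every environment $\nu$, i.e.\ $D_{m-1}(\nu^{\pi^*_{\xi'}}, \nu^\pi \mid \epsilon) = 0$. Combining \autoref{lem:value-bound} (with horizon $m$) and \autoref{lem:truncated-values} twice then yields
\[
  \big| V^{\pi^*_{\xi'}}_\nu(\epsilon) - V^\pi_\nu(\epsilon) \big|
  \leq \big| V^{\pi^*_{\xi'},m}_\nu(\epsilon) - V^{\pi,m}_\nu(\epsilon) \big| + 2\tfrac{\Gamma_m}{\Gamma_1}
  \leq 0 + 2\tfrac{\Gamma_m}{\Gamma_1} \leq \tfrac{2\eps}{3} < \eps .
\]

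I expect the only genuine subtlety to be the uniform positive lower bound on $V^\pi_\xi$ over the relevant histories, i.e.\ realizing that one should force coincidence with $\pi$ only for finitely many steps rather than forever (the value $V^\pi_\xi$ need not stay bounded away from zero on long histories), and leaning on discounting to discard the tail; everything else is bookkeeping with the value-function lemmas. A minor point to address is that \autoref{thm:dogmatic-prior} is stated for computable \emph{deterministic} $\pi$ while the corollary allows any computable $\pi$, and the corner case of discount functions with $\Gamma_m = 0$, both of which the choice of $m$ above already accommodates.
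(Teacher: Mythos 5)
Your proposal is correct and is essentially the paper's own argument, which the paper compresses into ``from the proof of \autoref{cor:finite-horizon-every-policy-is-Bayes-optimal} and \autoref{lem:discounted-values}'': a dogmatic prior (\autoref{thm:dogmatic-prior}) with threshold below the positive minimum of $V^\pi_\xi$ over the finitely many $\pi$-consistent histories up to an effective horizon $m$ with $\Gamma_m/\Gamma_1$ small, forced agreement with $\pi$ for those $m$ steps, and then the discounted-tail bound — you merely inline \autoref{lem:discounted-values} via \autoref{lem:truncated-values} and \autoref{lem:value-bound} (getting $2\Gamma_m/\Gamma_1$ instead of $\Gamma_m/\Gamma_1$, which is harmless) and make explicit the positivity of $V^\pi_\xi$ that the paper leaves implicit. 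The one inaccuracy is your closing claim that the choice of $m$ also accommodates stochastic computable $\pi$: it does not, since the dogmatic-prior construction genuinely requires $\pi$ deterministic and no horizon choice repairs that — but the paper's own proof is subject to exactly the same restriction, so this does not separate your argument from theirs.
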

\begin{proof}
From the proof of \autoref{cor:finite-horizon-every-policy-is-Bayes-optimal}
and \autoref{lem:discounted-values}.
\end{proof}

\subsection{The Gödel Prior}
\label{ssec:Gödel-prior}

\def\VA{\mathrm{VA}}
\def\PA{\mathsf{PA}}

This section introduces a prior that
prevents any fixed formal system from making any
statements about the outcome of all but finitely many computations.
It is named after \citet{Goedel:1931}
who famously showed that for any sufficiently rich formal system
there are statements that it can neither prove nor disprove.

This prior is targeted at AIXI$tl$,
a computable approximation to AIXI
defined by \citet[Sec.~7.2]{Hutter:2005}.
AIXI$tl$ aims to perform as least as well as the best agent
who is limited by time $t$ and space $l$
that can be verified using a proof of length at most $n$
for some fixed $n \in \mathbb{N}$.
The core idea is to enumerate all deterministic policies and proofs and
then execute the policy for which the best value has been proved.

In order to be verified,
a policy $\pi$ has to be computed by a program $p$ which fulfills
the \emph{verification condition} $\VA(p)$~\citep[Eq.~7.7]{Hutter:2005}.
This program $p$ not only computes future actions of $\pi$,
but also hypothetical past actions $a'_i$ and
lower bounds $v_i$ for the value of the policy $\pi$:
\[
\VA(p) ~:=~
\text{``}
\forall k \forall (v a' \ae)_{1:k}.\; \big(
  p(\ae_{<k}) = v_1 a'_1 \ldots v_k a'_k
  \rightarrow v_k \leq V^\pi_\xi(\ae_{<k})
\big)
\text{''},
\]
where $\pi$ is the policy derived from $p$ according to
$\pi(\ae_{<k}) := a'_k$.

We fix some formal system
that we use to prove the verification condition.
We want it to be sufficiently powerful,
but this incurs Gödel incompleteness.
For simplicity of exposition we pick $\PA$,
the system of \emph{Peano arithmetic}\index{Peano arithmetic}%
~\citep[Ch.~8.1]{Shoenfield:1967},
but our result generalizes trivially to all formal systems
which cannot prove their own consistency.

Let $n$ be a fixed constant.
The algorithm for AIXI$tl$ is specified as follows.
\begin{enumerate}[1.]
\item Let $P = \emptyset$.
	This will be the set of verified programs.
\item For all proofs in $\PA$ of length $\leq n$:
	if the proof proves $\VA(p)$ for some $p$,
	and $|p| \leq l$,
	then add the program $p$ to $P$.
\item For each input history $\ae_{<k}$ repeat:
	run all programs from $P$ for at most $t$ steps each,
	take the one with the highest promised value $v_k$,
	and return that program's policy's action.
\end{enumerate}

\begin{theorem}[The Gödel Prior]
\label{thm:Gödel-prior}\index{AIXItl}\index{prior!Gödel|textbf}
There is a UTM $U'$ such that
if $\PA$ is consistent,
then the set of verified programs $P$ is empty for all $t$, $l$, and $n$.
\end{theorem}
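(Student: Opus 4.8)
The plan is to build the reference machine $U'$ by a self-referential construction so that the induced Bayesian mixture $\xi = \xi'$ (obtained from $U'$ via \eqref{eq:xi-from-U}) reacts adversarially to $\PA$-proofs of verification conditions: any such proof can be converted, inside $\PA$, into a proof of a numerical falsehood. The statement then follows from Gödel's second incompleteness theorem --- a consistent $\PA$ cannot derive $\neg\mathrm{Con}(\PA)$, hence cannot verify any program $p$, so the set $P$ stays empty for every choice of the resource bounds $t$, $l$, $n$. The construction is in the spirit of the dogmatic prior (\autoref{thm:dogmatic-prior}): a single dominant ``hell'' environment disciplines the agent, but now the trigger is a proof of $\VA(\cdot)$ rather than a deviation from a fixed policy.

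Concretely, I would first use the Kleene recursion theorem to obtain a program that ``knows'' an index for $U'$ --- hence knows $\xi$ and the predicate $\VA(\cdot)$, which refers to $\xi$ --- and enumerates $\PA$-proofs searching for a proof of $\VA(p)$ for \emph{some} program $p$. I would arrange $U'$ so that $\xi$ concentrates prior mass on an environment $\rho$ that pays reward $1$ while no such proof has been found and switches to reward $0$ forever once one is found. Since $\rho$ need only be a lower semicomputable chronological contextual semimeasure, its ``no-proof-found'' branch can be given enough of a head start that, as long as no $\PA$-proof of any $\VA(p)$ exists, $V^\pi_\xi(\ae_{<t})$ is pinned close to $1$ for every policy $\pi$, while the appearance of a shortest such proof at some stage $s$ forces the value of the corresponding verified policy below the root bound $v_1$ that $p$ promised.

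For the Gödel step, suppose $\PA$ is consistent but $P\neq\emptyset$ for some $t,l,n$; then $\PA\vdash \VA(p)$ for some $p$, via a proof $\Pi$ of some finite length. Because $\VA(p)$ is a statement about the genuine real number $V^\pi_\xi$, in particular $\PA\vdash v_1\le V^\pi_\xi(\epsilon)$. On the other hand the existence of $\Pi$ is a primitive-recursive fact that $\PA$ can check, and $\PA$ can simulate the finitely many steps in which $\rho$, hence $\xi$, reacts to the first such proof; carrying this out, $\PA$ proves $V^\pi_\xi(\epsilon) < v_1$. So $\PA$ both proves and refutes $v_1\le V^\pi_\xi(\epsilon)$, contradicting $\mathrm{Con}(\PA)$. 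Formalising the whole implication inside $\PA$ shows that, provably in $\PA$, the existence of a $\PA$-proof of some $\VA(p)$ entails $\neg\mathrm{Con}(\PA)$, so by Gödel II no such $p$ exists when $\PA$ is consistent, and $P=\emptyset$ uniformly in $t,l,n$.

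The main obstacle is closing the circularity cleanly: $\VA$ refers to $\xi$, $\xi$ to $U'$, and $U'$ must itself watch for proofs of $\VA$, so the construction is a genuine fixed point that has to be pinned down with the recursion theorem rather than assembled directly. Secondary points to handle carefully are: (i) uniformity --- the single fixed $U'$ must defeat verification for all $(t,l,n)$ and all candidate programs $p$ at once, which is why the search inside $\rho$ ranges over all $p$ and all proof lengths; (ii) the degenerate programs that promise only the vacuous bound $v_k=0$, which Hutter's notion of a ``valid approximation'' must exclude (or, failing that, a positive reaction threshold must be built into $\rho$'s payoff), so that $P$ is genuinely empty rather than merely free of non-trivial policies; and (iii) checking throughout that $\rho$, and hence $\xi=\xi'$, remains an admissible lower semicomputable chronological contextual semimeasure, so that $\xi'$ is a legitimate Bayesian mixture and $V^\pi_{\xi'}$ is well defined.
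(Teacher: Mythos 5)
Your construction attacks the wrong object, and this leaves a gap that cannot be closed. You rig the prior so that the \emph{value function} drops once a PA-proof of some $\mathrm{VA}(p)$ appears, and you derive the contradiction from the promised bound exceeding the post-proof value. But $\mathrm{VA}(p)$ as stated only requires $v_k \le V^{\pi}_{\xi}(\ae_{<k})$, and the value function is provably nonnegative, so a program that outputs $v_k = 0$ (with arbitrary actions) satisfies $\mathrm{VA}$ PA-provably no matter what environment you plant in the mixture; moreover, since $\xi$ must remain a universal mixture, PA can even certify small positive lower bounds on $V^{\pi}_{\xi}$ along suitable histories (e.g.\ from the weight of an always-reward-one program), so some positive promises survive as well. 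Hence with your $U'$ the set $P$ is \emph{not} empty for suitable $t,l,n$, whereas the theorem claims emptiness for all of them. Your proposed remedies do not help: Hutter's verification condition does not exclude vacuous promises, and no ``reaction threshold'' can push a true value below $0$ or below the provable contribution of the rest of the universal mixture. There is a second, related problem: if the first proof of $\mathrm{VA}(p)$ is long, it is found at a late stage $s$, and since your $\rho$ pays reward $1$ up to $s$, the root value $V^{\pi}_{\xi}(\epsilon)$ stays close to $1$; PA then does not prove $V^{\pi}_{\xi}(\epsilon) < v_1$, and the contradiction has to migrate to late histories, where again only programs with large promises are affected.

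The paper's proof avoids all of this by modifying not the prior but the machine on which the candidate programs are run. Take $q$ to be a machine that searches for a PA-inconsistency (so $q$ never halts, but PA cannot prove this), and let $U'(p,a_{1:k})$ first run $q$ for $k$ steps, output the promise $v_k = 2$ if $q$ halts, and otherwise behave like $U(p,a_{1:k})$. Extensionally $U' = U$, so $U'$ is universal, yet PA proves ``$q$ halts $\rightarrow \forall p\, \neg\mathrm{VA}(p)$'' because $2 > 1 \ge V^{\pi}_{\xi}$. Hence a PA-proof of any $\mathrm{VA}(p)$ would yield a PA-proof that $q$ never halts, i.e.\ of PA's own consistency, which is impossible if PA is consistent. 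This kills every program uniformly---including those with vacuous promises, whose output on $U'$ is hijacked by the unfulfillable promise conditional on an event PA cannot refute---and it needs no recursion-theorem fixed point and no control over the value of the mixture. The idea you are missing is precisely this: attack the promised values $v_k$ in the programs' outputs rather than the value function through the prior.
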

\begin{proof}
Let $q$ denote an algorithm that never halts,
but for which this cannot be proved in $\PA$;
e.g., let $q$ enumerate all consequences of $\PA$
and halt as soon as it finds a contradiction.
Since we assumed that $\PA$ is consistent, $q$ never halts.
Define the UTM $U'(p, a_{1:k})$ as follows.
\begin{itemize}
\item Run $q$ for $k$ steps.
\item If $q$ halts, output $v_k = 2$.
\item Run $U(p, a_{1:k})$.
\end{itemize}
Since $q$ never halts, $U$ and $U'$ are functionally identical,
therefore $U'$ is universal.
Note that $\PA$ proves
$\forall p.\; U(p, a_{1:k}) = U'(p, a_{1:k})$ for any fixed $k$,
but $\PA$ does not prove
$\forall k \forall p.\; U(p, a_{1:k}) = U'(p, a_{1:k})$.

If $q$ did eventually halt,
it would output a value $v_k = 2$ that is too high,
since the value function $V^\pi_\xi$ is bounded by $1$ from above,
which $\PA$ knows.
Hence $\PA$ proves that
\begin{equation}\label{eq:VA-fail}
q \text{ halts} \rightarrow \forall p.\; \neg\VA(p)
\end{equation}
If $\PA$ could prove $\VA(p)$ for any $p$,
then $\PA$ would prove that $q$ does not halt
since this is the contrapositive of \eqref{eq:VA-fail}.
Therefore the set $P$ remains empty.
\end{proof}

AIXI$tl$\index{AIXItl} exhibits all the problems of the arbitrariness of the UTM
illustrated by the indifference prior~(\autoref{thm:indifference-prior}) and
the dogmatic prior~(\autoref{thm:dogmatic-prior}).
In addition, it is also susceptible to Gödel incompleteness
as illustrated by the Gödel prior in \autoref{thm:Gödel-prior}.
The formal system that is a parameter to AIXI$tl$
just provides another point of failure.

As a computable approximation to AIXI,
AIXI$tl$ is needlessly complicated.
As we prove in \autoref{cor:complexity-eps-aixi},
$\eps$-optimal AIXI is limit computable,
so we can approximate it with an anytime algorithm.
Bounding the computational resources to the approximation algorithm
already yields a computable version of AIXI.
Moreover, unlike AIXI$tl$, this approximation
actually converges to AIXI in the limit.
Furthermore, we can `speed up' this approximation algorithm
using \emph{Hutter search}\index{Hutter search}~\citep{Hutter:2002search};
this is very similar but not identical to AIXI$tl$.

\section{Bayes Optimality}
\label{sec:Bayes-optimality}

The aim of the Legg-Hutter intelligence measure is to formalize the intuitive
notion of intelligence mathematically.
\citet{LH:2007collection} collect various definitions of intelligence
across many academic fields and destill it into the following statement%
~\citep{LH:2007int}
\begin{quote}
Intelligence measures an agent’s ability to achieve goals in a wide range of
environments.
\end{quote}
This definition is formalized as follows.

\begin{definition}[{Legg-Hutter Intelligence; \citealp[Sec.~3.3]{LH:2007int}})]
\label{def:intelligence}\index{intelligence|textbf}
\index{Legg-Hutter intelligence|see {intelligence}}
The \emph{(Legg-Hutter) intelligence} of a policy $\pi$ is defined as
\[
   \Upsilon_\xi(\pi)
:= \sum_{\nu \in \M} w(\nu) V^\pi_\nu(\epsilon)
\]
\end{definition}

The Legg-Hutter intelligence of a policy $\pi$ is
the $t_0$-value that $\pi$ achieves across all environments from the class $\M$
weighted by the prior $w$.
\citet{LH:2007int} consider a subclass of $\Mlscccs$,
the class of computable measures
together with a Solomonoff prior $w(\nu) = 2^{-K(\nu)}$
and do not use discounting explicitly.

Typically, the index $\xi$ is omitted when writing $\Upsilon$.
However, in this section we consider
the intelligence measure with respect to different priors,
therefore we make this dependency explicit.
The following proposition motivates the use of the index $\xi$ instead of $w$.

\begin{proposition}[Bayes Optimality = Maximal Intelligence]
\label{prop:Bayes-optimality-and-intelligence}
\index{optimality!Bayes}\index{intelligence}
$\Upsilon_\xi(\pi) = V^\pi_\xi(\epsilon)$ for all policies $\pi$.
\end{proposition}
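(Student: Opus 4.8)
The plan is to evaluate the identity \eqref{eq:value-of-xi} at the empty history. For every $\nu \in \M$ we have $\nu(\epsilon) = 1 = \xi(\epsilon)$, so the posterior-weight identity \eqref{eq:posterior-weight} gives $w(\nu \mid \epsilon) = w(\nu)\,\nu(\epsilon)/\xi(\epsilon) = w(\nu)$; that is, at the empty history the posterior has not yet departed from the prior. Substituting $\ae_{<t} = \epsilon$ into \eqref{eq:value-of-xi} then yields
\[
V^\pi_\xi(\epsilon) ~=~ \sum_{\nu \in \M} w(\nu \mid \epsilon) V^\pi_\nu(\epsilon) ~=~ \sum_{\nu \in \M} w(\nu) V^\pi_\nu(\epsilon) ~=~ \Upsilon_\xi(\pi),
\]
which is the claim.

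For completeness I would also give the self-contained argument, not relying on the general identity \eqref{eq:value-of-xi}. The history distribution $\xi^\pi$ is the $w$-mixture $\sum_{\nu \in \M} w(\nu)\,\nu^\pi$ of the individual history distributions: by definition of $\xi$ and the telescoping identity \eqref{eq:ccs} we have $\prod_{k=1}^t \xi(e_k \mid \ae_{<k} a_k) = \xi(e_{1:t} \dmid a_{1:t}) = \sum_{\nu \in \M} w(\nu)\,\nu(e_{1:t} \dmid a_{1:t})$, and multiplying through by $\prod_{k=1}^t \pi(a_k \mid \ae_{<k})$ gives $\xi^\pi(\ae_{<t}) = \sum_{\nu \in \M} w(\nu)\,\nu^\pi(\ae_{<t})$ for all $t$, hence as measures. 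Since the normalized discounted reward sum $\tfrac{1}{\Gamma_1}\sum_{k=1}^\infty \gamma_k r_k$ is a nonnegative (indeed $[0,1]$-valued, by \assref{ass:aixi}{ass:bounded-rewards} and \autoref{rem:value-function-bounded}) random variable, linearity of the integral together with the monotone convergence theorem for the countable sum over $\M$ gives $\EE^\pi_\xi[\,\cdot\,] = \sum_{\nu \in \M} w(\nu)\,\EE^\pi_\nu[\,\cdot\,]$, i.e.\ $V^\pi_\xi(\epsilon) = \sum_{\nu \in \M} w(\nu) V^\pi_\nu(\epsilon) = \Upsilon_\xi(\pi)$.

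There is essentially no obstacle: the statement is an immediate consequence of linearity of the value function in the environment (\autoref{lem:V-linear}, extended to the countable mixture via \eqref{eq:value-of-xi}), evaluated at the empty history where prior and posterior coincide. The only step deserving a word of care is the interchange of the countable sum over $\M$ with the expectation, which is justified by nonnegativity/boundedness of the integrand.
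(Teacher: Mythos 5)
Your proof is correct and follows the paper's own route: the paper derives the proposition directly from \eqref{eq:value-of-xi} together with \autoref{def:intelligence}, which is precisely your first argument (the observation that $w(\nu \mid \epsilon) = w(\nu)$ at the empty history is the only detail the paper leaves implicit). Your second, self-contained mixture/monotone-convergence argument is a fine elaboration of the same linearity fact but adds nothing essentially different.
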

\begin{proof}
Follows directly from \eqref{eq:value-of-xi} and \autoref{def:intelligence}.
\end{proof}

\begin{definition}[{Balanced Pareto Optimality; \citealp[Def.~5.22]{Hutter:2005}}]
\label{def:balanced-Pareto-optimality}
\index{optimality!balanced Pareto|textbf}
Let $\M$ be a set of environments.
A policy $\pi$ is
\emph{balanced Pareto optimal in the set of environments $\M$} iff
for all policies $\tilde\pi$,
\[
     \sum_{\nu \in \M} w(\nu) \left( V^\pi_\nu(\epsilon)
                                    - V^{\tilde\pi}_\nu(\epsilon) \right)
\geq 0.
\]
\end{definition}

\begin{proposition}[Balanced Pareto Optimality = Maximal Intelligence]
\label{prop:balanced-Pareto-optimality-and-intelligence}
\index{optimality!balanced Pareto}\index{intelligence}
A policy $\pi$ is balanced Pareto optimal in $\M$
if and only if
$\pi$ has maximal Legg-Hutter intelligence.
\end{proposition}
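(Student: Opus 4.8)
The plan is to treat this as an almost immediate unpacking of the two definitions, with the only subtlety being that the relevant series are absolutely convergent, so that sums may be manipulated termwise.

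First I would record the convergence bookkeeping. By \autoref{rem:value-function-bounded}, every $t_0$-value $V^\pi_\nu(\epsilon)$ lies in $[0,1]$, and since $w \in \Delta\M$ with $\M$ countable, the sum in \autoref{def:intelligence},
\[
\Upsilon_\xi(\pi) = \sum_{\nu \in \M} w(\nu) V^\pi_\nu(\epsilon),
\]
converges absolutely and satisfies $\Upsilon_\xi(\pi) \in [0,1]$ for every policy $\pi$. Hence for any two policies $\pi, \tilde\pi$ both $\sum_\nu w(\nu) V^\pi_\nu(\epsilon)$ and $\sum_\nu w(\nu) V^{\tilde\pi}_\nu(\epsilon)$ are finite, so the difference may be taken inside the sum:
\[
\sum_{\nu \in \M} w(\nu) \big( V^\pi_\nu(\epsilon) - V^{\tilde\pi}_\nu(\epsilon) \big) = \Upsilon_\xi(\pi) - \Upsilon_\xi(\tilde\pi).
\]

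Next I would unwind \autoref{def:balanced-Pareto-optimality}: $\pi$ is balanced Pareto optimal in $\M$ iff the left-hand side of the last display is $\geq 0$ for every policy $\tilde\pi$, which by that identity is equivalent to $\Upsilon_\xi(\pi) \geq \Upsilon_\xi(\tilde\pi)$ for all $\tilde\pi$, i.e.\ to $\Upsilon_\xi(\pi) = \sup_{\tilde\pi} \Upsilon_\xi(\tilde\pi)$. This is exactly the statement that $\pi$ has maximal Legg-Hutter intelligence. For completeness I would remark that this supremum is genuinely attained, so ``maximal'' is not vacuous: by \autoref{prop:Bayes-optimality-and-intelligence} we have $\Upsilon_\xi(\pi) = V^\pi_\xi(\epsilon)$, and a $\xi$-optimal policy exists by the existence-of-optimal-policies discussion following \assref{ass:aixi}{ass:finite-actions-and-percepts}; but this observation is not needed for the equivalence itself.

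I do not anticipate any real obstacle here — the proof is essentially ``follows directly from \autoref{def:intelligence} and \autoref{def:balanced-Pareto-optimality}.'' The only point that deserves a sentence of care is the interchange of subtraction and summation, which is licensed by the absolute convergence established in the first step; everything else is a direct translation between the inequality form of balanced Pareto optimality and the ``argmax of $\Upsilon_\xi$'' form of maximal intelligence.
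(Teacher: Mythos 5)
Your proof is correct and is essentially the paper's argument: both rest on splitting the sum $\sum_\nu w(\nu)\bigl(V^\pi_\nu(\epsilon)-V^{\tilde\pi}_\nu(\epsilon)\bigr)$ into $\Upsilon_\xi(\pi)-\Upsilon_\xi(\tilde\pi)$ and reading off the equivalence with $\Upsilon_\xi(\pi)=\sup_{\tilde\pi}\Upsilon_\xi(\tilde\pi)$ (the paper phrases this via $V^\pi_\xi(\epsilon)$ and the Bayes optimal policy, you via an arbitrary competitor, which is only a cosmetic difference). Your explicit remark on absolute convergence and the attainment of the supremum is fine but not needed for the equivalence.
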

\begin{proof}
From \eqref{eq:value-of-xi} we get
\begin{align*}
   \sum_{\nu \in \M} w(\nu) \left( V^\pi_\nu(\epsilon)
                                  - V^{\pi^*_\xi}_\nu(\epsilon) \right)
&= \sum_{\nu \in \M} w(\nu) V^\pi_\nu(\epsilon)
   - \sum_{\nu \in \M} w(\nu) V^{\pi^*_\xi}_\nu(\epsilon) \\
&= V^\pi_\xi(\epsilon) - V^*_\xi(\epsilon) \\
&= V^\pi_\xi(\epsilon) - \sup_{\tilde\pi} V^{\tilde\pi}_\xi(\epsilon) \\
&= \Upsilon_\xi(\pi) - \sup_{\tilde\pi} \Upsilon(\tilde\pi)
\end{align*}
by \autoref{prop:Bayes-optimality-and-intelligence}.
This term is nonnegative iff $\Upsilon_\xi(\pi)$ is maximal.
\end{proof}

As a consequence of \autoref{prop:Bayes-optimality-and-intelligence} and
\autoref{prop:balanced-Pareto-optimality-and-intelligence} we get that
AIXI is balanced Pareto optimal~\citep[Thm.~5.24]{Hutter:2005}
and has maximal Legg-Hutter intelligence.\index{intelligence!maximal}
\[
     \overline\Upsilon_\xi
~:=~ \sup_\pi \Upsilon_\xi(\pi)
~= ~ \sup_\pi V^\pi_\xi(\epsilon)
~= ~ V^{\pi^*_\xi}_\xi(\epsilon)
~= ~ \Upsilon_\xi(\pi^*_\xi).
\]
This is not surprising since Legg-Hutter intelligence was defined
in terms of the $t_0$-value in the Bayes mixture.
Moreover, because the value function is scaled to be in the interval $[0, 1]$,
intelligence is a real number between $0$ and $1$.

It is just as hard to score very high on the Legg-Hutter intelligence measure
as it is to score very low:
we can always turn a reward minimizer into a reward maximizer by inverting
the rewards $r_t' := 1 - r_t$.
Hence the lowest possible intelligence score is achieved by
AIXI's twin sister, a $\xi$-expected reward minimizer:
\index{intelligence!minimal}
\[
     \underline\Upsilon_\xi
~:=~ \inf_\pi \Upsilon_\xi(\pi)
~ =~ \inf_\pi V^\pi_\xi(\epsilon)
\]
The heaven\index{heaven} environment (reward $1$ forever) and
the hell\index{hell} environment (reward $0$ forever) are computable
and thus in the environment class $\Mlscccs$;
therefore it is impossible
to get a reward $0$ or reward $1$ in every environment.
Consequently, for all policies $\pi$,
\[
       0
~<   ~ \underline\Upsilon_\xi
~\leq~ \Upsilon_\xi(\pi)
~\leq~ \overline\Upsilon_\xi
~<   ~ 1.
\]
For every real number $r \in [\underline\Upsilon_\xi, \overline\Upsilon_\xi]$
there is a policy $\pi$ with $\Upsilon_\xi(\pi) = r$:
analogously to \autoref{lem:V-linear} we can define $\pi$ such that
with probability $(r - \underline\Upsilon_\xi)/(\overline\Upsilon_\xi - \underline\Upsilon_\xi)$ it follows $\pi^*_\xi$ and
otherwise it follows $\argmin_{\tilde\pi} V^{\tilde\pi}_\xi(\epsilon)$.

\autoref{fig:intelligence} illustrates the intelligence measure $\Upsilon$.
It is natural to fix the policy \texttt{random}
that takes actions uniformly at random
to have an intelligence score of $1 / 2$
by choosing a `symmetric' universal prior~\citep{LV:2013int}.

\begin{figure}[t]
\begin{center}
\begin{tikzpicture}
\filldraw[mycolor] (1.2, .05) -- (8.7, .05) -- (8.7, -.05) -- (1.2, -.05);
\draw (0,0) to (10, 0);
\draw (0, -.2) to (0, .2) node[above] {$0$};
\draw (10, -.2) to (10, .2) node[above] {$1$};
\draw[color=mycolor] (8.7, .1) to (8.7, -.1) node[below,color=black]
	{$\overline\Upsilon_\xi$};
\draw[color=mycolor] (1.2, .1) to (1.2, -.1) node[below,color=black]
	{$\underline\Upsilon_\xi$};

\draw[->] (4.2, 0.5) node[above] {\texttt{random}} to (4.2, .2);
\draw[->] (1.3, 0.5) node[above] {AI$\xi'$} to (1.3, .2);
\draw[->] (8.7, 0.5) node[above] {AI$\xi$} to (8.7, .2);
\draw[color=mycolor] node[below] at (5, -.1) {image of $\Upsilon$};
\end{tikzpicture}
\end{center}
\caption[Legg-Hutter intelligence measure]{%
The Legg-Hutter intelligence measure assigns values
within the closed interval $[\underline\Upsilon_\xi, \overline\Upsilon_\xi]$;
the assigned values are depicted in purple.
By \autoref{thm:computable-policies-are-dense},
computable policies are dense in this purple set.
}
\label{fig:intelligence}
\index{intelligence}
\end{figure}
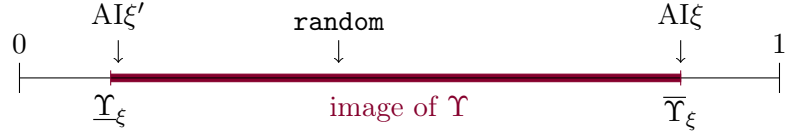

AIXI is not computable~(\autoref{thm:AIXI-is-not-computable}),
hence there is no computable policy $\pi$ such that
$\Upsilon_\xi(\pi) = \underline\Upsilon_\xi$ or
$\Upsilon_\xi(\pi) = \overline\Upsilon_\xi$ for any Bayesian mixture $\xi$
over $\Mlscccs$.
But the following theorem states that
computable policies can come arbitrarily close.
This is no surprise:
by \autoref{lem:value-bound}
we can do well on a Legg-Hutter intelligence test
simply by memorizing what AIXI would do for the first $k$ steps;
as long as $k$ is chosen large enough such that
discounting makes the remaining rewards contribute
very little to the value function.

\begin{theorem}[Computable Policies are Dense]
\label{thm:computable-policies-are-dense}
\index{intelligence}\index{policy!computable}
The set
\[
\{ \Upsilon_\xi(\pi) \mid \pi \text{ is a computable policy} \}
\]
is dense in the set $[\underline\Upsilon_\xi, \overline\Upsilon_\xi]$.
\end{theorem}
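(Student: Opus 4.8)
The plan is to show that for any target value $r \in [\underline\Upsilon_\xi, \overline\Upsilon_\xi]$ and any $\eps > 0$, there is a computable policy $\pi$ with $|\Upsilon_\xi(\pi) - r| < \eps$. Since $\Upsilon_\xi(\pi) = V^\pi_\xi(\epsilon)$ by \autoref{prop:Bayes-optimality-and-intelligence}, this reduces to approximating an arbitrary $t_0$-value in $\xi$ by the $t_0$-value of a computable policy. First I would pick a horizon $m$ large enough that $\Gamma_m / \Gamma_1 < \eps/2$; by \autoref{lem:truncated-values} (or \autoref{lem:discounted-values}), the behavior of any policy after time step $m$ affects $V^\pi_\xi(\epsilon)$ by less than $\eps/2$, so only the first $m-1$ actions matter up to this error.

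Next, as noted in the discussion preceding the theorem, there exists a policy $\pi_r$ (not necessarily computable) attaining $\Upsilon_\xi(\pi_r) = r$: it randomizes with an appropriate probability between $\pi^*_\xi$ and a $\xi$-expected reward \emph{minimizer}, using \autoref{lem:V-linear}. The key step is then to replace $\pi_r$ by a computable policy $\pi$ that agrees with $\pi_r$ on all histories $\ae_{<t}$ with $t \leq m$. Because $\A$ and $\E$ are finite (\assref{ass:aixi}{ass:finite-actions-and-percepts}), there are only finitely many such histories, so the restriction of $\pi_r$ to the first $m-1$ steps is a finite object, which can be hard-coded into a finite table; beyond step $m$, $\pi$ can act arbitrarily (e.g., always take a fixed action), which is trivially computable. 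Then $\pi$ and $\pi_r$ coincide for time steps $1$ through $m-1$, so by \autoref{lem:discounted-values} we get $|V^\pi_\xi(\epsilon) - V^{\pi_r}_\xi(\epsilon)| \leq \Gamma_m / \Gamma_1 < \eps$, hence $|\Upsilon_\xi(\pi) - r| < \eps$, establishing density.

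The main obstacle is a subtlety about what ``computable policy'' requires: the finite table of first-$m$-step actions must be computable, but this table is built from $\pi_r$, which depends on $\pi^*_\xi$ and on a reward minimizer, both incomputable in general. However, we do not need to \emph{construct} $\pi$ uniformly in $\eps$ — we only need its existence for each fixed $\eps$. For fixed $m$, the finite list of optimal actions (and minimizing actions) for the finitely many relevant histories is simply \emph{some} finite string, and any finite string is trivially computable; so the policy that plays according to that string for $t \leq m$ and a default action afterward is computable. One should be slightly careful when $\Gamma_m = 0$ for some finite $m$ (finite-horizon discounting): then $\Gamma_1$ could still be positive and the argument goes through with $m$ chosen as that finite horizon, but if $\Gamma_1 = 0$ the statement is vacuous since all values are $0$. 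I would also remark that the randomization probability defining $\pi_r$ need only be approximated by a rational within $\eps/2$, which keeps the finite table fully explicit; combining the two $\eps/2$ error terms gives the claim.
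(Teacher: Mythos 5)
Your proof is correct and follows essentially the same route as the paper: choose $m$ with $\Gamma_m/\Gamma_1$ small, hard-code the first $m-1$ steps of a (near-)value-$r$ policy in a finite lookup table --- trivially computable because it is a finite object --- act by a fixed default action afterwards, and bound the tail contribution by $\Gamma_m/\Gamma_1$ via \autoref{lem:discounted-values} and \autoref{lem:truncated-values}. The only cosmetic difference is that the paper perturbs all action probabilities of an arbitrary policy to rationals and controls the effect through \autoref{lem:value-bound}, whereas you only rationalize the single mixing probability of the $\pi^*_\xi$/minimizer mixture and copy its first steps exactly, which sidesteps that total-variation estimate.
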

\begin{proof}
Let $\pi$ be any policy and let $\varepsilon > 0$.
We need to show that there is a computable policy $\tilde\pi$ with
$|\Upsilon_\xi(\tilde\pi) - \Upsilon_\xi(\pi)| < \varepsilon$.
We choose $m$ large enough such that $\Gamma_m / \Gamma_1 < \varepsilon/3$.
Let $\alpha \in \A$ be arbitrary and define the policy
\[
\tilde\pi(a \mid \ae_{<t}) ~:=~
\begin{cases}
\pi(a \mid \ae_{<t}) \pm (\eps/3)^{-m}
  &\text{if } t < m, \\
1 &\text{if $t \geq m$ and $a = \alpha$}, \text{ and} \\
0 &\text{otherwise}.
\end{cases}
\]
By choosing an appropriate rational number in the interval
$[\pi(a \mid \ae_{<t}) - (\eps/3)^{-m}, \pi(a \mid \ae_{<t}) + (\eps/3)^{-m}]$
we can make the policy $\tilde\pi$ computable because we can store
these approximations to the action probabilities of $\pi$
for the first $m - 1$ steps in a lookup table.
From \autoref{lem:value-bound} we get
\[
     \left| V^{\pi,m}_\xi(\epsilon) - V^{\tilde\pi,m}_\xi(\epsilon) \right|
\leq D_{m-1}(\xi^\pi, \xi^{\tilde\pi} \mid \epsilon)
\leq \left( (\eps/3)^{-m} \right)^m
=    \frac{\eps}{3}
\]
and together with \autoref{lem:truncated-values} this yields
\[
     \left| \Upsilon_\xi(\pi) - \Upsilon_\xi(\tilde\pi) \right|
=    \left| V^\pi_\xi(\epsilon) - V^{\tilde\pi}_\xi(\epsilon) \right|
\leq \left| V^{\pi,m}_\xi(\epsilon) - V^{\tilde\pi,m}_\xi(\epsilon) \right|
     + 2\frac{\Gamma_m}{\Gamma_1}
\leq \frac{\eps}{3} + 2\frac{\Gamma_m}{\Gamma_1}
<    \eps.
\qedhere
\]
\end{proof}

\begin{remark}[{Deterministic Policies are not Dense in
$[\underline\Upsilon_\xi, \overline\Upsilon_\xi]$}]
\label{rem:intelligence-is-not-dense}
\index{intelligence}\index{policy!deterministic}
The intelligence values of deterministic policies are generally not
dense in the interval $[\underline\Upsilon_\xi, \overline\Upsilon_\xi]$.
We show this by defining an environment $\nu$ where
the first action determines whether
the agent goes to heaven\index{heaven} or hell\index{hell}:
action $\alpha$ leads to heaven and action $\beta$ leads to hell.
Define Bayesian mixture
$\xi' := 0.999 \nu + 0.001 \xi$ and
let $\pi$ be any policy.
If $\pi$ takes action $\alpha$ first,
then $\Upsilon_{\xi'}(\pi) > 0.999$.
If $\pi$ takes action $\beta$ first,
then $\Upsilon_{\xi'}(\pi) < 0.001$.
Hence there are no deterministic policies that
score an intelligence value in the closed interval $[0.001, 0.999]$.
\end{remark}

Legg-Hutter intelligence is measured
with respect to a fixed prior.
The Bayes agent is the most intelligent policy
\emph{if it uses the same prior}.
We use the results from \autoref{sec:bad-priors}
to show that the intelligence score of the Bayes agent
can be arbitrary close to
the minimum intelligence score $\underline\Upsilon_\xi$.

\begin{corollary}[Some AIXIs are Stupid]
\label{cor:some-AIXIs-are-stupid}
\index{intelligence}\index{AIXI}
For any Bayesian mixture $\xi$ over $\Mlscccs$ and every $\varepsilon > 0$,
there is a Bayesian mixture $\xi'$ such that
$\Upsilon_\xi(\pi^*_{\xi'}) < \underline\Upsilon_\xi + \varepsilon$.
\end{corollary}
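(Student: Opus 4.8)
The plan is to exploit two facts established earlier: the Bayes optimal policy for a suitably chosen prior can be forced to emulate \emph{any} computable policy (\autoref{cor:emulation}), and computable policies achieve intelligence scores that come arbitrarily close to the minimal score $\underline\Upsilon_\xi$ (\autoref{thm:computable-policies-are-dense}). Chaining these two results gives a $\xi'$ whose Bayes optimal policy scores within $\varepsilon$ of $\underline\Upsilon_\xi$ on the $\Upsilon_\xi$-test.

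Concretely, I would fix $\varepsilon > 0$ and first invoke \autoref{thm:computable-policies-are-dense} to obtain a computable policy $\pi$ with $|\Upsilon_\xi(\pi) - \underline\Upsilon_\xi| < \varepsilon/2$; since $\Upsilon_\xi(\pi) \geq \underline\Upsilon_\xi$ by definition of the infimum, this gives $\Upsilon_\xi(\pi) < \underline\Upsilon_\xi + \varepsilon/2$. Note that \autoref{cor:emulation} accepts an arbitrary computable policy (not just a deterministic one), so the fact that the density theorem produces a stochastic policy (cf.\ \autoref{rem:intelligence-is-not-dense}) causes no trouble. Applying \autoref{cor:emulation} to this $\pi$ with accuracy $\varepsilon/2$ yields a Bayesian mixture $\xi'$ such that every $\xi'$-optimal policy $\pi^*_{\xi'}$ satisfies $|V^{\pi^*_{\xi'}}_\nu(\epsilon) - V^\pi_\nu(\epsilon)| < \varepsilon/2$ for every environment $\nu$, in particular for every $\nu \in \Mlscccs$. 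Averaging over the prior $w$ of $\xi$, which is a probability distribution on $\Mlscccs$, then gives
\[
\Upsilon_\xi(\pi^*_{\xi'}) = \sum_{\nu} w(\nu)\, V^{\pi^*_{\xi'}}_\nu(\epsilon) \leq \sum_{\nu} w(\nu)\Big(V^\pi_\nu(\epsilon) + \tfrac{\varepsilon}{2}\Big) = \Upsilon_\xi(\pi) + \tfrac{\varepsilon}{2} < \underline\Upsilon_\xi + \varepsilon,
\]
which is the claim.

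The only genuinely delicate point is the first step: the minimal score $\underline\Upsilon_\xi$ is attained (or merely approached) by the incomputable reward-minimizing policy, so one cannot feed that policy directly into \autoref{cor:emulation}; the density theorem is exactly what bridges this gap. I would also double-check that \autoref{thm:computable-policies-are-dense} is stated for an arbitrary Bayesian mixture $\xi$ — it is — so it applies to the $\xi$ given here. Beyond bookkeeping with the two halves of $\varepsilon$ and the observation that $\sum_\nu w(\nu) \leq 1$, I do not expect any real obstacle.
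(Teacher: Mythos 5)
Your proposal is correct and follows essentially the same route as the paper: pick a computable policy within $\varepsilon/2$ of $\underline\Upsilon_\xi$ via \autoref{thm:computable-policies-are-dense}, then use \autoref{cor:emulation} with accuracy $\varepsilon/2$ and combine the two bounds. The only cosmetic difference is that you average the per-environment emulation bound over the prior $w$, whereas the paper applies the bound directly with $\nu = \xi$ and the identity $\Upsilon_\xi(\pi) = V^\pi_\xi(\epsilon)$ — these are the same computation by the definition of $\Upsilon_\xi$.
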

\begin{proof}
Let $\varepsilon > 0$.
According to \autoref{thm:computable-policies-are-dense},
there is a computable policy $\pi$ such that
$\Upsilon_\xi(\pi) < \underline\Upsilon_\xi + \varepsilon / 2$.
From \autoref{cor:emulation} we get a Bayesian mixture $\xi'$ such that
$
  | \Upsilon_\xi(\pi^*_{\xi'}) - \Upsilon_\xi(\pi) |
= | V^{\pi^*_{\xi'}}_\xi(\epsilon) - V^\pi_\xi(\epsilon) |
< \varepsilon / 2
$,
hence
\[
     |\Upsilon_\xi(\pi^*_{\xi'}) - \underline\Upsilon_\xi|
\leq |\Upsilon_\xi(\pi^*_{\xi'}) - \Upsilon_\xi(\pi)|
     + |\Upsilon_\xi(\pi) - \underline\Upsilon_\xi|
<    \varepsilon / 2 + \varepsilon / 2
=    \varepsilon.
\qedhere
\]
\end{proof}
We get the same result if we fix AIXI, but rig the intelligence measure.

\begin{corollary}[AIXI is Stupid for Some $\Upsilon$]
\label{cor:AIXI-is-stupid}
\index{intelligence}\index{AIXI}
For any deterministic $\xi$-optimal policy $\pi^*_\xi$ and
for every $\varepsilon > 0$
there is a Bayesian mixture $\xi'$ such that
$\Upsilon_{\xi'}(\pi^*_\xi) \leq \varepsilon$ and
$\overline\Upsilon_{\xi'} \geq 1 - \varepsilon$.
\end{corollary}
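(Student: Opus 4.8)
The plan is to construct $\xi'$ as a dogmatic mixture built around a fixed bad computable policy, so that the $\xi$-optimal policy $\pi^*_\xi$ is forced into hell while a different policy is rewarded. First I would pick a computable policy $\pi$ that, in a suitably-designed computable environment $\nu$, takes the ``heaven'' action $\alpha$ at the first step (analogous to the environment in \autoref{rem:intelligence-is-not-dense}); in $\nu$ action $\alpha$ yields reward $1$ forever and action $\beta$ yields reward $0$ forever, and $\pi$ differs from $\pi^*_\xi$ precisely in that it commits to $\alpha$ while $\pi^*_\xi$ (being deterministic and fixed) takes some action at the first step that we can bake in as the ``wrong'' one. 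More carefully: we design $\nu$ so that the \emph{first} action that $\pi^*_\xi$ actually takes sends the agent to hell, and so that the policy $\pi$ which deviates from $\pi^*_\xi$ at that first step (taking instead the heaven action) goes to heaven. Since $\pi^*_\xi$ is a fixed deterministic policy, $\pi^*_\xi(\epsilon)$ is a concrete action, so this $\nu$ is computable and hence $\nu \in \Mcomp \subseteq \Mlscccs$.

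Next I would form $\xi' := (1 - \delta)\nu + \delta\xi$ for a small rational $\delta > 0$, which is a Bayesian mixture over $\Mlscccs$ by \autoref{lem:mixing-mixtures}. The point of the heavy weight $1-\delta$ on $\nu$ is twofold. On one hand, $\overline\Upsilon_{\xi'} = V^{\pi^*_{\xi'}}_{\xi'}(\epsilon) \geq (1-\delta) V^\pi_\nu(\epsilon) = (1-\delta) \cdot 1 = 1-\delta$, since $\pi$ achieves value $1$ in $\nu$; choosing $\delta \leq \varepsilon$ gives $\overline\Upsilon_{\xi'} \geq 1 - \varepsilon$. On the other hand, $\Upsilon_{\xi'}(\pi^*_\xi) = V^{\pi^*_\xi}_{\xi'}(\epsilon) = (1-\delta) V^{\pi^*_\xi}_\nu(\epsilon) + \delta V^{\pi^*_\xi}_\xi(\epsilon) = (1-\delta)\cdot 0 + \delta V^{\pi^*_\xi}_\xi(\epsilon) \leq \delta$ by \autoref{lem:V-linear}, since $\pi^*_\xi$ goes to hell in $\nu$ and values are bounded by $1$ via \autoref{rem:value-function-bounded}. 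So $\delta \leq \varepsilon$ also yields $\Upsilon_{\xi'}(\pi^*_\xi) \leq \varepsilon$.

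The main obstacle is the first step: ensuring that the constructed environment $\nu$ genuinely punishes the \emph{given} $\pi^*_\xi$ rather than some other policy, and doing so with a computable $\nu$. The subtlety is that $\pi^*_\xi$ may be incomputable (indeed AIXI is, by \autoref{thm:AIXI-is-not-computable}), so we cannot define $\nu$ by simulating $\pi^*_\xi$. The fix is that we do not need to: because $\pi^*_\xi$ is a \emph{fixed} deterministic policy, its first action $a^* := \pi^*_\xi(\epsilon) \in \A$ is a single fixed element, and ``$\nu$ punishes policies whose first action is $a^*$ and rewards policies whose first action is $\alpha$'' (for any chosen $\alpha \neq a^*$) is a computable specification — the same trick used in the proof of \autoref{thm:Pareto-optimality-is-trivial} and \autoref{rem:intelligence-is-not-dense}. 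We must only take care to choose $\alpha \neq a^*$ (possible since $\#\A \geq 1$; if $\#\A = 1$ there is nothing to prove because all policies coincide and the environment class still contains heaven, making the statement vacuous or handled separately). Once $\nu$ is fixed this way, the remaining estimates are the routine linearity and boundedness computations sketched above, and setting $\delta := \min\{\varepsilon, 1/2\}$ (kept rational, shrinking $\varepsilon$ slightly if needed) completes the argument.
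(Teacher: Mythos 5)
Your proposal is correct and follows essentially the same route as the paper: hard-code the fixed first action $a_1 = \pi^*_\xi(\epsilon)$ into a computable heaven/hell environment $\nu$ (as in \autoref{rem:intelligence-is-not-dense}), mix $\xi' = (1-\varepsilon)\nu + \varepsilon\xi$, and apply \autoref{lem:V-linear} to get $\Upsilon_{\xi'}(\pi^*_\xi) \leq \varepsilon$ while any policy deviating at the first step scores at least $1-\varepsilon$. Your extra care about choosing a rational mixing weight is a harmless refinement of the paper's argument, not a different approach.
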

\begin{proof}
Let $a_1 := \pi^*_\xi(\epsilon)$ be the first action that $\pi^*_\xi$ takes.
We define an environment $\nu$ such that
taking the first action $a_1$ leads to hell\index{hell} and
taking any other first action leads to heaven\index{heaven}
as in \autoref{rem:intelligence-is-not-dense}.
We define the Bayesian mixture
$\xi' := (1 - \varepsilon) \nu + \varepsilon \xi$.
Since $\pi^*_\xi$ takes action $a_1$ first,
it goes to hell, i.e., $V^{\pi^*_\xi}_\nu(\epsilon) = 0$.
Hence with \autoref{lem:V-linear}
\begin{align*}
       \Upsilon_{\xi'}(\pi^*_\xi)
~&=~   V^{\pi^*_\xi}_{\xi'}(\epsilon)
~=~    (1 - \varepsilon) V^{\pi^*_\xi}_\nu(\epsilon)
       + \varepsilon V^{\pi^*_\xi}_\xi(\epsilon)
~\leq~ \varepsilon. \\
\intertext{
For any policy $\pi$ that
takes an action other than $a_1$ first,
we get
}
       \Upsilon_{\xi'}(\pi)
~&=~   V^\pi_{\xi'}(\epsilon)
~=~    (1 - \varepsilon) V^\pi_\nu(\epsilon) + \varepsilon V^\pi_\xi(\epsilon)
~\geq~ 1 - \varepsilon.
\qedhere
\end{align*}
\end{proof}

On the other hand,
we can make any computable policy smart
if we choose the right Bayesian mixture.
In particular, we get that there is a Bayesian mixture such that
`do nothing' is the most intelligent policy save for some $\varepsilon$.

\begin{corollary}[Computable Policies can be Smart]
\label{cor:any-computable-policy-can-be-smart}
\index{intelligence}\index{policy!computable}
For any computable policy $\pi$ and any $\varepsilon > 0$
there is a Bayesian mixture $\xi'$ such that
$\Upsilon_{\xi'}(\pi) > \overline\Upsilon_{\xi'} - \varepsilon$.
\end{corollary}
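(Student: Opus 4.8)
The plan is to reduce the statement to \autoref{cor:emulation} (AIXI emulating computable policies) via the identification of Legg-Hutter intelligence with the $t_0$-value in the mixture. First I would record two trivial reformulations: by \autoref{prop:Bayes-optimality-and-intelligence} we have $\Upsilon_{\xi'}(\pi) = V^\pi_{\xi'}(\epsilon)$ for every policy $\pi$, and by the definition of $\overline\Upsilon_{\xi'}$ together with the existence of deterministic optimal policies for every environment (which applies since a Bayesian mixture over semimeasures is again a chronological contextual semimeasure, hence an environment), $\overline\Upsilon_{\xi'} = V^{\pi^*_{\xi'}}_{\xi'}(\epsilon)$ for any $\xi'$-optimal policy $\pi^*_{\xi'}$.

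Next, given the computable policy $\pi$ and $\varepsilon > 0$, I would invoke \autoref{cor:emulation} to obtain a Bayesian mixture $\xi'$ such that $\bigl| V^{\pi^*_{\xi'}}_\nu(\epsilon) - V^\pi_\nu(\epsilon) \bigr| < \varepsilon$ for every environment $\nu$ and every $\xi'$-optimal policy $\pi^*_{\xi'}$. Instantiating $\nu := \xi'$ (which is legitimate, since $\xi'$ is itself an environment and the bound in \autoref{cor:emulation} holds for all environments once $\xi'$ has been fixed) and using that $V^{\pi^*_{\xi'}}_{\xi'}(\epsilon) \geq V^\pi_{\xi'}(\epsilon)$ by optimality, I get $\overline\Upsilon_{\xi'} - \Upsilon_{\xi'}(\pi) = V^{\pi^*_{\xi'}}_{\xi'}(\epsilon) - V^\pi_{\xi'}(\epsilon) < \varepsilon$, which rearranges to the claimed $\Upsilon_{\xi'}(\pi) > \overline\Upsilon_{\xi'} - \varepsilon$.

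Every step here is a one-liner, so there is no real obstacle; the only point that needs a word of care is the self-application of \autoref{cor:emulation} to $\nu := \xi'$, for which I would stress that $\xi'$ is produced first and the value comparison then holds uniformly in $\nu$. I would also remark on the even more elementary alternative of setting $\xi' := \varepsilon \xi + (1-\varepsilon)\mu_{\mathrm{heaven}}$, where $\mu_{\mathrm{heaven}}$ is the computable environment returning reward $1$ forever (assuming $(o,1) \in \E$ for some $o \in \O$): this is a Bayesian mixture by \autoref{lem:mixing-mixtures}, every policy then has $\xi'$-value at least $1-\varepsilon$ while $\overline\Upsilon_{\xi'} \leq 1$ by \autoref{rem:value-function-bounded}, so $\Upsilon_{\xi'}(\pi) \geq 1-\varepsilon > \overline\Upsilon_{\xi'} - \varepsilon$. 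I would present the \autoref{cor:emulation} route as the main proof, since it additionally shows that $\pi$ is essentially the unique near-optimal policy, matching the remark that `do nothing' can be made the most intelligent policy.
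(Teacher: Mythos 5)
Your main argument is exactly the paper's proof: the paper derives this corollary in one line by applying \autoref{cor:emulation} (instantiated at the environment $\nu = \xi'$) together with $\Upsilon_{\xi'}(\pi) = V^\pi_{\xi'}(\epsilon)$ and $\overline\Upsilon_{\xi'} = V^*_{\xi'}(\epsilon)$, just as you do. Your side remark about the alternative mixture $\xi' = \varepsilon\xi + (1-\varepsilon)\mu_{\mathrm{heaven}}$ is a fine elementary variant (modulo choosing the weight rational for \autoref{lem:mixing-mixtures}), but it is not needed and the paper does not take that route.
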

\begin{proof}
\autoref{cor:emulation} yields a Bayesian mixture $\xi'$ with
$
  | \overline\Upsilon_{\xi'} - \Upsilon_{\xi'}(\pi) |
= | V^*_{\xi'}(\epsilon) - V^\pi_{\xi'}(\epsilon) |
< \varepsilon
$.
\end{proof}

\section{Asymptotic Optimality}
\label{sec:asymptotic-optimality}

An asymptotically optimal policy is a policy
learns to act optimally in every environment from $\M$,
i.e., the value of this policy converges to the optimal value.

\begin{definition}[Asymptotic Optimality]
\label{def:asymptotic-optimality}\index{optimality!asymptotic|textbf}
A policy $\pi$ is
\emph{asymptotically optimal in an environment class $\M$}
iff for all $\mu \in \M$
\begin{equation}\label{eq:asymptotic-optimality}
V^*_\mu(\ae_{<t}) - V^\pi_\mu(\ae_{<t})
\to 0
\text{ as $t \to \infty$}
\end{equation}
on histories drawn from $\mu^\pi$.
\end{definition}

\begin{table}
\begin{center}
\begingroup
\def\arraystretch{1.5}
\begin{tabular}{ll}
\toprule
name & definition \\
\midrule
strong a.o.
	& $V^*_\mu(\ae_{<t}) - V^\pi_\mu(\ae_{<t}) \to 0$ $\mu^\pi$-almost surely \\
a.o.\ in mean
	& $\EE^\pi_\mu \left[ V^*_\mu(\ae_{<t}) - V^\pi_\mu(\ae_{<t}) \right] \to 0$ \\
a.o.\ in probability
	& $\forall \eps > 0.\; \mu^\pi \left[ V^*_\mu(\ae_{<t}) - V^\pi_\mu(\ae_{<t}) > \eps \right] \to 0$ \\
weak a.o.
	& $\frac{1}{t} \sum_{k=1}^t \Big( V^*_\mu(\ae_{<k}) - V^\pi_\mu(\ae_{<k}) \Big) \to 0$ $\mu^\pi$-almost surely \\
\bottomrule
\end{tabular}
\endgroup
\end{center}
\caption[Types of asymptotic optimality]{
The formal definition of different types of asymptotic optimality.
In each case we understand the limit as $t \to \infty$.
}
\label{tab:asymptotic-optimality}
\end{table}

There are different types of asymptotic optimality
based on the type of stochastic convergence in \eqref{eq:asymptotic-optimality};
see \autoref{def:stochastic-convergence}.
If this convergence occurs almost surely,
it is called \emph{strong asymptotic optimality}~\citep[Def.~7]{LH:2011opt}%
\index{optimality!asymptotic!strong}\index{convergence!almost surely};
if this convergence occurs in mean,
it is called \emph{asymptotic optimality in mean}%
\index{optimality!asymptotic!in mean}\index{convergence!in mean};
if this convergence occurs in probability,
it is called \emph{asymptotic optimality in probability}%
\index{optimality!asymptotic!in probability}\index{convergence!in probability};
and
if the Cesàro averages\index{Cesàro average} converge almost surely,
it is called \emph{weak asymptotic optimality}%
\index{optimality!asymptotic!weak}~\citep[Def.~7]{LH:2011opt}.
Since the value function is a nonnegative bounded random variable,
asymptotic optimality in mean and asymptotic optimality in probability
are equivalent.
See \autoref{tab:asymptotic-optimality} for the explicit definitions and
see \autoref{fig:asymptotic-optimality} for an overview over
their relationship.

\begin{figure}
\begin{center}
\begin{tikzpicture}
\node (sao)  at (0,2) {strong a.o.};
\node (wao)  at (-4,0) {weak a.o.};
\node (aoim) at (0,0) {a.o.\ in mean};
\node (aoip) at (4,0) {a.o.\ in probability};

\draw[->] (sao) to (wao);
\draw[->] (sao) to (aoim);
\draw[->] (sao) to (aoip);
\draw[<->] (aoim) to (aoip);
\end{tikzpicture}
\end{center}
\caption[Relationship between different types of asymptotic optimality]{
The relationship between different types of asymptotic optimality.
Each arrow indicates a logical implication and
each lack of an arrow indicates that there is no logical implication.
}
\label{fig:asymptotic-optimality}
\end{figure}
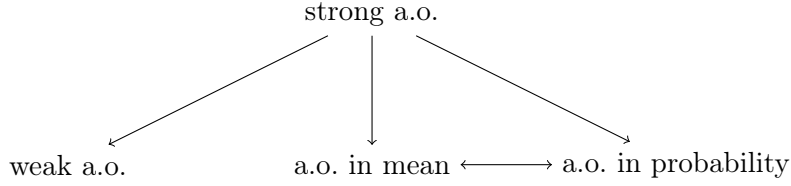

Asymptotic optimality in probability is in spirit
a probably approximately correct\index{PAC}~(PAC) result:
for all $\eps > 0$ and $\delta > 0$
the probability that our policy is $\eps$-suboptimal converges to zero;
eventually this probability will be less than $\delta$.
For a PAC result it is typically demanded that the number of time steps
until the probability is less than $\delta$
be polynomial in $1/\eps$ and $1/\delta$.
In general environments this is impossible,
and here we have no ambition to provide concrete convergence rates.

Intuitively, a necessary condition for asymptotic optimality is
that the agent needs to explore infinitely often for an entire effective horizon.
If we explore only finitely often,
then the environment might change after we stopped exploring.
Moreover, the agent needs to predict
the value of counterfactual policies accurately;
but by \autoref{lem:truncated-values} only for an $\eps$-effective horizon.
By committing to exploration for the entire effective horizon,
we learn about the value of counterfactual policies.

\begin{example}[Exploration Infinitely Often for an Entire Effective Horizon]
\label{ex:exploration}
\index{effective horizon}
If there is an $\eps > 0$ such that
the policy $\pi$ does not explore for $H_t(\eps)$ steps infinitely often,
then $V^*_\mu(\ae_{<t}) - V^\pi_\mu(\ae_{<t}) > \eps$ infinitely often.
Define $\A := \{ \alpha, \beta \}$ and $\E := \{ 0, \eps/2, 1 \}$%
~(observations are vacuous) and
consider the following class of environments
$\M := \{ \nu_\infty, \nu_1, \nu_2, \ldots \}$
(transitions are labeled with condition: action, reward):
\begin{center}
\begin{tabular}{ccc}
\begin{tikzpicture}[scale=1.2]
\node[circle, draw, minimum height=2em] (s0) at (0, 0) {$s_0$};

\draw[->] (s0) to[loop above] node {$\beta, \frac{\eps}{2}$} (s0);
\draw[->] (s0) to[loop below] node {$\alpha, 0$} (s0);
\end{tikzpicture} & ~~~ &
\begin{tikzpicture}[scale=1.2]
\node[circle, draw, minimum height=2em] (s0) at (0, 0) {$s_0$};
\node[circle, draw, minimum height=2em] (s1) at (3, 0) {$s_1$};
\node                                (ldots) at (5, 0) {$\ldots$};
\node[circle, draw, minimum height=2em] (sn) at (7, 0) {$s_n$};

\draw[->] (s0) to[loop above] node {$\beta, \frac{\eps}{2}$} (s0);
\draw[->] (s0) to[loop below] node {$t < k: \alpha, 0$} (s0);
\draw[->] (s0) to node[above] {$t \geq k: \alpha, 0$} (s1);
\draw[->] (s1) to node[above] {$\alpha, 0$} (ldots);
\draw[->] (ldots) to node[above] {$\alpha, 0$} (sn);
\draw[->] (s1) to[bend left] node[above] {$\beta, 0$} (s0);
\draw[->] (ldots) to[bend left] node[above] {$\beta, 0$} (s0);
\draw[->] (sn) to[bend left] node[below] {$\beta, 0$} (s0);
\draw[->] (sn) to[loop below] node {$\alpha, 1$} (sn);
\end{tikzpicture} \\
$\nu_\infty$ && $\nu_k$
\end{tabular}
\end{center}
Environment $\nu_k$ works just like environment $\nu_\infty$,
except that at time step $k$ the path to state $s_1$ gets unlocked.
The length of the state sequence in $\nu_k$ is defined as
an $\eps$-effective horizon, $n := H_t(\eps)$
where $t$ is the time step in which the agent leaves state $s_0$.
The optimal policy in environment $\nu_\infty$ is to always take action $\beta$,
the optimal policy for environment $\nu_k$ is
to take action $\beta$ for $t < k$ and then take action $\alpha$.
Suppose the agent is in time step $t$ and in state $s_0$.
Since these environments are partially observable\index{partially observable},
it needs to explore for $n$ steps~(take action $\alpha$ $n$ times)
to distinguish $\nu_\infty$ from $\nu_k$ for any $k \leq t$.
Since there are infinitely many $\nu_k$,
the agent needs to do this infinitely often.
Moreover, $V^*_{\nu_1} \geq \eps$ and $V^*_{\nu_\infty} = \eps/2$,
so if $\nu_t$ is the true environment,
then not exploring to the right for an $\eps$-effective horizon
is suboptimal by $\eps/2$.
But if $\nu_\infty$ is the true environment,
then exploring incurs an opportunity cost of one reward of $\eps/2$.
\end{example}

Next, we state two negative results about asymptotic optimality
proved by \citet{LH:2011opt}.
It is important to emphasize that
\autoref{thm:sao-impossible} and \autoref{thm:wao-impossible}
only hold for deterministic policies.

\begin{theorem}[{Deterministic Policies are not Strongly Asymptotically Optimal;
\citealp[Thm.~8]{LH:2011opt}}]
\label{thm:sao-impossible}
\index{policy!deterministic}\index{optimality!asymptotic!strong}
There is no deterministic policy
that is strong asymptotically optimal in the class $\Mcomp$.
\end{theorem}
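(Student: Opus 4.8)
The plan is: given an arbitrary deterministic policy $\pi$, exhibit a single environment in $\Mcomp$ in which $\pi$ fails to be asymptotically optimal. The natural witnesses are (a mild variant of) the environments $\nu_\infty,\nu_1,\nu_2,\dots$ of \autoref{ex:exploration}. Each $\nu_k$ is a deterministic finite-state POMDP whose transition table is computable --- the chain length $H_t(\eps)$ is computable because $\gamma$ is --- so $\{\nu_\infty\}\cup\{\nu_k\mid k\in\mathbb N\}\subseteq\Mcomp$; and since everything in sight is deterministic there is a unique interaction history, so ``$\mu^\pi$-almost surely'' is vacuous. Assume for contradiction that $\pi$ is strongly asymptotically optimal in $\Mcomp$; then it is strongly asymptotically optimal in $\nu_\infty$ and in every $\nu_k$ simultaneously.

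First I would pin down $\pi$'s behaviour in $\nu_\infty$, where the corridor action never pays off and $V^*_{\nu_\infty}(\ae_{<t})=\eps/2$ (achieved by always taking the safe action). If $\pi$ plays $H_t(\eps)$ consecutive corridor actions starting at time $t$, it earns reward $0$ on every step of that block, so $V^\pi_{\nu_\infty}(\ae_{<t})\le\tfrac{\eps}{2}\,\Gamma_{t+H_t(\eps)}/\Gamma_t\le\eps^2/2$ by the definition of the $\eps$-effective horizon (\autoref{def:effective-horizon}); hence the gap at time $t$ is at least $\tfrac{\eps}{2}(1-\eps)$. If this occurred for infinitely many $t$, asymptotic optimality in $\nu_\infty$ would fail, so there is a last time $T_\eps$ at which $\pi$ begins such a block along the $\nu_\infty$-history.

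Next I would transport this to $\nu_k$ with $k>T_\eps$. The key structural fact --- to be checked carefully, and the reason the construction needs partial observability together with a carefully designed safe action --- is that $\nu_k$ and $\nu_\infty$ emit identical percepts along every history on which the agent has not yet reached the rewarding state $s_n$, and that reaching $s_n$ in $\nu_k$ requires a block of $H_{t^*}(\eps)$ consecutive corridor actions starting at some $t^*\ge k$. By induction along the history, $\pi$ therefore plays the same actions in $\nu_k$ as in $\nu_\infty$; it never begins such a post-$T_\eps$ block, so it never reaches $s_n$, and its $\nu_k$-history coincides with its $\nu_\infty$-history forever. Consequently $V^\pi_{\nu_k}(\ae_{<t})=V^\pi_{\nu_\infty}(\ae_{<t})\le\eps/2$ for all $t$, whereas the optimal $\nu_k$-policy can always still walk to $s_n$ and harvest reward there, so that $V^*_{\nu_k}(\ae_{<t})-V^\pi_{\nu_k}(\ae_{<t})$ is bounded below by a fixed positive constant for every $t$ (with the chain length and rewards chosen as in \autoref{ex:exploration} to make this quantitative). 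This contradicts strong asymptotic optimality in $\nu_k\in\Mcomp$.

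Two points are the real work. The essential use of determinism is in ``$\pi$ begins such a block only finitely often in $\nu_\infty$'': a stochastic policy may take the corridor action with a tiny probability at each opportunity, thereby exploring infinitely often almost surely at vanishing expected cost --- this is exactly how Thompson sampling attains weak asymptotic optimality --- which is why the statement is genuinely restricted to deterministic policies. The main obstacle, however, is the computability constraint: we cannot build an environment that simulates a possibly-incomputable $\pi$ and diagonalises against it, so the proof must instead run through a fixed countable subfamily of $\Mcomp$ plus the indistinguishability argument above; the delicate part is verifying that $\nu_k$ and $\nu_\infty$ are genuinely indistinguishable until $s_n$ is reached, and that the value gap in $\nu_k$ stays bounded below, both of which hinge on the precise design (an $\eps$-effective-horizon-long chain) of \autoref{ex:exploration}.
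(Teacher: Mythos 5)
A preliminary caveat: the thesis itself contains no proof of this theorem — it is imported verbatim from \citet[Thm.~8]{LH:2011opt} — so I can only assess your plan on its merits. Its overall shape is the standard one and matches the intuition the thesis records in \autoref{ex:exploration}: a deterministic policy either executes horizon-length exploration blocks at infinitely many times, incurring a constant gap in $\nu_\infty$ at those times, or it executes a last block, after which it is forever blind to every $\nu_k$ with $k$ past that point. Your isolation of where determinism is used, your remark that one cannot diagonalise against a possibly incomputable $\pi$ and must instead use a fixed computable subfamily, and the indistinguishability repair you flag (the $\beta$-return transitions of $\nu_k$ must emit the same percept as the safe action does in $\nu_\infty$, otherwise a single $\alpha\beta$ probe already distinguishes the environments) are all correct and easily carried out.

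The genuine gap is quantitative and concerns the discount function. Your argument needs two bounds at once: (A) executing a corridor block starting at $t$ costs a constant in $\nu_\infty$, which requires $\Gamma_{t+n_t}/\Gamma_t$ bounded away from $1$; and (B) the post-corridor reward makes $V^*_{\nu_k}$ exceed the safe value $\eps/2$ by a constant, which requires $\Gamma_{t+n_t}/\Gamma_t$ bounded away from $0$ (indeed above roughly $\eps/2$). But \autoref{def:effective-horizon} only yields the upper bound $\Gamma_{t+H_t(\eps)}/\Gamma_t \le \eps$; there is no matching lower bound, so with chain length $H_t(\eps)$ the heaven behind the corridor may be essentially worthless by the time it is reached and (B) fails — the assertion $V^*_{\nu_1}\ge\eps$ in \autoref{ex:exploration} is itself unjustified at this level of generality — while shortening the chain to restore (B) can make the block nearly free in $\nu_\infty$ and destroy (A). The tension is not merely bookkeeping: for the admissible discount $\gamma_t = 2^{-2^t}$ (computable, summable, $\Gamma_t>0$) one has $\Gamma_{t+1}/\Gamma_t \to 0$, so no corridor of length $\ge 1$ whose reward arrives only at its end is ever worth walking; then $V^*_{\nu_k}$ equals the safe value, not walking creates no gap, and your family cannot witness the theorem at all. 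In that myopic regime one needs a structurally different gadget in which the first exploratory action is itself the rewarding and informative one (a single probe is then expensive in $\nu_\infty$ precisely because the discount is myopic), whereas your corridor gadget is what is needed when the effective horizon grows. A complete proof of the statement as given — arbitrary computable summable discounting — must therefore either choose the chain length and the placement of the reward adaptively in the (known) discount function, case-splitting between these regimes, or import explicit discount conditions; as written, your plan establishes the result only for well-behaved discounts such as geometric.
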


If the horizon grows linearly
(for example, power discounting $\gamma(t) = t^{-\beta}$
with $\beta > 1$; see \autoref{tab:discounting}),
then a deterministic policy cannot be weakly asymptotically optimal policy:
the agent has to explore for an entire effective horizon,
which prevents the Cesàro average from converging.

\begin{theorem}[{Necessary Condition for Weak Asymptotic Optimality;
\citealp[Thm.~5.5]{Lattimore:2013}}]
\label{thm:wao-impossible}
\index{policy!deterministic}\index{optimality!asymptotic!weak}
\index{effective horizon}\index{discount!function}
If there is an $\eps > 0$ such that $H_t(\eps) \notin o(t)$,
then there is no deterministic policy
that is weakly asymptotically optimal in the class $\Mcomp$.
\end{theorem}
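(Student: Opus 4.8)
The plan is to prove the contrapositive: given any deterministic policy $\pi$, I will exhibit a family of environments in $\Mcomp$ in which $\pi$ cannot be weakly asymptotically optimal. The construction refines the ``invisible corridor'' idea of the exploration example in \autoref{ex:exploration} (the same mechanism underlying \autoref{thm:sao-impossible}). Fix the $\eps > 0$ with $H_t(\eps) \notin o(t)$; since $\limsup_t H_t(\eps)/t =: c > 0$, choose an increasing sequence $t_1 < t_2 < \cdots$ with $H_{t_i}(\eps)/t_i \to c$, so $\tfrac{c}{2} t_i \le H_{t_i}(\eps) \le 2c\, t_i$ for all large $i$. Build a countable class $\M_0 = \{\nu_\infty, \nu_{t_1}, \nu_{t_2}, \dots\}$ of finite-state POMDPs with vacuous observations, two actions $\A = \{\alpha, \beta\}$, and rewards in $\{0, \tfrac{\eps}{2}, 1\}$: in every environment a ``home'' state returns reward $\tfrac{\eps}{2}$ for $\beta$ (staying home) and $0$ for $\alpha$; in $\nu_{t_i}$ a corridor of length $\ell_i$ (with $\ell_i$ comparable to $t_i$, to be calibrated below) opens after time $t_i$ and closes again shortly after $t_i$, so that playing $\alpha$ at home during the open window enters the corridor, $\ell_i$ further consecutive $\alpha$'s reach a reward-$1$ absorbing state, and any $\beta$ inside the corridor sends the agent home with reward $\tfrac{\eps}{2}$; $\nu_\infty$ is identical but has no corridor. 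Since observations are vacuous and the corridor is reward-indistinguishable from home until it is completed, the deterministic policy $\pi$ plays identically in $\nu_\infty$ and in every $\nu_{t_i}$ up to the first time (if ever) that it completes a corridor. These environments are computable: by \assref{ass:aixi}{ass:gamma-computable} the discount function, and hence the quantities governing $\ell_i$ and the opening window, are computable, so $\M_0 \subseteq \Mcomp$.

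Now run $\pi$ in $\nu_\infty$ and split on its exploration behaviour. \emph{Too little exploration:} suppose that for only finitely many $i$ does $\pi$ ever play $\ell_i$ consecutive $\alpha$'s while the corridor of $\nu_{t_i}$ is open; fix $k$ beyond all of them. In $\nu_{t_k}$ the policy $\pi$ then acts exactly as in $\nu_\infty$ forever, never reaching the reward-$1$ state, so $V^\pi_{\nu_{t_k}}(\ae_{<t}) \le \tfrac{\eps}{2}$ for all $t$, whereas — with $\ell_k$ calibrated so that completing the corridor is worth strictly more than $\tfrac{\eps}{2}$ from every sufficiently late entry time — $V^*_{\nu_{t_k}}(\ae_{<t}) \ge \tfrac{\eps}{2} + \delta_0$ for a fixed $\delta_0 > 0$ and all large $t$; hence the deficit stays bounded away from $0$ and its Cesàro average cannot vanish. \emph{Too much exploration:} otherwise $\pi$ completes corridors for infinitely many $i$, i.e.\ in $\nu_\infty$ it performs $\alpha$-runs of length $\ell_i \ge \tfrac{c}{2} t_i$ starting at times $s_i$ which, since the corridor closes shortly after $t_i$, satisfy $t_i \le s_i \le 2 t_i$. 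In $\nu_\infty$ one has $V^*_{\nu_\infty}(\ae_{<t}) = \tfrac{\eps}{2}$ identically, while $\pi$ collects reward $0$ throughout each run, so the pointwise deficit equals $\tfrac{\eps}{2\Gamma_t}\sum_{j \ge t,\ a_j^\pi = \alpha} \gamma_j$, which is bounded below by a constant for $t$ ranging over a constant fraction of each run $[s_i, s_i + \ell_i)$. Since these runs end at times $T_i = s_i + \ell_i$ which are comparable to $s_i$ (hence to $t_i$), averaging the deficit up to $T_i$ stays bounded away from $0$ along this subsequence, and $\pi$ is again not weakly asymptotically optimal. Either way the contrapositive follows.

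The main obstacle is the \emph{joint} calibration of the corridor length $\ell_i$ and its terminal reward against an \emph{arbitrary} computable discount function with $H_t(\eps) \notin o(t)$. The ``too little'' case pushes toward a short corridor, so that completing it is worth close to $1$ and produces a genuine constant deficit; the ``too much'' case pushes toward a long corridor, so that a wasted run consumes a constant fraction of the remaining discounted mass $\Gamma_{s_i}$, and hence of the Cesàro average. Reconciling these — choosing $\ell_i$ between an effective horizon for level $\eps$ and one for some larger level, closing the corridor at a time comparable to $2 t_i$ so that entry times stay comparable to $t_i$, and then verifying that $\Gamma_{s_i + \ell_i}/\Gamma_{s_i}$ is both bounded away from $1$ on a constant fraction of each run and bounded away from $0$ at the optimal entry times, using only $H_{t_i}(\eps) \ge \tfrac{c}{2} t_i$ — is precisely where the hypothesis $H_t(\eps) \notin o(t)$ is indispensable (it is exactly the ``explore for an entire effective horizon'' phenomenon of \autoref{ex:exploration}), and the innocuous-looking inequalities among the $\Gamma_t$'s there are the delicate heart of the proof.
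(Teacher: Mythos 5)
There is no in-paper proof to compare against: the paper imports \autoref{thm:wao-impossible} from \citet[Thm.~5.5]{Lattimore:2013} and offers only the intuition of \autoref{ex:exploration}. Judged on its own terms, your proposal reproduces that intuition but does not establish the theorem, and the gap is not merely the deferred ``calibration'': as written, your two cases are mutually incompatible. You close each corridor shortly after $t_i$ precisely so that, in the over-exploration branch, the completed runs start at times $s_i\in[t_i,2t_i]$ and have length $\ell_i\approx H_{t_i}(\eps)\geq\tfrac{c}{2}t_i$. But then in the under-exploration branch your claim that $V^*_{\nu_{t_k}}(\ae_{<t})\geq\tfrac{\eps}{2}+\delta_0$ ``for all large $t$'' is false: once the one-time window has passed, the reward-$1$ state is unreachable, $V^*_{\nu_{t_k}}$ drops back to $\tfrac{\eps}{2}$, the pointwise gap is nonzero only on finitely many steps, and the Cesàro average converges to $0$ --- so missing a transient opportunity does not contradict \emph{weak} asymptotic optimality at all. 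Conversely, if you leave the corridor open forever (as in \autoref{ex:exploration}) to rescue that branch, you lose control of the entry times: knowing only $\Gamma_{t_i+\ell_i}\leq\eps\,\Gamma_{t_i}$ gives no lower bound on $1-\Gamma_{s_i+\ell_i}/\Gamma_{s_i}$ when $s_i>t_i$, so the per-step deficit during a run need not be bounded away from zero, let alone on a constant fraction of $[1,s_i+\ell_i]$. Resolving this tension (e.g.\ recurring windows, entry-time-dependent corridor lengths as in \autoref{ex:exploration}, plus a density argument over the discount weights using $H_t(\eps)\notin o(t)$) is exactly the content of the theorem, and it is the part you explicitly leave undone; a proof sketch whose ``delicate heart'' is acknowledged to be missing, and whose scaffolding around it is inconsistent, has a genuine gap.

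Two further points to watch if you repair the construction. First, computability of the environments is not automatic: for a computable $\gamma$ the tail $\Gamma_t$ is in general only lower semicomputable, so $H_{t_i}(\eps)$ and hence your window boundaries and lengths $\ell_i$ need not be computable; you must replace them by computable surrogates (computable lower bounds on $H_{t_i}(\eps)$ suffice, but this has to be said and used consistently with the case analysis). Second, your Case-2 conclusion needs the deficit to be large on a set of time steps of positive density up to $T_i=s_i+\ell_i$, not just at $t=s_i$; bounding $\Gamma_{T_i}/\Gamma_t$ away from $1$ for a constant fraction of $t$ in the run is an additional property of $\gamma$ that does not follow from $H_{t_i}(\eps)\geq\tfrac{c}{2}t_i$ alone and must be argued (this is where arbitrary computable discount functions, with plateaus or bursts, can defeat naive estimates).
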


There are several agents that achieve asymptotic optimality.
In the rest of this section, we discuss the Bayes agent,
{\BayesExp}, and Thompson sampling.
Asymptotic optimality can also be achieved
through optimism~\citep{SH:12optopt,SH:2012optimistic,SH:2015opt}.

\subsection{Bayes}
\label{ssec:asymptotic-optimality-Bayes}

In this section, we list two results from the literature
regarding the asymptotic optimality of the Bayes optimal policy.
The following negative result is due to \citet{Orseau:2010,Orseau:2013}.

\begin{theorem}[{Bayes is not Asymptotically Optimal in General Environments;
\citealp[Thm.~4]{Orseau:2013}}]
\label{thm:AIXI-not-AO}\index{AIXI}\index{optimality!asymptotic}
For any class $\M \supseteq \Mcomp$
no Bayes optimal policy $\pi^*_\xi$ is asymptotically optimal:
there is an environment $\mu \in \M$ and
a time step $t_0 \in \mathbb{N}$ such that
$\mu^{\pi^*_\xi}$-almost surely for all time steps $t \geq t_0$
\[
V^*_\mu(\ae_{<t}) - V^{\pi^*_\xi}_\mu(\ae_{<t}) = \frac{1}{2}.
\]
\end{theorem}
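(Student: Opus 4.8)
The plan is to reproduce Orseau's construction of a single computable environment $\mu \in \Mcomp \subseteq \M$ that is adversarial to the fixed Bayesian mixture $\xi$: an environment with a ``hidden heaven'' that $\xi$'s optimal policy, because of its static prior, can never be coaxed into reaching. Before building $\mu$, I would carry out a reduction. Let $\pi$ be any $\xi$-optimal policy; since $\pi$ is $\xi$-optimal, $V^\pi_\xi(\ae_{<t}) = V^*_\xi(\ae_{<t})$ for every history, and \autoref{cor:Bayes-on-policy-value-convergence} gives $V^\pi_\xi(\ae_{<t}) - V^\pi_\mu(\ae_{<t}) \to 0$ $\mu^\pi$-almost surely. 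Hence it is enough to build $\mu$ so that (i) $\mu$ is deterministic and pays reward exactly $1/2$ on every history consistent with $\pi$, whence $V^\pi_\mu(\ae_{<t}) = 1/2$ there; (ii) from every such history a reward-$1$ continuation is available, whence $V^*_\mu(\ae_{<t}) = 1$; and (iii) $\pi$ stays on the ``reward-$1/2$'' part of $\mu$ forever, $\mu^\pi$-almost surely. Then $V^*_\mu(\ae_{<t}) - V^\pi_\mu(\ae_{<t}) = 1/2$ for all $t$ past the point where $\pi$'s behaviour has settled, and any $t_0$ beyond that point works.

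For the environment itself I would let $\mu$ imitate a very simple environment --- say $\mu_0$, which returns reward $1/2$ for every action --- during a long initial segment $t \le T$, and from time $T+1$ onward reward $1/2$ for as long as the agent keeps playing a fixed reference action $\alpha_0$, send any single deviation from $\alpha_0$ to ``hell'' (reward $0$ forever), but let a long, Kolmogorov-complex block $D$ of deviating actions unlock ``heaven'' (reward $1$ from then on). The role of the initial segment is to drive the Bayes posterior onto low-complexity hypotheses (notably $\mu_0$ and the ``trap'' environment $\mu'$ that punishes every deviation but has no heaven) before the trap activates; $\mu$, $\mu'$ and $\mu_0$ are indistinguishable along the all-$\alpha_0$ trajectory, so their posterior weights remain proportional to their priors, and $w(\mu)$ can be made negligible against $w(\mu')$ by taking $K(D)$ large. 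By \autoref{lem:V*-convex} the Bayes value of any deviating action is at most the posterior-weighted mixture of the $V^*_\nu$ of the consistent environments $\nu$; the plan is to show that the hypotheses that are pessimistic about deviation dominate this mixture, so this value stays below $1/2$, while playing $\alpha_0$ forever secures $1/2$ --- so $\alpha_0$ is the unique $\xi$-optimal action at every such history, $\pi$ plays $\alpha_0$ forever and never attempts $D$, and (iii) holds.

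The hard part is exactly this last domination estimate, and it is where this argument differs fundamentally from the dogmatic prior of \autoref{thm:dogmatic-prior}: there one is free to reshape the prior so that the desired behaviour becomes optimal, whereas here the prior $w$ is given and the trap environments must out-weigh not only the hidden-heaven environment $\mu$ but the entire uncontrolled remainder of $\Mcomp$ that happens to be both consistent with the observed history and optimistic about deviating. I expect the delicate step to be bounding, uniformly in $t$, the total posterior mass of all consistent environments that value a deviation at more than $1/2$; making this go through may force a more careful choice of what $\mu$ reveals on its initial segment (so that the surviving hypotheses are genuinely forced to be complex, not merely simple-and-pessimistic). A minor but necessary point is that $\pi^*_\xi$ need not be computable, so $\mu$ must be defined without reference to $\pi^*_\xi$'s trajectory; punishing deviations from the fixed reference action $\alpha_0$ rather than from $\pi$'s behaviour takes care of that.
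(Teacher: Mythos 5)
The step you yourself flag as the ``hard part'' --- that at every history along the all-$\alpha_0$ trajectory the fixed universal posterior is dominated by deviation-pessimistic environments, so that $\alpha_0$ is always the unique Bayes-optimal action --- is not a delicate technicality to be patched later; it is the entire content of the theorem, and it is almost certainly not true in the form you need. With a fixed positive prior over a class containing all of $\Mcomp$ you cannot force the Bayes agent never to deviate: Bayes does explore when it is Bayes-optimal to do so, and among the environments consistent with the observed reward-$1/2$ history there are \emph{simple} ones that are optimistic about deviating (e.g.\ `reward $1$ as soon as you leave $\alpha_0$'); inflating $K(D)$ only shrinks $w(\mu)$ relative to the trap $\mu'$, it does nothing against this uncontrolled optimistic remainder, so no uniform-in-$t$ domination estimate is available. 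Worse, your construction cannot tolerate even a single deviation: one wrong step sends the agent to hell, where $V^*_\mu = V^{\pi}_\mu = 0$, the value gap collapses, and asymptotic optimality holds trivially on that trajectory. (A smaller mismatch: since heaven sits behind a reward-$0$ block $D$, on the on-policy histories $V^*_\mu = \Gamma_{t+|D|}/\Gamma_t < 1$, so you would get a gap close to, but not exactly, $\tfrac12$.)

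The paper's proof avoids having to control Bayes' behaviour at every step. It uses a \emph{recoverable} base environment $\mu$: staying at $s_0$ with $\beta$ pays $\tfrac12$, while the exploration action $\alpha$ commits the agent to a reward-$0$ corridor of length $H_t(1/t)$ (the $1/t$-effective horizon at the time of entry) before returning to $s_0$. If Bayes explored infinitely often, then at exploration times $V^{\pi^*_\xi}_\mu(\ae_{<t}) \leq 1/t$, and on-policy value convergence (\autoref{cor:Bayes-on-policy-value-convergence}) drives $V^*_\xi(\ae_{<t})$ below $w(\mu)/2$, contradicting the dominance bound $V^*_\xi(\ae_{<t}) \geq w(\mu \mid \ae_{<t}) V^*_\mu(\ae_{<t}) \geq w(\mu)/2$ (determinism of $\mu$ gives $w(\mu \mid \ae_{<t}) \geq w(\mu)$). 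Hence Bayes explores only \emph{finitely} often, and only then is the adversarial environment defined: the computable modification of $\mu$ that is identical up to the now-fixed time $t_0$ and afterwards rewards $\alpha$ at $s_0$ with $1$ forever. Being indistinguishable from $\mu$ on the trajectory Bayes actually follows, it leaves Bayes playing $\beta$ for reward $\tfrac12$ while the optimal value is $1$, giving the gap of exactly $\tfrac12$ from $t_0$ on. The two ideas your proposal is missing are precisely this weakening (finitely many explorations rather than none, made provable by tying the exploration cost to the effective horizon) and the post-hoc placement of the reward behind the door Bayes has already stopped opening.
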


Orseau calls this result the \emph{good enough effect}\index{good enough effect}:
A Bayesian agent eventually decides that the current strategy is good enough
and that any additional exploration is not worth its expected payoff.
However, if the environment changes afterwards,
the Bayes agent is acting suboptimally.

\begin{proof}
Without loss of generality
assume $\A := \{ \alpha, \beta \}$ and $\E := \{ 0, 1/2, 1 \}$%
~(observations are vacuous).
We consider the following environment $\mu$%
~(transitions are labeled with action, reward).
\begin{center}
\begin{tikzpicture}[scale=1.2]
\node[circle, draw, minimum height=2em] (s0) at (0, 0) {$s_0$};
\node[circle, draw, minimum height=2em] (s1) at (3, 0) {$s_1$};
\node                                (ldots) at (5, 0) {$\ldots$};
\node[circle, draw, minimum height=2em] (sn) at (7, 0) {$s_n$};

\draw[->] (s0) to[loop above] node {$\beta, \frac{1}{2}$} (s0);
\draw[->] (s0) to node[above] {$\alpha, 0$} (s1);
\draw[->] (s1) to node[above] {$\ast, 0$} (ldots);
\draw[->] (ldots) to node[above] {$\ast, 0$} (sn);
\draw[->] (sn) to[bend left] node[below] {$\ast, 0$} (s0);
\end{tikzpicture}
\end{center}
In state $s_0$ the action $\beta$ is the exploitation action and
the action $\alpha$ the exploration action.
The length of the state sequence is defined as
an $1/t$-effective horizon\index{effective horizon}, $n := H_t(1/t)$
where $t$ is the time step in which the agent leaves state $s_0$.
Since the discount function $\gamma$ is computable
by \assref{ass:aixi}{ass:gamma-computable},
$\mu \in \Mlscccs$.

Assume that when acting in $\mu$,
the Bayes agent explores infinitely often.
Let $\ae_{<t}$ be a history in which
the agent is in state $s_0$ and takes action $\alpha$.
Then $V^{\pi^*_\xi}_\mu \leq 1/t$.
By on-policy value convergence%
~(\autoref{cor:Bayes-on-policy-value-convergence}),
$V^*_\xi(\ae_{<t}) - V^{\pi^*_\xi}_\mu(\ae_{<t}) \to 0$
$\mu^{\pi^*_\xi}$-almost surely.
Hence there is a time step $t_0$ such that for all $t \geq t_0$
we have $V^*_\xi < w(\mu) / 2$.
Since $\mu$ is deterministic, $w(\mu \mid \ae_{<t}) \geq w(\mu)$.
Now we get a contradiction from \eqref{eq:optimal-value-of-xi}:
\[
     V^*_\xi(\ae_{<t})
\geq w(\mu | \ae_{<t}) V^*_\mu(\ae_{<t})
\geq w(\mu) V^*_\mu(\ae_{<t})
=    \frac{w(\mu)}{2}
>    V^*_\xi(\ae_{<t})
\]

Therefore the Bayes agent stops taking the exploration action $\alpha$
after time step $t_0$,
and so it is not optimal in any $\nu \in \Mlscccs$ that behaves like $\mu$
until time step $t_0$ and then changes:
\begin{center}
\begin{tikzpicture}[scale=1.2]
\node[circle, draw, minimum height=2em] (s0) at (0, 0) {$s_0$};
\node[circle, draw, minimum height=2em] (s1) at (3, 0) {$s_1$};
\node                                (ldots) at (5, 0) {$\ldots$};
\node[circle, draw, minimum height=2em] (sn) at (7, 0) {$s_n$};

\draw[->] (s0) to[loop above] node {$\beta, \frac{1}{2}$} (s0);
\draw[->] (s0) to[loop left] node[left] {$t > t_0: \alpha, 1$} (s0);
\draw[->] (s0) to node[above] {$t \leq t_0: \alpha, 0$} (s1);
\draw[->] (s1) to node[above] {$\ast, 0$} (ldots);
\draw[->] (ldots) to node[above] {$\ast, 0$} (sn);
\draw[->] (sn) to[bend left] node[below] {$\ast, 0$} (s0);
\end{tikzpicture}
\end{center}
\vspace{-2em} 
\end{proof}

The following theorem is also known as
the \emph{self-optimizing theorem}\index{self-optimizing}.
This theorem has been a source of great confusion because
its statement in \citet[Thm.~5.34]{Hutter:2005} is not very explicit
about how the histories are generated.
The formulation of \citet[Thm.~5.2]{Lattimore:2013}
is explicit, but less general.

\begin{theorem}[{Sufficient Condition for Strong Asymptotic Optimality of Bayes; \citealp[Thm.~5.34]{Hutter:2005}}]
\label{thm:self-optimizing}
\index{optimality!asymptotic!strong}\index{optimality!Bayes}
Let $\mu$ be some environment.
If there is a policy $\pi$ and
a sequence of policies $\pi_1, \pi_2, \ldots$ such that
for all $\nu \in \M$
\begin{equation}\label{eq:self-optimizing-condition}
V^*_\nu(\ae_{<t}) - V^{\pi_t}_\nu(\ae_{<t}) \to 0
\text{ as $t \to \infty$ $\mu^\pi$-almost surely,}
\end{equation}
then
\[
V^*_\mu(\ae_{<t}) - V^{\pi^*_\xi}_\mu(\ae_{<t}) \to 0
\text{ as $t \to \infty$ $\mu^\pi$-almost surely.}
\]
If $\pi = \pi^*_\xi$ and
\eqref{eq:self-optimizing-condition} holds for all $\mu \in \M$,
then $\pi^*_\xi$ is strongly asymptotically optimal in the class $\M$.
\end{theorem}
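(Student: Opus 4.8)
The plan is to prove the first statement and then obtain ``strongly asymptotically optimal'' by specializing $\pi := \pi^*_\xi$ and applying the first statement to every $\mu \in \M$ in turn: the histories in \eqref{eq:self-optimizing-condition} are then generated by $\mu^{\pi^*_\xi}$, so the conclusion $V^*_\mu(\ae_{<t}) - V^{\pi^*_\xi}_\mu(\ae_{<t}) \to 0$ $\mu^{\pi^*_\xi}$-almost surely is exactly strong asymptotic optimality in the sense of \autoref{def:asymptotic-optimality}. So fix $\mu \in \M$ and work along histories drawn from $\mu^\pi$. Since $\mu \in \M$, the mixture $\xi$ dominates $\mu$, hence $\xi^\pi \timesgeq \mu^\pi$ and therefore $\mu^\pi$ is absolutely continuous with respect to $\xi^\pi$ by \autoref{prop:compatibility}\ref{itm:dominance=>absolute-continuity}; this lets me transfer $\xi^\pi$-almost sure statements to $\mu^\pi$-almost sure ones. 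Recall also that a $\xi$-optimal policy $\pi^*_\xi$ exists and satisfies $V^{\pi^*_\xi}_\xi(\ae_{<t}) = V^*_\xi(\ae_{<t})$ for all histories.

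First I would set up the central estimate. Write $f^\nu_t := V^*_\nu(\ae_{<t}) - V^{\pi_t}_\nu(\ae_{<t}) \in [0,1]$ and $S_t := \sum_{\nu \in \M} w(\nu \mid \ae_{<t})\, f^\nu_t$. Applying the linearity identity \eqref{eq:value-of-xi} to both $\pi^*_\xi$ and $\pi_t$ and using $V^{\pi^*_\xi}_\xi = V^*_\xi \geq V^{\pi_t}_\xi$ gives
\[
\sum_{\nu} w(\nu \mid \ae_{<t})\, V^{\pi^*_\xi}_\nu(\ae_{<t}) \;=\; V^*_\xi(\ae_{<t}) \;\geq\; \sum_{\nu} w(\nu \mid \ae_{<t})\, V^{\pi_t}_\nu(\ae_{<t}).
\]
Subtracting, and substituting $V^{\pi^*_\xi}_\nu - V^{\pi_t}_\nu = f^\nu_t - \big(V^*_\nu - V^{\pi^*_\xi}_\nu\big)$, the inequality rearranges to
\[
0 \;\leq\; \sum_{\nu} w(\nu \mid \ae_{<t}) \big( V^*_\nu(\ae_{<t}) - V^{\pi^*_\xi}_\nu(\ae_{<t}) \big) \;\leq\; S_t ,
\]
and in particular $w(\mu \mid \ae_{<t}) \big( V^*_\mu(\ae_{<t}) - V^{\pi^*_\xi}_\mu(\ae_{<t}) \big) \leq S_t$, because every summand on the left is nonnegative.

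The bulk of the work is to show $S_t \to 0$ $\mu^\pi$-almost surely. I would choose an increasing sequence of finite sets $F_n \uparrow \M$ with $\sum_{\nu \notin F_n} w(\nu) \to 0$ and bound $S_t \leq \sum_{\nu \in F_n} f^\nu_t + g^{(n)}_t$, where $g^{(n)}_t := \sum_{\nu \notin F_n} w(\nu \mid \ae_{<t}) = 1 - \sum_{\nu \in F_n} w(\nu \mid \ae_{<t})$. For each fixed $n$ the finite sum $\sum_{\nu \in F_n} f^\nu_t \to 0$ $\mu^\pi$-almost surely by \eqref{eq:self-optimizing-condition}, while $g^{(n)}_t$ is a bounded $\xi^\pi$-martingale (each posterior weight is a $\xi^\pi$-martingale, \autoref{ex:posterior-martingale}), so by \autoref{thm:martingale-convergence} it converges $\xi^\pi$-almost surely, and hence $\mu^\pi$-almost surely, to a limit $g^{(n)}_\infty$ with $\EE_{\xi^\pi}[g^{(n)}_\infty] \leq g^{(n)}_0 = \sum_{\nu \notin F_n} w(\nu)$. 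Thus $\limsup_{t\to\infty} S_t \leq g^{(n)}_\infty$ $\mu^\pi$-almost surely for every $n$; and since $g^{(n)}_\infty$ is nonincreasing in $n$ with $\EE_{\xi^\pi}\big[\inf_n g^{(n)}_\infty\big] \leq \lim_n \sum_{\nu \notin F_n} w(\nu) = 0$ by monotone convergence, $\inf_n g^{(n)}_\infty = 0$ $\xi^\pi$-almost surely and hence $\mu^\pi$-almost surely. Therefore $\limsup_{t\to\infty} S_t \leq 0$, i.e.\ $S_t \to 0$ $\mu^\pi$-almost surely.

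Finally I would deconvolve the $\mu$-term: combining $w(\mu \mid \ae_{<t}) \big( V^*_\mu(\ae_{<t}) - V^{\pi^*_\xi}_\mu(\ae_{<t}) \big) \leq S_t \to 0$ with the fact that $w(\mu \mid \ae_{<t})$ stays bounded away from $0$ $\mu^\pi$-almost surely --- which holds because, $\mu$ being in $\M$, the process $\xi^\pi(\ae_{<t})/\mu^\pi(\ae_{<t})$ is a nonnegative $\mu^\pi$-supermartingale bounded below by $w(\mu) > 0$, hence converges $\mu^\pi$-almost surely to a finite positive limit, so that $w(\mu \mid \ae_{<t}) = w(\mu)\,\mu^\pi(\ae_{<t})/\xi^\pi(\ae_{<t})$ converges $\mu^\pi$-almost surely to a strictly positive limit --- yields $V^*_\mu(\ae_{<t}) - V^{\pi^*_\xi}_\mu(\ae_{<t}) \to 0$ $\mu^\pi$-almost surely, as required. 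The step I expect to be the main obstacle is the convergence $S_t \to 0$: the hypothesis only supplies termwise convergence $f^\nu_t \to 0$ for each $\nu$ separately, whereas the posterior may smear its mass over infinitely many environments that are indistinguishable from $\mu$ under $\pi$, so one cannot naively interchange the limit with the infinite sum over $\M$; the bounded-martingale tail estimate is precisely what makes this interchange legitimate, and establishing $\liminf_{t\to\infty} w(\mu \mid \ae_{<t}) > 0$ $\mu^\pi$-almost surely (needed to isolate the $\mu$-term) is the second delicate point.
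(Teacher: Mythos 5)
Your proof is correct and follows essentially the same route as the paper's: the identical chain of inequalities bounding $w(\mu \mid \ae_{<t})\big(V^*_\mu(\ae_{<t}) - V^{\pi^*_\xi}_\mu(\ae_{<t})\big)$ by the posterior-weighted sum $\sum_{\nu} w(\nu \mid \ae_{<t})\big(V^*_\nu(\ae_{<t}) - V^{\pi_t}_\nu(\ae_{<t})\big)$, followed by the same martingale argument that $w(\mu \mid \ae_{<t})$ converges to a strictly positive limit $\mu^\pi$-almost surely. The only difference is that the paper dispatches the convergence of the weighted sum to zero by citing Hutter's Lemma~5.28(ii), whereas you prove that interchange of limit and infinite sum yourself via finite truncation plus the tail-posterior (super)martingale bound --- a correct, self-contained substitute for the citation.
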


It is important to emphasize that
the policies $\pi_1, \pi_2, \ldots$ need to converge to the optimal value
on the history generated by $\mu$ and $\pi$,
and not (as one might think) $\nu$ and $\pi_t$.
Intuitively, the policy $\pi$ is an `exploration policy'
that ensures that the environment class is explored sufficiently.
Typically, a policy is asymptotically optimal on its own history.
So if $\pi = \pi_1 = \pi_2 = \ldots$,
then we get that Bayes is asymptotically optimal on the history generated
by the policy $\pi$, not its own history.
In light of \autoref{thm:dogmatic-prior} and \autoref{thm:AIXI-not-AO}
this is not too surprising;
Bayesian reinforcement learning agents might not explore enough to be
asymptotically optimal, but given a policy that does explore enough,
Bayes learns enough to be asymptotically optimal.

This invites us to define the following policies $\pi_t$:
follow the information-seeking policy $\pi^*_{\IG}$ until time step $t$,
and then follow $\pi^*_\xi$ (explore until $t$, then exploit)%
\index{exploration vs.\ exploitation}.
Since the information-seeking policy explores enough to prove
off-policy\index{off-policy} prediction~\citep[Thm.~7]{OLH:2013ksa},
we get $V^\pi_\xi - V^\pi_\mu \to 0$ for every policy $\pi$ uniformly.
Hence $\argmax_\pi V^\pi_\xi \to \argmax_\pi V^\pi_\mu$
and thus $V^*_\mu - V^{\pi^*_\xi}_\mu \to 0$ and
\eqref{eq:self-optimizing-condition} is satisfied.
From \autoref{thm:self-optimizing} we get $V^*_\mu - V^{\pi^*_\xi}_\mu \to 0$,
which we already knew.
In order to get strong asymptotic optimality,
all we need to do is choose the switching time step $t$ appropriately,
i.e., wait until $V^*_\mu$ and $V^{\pi^*_\xi}_\mu$ are close enough.
Unfortunately, this is an invalid strategy:
the agent does not know the true environment $\mu$
and hence cannot check this condition.

\citet[Sec.~5.6]{Hutter:2005} uses \autoref{thm:self-optimizing} to show that
the Bayes optimal policy is strongly asymptotically optimal
in the class of ergodic finite-state MDPs if the effective horizon%
\index{effective horizon} is growing,
i.e., $H_t(\eps) \to \infty$ for all $\eps > 0$.
This relies on the fact that in ergodic finite-state MDPs
we need a fixed number of steps to explore the entire environment
up to $\eps$-confidence.
Therefore we can define a sequence of policies $\pi_1, \pi_2, \ldots$
that completely disregard the history and start exploring everything from scratch.
Since the effective horizon is growing,
this exploration phase takes a vanishing fraction of effective horizon
and most of the value is retained.
Therefore the sequence of policies $\pi_1, \pi_2, \ldots$
satisfies the condition of \autoref{thm:self-optimizing} regardless of the history,
thus in particular for the history generated by
$\pi = \pi^*_\xi$ and any $\mu \in \M$.
Note that the condition on the horizon is important:
If the effective horizon is bounded\index{effective horizon!bounded},
then Bayes is not asymptotically optimal
in the class of ergodic finite-state MDPs
because it can be locked into a dogmatic prior
similarly to \autoref{thm:dogmatic-prior}.

\begin{proof}[Proof of \autoref{thm:self-optimizing}]
From \eqref{eq:value-of-xi} we get for any history $\ae_{<t}$
\begingroup
\allowdisplaybreaks
\begin{align}
      w(\mu \mid \ae_{<t}) \left( V^*_\mu(\ae_{<t}) - V^{\pi^*_\xi}_\mu(\ae_{<t}) \right)
&\leq \sum_{\nu \in \M} w(\nu \mid \ae_{<t}) \left( V^*_\nu(\ae_{<t})
                                    - V^{\pi^*_\xi}_\nu(\ae_{<t}) \right) \notag \\
&=    \left( \sum_{\nu \in \M} w(\nu \mid \ae_{<t}) V^*_\nu(\ae_{<t}) \right) - V^{\pi^*_\xi}_\xi(\ae_{<t}) \notag \\
&\leq \sum_{\nu \in \M} w(\nu \mid \ae_{<t}) V^*_\nu(\ae_{<t}) - V^{\pi_t}_\xi(\ae_{<t}) \notag \\
&=    \sum_{\nu \in \M} w(\nu \mid \ae_{<t}) \big( V^*_\nu(\ae_{<t}) - V^{\pi_t}_\nu(\ae_{<t}) \big). \label{eq:value-difference-relation}
\end{align}
\endgroup
From \eqref{eq:self-optimizing-condition} follows that
$V^*_\nu - V^{\pi_t}_\nu \to 0$ $\mu^\pi$-almost surely for all $\nu \in \M$,
so \eqref{eq:value-difference-relation}
converges to $0$ $\mu^\pi$-almost surely~\citep[Lem.~5.28ii]{Hutter:2005}. 
Similar to \autoref{ex:posterior-martingale},
$1/w(\mu \mid \ae_{<t})$ is a nonnegative $\mu^\pi$-martingale and thus
converges (to a finite value) $\mu^\pi$-almost surely
by \autoref{thm:martingale-convergence}\index{convergence!martingale}.
Therefore $V^*_\mu(\ae_{<t}) - V^{\pi^*_\xi}_\mu(\ae_{<t}) \to 0$
$\mu^\pi$-almost surely.
If this is true for all $\mu \in \M$,
the strong asymptotic optimality of $\pi^*_\xi$ follows from $\pi = \pi^*_\xi$
by definition.
\end{proof}

\subsection{BayesExp}
\label{ssec:asymptotic-optimality-BayesExp}
\index{BayesExp}

The definition of {\BayesExp} is given in \autoref{ssec:BayesExp}.
In this subsection we state a result by \citet{Lattimore:2013}
that motivated the definition of {\BayesExp}.

\begin{theorem}[{{\BayesExp} is Weakly Asymptotically Optimal;
\citealp[Thm. 5.6]{Lattimore:2013}}]
\label{thm:BayesExp-wao}
\index{BayesExp}\index{optimality!asymptotic!weak}\index{discount!function}
Let $\pi_{BE}$ denote the policy from \autoref{alg:BayesExp}.
If $H_t(\eps)$ grows monotone in $t$ and $H_t(\eps_t)/\eps_t \in o(t)$,
then for all environments $\mu \in \M$
\[
\frac{1}{t} \sum_{k=1}^t \big(
	V^*_\mu(\ae_{<k}) - V^{\pi_{BE}}_\mu(\ae_{<k})
\big) \to 0
\text{ as $t \to \infty$ $\mu^{\pi_{BE}}$-almost surely}.
\]
\end{theorem}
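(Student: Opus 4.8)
The plan is to show two things: {\BayesExp} spends only a vanishing fraction of time in exploration phases, and on every exploitation step its instantaneous regret is small; the Ces\`aro average then tends to zero.

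The first ingredient is an off-policy value estimation lemma: $\mu^{\pi_{BE}}$-almost surely there is a finite $c > 0$ and a sequence $\delta_t \to 0$ such that whenever {\BayesExp} exploits at time $t$ --- that is, $V^{*,\,t+H_t(\eps_t)}_\IG(\ae_{<t}) \le \eps_t$ --- we have $\lvert V^\pi_\mu(\ae_{<t}) - V^\pi_\xi(\ae_{<t})\rvert \le \delta_t$ for \emph{every} policy $\pi$. I would prove this by chaining the following: writing $m := t + H_t(\eps_t)$, \autoref{def:V-information} gives $w(\mu\mid\ae_{<t})\,\KL_m(\mu^\pi,\xi^\pi\mid\ae_{<t}) \le V^{\pi,m}_\IG(\ae_{<t}) \le V^{*,m}_\IG(\ae_{<t}) \le \eps_t$; since $\mu\in\M$ the posterior weight $w(\mu\mid\ae_{<t})$ is bounded away from $0$ eventually (its reciprocal is a nonnegative $\mu^{\pi_{BE}}$-supermartingale, which converges $\mu^{\pi_{BE}}$-almost surely to a finite limit by \autoref{thm:martingale-convergence}, cf.\ \autoref{ex:posterior-martingale} and \autoref{cor:Bayes-on-policy-value-convergence}), so $\KL_m(\mu^\pi,\xi^\pi\mid\ae_{<t}) \le \eps_t/c$; then Pinsker's inequality (\autoref{lem:Pinsker}) bounds $D_{m-1}(\mu^\pi,\xi^\pi\mid\ae_{<t}) \le \sqrt{\eps_t/(2c)}$, \autoref{lem:value-bound} turns this into a bound on $\lvert V^{\pi,m}_\mu - V^{\pi,m}_\xi\rvert$, and \autoref{lem:truncated-values} with $\Gamma_m/\Gamma_t \le \eps_t$ (\autoref{def:effective-horizon}) removes the truncation, giving $\delta_t := \sqrt{\eps_t/(2c)} + 2\eps_t$.

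The second ingredient bounds the exploration time. The information gain realized over all exploration phases telescopes and is bounded by the prior entropy $\Ent(w)$ (this is where finiteness of $\Ent(w)$, implicit in the design of {\BayesExp}, is used), so by monotonicity the total $\mu^{\pi_{BE}}$-expected --- hence almost sure --- information gain from exploration is finite. Since phase $i$ is triggered only when its $\eps_{t_i}$-horizon information-seeking value exceeds $\eps_{t_i}$, and $\eps_t \downarrow 0$, the phases thin out; combining this with the hypothesis $H_t(\eps_t)/\eps_t \in o(t)$ yields that the number $N(T)$ of exploration steps before time $T$ satisfies $N(T)/T \to 0$ $\mu^{\pi_{BE}}$-almost surely. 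I would then combine the two via the performance-difference identity $V^*_\mu(\ae_{<t}) - V^{\pi_{BE}}_\mu(\ae_{<t}) = \EE^{\pi_{BE}}_\mu\bigl[\sum_{k \ge t}(\Gamma_k/\Gamma_t)\,g_k \bigm| \ae_{<t}\bigr]$ obtained by unrolling the Bellman equation of \autoref{def:value-function}, where the action gap $g_k := V^*_\mu(\ae_{<k}) - V^*_\mu(\ae_{<k}a_k) \in [0,1]$ satisfies $g_k \le 2\delta_k$ on exploitation steps (the action taken is $\xi$-optimal and, by the off-policy lemma, $\delta_k$-optimal for $\mu$). Averaging over $t \le T$, truncating the inner sum at the effective horizon, and using monotonicity of $H_t(\eps)$ to swap the order of summation, the exploitation steps contribute at most the Ces\`aro average of $2\delta_k$ (which vanishes since $\delta_k \to 0$) plus an arbitrarily small truncation term, while the exploration steps contribute at most a multiple of $N(T)/T$ together with the discounted mass that future exploration phases can strip from the value at earlier times; a martingale strong law passes from conditional expectations to the realized averages in \eqref{eq:asymptotic-optimality}.

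The hard part will be this last step. A single exploration phase starting at $t_i$ has length $H_{t_i}(\eps_{t_i})$, which under slowly decaying discounting (e.g.\ power discounting, where $H_t(\eps)$ is linear in $t$) is of order $t_i$, so it can visibly depress $V^{\pi_{BE}}_\mu(\ae_{<t})$ over a long window of \emph{earlier} times $t < t_i$; showing that the aggregate of all these effects over $[1,T]$ is still $o(T)$ is exactly what the quantitative assumptions $H_t(\eps_t)/\eps_t \in o(t)$ and monotonicity of $H_t(\eps)$ are designed to provide, and it is essentially the only place where those assumptions are genuinely needed.
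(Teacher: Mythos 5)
First, note that the thesis does not actually prove this statement: it is quoted from \citet[Thm.~5.6]{Lattimore:2013}, and the closest thing to a proof in the paper is the three-part outline of Lattimore's argument given later, inside the proof of \autoref{thm:wao-limit-computable} (value accuracy on exploitation steps, vanishing density of exploration steps, combination). Your first two ingredients follow exactly that outline and are essentially sound: turning the exploitation condition $V^{*,\,t+H_t(\eps_t)}_\IG(\ae_{<t})\le\eps_t$ into uniform off-policy value accuracy via an almost-sure lower bound on $w(\mu\mid\ae_{<t})$ (martingale convergence), Pinsker's inequality (\autoref{lem:Pinsker}), \autoref{lem:value-bound} and \autoref{lem:truncated-values} is the standard argument, and bounding $\sum_j \eps_{t_j}$ by the total expected information gain, hence the total exploration time via $H_t(\eps_t)\in o(t\eps_t)$, is also standard. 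Two caveats there: the realized information gain does not literally ``telescope'' across non-contiguous phases and can be negative, so you need the supermartingale property of the posterior entropy under $\xi$ (equivalently the mutual-information identity) together with dominance $\xi\ge w(\mu)\mu$ to transfer the bound to $\mu^{\pi_{BE}}$; and the bound by $\Ent(w)$ requires $\Ent(w)<\infty$, which is not among the hypotheses as stated in this theorem.

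The genuine gap is the combination step, which you explicitly leave open, and the route you chose makes it harder than necessary. Averaging the performance-difference identity over all $t\le T$ forces you to control, for every earlier $t$, the discounted mass stripped off by \emph{every} future exploration phase, and to convert conditional expectations of future exploration into realized counts (your appeal to a martingale strong law); neither is needed. Instead call $t$ \emph{bad} if it is an exploration step or if some exploration phase begins within $H_t(\eps)$ steps after $t$. On a good step all actions in the window $[t, t+H_t(\eps))$ are exploitation steps, so $\pi_{BE}$ agrees with $\pi^*_\xi$ over a full $\eps$-effective horizon, and \autoref{lem:discounted-values} combined with your off-policy lemma gives the pathwise bound $V^*_\mu(\ae_{<t})-V^{\pi_{BE}}_\mu(\ae_{<t})\le 2\delta_t+\eps$, with no law of large numbers required to reach \eqref{eq:asymptotic-optimality}. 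The bad set has vanishing density: by monotonicity of $H$, every bad pre-exploration time lies in $[t_j-H_{t_j}(\eps),\,t_j)$ for some phase start $t_j\le T+H_T(\eps)$, so its cardinality is at most $\sum_j H_{t_j}(\eps)\le\sum_j H_{t_j}(\eps_{t_j})$ (eventually, since $\eps_t\le\eps$), i.e.\ the very quantity your second ingredient already shows is $o(T)$ --- this is precisely what the monotonicity hypothesis is for. Finally, the difficulty you single out as the hard case is moot: $H_t(\eps_t)/\eps_t\in o(t)$ with $\eps_t\to0$ already excludes linearly growing effective horizons such as power discounting (consistent with \autoref{thm:wao-impossible}), so the scenario of a single phase depressing values over a constant fraction of earlier times cannot arise under the stated hypotheses.
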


If the horizon grows sublinearly~($H_t(\eps) \in o(t)$ for all $\eps > 0$),
then we can always find a sequence $\eps_t \to 0$
that decreases slowly enough such that $H_t(\eps) / \eps_t \in o(t)$ holds.

\subsection{Thompson Sampling}
\label{ssec:asymptotic-optimality-TS}

In this section we prove that
the Thompson sampling policy defined in \autoref{ssec:Thompson-sampling}
is asymptotically optimal.
\citet{OB:2010Thompson} prove that the action probabilities of Thompson sampling
converge to the action probability of the optimal policy almost surely,
but require a finite environment class and
two (arguably quite strong)
technical assumptions on the behavior of the posterior distribution
(akin to ergodicity)
and the similarity of environments in the class.
Our convergence results do not require these assumptions.

\begin{theorem}[Thompson Sampling is Asymptotically Optimal in Mean]
\label{thm:Thompson-sampling-aoim}
\index{Thompson sampling}\index{optimality!asymptotic!in mean}
For all environments $\mu \in \M$,
\[
\EE_\mu^{\pi_T} \big[ V^*_\mu(\ae_{<t}) - V^{\pi_T}_\mu(\ae_{<t}) \big] \to 0
\text{ as $t \to \infty$.}
\]
\end{theorem}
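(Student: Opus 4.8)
The plan is to show that the instantaneous value-difference $V^*_\mu(\ae_{<t}) - V^{\pi_T}_\mu(\ae_{<t})$ converges to $0$ in $\mu^{\pi_T}$-mean. The argument proceeds in several stages. First I would set up the machinery of ``exploration episodes'': the Thompson sampling policy $\pi_T$ partitions time into consecutive segments, where at the start of the $n$-th segment (say at time $t_n$) it samples $\rho_n \sim w(\,\cdot \mid \ae_{<t_n})$ and commits to $\pi^*_{\rho_n}$ for $H_{t_n}(\eps_{t_n})$ steps. The key observation is that \emph{within} a segment, $\pi_T$ is acting optimally for the sampled environment $\rho_n$, so by \autoref{lem:truncated-values} the agent gets all but $\eps_{t_n}$ of the $\rho_n$-value of $\pi^*_{\rho_n}$ over that segment.

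Second, I would use on-policy value convergence to control the error from the true environment's perspective. Since $\mu \in \M$ and $w$ is a positive prior, $\xi^{\pi_T} \geq w(\mu)\mu^{\pi_T}$, so $\xi^{\pi_T} \gg \mu^{\pi_T}$ and by \autoref{thm:Blackwell-Dubins} $\xi^{\pi_T}$ merges strongly with $\mu^{\pi_T}$; hence $V^{\pi_T}_\xi(\ae_{<t}) - V^{\pi_T}_\mu(\ae_{<t}) \to 0$ $\mu^{\pi_T}$-a.s.\ and, being bounded, also in mean (\autoref{cor:Bayes-on-policy-value-convergence}). By \eqref{eq:value-of-xi}, $V^{\pi_T}_\xi = \sum_\nu w(\nu \mid \ae_{<t}) V^{\pi_T}_\nu$, so in particular the $\mu^{\pi_T}$-expected posterior-weighted value-error of the \emph{actual} policy vanishes. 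The crux is then to relate this to the \emph{optimal} value-error. The natural route: at an episode boundary $t_n$, $\EE[V^*_{\rho_n}(\ae_{<t_n}) - V^{\pi_T}_{\rho_n}(\ae_{<t_n})] \le \eps_{t_n}$ by construction (where the expectation is over the sampling of $\rho_n$ from the posterior), and $\EE_{\rho_n \sim w(\cdot\mid\ae_{<t_n})}[V^*_{\rho_n}(\ae_{<t_n})] = \sum_\nu w(\nu \mid \ae_{<t_n}) V^*_\nu(\ae_{<t_n}) \ge V^*_\xi(\ae_{<t_n}) \ge V^*_\mu(\ae_{<t_n}) w(\mu \mid \ae_{<t_n})$ by \eqref{eq:optimal-value-of-xi} and convexity. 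Combining these, the posterior-averaged suboptimality of $\pi_T$ measured against each sampled environment's optimum is small, and since $\mu$ itself is sampled with probability $w(\mu\mid\ae_{<t_n}) \ge w(\mu) > 0$ infinitely often, the error $V^*_\mu - V^{\pi_T}_\mu$ must vanish on the subsequence of episode boundaries where $\mu$ was sampled — but we need it on \emph{all} time steps.

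Third, to bridge from episode boundaries to arbitrary time steps and from ``$\mu$ sampled'' to ``always'', I would argue as follows. Define the posterior convergence: $w(\nu \mid \ae_{<t})$ is a bounded $\mu^{\pi_T}$-martingale (\autoref{ex:posterior-martingale}), hence converges $\mu^{\pi_T}$-a.s.; combined with the on-policy convergence $V^{\pi_T}_\xi - V^{\pi_T}_\mu \to 0$, one can push through that the value estimates $V^{\pi_T}_\nu(\ae_{<t})$ of environments $\nu$ carrying non-vanishing posterior weight must agree with $V^{\pi_T}_\mu(\ae_{<t})$ in the limit. Within an episode started by sampling such a $\nu$, $\pi_T$ achieves $V^{\pi_T}_\nu$ close to $V^*_\nu$, and a telescoping/continuity argument (using \autoref{lem:discounted-values} to control the drift of the value function within an episode of length $H_{t_n}(\eps_{t_n})$, together with $\Gamma_{t_n + H_{t_n}(\eps_{t_n})}/\Gamma_{t_n} \le \eps_{t_n} \to 0$) transfers this to all time steps inside the episode. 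Since $\eps_t \to 0$, the episode-length discount ratio vanishes, so the per-step error contributed by ``being mid-commitment'' also vanishes.

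The main obstacle I anticipate is the last step: controlling the \emph{off-policy} quantity $V^*_\mu - V^{\pi_T}_\mu$ rather than the on-policy $V^{\pi_T}_\mu$, at \emph{every} time step rather than just at boundaries, without the ergodicity-type assumptions that \citet{OB:2010Thompson} used. The delicate point is that the sampled environment $\rho_n$ changes every episode, and a ``bad'' environment with small-but-nonzero posterior weight might be sampled and lead $\pi_T$ astray for one episode; one must show that this happens with vanishing frequency (in expectation) because the posterior weight of any environment whose optimal behavior differs observably from $\mu$'s must decay — this is where strong merging of $\xi^{\pi_T}$ with $\mu^{\pi_T}$ does the heavy lifting, ensuring that environments consistent with the observed history but prescribing genuinely different futures are precisely those that get explored and eliminated (or whose value estimates converge to $\mu$'s). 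Making this quantitative enough to yield convergence in mean (not merely a.s.\ along a subsequence) will require a careful Cesàro-type averaging argument over episodes combined with dominated convergence, since values are uniformly bounded in $[0,1]$ by \autoref{rem:value-function-bounded}.
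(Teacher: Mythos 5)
Your scaffolding (episode structure, posterior martingale convergence, on-policy merging via \autoref{thm:Blackwell-Dubins}, boundedness to upgrade convergence in probability to convergence in mean) matches the paper's, and you correctly locate the crux: turning on-policy guarantees into a bound on the off-policy quantity $V^*_\mu(\ae_{<t}) - V^{\pi_T}_\mu(\ae_{<t})$. But your proposal stops exactly at that crux without the idea that resolves it. The paper's mechanism is quantitative and specific: since the true environment $\mu$ and the sampled environment $\rho$ both retain posterior weight at least $w_{\min} > 0$, the Thompson sampling policy assigns probability at least $w(\nu \mid \ae_{<t_0}) w(\nu \mid \ae_{<t_1}) \geq w_{\min}^2$ to sampling the \emph{same} environment $\nu$ at two consecutive resampling points, hence to following $\pi^*_\nu$ for the entire horizon $m = t + H_t(\eps_t)$. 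This yields $D_m(\mu^{\pi_T}, \rho^{\pi_T} \mid \ae_{<t}) \geq w_{\min}^2\, D_m(\mu^{\pi^*_\nu}, \rho^{\pi^*_\nu} \mid \ae_{<t})$ for $\nu \in \{\mu, \rho\}$. Combined with the fact that the posterior-expected total variation distance $F^{\pi_T}_\infty$ vanishes in mean (\autoref{lem:info-gain}, a consequence of Blackwell--Dubins applied environment-by-environment) and a Markov-inequality argument, one gets that with high posterior probability the sampled $\rho$ satisfies $D_m(\mu^{\pi^*_\nu}, \rho^{\pi^*_\nu} \mid \ae_{<t}) \leq 2\beta$, and then \autoref{lem:value-bound} gives the chain $V^*_\mu = V^{\pi^*_\mu}_\mu < V^{\pi^*_\mu}_\rho + 3\beta \leq V^*_\rho + 3\beta = V^{\pi^*_\rho}_\rho + 3\beta < V^{\pi^*_\rho}_\mu + 6\beta$. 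The first inequality in that chain is the one your argument cannot produce: it needs closeness of $\mu$ and $\rho$ on the \emph{$\mu$-optimal} policy, which on-policy merging of $\xi^{\pi_T}$ with $\mu^{\pi_T}$ does not deliver by itself; it is delivered only because $\pi_T$ places probability $\geq w_{\min}^2$ on following $\pi^*_\mu$ for a full horizon, so that policy is ``on-policy enough.''

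Your two attempted substitutes do not close this gap. The stage-2 inequality $\mathbb{E}_{\rho \sim w(\cdot \mid \ae_{<t})}[V^*_\rho] \geq V^*_\xi \geq w(\mu \mid \ae_{<t}) V^*_\mu$ loses the factor $w(\mu \mid \ae_{<t})$, which does not converge to $1$ in general, so it cannot show the sampled optima dominate $V^*_\mu$ up to $o(1)$; and your stage-3 claim that surviving environments' ``value estimates converge to $\mu$'s'' establishes at best $V^*_\rho \approx V^{\pi_T}_\mu$ for the sampled $\rho$, which is the wrong comparison---asymptotic optimality requires relating this to $V^*_\mu$, and an environment can agree with $\mu$ on $\pi_T$-typical histories while $V^*_\rho$ differs from $V^*_\mu$ unless one invokes the $w_{\min}^2$-mass on $\pi^*_\mu$ described above. (A smaller slip: ``$\mathbb{E}[V^*_{\rho_n} - V^{\pi_T}_{\rho_n}] \leq \eps_{t_n}$ by construction'' conflates $V^{\pi_T}_{\rho_n}$ with the value of the policy conditional on having sampled $\rho_n$; the quantity small by construction is $V^*_{\rho_n} - V^{\pi_T \mid \rho_n}_{\rho_n}$, and the coupling between sample and policy must be tracked, as the paper does by conditioning on the resampling step and bounding the worst case of drawing a bad sample twice.)
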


This theorem immediately implies that
Thompson sampling is also asymptotically optimal in probability
according to \autoref{fig:asymptotic-optimality}.
However, this does not imply almost sure convergence
(see \autoref{ex:TS-not-sao}).

We first give an intuition for the asymptotic optimality of Thompson sampling.
At every resampling step we can split the class $\M$ into three partitions:
\begin{enumerate}[1.]
\item Environments $\rho$ where $V^{\pi^*_\rho}_\mu \approx V^*_\mu$
\item Environments $\rho$ where $V^*_\rho > V^*_\mu$
\item Environments $\rho$ where $V^*_\rho < V^*_\mu$
\end{enumerate}
The first class is the class of `good' environments:
if we draw one of them, we follow a policy that is close to optimal in $\mu$.
The second class is the class of environments that
overestimate the value of $\mu$.
Following their optimal policy the agent gains information because
rewards will be lower than expected.
The third class is the class of environments that
underestimate the value of $\mu$.
Following their optimal policy the agent might not gain information
since $\mu$ might behave just like environment $\rho$
on the $\rho$-optimal policy.
However, when sampling from the first class instead,
the agent gains information about the third class because rewards tend to be
better than environments from the third class predicted.

Since the true environment $\mu \in \M$, the first class is not empty,
and the probability of drawing a sample from the first class
does not become too small.
Whenever the second and third class
have sufficiently high weight in the posterior,
there is a good chance of picking a policy that leads the agent
to gain information.
Asymptotically, the posterior converges, so the agent ends up
having learned everything it could,
i.e., the posterior weight of the second and third class vanishes.

This argument is not too hard to formalize for deterministic environment classes.
However, for stochastic environment classes
the effect on the posterior when following a bad policy is harder to quantify
because there is always a chance that the rewards are different
simply because of bad luck.
In order to prove this theorem in its generality for stochastic classes,
we employ an entirely different proof strategy
that relies on statistical tools rather than the argument given above.

\begin{definition}[Expected Total Variation Distance]
\label{def:Betvd}\index{total variation distance!expected|textbf}
Let $\pi$ be any policy and let $m \in \mathbb{N} \cup \infty$.
The \emph{expected total variation distance} on the policy $\pi$ is
\[
   F^\pi_m(\ae_{<t})
:= \sum_{\rho \in \M} w(\rho \mid \ae_{<t}) D_m(\rho^\pi, \xi^\pi \mid \ae_{<t}).
\]
\end{definition}

If we replace the distance measure $D_m$ by cross-entropy, then
the quantity $F^\pi_m(\ae_{<t})$ becomes the
expected information gain\index{information gain}%
~(see \autoref{ssec:knowledge-seeking-agents}).

For the proof of \autoref{thm:Thompson-sampling-aoim}
we need the following lemma.

\begin{lemma}[Expected Total Variation Distance Vanishes On-Policy]
\label{lem:info-gain}\index{total variation distance!expected}
For any policy $\pi$ and any environment $\mu$,
\[
\EE_\mu^\pi[F^\pi_\infty(\ae_{<t})] \to 0
\text{ as $t \to \infty$.}
\]
\end{lemma}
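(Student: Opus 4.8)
The plan is to reduce the claim to a statement about on-policy merging, which in turn is governed by the martingale machinery of \autoref{cha:learning}. First I would observe that the quantity $F^\pi_\infty(\ae_{<t})$ is a posterior-weighted average of total variation distances $D_\infty(\rho^\pi, \xi^\pi \mid \ae_{<t})$. Since $\mu \in \M$ and the prior $w$ is positive, the mixture $\xi^\pi$ dominates $\mu^\pi$ (with constant $w(\mu) > 0$), so $\xi^\pi \gg \mu^\pi$ by \autoref{prop:compatibility}\ref{itm:dominance=>absolute-continuity}, and hence by the Blackwell--Dubins theorem (\autoref{thm:Blackwell-Dubins}) $\xi^\pi$ merges strongly with $\mu^\pi$: that is, $D_\infty(\mu^\pi, \xi^\pi \mid \ae_{<t}) \to 0$ as $t \to \infty$ $\mu^\pi$-almost surely. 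That handles the single term $\rho = \mu$, but $F^\pi_\infty$ averages over \emph{all} $\rho \in \M$, so I need to control the contribution of the other environments.

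The key step is to split the posterior sum. For a fixed $\eps > 0$, I would partition $\M$ into those $\rho$ with $w(\rho) \geq \delta$ for a suitable threshold $\delta = \delta(\eps)$ — only finitely many — and the tail $\{ \rho : w(\rho) < \delta \}$, whose total \emph{prior} mass is less than $\eps/2$ if $\delta$ is small enough (since $\sum_\rho w(\rho) = 1$). For the tail, I would use that $D_m \leq 1$ always, together with the fact that the posterior weight of the tail is a $\mu^\pi$-supermartingale-like quantity: more precisely, $\EE_\mu^\pi\big[\sum_{w(\rho)<\delta} w(\rho \mid \ae_{<t})\big] \leq \sum_{w(\rho)<\delta} w(\rho)/w(\mu) \cdot \text{(something bounded)}$ — actually the cleanest route is that $w(\rho \mid \ae_{<t}) = w(\rho)\mu^\pi(\ae_{<t})/\xi^\pi(\ae_{<t}) \cdot (\rho^\pi(\ae_{<t})/\mu^\pi(\ae_{<t}))$, and $\EE_\mu^\pi[\rho^\pi(\ae_{<t})/\mu^\pi(\ae_{<t})] \leq 1$ since $\rho^\pi/\mu^\pi$ is a nonnegative $\mu^\pi$-supermartingale with initial value $\leq 1$ (when $\rho$ need not be locally absolutely continuous this is still a supermartingale). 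Hence $\EE_\mu^\pi[w(\rho \mid \ae_{<t})] \leq w(\rho)/w(\mu)$... which is not quite summable to something small. A better bound: since $\sum_\rho w(\rho \mid \ae_{<t}) = 1$ always, and each $w(\rho \mid \ae_{<t})/w(\rho)$ is a $\xi^\pi$-martingale with expectation $1$ (\autoref{ex:posterior-martingale}), the tail mass $\sum_{w(\rho)<\delta} w(\rho \mid \ae_{<t})$ has $\xi^\pi$-expectation $\sum_{w(\rho)<\delta} w(\rho) < \eps/2$, and by absolute continuity this transfers to a statement about $\mu^\pi$-expectation (after taking $\delta$ small). For the finitely many heavy environments $\rho$ with $w(\rho) \geq \delta$, I would apply strong merging again: each such $\rho^\pi$ is absolutely continuous with respect to $\xi^\pi$ (since $\xi^\pi \geq w(\rho)\rho^\pi$), so $D_\infty(\rho^\pi, \xi^\pi \mid \ae_{<t}) \to 0$ $\xi^\pi$-almost surely, hence $\mu^\pi$-almost surely, for each of these finitely many $\rho$.

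Putting the pieces together: $F^\pi_\infty(\ae_{<t}) \leq \sum_{w(\rho)\geq\delta} w(\rho\mid\ae_{<t}) D_\infty(\rho^\pi,\xi^\pi\mid\ae_{<t}) + \sum_{w(\rho)<\delta} w(\rho\mid\ae_{<t})$. The first sum is a finite sum each term of which tends to $0$ $\mu^\pi$-a.s.; the second has small $\mu^\pi$-expectation uniformly in $t$. Taking expectations, using dominated convergence (everything is bounded by $1$) on the first sum and the uniform bound on the second, gives $\limsup_{t\to\infty}\EE_\mu^\pi[F^\pi_\infty(\ae_{<t})] \leq \eps$; since $\eps$ was arbitrary, the limit is $0$. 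The main obstacle I anticipate is the bookkeeping for the tail: making precise that the tail posterior mass has small expectation requires care about whether one works under $\mu^\pi$ or $\xi^\pi$, and the transfer between them via absolute continuity. The cleanest formulation is to do the whole split under $\xi^\pi$-expectation first — where $w(\rho\mid\ae_{<t})/w(\rho)$ being a martingale gives the tail bound immediately — and then invoke $\xi^\pi \gg \mu^\pi$ to pull the convergence (and the expectation bound, via a bounded-convergence-type argument on the Radon--Nikodym derivative from \autoref{thm:Radon-Nikodym-derivative}) back to $\mu^\pi$. Alternatively, one can avoid the partition entirely by noting $F^\pi_\infty(\ae_{<t}) = \sum_\rho w(\rho\mid\ae_{<t})D_\infty(\rho^\pi,\xi^\pi\mid\ae_{<t})$ and arguing directly that this is a $\mu^\pi$-submartingale-like object whose expectation is nonincreasing and bounded, then identifying the limit as $0$ using that $\xi^\pi$ learns $\mu^\pi$; I would try the partition argument first as it is more elementary.
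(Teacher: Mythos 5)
There is a genuine gap in the step that handles the ``heavy'' environments. You claim that since $\rho^\pi$ is absolutely continuous with respect to $\xi^\pi$, Blackwell--Dubins gives $D_\infty(\rho^\pi, \xi^\pi \mid \ae_{<t}) \to 0$ \emph{$\xi^\pi$-almost surely, hence $\mu^\pi$-almost surely}. But \autoref{thm:Blackwell-Dubins} together with \autoref{def:strong-merging} only gives convergence $\rho^\pi$-almost surely, i.e.\ on histories drawn from $\rho^\pi$, and this cannot be transferred to $\xi^\pi$ or $\mu^\pi$ (that would require $\xi^\pi \ll \rho^\pi$ resp.\ $\mu^\pi \ll \rho^\pi$, which fail in general). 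The claim is in fact false: take $\M$ containing two distinguishable environments, say $\mu$ and $\rho$ inducing Bernoulli$(1/3)$- and Bernoulli$(2/3)$-type percepts. On $\mu^\pi$-typical histories $\xi^\pi(\cdot \mid \ae_{<t})$ merges with $\mu^\pi$ while $\rho^\pi(\cdot \mid \ae_{<t})$ keeps predicting differently, so $D_\infty(\rho^\pi, \xi^\pi \mid \ae_{<t})$ stays bounded away from $0$ $\mu^\pi$-almost surely. What rescues the lemma is that in this situation the posterior weight $w(\rho \mid \ae_{<t})$ vanishes, but your proof neither establishes nor cites the required dichotomy ``either $w(\rho\mid\ae_{<t}) \to 0$ or $\rho^\pi$ merges with $\xi^\pi$'' on $\mu$-histories, and that dichotomy is not an elementary consequence of the results you invoke.

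The clean repair — and the route the paper actually takes — is a change of measure rather than an almost-sure argument: first bound $\EE_\mu^\pi[F^\pi_\infty(\ae_{<t})] \leq \tfrac{1}{w(\mu)} \EE_\xi^\pi[F^\pi_\infty(\ae_{<t})]$ using dominance, then use the posterior identity \eqref{eq:posterior-weight} to rewrite each term as
\[
\EE_\xi^\pi\bigl[ w(\rho \mid \ae_{<t})\, D_\infty(\rho^\pi, \xi^\pi \mid \ae_{<t}) \bigr]
= w(\rho)\, \EE_\rho^\pi\bigl[ D_\infty(\rho^\pi, \xi^\pi \mid \ae_{<t}) \bigr],
\]
so that Blackwell--Dubins is applied under the measure $\rho^\pi$ for which it actually holds (and, since $D_\infty$ is bounded, in $\rho^\pi$-mean). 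The infinite sum $\sum_\rho w(\rho)\EE_\rho^\pi[D_\infty(\cdots)]$ then tends to $0$ by dominated convergence for series, so no heavy/tail split is needed at all. Your tail estimate via the martingale property of posterior weights is fine, and your instinct to work under $\EE_\xi^\pi$ first and pull back via $w(\mu)>0$ is exactly right; the missing ingredient is this change-of-measure identity for the non-tail terms, without which the finite-part argument does not go through.
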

\begin{proof}
From \autoref{thm:Blackwell-Dubins} we get
$D_\infty(\mu^\pi, \xi^\pi \mid \ae_{<t}) \to 0$
$\mu^\pi$-almost surely,
and since $D$ is bounded, this convergence also occurs in mean.
Thus for every environment $\nu \in \M$,
\[
\EE_\nu^\pi \big[ D_\infty(\nu^\pi, \xi^\pi \mid \ae_{<t}) \big] \to 0
\text{ as $t \to \infty$}.
\]
Now
\begingroup
\allowdisplaybreaks
\begin{align*}
      \EE_\mu^\pi[F^\pi_\infty(\ae_{<t})]
&\leq \frac{1}{w(\mu)} \EE_\xi^\pi[F^\pi_\infty(\ae_{<t})] \\
&=    \frac{1}{w(\mu)} \EE_\xi^\pi \left[
        \sum_{\nu \in \M} w(\nu \mid \ae_{<t})
          D_\infty(\nu^\pi, \xi^\pi \mid \ae_{<t})
      \right] \\
&=    \frac{1}{w(\mu)} \EE_\xi^\pi \left[
        \sum_{\nu \in \M} w(\nu) \frac{\nu^\pi(\ae_{<t})}{\xi^\pi(\ae_{<t})}
          D_\infty(\nu^\pi, \xi^\pi \mid \ae_{<t})
      \right] \\
&=    \frac{1}{w(\mu)} \sum_{\nu \in \M} w(\nu) \EE_\nu^\pi \big[
        D_\infty(\nu^\pi, \xi^\pi \mid \ae_{<t})
      \big]
 \to  0
\end{align*}
\endgroup
by \citet[Lem.~5.28ii]{Hutter:2005} since total variation distance is bounded.
\end{proof}

\begin{proof}[Proof of \autoref{thm:Thompson-sampling-aoim}]
Let $\beta, \delta > 0$ and
let $\eps_t > 0$ denote the sequence
used to define $\pi_T$ in \autoref{alg:Thompson-sampling}.
We assume that $t$ is large enough such that
$\eps_k \leq \beta$ for all $k \geq t$ and that
$\delta$ is small enough such that $w(\mu \mid \ae_{<t}) > 4\delta$ for all $t$,
which holds since $w(\mu \mid \ae_{<t}) \not\to 0$
$\mu^\pi$-almost surely for any policy $\pi$~\citep[Lem.~3i]{Hutter:2009MDL}.

The stochastic process $w(\nu \mid \ae_{<t})$
is a $\xi^{\pi_T}$-martingale according to \autoref{ex:posterior-martingale}.
By the martingale convergence theorem~(\autoref{thm:martingale-convergence})%
\index{convergence!martingale}
$w(\nu \mid \ae_{<t})$ converges $\xi^{\pi_T}$-almost surely and
because $\xi^{\pi_T} \geq w(\mu) \mu^{\pi_T}$
it also converges $\mu^{\pi_T}$-almost surely.

We argue that we can choose $t_0$
to be one of $\pi_T$'s resampling time steps large enough such that
for all $t \geq t_0$
the following three events hold simultaneously
with $\mu^{\pi_T}$-probability at least $1 - \delta$.
\begin{enumerate}[(i)]
\item \label{itm:M'}
	There is a finite set $\M' \subset \M$ with
	$w(\M' \mid \ae_{<t}) > 1 - \delta$
	and $w(\nu \mid \ae_{<k}) \not\to 0$ as $k \to \infty$
	for all $\nu \in \M'$.
\item \label{itm:posterior-convergence}
	$\left| w(\M'' \mid \ae_{<t}) - w(\M'' \mid \ae_{<t_0}) \right|
	\leq \delta$ for all $\M'' \subseteq \M'$.
\item \label{itm:info-gain}
	$F^{\pi_T}_\infty(\ae_{<t}) < \delta \beta w_{\min}^2$.
\end{enumerate}
where
$w_{\min} := \inf \{ w(\nu \mid \ae_{<k}) \mid k \in \mathbb{N}, \nu \in \M' \}$,
which is positive by (\ref{itm:M'}).

(\ref{itm:M'}) and (\ref{itm:posterior-convergence}) are satisfied eventually
because the posterior $w(\,\cdot \mid \ae_{<t})$
converges $\mu^{\pi_T}$-almost surely.
Note that the set $\M'$ is random:
the limit of $w(\nu \mid \ae_{<t})$ as $t \to \infty$
depends on the history $\ae_{1:\infty}$.
Without loss of generality,
we assume the true environment $\mu$ is contained in $\M'$
since $w(\mu \mid \ae_{<t}) \not\to 0$ $\mu^{\pi_T}$-almost surely.
(\ref{itm:info-gain}) follows from \autoref{lem:info-gain}
since convergence in mean implies convergence in probability.

Moreover, we define the horizon $m := t + H_t(\eps_t)$
as the time step of
the effective horizon\index{effective horizon} at time step $t$.
Let $\ae_{<t}$ be a fixed history
for which (\ref{itm:M'}-\ref{itm:info-gain}) is satisfied.
Then we have
\begingroup
\allowdisplaybreaks
\begin{align*}
      \delta \beta w_{\min}^2
&>    F^{\pi_T}_\infty(\ae_{<t}) \\
&=    \sum_{\nu \in \M} w(\nu \mid \ae_{<t})
        D_\infty(\nu^{\pi_T}, \xi^{\pi_T} \mid \ae_{<t}) \\
&=    \mathbb{E}_{\nu \sim w(\,\cdot \mid \ae_{<t})} \left[
        D_\infty(\nu^{\pi_T}, \xi^{\pi_T} \mid \ae_{<t})
      \right] \\
&\geq \mathbb{E}_{\nu \sim w(\,\cdot \mid \ae_{<t})} \left[
        D_m(\nu^{\pi_T}, \xi^{\pi_T} \mid \ae_{<t})
      \right] \\
&\geq \beta w_{\min}^2 w( \M \setminus \M'' \mid \ae_{<t})
\end{align*}
\endgroup
by Markov's inequality where
\[
   \M''
:= \left\{ \nu \in \M \;\middle|\;
           D_m(\nu^{\pi_T}, \xi^{\pi_T} \mid \ae_{<t})
           < \beta w_{\min}^2
   \right\}.
\]
For our fixed history $\ae_{<t}$ we have
\begingroup
\allowdisplaybreaks
\begin{align*}
   1 - \delta
&< w( \M'' \mid \ae_{<t}) \\
&\stackrel{(\ref{itm:M'})}{\leq}
   w( \M'' \cap \M' \mid \ae_{<t}) + \delta \\
&\stackrel{(\ref{itm:posterior-convergence})}{\leq}
   w( \M'' \cap \M' \mid \ae_{<t_0}) + 2\delta \\
&\stackrel{(\ref{itm:M'})}{\leq}
   w( \M'' \mid \ae_{<t_0}) + 3\delta
\end{align*}
\endgroup
and thus we get
\begin{equation}\label{eq:D-bound}
  1 - 4\delta
< w \left( \{ \nu \in \M \mid D_m(\nu^{\pi_T}, \xi^{\pi_T} \mid \ae_{<t})
           < \beta w_{\min}^2 \}
    \;\middle|\; \ae_{<t_0}
    \right).
\end{equation}
In particular, this bound holds for $\nu = \mu$
since $w(\mu \mid \ae_{<t_0}) > 4\delta$ by assumption.

It remains to show that with high probability the value $V^{\pi^*_\rho}_\mu$
of the sample $\rho$'s optimal policy $\pi^*_\rho$ is sufficiently close to
the $\mu$-optimal value $V^*_\mu$.
The worst case is that we draw the worst sample from $\M' \cap \M''$
twice in a row.
From now on, let $\rho$ denote the sample environment we draw at time step $t_0$,
and let $t$ denote some time step between
$t_0$ and $t_1 := t_0 + H_{t_0}(\eps_{t_0})$
(before the next resampling).
With probability $w(\nu' \mid \ae_{<t_0}) w(\nu' \mid \ae_{<t_1})$
we sample $\nu'$ both at $t_0$ and $t_1$
when following $\pi_T$.
Therefore we have for all $\ae_{t:m}$ and all $\nu \in \M$
\[
     \nu^{\pi_T}(\ae_{1:m} \mid \ae_{<t})
\geq w(\nu' \mid \ae_{<t_0}) w(\nu' \mid \ae_{<t_1})
     \nu^{\pi^*_{\nu'}}(\ae_{1:m} \mid \ae_{<t}).
\]
Thus we get for all $\nu \in \M'$ (in particular $\rho$ and $\mu$)
\begingroup
\allowdisplaybreaks
\begin{align*}
      D_m(\mu^{\pi_T}, \rho^{\pi_T} \mid \ae_{<t})
&\geq \sup_{\nu' \in \M} \sup_{A \subseteq (\A \times \E)^m}
      \Big| w(\nu' \mid \ae_{<t_0}) w(\nu' \mid \ae_{<t_1}) \\
&\qquad\qquad\qquad\qquad\qquad\qquad\qquad
      (\mu^{\pi^*_{\nu'}}(A \mid \ae_{<t}) - \rho^{\pi^*_{\nu'}}(A \mid \ae_{<t})) \Big| \\
&\geq w(\nu \mid \ae_{<t_0}) w(\nu \mid \ae_{<t_1})
      \sup_{A \subseteq (\A \times \E)^m} \Big|
        \mu^{\pi^*_\nu}(A \mid \ae_{<t}) - \rho^{\pi^*_\nu}(A \mid \ae_{<t})
      \Big| \\
&\geq w_{\min}^2 D_m(\mu^{\pi^*_\nu}, \rho^{\pi^*_\nu} \mid \ae_{<t}).
\end{align*}
\endgroup
For $\rho \in \M''$ we get with \eqref{eq:D-bound}
\begin{align*}
      D_m(\mu^{\pi_T}, \rho^{\pi_T} \mid \ae_{<t})
&\leq D_m(\mu^{\pi_T}, \xi^{\pi_T} \mid \ae_{<t})
      + D_m(\rho^{\pi_T}, \xi^{\pi_T} \mid \ae_{<t}) \\
&<    \beta w_{\min}^2 + \beta w_{\min}^2
 =    2\beta w_{\min}^2,
\end{align*}
which together with \autoref{lem:value-bound}
and the fact that rewards in $[0, 1]$ implies
\begingroup
\allowdisplaybreaks
\begin{align*}
     \left| V^{\pi^*_\nu}_\mu(\ae_{<t}) - V^{\pi^*_\nu}_\rho(\ae_{<t}) \right|
&\leq \frac{\Gamma_{t+H_t(\eps_t)}}{\Gamma_t} +
      \left| V^{\pi^*_\nu,m}_\mu(\ae_{<t}) - V^{\pi^*_\nu,m}_\rho(\ae_{<t}) \right| \\
&\leq \eps_t + D_m(\mu^{\pi^*_\nu}, \rho^{\pi^*_\nu} \mid \ae_{<t}) \\
&\leq \eps_t + \tfrac{1}{w_{\min}^2} D_m(\mu^{\pi_T}, \rho^{\pi_T} \mid \ae_{<t}) \\
&< \beta + 2\beta
=     3\beta.
\end{align*}
\endgroup
Hence we get (omitting history arguments $\ae_{<t}$ for simplicity)
\begin{equation}\label{eq:V-beta-bound}
      V^*_\mu
 =    V^{\pi^*_\mu}_\mu
 <    V^{\pi^*_\mu}_\rho + 3\beta
 \leq V^*_\rho + 3\beta
 =    V^{\pi^*_\rho}_\rho + 3\beta
 <    V^{\pi^*_\rho}_\mu + 3\beta + 3\beta
 =    V^{\pi^*_\rho}_\mu + 6\beta.
\end{equation}

With $\mu^{\pi_T}$-probability at least $1 - \delta$
(\ref{itm:M'}), (\ref{itm:posterior-convergence}), and (\ref{itm:info-gain})
are true,
with $\mu^{\pi_T}$-probability at least $1 - \delta$
our sample $\rho$ happens to be in $\M'$ by (\ref{itm:M'}), and
with $w(\,\cdot \mid \ae_{<t_0})$-probability at least $1 - 4\delta$
the sample is in $\M''$ by \eqref{eq:D-bound}.
All of these events are true simultaneously with probability at least
$1 - (\delta + \delta + 4\delta) = 1 - 6\delta$.
Hence the bound \eqref{eq:V-beta-bound} transfers for $\pi_T$ such that
with $\mu^{\pi_T}$-probability $\geq 1 - 6\delta$ we have
\[
V^*_\mu(\ae_{<t}) - V^{\pi_T}_\mu(\ae_{<t}) < 6\beta.
\]
Therefore
$\mu^{\pi_T}[ V^*_\mu(\ae_{<t}) - V^{\pi_T}_\mu(\ae_{<t}) \geq 6\beta ] < 6\delta$
and with $\delta \to 0$ we get that
$V_\mu^*(\ae_{<t}) - V_\mu^{\pi_T}(\ae_{<t}) \to 0$ as $t \to \infty$ in probability.
The value function is bounded, thus it also converges in mean.
\end{proof}

The following example shows that the Thompson sampling policy
is not strongly asymptotically optimal.
However, we expect that strong asymptotic optimality can be achieved
with Thompson sampling by resampling at every time step
(with strong assumptions on the discount function).
However, for practical purposes resampling in every time step
is very inefficient.

\begin{example}[Thompson Sampling is not Strongly Asymptotically Optimal]
\label{ex:TS-not-sao}
\index{Thompson sampling}\index{optimality!asymptotic!strong}
Define $\A := \{ \alpha, \beta \}$, $\E := \{ 0, 1/2, 1 \}$, and
assume geometric discounting~(\autoref{ex:geometric-discounting})%
\index{discounting!geometric}.
Consider the following class of environments
$\M := \{ \nu_\infty, \nu_1, \nu_2, \ldots \}$
(transitions are labeled with action, reward):
\begin{center}
\begin{tabular}{cc}
\begin{tikzpicture}[scale=1.2]
\node[circle, draw, minimum height=2em] (s0) at (0, 0) {$s_0$};
\node[circle, draw, minimum height=2em] (s1) at (-1.5, -1) {$s_1$};
\node[circle, draw, minimum height=2em] (s2) at (0, -2) {$s_2$};

\draw[->] (s0) to[loop above] node[left] {$\beta, \frac{1}{2}$} (s0);
\draw[->] (s0) to[bend right] node[above] {$\alpha, 0$} (s1);
\draw[->] (s1) to[bend right] node[below] {$\beta, 0$} (s0);
\draw[->] (s1) to node[below left] {$\alpha, 0$} (s2);
\draw[->] (s2) to node[right] {$\ast, 0$} (s0);
\end{tikzpicture} &
\begin{tikzpicture}[scale=1.2]
\node[circle, draw, minimum height=2em] (s0) at (0, 0) {$s_0$};
\node[circle, draw, minimum height=2em] (s1) at (-1.5, -1) {$s_1$};
\node[circle, draw, minimum height=2em] (s2) at (0, -2) {$s_2$};
\node[circle, draw, minimum height=2em] (s3) at (2, 0) {$s_3$};
\node[circle, draw, minimum height=2em] (s4) at (2, -2) {$s_4$};

\draw[->] (s0) to[loop above] node[left] {$\beta, \frac{1}{2}$} (s0);
\draw[->] (s0) to[bend right] node[above left] {$t < k: \alpha, 0$} (s1);
\draw[->] (s1) to[bend right] node[below] {$\beta, 0$} (s0);
\draw[->] (s1) to node[below left] {$\alpha, 0$} (s2);
\draw[->] (s2) to node[right] {$\ast, 0$} (s0);
\draw[->] (s0) to[bend left] node[above] {$t \geq k: \alpha, 0$} (s3);
\draw[->] (s3) to node[left] {$\alpha, 0$} (s4);
\draw[->] (s3) to[bend left] node[below] {$\beta, 0$} (s0);
\draw[->] (s4) to[loop right] node[above] {$\alpha, 1$} (s4);
\draw[->] (s4) to node[below] {$\beta, 0$} (s2);
\end{tikzpicture} \\
$\nu_\infty$ & $\nu_k$
\end{tabular}
\end{center}
Environment $\nu_k$ works just like environment $\nu_\infty$
except that after time step $k$, the path to state $s_3$ gets unlocked.
The class $\M$ is a class of deterministic weakly communicating%
\index{weakly communicating} POMDPs\index{POMDP}
(but as a POMDP $\nu_k$ has more than 5 states).
The optimal policy in environment $\nu_\infty$ is to always take action $\beta$,
the optimal policy for environment $\nu_k$ is
to take action $\beta$ for $t < k$ and then
take action $\beta$ in state $s_1$ and action $\alpha$ otherwise.

Suppose the policy $\pi_T$ is acting in environment $\nu_\infty$.
Since it is asymptotically optimal in the class $\M$,
it has to take actions $\alpha\alpha$ from $s_0$ infinitely often:
for $t < k$ environment $\nu_k$ is indistinguishable from $\nu_\infty$,
so the posterior for $\nu_k$ is larger or equal to the prior.
Hence there is always a constant chance of sampling $\nu_k$
until taking actions $\alpha\alpha$,
at which point all environments $\nu_k$ for $k \leq t$ become falsified.

If the policy $\pi_T$ decides to explore and take the first action $\alpha$,
it will be in state $s_1$.
Let $\ae_{<t}$ denote the current history.
Then the $\nu_\infty$-optimal action is $\beta$ and
\[
  V^*_{\nu_\infty}(\ae_{<t})
= (1 - \gamma) \left(
    0 + \gamma\frac{1}{2} + \gamma^2 \frac{1}{2} + \ldots
  \right)
= \frac{\gamma}{2}.
\]
The next action taken by $\pi_T$ is $\alpha$ since
any optimal policy for any sampled environment
that takes action $\alpha$ once, takes that action again
(and we are following that policy for an $\eps_t$-effective horizon%
\index{effective horizon}).
Hence
\[
     V^{\pi_T}_{\nu_\infty}(\ae_{<t})
\leq (1 - \gamma) \left(
       0 + 0 + \gamma^2\frac{1}{2} + \gamma^3 \frac{1}{2} + \ldots
     \right)
=    \frac{\gamma^2}{2}.
\]
Therefore $V^*_{\nu_\infty} - V^{\pi_T}_{\nu_\infty} \geq (\gamma - \gamma^2)/2 > 0$.
This happens infinitely often with probability one and
thus we cannot get almost sure convergence.
\end{example}

If the Bayesian mixture $\xi$ is inside the class $\M$
(as it is the case for the class $\Mlscccs$),
then we can assign $\xi$ a prior probability
that is arbitrarily close to $1$.
Since the posterior of $\xi$ is the same as the prior,
Thompson sampling will act according to the Bayes optimal policy
most of the time.
This means the Bayes-value of Thompson sampling can be very good;
formally,
$V^*_\xi(\epsilon) - V^{\pi_T}_\xi(\epsilon)
= \overline\Upsilon_\xi - \Upsilon_\xi(\pi_T)$ can be made arbitrarily small.

\index{multi-armed bandit}%
In contrast, the Bayes-value of Thompson sampling can also be very bad:
Suppose you have a class of $(n+1)$-armed bandits indexed $1, \ldots, n$
where bandit $i$ gives reward $1 - \eps$ on arm $1$, reward $1$ on arm $i + 1$,
and reward $0$ on all other arms.
For geometric discounting\index{discounting!geometric}
and $\eps < (1 - \gamma)/(2 - \gamma)$,
it is Bayes optimal to pull arm $1$ while Thompson sampling will explore on average
$n/2$ arms until it finds the optimal arm.
The Bayes-value of Thompson sampling is $1/(n-\gamma_{n-1})$
in contract to $(1 - \eps)$ achieved by Bayes.
For a horizon of $n$,
the Bayes optimal policy suffers a regret of $\eps n$ and
Thompson sampling a regret of $n/2$,
which is much larger for small $\eps$.

\subsection{Almost Sure in Cesàro Average vs.\ in Mean}
\label{ssec:wao-and-aoim}
\index{optimality!asymptotic!weak}\index{optimality!asymptotic!in mean}

It might appear that convergence in mean is more natural than
the convergence of Cesàro averages of weak asymptotic optimality.
However, both notions are not so fundamentally different
because they both allow an infinite number of bad mistakes
(actions that lead to $V^*_\mu - V^\pi_\mu$ being large).
Asymptotic optimality in mean allows bad mistakes
as long as their probability converges to zero;
weak asymptotic optimality allows bad mistakes
as long as the total time spent on bad mistakes grows sublinearly.
Note that according to \autoref{ex:exploration}
making bad mistakes infinitely often is necessary for asymptotic optimality.

\autoref{thm:BayesExp-wao} shows that weak asymptotic optimality is possible
in any countable class of stochastic environments.
However, this requires the additional condition that
the effective horizon grows sublinearly, $H_t(\varepsilon_t) \in o(t)$,
while \autoref{thm:Thompson-sampling-aoim}
does not require any condition on the discount function.

Generally, weak asymptotic optimality and asymptotic optimality in mean
are incomparable
because the notions of convergence are incomparable for (bounded)
random variables.
First,
for deterministic sequences
(i.e.\ deterministic policies in deterministic environments),
convergence in mean is equivalent to (regular) convergence,
which is impossible by \autoref{thm:sao-impossible}.
Second,
convergence in probability (and hence convergence in mean for bounded random variables)
does not imply almost sure convergence of Cesàro averages%
~\citep[Sec.~14.18]{Stoyanov:2013counterexamples}.
We leave open the question whether
the policy $\pi_T$ is weakly asymptotically optimal.

\section{Regret}
\label{sec:regret}

Regret is how many expected rewards the agent forfeits by
not following the best informed policy.

\begin{definition}[Regret]
\label{def:regret}\index{regret|textbf}
The \emph{regret} of a policy $\pi$ in environment $\mu$ is
\[
   R_m(\pi, \mu)
:= \sup_{\pi'} \EE_\mu^{\pi'} \left[ \sum_{t=1}^m r_t \right]
   - \EE_\mu^\pi \left[ \sum_{t=1}^m r_t \right].
\]
\end{definition}

Note that regret is undiscounted and always nonnegative.
Moreover, the space of possible different policies
for the first $m$ actions is finite and
we assumed the set of actions $\A$ and the set of percepts $\E$ to be finite%
~(\assref{ass:aixi}{ass:finite-actions-and-percepts}),
so the supremum is always attained by some policy
(not necessarily the $\mu$-optimal policy $\pi^*_\mu$
because that policy uses discounting).

Different problem classes have different regret rates,
depending on the structure and the difficulty of the problem class.
Multi-armed bandits provide a (problem-independent)
worst-case regret bound of $\Omega(\sqrt{km})$
where $k$ is the number of arms~\citep{BCB:2012bandits}.
In MDPs the lower bound is $\Omega(\sqrt{SAdm})$
where $S$ is the number of states, $A$ the number of actions, and
$d$ the diameter\index{diameter} of the MDP~\citep{AJO:2009MDPs}.
For a countable class of environments given by
state representation functions that map histories to MDP states,
a regret of $\tilde O(m^{2/3})$ is achievable
assuming the resulting MDP is weakly communicating\index{weakly communicating}%
~\citep{NMRO:2013}.

A problem class is considered \emph{learnable}\index{learnable}
if there is an algorithm that has a sublinear regret guarantee.
The following example shows that
the general reinforcement learning problem is not learnable
because the agent can get caught in a trap\index{trap} and be unable to recover.

\begin{example}[{Linear Regret; \citealp[Sec.~5.3.2]{Hutter:2005}}]
\label{ex:heaven-and-hell}\index{regret}\index{heaven}\index{hell}
Consider the following two environments $\mu_1$ and $\mu_2$.
In environment $\mu_1$ action $\alpha$ leads to hell (reward $0$ forever)
and action $\beta$ leads to heaven (reward $1$ forever).
Environment $\mu_2$ behaves just the same, except that both actions are swapped.
\begin{center}
\begin{tabular}{cc}
\begin{tikzpicture}[scale=2,node distance=20mm]
\node[agent] (start) {};
\node[env, above left of=start] (hell) {hell};
\node[env, above right of=start] (heaven) {heaven};
\path[transition] (hell) to[loop above] node[above] {reward $ = 0$} (hell);
\path[transition] (heaven) to[loop above] node[above] {reward $ = 1$} (heaven);
\path[transition] (start) to node[below right] {$\beta$} (heaven);
\path[transition] (start) to node[below left] {$\alpha$} (hell);
\end{tikzpicture}
&
\begin{tikzpicture}[scale=2,node distance=20mm]
\node[agent] (start) {};
\node[env, above left of=start] (hell) {hell};
\node[env, above right of=start] (heaven) {heaven};
\path[transition] (hell) to[loop above] node[above] {reward $ = 0$} (hell);
\path[transition] (heaven) to[loop above] node[above] {reward $ = 1$} (heaven);
\path[transition] (start) to node[below left] {$\beta$} (hell);
\path[transition] (start) to node[below right] {$\alpha$} (heaven);
\end{tikzpicture}
\\
$\mu_1$ & $\mu_2$
\end{tabular}
\end{center}
The policy $\alpha$ that takes action $\alpha$ in the first time step
performs well in $\mu_2$ but performs poorly in $\mu_1$.
Likewise, the policy $\beta$ that takes action $\beta$ in the first time step
performs well in $\mu_1$ but performs poorly in $\mu_2$.
Regardless of which policy we adopt,
our regret is always linear in one of the environments $\mu_1$ or $\mu_2$:
\begin{align*}
R_m(\alpha, \mu_1) &= m &
R_m(\alpha, \mu_2) &= 0 \\
R_m(\beta, \mu_1) &= 0 &
R_m(\beta, \mu_2) &= m
\qedhere
\end{align*}
\end{example}

To achieve sublinear regret we need to ensure that
the agent can recover from mistakes.
Formally, we make the following assumption.

\begin{definition}[Recoverability]
\label{def:recoverability}\index{recoverability|textbf}
An environment $\mu$ satisfies the \emph{recoverability assumption} iff
\[
\sup_\pi \left|
  \EE^{\pi^*_\mu}_\mu[ V^*_\mu(\ae_{<t}) ]
  - \EE^\pi_\mu[ V^*_\mu(\ae_{<t}) ]
\right|
\to 0 \text{ as $t \to \infty$}.
\]
\end{definition}

Recoverability compares following the worst policy $\pi$ for $t - 1$ time steps
and then switching to the optimal policy $\pi^*_\nu$ to
having followed $\pi^*_\nu$ from the beginning.
The recoverability assumption states that
switching to the optimal policy at any time step
enables the recovery of most of the value:
it has to become less costly to recover from mistakes as time progresses.
This should be regarded as an effect of the discount function:
if the (effective) horizon\index{effective horizon} grows,
recovery becomes easier because
the optimal policy has more time to perform a recovery.
Moreover, recoverability is on the optimal policy,
in contrast to the notion of ergodicity in MDPs
which demands returning to a starting state regardless of the policy.

\begin{remark}[Weakly Communicating POMDPs are Recoverable]
\label{rem:recoverable-POMDPs}\index{recoverability}\index{POMDP}%
\index{weakly communicating}\index{effective horizon}
If the effective horizon is growing,
$H_t(\eps) \to \infty$ as $t \to \infty$,
then any weakly communicating\index{weakly communicating}
finite state POMDP\index{POMDP}
satisfies the recoverability assumption.
\end{remark}

\subsection{Sublinear Regret in Recoverable Environments}
\label{ssec:sublinear-regret-in-recoverable-environments}

This subsection is dedicated to the following theorem
that connects asymptotic optimality in mean to sublinear regret.

\begin{theorem}[Sublinear Regret in Recoverable Environments]
\label{thm:aoim-implies-sublinear-regret}\index{recoverability}\index{regret}
If the discount function $\gamma$ satisfies \autoref{ass:gamma},
the environment $\mu$ satisfies the recoverability assumption, and
$\pi$ is asymptotically optimal in mean in the class $\{ \mu \}$,
then $R_m(\pi, \mu) \in o(m)$.
\end{theorem}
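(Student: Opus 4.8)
The plan is to rewrite both the optimal undiscounted return $\sup_{\pi'}\EE^{\pi'}_\mu[\sum_{t=1}^m r_t]$ and the return $\EE^\pi_\mu[\sum_{t=1}^m r_t]$ of our policy as sums, over consecutive time windows, of (essentially) the discounted value function evaluated at the window boundaries, and then to close the gap window by window using recoverability and asymptotic optimality in mean.

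Concretely, I would fix $\eps>0$ and cut $\{1,\dots,m\}$ into consecutive blocks $W_i=\{t_i,\dots,t_{i+1}-1\}$ with $t_1=1$ and length $\ell_i=H_{t_i}(\eps)$, so that $\Gamma_{t_{i+1}}\leq\eps\,\Gamma_{t_i}$. The first step is a discounting lemma: using \autoref{ass:gamma} (which forces the discount weights to be approximately constant across an effective horizon, so that up to an $\eps$-fraction the discounted future from $t_i$ is carried by $W_i$), for any policy $\sigma$ and every block with $t_i\leq m$,
\[
\EE^\sigma_\mu\Big[\sum_{t\in W_i} r_t\Big]\;\leq\;\ell_i\big(\EE^\sigma_\mu[V^*_\mu(\ae_{<t_i})]+\eta_{t_i}\big),
\qquad
\EE^\pi_\mu\Big[\sum_{t\in W_i} r_t\Big]\;\geq\;\ell_i\big(\EE^\pi_\mu[V^\pi_\mu(\ae_{<t_i})]-\eta_{t_i}\big),
\]
where $\eta_{t_i}\to 0$ as $t_i\to\infty$ absorbs the discount distortion and the $O(\eps)$ tail; the upper bound comes from $\EE^\sigma_\mu[\sum_{t\in W_i}\gamma_t r_t\mid\ae_{<t_i}]\leq\Gamma_{t_i}V^*_\mu(\ae_{<t_i})$ and the lower bound from the Bellman equation for $\pi$ together with $\Gamma_{t_{i+1}}\leq\eps\Gamma_{t_i}$. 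The final partial block contributes at most $\ell_N\leq H_{t_N}(\eps)\in o(m)$, again by \autoref{ass:gamma}.

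Summing the first inequality along a policy attaining the supremum in $R_m$, summing the second along $\pi$, and subtracting, I would first use the recoverability assumption (\autoref{def:recoverability}) to replace $\EE^{\pi'}_\mu[V^*_\mu(\ae_{<t_i})]$ under the optimizer by $\EE^{\pi^*_\mu}_\mu[V^*_\mu(\ae_{<t_i})]$ up to an error vanishing uniformly as $t_i\to\infty$, obtaining
\[
R_m(\pi,\mu)\;\leq\;o(m)+\sum_{i:\,t_i\leq m}\ell_i\Big(\EE^{\pi^*_\mu}_\mu[V^*_\mu(\ae_{<t_i})]-\EE^\pi_\mu[V^\pi_\mu(\ae_{<t_i})]+2\eta_{t_i}\Big).
\]
Then I would split $\EE^{\pi^*_\mu}_\mu[V^*_\mu(\ae_{<t_i})]-\EE^\pi_\mu[V^\pi_\mu(\ae_{<t_i})]$ into the recoverability error (between the $\pi^*_\mu$- and $\pi$-generated histories, both evaluated at $V^*_\mu$) plus $\EE^\pi_\mu[V^*_\mu(\ae_{<t_i})-V^\pi_\mu(\ae_{<t_i})]$, the latter tending to $0$ by asymptotic optimality in mean. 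Every summand is therefore $\ell_i\cdot o(1)$; since $\sum_i\ell_i=m$, a standard Cesàro-type argument gives $\sum_i\ell_i\,o(1)=o(m)$, so $R_m(\pi,\mu)\leq O(\eps\,m)+o(m)$, and letting $\eps\to 0$ finishes the proof.

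The hard part will be the discounting lemma: turning the inherently discounted objects ($V^*_\mu$, $V^\pi_\mu$, and the two hypotheses) into a statement about the plain reward sum $\sum_{t=1}^m r_t$. This is exactly where \autoref{ass:gamma} must be invoked — one has to show that over a block chosen as an effective horizon the discount profile is flat enough that $\Gamma_{t_i}V^*_\mu(\ae_{<t_i})$ is, up to vanishing relative error, a constant multiple of $\EE_\mu[\sum_{t\in W_i}r_t\mid\ae_{<t_i}]$, and that these relative errors together with the endpoint and partial-window terms aggregate to $o(m)$ rather than merely $o(1)$ per window. Once that is in place, the two ingredients that remain (recoverability to move the value across histories, asymptotic optimality in mean to trade $V^*_\mu$ for $V^\pi_\mu$) are just bookkeeping on sequences that vanish in the limit.
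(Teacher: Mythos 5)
Your overall architecture (recoverability to move the value across the comparator's and $\pi$'s histories, asymptotic optimality in mean to trade $V^*_\mu$ for $V^\pi_\mu$, then a conversion from discounted values to the plain reward sum) is the same as the paper's, but the step you yourself flag as the hard part — the per-block ``discounting lemma'' — is a genuine gap, and it cannot be established in the form you need. The premise that \autoref{ass:gamma} ``forces the discount weights to be approximately constant across an effective horizon'' is false: by the very definition of $H_t(\eps)$, the discount mass drops by a factor $\eps$ across the block, so for geometric discounting the weights $\gamma_t$ within $W_i$ span a dynamic range of order $1/\eps$ — they are maximally non-flat precisely in the regime $\eps\to 0$ that you need at the end. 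Concretely, your claimed bound $\EE^\sigma_\mu\big[\sum_{t\in W_i} r_t\big]\le \ell_i\big(\EE^\sigma_\mu[V^*_\mu(\ae_{<t_i})]+\eta_{t_i}\big)$ fails with non-vanishing error: take geometric discounting with $\gamma=1/2$ and an environment in which, for every policy, rewards are $0$ for the first $j$ steps after $t_i$ and $1$ thereafter; then $V^*_\mu(\ae_{<t_i})=\Gamma_{t_i+j}/\Gamma_{t_i}=2^{-j}$ while $\sum_{t\in W_i}r_t=\ell_i-j$, so with $\ell_i=10$, $j=5$ the bound forces $\eta_{t_i}\ge 0.46$ at every block. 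The most one can extract from $\EE^\sigma_\mu\big[\sum_{t\in W_i}\gamma_t r_t\mid\ae_{<t_i}\big]\le\Gamma_{t_i}V^*_\mu(\ae_{<t_i})$ is $\EE^\sigma_\mu\big[\sum_{t\in W_i}r_t\mid\ae_{<t_i}\big]\le\frac{\Gamma_{t_i}}{\min_{t\in W_i}\gamma_t}V^*_\mu(\ae_{<t_i})$, and for geometric discounting $\Gamma_{t_i}/\min_{t\in W_i}\gamma_t$ is of order $\ell_i/\eps$, not $\ell_i$; the per-block slack is then linear in $\ell_i$ with a constant that blows up as $\eps\to 0$, so the errors aggregate to $\Theta(m)$ rather than $o(m)$, and letting $\eps\to 0$ makes things worse, not better.

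What is missing is a mechanism that converts discounted tail sums into an undiscounted sum without ever pretending $\gamma$ is flat on a block. The paper's \autoref{lem:value-and-regret} does this by writing $\sum_{t=t_0}^m d_t=\sum_{t=t_0}^m\frac{b_t}{\Gamma_t}\sum_{k=t}^m\gamma_k d_k$ with weights $b_{t_0}=\Gamma_{t_0}/\gamma_{t_0}$ and $b_t=\Gamma_t/\gamma_t-\Gamma_t/\gamma_{t-1}$, i.e., it uses the discounted tail at \emph{every} time step, not only at block starts, and then bounds $\sum_t b_t\le m-t_0+1+\Gamma_{m+1}/\gamma_m\le m-t_0+1+H_m(\eps)/(1-\eps)$ using monotonicity of $\gamma$ and \eqref{eq:t-independent-bound}. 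Applied to $d_k^{(m)}:=\EE^{\pi_m}_\mu[r_k]-\EE^\pi_\mu[r_k]$ (where $\pi_m$ attains the supremum in the regret), together with the uniform-in-$m$ statement $\sup_m\frac{1}{\Gamma_t}\sum_{k\ge t}\gamma_k d_k^{(m)}<\eps$ for $t\ge t_0$ — which is exactly your recoverability-plus-asymptotic-optimality step, and that part of your plan is sound — this yields $R_m(\pi,\mu)\le t_0+\eps(m-t_0+1)+\frac{1+\eps}{1-\eps}H_m(\eps)\in o(m)$. As written, your block decomposition does not go through; you would need to prove such a reweighting identity (or an equivalent substitute) to bridge discounted values and undiscounted regret.
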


\begin{assumption}[Discount Function]
\label{ass:gamma}\index{discount!function}
Let the discount function $\gamma$ be such that
\begin{enumerate}[(a)]
\item \label{itm:gamma(a)}
	$\gamma_t > 0$ for all $t$,
\item \label{itm:gamma(b)}
	$\gamma_t$ is monotone decreasing in $t$, and
\item \label{itm:gamma(c)}
	$H_t(\eps) \in o(t)$ for all $\eps > 0$.
\end{enumerate}
\end{assumption}

This assumption demands that the discount function is somewhat well-behaved:
the function has no oscillations, does not become $0$, and
the horizon is not growing too fast.
It is satisfied by geometric discounting~(\autoref{ex:geometric-discounting})%
\index{discounting!geometric}:
\begin{inparaenum}
\item[(\ref{itm:gamma(a)})] $\gamma^t > 0$,
\item[(\ref{itm:gamma(b)})] $\gamma$ monotone decreasing, and
\item[(\ref{itm:gamma(c)})] $H_t(\eps) = \lceil \log_\gamma \eps \rceil \in o(t)$.
\end{inparaenum}

The problem with geometric discounting\index{discounting!geometric} is that
it makes the recoverability assumption very strong:
since the horizon is not growing, the environment has to enable
\emph{faster recovery} as time progresses;
in this case weakly communicating\index{weakly communicating} POMDPs\index{POMDP}
are \emph{not} recoverable.
A choice with $H_t(\eps) \to \infty$ that satisfies \autoref{ass:gamma} is
subgeometric discounting\index{discounting!subgeometric}
$\gamma_t := e^{-\sqrt{t}} / \sqrt{t}$~(see \autoref{tab:discounting}).

If the items in \autoref{ass:gamma} are violated,
\autoref{thm:aoim-implies-sublinear-regret} can fail:
\begin{itemize}
\item If $\gamma_t = 0$ for some time steps $t$,
our policy does not care about those time steps and might take actions
that have large regret.
\item Similarly if $\gamma$ oscillates between high values and very low values:
our policy might take high-regret actions in
time steps with comparatively lower $\gamma$-weight.
\item If the horizon grows linearly,
infinitely often
our policy might spend some constant fraction of the current effective horizon exploring,
which incurs a cost that is a constant fraction of the total regret so far.
\end{itemize}

To prove \autoref{thm:aoim-implies-sublinear-regret}
we require the following technical lemma.

\begin{lemma}[Value and Regret]
\label{lem:value-and-regret}\index{regret}
Let $\eps > 0$ and
assume the discount function $\gamma$ satisfies \autoref{ass:gamma}.
Let $(d_t)_{t \in \mathbb{N}}$ be a sequence of numbers with
$|d_t| \leq 1$ for all $t$.
If there is a time step $t_0$ with
\begin{equation}\label{eq:ao}
\frac{1}{\Gamma_t} \sum_{k=t}^\infty \gamma_k d_k < \eps
\quad \forall t \geq t_0
\end{equation}
then
\[
     \sum_{t=1}^m d_t
\leq t_0 + \varepsilon(m - t_0 + 1)
     + \frac{1 + \varepsilon}{1 - \varepsilon} H_m(\varepsilon).
\]
\end{lemma}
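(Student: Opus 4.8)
The plan is to convert the per-step bound on the discounted tail in \eqref{eq:ao} into a bound on the undiscounted partial sum via an Abel-summation (``dual certificate'') argument. Without loss of generality I would assume $\eps \in (0,1)$ and $m \ge t_0$ (if $m < t_0$ the bound is immediate from $|d_t|\le 1$), and split $\sum_{t=1}^m d_t = \sum_{t=1}^{t_0-1} d_t + \sum_{t=t_0}^m d_t$, estimating the initial segment crudely by $\sum_{t=1}^{t_0-1} d_t \le t_0 - 1$.

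\textbf{The certificate.} For the tail I would first rewrite \eqref{eq:ao} as $\eps\Gamma_t - \sum_{k=t}^\infty \gamma_k d_k > 0$ for all $t \ge t_0$ (legitimate since $\Gamma_t>0$ by \autoref{ass:gamma}, part~\ref{itm:gamma(a)}). Introduce nonnegative weights $\lambda_{t_0} := 1/\gamma_{t_0}$ and $\lambda_t := 1/\gamma_t - 1/\gamma_{t-1}$ for $t_0 < t \le m$; nonnegativity is precisely monotonicity of $\gamma$ (\autoref{ass:gamma}, part~\ref{itm:gamma(b)}). Setting $L_k := \sum_{t_0 \le t \le \min(k,m)} \lambda_t$, the sum telescopes to $L_k = 1/\gamma_k$ for $t_0\le k\le m$ and $L_k = 1/\gamma_m$ for $k\ge m$. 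Swapping the order of summation (justified by absolute convergence) then gives the identity
\[
\sum_{t=t_0}^m \lambda_t \sum_{k=t}^\infty \gamma_k d_k
= \sum_{k=t_0}^\infty \gamma_k L_k d_k
= \sum_{k=t_0}^m d_k + \frac{1}{\gamma_m}\sum_{k>m}\gamma_k d_k .
\]
Because each $\lambda_t\ge 0$ and each bracket $\eps\Gamma_t - \sum_{k\ge t}\gamma_k d_k$ is positive, the left-hand side is $< \eps\sum_{t=t_0}^m \lambda_t\Gamma_t$; combined with $d_k\ge -1$ in the residual tail this yields
\[
\sum_{k=t_0}^m d_k < \eps\sum_{t=t_0}^m \lambda_t\Gamma_t + \frac{\Gamma_{m+1}}{\gamma_m}.
\]
A second Abel summation, using $\Gamma_t - \Gamma_{t+1} = \gamma_t$, evaluates $\sum_{t=t_0}^m \lambda_t\Gamma_t = (m-t_0) + \Gamma_m/\gamma_m$.

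\textbf{Absorbing the boundary term.} It remains to bound the residual $\eps\,\Gamma_m/\gamma_m + \Gamma_{m+1}/\gamma_m$ by the effective-horizon term. Writing $h := H_m(\eps)$, the defining inequality $\Gamma_{m+h}\le \eps\Gamma_m$ together with $\Gamma_m = \sum_{k=m}^{m+h-1}\gamma_k + \Gamma_{m+h} \le h\gamma_m + \eps\Gamma_m$ (using $\gamma_k\le\gamma_m$ for $k\ge m$, i.e.\ monotonicity again) gives $\Gamma_m/\gamma_m \le h/(1-\eps)$, hence also $\Gamma_{m+1}/\gamma_m = \Gamma_m/\gamma_m - 1 \le h/(1-\eps) - 1$. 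Assembling the pieces,
\[
\sum_{k=t_0}^m d_k < \eps(m-t_0) + \frac{\eps h}{1-\eps} + \frac{h}{1-\eps} - 1 = \eps(m-t_0) + \frac{1+\eps}{1-\eps}H_m(\eps) - 1,
\]
and adding $\le t_0 - 1$ from the initial segment and folding the leftover $-2$ into $\eps(m-t_0+1) \ge \eps(m-t_0)$ yields the claimed estimate.

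\textbf{Main obstacle.} Everything here reduces to one-line estimates once the weights $\lambda_t$ are in hand, so the only genuinely delicate point I anticipate is getting the two telescoping/Abel computations exactly right, including the boundary contributions from $t=t_0$ and from $k>m$ (the latter being the source of the $H_m(\eps)$ term). Note that part~\ref{itm:gamma(c)} of \autoref{ass:gamma} plays no role in this lemma; it is needed only afterwards, in \autoref{thm:aoim-implies-sublinear-regret}, to turn $H_m(\eps)$ into $o(m)$.
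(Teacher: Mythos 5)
Your proof is correct and is essentially the paper's own argument: your weights $\lambda_t$ are exactly the paper's $b_t/\Gamma_t$ (with $b_{t_0}=\Gamma_{t_0}/\gamma_{t_0}$ and $b_t=\Gamma_t/\gamma_t-\Gamma_t/\gamma_{t-1}$), and both proofs rest on the same telescoping identities, the same use of $d_k\geq-1$ to control the discounted tail beyond $m$, and the same monotonicity estimate $\Gamma_m/\gamma_m\leq H_m(\eps)/(1-\eps)$. The remaining differences are cosmetic (you carry the infinite tails and split off the boundary term explicitly, and you track the additive constants slightly more tightly), so nothing further is needed.
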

\begin{proof}
This proof essentially follows
the proof of \citet[Thm.~17]{Hutter:2006discounting}.

By \assref{ass:gamma}{itm:gamma(a)}
we have $\gamma_t > 0$ for all $t$
and hence $\Gamma_t > 0$ for all $t$.
By \assref{ass:gamma}{itm:gamma(b)}
have that $\gamma$ is monotone decreasing,
so we get for all $n \in \mathbb{N}$
\[
     \Gamma_t
=    \sum_{k=t}^\infty \gamma_k
\leq \sum_{k=t}^{\mathclap{t+n-1}} \gamma_t
     + \sum_{\mathclap{k=t+n}}^\infty \gamma_k
=    n\gamma_t + \Gamma_{t+n}.
\]
And with $n := H_t(\eps)$ this yields
\begin{equation}\label{eq:t-independent-bound}
     \frac{\gamma_t H_t(\eps)}{\Gamma_t}
\geq 1 - \frac{\Gamma_{t+H_t(\eps)}}{\Gamma_t}
\geq 1 - \eps
>    0.
\end{equation}
In particular, this bound holds for all $t$ and $\eps > 0$.

Next, we define a series of nonnegative weights $(b_t)_{t \geq 1}$ such that
\[
  \sum_{t=t_0}^m d_k
= \sum_{t=t_0}^m \frac{b_t}{\Gamma_t} \sum_{k=t}^m \gamma_k d_k.
\]
This yields the constraints
\[
\sum_{k=t_0}^t \frac{b_k}{\Gamma_k} \gamma_t = 1 \quad \forall t \geq t_0.
\]
The solution to these constraints is
\begin{equation}\label{eq:b_t-solution}
b_{t_0} = \frac{\Gamma_{t_0}}{\gamma_{t_0}},
\text{ and }
b_t = \frac{\Gamma_t}{\gamma_t} - \frac{\Gamma_t}{\gamma_{t-1}}
\text{ for $t > t_0$}.
\end{equation}
Thus we get
\begingroup
\allowdisplaybreaks
\begin{align*}
      \sum_{t=t_0}^m b_t
&=    \frac{\Gamma_{t_0}}{\gamma_{t_0}}
      + \sum_{t=t_0+1}^m \left(
          \frac{\Gamma_t}{\gamma_t} - \frac{\Gamma_t}{\gamma_{t-1}}
        \right) \\
&=    \frac{\Gamma_{m+1}}{\gamma_m}
      + \sum_{t=t_0}^m \left(
          \frac{\Gamma_t}{\gamma_t} - \frac{\Gamma_{t+1}}{\gamma_t}
        \right) \\
&=    \frac{\Gamma_{m+1}}{\gamma_m} + m - t_0 + 1 \\
&\leq \frac{H_m(\varepsilon)}{1 - \varepsilon} + m - t_0 + 1
\end{align*}
\endgroup
for all $\varepsilon > 0$ according to \eqref{eq:t-independent-bound}.

Finally,
\begin{align*}
      \sum_{t=1}^m d_t
&\leq \sum_{t=1}^{t_0} d_t
      + \sum_{t=t_0}^m \frac{b_t}{\Gamma_t} \sum_{k=t}^m \gamma_k d_k \\
&\leq t_0
      + \sum_{t=t_0}^m \frac{b_t}{\Gamma_t} \sum_{k=t}^\infty \gamma_k d_k
      - \sum_{t=t_0}^m \frac{b_t}{\Gamma_t} \sum_{k=m+1}^\infty \gamma_k d_k \\
\intertext{
and using the assumption \eqref{eq:ao} and $d_t \geq -1$,
}
&<    t_0
      + \sum_{t=t_0}^m b_t \varepsilon
      + \sum_{t=t_0}^m \frac{b_t \Gamma_{m+1}}{\Gamma_t} \\
&\leq t_0 + \frac{\varepsilon H_m(\varepsilon)}{1 - \varepsilon}
      + \varepsilon (m - t_0 + 1)
      + \sum_{t=t_0}^m \frac{b_t \Gamma_{m+1}}{\Gamma_t}
\end{align*}
For the latter term we substitute \eqref{eq:b_t-solution} to get
\[
     \sum_{t=t_0}^m \frac{b_t \Gamma_{m+1}}{\Gamma_t}
=    \frac{\Gamma_{m+1}}{\gamma_{t_0}} + \sum_{t=t_0 + 1}^m
     \left( \frac{\Gamma_{m+1}}{\gamma_t} - \frac{\Gamma_{m+1}}{\gamma_{t-1}} \right) \\
=    \frac{\Gamma_{m+1}}{\gamma_m}
\leq \frac{H_m(\eps)}{1 - \eps}
\]
with \eqref{eq:t-independent-bound}.
\end{proof}

\begin{proof}[Proof of \autoref{thm:aoim-implies-sublinear-regret}]
Let $(\pi_m)_{m \in \mathbb{N}}$ denote any sequence of policies,
such as a sequence of policies that
attain the supremum in the definition of regret.
We want to show that
\[
    \EE_\mu^{\pi_m} \left[ \sum_{t=1}^m r_t \right]
    - \EE_\mu^\pi \left[ \sum_{t=1}^m r_t \right]
\in o(m).
\]
For
\begin{equation}\label{eq:def-d_k}
d_k^{(m)} := \EE_\mu^{\pi_m} [ r_k ] - \EE_\mu^\pi [ r_k ]
\end{equation}
we have $-1 \leq d_k^{(m)} \leq 1$
since we assumed rewards to be bounded between $0$ and $1$.
Because the environment $\mu$ satisfies the recoverability assumption
we have
\begin{align*}
&\left|
    \EE^{\pi^*_\mu}_\mu[ V^*_\mu(\ae_{<t}) ] - \EE^\pi_\mu[ V^*_\mu(\ae_{<t}) ]
 \right|
\to 0
\text{ as $t \to \infty$}, \text{ and} \\
&\sup_m \left| \EE^{\pi^*_\mu}_\mu[ V^*_\mu(\ae_{<t}) ]
               - \EE^{\pi_m}_\mu[ V^*_\mu(\ae_{<t}) ]
        \right|
\to 0
\text{ as $t \to \infty$},
\end{align*}
so we conclude that
\[
    \sup_m \left|
      \EE^\pi_\mu[ V^*_\mu(\ae_{<t}) ] - \EE^{\pi_m}_\mu[ V^*_\mu(\ae_{<t}) ]
    \right|
\to 0
\]
by the triangle inequality and thus
\begin{equation}\label{eq:recover}
    \sup_m \EE^{\pi_m}_\mu[ V^*_\mu(\ae_{<t}) ]
    - \EE^\pi_\mu[ V^*_\mu(\ae_{<t}) ]
\to 0
\text{ as $t \to \infty$}.
\end{equation}
By assumption the policy $\pi$ is asymptotically optimal in mean,
so we have
\[
    \EE^\pi_\mu[ V^*_\mu(\ae_{<t}) ] - \EE^\pi_\mu[ V^\pi_\mu(\ae_{<t}) ]
\to 0
\text{ as $t \to \infty$},
\]
and with \eqref{eq:recover} this combines to
\[
    \sup_m \EE^{\pi_m}_\mu[ V^*_\mu(\ae_{<t}) ]
    - \EE^\pi_\mu[ V^\pi_\mu(\ae_{<t}) ]
\to 0 \text{ as $t \to \infty$}.
\]
From $V^*_\mu(\ae_{<t}) \geq V^{\pi_m}_\mu(\ae_{<t})$ we get
\begin{equation}\label{eq:convergence-of-values}
\limsup_{t \to \infty} \left(
    \sup_m \EE^{\pi_m}_\mu[ V^{\pi_m}_\mu(\ae_{<t}) ]
    - \EE^\pi_\mu[ V^\pi_\mu(\ae_{<t}) ]
    \right)
\leq 0.
\end{equation}
For $\pi' \in \{ \pi, \pi_1, \pi_2, \ldots \}$ we have
\[
  \EE^{\pi'}_\mu[ V^{\pi'}_\mu(\ae_{<t}) ]
= \EE^{\pi'}_\mu \left[ \frac{1}{\Gamma_t}
    \EE^{\pi'}_\mu \left[ \sum_{k=t}^\infty \gamma_k r_k \,\middle|\, \ae_{<t} \right]
  \right]
= \EE^{\pi'}_\mu \left[ \frac{1}{\Gamma_t} \sum_{k=t}^\infty \gamma_k r_k \right]
= \frac{1}{\Gamma_t} \sum_{k=t}^\infty \gamma_k \EE^{\pi'}_\mu [ r_k ],
\]
so from \eqref{eq:def-d_k} and \eqref{eq:convergence-of-values} we get
\[
\limsup_{t \to \infty} \sup_m
\frac{1}{\Gamma_t} \sum_{k=t}^\infty \gamma_k d_k^{(m)} \leq 0.
\]
Let $\eps > 0$.
We choose $t_0$ independent of $m$ and large enough such that
we get $\sup_m \sum_{k=t}^\infty \gamma_k d_k^{(m)} / \Gamma_t < \eps$
for all $t \geq t_0$.
Now we let $m \in \mathbb{N}$ be given and
apply \autoref{lem:value-and-regret} to get
\[
     \frac{R_m(\pi, \mu)}{m}
=    \frac{\sum_{k=1}^m d_k^{(m)}}{m}
\leq \frac{t_0 + \varepsilon(m - t_0 + 1) + \frac{1 + \varepsilon}{1 - \varepsilon} H_m(\varepsilon)}{m}.
\]
Since $H_t(\eps) \in o(t)$ according to \assref{ass:gamma}{itm:gamma(c)}
we get $\limsup_{m \to \infty} R_m(\pi, \mu) / m \leq 0$.
\end{proof}

\begin{example}[The Converse of \autoref{thm:aoim-implies-sublinear-regret} is False]
\label{ex:sublinear-regret-does-not-imply-ao}
\index{regret}\index{optimality!asymptotic}
Let $\mu$ be a two-armed Bernoulli bandit with means $0$ and $1$
and suppose we are using geometric discounting\index{discounting!geometric}
with discount factor $\gamma \in [0, 1)$.
This environment is recoverable.
If our policy $\pi$ pulls the suboptimal arm
exactly on time steps $1, 2, 4, 8, 16, \ldots$,
regret will be logarithmic.
However, on time steps $t = 2^n$ for $n \in \mathbb{N}$
the value difference $V^*_\mu - V^\pi_\mu$
is deterministically at least $1 - \gamma > 0$.
\end{example}

Note that \autoref{ex:sublinear-regret-does-not-imply-ao}
does not rule out weak asymptotic optimality.

\subsection{Regret of the Optimal Policy and Thompson sampling}
\label{ssec:regret-implications}

We get the following immediate consequence.

\begin{corollary}[Sublinear Regret for the Optimal Discounted Policy]
\label{cor:discounted-and-undiscounted-regret}
\index{regret}\index{policy!optimal}
If the discount function $\gamma$ satisfies \autoref{ass:gamma} and
the environment $\mu$ satisfies the recoverability assumption,
then $R_m(\pi^*_\mu, \mu) \in o(m)$.
\end{corollary}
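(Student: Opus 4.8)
The plan is to obtain this corollary as a direct application of Theorem~\ref{thm:aoim-implies-sublinear-regret}, since all that is needed is to verify that the optimal discounted policy $\pi^*_\mu$ is asymptotically optimal in mean in the singleton class $\{\mu\}$. First I would recall the defining property of an optimal policy from Definition~\ref{def:optimal-policy} (equivalently~\eqref{eq:optimal-policy}): for every history $\ae_{<t} \in \H$ we have $V^{\pi^*_\mu}_\mu(\ae_{<t}) = V^*_\mu(\ae_{<t})$. Consequently the random variable $V^*_\mu(\ae_{<t}) - V^{\pi^*_\mu}_\mu(\ae_{<t})$ is identically zero, so in particular
\[
\EE^{\pi^*_\mu}_\mu\big[ V^*_\mu(\ae_{<t}) - V^{\pi^*_\mu}_\mu(\ae_{<t}) \big] = 0 \to 0 \text{ as } t \to \infty,
\]
which is exactly the statement that $\pi^*_\mu$ is asymptotically optimal in mean in the class $\{\mu\}$ (cf.\ \autoref{tab:asymptotic-optimality}).

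Next, I would simply invoke Theorem~\ref{thm:aoim-implies-sublinear-regret} with the environment $\mu$, the discount function $\gamma$ satisfying \autoref{ass:gamma}, the recoverability hypothesis on $\mu$, and the policy $\pi := \pi^*_\mu$. All three hypotheses of that theorem are met: \autoref{ass:gamma} and recoverability are assumed in the corollary, and asymptotic optimality in mean of $\pi^*_\mu$ in $\{\mu\}$ was just established. The theorem then yields $R_m(\pi^*_\mu, \mu) \in o(m)$, which is the claim.

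There is essentially no obstacle here: the only thing to be mildly careful about is that asymptotic optimality in mean is stated with respect to the true environment's own induced history distribution $\mu^{\pi^*_\mu}$, but since the value gap is the zero random variable this holds under any distribution, so the matching of hypotheses is immediate. I would keep the proof to two or three lines, pointing to Definition~\ref{def:optimal-policy} and Theorem~\ref{thm:aoim-implies-sublinear-regret}, and perhaps remark that by \autoref{rem:recoverable-POMDPs} this in particular gives sublinear regret for $\pi^*_\mu$ in every weakly communicating finite-state POMDP whenever the effective horizon grows and \autoref{ass:gamma} holds.
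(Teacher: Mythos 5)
Your proposal is correct and coincides with the paper's own proof, which likewise observes that $\pi^*_\mu$ is trivially asymptotically optimal in the class $\{\mu\}$ and then applies \autoref{thm:aoim-implies-sublinear-regret}. Your additional remarks (the value gap being identically zero, and the POMDP consequence via \autoref{rem:recoverable-POMDPs}) are just a more explicit spelling-out of the same one-line argument.
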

\begin{proof}
From \autoref{thm:aoim-implies-sublinear-regret}
since the policy $\pi^*_\mu$ is (trivially)
asymptotically optimal in $\{ \mu \}$.
\end{proof}

If the environment does not satisfy the recoverability assumption,
regret may be linear \emph{even on the optimal policy}:
the optimal policy maximizes discounted rewards and
this short-sightedness might incur a tradeoff
that leads to linear regret later on
if the environment does not allow recovery.

\begin{corollary}[Sublinear Regret for Thompson Sampling]
\label{cor:thompson-sampling-regret}
\index{regret}\index{Thompson sampling}
If the discount function $\gamma$ satisfies \autoref{ass:gamma} and
the environment $\mu \in \M$ satisfies the recoverability assumption,
then $R_m(\pi_T, \mu) \in o(m)$
for the Thompson sampling policy $\pi_T$.
\end{corollary}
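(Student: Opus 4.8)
The plan is to obtain this corollary directly as the composition of two results already established in this chapter: Theorem \ref{thm:Thompson-sampling-aoim} (asymptotic optimality in mean of Thompson sampling) and Theorem \ref{thm:aoim-implies-sublinear-regret} (asymptotic optimality in mean plus recoverability plus Assumption \ref{ass:gamma} implies sublinear regret). There is no new machinery to build; the work is simply to check that the hypotheses line up.

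First I would invoke Theorem \ref{thm:Thompson-sampling-aoim}, which states that for every environment $\nu \in \M$ we have $\EE_\nu^{\pi_T}[V^*_\nu(\ae_{<t}) - V^{\pi_T}_\nu(\ae_{<t})] \to 0$ as $t \to \infty$. Specializing this to the particular environment $\mu \in \M$ from the hypothesis, the policy $\pi_T$ satisfies the defining limit of asymptotic optimality in mean (see Definition \ref{def:asymptotic-optimality} and Table \ref{tab:asymptotic-optimality}) for the singleton class $\{ \mu \}$. Next I would note that the two remaining hypotheses of Theorem \ref{thm:aoim-implies-sublinear-regret} hold by assumption: the discount function $\gamma$ satisfies Assumption \ref{ass:gamma}, and the environment $\mu$ satisfies the recoverability assumption of Definition \ref{def:recoverability}. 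Applying Theorem \ref{thm:aoim-implies-sublinear-regret} with $\pi := \pi_T$ then yields $R_m(\pi_T, \mu) \in o(m)$, which is exactly the claim.

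The only point requiring a moment's care — and it is a minor one — is that asymptotic optimality in mean \emph{in the class} $\M$, as delivered by Theorem \ref{thm:Thompson-sampling-aoim}, specializes correctly to asymptotic optimality in mean \emph{in the singleton} $\{ \mu \}$; this is immediate because the convergence statement in Theorem \ref{thm:Thompson-sampling-aoim} is quantified over environments $\nu \in \M$ one at a time, so it applies verbatim to our fixed $\mu$. Consequently there is no genuine obstacle here: the substantive arguments are already contained in the proofs of Theorem \ref{thm:Thompson-sampling-aoim} (via the expected-total-variation bound of Lemma \ref{lem:info-gain} and the posterior-martingale convergence) and of Theorem \ref{thm:aoim-implies-sublinear-regret} (via Lemma \ref{lem:value-and-regret}), and this corollary is their direct consequence.
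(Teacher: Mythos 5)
Your proposal is correct and matches the paper's own proof, which likewise obtains the corollary by combining \autoref{thm:Thompson-sampling-aoim} with \autoref{thm:aoim-implies-sublinear-regret}. Your remark that asymptotic optimality in the class $\M$ specializes to the singleton $\{\mu\}$ is exactly the (trivial) bridging observation needed, so nothing is missing.
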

\begin{proof}
From \autoref{thm:Thompson-sampling-aoim} and
\autoref{thm:aoim-implies-sublinear-regret}.
\end{proof}

\section{Discussion}
\label{sec:discussion-optimality}

In this work, we disregard computational constraints.
Because of this, our agents learn very efficiently and
we can focus on the way they balance exploration and exploitation%
\index{exploration vs.\ exploitation}.
So which balance is best?

\subsection{The Optimality of AIXI}
\label{ssec:optimality-of-AIXI}\index{AIXI}

Bayesian reinforcement learning agents make the tradeoff between
exploration and exploitation\index{exploration vs.\ exploitation}
in the Bayes optimal way.
Maximizing expected rewards according to any positive prior
does not lead to enough exploration to achieve asymptotic optimality%
~(\autoref{thm:AIXI-not-AO});
the prior's bias is retained indefinitely.
For bad priors this can cause serious malfunctions:
the dogmatic prior defined in \autoref{ssec:dogmatic-prior}
can prevent a Bayesian agent \emph{from taking a single exploratory action};
exploration is restricted to cases where the expected future payoff
falls below some prespecified $\varepsilon > 0$.
However, this problem can be alleviated
by adding an extra exploration component to AIXI:
\citet{Lattimore:2013} shows that {\BayesExp}
is weakly asymptotically optimal~(\autoref{thm:BayesExp-wao}).

So instead, we may ask the following weaker questions.
Does AIXI succeed in every (ergodic) finite-state (PO)MDP,
bandit problem, or sequence prediction task?
Our results imply that without further assumptions on the prior,
we cannot answer any of the preceding questions in the affirmative.
Using a dogmatic prior~(\autoref{thm:dogmatic-prior}),
we can make AIXI follow any computable policy as long as that policy produces
rewards that are bounded away from zero.
\begin{itemize}
\item In a sequence prediction task
	that gives a reward of $1$ for every correctly predicted bit
	and $0$ otherwise,
	a policy $\pi$ that correctly predicts every third bit
	will receive an average reward of $1/3$.
	With a $\pi$-dogmatic prior,
	AIXI thus only predicts a third of the bits correctly,
	and hence is outperformed by a uniformly random predictor.

	However, if we have a constant horizon of length $1$,
	AIXI \emph{does} succeed in sequence prediction~\citep[Sec.~6.2.2]{Hutter:2005}.
	If the horizon is this short,
	the agent is so hedonistic that no threat of hell can deter it.
\item In a (PO)MDP a dogmatic prior can make AIXI get stuck in any loop
	that provides nonzero expected rewards.
\item In a bandit problem, a dogmatic prior can make AIXI get stuck on any arm
	which provides nonzero expected rewards.
\end{itemize}
These results apply not only to AIXI,
but generally to Bayesian reinforcement learning agents.
Any Bayesian mixture over nonrecoverable environments
is susceptible to dogmatic priors
if we allow an arbitrary reweighing of the prior.
Notable exceptions are
classes of environment that allow policies
that are strongly asymptotically optimal \emph{regardless of the history}%
~(\autoref{thm:self-optimizing}).
For example, the class of all ergodic MDPs for
an unbounded effective horizon;
in this case the Bayes optimal policy is
strongly asymptotically optimal~\citep[Thm.~5.38]{Hutter:2005}.
Note that in contrast to our results,
this requires that that the agent uses a Bayes-mixture over a class of ergodic MDPs.

Moreover, Bayesian agents still perform well at learning
and achieve on-policy value convergence%
~(\autoref{cor:Bayes-on-policy-value-convergence}):
the posterior belief about the value of a policy $\pi$
converges to the true value of $\pi$ while following $\pi$:
$V^\pi_\xi(\ae_{<t}) - V^\pi_\mu(\ae_{<t}) \to 0$
as $t \to \infty$ $\mu^\pi$-almost surely.
Since this holds for any policy,
in particular it holds for the Bayes optimal policy $\pi^*_\xi$.
This means that
the Bayes agent learns to predict those parts of the environment that it sees.
But if it does not explore enough,
then it will not learn other parts of the environment
that are potentially more rewarding.

\citet[Claim~5.12]{Hutter:2005} claims:
\begin{quote}
We expect AIXI to be universally optimal.\index{AIXI}
\end{quote}
Our work seriously challenges Hutter's claim:
no nontrivial and non-subjective optimality results for AIXI remain
(see \autoref{tab:optimality}).
Until new arguments for AIXI's optimality are put forward,
we have to regard AIXI as a \emph{relative} theory of intelligence,
dependent on the choice of the prior.

\subsection{Natural Universal Turing Machines}
\label{ssec:discussion-UTM}

The choice of the UTM has been a big open question
in algorithmic information theory for a long time.
The Kolmogorov complexity of a string
depends on this choice.
However, there are \emph{invariance theorems}\index{invariance theorem}%
~\citep[Thm.~2.1.1 \& Thm.~3.1.1]{LV:2008}
which state that changing the UTM
changes Kolmogorov complexity only by a constant.
When using the Solomonoff prior $M$
to predict any deterministic computable binary sequence,
the number of wrong predictions is bounded by
the Kolmogorov complexity of the sequence%
~(\autoref{cor:expected-prediction-regret-M}).
Due to the invariance theorem,
changing the UTM
changes the number of errors only by a constant.
In this sense, compression and prediction work for any choice of UTM.

For AIXI, there can be no invariance theorem;
in \autoref{sec:bad-priors}
we showed that a bad choice for the UTM can have drastic consequences.
Our negative results can guide future search for a \emph{natural} UTM%
\index{Turing machine!natural}:
the UTMs used to define the indifference prior\index{prior!indifference}%
~(\autoref{thm:indifference-prior}),
the dogmatic prior\index{prior!dogmatic}~(\autoref{thm:dogmatic-prior}), and
the Gödel prior\index{prior!Gödel}~(\autoref{thm:Gödel-prior})
should be considered unnatural.
But what are other desirable properties of a UTM?

A remarkable but unsuccessful attempt to find natural UTMs
is due to \citet{Mueller:2010}.
It takes the probability that one universal machine simulates another
according to the length of their respective compilers
and searches for a stationary distribution.
Unfortunately, no stationary distribution exists.

\begin{table}[t]
\begin{center}
\renewcommand{\arraystretch}{1.2}
\setlength{\tabcolsep}{5pt}
\begin{tabular}{llcc}
\toprule
name & defined in & $K_U(U')$ & $K_{U'}(U)$ \\
\midrule
indifference prior
  & \autoref{thm:indifference-prior}
  & $K(U) + K(m) + O(1)$
  & $m$ \\
dogmatic prior
  & \autoref{thm:dogmatic-prior}
  & $K(U) + K(\pi) + K(\varepsilon) + O(1)$
  & $\lceil - \log_2 \varepsilon \rceil$ \\
Gödel prior
  & \autoref{thm:Gödel-prior}
  & $K(U) + K(\PA) + O(1)$
  & $0$ \\
\bottomrule
\end{tabular}
\end{center}
\caption[Compiler sizes of the UTMs of bad priors]{%
Upper bounds to compiler sizes of
the UTMs used in the proofs of \autoref{sec:bad-priors}.
$K_U(U')$ is the number of extra bits to run the `bad' UTM $U'$
on the `good' UTM $U$,
$K_{U'}(U)$ is the number of extra bits to run $U$ on $U'$.
$K(U)$ denotes the length of the shortest program for $U$ on $U$.
}
\label{tab:UTM-sizes}
\index{Turing machine!universal}
\end{table}

Alternatively, we could demand that the UTM $U'$
that we use for the universal prior
has a small compiler on the reference machine $U$~\citep[p.~35]{Hutter:2005}.
Moreover, we could demand the reverse,
that the reference machine $U$ has a small compiler on $U'$.
The idea is that this should limit the amount of bias one can introduce by
defining a UTM that has very small programs
for very complicated and `unusual' environments.
Unfortunately, this just pushes the choice of the UTM to the reference machine.
\autoref{tab:UTM-sizes} lists
compiler sizes of the UTMs constructed in this thesis.

\subsection{Asymptotic Optimality}
\label{ssec:asymptotic-optimality-discussion}
\index{optimality!asymptotic}

A policy is asymptotically optimal if
the agent learns to act optimally in any environment from the class $\M$.
We discussed two asymptotically optimal policies.
{\BayesExp} is weakly asymptotically optimal
if the horizon grows sublinearly~(\autoref{thm:BayesExp-wao}) and
Thompson sampling is asymptotically optimal in mean%
~(\autoref{thm:Thompson-sampling-aoim}).
Both policies commit to exploration for several steps.
As stated in \autoref{ex:exploration}:
\begin{quote}
To achieve asymptotic optimality,
the agent needs to explore infinitely often for an entire effective horizon.
\index{effective horizon}
\end{quote}
This is why weak asymptotic optimality is impossible
if the horizon grows linearly~(\autoref{thm:wao-impossible}):
if the agent explores for an entire effective horizon,
it spoils a significant fraction of the average.
Thompson sampling explores whenever it draws a bad sample.
{\BayesExp} explores if the maximal expected information gain%
\index{information gain}
is above some threshold.
Both policies commit to exploration for the entire effective horizon.

The exploration performed by Thompson sampling is qualitatively different from
the exploration by {\BayesExp}~\citep[Ch.~5]{Lattimore:2013}.
{\BayesExp} performs phases of exploration in which
it maximizes the expected information gain.
This explores the environment class completely,
even achieving off-policy prediction\index{off-policy}%
~\citep[Thm.~7]{OLH:2013ksa}.
In contrast, Thompson sampling only explores on the optimal policies,
and in some environment classes this will not yield off-policy prediction.
So in this sense the exploration mechanism of Thompson sampling
is more reward-oriented than maximizing information gain.

However, asymptotic optimality has to be taken with a grain of salt.
It provides no incentive to the agent to avoid traps\index{trap}
in the environment.
Once the agent gets caught in a trap, all actions are equally bad and thus
optimal: asymptotic optimality has been achieved.
Even worse, an asymptotically optimal agent has to explore all the traps
because they might contain hidden treasure.
This brings us to the following impossibility result
for non-recoverable environment classes.
\begin{quote}
Either the agent gets caught in a trap\index{trap}
or it is not asymptotically optimal\index{optimality!asymptotic}.%
\footnote{This formulation was suggested by Toby Ord.}
\end{quote}

\subsection{The Quest for Optimality}
\label{ssec:the-quest-for-optimality}

\begin{table}[t]
\begin{center}
\begin{tabular}{lp{88mm}}
\toprule
Optimality & Issue/Comment \\
\midrule
$\mu$-optimal policy & requires to know the true environment $\mu$ in advance \\
Pareto optimality & always satisfied (\autoref{thm:Pareto-optimality-is-trivial}) \\
Bayes optimality & same as maximal intelligence \\
balanced Pareto optimality & same as maximal intelligence~(\autoref{prop:balanced-Pareto-optimality-and-intelligence}) \\
maximal intelligence & highly dependent on the prior
(\autoref{cor:some-AIXIs-are-stupid} and \autoref{cor:AIXI-is-stupid}) \\
PAC & strong variant of asymptotic optimality in probability \\
asymptotic optimality & Thompson sampling~(\autoref{thm:Thompson-sampling-aoim}) and \BayesExp~\citep{Lattimore:2013}, but not AIXI~\citep{Orseau:2013} \\
sublinear regret & impossible in general environments,
but possible with recoverability~(\autoref{thm:aoim-implies-sublinear-regret}) \\
\bottomrule
\end{tabular}
\end{center}
\caption[Notions of optimality]{%
Proposed notions of optimality
\citep{Hutter:2002,Hutter:2005,LH:2007int}
and their issues.
Asymptotic optimality stands out to be
the only nontrivial objective optimality notion
for general reinforcement learning.
}
\label{tab:optimality}
\end{table}

\autoref{thm:Pareto-optimality-is-trivial} shows that
Pareto optimality is trivial in the class of all computable environments.
Bayes optimality, Balanced Pareto optimality, and
maximal Legg-Hutter intelligence are equivalent%
~(\autoref{prop:balanced-Pareto-optimality-and-intelligence} and
\autoref{prop:Bayes-optimality-and-intelligence}).
\autoref{cor:some-AIXIs-are-stupid} and \autoref{cor:AIXI-is-stupid}
show that this notion is highly subjective
because it depends on the choice of the prior.
Moreover, according to \autoref{cor:any-computable-policy-can-be-smart},
any computable policy is nearly balanced Pareto optimal.
For finite horizons,
there are priors such that
every policy is balanced Pareto optimal%
~(\autoref{thm:indifference-prior}).
Sublinear regret is impossible in general environments%
~(\autoref{ex:heaven-and-hell}).
However, if the environment is recoverable~(\autoref{def:recoverability}),
then \autoref{thm:aoim-implies-sublinear-regret} shows that
asymptotic optimality in mean implies sublinear regret.
In summary,
asymptotic optimality is the only nontrivial and objective notion of optimality
for the general reinforcement learning problem~(\autoref{prob:general-RL}):
it is both satisfiable%
~(\autoref{thm:BayesExp-wao} and \autoref{thm:Thompson-sampling-aoim})
and objective
because it does not depend on a prior probability measure
over the environment class $\M$.
\autoref{tab:optimality} summarized the notions of optimality
discussed in this chapter.

Our optimality notions are \emph{tail events}\index{tail event}:
any finite number of time steps are irrelevant;
the agent can be arbitrarily lazy.
Asymptotic optimality requires only convergence in the limit.
In recoverable environments
we can always achieve sublinear regret after any finite interaction.
All policies with finite horizon are Bayes optimal
according to \autoref{thm:indifference-prior} and \autoref{cor:finite-horizon-every-policy-is-Bayes-optimal}.
Overall, there is a dichotomy between the asymptotic nature of
our optimality notions and
the use of discounting to prioritize the present over the future.
Ideally, we would aim for finite guarantees instead,
such as precise regret bounds or PAC convergence rates,
but without additional assumptions
this is impossible in this general setting.
This leaves us with the main question of this chapter unanswered%
~\citep[Sec.~5]{Hutter:2009open}:
\begin{quote}
What is a good optimality criterion for general reinforcement learning?
\end{quote}


\chapter{Computability}
\label{cha:computability}

\falsequote{Marcus Hutter}{I simply keep a few spare halting oracles around.}

Given infinite computation power,
many traditional AI problems become trivial:
playing chess, go, or backgammon can be solved
by exhaustive expansion of the game tree.
Yet other problems seem difficult still; for example,
predicting the stock market, driving a car, or babysitting your nephew.
How can we solve these problems in theory?
Solomonoff induction and AIXI are proposed answers to this question.

Both Solomonoff induction and AIXI are known to be incomputable.
But not all incomputabilities are equal.
The \emph{arithmetical hierarchy}\index{arithmetical hierarchy}
specifies different levels of computability
based on \emph{oracle machines}\index{oracle!halting}:
each level in the arithmetical hierarchy
is computed by a Turing machine which
may query a halting oracle for the respective lower level.
Our agents are useless if they cannot be approximated in practice,
i.e., by a regular Turing machine.
Therefore we posit that any ideal for a `perfect agent'
needs to be \emph{limit computable}\index{limit computable}~($\Delta^0_2$).
The class of limit computable functions is the class of functions that
admit an \emph{anytime algorithm}\index{anytime algorithm}.

\begin{table}[t]
\begin{center}
\renewcommand{\arraystretch}{1.2}
\setlength{\tabcolsep}{5pt}
\begin{tabular}{lcc}
\toprule
$Q$        & $\{ (x, q) \in \X^* \times \mathbb{Q} \mid Q(x) > q \}$
           & $\{ (x, y, q) \in \X^* \times \X^* \times \mathbb{Q} \mid Q(xy \mid x) > q \}$ \\
\midrule
$M$        & $\Sigma^0_1 \setminus \Delta^0_1$
           & $\Delta^0_2 \setminus (\Sigma^0_1 \cup \Pi^0_1)$ \\
$M\norm$   & $\Delta^0_2 \setminus (\Sigma^0_1 \cup \Pi^0_1)$
           & $\Delta^0_2  \setminus (\Sigma^0_1 \cup \Pi^0_1)$ \\
$\MM$      & $\Pi^0_2 \setminus \Delta^0_2$
           & $\Delta^0_3 \setminus (\Sigma^0_2 \cup \Pi^0_2)$ \\
$\MM\norm$ & $\Delta^0_3 \setminus (\Sigma^0_2 \cup \Pi^0_2)$
           & $\Delta^0_3 \setminus (\Sigma^0_2 \cup \Pi^0_2)$ \\
\bottomrule
\end{tabular}
\end{center}
\caption[Computability results on Solomonoff's prior]{%
The computability results on $M$, $M\norm$, $\MM$, and $\MM\norm$
proved in \autoref{sec:complexity-Solomonoff}.
Lower bounds on the complexity of $\MM$ and $\MM\norm$
are given only for specific universal Turing machines.
}
\label{tab:complexity-induction}
\end{table}

In \autoref{sec:complexity-Solomonoff}
we consider various different flavors of Solomonoff induction:
So\-lo\-mo\-noff's prior $M$~(\autoref{ex:Solomonoff-prior})
is only a semimeasure and not a measure:
it assigns positive probability that
the observed string has only finite length.
This can be circumvented by normalizing $M$.
Solomonoff's normalization $M\norm$~(\autoref{def:Solomonoff-normalization})
preserves the ratio $M(x1) / M(x0)$ and is limit computable.
If instead we mix only over programs that compute infinite strings,
we get a semimeasure $\MM$~\eqref{eq:def-MM},
which can be normalized to $\MM\norm$.
Moreover, when predicting a sequence,
we are primarily interested in the conditional probability $M(xy \mid x)$
(respectively $M\norm(xy \mid y)$, $\MM(xy \mid x)$, or $\MM\norm(xy \mid x)$)
that the currently observed string $x$ is continued with $y$.
We show that both $M$ and $M\norm$ are limit computable,
while $\MM$ and $\MM\norm$ are not.
\autoref{tab:complexity-induction} summarizes
our computability results for Solomonoff induction.

For MDPs, planning is already P-complete for finite and infinite horizons%
~\citep{PT:1987}.
In POMDPs, planning is undecidable~\citep{MHC:1999,MHC:2003}.
The existence of a policy whose expected value exceeds a given threshold is
PSPACE-complete~\citep{MGLA:2000},
even for purely epistemic POMDPs
in which actions do not change the hidden state~\citep{SLR:2007}.
In \autoref{sec:complexity-aixi} we derive hardness results
for planning in general semicomputable environments;
this environment class is even more general than POMDPs.
We show that optimal policies are $\Pi^0_2$-hard and
$\varepsilon$-optimal policies are undecidable.

Moreover, we show that by default, AIXI is not limit computable.
When picking the next action,
two or more actions might have the same value (expected future rewards).
The choice between them is easy,
but determining whether such a tie exists is difficult.
This problem can be circumvented
by settling for an $\varepsilon$-optimal policy;
we get a limit-computable agent with infinite horizon.
However,
these results rely on the recursive definition of the value function.
In contrast,
\citet{Hutter:2005} defines the value function as
the limit of the iterative value function.
In \autoref{sec:iterative-value-function} we compare these two definitions and
show that the recursive definition correctly maximizes expected rewards
and has better computability properties.

\begin{table}[t]
\begin{center}
\renewcommand{\arraystretch}{1.2}
\setlength{\tabcolsep}{5pt}
\begin{tabular}{llll}
\toprule
Agent               & Optimal & $\varepsilon$-Optimal \\
\midrule
AIMU                & $\Delta^0_2$
                    & $\Delta^0_1$ \\
AINU                & $\Delta^0_3$, $\Pi^0_2$-hard
                    & $\Delta^0_2$, $\Sigma^0_1$-hard \\
AIXI                & $\Delta^0_3$, $\Sigma^0_1$-hard
                    & $\Delta^0_2$, $\Sigma^0_1$-hard \\
Entropy-seeking     & $\Delta^0_3$ & $\Delta^0_2$ \\
Information-seeking & $\Delta^0_3$ & $\Delta^0_2$ \\
$\BayesExp$         & $\Delta^0_3$ & $\Delta^0_2$ \\
\bottomrule
\end{tabular}
\end{center}
\caption[Computability results for different agent models]{%
Computability results for different agent models
derived in \autoref{sec:complexity-aixi},
\autoref{sec:complexity-knowledge-seeking},
and \autoref{sec:computability-wao}.
AIMU denotes the optimal policy in a computable environment and
AINU denotes the optimal policy in a semicomputable environment
(see \autoref{sec:general-rl}).
Hardness results for AIXI are with respect to
a specific universal Turing machine;
hardness results for $\nu$-optimal policies are with respect to
a specific environment $\nu \in \Mlscccs$.
Results for
entropy-seeking and information-seeking policies are only
for finite horizons.
}
\label{tab:complexity-agents}
\end{table}

In \autoref{sec:complexity-knowledge-seeking}
we show that for finite horizons
both the entropy-seeking and the information-seeking
agent are $\Delta^0_3$-computable and have
limit-computable $\varepsilon$-optimal policies.
{\BayesExp}~(\autoref{ssec:BayesExp})
relies on optimal policies
that are generally not limit computable.
In \autoref{sec:computability-wao}
we give a weakly asymptotically optimal agent
based on {\BayesExp} that is limit computable.
\autoref{tab:complexity-agents} summarizes our results on
the computability of these agents.

In this chapter
we illustrate the environments
used in the proofs of our theorems
in the form of flowcharts.
They should be read as follows.
Circles denote \emph{stochastic nodes},
rectangles denote \emph{environment nodes}, and
diamonds denote the agent's \emph{choice nodes}.
Transitions out of stochastic nodes are labeled with transition probabilities,
transitions out of environment nodes are labeled with percepts, and
transitions out of choice nodes are labeled with actions.
The initial node is marked with a small incoming arrow
(see for example \autoref{fig:AINU-is-Pi2-hard}).
By \assref{ass:aixi}{ass:bounded-rewards}
the worst possible outcome is getting reward $0$ forever,
thus we label such states as \emph{hell}\index{hell}.
Analogously, getting reward $1$ forever is the best possible outcome,
thus we label such states as \emph{heaven}\index{heaven}.

\section{Background on Computability}
\label{sec:preliminaries}

\subsection{The Arithmetical Hierarchy}
\label{ssec:arithmetical hierarchy}
\index{arithmetical hierarchy}

A set $A \subseteq \mathbb{N}$ is $\Sigma^0_n$ iff
there is a quantifier-free formula $\eta$ such that
\begin{equation}\label{eq:def-Sigma^0_n}
k \in A
\;\Longleftrightarrow\;
\exists k_1 \forall k_2 \ldots Q_n k_n\; \eta(k, k_1, \ldots, k_n)
\end{equation}
where $Q_n = \forall$ if $n$ is even, $Q_n = \exists$ if $n$ is odd%
~\citep[Def.~1.4.10]{Nies:2009}.
(We can also think of $\eta$ as a computable relation.)
A set $A \subseteq \mathbb{N}$ is $\Pi^0_n$ iff
its complement $\mathbb{N} \setminus A$ is $\Sigma^0_n$.
The formula $\eta$ on the right side of \eqref{eq:def-Sigma^0_n}
is a \emph{$\Sigma^0_n$-formula}\index{S0n-formula@$\Sigma^0_n$-formula|textbf}
and its negation is a \emph{$\Pi^0_n$-formula}%
\index{P0n-formula@$\Pi^0_n$-formula|textbf}.
It can be shown that
we can add any bounded quantifiers and
duplicate quantifiers of the same type
without changing the classification of $A$.
The set $A$ is $\Delta^0_n$ iff $A$ is $\Sigma^0_n$ and $A$ is $\Pi^0_n$.
We get that
$\Sigma^0_1$ as the class of recursively enumerable sets,
$\Pi^0_1$ as the class of co-recursively enumerable sets and
$\Delta^0_1$ as the class of recursive sets.

The set $A \subseteq \mathbb{N}$ is \emph{$\Sigma^0_n$-hard
($\Pi^0_n$-hard, $\Delta^0_n$-hard)} iff
for any set $B \in \Sigma^0_n$ ($B \in \Pi^0_n$, $B \in \Delta^0_n$),
$B$ is many-one reducible to $A$, i.e.,
there is a computable function $f$ such that
$k \in B \leftrightarrow f(k) \in A$~\citep[Def.~1.2.1]{Nies:2009}.
We get $\Sigma^0_n \subset \Delta^0_{n+1} \subset \Sigma^0_{n+1} \subset \ldots$
and $\Pi^0_n \subset \Delta^0_{n+1} \subset \Pi^0_{n+1} \subset \ldots$.
This hierarchy of subsets of natural numbers is known as
the \emph{arithmetical hierarchy}\index{arithmetical hierarchy|textbf}.

By Post's Theorem~\citep[Thm.~1.4.13]{Nies:2009}\index{Post's theorem},
a set is $\Sigma^0_n$ if and only if
it is recursively enumerable on an oracle machine
with an oracle for a $\Sigma^0_{n-1}$-complete set.
An oracle for $\Sigma^0_1$ is called a
\emph{halting oracle}\index{oracle!halting|textbf}.

\subsection{Computability of Real-valued Functions}
\label{ssec:computability-of-real-valued-functions}

We fix some encoding of rational numbers into binary strings and
an encoding of binary strings into natural numbers.
From now on, this encoding will be done implicitly wherever necessary.

\begin{definition}[$\Sigma^0_n$-, $\Pi^0_n$-, $\Delta^0_n$-computable]
\label{def:computable}\index{computable}\index{limit computable}
\index{lower semicomputable}
A function $f: \X^* \to \mathbb{R}$ is called
\emph{$\Sigma^0_n$-computable ($\Pi^0_n$-computable, $\Delta^0_n$-computable)} iff
the set $\{ (x, q) \in \X^* \times \mathbb{Q} \mid f(x) > q \}$
is $\Sigma^0_n$ ($\Pi^0_n$, $\Delta^0_n$).
\end{definition}
A $\Delta^0_1$-computable function is called \emph{computable},
a $\Sigma^0_1$-computable function is called \emph{lower semicomputable}, and
a $\Pi^0_1$-computable function is called \emph{upper semicomputable}.
A $\Delta^0_2$-computable function $f$ is called \emph{limit computable},
because there is a computable function $\phi$ such that
\[
\lim_{k \to \infty} \phi(x, k) = f(x).
\]
The program $\phi$ that limit computes $f$
can be thought of as
an \emph{anytime algorithm}\index{anytime algorithm} for $f$:
we can stop $\phi$ at any time $k$ and get a preliminary answer.
If the program $\phi$ ran long enough (which we do not know),
this preliminary answer will be close to the correct one.

Limit-computable sets are the highest level in the arithmetical hierarchy
that can be approached by a regular Turing machine.
Above limit-computable sets
we necessarily need some form of halting oracle.
See \autoref{tab:computability} for the definition of
lower/upper semicomputable and limit-computable functions
in terms of the arithmetical hierarchy.

\begin{table}[t]
\begin{center}
\begin{tabular}{lcc}
\toprule
& $\{ (x, q) \mid f(x) \geq q \}$
& $\{ (x, q) \mid f(x) < q \}$ \\
\midrule
$f$ is computable            & $\Delta^0_1$ & $\Delta^0_1$ \\
$f$ is lower semicomputable  & $\Sigma^0_1$ & $\Pi^0_1$    \\
$f$ is upper semicomputable  & $\Pi^0_1$    & $\Sigma^0_1$ \\
$f$ is limit computable      & $\Delta^0_2$ & $\Delta^0_2$ \\
$f$ is $\Delta^0_n$-computable & $\Delta^0_n$ & $\Delta^0_n$ \\
$f$ is $\Sigma^0_n$-computable & $\Sigma^0_n$ & $\Pi^0_n$    \\
$f$ is $\Pi^0_n$-computable    & $\Pi^0_n$    & $\Sigma^0_n$ \\
\bottomrule
\end{tabular}
\end{center}
\caption[Computability of real-valued functions]{
Connection between the computability of real-valued functions and
the arithmetical hierarchy.
}
\label{tab:computability}
\end{table}

\begin{lemma}[Computability of Arithmetical Operations]
\label{lem:computable-reals}
\index{computable}
Let $n > 0$ and
let $f, g: \X^* \to \mathbb{R}$ be two $\Delta^0_n$-computable functions.
Then
\begin{enumerate}[(a)]
\item $\{ (x, y) \mid f(x) >    g(y) \}$ is $\Sigma^0_n$,
\item $\{ (x, y) \mid f(x) \leq g(y) \}$ is $\Pi^0_n$,
\item $f + g$, $f - g$, and $f \cdot g$ are $\Delta^0_n$-computable, and
\item $f / g$ is $\Delta^0_n$-computable if $g(x) \neq 0$ for all $x$.
\item $\log f$ is $\Delta^0_n$-computable if $f(x) > 0$ for all $x$.
\end{enumerate}
\end{lemma}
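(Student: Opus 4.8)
The plan is to handle each clause by producing an explicit arithmetical definition of the relevant set, using the hypothesis that $\{(x,q) \mid f(x) > q\}$ and $\{(x,q) \mid g(x) > q\}$ are $\Delta^0_n$ (and likewise with $<$, by taking complements, since $\Delta^0_n$ is closed under complement). Throughout I will rely on the standard fact, mentioned in the excerpt, that within a $\Sigma^0_n$- or $\Pi^0_n$-formula we may freely insert bounded quantifiers and duplicate like-type quantifiers, and that $\Delta^0_n$ is closed under finite Boolean combinations; also, crucially, that rational comparisons and rational arithmetic are decidable, so any quantifier ranging over $\mathbb{Q}$ is as harmless as one ranging over $\mathbb{N}$.

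First I would do (a) and (b) together. For (a), $f(x) > g(y)$ holds if and only if there is a rational $q$ with $f(x) > q$ and $g(y) < q$, i.e. $\exists q \in \mathbb{Q}.\; (x,q) \in \{f > \cdot\} \wedge (y,q) \in \{g < \cdot\}$. Since $n > 0$, prefixing a $\Delta^0_n$ matrix with a single existential quantifier keeps the set $\Sigma^0_n$ (this is where the hypothesis $n>0$ is used: for $n=0$ it would fail). Clause (b) is the complement of (a), hence $\Pi^0_n$. Next, for (c), the sum: $(f+g)(x) > q$ iff $\exists q_1, q_2 \in \mathbb{Q}.\; q_1 + q_2 = q \wedge f(x) > q_1 \wedge g(x) > q_2$, which is $\Sigma^0_n$; and $(f+g)(x) < q$ has a symmetric $\Sigma^0_n$ definition using the upper-cut sets, hence the set $\{((x),q) \mid (f+g)(x) > q\}$ is both $\Sigma^0_n$ and $\Pi^0_n$, i.e. $\Delta^0_n$. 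Subtraction is the same with $g$ replaced by $-g$ (note $-g$ is $\Delta^0_n$-computable since $\{-g > q\} = \{g < -q\}$). For the product I would split on signs: using rational bounds $q_1 < f(x) < q_1'$ and $q_2 < g(x) < q_2'$ (available from the $\Delta^0_n$ cuts on both sides), the inequality $f(x)\cdot g(x) > q$ is captured by an existential over the finitely-many sign cases together with the appropriate products of rational bounds — again a single existential over rationals in front of a $\Delta^0_n$ matrix, so $\Sigma^0_n$, and symmetrically $\Pi^0_n$.

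For (d), division with $g(x) \neq 0$ everywhere: here the extra subtlety is that the sign of $g(x)$ matters and $g(x) = 0$ is a $\Pi^0_n$ condition in general, but the hypothesis rules it out, so for each $x$ exactly one of $g(x) > 0$, $g(x) < 0$ holds, and `$g(x) > 0$' is itself $\Sigma^0_n$ while `$g(x)<0$' is $\Pi^0_n$ — both $\Delta^0_n$ under the standing assumption, since their disjunction is all of $\X^*$. Then $f(x)/g(x) > q$ unfolds, in the case $g(x) > 0$, to $\exists q_2 \in \mathbb{Q}_{>0}.\; g(x) < q_2 \wedge f(x) > q\,q_2$ (when $q \ge 0$) with a parallel clause for $q < 0$, and symmetrically when $g(x) < 0$; assembling these finitely many $\Sigma^0_n$ pieces (and their duals for the $<$ side) gives $\Delta^0_n$. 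Clause (e) is analogous but simpler: with $f(x) > 0$ for all $x$, $\log f(x) > q$ iff $f(x) > 2^q$, and since $q \mapsto 2^q$ for rational $q$ is computable and comparison with rationals is decidable, this is a $\Delta^0_n$ condition directly from the cuts of $f$; one just has to be slightly careful to phrase `$f(x) > 2^q$' via an intermediate rational, $\exists r \in \mathbb{Q}.\; 2^q < r \wedge f(x) > r$, to stay within the rational-cut formulation, which keeps it $\Sigma^0_n$, with the dual giving $\Pi^0_n$.

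I do not expect a genuine obstacle here — the whole lemma is a bookkeeping exercise in the arithmetical hierarchy — but the one place to be careful is the product in (c) and the division in (d): naively one wants to write `$f(x)\cdot g(x) > q$' as a direct arithmetic statement, but $f(x)$ and $g(x)$ are not objects we can quantify over, only their rational approximations are, so the argument must always proceed through rational witnesses bounding the values from the correct side, and must case-split on signs to get the direction of the inequalities right. Making sure each such formula is simultaneously $\Sigma^0_n$ and $\Pi^0_n$ (rather than settling for one) is the only thing that needs attention, and it follows in every case by also writing the complementary inequality `$\cdot < q$' in $\Sigma^0_n$ form using the upper cuts, then invoking $\Delta^0_n = \Sigma^0_n \cap \Pi^0_n$.
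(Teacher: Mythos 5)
Your parts (a) and (b) are fine and essentially coincide with the paper's treatment (modulo one repair: the complement of $\{(x,q)\mid f(x)>q\}$ is $\{(x,q)\mid f(x)\le q\}$, not $\{(x,q)\mid f(x)<q\}$, so in the witness formula you should use the non-strict inequality, or note that $g(y)<q$ is at least $\Sigma^0_n$, which is all (a) needs). The genuine gap is in (c)--(e), in the step you yourself single out as the key move: you argue that $\{(x,q)\mid (f+g)(x)>q\}$ is $\Sigma^0_n$ via lower cuts, that $\{(x,q)\mid (f+g)(x)<q\}$ is $\Sigma^0_n$ via upper cuts, and then conclude the former is ``both $\Sigma^0_n$ and $\Pi^0_n$.'' But the complement of $\{(f+g)<q\}$ is $\{(f+g)\ge q\}$, not $\{(f+g)>q\}$; the two differ exactly on the boundary set $f(x)+g(x)=q$, and no amount of rational-witness bookkeeping crosses that gap. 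This is not a cosmetic slip: take $f\equiv\sqrt{2}$ and $g(x):=-\sqrt{2}+2^{-s(x)}$ if program $x$ halts in $s(x)$ steps, $g(x):=-\sqrt{2}$ otherwise. Both $f$ and $g$ have decidable strict cuts in both directions (no rational ever lies on a boundary, since $\sqrt{2}$ is irrational), yet $\{(x,q)\mid f(x)+g(x)>q\}$ at $q=0$ is the halting problem; relativizing gives the analogous failure at every level $n$. The same defect recurs in your sign-split arguments for the product, in (d), and in (e) (``the dual giving $\Pi^0_n$''). So an argument that works purely with the strict rational cuts cannot deliver the $\Pi^0_n$ half of (c)--(e).

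The paper's proof takes a different route that sidesteps precisely this boundary issue: for $n>1$ it invokes Post's theorem to write $f(x)=\lim_k\phi(x,k)$ and $g(x)=\lim_k\psi(x,k)$ with $\phi,\psi$ being $\Delta^0_{n-1}$-computable approximations (i.e.\ $f,g$ are limit computable on a level-$(n-1)$ oracle machine), and then (c) and (e) follow because addition, subtraction, multiplication and $\log$ are continuous, while (d) is handled by the explicit $\varepsilon$-estimate showing division is continuous away from $g(x)=0$; only (a) and (b) are done by a cut formula, as you do. In effect the paper works with the approximation reading of ``$\Delta^0_n$-computable'' rather than the literal strict-cut one, and that is the reading under which (c)--(e) are actually provable. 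To repair your write-up you should do the same: obtain the limit representation from your hypothesis (e.g.\ by binary search over rational thresholds relative to a $\Sigma^0_{n-1}$ oracle) and then argue via limits of approximations, rather than trying to extract the $\Pi^0_n$ side of each strict cut directly from the $\Sigma^0_n$ form of the opposite strict cut.
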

\begin{proof}
We only prove this for $n > 1$.
Since $f, g$ are $\Delta^0_n$-computable,
they are limit computable on a level $n-1$ oracle machine.
Let $\phi$ be the function limit computing $f$ on the oracle machine, and
let $\psi$ be the function limit computing $g$ on the oracle machine:
\[
f(x) = \lim_{k \to \infty} \phi(k, x)
\quad\text{and}\quad
g(y) = \lim_{k \to \infty} \psi(k, y).
\]
By assumption, both $\phi$ and $\psi$ are $\Delta^0_{n-1}$-computable.
\begin{enumerate}[(a)]
\item
Let $G := \{ (x, y, q) \mid g(y) < q \}$, and
let $F := \{ (x, y, q) \mid q < f(x) \}$,
both of which are in $\Delta^0_n$ by assumption.
Hence there are $\Sigma^0_n$-formulas $\varphi_G$ and $\varphi_F$ such that
\begin{align*}
(x, y, q) \in G \;&\Longleftrightarrow\; \varphi_G(x, y, q) \\
(x, y, q) \in F \;&\Longleftrightarrow\; \varphi_F(x, y, q)
\end{align*}
Now $f(x) > g(y)$ if and only if
$\exists q.\; (x, y, q) \in G \cap F$, which is equivalent to
the $\Sigma^0_n$-forumla
\[
\exists q.\; \varphi_G(x, y, q) \;\land\; \varphi_F(x, y, q).
\]

\item
Follows from (a).

\item Addition, subtraction, and multiplication are continuous operations.

\item Division is discontinuous only at $g(x) = 0$.
We show this explicitly.
By assumption,
for any $\varepsilon > 0$ there is a $k_0$ such that for all $k > k_0$
\[
| \phi(x, k) - f(x) | < \varepsilon
\quad\text{and}\quad
| \psi(x, k) - g(x) | < \varepsilon.
\]
We assume without loss of generality that $\varepsilon < |g(x)|$,
since $g(x) \neq 0$ by assumption.
\begin{align*}
&\phantom{=~} \left| \frac{\phi(x, k)}{\psi(x, k)} - \frac{f(x)}{g(x)} \right| \\
&=    \left| \frac{\phi(x, k) g(x) - f(x) \psi(x, k)}{\psi(x, k) g(x)} \right| \\
&\leq \frac{|\phi(x, k) g(x) - f(x) g(x)| + |f(x) g(x) - f(x) \psi(x, k)|}{|\psi(x, k) g(x)|} \\
&<    \frac{\varepsilon |g(x)| + |f(x)| \varepsilon}{|\psi(x, k) g(x)|} \\
\intertext{
with $|\psi(x, k) g(x)|
= |\psi(x, k)| \cdot |g(x)|
> (|g(x)| - \varepsilon) |g(x)|$,
}
&<    \varepsilon \cdot \frac{|g(x)| + |f(x)|}{(|g(x)| - \varepsilon) |g(x)|}
 \xrightarrow{\varepsilon \to 0} 0,
\end{align*}
therefore $f(x) / g(x) = \lim_{k \to \infty} \phi(x, k) / \psi(x, k)$.

\item Follows from the fact that the logarithm is computable.
\qedhere
\end{enumerate}
\end{proof}

\section{The Complexity of Solomonoff Induction}
\label{sec:complexity-Solomonoff}

In this section, we derive the computability results for Solomonoff's prior
as stated in \autoref{tab:complexity-induction}.

Since $M$ is lower semicomputable,
$M\norm$ is limit computable
by \autoref{lem:computable-reals} (c) and (d).
When using the Solomonoff prior $M$
(or one of its sisters $M\norm$, $\MM$, or $\MM\norm$
defined in \autoref{def:Solomonoff-normalization} and
\autoref{eq:def-MM})
for sequence prediction,
we need to compute the conditional probability
$M(xy \mid x) = M(xy) / M(x)$
for finite strings $x, y \in \X^*$.
Because $M(x) > 0$ for all finite strings $x \in \X^*$,
this quotient is well-defined.

\begin{figure}[t]
\begin{align*}
M(xy \mid x) > q
~\Longleftrightarrow~
\forall m \exists k.\; \tfrac{\phi(xy, k)}{\phi(x, m)} > q
~\Longleftrightarrow~
\exists k \exists m_0 \forall m \geq m_0.\; \tfrac{\phi(xy, k)}{\phi(x, m)} > q
\end{align*}
\caption[Definition of conditional $M$ as a $\Delta^0_2$-formula]{%
A $\Pi^0_2$-formula and an equivalent $\Sigma^0_2$-formula
defining conditional $M$.
Here $\phi(x, k)$ denotes a computable function that lower semicomputes $M(x)$.
}\label{fig:conditional-M}
\end{figure}

\begin{theorem}[Complexity of $M$, $M\norm$, $\MM$, and $\MM\norm$] \mbox{}
\label{thm:complexity-M}\index{computable}
\index{Solomonoff!prior}\index{measure!mixture}
\begin{enumerate}[(a)]
\item $M(x)$ is lower semicomputable
\item $M(xy \mid x)$ is limit computable
\item $M\norm(x)$ is limit computable
\item $M\norm(xy \mid x)$ is limit computable
\item $\MM(x)$ is $\Pi^0_2$-computable
\item $\MM(xy \mid x)$ is $\Delta^0_3$-computable
\item $\MM\norm(x)$ is $\Delta^0_3$-computable
\item $\MM\norm(xy \mid x)$ is $\Delta^0_3$-computable
\end{enumerate}
\end{theorem}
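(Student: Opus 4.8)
The plan is to establish the eight computability claims in \autoref{thm:complexity-M} by bootstrapping up the arithmetical hierarchy, using \autoref{lem:computable-reals} as the engine that lets arithmetic operations (quotients, sums, logs) preserve or at worst nudge the complexity level. Item (a) is essentially the definition: $M(x) = \sum_{p:\,x \sqsubseteq U(p)} 2^{-|p|}$ is lower semicomputable because we can enumerate programs $p$ with $x \sqsubseteq U(p)$ and accumulate their weights, producing a monotone rational approximation from below; this is exactly the content of \citet[Lem.~4.5.3]{LV:2008} and is already stated in \autoref{ex:Solomonoff-prior}. So the only work is (b)--(h).

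First I would prove (b). Let $\phi(x,k)$ be the computable function that lower semicomputes $M(x)$, so $\phi(x,k) \nearrow M(x)$. Then $M(xy \mid x) = M(xy)/M(x) = \lim_k \phi(xy,k) / \lim_m \phi(x,m)$. Since $M(x) > 0$ for every finite string (the empty program or any program outputting a superstring of $x$ contributes), the quotient is well-defined; \autoref{lem:computable-reals}(d) applied with $n=2$ gives that it is limit computable, because a lower-semicomputable function is trivially $\Delta^0_2$-computable. Concretely the argument can be made explicit by the two equivalent formulas in \autoref{fig:conditional-M}: $M(xy\mid x) > q$ iff $\forall m\, \exists k.\ \phi(xy,k)/\phi(x,m) > q$, which is $\Pi^0_2$, and also equals $\exists k\, \exists m_0\, \forall m \geq m_0.\ \phi(xy,k)/\phi(x,m) > q$, which is $\Sigma^0_2$; hence $\Delta^0_2$, i.e.\ limit computable. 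For (c), $M\norm$ is defined by the recursion \eqref{eq:normalization}: $M\norm(xa) = M\norm(x)\, M(xa) / \sum_{b\in\X} M(xb)$. An easy induction on $|x|$ together with \autoref{lem:computable-reals}(c),(d) (finite sums and quotients of limit-computable functions, with nonzero denominator $\sum_b M(xb) \geq M(x) \cdot (\text{something positive})$ --- actually one should check $\sum_b M(xb) > 0$, which holds since $M(x) \leq$ it is false; rather $M(x) \geq \sum_b M(xb)$, so I need the lower bound $\sum_b M(xb) > 0$ directly, which follows because some program outputs a superstring of $xb$ for some $b$) shows $M\norm(x)$ is limit computable. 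Then (d) follows from (c) by one more application of \autoref{lem:computable-reals}(d), since $M\norm(xy \mid x) = M\norm(xy)/M\norm(x)$ with $M\norm(x) > 0$.

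The harder half is (e)--(h), concerning the measure mixture $\MM(x) = \lim_{n\to\infty} \sum_{y \in \X^n} M(xy)$. For (e) I would show $\MM(x)$ is $\Pi^0_2$-computable. The key observation is that $\sum_{y\in\X^n} M(xy)$ is monotone \emph{decreasing} in $n$ (since $M$ is a semimeasure: $M(z) \geq \sum_{a} M(za)$, so summing over all length-$n$ extensions dominates the sum over length-$(n+1)$ extensions), and each term $\sum_{y\in\X^n} M(xy)$ is itself lower semicomputable. So $\MM(x)$ is the infimum over $n$ of a decreasing sequence of lower-semicomputable reals. Then $\MM(x) < q$ iff $\exists n.\ \sum_{y\in\X^n} M(xy) < q$; since $\sum_{y\in\X^n} M(xy)$ is lower semicomputable, "$< q$" is $\Pi^0_1$, so the whole statement is $\Sigma^0_1 \circ \Pi^0_1 \subseteq \Sigma^0_2$; hence $\{(x,q) \mid \MM(x) < q\}$ is $\Sigma^0_2$, i.e.\ $\{(x,q)\mid \MM(x) > q\}$ is $\Pi^0_2$-computable. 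For (f): $\MM(xy\mid x) = \MM(xy)/\MM(x)$, a quotient of two $\Pi^0_2$-computable functions (the denominator is positive since $\MM(x) \geq $ the contribution of any program generating an infinite string through $x$, and such programs exist; one can also appeal to $\MM(\epsilon) > 0$ and normalization). A $\Pi^0_2$-computable function is $\Delta^0_3$-computable, so \autoref{lem:computable-reals}(d) with $n=3$ gives $\MM(xy\mid x)$ is $\Delta^0_3$-computable. For (g), $\MM\norm$: as noted in \autoref{ex:measure-mixture}, normalizing $\MM$ is just multiplication by the constant $1/\MM(\epsilon)$, so $\MM\norm(x) = \MM(x)/\MM(\epsilon)$, again a quotient, hence $\Delta^0_3$-computable by \autoref{lem:computable-reals}(d). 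And (h) follows from (g) exactly as (f) followed from the $\Pi^0_2$-computability of $\MM$, or directly: $\MM\norm(xy\mid x) = \MM\norm(xy)/\MM\norm(x) = \MM(xy)/\MM(x) = \MM(xy\mid x)$, so (h) is literally (f).

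\textbf{Main obstacle.} The routine parts are the quotient manipulations, which \autoref{lem:computable-reals} handles uniformly; the genuinely delicate point is pinning down the exact quantifier complexity for $\MM$ in part (e) --- specifically verifying that $n \mapsto \sum_{y\in\X^n} M(xy)$ is monotone decreasing (this uses the semimeasure inequality carefully, summed over a level of the tree) and that each such finite sum is lower semicomputable uniformly in $n$, so that the $\exists n$ quantifier sits cleanly on top of a $\Pi^0_1$ matrix. One must also be careful that all denominators ($M(x)$, $\sum_b M(xb)$, $M\norm(x)$, $\MM(x)$, $\MM(\epsilon)$) are provably bounded away from $0$ --- or at least positive --- so that \autoref{lem:computable-reals}(d) applies; for $M$ this is immediate, but for $\MM$ it requires noting that there is at least one program producing an infinite string that begins with $x$, which in turn needs $\X$ to have at least two symbols (assumed) and a standard coding argument. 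I expect the write-up of (e) to be where essentially all the real content lies; (a)--(d) and (f)--(h) are then short corollaries.
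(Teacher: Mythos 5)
Your proof is correct and essentially the paper's: (a) is by citation, (b)--(d) and (f)--(h) follow from \autoref{lem:computable-reals} once the relevant denominators are checked positive, and (e) rests on exactly the two facts the paper uses---$\sum_{y\in\X^n}M(xy)$ is nonincreasing in $n$ by the semimeasure inequality and uniformly lower semicomputable---your $\Sigma^0_2$ description of $\{(x,q)\mid\MM(x)<q\}$ being just the dual of the paper's $\Pi^0_2$ formula $\forall n\,\exists k\;\sum_{y\in\X^n}\phi(xy,k)>q$. One cosmetic slip: for a lower semicomputable quantity the predicate ``$<q$'' is $\Sigma^0_2$ rather than $\Pi^0_1$ (only ``$\leq q$'' is $\Pi^0_1$), but an existential number quantifier over a $\Sigma^0_2$ matrix is still $\Sigma^0_2$, so your classification of $\MM$ (and hence of (f)--(h)) is unaffected.
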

\begin{proof}
\begin{enumerate}[(a)]
\item By \citet[Thm.~4.5.2]{LV:2008}.
	Intuitively, we can run all programs in parallel and get
	monotonely increasing lower bounds for $M(x)$ by adding $2^{-|p|}$
	every time a program $p$ has completed outputting $x$.
\item From (a) and \autoref{lem:computable-reals}d
	since $M(x) > 0$
	(see also \autoref{fig:conditional-M}). 
\item By \autoref{lem:computable-reals}cd, and $M(x) > 0$.
\item By (iii), \autoref{lem:computable-reals}d since $M\norm(x) \geq M(x) > 0$.
\item Let $\phi$ be a computable function that lower semicomputes $M$.
Since $M$ is a semimeasure, $M(xy) \geq \sum_z M(xyz)$,
hence $\sum_{y \in \X^n} M(xy)$ is nonincreasing in $n$ and thus
$\MM(x) > q$ iff
$\forall n \exists k \sum_{y \in \X^n} \phi(xy, k) > q$.
\item From (v) and \autoref{lem:computable-reals}d
	since $\MM(x) > 0 $.
\item From (v) and \autoref{lem:computable-reals}d.
\item From (vi) and \autoref{lem:computable-reals}d
	since $\MM\norm(x) \geq \MM(x) > 0$.
\qedhere
\end{enumerate}
\end{proof}

We proceed to show that these bounds are in fact the best possible ones.
If $M$ were $\Delta^0_1$-computable,
then so would be the conditional semimeasure $M(\,\cdot \mid \cdot\,)$.
Thus the $M$-adversarial sequence $z_{1:\infty}$
defined in \autoref{ex:adversarial-sequence}
would be computable and hence
corresponds to a computable deterministic measure $\mu$.
However, we have $M(z_{1:t}) \leq 2^{-t}$ by construction,
so dominance $M(x) \geq w(\mu) \mu(x)$ with $w(\mu) > 0$
yields a contradiction with $t \to \infty$:
\[
2^{-t} \geq M(z_{1:t}) \geq w(\mu) \mu(z_{1:t}) = w(\mu) > 0
\]
By the same argument,
the normalized Solomonoff prior $M\norm$ cannot be $\Delta^0_1$-com\-putable.
However, since it is a measure,
$\Sigma^0_1$- or $\Pi^0_1$-computability would entail $\Delta^0_1$-computability.

For $\MM$ and $\MM\norm$ we prove the following two lower bounds
for specific universal Turing machines.

\begin{theorem}[$\MM$ is not Limit Computable]
\label{thm:MM-is-not-Delta2}\index{measure!mixture}
There is a universal Turing machine $U'$ such that
the set $\{ (x, q) \mid \MM_{U'}(x) > q \}$ is not in $\Delta^0_2$.
\end{theorem}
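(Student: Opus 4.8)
The plan is to construct a universal monotone Turing machine $U'$ whose measure mixture $\MM_{U'}$ encodes a properly $\Pi^0_2$-complete set, so that limit computability (i.e., $\Delta^0_2$-ness) would contradict the hierarchy theorem. Recall from \autoref{thm:complexity-M}(e) that $\MM(x) > q$ is expressible as $\forall n \exists k.\; \sum_{y \in \X^n} \phi(xy, k) > q$, which is $\Pi^0_2$; the goal is to show this cannot be improved in general. The source of the $\Pi^0_2$ behavior is precisely the infinite conjunction over $n$: $\MM(x)$ discards the contribution of every program that halts (or stops writing) after finitely many symbols, and detecting ``program $p$ writes infinitely much'' is the canonical $\Pi^0_2$ question.

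First I would fix a $\Pi^0_2$-complete set $A \subseteq \mathbb{N}$, say $A = \{ e \mid W_e \text{ is infinite} \}$ (the index set of Turing machines with infinite domain), which is $\Pi^0_2$-complete by the standard classification. Then I would design $U'$ so that, on a suitable family of programs $p_e$ of controlled length, $U'(p_e)$ simulates the $e$-th machine and outputs one fresh output symbol each time that machine enumerates a new element of its domain. Thus $U'(p_e)$ produces an infinite string if and only if $e \in A$, and a finite string otherwise. By routing these programs so that they all begin by writing a common prefix $x_e$ (or by reading the action/history encoding of $e$ off the input tape), one arranges that $\MM_{U'}(x_e)$ is bounded below by $2^{-|p_e|} > 0$ exactly when $e \in A$, and receives no contribution from $p_e$ when $e \notin A$. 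With a bit of care to make the remaining ``background'' mass from all other programs lie strictly below the threshold $2^{-|p_e|}$ on the relevant cylinder --- e.g.\ by pushing all genuinely universal behavior behind a long prefix that the $x_e$ avoid, exactly the trick used for the indifference prior in \autoref{thm:indifference-prior} --- we get a rational threshold $q_e$ and a string $x_e$ such that $\MM_{U'}(x_e) > q_e \iff e \in A$. Since $e \mapsto (x_e, q_e)$ is computable, this is a many-one reduction of $A$ to $\{(x,q) \mid \MM_{U'}(x) > q\}$, so the latter is $\Pi^0_2$-hard and hence not in $\Delta^0_2$.

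The main obstacle I expect is the bookkeeping needed to guarantee that $U'$ is genuinely universal while still having $\MM_{U'}(x_e)$ cleanly controlled: I must ensure (1) that the ``diagnostic'' programs $p_e$ really are valid monotone programs that never produce spurious output, (2) that $U'$ simulates some fixed reference UTM $U$ faithfully on all other inputs so that $U'$ is universal, and (3) that on each cylinder $\Gamma_{x_e}$ the total mass from programs other than $p_e$ is small enough not to interfere --- this last point is why one wants the $x_e$ to be ``improbable'' strings under $U$, achieved by prefixing the reference simulation with a long reserved block. I would handle (2) and (3) simultaneously by defining $U'(0^N p, a) := U(p, a)$ for a fixed large reserved length $N$ and reserving short programs starting with $1$ for the diagnostics, so that all $x_e$ can be taken to start with the symbol that only diagnostic programs ever emit. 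The verification that $\MM_{U'}$ then behaves as claimed is then a routine estimate. A symmetric argument, normalizing and tracking one extra level, gives the corresponding lower bound for $\MM\norm$ and the conditional versions, matching \autoref{tab:complexity-induction}.
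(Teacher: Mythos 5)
Your first half (diagnostic programs whose output is infinite iff a $\Pi^0_2$-condition holds, routed through a universal machine $U'$) is exactly the right idea and matches the paper's construction. The gap is in the reduction mechanism: you want a \emph{single-string threshold} test, i.e.\ a computable map $e \mapsto (x_e, q_e)$ with $\MM_{U'}(x_e) > q_e \iff e \in A$, and you claim the needed estimate on the ``background'' mass is routine. It is not, and in fact it cannot be arranged as you describe. Writing $B_e$ for the contribution to $\MM_{U'}(x_e)$ from the simulation branch, you need a computable rational $q_e$ with $B_e \leq q_e < B_e + 2^{-|p_e|}$. But $B_e = 2^{-N}\MM_U(x_e)$ is itself an incomputable real (essentially an $\MM$-value of the reference machine), so you cannot compute such a $q_e$ unless you force $B_e \leq 2^{-|p_e|}$ uniformly in $e$ and take $q_e = 2^{-|p_e|}$. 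That bound fails for infinitely many $e$: reserving the \emph{input} prefix $0^N$ for the simulation branch does not prevent simulated machines from \emph{outputting} strings beginning with $x_e$ (universality forces $U'(0^N p) = U(p)$ verbatim, so every cylinder $\Gamma_{x_e}$ receives background mass $2^{-N}\MM_U(x_e) \geq 2^{-N-K(e)-O(1)}$), whereas $|p_e|$ is the length of a fixed computable prefix code for $e$, roughly $\log e$. For simple indices $e$ (say powers of two) one has $K(e) \approx \log\log e \ll \log e$, so $B_e \gg 2^{-|p_e|}$ no matter how large the fixed $N$ is. Your remark about ``pushing universal behavior behind a long prefix that the $x_e$ avoid'' conflates program prefixes with output prefixes; the output side cannot be reserved without destroying universality.

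The paper's proof avoids thresholding a single incomputable value altogether. It adds \emph{two} simulation branches, $U'(00p) = U(p)$ and $U'(01p) = U(p)$ with the output bitwise inverted, so that the background masses on the two cylinders $\Gamma_{1^{n+1}0}$ and $\Gamma_{0^{n+1}1}$ are exactly equal, namely $\tfrac14\bigl(\MM_U(1^{n+1}0) + \MM_U(0^{n+1}1)\bigr)$. The diagnostic program $1^{n+1}0$ then makes the \emph{difference} $\MM_{U'}(1^{n+1}0) - \MM_{U'}(0^{n+1}1)$ equal to either $0$ or exactly $2^{-n-2}$ according to whether $n \in A$, and this gap between two values of $\MM_{U'}$ can be decided in the limit from any limit-computable approximation of $\MM_{U'}$, yielding $A \in \Delta^0_2$ and a contradiction. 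If you want to salvage your write-up, replace the single-cylinder threshold by this paired, symmetrized comparison (or find another device that cancels the unknown background); as it stands, the ``routine estimate'' step is where the argument breaks.
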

\begin{proof}
Assume the contrary and
let $A \in \Pi^0_2 \setminus \Delta^0_2$ and
$\eta$ be a quantifier-free first-order formula such that
\begin{equation}
\label{eq:eta}
n \in A
\quad\Longleftrightarrow\quad
\forall k \exists i.\; \eta(n, k, i).
\end{equation}
For each $n \in \mathbb{N}$,
we define the program $p_n$ as follows.
\begin{center}
\begin{algorithmic}[1]
\Procedure{$p_n$}{}
\State output $1^{n+1} 0$
\State $k \gets 0$
\While{true}
	\State $i \gets 0$
	\While{not $\eta(n, k, i)$}
		\State $i \gets i + 1$
	\EndWhile
	\State $k \gets k + 1$
	\State output $0$
\EndWhile
\EndProcedure
\end{algorithmic}
\end{center}
Each program $p_n$ always outputs $1^{n+1}0$.
Furthermore, the program $p_n$ outputs the infinite string $1^{n+1}0^\infty$
if and only if $n \in A$ by \eqref{eq:eta}.
We define $U'$ as follows using our reference machine $U$.
\begin{itemize}
\item $U'(1^{n+1}0)$: Run $p_n$.
\item $U'(00p)$: Run $U(p)$.
\item $U'(01p)$: Run $U(p)$ and bitwise invert its output.
\end{itemize}
By construction, $U'$ is a universal Turing machine.
No $p_n$ outputs a string starting with $0^{n+1} 1$,
therefore $\MM_{U'}(0^{n+1}1) = \tfrac{1}{4} \big( \MM_U(0^{n+1}1) + \MM_U(1^{n+1} 0) \big)$.
Hence
\begin{align*}
   \MM_{U'}(1^{n+1} 0)
&= 2^{-n-2} \one_{A}(n)
     + \tfrac{1}{4} \MM_U(1^{n+1}0) + \tfrac{1}{4} \MM_U(0^{n+1}1) \\
&= 2^{-n-2} \one_{A}(n) + \MM_{U'}(0^{n+1}1)
\end{align*}
If $n \notin A$, then $\MM_{U'}(1^{n+1}0) = \MM_{U'}(0^{n+1}1)$.
Otherwise, we have
$|\MM_{U'}(1^{n+1}0) - \MM_{U'}(0^{n+1}1)| = 2^{-n-2}$.

Now we assume that $\MM_{U'}$ is limit computable, i.e.,
there is a computable function $\phi: \X^* \times \mathbb{N} \to \mathbb{Q}$
such that
$\lim_{k \to \infty} \phi(x, k) = \MM_{U'}(x)$.
We get that
\[
n \in A
\;\Longleftrightarrow\;
\lim_{k \to \infty} \phi(0^{n+1}1, k) - \phi(1^{n+1}0, k) \geq 2^{-n-2},
\]
thus $A$ is limit computable, a contradiction.
\end{proof}

\begin{corollary}[$\MM\norm$ is not $\Sigma^0_2$- or $\Pi^0_2$-computable]
\label{cor:MMnorm-is-not-2}\index{measure!mixture}
There is a universal Turing machine $U'$ such that
$\{ (x, q) \mid {\MM\norm}_{U'}(x) > q \}$ is not in $\Sigma^0_2$ or $\Pi^0_2$.
\end{corollary}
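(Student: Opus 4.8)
The plan is to reduce the statement to \autoref{thm:MM-is-not-Delta2} by exploiting that $\MM\norm$, unlike $\MM$, is a genuine probability measure, so that $\Sigma^0_2$- or $\Pi^0_2$-computability collapses down to limit computability.

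First I would record the following arithmetical closure fact: if $\mu$ is a probability measure on $\X^\infty$ with $\mu(\epsilon)=1$ that is $\Sigma^0_2$-computable or $\Pi^0_2$-computable, then $\mu$ is limit computable (i.e.\ $\Delta^0_2$-computable). This mirrors the classical fact that a lower semicomputable measure is computable, relativized to a halting oracle. For a measure we have $\mu(x) = 1 - \sum_{y \in \X^{|x|}\setminus\{x\}}\mu(y)$, and a finite sum $\sum_y g(y)$ of reals each of which admits a $\Sigma^0_2$ predicate ``$g(y) > q$'' again admits a $\Sigma^0_2$ predicate ``$\sum_y g(y) > r$'' — write this as ``there are rationals $(q_y)_y$ with $\sum_y q_y > r$ and $g(y) > q_y$ for all $y$'', using closure of $\Sigma^0_2$ under finite conjunction and existential number quantification; the same holds with ``$<$''. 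Hence from $\Sigma^0_2$-computability of $\mu$ one also gets that $\{(x,q)\mid\mu(x) < q\}$ is $\Sigma^0_2$, and from $\Pi^0_2$-computability one gets (via $\mu(x)>q \Leftrightarrow \sum_{y\neq x}\mu(y) < 1-q$) that $\{(x,q)\mid\mu(x)>q\}$ is $\Sigma^0_2$ as well; in either case both one-sided cuts of $\mu(x)$ are $\Sigma^0_2$ uniformly in $x$, which is exactly saying $\mu$ is $\emptyset'$-computable, i.e.\ $\Delta^0_2$-computable. Since \autoref{thm:complexity-M} already gives $\MM\norm$ as $\Delta^0_3$-computable, this fact combined with the lower bound below pins $\MM\norm$ between $\Delta^0_2$ and $\Delta^0_3$.

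By this fact it suffices to exhibit a universal monotone Turing machine $U'$ for which $\MM\norm_{U'}$ is not limit computable — this rules out both $\Sigma^0_2$- and $\Pi^0_2$-computability at once. I would build $U'$ in the spirit of the construction in the proof of \autoref{thm:MM-is-not-Delta2}, but arranged so that the dependence on the fixed set $A \in \Pi^0_2 \setminus \Delta^0_2$ is localized to pairwise disjoint ``gadgets'' in the output tree rather than spread over a tail of $A$. Each gadget $n$ lives under a private address $\sigma_n := 1^{n+1}0$, always outputs $\sigma_n$, and then emits one further bit followed by a constant tail whose survival into $\MM$ is governed by whether the $\forall k\,\exists i$-search defining $n \in A$ ever completes (a finitely terminating program contributes nothing to the measure mixture $\MM$). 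If the gadgets and the universality-providing sub-machine are chosen so that no other program ever outputs a string extending $\sigma_n b$, then the one-step conditional $\MM\norm_{U'}(\sigma_n b \mid \sigma_n)$ takes one of two values separated by a fixed positive rational gap according to whether $n \in A$; since $\MM\norm_{U'}$ is a probability measure, these conditionals are computable from $\MM\norm_{U'}$, so limit computability of $\MM\norm_{U'}$ would make $A$ limit computable, a contradiction.

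The main obstacle is precisely this disjointness bookkeeping in the last step: $U'$ must still be universal, yet its universality-providing sub-machine must not contribute any mass to the cylinders $\sigma_n b\cdots$ that the reduction inspects, since otherwise the two candidate values of $\MM\norm_{U'}(\sigma_n b \mid \sigma_n)$ get perturbed by (non-limit-computable) $\MM_U$-mass and can no longer be kept apart by a uniform gap. I expect this to be resolvable by routing the sub-machine through a fixed address prefix and, if necessary, mirroring the complement trick of the proof of \autoref{thm:MM-is-not-Delta2} so that the uncontrolled mass cancels out of the relevant difference; once that is arranged, the value-gap estimate and the reduction from $A$ proceed exactly as at the end of that proof.
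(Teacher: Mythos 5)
Your opening step is right and is exactly how the paper closes the argument: since ${\MM\norm}_{U'}$ is a probability measure, a $\Sigma^0_2$ (or $\Pi^0_2$) description of one side of the cut yields the other side via $\mu(x)=1-\sum_{y\neq x,\,|y|=|x|}\mu(y)$ and closure of $\Sigma^0_2$ under finite sums, so $\Sigma^0_2$- or $\Pi^0_2$-computability collapses to $\Delta^0_2$-computability; hence it suffices to rule out limit computability.

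The gap is in how you propose to rule out limit computability. Your primary construction demands that no program of $U'$ other than gadget $n$ ever outputs a string extending $1^{n+1}0\,b$, and this is not a bookkeeping difficulty but an impossibility: universality forces $U'$ to simulate the reference machine $U$ (in \autoref{thm:MM-is-not-Delta2} through the program prefixes $00$ and $01$), and a program-address prefix restricts inputs, not outputs, so the simulated $U$ still outputs every computable infinite sequence, including ones extending $1^{n+1}0\,b$. Consequently $\MM_{U'}$ puts positive mass, of the form $\tfrac14\MM_U(1^{n+1}0\,b)+\tfrac14\MM_U(0^{n+1}1\,\bar b)$, into the cylinder you inspect, an amount you have no effective control over; and since a conditional is a ratio, the cancellation trick that rescues the original proof for \emph{differences} of cylinder values does not apply, so the one-step conditional ${\MM\norm}_{U'}(1^{n+1}0\,b\mid 1^{n+1}0)$ is not confined to two values separated by a fixed rational gap.

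Your fallback (revert to the difference $\MM_{U'}(1^{n+1}0)-\MM_{U'}(0^{n+1}1)$ with the inversion trick) is the right move, but as written it only concerns $\MM_{U'}$, and you never supply the one fact needed to transfer it to ${\MM\norm}_{U'}$ — which is the actual content of the corollary's proof. Because $\MM$ already satisfies $\MM(x)=\sum_a\MM(xa)$ for every $x$, Solomonoff normalization of $\MM$ is just multiplication by the constant $c=1/\MM(\epsilon)\geq 1$ (\autoref{ex:measure-mixture}). Hence the \emph{unchanged} machine $U'$ of \autoref{thm:MM-is-not-Delta2} works: the difference for ${\MM\norm}_{U'}$ equals $c\,2^{-n-2}\one_A(n)$, which is either $0$ or at least $2^{-k-n-2}$ for a fixed $k$ with $2^{-k}<c$, so limit computability of ${\MM\norm}_{U'}$ would again make $A$ limit computable. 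With that observation no new gadget construction (and none of the disjointness you were trying to engineer) is needed; without it, your "the value-gap estimate proceeds exactly as at the end of that proof" is not justified for the normalized mixture.
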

\begin{proof}
Since $\MM\norm = c \cdot \MM$,
there exists a $k \in \mathbb{N}$ such that $2^{-k} < c$
(even if we do not know the value of $k$).
We can show that
the set $\{ (x, q) \mid {\MM\norm}_{U'}(x) > q \}$ is not in $\Delta^0_2$
analogously to the proof of \autoref{thm:MM-is-not-Delta2},
using
\[
n \in A
\;\Longleftrightarrow\;
\lim_{k \to \infty} \phi(0^{n+1}1, k) - \phi(1^{n+1}0, k) \geq 2^{-k-n-2}.
\]
If $\MM\norm$ were $\Sigma^0_2$-computable or $\Pi^0_2$-computable,
this would imply that $\MM\norm$ is $\Delta^0_2$-computable
since $\MM\norm$ is a measure, a contradiction.
\end{proof}

Since $M(\epsilon) = 1$,
we have $M(x \mid \epsilon) = M(x)$,
so the conditional probability $M(xy \mid x)$ has at least the same complexity
as $M$.
Analogously for $M\norm$ and $\MM\norm$ since they are measures.
For $\MM$, we have that $\MM(x \mid \epsilon) = \MM\norm(x)$,
so \autoref{cor:MMnorm-is-not-2} applies.
All that remains to prove is
that conditional $M$ is not lower semicomputable.

\begin{theorem}[Conditional $M$ is not Lower Semicomputable]
\label{thm:M-conditional-is-not-Sigma1}\index{Solomonoff!prior}
The set $\{ (x, xy, q) \mid M(xy \mid x) > q \}$ is not recursively enumerable.
\end{theorem}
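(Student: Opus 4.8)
The plan is to prove the statement by contradiction. Suppose the set $S := \{ (x, xy, q) \in \X^* \times \X^* \times \mathbb{Q} \mid M(xy \mid x) > q \}$ is recursively enumerable; equivalently, the conditional $M(\,\cdot \mid \cdot\,)$ is lower semicomputable, witnessed by a computable $\phi$ with $\phi(x, y, k) \nearrow M(xy \mid x)$ as $k \to \infty$. Since we already know $M(xy \mid x)$ is limit computable, the content of the theorem is only that this is strictly worse than lower semicomputable. The contradiction I aim for is with the domination property of Solomonoff's prior~(\autoref{ex:Solomonoff-prior}): $M$ dominates every lower semicomputable semimeasure, so for any computable deterministic measure $\mu$ there is a constant $c > 0$ with $M(x) \geq c\,\mu(x)$ for all $x \in \X^*$. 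This is the same mechanism used to rule out that $M$ itself is computable in the discussion preceding the theorem.

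Concretely, I would use $\phi$ to construct a \emph{computable} sequence $z_{1:\infty} \in \X^\infty$ along which $M$ decays, so that $M(z_{1:t}) \to 0$. Since $M(z_{1:t}) = \prod_{k=1}^{t} M(z_k \mid z_{<k})$ (a telescoping identity valid for semimeasures, using $M(\epsilon) = 1$), it suffices to arrange $M(z_t \mid z_{<t}) \leq 1 - \delta_t$ for rationals $\delta_t > 0$ with $\sum_t \delta_t = \infty$, since then $M(z_{1:t}) \leq \prod_{k=1}^{t}(1 - \delta_k) \to 0$. Once $z$ is built, the deterministic measure $\mu$ with $\mu(z_{1:t}) = 1$ for all $t$ is a computable measure, so domination gives $0 < c \leq M(z_{1:t}) \to 0$, a contradiction.

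The construction rests on the elementary fact that in any semimeasure some symbol $a \in \X$ satisfies $M(xa \mid x) \leq 1 - 1/\#\X$, for otherwise $\sum_{a \in \X} M(xa \mid x) > \#\X\,(1 - 1/\#\X) \geq 1$. Given the prefix $z_{<t}$ already constructed, I would dovetail over $a \in \X$ and stages $k$ searching for a pair with $\sum_{b \neq a} \phi(z_{<t}, b, k) \geq 1/\#\X$; for such a pair $M(z_{<t}a \mid z_{<t}) \leq 1 - \sum_{b \neq a} M(z_{<t}b \mid z_{<t}) \leq 1 - 1/\#\X$, so we may set $z_t := a$. In parallel I would also dovetail over $a \in \X$, stages $k$, and rationals $q$ searching for $\phi(z_{<t}, a, k) \geq q$; since $M$ is strictly positive on finite strings, $M(z_{<t}a \mid z_{<t}) > 0$ for every $a$, so such a triple is always found, after which any symbol $a' \neq a$ has $M(z_{<t}a' \mid z_{<t}) \leq 1 - q$, and we may take $z_t := a'$ with $\delta_t := q$.

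The hard part is the interaction of these two mechanisms with the semimeasure's defect $g(x) := M(x) - \sum_{a \in \X} M(xa)$. The first (strong) search need not terminate: it does so only when some $a$ satisfies $\sum_{b \neq a} M(z_{<t}b \mid z_{<t}) \geq 1/\#\X$, which can fail precisely when the normalized defect $g(z_{<t})/M(z_{<t})$ is close to $1$ — even though then \emph{every} continuation has tiny conditional probability and any choice would be fine. The second search always terminates but may only certify a minuscule $\delta_t = q$, so the series $\sum_t \delta_t$ might converge and the decay argument would collapse. The heart of the proof is therefore a scheduling of the two searches — running the strong search while only accepting values of the weak search above a slowly vanishing threshold such as $1/t$, and falling back otherwise — together with a verification that the resulting $z$ is total and computable and that $\sum_t \delta_t = \infty$. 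Showing that one cannot get permanently trapped at a high-defect prefix without the weak search nonetheless supplying enough multiplicative decay is the delicate point I expect to take the most work.
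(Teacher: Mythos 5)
Your overall strategy (build a computable adversarial sequence $z$, then contradict dominance $M \timesgeq \mu$ for the computable deterministic measure $\mu$ concentrated on $z$) is a legitimate route, and the individual facts you check are correct; but the step you yourself flag as the "delicate point" is a genuine gap, and with one-symbol lookahead I do not see how to close it. The dilemma is this: at each step your procedure must halt and output a symbol, yet the only certificates lower semicomputability provides are lower bounds on the conditionals of the \emph{other} symbols. At a prefix where the semimeasure defect is large, i.e.\ $\sum_{a \in \X} M(z_{<t}a \mid z_{<t})$ is below whatever threshold (a constant, or $1/t$) you are willing to accept, neither the strong search nor the thresholded weak search ever terminates, and there is no effective way to detect this and "fall back," because non-termination is not observable. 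The only fallback that restores totality is accepting arbitrarily small certificates $q$, but then you control neither $\sum_t \delta_t = \infty$ nor the actual conditional of the chosen symbol (it may be as large as $1-q$), so $M(z_{1:t})$ need not tend to $0$. Nothing in the hypotheses bounds the normalized defect of $M$ away from $1$ at the adversarially arising prefixes your construction visits, so the proposed scheduling cannot be verified without a new ingredient; it is not a routine verification.

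The missing idea — and the way the paper's proof (due to Sterkenburg) resolves exactly this — is to search over arbitrary \emph{finite extensions} of the current prefix rather than over the next symbol only. Fixing $a \neq b \in \X$, one searches for a string $y$ with a certificate $M(a \mid x(n)y) > 1/2$; such a $y$ is guaranteed to exist because $x(n)a^\infty$ is computable and $M(x_t \mid x_{<t}) \to 1$ along computable sequences (\autoref{cor:strong-merging-M}), so under your assumption the search is total; one then appends the \emph{other} symbol $b$. This yields a computable sequence whose actual next-symbol conditional is $\leq 1/2$ infinitely often, contradicting on-sequence convergence — and it would equally serve your dominance-based target, since each such step multiplies $M(z_{1:t})$ by at most $1/2$ while every other factor is $\leq 1$, so the product still tends to $0$. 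In short, your contradiction target is fine; what must replace the per-step certified-decay requirement is multi-step lookahead whose termination is supplied by Solomonoff's convergence on computable sequences.
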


We gave a different, more complicated proof in \citet{LH:2015computability2}.
The following, much simpler and more elegant proof
is due to \citet[Prop.~3]{Sterkenburg:2016}.

\begin{proof}
Assume to the contrary that $M(xy \mid x)$ is lower semicomputable.
Let $a \neq b \in \X$.
We construct an infinite string $x$ by defining its initial segments
$\epsilon =: x(0) \sqsubset x(1) \sqsubset x(2) \sqsubset \ldots \sqsubset x$.
At every step $n$, we enumerate strings $y \in \X^*$
until one is found satisfying $M(a \mid x(n) y) \geq 1/2$;
then set $x(n+1) := x(n) y b$.
This implies that for infinitely many $t$ there is an $n$ such that
$M(b \mid x_{<t}) = M(b \mid x(n) y) \leq 1 - M(a \mid x(n) y) \leq 1/2$.
Since we assumed $M(\,\cdot \mid \cdot\,)$ to be lower semicomputable,
the infinite string $x$ is computable,
and hence $M(x_t \mid x_{<t}) \to 1$ by \autoref{cor:strong-merging-M}.
But this contradicts $M(b \mid x_{<t}) \leq 1/2$ infinitely often.
\end{proof}

\section{The Complexity of AINU, AIMU, and AIXI}
\label{sec:complexity-aixi}

\subsection{Upper Bounds}
\label{ssec:upper-bounds}

In this section, we derive upper bounds
on the computability of AINU, AIMU, and AIXI.
Except for \autoref{cor:complexity-aimu},
all results in this section apply generally to
any $\nu \in \Mlscccs$.
Since the Bayesian mixture $\xi \in \Mlscccs$,
they apply to AIXI even though they are stated for AINU.

In order to position AINU in the arithmetical hierarchy,
we need to encode policies as sets of natural numbers.
For the rest of this chapter,
we assume that policies are deterministic,
thus can be represented as relations over $\H \times \A$.
These relations are easily identified with sets of natural numbers
by encoding the history into one natural number.
From now on this translation of policies
into sets of natural numbers
will be done implicitly wherever necessary.

\begin{lemma}[Policies are in $\Delta^0_n$]
\label{lem:policy-is-delta_n}\index{policy}
If a policy $\pi$ is $\Sigma^0_n$ or $\Pi^0_n$,
then $\pi$ is $\Delta^0_n$.
\end{lemma}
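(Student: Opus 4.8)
The plan is to exploit the fact that a deterministic policy is a \emph{total} function $\H \to \A$: for every history $\ae_{<t}$ there is exactly one action $a$ with $(\ae_{<t}, a) \in \pi$. Totality and functionality turn a one-sided definition into a two-sided one, which is exactly the content of the claim. Concretely, suppose $\pi$ is $\Sigma^0_n$, so that membership ``$(\ae_{<t}, a) \in \pi$'' is expressible by a $\Sigma^0_n$-formula $\varphi(\ae_{<t}, a)$. I would then observe that non-membership can \emph{also} be written as a $\Sigma^0_n$-formula, namely
\[
(\ae_{<t}, a) \notin \pi
\quad\Longleftrightarrow\quad
\exists a' \in \A.\; a' \neq a \;\land\; \varphi(\ae_{<t}, a'),
\]
using that $\A$ is finite (\assref{ass:aixi}{ass:finite-actions-and-percepts}), so the leading $\exists a'$ is a bounded quantifier and can be absorbed without raising the level. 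This shows $\pi$ is also $\Pi^0_n$, hence $\Delta^0_n$.

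The symmetric case is handled dually: if $\pi$ is $\Pi^0_n$, write ``$(\ae_{<t}, a) \notin \pi$'' as a $\Sigma^0_n$-formula $\psi(\ae_{<t}, a)$; then
\[
(\ae_{<t}, a) \in \pi
\quad\Longleftrightarrow\quad
\exists a' \in \A.\; a' \neq a \;\land\; \psi(\ae_{<t}, a'),
\]
again by functionality and totality of $\pi$ together with finiteness of $\A$, so $\pi$ is $\Sigma^0_n$ and therefore $\Delta^0_n$. The two cases together give the lemma. It is worth spelling out that $(\ae_{<t}, a') \in \pi \land (\ae_{<t}, a) \in \pi \land a \neq a'$ is impossible (functionality), and that $\exists a'.\, (\ae_{<t}, a') \in \pi$ always holds (totality), which is precisely why ``$a$ is the action'' is equivalent to ``some action $a' \neq a$ is \emph{not} the action for all the remaining choices'' — here we use $\#\A \geq 2$; if $\#\A = 1$ the policy is trivial and the statement is immediate anyway.

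I do not anticipate a genuine obstacle here — this is a routine closure argument. The only point requiring a little care is the bookkeeping for bounded quantifiers: one must invoke the standard fact (mentioned in \autoref{ssec:arithmetical hierarchy}) that prepending bounded quantifiers and duplicating like-type quantifiers does not change the arithmetical classification, so that $\exists a' \in \A$ in front of a $\Sigma^0_n$-formula still yields a $\Sigma^0_n$-formula. I would also make explicit that the encoding of histories $\ae_{<t} \in \H$ as natural numbers is computable, so that the relation-to-set translation is legitimate and the quantifier $\exists a' \in \A$ really is bounded in the encoded variable. With those remarks in place the proof is two or three lines.
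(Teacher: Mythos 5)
Your first direction is fine and is essentially the paper's argument read contrapositively: the paper defines $\varphi'(h,a) := \bigwedge_{a' \neq a} \neg\varphi(h,a')$ and observes it is equivalent to $\varphi$ (by totality and functionality) and of the dual complexity class (by finiteness of $\A$); your formula $\exists a' \neq a.\;\varphi(h,a')$ for non-membership is just the negation of that same formula, so the $\Sigma^0_n \Rightarrow \Pi^0_n$ half is correct and identical in substance.

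The second half as written, however, contains a step that fails. You claim, for $\pi \in \Pi^0_n$ with complement formula $\psi$, that
\[
(\ae_{<t}, a) \in \pi
\quad\Longleftrightarrow\quad
\exists a' \in \A.\; a' \neq a \;\land\; \psi(\ae_{<t}, a'),
\]
i.e.\ ``$a$ is the chosen action iff \emph{some} other action is not chosen.'' This is false whenever $\#\A \geq 3$: if $\A = \{a, b, c\}$ and $\pi(\ae_{<t}) = b$, then $c \neq a$ and $c$ is not chosen, so the right-hand side holds while the left-hand side does not. (Your prose ``for all the remaining choices'' suggests you meant the universal version, but the displayed formula is existential, and your caveat about $\#\A = 1$ does not address this; the lemma is stated for arbitrary finite $\A$, not just the binary action space used later in the chapter.) The correct dualization is the conjunction
\[
(\ae_{<t}, a) \in \pi
\quad\Longleftrightarrow\quad
\bigwedge_{a' \in \A \setminus \{a\}} \psi(\ae_{<t}, a'),
\]
where the forward direction uses functionality and the backward direction uses totality; this finite conjunction of $\Sigma^0_n$-formulas is still $\Sigma^0_n$ by the standard closure under bounded quantification and duplication of like quantifiers cited in the preliminaries, which gives $\pi \in \Sigma^0_n$ and hence $\Delta^0_n$. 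With that one-line repair your proof coincides with the paper's.
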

\begin{proof}
Let $\varphi$ be a $\Sigma^0_n$-formula ($\Pi^0_n$-formula) defining $\pi$,
i.e., $\varphi(h, a)$ holds iff $\pi(h) = a$.
We define the formula $\varphi'$,
\[
   \varphi'(h, a)
:= \!\!\!\bigwedge_{a' \in \A \setminus \{ a \}}\!\!\! \neg \varphi(h, a').
\]
The set of actions $\A$ is finite,
hence $\varphi'$ is a $\Pi^0_n$-formula ($\Sigma^0_n$-formula).
Moreover, $\varphi'$ is equivalent to $\varphi$.
\end{proof}

To compute the optimal policy,
we need to compute the optimal value function.
The following lemma gives an upper bound on
the computability of the value function for environments in $\Mlscccs$.

\begin{lemma}[Complexity of $V^*_\nu$]
\label{lem:complexity-V}\index{value function}
For every $\nu \in \Mlscccs$,
and every lower semicomputable discount function $\gamma$,
the function $V^*_\nu$ is $\Delta^0_2$-computable.
\end{lemma}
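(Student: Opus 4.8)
**Proof plan for Lemma (Complexity of $V^*_\nu$).**

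The plan is to show that the optimal value $V^*_\nu(\ae_{<t})$ can be expressed, up to an arbitrarily small computable error, as the optimal value of a finite expectimax tree, which in turn is a computable function of finitely many values of $\nu$ and $\gamma$. Since $\nu \in \Mlscccs$ is lower semicomputable and $\gamma$ is lower semicomputable, these finite combinations are $\Delta^0_2$-computable by \autoref{lem:computable-reals}, and taking a computable limit of such approximations stays in $\Delta^0_2$.

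First I would fix a history $\ae_{<t}$ and use \autoref{lem:truncated-values} to pass from the infinite-horizon value to a truncated value: for the effective horizon $m = m(t,k) := t + H_t(2^{-k})$ we have $|V^{*}_\nu(\ae_{<t}) - V^{*,m}_\nu(\ae_{<t})| \le \Gamma_m/\Gamma_t \le 2^{-k}$. (I should remark that $H_t(\varepsilon)$ is computable from $t$ and $\varepsilon$ because $\gamma$ is computable — actually only lower semicomputable, which requires a small comment; one can replace $H_t$ by any computable upper bound on it, which exists since $\sum \gamma_k < \infty$ and $\gamma$ is lower semicomputable, or alternatively sidestep this by truncating at the first $m$ for which a lower bound on $\Gamma_m/\Gamma_t$ computed so far already exceeds nothing — the cleanest route is to note $\Gamma_t \ge \gamma_t > 0$ for the relevant discount functions; if $\Gamma_t = 0$ then $V^*_\nu = 0$ trivially.) Then I would write the explicit finite expression \eqref{eq:V-explicit} for $V^{*,m}_\nu(\ae_{<t})$: it is a max-sum over the finite set $\ae_{t:m-1} \in (\A\times\E)^{m-t}$ of computable $\gamma$-weighted rewards times products $\prod_{i=t}^{k}\nu(e_i \mid \ae_{<i}a_i)$. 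Since $\A$ and $\E$ are finite (\assref{ass:aixi}{ass:finite-actions-and-percepts}), this is a fixed finite arithmetical combination — sums, products, division by $\Gamma_t$, and finitely many $\max$ operations — of values of $\nu$ (lower semicomputable, hence $\Delta^0_2$-computable as they are limits of their monotone approximations) and of $\gamma$. By \autoref{lem:computable-reals}(a)–(d), applied repeatedly, $V^{*,m(t,k)}_\nu(\ae_{<t})$ is $\Delta^0_2$-computable uniformly in $t$ and $k$; call a $\Delta^0_1$-on-a-halting-oracle approximation of it $\psi(\ae_{<t}, k, j)$ with $\lim_j \psi(\ae_{<t},k,j) = V^{*,m(t,k)}_\nu(\ae_{<t})$.

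Finally I would assemble the limit: define $\phi(\ae_{<t}, n) := \psi(\ae_{<t}, n, n)$ on a halting oracle. By the truncation bound and the fact that $\psi(\ae_{<t},n,\cdot) \to V^{*,m(t,n)}_\nu(\ae_{<t})$, a standard diagonal argument gives $\lim_{n\to\infty}\phi(\ae_{<t},n) = V^*_\nu(\ae_{<t})$. A function that is limit-computable relative to a $\Sigma^0_1$ oracle — i.e.\ $\Delta^0_2$ relative to $\Delta^0_1$ — is $\Delta^0_2$ (the double limit collapses one level, using Post's theorem and the fact that the relevant quantifiers can be shuffled into a single $\exists\forall$ / $\forall\exists$ form), so $V^*_\nu$ is $\Delta^0_2$-computable.

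The main obstacle I expect is bookkeeping the quantifier complexity carefully: one must check that the nested limits (first the oracle-limit computing each $\nu$-value, then the limit over $k$ from truncation, then the diagonal) really collapse to $\Delta^0_2$ rather than drifting up to $\Delta^0_3$. This is exactly the kind of step where \autoref{lem:computable-reals} does the heavy lifting — it already packages "limit of limits is a limit" for arithmetical operations — so the safest presentation is to phrase everything as a single application of \autoref{lem:computable-reals} to the explicit formula \eqref{eq:V-explicit} with $m$ chosen as a computable function of a precision parameter, then invoke \autoref{lem:truncated-values} once at the end to justify that the precision parameter can be driven to $0$. The secondary subtlety is the computability of the horizon $m$ when $\gamma$ is merely lower semicomputable; I would handle this by noting it suffices to pick, computably in $t,k$, \emph{some} $m$ with $\Gamma_m/\Gamma_t \le 2^{-k}$, which is possible by dovetailing the lower-semicomputation of $\Gamma_t$ against an upper bound on $\Gamma_m$ obtained from $\sum_k \gamma_k < \infty$ — or, since the lemma is only needed downstream for computable $\gamma$ (\assref{ass:aixi}{ass:gamma-computable}), simply remark that under that standing assumption $m$ is outright computable.
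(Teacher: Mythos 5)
There is a genuine gap in the collapsing step. Your construction produces $V^*_\nu$ as a double limit: an inner limit (over $j$) coming from the fact that the truncated values $V^{*,m(t,k)}_\nu$ are only limit computable (the $\nu$-values are merely lower semicomputable), and an outer limit over the precision parameter $k$. The diagonal $\phi(n):=\psi(n,n)$ does not converge to $V^*_\nu$ in general, because you have no computable rate for the inner limit: $\psi(n,\cdot)$ may approach $V^{*,m(t,n)}_\nu$ only after an uncomputably long time, so the diagonal can stay far from every $V^{*,m(t,n)}_\nu$. The backup principle you invoke is also false as stated: a function that is limit computable relative to a $\Sigma^0_1$ oracle is $\Delta^0_3$, not $\Delta^0_2$ (``$\Delta^0_2$ relative to $\Delta^0_1$'' misidentifies the oracle). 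That this is not a mere bookkeeping issue is shown by the paper's treatment of the iterative value function: $W^*_\nu$ has exactly this non-monotone double-limit structure and is only $\Pi^0_2$-computable (\autoref{lem:complexity-W}), i.e.\ the collapse genuinely fails without extra structure. Your secondary fix for the horizon also does not work: from a merely lower semicomputable $\gamma$ one cannot compute upper bounds on the tails $\Gamma_m$ (no computable upper bound on $\Gamma_1$ is available), and retreating to computable $\gamma$ proves a weaker statement than the lemma, which explicitly allows lower semicomputable discount functions.

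The missing idea, and the paper's proof, is to exploit monotonicity instead of truncating: because rewards are nonnegative, the numerator of the explicit form \eqref{eq:V-explicit}, $\lim_{m\to\infty}\expectimax{\ae_{t:m}}\sum_{i=t}^m \gamma(i) r_i\, \nu(e_{1:i}\dmid a_{1:i})$, is nondecreasing in $m$; merging this monotone limit with the monotone lower semicomputation of $\nu$ and $\gamma$ shows the full infinite-horizon numerator is itself lower semicomputable, and the denominator $\nu(e_{<t}\dmid a_{<t})\cdot\Gamma_t$ is lower semicomputable as well, so a single application of \autoref{lem:computable-reals}d gives $\Delta^0_2$-computability--—no effective horizon is needed at all, which is also why lower semicomputable $\gamma$ suffices. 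If you want to keep your truncation route, it can be repaired without the diagonal by a sandwich argument using the modulus from \autoref{lem:truncated-values} (e.g.\ $V^*_\nu(\ae_{<t})>q$ iff $\exists k\; V^{*,m(t,k)}_\nu(\ae_{<t})>q+2^{-k+1}$, and dually for $\geq$), but you would still have to resolve the horizon computability, so the monotonicity argument is both shorter and more general.
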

\begin{proof}
The explicit form of the value function \eqref{eq:V-explicit} has numerator
\[
\lim_{m \to \infty} \expectimax{\ae_{t:m}}\;
  \sum_{i=t}^m \gamma(i) r_i \nu(e_{1:i} \dmid a_{1:i}),
\]
and denominator $\nu(e_{<t} \dmid a_{<t}) \cdot \Gamma_t$.
The numerator is nondecreasing in $m$
because we assumed rewards to be nonnegative
(\hyperref[ass:bounded-rewards]{\autoref*{ass:aixi}\ref*{ass:bounded-rewards}}).
Hence both numerator and denominator are lower semicomputable functions,
so \autoref{lem:computable-reals}d implies that
$V^*_\nu$ is $\Delta^0_2$-computable.
\end{proof}

From the optimal value function $V^*_\nu$
we get the optimal policy $\pi^*_\nu$ according to \eqref{eq:argmax}.
However, in cases where there is more than one optimal action,
we have to break an argmax tie.
This happens iff $V^*_\nu(h\alpha) = V^*_\nu(h\beta)$
for two potential actions $\alpha \neq \beta \in \A$.
This equality test is more difficult than
determining which is larger in cases where they are unequal.
Thus we get the following upper bound.

\begin{theorem}[Complexity of Optimal Policies]
\label{thm:complexity-optimal-policies}\index{policy!optimal}
For any environment $\nu$,
if $V^*_\nu$ is $\Delta^0_n$-computable,
then there is an optimal policy $\pi^*_\nu$ for the environment $\nu$
that is $\Delta^0_{n+1}$.
\end{theorem}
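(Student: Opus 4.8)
The plan is to construct $\pi^*_\nu$ level by level: for each history $h$ reachable under the policy constructed so far, pick the optimal action by comparing the values $V^*_\nu(ha)$ over $a \in \A$, and resolve ties in a canonical way (say, take the lexicographically least optimal action). The key point is that the set $\{(h,a) \mid \pi^*_\nu(h) = a\}$ can be defined by an arithmetical formula of the right complexity.

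First I would write down the defining condition. By \eqref{eq:argmax} and the tie-breaking convention, $\pi^*_\nu(h) = a$ should hold exactly when $a$ attains the maximum of $V^*_\nu(h\cdot)$ and no action lexicographically before $a$ does so; that is,
\[
\pi^*_\nu(h) = a
\;\Longleftrightarrow\;
\Big( \bigwedge_{b \in \A} V^*_\nu(h a) \geq V^*_\nu(h b) \Big)
\;\wedge\;
\Big( \bigwedge_{b <_{\mathrm{lex}} a} V^*_\nu(h a) > V^*_\nu(h b) \Big).
\]
Now I would analyze the complexity of each conjunct using \autoref{lem:computable-reals}. Since $V^*_\nu$ is $\Delta^0_n$-computable, $\{(x,y) \mid V^*_\nu(x) \geq V^*_\nu(y)\}$ is $\Pi^0_n$ (part (b)) and $\{(x,y) \mid V^*_\nu(x) > V^*_\nu(y)\}$ is $\Sigma^0_n$ (part (a)). The first conjunct is a finite conjunction of $\Pi^0_n$ conditions, hence $\Pi^0_n$; the second is a finite conjunction of $\Sigma^0_n$ conditions, hence $\Sigma^0_n$. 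Their conjunction is therefore in $\Sigma^0_{n+1} \cap \Pi^0_{n+1}$: a $\Sigma^0_n$ formula can be written as $\exists k\,\varphi$ with $\varphi \in \Pi^0_{n-1}$, a $\Pi^0_n$ formula as $\forall k\,\psi$ with $\psi \in \Sigma^0_{n-1}$, and joining them with $\wedge$ and prefixing the (independent) quantifiers gives formulas of the form $\exists\forall\cdots$ and $\forall\exists\cdots$ with $n+1$ alternation blocks — so the relation is both $\Sigma^0_{n+1}$ and $\Pi^0_{n+1}$, i.e.\ $\Delta^0_{n+1}$. (Concretely: the $\Pi^0_n$ conjunct sits inside $\Pi^0_{n+1}$ and also inside $\Sigma^0_{n+1}$; likewise for the $\Sigma^0_n$ conjunct; the class $\Delta^0_{n+1}$ is closed under finite Boolean combinations, so the conjunction is $\Delta^0_{n+1}$.) Then \autoref{lem:policy-is-delta_n} confirms that this defines a genuine $\Delta^0_{n+1}$ policy, and by construction it is optimal in $\nu$ since it always picks an action in $\argmax_a V^*_\nu(h\cdot)$, which by \eqref{eq:argmax} and the definition of the optimal value suffices for $V^{\pi^*_\nu}_\nu = V^*_\nu$ on all histories.

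The main obstacle — or rather, the only subtlety worth checking carefully — is the equality (tie) case hidden in the $\geq$ comparisons: deciding $V^*_\nu(ha) = V^*_\nu(hb)$ is genuinely one level harder than deciding a strict inequality when the values differ, which is exactly why the bound is $n+1$ rather than $n$. I would make sure the argument does not secretly need to decide equality as a standalone $\Delta^0_n$ predicate (it does not: $\geq$ is $\Pi^0_n$, which is all that is used), and I would double-check that the finite conjunctions over $\A$ and over $\{b <_{\mathrm{lex}} a\}$ really are finite, which holds by \assref{ass:aixi}{ass:finite-actions-and-percepts}. Everything else is routine bookkeeping with the arithmetical hierarchy.
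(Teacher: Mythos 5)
Your proposal is correct and follows essentially the same route as the paper's proof: define the policy by the argmax condition with a fixed tie-breaking order, observe via \autoref{lem:computable-reals} that the strict comparisons are $\Sigma^0_n$ and the non-strict ones $\Pi^0_n$, and conclude that the defining formula (a finite conjunction of both kinds) is $\Delta^0_{n+1}$. The extra appeal to \autoref{lem:policy-is-delta_n} is harmless but unnecessary, since your direct argument already places the relation in $\Delta^0_{n+1}$.
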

\begin{proof}
To break potential ties,
we pick an (arbitrary) total order $\succ$ on $\A$
that specifies which actions should be preferred in case of a tie.
We define
\begin{equation}\label{eq:p*}
\begin{aligned}
\pi_\nu(h) = a
~:\Longleftrightarrow~~~~~~~
&\bigwedge_{a': a' \succ a}
  V^*_\nu(ha) > V^*_\nu(ha') \\
\;\land &\bigwedge_{a': a \succ a'}
  V^*_\nu(ha) \geq V^*_\nu(ha').
\end{aligned}
\end{equation}
Then $\pi_\nu$ is a $\nu$-optimal policy according to \eqref{eq:argmax}.
By assumption, $V^*_\nu$ is $\Delta^0_n$-computable.
By \autoref{lem:computable-reals}ab
$V^*_\nu(ha) > V^*_\nu(ha')$ is $\Sigma^0_n$ and
$V^*_\nu(ha) \geq V^*_\nu(ha')$ is $\Pi^0_n$.
Therefore the policy $\pi_\nu$ defined in \eqref{eq:p*} is a conjunction
of a $\Sigma^0_n$-formula and a $\Pi^0_n$-formula and thus $\Delta^0_{n+1}$.
\end{proof}

\begin{corollary}[Complexity of AINU]
\label{cor:complexity-ainu}\index{AINU}
AINU is $\Delta^0_3$ for every environment $\nu \in \Mlscccs$.
\end{corollary}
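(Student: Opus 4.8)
The plan is to obtain the corollary by chaining the two preceding results, \autoref{lem:complexity-V} and \autoref{thm:complexity-optimal-policies}. First I would observe that by \assref{ass:aixi}{ass:gamma-computable} the discount function $\gamma$ is computable, hence in particular lower semicomputable, so the hypotheses of \autoref{lem:complexity-V} are met and it yields that $V^*_\nu$ is $\Delta^0_2$-computable for every $\nu \in \Mlscccs$. Then I would apply \autoref{thm:complexity-optimal-policies} with $n = 2$: the explicit argmax-with-tie-breaking construction \eqref{eq:p*} produces an optimal policy $\pi_\nu$ for $\nu$ that is $\Delta^0_{2+1} = \Delta^0_3$. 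Since AINU is by definition a $\nu$-optimal policy for $\nu \in \Mlscccs$, and by \assref{ass:aixi}{ass:bounded-rewards}, \assref{ass:aixi}{ass:finite-actions-and-percepts}, and summability of $\gamma$ a deterministic $\nu$-optimal policy exists (by \citet[Thm.~10]{LH:2014discounting}), we may take AINU to be exactly this $\pi_\nu$, which establishes the claim.

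The only points worth spelling out are bookkeeping rather than genuine obstacles: one should note that \eqref{eq:p*} does define a \emph{total} deterministic policy — every history is assigned exactly one action — which is immediate because $\succ$ is a total order on the finite set $\A$, so for any history $h$ there is a unique $a \in \A$ satisfying both conjunctions. One should also remark that this argument applies to AIXI as well, since the Bayesian mixture $\xi$ lies in $\Mlscccs$ (being lower semicomputable by \autoref{ex:Solomonoff-prior} and a contextual semimeasure), so the corollary subsumes the $\Delta^0_3$ upper bound for AIXI. There is no hard step here; the substantive work was already carried out in \autoref{lem:complexity-V} (writing $V^*_\nu$ as a ratio of lower semicomputable functions via the explicit form \eqref{eq:V-explicit} and invoking \autoref{lem:computable-reals}) and in \autoref{thm:complexity-optimal-policies} (the one-level jump for tie-breaking in the argmax).
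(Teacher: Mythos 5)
Your proposal is correct and is essentially the paper's own proof: the corollary is obtained by combining \autoref{lem:complexity-V} (which applies since the computable discount function is in particular lower semicomputable) with \autoref{thm:complexity-optimal-policies} at $n=2$. The extra remarks on totality of the tie-breaking policy and on the applicability to AIXI via $\xi \in \Mlscccs$ are accurate bookkeeping and consistent with how the paper proceeds.
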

\begin{proof}
From \autoref{lem:complexity-V} and \autoref{thm:complexity-optimal-policies}.
\end{proof}

Usually we do not mind taking slightly suboptimal actions.
Therefore actually trying to determine if two actions have the exact same value
seems like a waste of resources.
In the following, we consider policies that attain a value
that is always within some $\varepsilon > 0$ of the optimal value.

\begin{theorem}[Complexity of $\varepsilon$-Optimal Policies]
\label{thm:complexity-eps-optimal-policies}
\index{policy!e-optimal@$\eps$-optimal}
For any environment $\nu$,
if $V^*_\nu$ is $\Delta^0_n$-computable,
then there is an $\varepsilon$-optimal policy $\pi^\varepsilon_\nu$
for the environment $\nu$
that is $\Delta^0_n$.
\end{theorem}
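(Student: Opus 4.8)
The plan is to mimic the proof of \autoref{thm:complexity-optimal-policies}, but avoid the expensive equality test on values by exploiting the slack $\varepsilon$ affords us. The key observation is that, just as $V^*_\nu$ is $\Delta^0_n$-computable, it is limit computable \emph{within that level}: there is a level-$(n-1)$ oracle machine computing $\phi$ with $\lim_{k\to\infty}\phi(h a,k)=V^*_\nu(h a)$. Given such an approximation, we can define a policy purely by a finite, effective procedure: at history $h$, compute approximations $\phi(h a, k)$ for all $a\in\A$ (the action set is finite by \assref{ass:aixi}{ass:finite-actions-and-percepts}) and run $k$ upward until one action's approximation exceeds every other action's approximation by more than $\varepsilon/2$; output the first such action (under a fixed total order $\succ$ on $\A$ to break ambiguity). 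This search always terminates, because the true optimal action $a^*$ eventually satisfies $V^*_\nu(h a^*) \geq V^*_\nu(h a) $ for all $a$, and for large enough $k$ the approximations are within $\varepsilon/4$ of the true values, so $\phi(h a^*,k) > \phi(h a,k) - \varepsilon/2$ holds for every $a\ne a^*$ once $V^*_\nu(h a^*) - V^*_\nu(h a)$ is either $0$ (then we might pick a different action, but see below) or positive. The subtlety is that if several actions are exactly tied at the optimum, the approximations may oscillate and never separate by $\varepsilon/2$; to handle this, I would instead search for an action $a$ such that $\phi(h a,k) \geq \phi(h a',k) - \varepsilon/2$ for \emph{all} $a'$, i.e.\ $a$ is approximately maximal, and among all such actions pick the $\succ$-greatest. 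This condition is eventually satisfied by the true optimal action(s), so the search halts, and whatever action it returns has true value within $\varepsilon$ of $V^*_\nu(h\cdot)$ maximal.

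The remaining work is to verify two things: (1) the resulting policy $\pi^\varepsilon_\nu$ is genuinely $\varepsilon$-optimal in the sense of \autoref{def:eps-optimal-policy}, and (2) it is $\Delta^0_n$. For (1), note that $\pi^\varepsilon_\nu$ always selects an action whose $Q$-value $V^*_\nu(h a)$ is within $\varepsilon$ of $\max_{a'} V^*_\nu(h a')$; since $V^*_\nu(h) = \max_{a'} V^*_\nu(h a')$ by the Bellman recursion and the history argument after $\pi^\varepsilon_\nu$'s chosen action continues to follow $\pi^\varepsilon_\nu$, an inductive argument on the value recursion shows $V^*_\nu(h) - V^{\pi^\varepsilon_\nu}_\nu(h) < \varepsilon$ uniformly. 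Actually, the cleanest route is: the action-value gap is bounded by $\varepsilon$ at every step, and because the value function is a discounted average of per-step contributions all bounded in $[0,1]$, a one-step suboptimality of $\varepsilon$ compounds to at most $\varepsilon$ total (this is the standard fact that $\varepsilon$-greedy-in-$Q^*$ is $\varepsilon$-optimal; see the discussion around \autoref{def:eps-optimal-policy} and \autoref{lem:discounted-values}). For (2), the policy is defined by: $\pi^\varepsilon_\nu(h)=a$ iff $a$ is the $\succ$-greatest action for which the halting search described above returns $a$. Since the search is an effective procedure relative to a level-$(n-1)$ oracle (it only queries $\phi$, which is $\Delta^0_{n-1}$-computable, finitely many times before halting), the graph of $\pi^\varepsilon_\nu$ is decidable relative to a $\Sigma^0_{n-1}$ oracle, hence $\Delta^0_n$ by Post's theorem — and the search-halts predicate is itself $\Sigma^0_{n-1}$ over that oracle, keeping everything at level $n$. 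One must be slightly careful that "returns $a$" is definable without an unbounded existential over $k$ that pushes the complexity up; but since the search provably halts, "the search returns $a$" is equivalent to "there exists $k$ such that the search halts at $k$ with output $a$", an existential over a $\Delta^0_{n-1}$-decidable halting-at-$k$ predicate, so it is $\Sigma^0_{n-1}$ relative to the oracle, and its negation is $\Pi^0_{n-1}$; combining with the $\succ$-greatest clause as in \autoref{lem:policy-is-delta_n} keeps us in $\Delta^0_n$.

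The main obstacle I anticipate is exactly the tie-handling: one has to make sure that the defining formula for $\pi^\varepsilon_\nu$ does not accidentally require checking $V^*_\nu(h a) = V^*_\nu(h a')$ (which would be $\Pi^0_n$ and, combined with the rest, would only give $\Delta^0_{n+1}$, not $\Delta^0_n$). The trick is that we never need exact comparisons: the search criterion "$\phi(h a,k) \geq \phi(h a',k) - \varepsilon/2$ for all $a'$" is a \emph{computable} (relative to the level-$(n-1)$ oracle) predicate of $h,a,k$, and the fact that it eventually holds for an optimal action is what guarantees termination without ever testing equality. I would also double-check the edge case $\Gamma_t = 0$, where $V^*_\nu \equiv 0$ and any action is trivially $\varepsilon$-optimal, so the policy can default to the $\succ$-greatest action there. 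With these points nailed down, the conclusion $\pi^\varepsilon_\nu \in \Delta^0_n$ follows, and applying this with $V^*_\nu$ being $\Delta^0_2$-computable (\autoref{lem:complexity-V}) would then give, as the intended corollary, that $\varepsilon$-optimal AINU is limit computable.
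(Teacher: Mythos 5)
There is a genuine gap, and it sits exactly where you located the difficulty: the stopping rule of your search. Limit computability relative to the level-$(n-1)$ oracle gives no effective modulus of convergence, so a halting procedure that inspects only finitely many values $\phi(ha,k)$ can never \emph{certify} that these values are within $\varepsilon/4$ of $V^*_\nu(ha)$ — yet both your correctness and your termination arguments silently assume the search stops only after that accuracy has been reached. Your repaired criterion makes this concrete: the condition ``there is an $a$ with $\phi(ha,k)\geq\phi(ha',k)-\varepsilon/2$ for all $a'$'' is satisfied at \emph{every} stage $k$ by the action maximizing $\phi(h\,\cdot\,,k)$, so the search halts at the very first stage and returns an approximate argmax of an approximation that may still be arbitrarily far from $V^*_\nu$; nothing ties the returned action to the true values, and $\varepsilon$-optimality fails. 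The original strict-separation criterion fails the other way: it need never trigger when two actions' true values lie within $\varepsilon/2$ of each other (your tie worry), and when it does trigger it may do so before the approximations are accurate. The paper's proof avoids the problem by never searching over approximation stages at all: it quantifies over the \emph{finite} grid $Q^\A$ and places the predicate $|q-V^*_\nu(ha')|<\varepsilon/2$ — which is itself $\Delta^0_n$ by \autoref{def:computable} and \autoref{tab:computability} — directly inside the defining formula, so the policy's graph is a finite boolean combination of $\Delta^0_n$ statements, exactly as in \autoref{thm:complexity-optimal-policies} but with the equality test replaced by grid comparisons. Your machine-model view can be repaired, but only by abandoning $\phi$: query the $\Delta^0_n$ sets $\{(ha,q)\mid V^*_\nu(ha)>q\}$ and $\{(ha,q)\mid V^*_\nu(ha)<q\}$ (decidable in a $\Sigma^0_{n-1}$-complete oracle) and binary-search to guaranteed accuracy $\varepsilon/4$ before taking the approximately greedy action; that is a sound argument, but it is the paper's proof in oracle-machine clothing, and the unbounded $k$-search together with its halting predicate disappears.

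A secondary point: your verification of $\varepsilon$-optimality appeals to ``a one-step suboptimality of $\varepsilon$ compounds to at most $\varepsilon$ total,'' which is not the standard fact. Acting $\varepsilon$-greedily with respect to $V^*_\nu(h\,\cdot\,)$ bounds the gap at horizon-many steps, roughly $V^*_\nu(\ae_{<t})-V^\pi_\nu(\ae_{<t})\leq\varepsilon\, m+\Gamma_{t+m}/\Gamma_t$ for every $m$, i.e.\ about $\varepsilon/(1-\gamma)$ under geometric discounting, not $\varepsilon$. To literally meet \autoref{def:eps-optimal-policy} one should shrink the per-step tolerance with $t$, e.g.\ to $\varepsilon/\bigl(2H_t(\varepsilon/2)\bigr)$, which is computable by \assref{ass:aixi}{ass:gamma-computable} and does not change the complexity classification. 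The paper's own proof is terse on this point as well, so it is not the decisive flaw, but your stated justification is wrong as given.
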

\begin{proof}
Let $\varepsilon > 0$ be given.
Since the value function $V^*_\nu(h)$ is $\Delta^0_n$-computable,
the set $V_\varepsilon := \{ (ha, q) \mid |q - V^*_\nu(ha)| < \varepsilon / 2 \}$
is in $\Delta^0_n$ according to \autoref{def:computable}.
Hence we compute the values $V^*_\nu(ha')$
until we get within $\varepsilon / 2$ for every $a' \in \A$
and then choose the action with the highest value so far.
Formally,
let $\succ$ be an arbitrary total order on $\A$
that specifies which actions should be preferred in case of a tie.
Without loss of generality, we assume $\varepsilon = 1/k$,
and define $Q$ to be an $\varepsilon / 2$-grid on $[0, 1]$,
i.e., $Q := \{ 0, 1/2k, 2/2k, \ldots, 1 \}$.
We define
\begin{equation}\label{eq:eps-optimal-policy}
\begin{aligned}
\pi^\varepsilon_\nu(h) = a
~:\Longleftrightarrow~
\exists (q_{a'})_{a' \in \A} \in Q^{\A}.\;
          &\bigwedge_{a' \in \A} (ha', q_{a'}) \in V_\varepsilon \\
\land &\bigwedge_{a': a' \succ a} q_a > q_{a'}
    \land \bigwedge_{a': a \succ a'} q_a \geq q_{a'} \\
\land &\text{ the tuple $(q_{a'})_{a' \in \A}$ is minimal with} \\
&\text{ respect to
the lex.\ ordering on $Q^\A$}.
\end{aligned}
\end{equation}
This makes the choice of $a$ unique.
Moreover, $Q^\A$ is finite since $\A$ is finite,
and hence \eqref{eq:eps-optimal-policy} is a $\Delta^0_n$-formula.
\end{proof}

\begin{corollary}[Complexity of $\varepsilon$-Optimal AINU]
\label{cor:complexity-eps-ainu}
\index{policy!e-optimal@$\eps$-optimal}\index{AINU}
For any environment $\nu \in \Mlscccs$,
there is an $\varepsilon$-optimal policy for AINU
that is $\Delta^0_2$.
\end{corollary}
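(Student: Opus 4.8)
The plan is to combine \autoref{lem:complexity-V} with \autoref{thm:complexity-eps-optimal-policies}. By \autoref{lem:complexity-V}, for every environment $\nu \in \Mlscccs$ and every computable discount function $\gamma$ (which we assume by \assref{ass:aixi}{ass:gamma-computable}, so in particular $\gamma$ is lower semicomputable), the optimal value function $V^*_\nu$ is $\Delta^0_2$-computable. Feeding $n = 2$ into \autoref{thm:complexity-eps-optimal-policies} then yields, for any $\varepsilon > 0$, an $\varepsilon$-optimal policy $\pi^\varepsilon_\nu$ for the environment $\nu$ that is $\Delta^0_2$. This is exactly the claim of \autoref{cor:complexity-eps-ainu}.

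So the proof is a one-line invocation of two earlier results. Concretely, I would write: ``From \autoref{lem:complexity-V} we get that $V^*_\nu$ is $\Delta^0_2$-computable, and applying \autoref{thm:complexity-eps-optimal-policies} with $n = 2$ gives an $\varepsilon$-optimal policy for $\nu$ that is $\Delta^0_2$.'' No calculation is required; the two lemmas have already done the work. The key conceptual point worth flagging (though it need not appear in the proof itself) is the contrast with \autoref{cor:complexity-ainu}: the exact optimal policy is only $\Delta^0_3$ because breaking argmax ties requires an equality test $V^*_\nu(h\alpha) = V^*_\nu(h\beta)$, which costs an extra jump, whereas for an $\varepsilon$-optimal policy one can discretize the value to an $\varepsilon/2$-grid and avoid ever testing exact equality, keeping the complexity at the same level as $V^*_\nu$ itself.

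There is essentially no obstacle here — the lemma is a corollary in the literal sense. The only thing to double-check is that the hypothesis of \autoref{lem:complexity-V} (a lower semicomputable discount function) is met; it is, since \assref{ass:aixi}{ass:gamma-computable} assumes $\gamma$ is computable, hence lower semicomputable. Since $\xi \in \Mlscccs$, this also covers $\varepsilon$-optimal AIXI, which gives the corresponding entry in \autoref{tab:complexity-agents}; the matching $\Sigma^0_1$-hardness lower bound is a separate statement proved later in \autoref{ssec:lower-bounds} and is not needed here.

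\begin{proof}
By \assref{ass:aixi}{ass:gamma-computable} the discount function $\gamma$ is computable, hence lower semicomputable, so \autoref{lem:complexity-V} applies and $V^*_\nu$ is $\Delta^0_2$-computable for every $\nu \in \Mlscccs$. Applying \autoref{thm:complexity-eps-optimal-policies} with $n = 2$ yields an $\varepsilon$-optimal policy for the environment $\nu$ that is $\Delta^0_2$.
\end{proof}
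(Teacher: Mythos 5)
Your proof is correct and is exactly the paper's argument: the paper also derives this corollary directly from \autoref{lem:complexity-V} and \autoref{thm:complexity-eps-optimal-policies}. Your extra check that the computable discount function (\assref{ass:aixi}{ass:gamma-computable}) satisfies the lower-semicomputability hypothesis of \autoref{lem:complexity-V} is a welcome bit of care the paper leaves implicit.
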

\begin{proof}
From \autoref{lem:complexity-V} and
\autoref{thm:complexity-eps-optimal-policies}.
\end{proof}

\begin{corollary}[Complexity of $\varepsilon$-Optimal AIXI]
\label{cor:complexity-eps-aixi}
\index{policy!e-optimal@$\eps$-optimal}\index{AIXI}
For any lower semicomputable prior
there is an $\varepsilon$-optimal policy for AIXI
that is $\Delta^0_2$.
\end{corollary}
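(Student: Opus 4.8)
The plan is to reduce the statement to Corollary~\ref{cor:complexity-eps-ainu} by observing that AIXI is simply AINU instantiated with the Bayesian mixture $\xi$, and that $\xi$ is itself a lower semicomputable chronological contextual semimeasure, i.e.\ $\xi \in \Mlscccs$. Everything else is then a direct appeal to already-established results.

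First I would fix a lower semicomputable positive prior $w$ over $\Mlscccs$ and recall that $\Mlscccs$ admits an effective enumeration $\nu_1, \nu_2, \ldots$ (obtained from the enumeration of monotone Turing machines made contextual on the action sequence), with each $\nu_i$ lower semicomputed by a computable rational approximation $\phi_i$, uniformly in $i$. The Bayesian mixture is $\xi(e_{<t} \dmid a_{<t}) = \sum_i w(\nu_i)\, \nu_i(e_{<t} \dmid a_{<t})$, a countable sum of products of nonnegative lower semicomputable functions. Dovetailing the approximations to the $w(\nu_i)$ and to the $\nu_i(e_{<t} \dmid a_{<t})$ over all indices $i$ and all stages yields a monotonically nondecreasing stream of rational lower bounds converging to $\xi(e_{<t} \dmid a_{<t})$, so $\xi$ is lower semicomputable. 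Moreover $\xi$ inherits from the $\nu_i$ the defining properties of a chronological contextual semimeasure, since the semimeasure inequalities and chronology are preserved under convex combinations. Hence $\xi \in \Mlscccs$.

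Next I would apply Corollary~\ref{cor:complexity-eps-ainu} with the environment $\nu := \xi$: since $\xi \in \Mlscccs$, there is an $\varepsilon$-optimal policy for AINU with environment $\xi$ that is $\Delta^0_2$. By the definition of the Bayes optimal policy in Section~\ref{ssec:AIXI}, the Bayes optimal policy with respect to $w$ is exactly $\pi^*_\xi$, so an $\varepsilon$-optimal policy for the environment $\xi$ is precisely an $\varepsilon$-optimal policy for AIXI. This gives the claim. Since $K$ is upper semicomputable, the Solomonoff prior $w(\nu) \propto 2^{-K(\nu)}$ is itself lower semicomputable, so in particular the standard definition of AIXI is covered.

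The only point needing genuine care — which I expect to be the main (if minor) obstacle — is the uniformity in the dovetailing: one must be explicit that it is the nonnegativity of every summand $w(\nu_i)\,\nu_i(e_{<t}\dmid a_{<t})$ that makes the finite partial sums of the approximations a legitimate increasing stream of lower bounds for the \emph{infinite} sum $\xi$, rather than merely approximating each finite truncation. Once that is nailed down, the rest of the argument is routine, since it consists entirely of citing Corollary~\ref{cor:complexity-eps-ainu} and unwinding the definition of AIXI.
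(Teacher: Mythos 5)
Your proposal is correct and follows essentially the same route as the paper: the paper's proof is precisely the observation that for any lower semicomputable prior the Bayesian mixture $\xi$ lies in $\Mlscccs$, after which \autoref{cor:complexity-eps-ainu} applies with $\nu := \xi$. Your additional dovetailing argument just makes explicit the (standard) fact that $\xi$ is lower semicomputable, which the paper takes for granted.
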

\begin{proof}
From \autoref{cor:complexity-eps-ainu}
since for any lower semicomputable prior,
the corresponding Bayesian mixture $\xi$ is in $\Mlscccs$.
\end{proof}

If the environment $\nu \in \Mcomp$ is a measure,
i.e., $\nu$ assigns zero probability to finite strings,
then we get computable $\varepsilon$-optimal policies.

\begin{corollary}[Complexity of AIMU]
\label{cor:complexity-aimu}\index{AIMU}
If the environment $\mu \in \Mcomp$ is a measure and
the discount function $\gamma$ is computable,
then $\AIMU$ is limit computable ($\Delta^0_2$), and
$\varepsilon$-optimal $\AIMU$ is computable ($\Delta^0_1$).
\end{corollary}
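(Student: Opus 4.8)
The plan is to reduce the whole statement to a single observation: when $\mu \in \Mcomp$ is a computable \emph{measure} and the discount function $\gamma$ is computable, the optimal value function $V^*_\mu$ is not merely $\Delta^0_2$-computable (as \autoref{lem:complexity-V} gives for an arbitrary $\nu \in \Mlscccs$) but actually \emph{computable}, i.e.\ $\Delta^0_1$-computable. Granting this, the corollary is immediate from the general machinery of \autoref{ssec:upper-bounds}: \autoref{thm:complexity-optimal-policies} with $n = 1$ produces an optimal policy $\pi^*_\mu$ that is $\Delta^0_{n+1} = \Delta^0_2$, so $\AIMU$ is limit computable; and \autoref{thm:complexity-eps-optimal-policies} with $n = 1$ produces an $\varepsilon$-optimal policy that is $\Delta^0_n = \Delta^0_1$, so $\varepsilon$-optimal $\AIMU$ is computable.

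To prove that $V^*_\mu$ is computable I would argue via effective truncation. For a finite horizon $m$, the value $V^{*,m}_\mu(\ae_{<t})$ is the finite max-sum in \eqref{eq:V-explicit}, assembled from $\gamma_k$, $r_k$, the conditional probabilities $\mu(e_i \mid \ae_{<i} a_i)$, and the normalization factor $1/\Gamma_t$; since $\mu$ is a computable measure and $\gamma$ (hence $\Gamma$) is computable, each of these quantities is computable uniformly in $\ae_{<t}$, and the max-sum is a finite computation, so $V^{*,m}_\mu(\ae_{<t})$ is computable uniformly in $m$ and the history. Taking the supremum over policies in \autoref{lem:truncated-values} gives $\bigl| V^{*,m}_\mu(\ae_{<t}) - V^*_\mu(\ae_{<t}) \bigr| \leq \Gamma_m / \Gamma_t$, and because $\gamma$ is a computable summable function this error bound is itself computable and tends to $0$ as $m \to \infty$. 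Hence, given a history $\ae_{<t}$ and a rational $\delta > 0$, one can effectively find $m$ with $\Gamma_m/\Gamma_t < \delta/2$, compute $V^{*,m}_\mu(\ae_{<t})$ to within $\delta/2$, and thereby approximate $V^*_\mu(\ae_{<t})$ to within $\delta$; this shows $\{ (\ae_{<t}, q) \mid V^*_\mu(\ae_{<t}) > q \}$ is recursive, so $V^*_\mu$ is $\Delta^0_1$-computable.

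The step I expect to need the most care is exactly the place where the measure hypothesis enters: for a general $\nu \in \Mlscccs$ the numerator in \eqref{eq:V-explicit} is only nondecreasing in $m$ and lower semicomputable — probability mass can leak to finite strings — so $V^{*,m}_\nu$ approaches $V^*_\nu$ from below with no computable control on the residual, which is why \autoref{lem:complexity-V} stops at $\Delta^0_2$ and \autoref{cor:complexity-ainu} at $\Delta^0_3$. When $\nu = \mu$ is a measure no mass is lost, the $m$-step residual is bounded by the tail $\Gamma_m/\Gamma_t$ of the discount function, and the approximation becomes genuinely effective. I would also flag explicitly that this uses computability of the normalization factor $\Gamma$, which holds for all discount functions considered (\assref{ass:aixi}{ass:gamma-computable} together with \autoref{tab:discounting}); without it, computability of $\Gamma$ should be added as a hypothesis.
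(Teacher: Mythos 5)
Your proposal is correct and follows essentially the same route as the paper: truncate \eqref{eq:V-explicit} at the $\varepsilon/2$-effective horizon, use computability of $\mu$ and $\gamma$ to conclude $V^*_\mu$ is $\Delta^0_1$-computable, and then invoke \autoref{thm:complexity-optimal-policies} and \autoref{thm:complexity-eps-optimal-policies} with $n=1$. Your version merely spells out the uniform error control via \autoref{lem:truncated-values} more explicitly than the paper does.
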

\begin{proof}
Let $\varepsilon > 0$ be the desired accuracy.
We can truncate the limit $m \to \infty$ in \eqref{eq:V-explicit}
at the $\varepsilon / 2$-effective horizon $H_t(\varepsilon/2)$,
since everything after $H_t(\varepsilon/2)$ can contribute at most $\varepsilon/2$
to the value function.
Any lower semicomputable measure is computable~\citep[Lem.~4.5.1]{LV:2008}.
Therefore $V^*_\mu$ as given in \eqref{eq:V-explicit}
is composed only of computable functions,
hence it is computable according to \autoref{lem:computable-reals}.
The claim now follows from \autoref{thm:complexity-optimal-policies} and
\autoref{thm:complexity-eps-optimal-policies}.
\end{proof}

\subsection{Lower Bounds}
\label{ssec:lower-bounds}

We proceed to show that the bounds from the previous section
are the best we can hope for.
In environment classes where ties have to be broken,
AINU has to solve $\Pi^0_2$-hard problems
(\autoref{thm:AINU-is-Pi2-hard}).
These lower bounds are stated for particular environments $\nu \in \Mlscccs$.
Throughout this section, we assume that $\Gamma_t > 0$ for all $t$.

We also construct universal mixtures
that yield bounds on $\varepsilon$-optimal policies.
There is an $\varepsilon$-optimal $\AIXI$
that solves $\Sigma^0_1$-hard problems
(\autoref{thm:eps-AIXI-is-Sigma1-hard}).
For arbitrary universal mixtures,
we prove the following weaker statement that only guarantees incomputability.

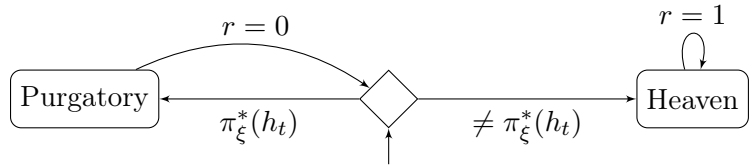
\begin{figure}[t]
\begin{center}
\begin{tikzpicture}[node distance=20mm, auto]
\node (start) {};
\node[agent, above of=start, node distance=10mm] (choice) {};
\node[env, left of=choice, node distance=40mm] (purgatory) {Purgatory};
\node[env, right of=choice, node distance=40mm] (heaven) {Heaven};
\path[transition] (start)     to (choice);
\path[transition] (choice)    to node[below] {$\neq \pi^*_\xi(h_t)$} (heaven);
\path[transition] (choice)    to node[below] {$\pi^*_\xi(h_t)$} (purgatory);
\path[transition] (purgatory) to[bend left] node[above] {$r = 0$} (choice);
\path[transition] (heaven)    to[loop above] node[above] {$r = 1$} (heaven);
\end{tikzpicture}
\end{center}
\caption[Environment from the proof of \autoref{thm:AIXI-is-not-computable}]{%
The environment $\mu$ from the proof of \autoref{thm:AIXI-is-not-computable}.
The agent gets reward $0$ as long as it follows AIXI's policy $\pi^*_\xi$
that is assumed to be computable.
Once the agent deviates from $\pi^*_\xi$, it gets reward $1$.
We get a contradiction because AIXI can learn this environment,
so it will eventually decide to take an action that leads to heaven.
}
\label{fig:AIXI-is-not-computable}
\end{figure}

\begin{theorem}[No $\AIXI$ is computable]
\label{thm:AIXI-is-not-computable}\index{AIXI}
$\AIXI$ is not computable for any universal Turing machine $U$.
\end{theorem}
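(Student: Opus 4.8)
The plan is to build a single computable environment $\mu$ that directly contradicts the assumed computability of any $\xi$-optimal policy $\pi^*_\xi$, following exactly the flowchart in \autoref{fig:AIXI-is-not-computable}. The key point is that $\mu$ is allowed to depend on the (assumed computable) policy $\pi^*_\xi$: the environment simulates $\pi^*_\xi$ to figure out what action the agent is ``supposed'' to take, and rewards the agent for deviating.

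First I would fix a universal Turing machine $U$ and the corresponding Bayesian mixture $\xi$ over $\Mlscccs$ given by \eqref{eq:xi-from-U}, and suppose for contradiction that some $\xi$-optimal policy $\pi^*_\xi$ is computable. Assume $\A$ has at least two actions (true by \assref{ass:aixi}{ass:finite-actions-and-percepts} together with the standing assumption that $\X$ has at least two elements; more precisely we only need $\#\A \geq 2$, which we may arrange) and that $(\bot, 0), (\bot, 1) \in \E$. Define $\mu$ as follows: on a history $\ae_{<t}$ consistent with $\pi^*_\xi$ (i.e.\ $a_k = \pi^*_\xi(\ae_{<k})$ for all $k < t$), if the next action $a_t = \pi^*_\xi(\ae_{<t})$ then $\mu$ returns percept $(\bot, 0)$; if $a_t \neq \pi^*_\xi(\ae_{<t})$ then $\mu$ returns $(\bot, 1)$ and from then on returns $(\bot, 1)$ forever regardless of future actions (``heaven''). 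This is a well-defined deterministic environment, and because $\pi^*_\xi$ is computable by assumption, $\mu$ is computable, so $\mu \in \Mcomp \subseteq \Mlscccs$ and in particular $\mu$ receives positive prior weight $w(\mu) > 0$ under the Solomonoff prior.

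Next I would derive the contradiction using on-policy value convergence. Let $\pi := \pi^*_\xi$ and consider the history drawn from $\mu^\pi$, which is the unique deterministic history $\ae_{1:\infty}$ with $a_t = \pi^*_\xi(\ae_{<t})$ and $e_t = (\bot, 0)$ for all $t$; along this history $V^\pi_\mu(\ae_{<t}) = 0$ for all $t$. On the other hand, for any $t$ with $\Gamma_t > 0$, the policy that agrees with $\pi$ up to time $t-1$ and then deviates obtains reward $1$ forever afterward, so $V^*_\mu(\ae_{<t}) = 1$. Thus $V^*_\mu(\ae_{<t}) - V^\pi_\mu(\ae_{<t}) = 1$ for all $t$. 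But by \autoref{cor:Bayes-on-policy-value-convergence}, $V^\pi_\xi(\ae_{<t}) - V^\pi_\mu(\ae_{<t}) \to 0$ $\mu^\pi$-almost surely, i.e.\ $V^\pi_\xi(\ae_{<t}) \to 0$. Since $\mu$ is deterministic and consistent with $\pi$, the posterior weight satisfies $w(\mu \mid \ae_{<t}) \geq w(\mu) > 0$, so \eqref{eq:optimal-value-of-xi} gives $V^*_\xi(\ae_{<t}) \geq w(\mu \mid \ae_{<t}) V^*_\mu(\ae_{<t}) \geq w(\mu) > 0$. Pick $t$ large enough (and with $\Gamma_t > 0$) that $V^\pi_\xi(\ae_{<t}) < w(\mu)$; since $\pi = \pi^*_\xi$ is $\xi$-optimal we have $V^\pi_\xi(\ae_{<t}) = V^*_\xi(\ae_{<t}) \geq w(\mu)$, a contradiction.

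The main obstacle is getting the details of the construction of $\mu$ exactly right so that it is genuinely computable and genuinely a chronological contextual (semi)measure: one must check that the ``simulate $\pi^*_\xi$ to decide the reward'' rule is well-defined on all histories (including histories inconsistent with $\pi^*_\xi$, where we can just send the agent to heaven), that it is chronological, and that $\mu(e_{1:t} \dmid a_{1:t})$ is computable uniformly in the inputs. A secondary subtlety is being careful that \autoref{cor:Bayes-on-policy-value-convergence} is being applied with $\pi = \pi^*_\xi$ acting in $\mu$, and that the almost-sure statement becomes a sure statement here because $\mu$ and $\pi$ are deterministic so there is only one history. Everything else is routine once $\mu$ is pinned down.
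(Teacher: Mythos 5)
Your proposal is correct and takes essentially the same route as the paper: the same adversarial deterministic environment that punishes following $\pi^*_\xi$ and rewards deviating, on-policy value convergence to force $V^{\pi^*_\xi}_\xi(\ae_{<t}) \to 0$, and the prior-weight lower bound $V^*_\xi(\ae_{<t}) \geq w(\mu)$ to obtain the contradiction. The only cosmetic point is that \eqref{eq:optimal-value-of-xi} is stated as an upper (convexity) bound, so the lower bound $V^*_\xi(\ae_{<t}) \geq w(\mu \mid \ae_{<t}) V^*_\mu(\ae_{<t})$ should instead be justified via \autoref{lem:V-linear} applied to a policy that deviates at time $t$, which is exactly how the paper's proof phrases the same estimate.
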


This theorem follows from the incomputability of Solomonoff induction.
By the on-policy value convergence theorem%
~(\autoref{cor:Bayes-on-policy-value-convergence})
AIXI succeeds to predict the environment's behavior for its own policy.
If AIXI were computable,
then there would be computable environments more powerful than AIXI:
they can simulate AIXI and anticipate its prediction,
which leads to a contradiction.

\begin{proof}
Assume there is a computable policy $\pi^*_\xi$
that is optimal in the mixture $\xi$.
We define a deterministic environment $\mu$,
the \emph{adversarial environment}\index{adversarial!environment} to $\pi^*_\xi$.
The environment $\mu$
gives rewards $0$ as long as the agent follows the policy $\pi^*_\xi$,
and rewards $1$ once the agent deviates.
Formally, we ignore observations by setting $\O := \{ 0 \}$, and define
\[
\mu(r_{1:t} \dmid a_{1:t}) :=
\begin{cases}
1 &\text{if } \forall k \leq t.\, a_k = \pi^*_\xi((ar)_{<k})
   \text{ and } r_k = 0, \\
1 &\text{if } \forall k \leq t.\, r_k = \one_{k \geq i} \\
  &\text{ where } i := \min \{ j \mid a_j \neq \pi^*_\xi((ar)_{<j}) \},
   \text{ and} \\
0 &\text{otherwise}.
\end{cases}
\]
See \autoref{fig:AIXI-is-not-computable}
for an illustration of this environment.
The environment $\mu$ is computable because
the policy $\pi^*_\xi$ was assumed to be computable.
Suppose $\pi^*_\xi$ acts in $\mu$, then by \autoref{thm:on-policy-value-convergence}
AIXI learns to predict perfectly \emph{on policy}\index{on-policy}:
\[
V^{\pi^*_\xi}_\xi(\ae_{<t}) - V^{\pi^*_\xi}_\mu(\ae_{<t}) \to 0
\text{ as $t \to \infty$ $\mu^{\pi^*_\xi}$-almost surely},
\]
since both $\pi^*_\xi$ and $\mu$ are deterministic.
Because $V^{\pi^*_\xi}_\mu(h_{<t}) = 0$ by definition of $\mu$,
we get $V^*_\xi(\ae_{<t}) \to 0$.
Therefore we find a $t$ large enough such that
$V^*_\xi(\ae_{<t}) < w(\mu)$
where $\ae_{<t}$ is the interaction history of $\pi^*_\xi$ in $\mu$.
A policy $\pi$ with $\pi(\ae_{<t}) \neq \pi^*_\xi(\ae_{<t})$,
gets a reward of $1$ in environment $\mu$ for all time steps after $t$,
hence $V^\pi_\mu(\ae_{<t}) = 1$.
With linearity of $V^\pi_\xi(\ae_{<t})$ in $\xi$~(\autoref{lem:V-linear}),
\[
     V^\pi_\xi(\ae_{<t})
\geq w(\mu) \tfrac{\mu(e_{1:t} \dmid a_{1:t})}{\xi(e_{1:t} \dmid a_{1:t})}
       V^\pi_\mu(\ae_{<t})
\geq w(\mu),
\]
since $\mu(e_{1:t} \dmid a_{1:t}) = 1$ ($\mu$ is deterministic),
$V^\pi_\mu(\ae_{<t}) = 1$, and $\xi(e_{1:t} \dmid a_{1:t}) \leq 1$.
Now we get a contradiction:
\[
     w(\mu)
>    V^*_\xi(\ae_{<t}) \\
=    \sup_{\pi'} V^{\pi'}_\xi(\ae_{<t})
\geq V^\pi_\xi(\ae_{<t})
\geq w(\mu)
\qedhere
\]
\end{proof}

For the remainder of this section,
we fix the action space to be $\A := \{ \alpha, \beta \}$
with action $\alpha$ favored in ties.
The percept space is fixed to a tuple of binary observations and rewards,
$\E := \O \times \{ 0, 1 \}$ with $\O := \{ 0, 1 \}$.

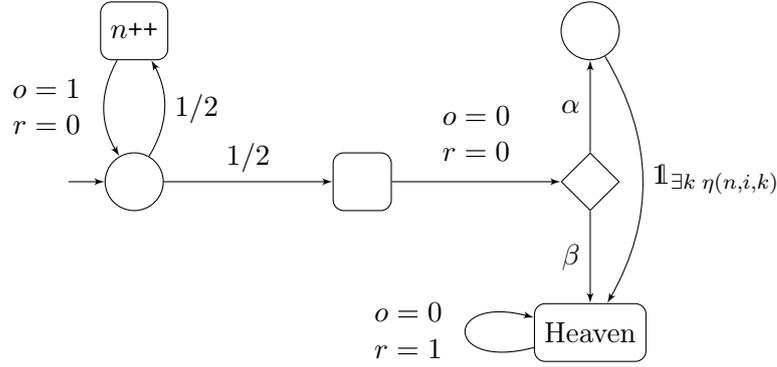
\begin{figure}[t]
\begin{center}
\begin{tikzpicture}[node distance=20mm, auto]
\node (start) {};
\node[stoch, right of=start, node distance=10mm] (init) {};
\node[env, above of=init] (one) {$n$\texttt{++}};
\node[env, right of=init,node distance=30mm] (zero) {};
\node[agent, right of=zero,node distance=30mm] (choice) {};
\node[env, below of=choice] (heaven) {Heaven};
\node[stoch, above of=choice] (gate) {};
\path[transition] (start)  to (init);
\path[transition] (init)   to[bend right] node[right] {$1/2$} (one);
\path[transition] (one)    to[bend right] node[left] {$\begin{array}{c} o = 1 \\ r = 0 \end{array}$} (init);
\path[transition] (init)   to node {$1/2$} (zero);
\path[transition] (zero)   to node {$\begin{array}{c} o = 0 \\ r = 0 \end{array}$} (choice);
\path[transition] (choice) to node[left] {$\beta$} (heaven);
\path[transition] (choice) to node {$\alpha$} (gate);
\path[transition] (gate)   to[bend left]
	node[right] {$\one_{\exists k\; \eta(n, i, k)}$} (heaven);
\path[transition] (heaven)   to[loop left]
	node {$\begin{array}{c} o = 0 \\ r = 1 \end{array}$} (heaven);
\end{tikzpicture}
\end{center}
\caption[Environment from the proof of \autoref{thm:AINU-is-Pi2-hard}]{%
The environment $\rho_i$ from the proof of \autoref{thm:AINU-is-Pi2-hard}.
The mixture $\nu$ over class of environments
$\M := \{ \rho_0, \rho_1, \ldots \} \subset \Mlscccs$
forces AINU to solve $\Pi^0_2$-hard problems:
Action $\alpha$ is preferred (because of a tie) iff
it leads to heaven, which is the case iff
$\exists k\; \eta(n, i, k)$.
}
\label{fig:AINU-is-Pi2-hard}
\end{figure}

\begin{theorem}[AINU is $\Pi^0_2$-hard]
\label{thm:AINU-is-Pi2-hard}\index{AINU}
There is an environment $\nu \in \Mlscccs$ such that
AINU is $\Pi^0_2$-hard.
\end{theorem}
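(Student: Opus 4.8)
The plan is to reduce a $\Pi^0_2$-complete set to the problem of computing AINU for a suitable environment $\nu \in \Mlscccs$. Fix a $\Pi^0_2$-complete set $A$ with $n \in A \iff \forall i \exists k\, \eta(n, i, k)$ for some quantifier-free (computable) $\eta$. For each index $i$, define a deterministic environment $\rho_i$ following \autoref{fig:AINU-is-Pi2-hard}: starting from a stochastic node that with probability $1/2$ emits percept $(1,0)$ and increments a counter $n$ (looping back), and with probability $1/2$ emits $(0,0)$ and hands control to a choice node. At that choice node, action $\beta$ leads directly to heaven (reward $1$ forever), while action $\alpha$ leads to heaven if and only if $\exists k\, \eta(n,i,k)$ holds for the counter value $n$ reached, and to hell (reward $0$ forever) otherwise. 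The search $\exists k\, \eta(n,i,k)$ is only semidecidable, so $\rho_i$ is lower semicomputable but not computable; since the discount function is computable by \assref{ass:aixi}{ass:gamma-computable}, we have $\rho_i \in \Mlscccs$. Let $\nu := \sum_{i} 2^{-i} \rho_i$ (or $\sum_i w_i \rho_i$ for any computable positive prior), which lies in $\Mlscccs$ since it is a countable lower semicomputable mixture.

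The key observation is that in $\rho_i$, both actions $\alpha$ and $\beta$ from the choice node give value exactly $1$ whenever $\exists k\, \eta(n,i,k)$ holds, so there is an \emph{argmax tie} and our tie-breaking rule prefers $\alpha$; but if $\exists k\, \eta(n,i,k)$ fails, then $\alpha$ gives value $0 < 1$ and $\beta$ is strictly preferred. Since all $\rho_i$ agree on the initial stochastic phase and the counter value $n$ is determined by the history, after a history $h$ that has reached the choice node with counter value $n$, the $\nu$-optimal action is $\alpha$ if and only if $\exists k\, \eta(n,i,k)$ holds for \emph{every} $i$ consistent with $h$. By arranging the environments so that the history also encodes (or at least constrains) the index $i$ — for instance by having an initial deterministic observation segment that selects $i$ — we can make the optimal action at a specifically constructed history $h_{n,i}$ be $\alpha$ iff $\exists k\, \eta(n,i,k)$. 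Then, ranging over the structure that forces all $i$, the optimal action at the designated "top-level" history is $\alpha$ iff $\forall i \exists k\, \eta(n,i,k)$, i.e., iff $n \in A$. This gives a many-one reduction $n \mapsto (\text{history } h_n)$ from $A$ to the set $\{(h,a) \mid \pi^*_\nu(h) = a\}$, establishing $\Pi^0_2$-hardness.

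First I would make the environment diagram precise, carefully specifying how the index $i$ is threaded into the history (likely via an initial segment of observations that the agent cannot influence, chosen so that the mixture behaves like a single $\rho_i$ conditioned on that prefix), and verify that each $\rho_i$ and the mixture $\nu$ are genuinely lower semicomputable chronological contextual semimeasures. Next I would compute the relevant values: at the choice node with counter $n$ and selected index $i$, confirm $V^*_{\rho_i}(h\beta) = 1$ unconditionally and $V^*_{\rho_i}(h\alpha) \in \{0, 1\}$ with the value being $1$ iff $\exists k\, \eta(n,i,k)$; here I must use that $\Gamma_t > 0$ for all $t$ (assumed throughout this section) so that a tail of reward-$1$ steps genuinely contributes positive value and the tie is real. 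Then I would use linearity of $V^\pi_\nu$ in $\nu$ (\autoref{lem:V-linear}) and the posterior-weight identity to pass from the individual $\rho_i$ to the mixture, showing the posterior concentrates appropriately on the $\rho_i$ consistent with the history so that the optimal action in $\nu$ matches that in the relevant $\rho_i$. Finally I would conclude the reduction and hence $\Pi^0_2$-hardness.

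The main obstacle I anticipate is the bookkeeping needed to handle the universal quantifier over $i$ cleanly: a single choice node only tests one $\exists k\, \eta(n,i,k)$, so to capture $\forall i \exists k\, \eta(n,i,k)$ I need either a chain of choice nodes (where reaching the next one requires the current $\exists k$ to have been confirmed, and heaven is reached only at the "end" of an infinite chain — which is delicate with discounting) or a stochastic branching that samples $i$ and averages, forcing the optimal value to equal $1$ only when every branch succeeds. The averaging approach seems cleaner: a stochastic node that goes to the $i$-th sub-environment with probability $2^{-i}$, where sub-environment $i$ gives reward $1$ forever after the choice iff $\exists k\, \eta(n,i,k)$ (and $0$ otherwise); then the value of committing to this branch is $\sum_i 2^{-i} \mathbf{1}[\exists k\, \eta(n,i,k)]$, which equals $1$ iff $n \in A$, and is strictly less than $1$ otherwise — again creating (or destroying) a tie against an alternative action worth exactly $1$. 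Getting the strict inequality to survive the normalization and the mixture, and ensuring the "alternative action worth exactly $1$" is itself computably describable, is where the care is required; I expect the argument to parallel closely the techniques already used for \autoref{thm:AIXI-is-not-computable} and the figures referenced.
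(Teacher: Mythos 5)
Your final ``averaging'' construction is essentially the paper's proof: the paper defines $\rho_i$ exactly as in \autoref{fig:AINU-is-Pi2-hard} and takes the mixture $\nu := \sum_i 2^{-i-1}\rho_i$; since every $\rho_i$ assigns the same probability $2^{-n-1}$ to the observation prefix $1^n0$, the posterior there equals the prior, so $V^*_\nu(1^n0\alpha) = V^*_\nu(1^n0\beta)$ iff $\exists k\,\eta(n,i,k)$ holds for \emph{every} $i$, and with $\alpha$ preferred in ties one gets $n \in A \Leftrightarrow \pi^*_\nu(1^n0) = \alpha$. The detour in your second paragraph about threading the index $i$ into the history is both unnecessary and counterproductive: at a history that pins down $i$, the tie only tests $\exists k\,\eta(n,i,k)$, which is merely $\Sigma^0_1$, and there is then no single history at which the universal quantifier is tested. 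The $\forall i$ comes for free from the mixture precisely because all $\rho_i$ remain consistent with $1^n0$; you should delete that detour and commit to the mixture argument you state at the beginning and recover at the end.

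The one step that genuinely fails as written is the membership claim $\rho_i \in \Mlscccs$. You route action $\alpha$ ``to hell (reward $0$ forever)'' when $\exists k\,\eta(n,i,k)$ fails, but then the probability of the hell percept after $\alpha$ is $\one[\neg\exists k\,\eta(n,i,k)]$, which is upper but not lower semicomputable, so the $\rho_i$ you describe (and likewise the single-environment averaging version with ``and $0$ otherwise'') is \emph{not} a lower semicomputable semimeasure. The paper --- and the figure you cite --- instead assigns probability zero to \emph{all} continuations after $\alpha$ when the $\Sigma^0_1$ condition fails: the semimeasure simply loses that mass (``the environment ends''), which keeps $\rho_i$ lower semicomputable and is exactly why the theorem lives in the semimeasure class $\Mlscccs$. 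Under the recursive value function the value of $\alpha$ is still $0$ in that case, so nothing else in your reduction changes; but without this correction your environment lies outside the class the theorem quantifies over, so the hardness claim would not be established as stated.
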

\begin{proof}
Let $A$ be a any $\Pi^0_2$-set, and
let $\eta$ be a quantifier-free formula such that
\begin{equation}\label{eq:defA}
n \in A
\;\Longleftrightarrow\;
\forall i\; \exists k\; \eta(n, i, k).
\end{equation}
We define a class of environments $\M := \{ \rho_1, \rho_2, \ldots \}$
where each $\rho_i$ is defined as follows.
\[
\rho_i((or)_{1:m} \dmid a_{1:m}) :=
\begin{cases}
2^{-m}   &\text{if } o_{1:m} = 1^m
          \text{ and } \forall t \leq m.\; r_t = 0, \\
2^{-n-1} &\text{if } \exists n.\; 1^n0 \sqsubseteq o_{1:m} \sqsubseteq 1^n 0^\infty
          \text{ and } a_{n+2} = \alpha \\
         &\text{ and } r_t = \one_{t > n+1}
          \text{ and } \exists k\; \eta(n, i, k), \\
2^{-n-1} &\text{if } \exists n.\; 1^n0 \sqsubseteq o_{1:m} \sqsubseteq 1^n 0^\infty
          \text{ and } a_{n+2} = \beta \\
         &\text{ and } r_t = \one_{t > n+1}, \text{ and} \\
0        &\text{otherwise}.
\end{cases}
\]
See \autoref{fig:AINU-is-Pi2-hard} for an illustration of these environments.
Every $\rho_i$ is a chronological conditional semimeasure by definition and
every $\rho_i$ is lower semicomputable since $\eta$ is quantifier-free,
so $\M \subseteq \Mlscccs$.

We define our environment $\nu$ as a mixture over $\M$,
\[
\nu := \sum_{i \in \mathbb{N}} 2^{-i-1} \rho_i;
\]
the choice of the weights on the environments $\rho_i$ is arbitrary but positive.
Let $\pi^*_\nu$ be an optimal policy for the environment $\nu$ and
recall that the action $\alpha$ is preferred in ties.
We claim that for the $\nu$-optimal policy $\pi^*_\nu$,
\begin{equation}\label{eq:AINU-is-Pi2-hard-claim}
n \in A
~~\Longleftrightarrow~~
\pi^*_\nu(1^n0) = \alpha.
\end{equation}
This enables us to decide whether $n \in A$ given the policy $\pi^*_\nu$,
hence proving \eqref{eq:AINU-is-Pi2-hard-claim} concludes this proof.

Let $n, i \in \mathbb{N}$ be given, and
suppose we are in environment $\rho_i$ and observe $1^n 0$.
Taking action $\beta$ next yields reward $1$ forever;
taking action $\alpha$ next yields
a reward of $1$ if there is a $k$ such that $\eta(n, i, k)$ holds.
If this is the case, then
\[
V^*_{\rho_i}(1^n 0 \alpha) = \Gamma_{n+2} = V^*_{\rho_i}(1^n 0 \beta),
\]
and otherwise
\[
V^*_{\rho_i}(1^n 0 \alpha) = 0 < \Gamma_{n+2} = V^*_{\rho_i}(1^n 0 \beta)
\]
(omitting the first $n + 1$ actions and rewards
in the argument of the value function).
We can now show \eqref{eq:AINU-is-Pi2-hard-claim}:
By \eqref{eq:defA}, $n \in A$ if and only if for all $i$
there is a $k$ such that $\eta(n, i, k)$,
which happens if and only if
$V^*_{\rho_i}(1^n 0 \alpha) = \Gamma_{n+2}$ for all $i \in \mathbb{N}$,
which is equivalent to $V^*_\nu(1^n 0 \alpha) = \Gamma_{n+2}$,
which in turn is equivalent to $\pi^*_\mu(1^n 0) = \alpha$
since $V^*_\nu(1^n 0 \beta) = \Gamma_{n+2}$ and action $\alpha$ is favored in ties.
\end{proof}

\begin{figure}[t]
\begin{center}
\begin{tikzpicture}[node distance=20mm, auto]
\node (start) {};
\node[env, right of=start, node distance=10mm] (xi) {$\xi$};
\node[agent, right of=xi, node distance=30mm] (choice) {};
\node[env, above of=choice] (sheaven) {Semi-Heaven};
\node[stoch, below of=choice] (gate) {};
\node[env, left of=gate, node distance=30mm] (heaven) {Heaven};
\path[transition] (start)  to (xi);
\path[transition] (xi)     to[loop above] node {$\ast$} (xi);
\path[transition] (xi)   to node {$o = 1^n0$} (choice);
\path[transition] (choice) to node[left] {$\alpha$} (sheaven);
\path[transition] (choice) to node[left] {$\beta$} (gate);
\path[transition] (gate) to node {$\one_{\exists k\; \eta(n, i, k)}$} (heaven);
\path[transition] (sheaven)   to[loop left]
	node[left] {$\begin{array}{c} o = 0 \\ r = 1/2 \end{array}$} (sheaven);
\path[transition] (heaven)   to[loop left]
	node {$\begin{array}{c} o = 0 \\ r = 1 \end{array}$} (heaven);
\end{tikzpicture}
\end{center}
\caption[Environment from the proof of \autoref{thm:eps-AIXI-is-Sigma1-hard}]{%
The environment $\nu$ from the proof of \autoref{thm:eps-AIXI-is-Sigma1-hard},
which forces $\AIXI$ to solve $\Sigma^0_1$-hard problems.
It functions just like $\xi$ until the observation history is $1^n0$.
Then, action $\alpha$ is preferred iff
heaven is accessible, i.e., iff $\exists k\; \eta(n, i, k)$.
}
\label{fig:eps-AIXI-is-Sigma1-hard}
\end{figure}
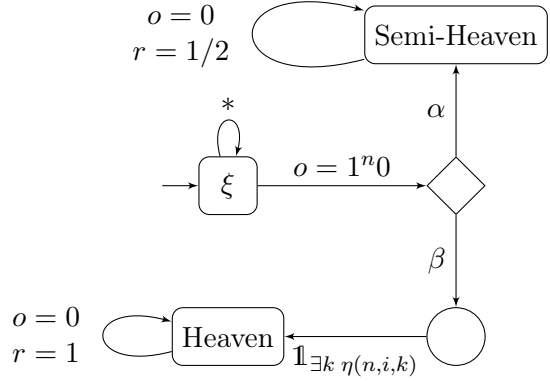

\begin{theorem}[Some $\varepsilon$-optimal $\AIXI$ are $\Sigma^0_1$-hard]
\label{thm:eps-AIXI-is-Sigma1-hard}
\index{policy!e-optimal@$\eps$-optimal}\index{AIXI}
There is a universal Turing machine $U'$ and an $\varepsilon > 0$ such that
any $\varepsilon$-optimal policy for $\AIXI$ is $\Sigma^0_1$-hard.
\end{theorem}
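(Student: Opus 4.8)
The plan is to construct a universal Turing machine $U'$ and a threshold $\varepsilon > 0$ so that an $\varepsilon$-optimal policy for $\AIXI_{U'}$ is forced to decide a $\Sigma^0_1$-complete set, namely the halting problem encoded via $\exists k\; \eta(n, i, k)$ for a quantifier-free $\eta$. The idea, illustrated in \autoref{fig:eps-AIXI-is-Sigma1-hard}, is to make the environment $\nu$ behave exactly like a base Bayesian mixture $\xi$ until the observation history reaches the special form $1^n 0$; at that point the agent is given a genuine choice between two actions with a value gap of at least $\varepsilon$. Taking action $\alpha$ leads to ``semi-heaven'' (reward $1/2$ forever, value $1/2$), while taking action $\beta$ leads to heaven (reward $1$ forever, value $1$) if and only if $\exists k\; \eta(n, i, k)$, and to reward $0$ forever otherwise. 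So the $\varepsilon$-optimal action at history $1^n 0$ reveals membership in the $\Sigma^0_1$-set, provided we arrange the weights and $\varepsilon$ correctly.

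First I would fix the $\Sigma^0_1$-set $A$ with $n \in A \Leftrightarrow \exists i\; \exists k\; \eta(n, i, k)$ — wait, more carefully: since we want a single hard instance rather than a mixture over $i$ as in \autoref{thm:AINU-is-Pi2-hard}, I would instead make $\nu$ a \emph{single} environment (not a mixture over the $i$'s) that encodes one fixed $\Sigma^0_1$ predicate $\exists k\; \eta(n, k)$. Then I would build $U'$ so that the Bayesian mixture $\xi_{U'}$ it induces assigns a large prior weight $q$ close to $1$ to this environment $\nu$ and the remaining weight $q' = 1 - q$ to the rest, using \autoref{lem:mixing-mixtures} to guarantee $\xi_{U'} \in \Mlscccs$ is still a Bayesian mixture. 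Concretely, I would take $\nu$ to behave like the old reference mixture $\xi$ on the ``off-special'' part of the history tree, so that the posterior stays dominated by $\nu$ and the $\AIXI_{U'}$ values at the relevant histories are within $\eps/2$ of the $\nu$-optimal values. The weight $q$ and the discount function determine the value gap at history $1^n 0$: the $\nu$-value of $\beta$ there is $\Gamma_{n+2}/\Gamma_{t}$ normalized (so some positive constant depending on $n$) if heaven is reachable and strictly smaller otherwise, while $\alpha$ always yields the semi-heaven value. I would choose $\eps$ small enough — e.g. a fixed fraction of $\min_n$ of these gaps, which is bounded below because of geometric-type discounting and the $1/2$ versus $1$ reward spread — so that the $\varepsilon$-optimal choice at $1^n0$ still correctly tracks whether heaven is reachable.

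The key steps in order: (1) define $\eta$ and the fixed $\Sigma^0_1$ predicate; (2) define the deterministic-on-the-special-branch, $\xi$-mimicking-elsewhere environment $\nu$ and check $\nu \in \Mlscccs$ (it is lower semicomputable because $\eta$ is quantifier-free and $\xi$ is lower semicomputable); (3) apply \autoref{lem:mixing-mixtures} to obtain a universal mixture $\xi' = q\xi + q'\nu$ (making sure the prior is positive and lower semicomputable so that $\xi'$ corresponds to a genuine UTM $U'$ — here I would appeal to the equivalence \eqref{eq:xi-from-U} and the construction style used for the indifference prior in \autoref{thm:indifference-prior} to exhibit the actual machine $U'$); (4) compute $V^*_{\xi'}(1^n0\,\alpha)$ and $V^*_{\xi'}(1^n0\,\beta)$ using linearity \autoref{lem:V-linear} and the fact that on the history $1^n0$ the posterior weight of $\nu$ is at least $q'$, obtaining a gap of at least $\eps$ in exactly one direction depending on $\exists k\; \eta(n,k)$; (5) conclude that for any $\varepsilon$-optimal policy $\pi$, the relation $\pi(1^n0) = \beta$ (or $\alpha$, depending on how I orient it and on the tie-breaking convention favoring $\alpha$) is a many-one reduction from the $\Sigma^0_1$-complete set to $\pi$, so $\pi$ is $\Sigma^0_1$-hard.

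The main obstacle I expect is step (4) combined with making the gap \emph{uniform} in $n$: unlike the $\Pi^0_2$ case where equality-of-values does the work and ties can be arbitrarily small, here I need a single fixed $\eps$ that works for all $n$ simultaneously, so I must verify that the value difference $|V^*_{\xi'}(1^n0\,\alpha) - V^*_{\xi'}(1^n0\,\beta)|$ is bounded below by a positive constant independent of $n$. This requires controlling two things at once: the posterior weight of $\nu$ after the long observation prefix $1^n0$ (which stays $\ge q'$ because $\nu$ and $\xi$ agree on that prefix, so this is fine), and the normalized discounted reward difference between getting $1/2$ forever versus $1$ forever starting at time $n+2$, which after normalization by $\Gamma_{n+2}$ is exactly $1/2$ and hence $n$-independent — good. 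The subtler point is ensuring $\AIXI_{U'}$'s value stays within $\eps/2$ of $V^*_\nu$ on these branches despite the $q$-weighted contribution of $\xi$; choosing $q$ close enough to $1$ handles this, but I must confirm that the choice of $q$ can be made a fixed rational (not depending on $n$) while keeping $U'$ universal, which \autoref{lem:mixing-mixtures} permits. I would also double-check the tie-breaking convention ($\alpha$ favored) so that in the ``heaven unreachable'' case the $\eps$-optimal action is unambiguously $\alpha$ and in the ``heaven reachable'' case it is unambiguously $\beta$.
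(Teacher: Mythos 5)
Your construction is essentially the paper's own proof. The paper defines a single environment $\nu$ that mimics $\xi$ until the observation history has the form $1^{n-1}0$, gives reward $1/2$ forever on action $\alpha$, gives reward $1$ forever on action $\beta$ iff the $\Sigma^0_1$ condition holds, and then mixes with fixed rational weights, $\xi' := \tfrac12\nu + \tfrac18\xi$, via \autoref{lem:mixing-mixtures}; since $\nu$ and $\xi$ agree on the special prefix, the posterior weights equal the priors, the normalized values ($1/2$ versus $1$) are independent of $n$, the gap is at least $1/8$ in each case, and $\varepsilon := 1/9$ works uniformly — exactly your steps (1)--(5), including your observation that uniformity of the gap comes from the normalization by $\Gamma_t$. (Your worry about tie-breaking is moot: once the value gap strictly exceeds $\varepsilon$, no deterministic $\varepsilon$-optimal policy can take the worse action, so no convention is needed; note the chapter fixes policies to be deterministic.)

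One step would fail as you literally state it. You let $\beta$ lead to ``reward $0$ forever'' when $\neg\exists k\,\eta(n,k)$ and then assert $\nu \in \Mlscccs$ ``because $\eta$ is quantifier-free''. But the joint probability of the reward-$0$ branch is $\xi(\text{prefix}) \cdot \one[\neg\exists k\,\eta(n,k)]$, and the indicator of a $\Pi^0_1$ statement is not lower semicomputable, so your $\nu$ is not in $\Mlscccs$. The paper's $\nu$ instead assigns \emph{no} probability mass after $\beta$ unless $\exists k\,\eta(n,k)$: it is a strict semimeasure that simply ends on that branch, which is what makes it lower semicomputable while still giving value $0$ there under the recursive value function. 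This device is not cosmetic. The obvious computable repair --- output reward $0$ after $\beta$ until a witness $k$ is enumerated and only then switch to reward $1$ --- destroys the very uniformity you identified as the crux, since the delay until the witness appears makes the value of $\beta$ arbitrarily small even when the $\Sigma^0_1$ condition holds, so no fixed $\varepsilon$ would separate the two cases. Replace ``reward $0$ forever otherwise'' by ``the environment assigns no further probability mass otherwise'' and your argument goes through as in the paper.
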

\begin{proof}
Let $A$ be a $\Sigma^0_1$-set and $\eta$ be a quantifier-free formula such that
$n+1 \in A$ iff $\exists k\; \eta(n, k)$.
We define the environment
\[
\nu((or)_{1:t} \dmid a_{1:t}) :=
\begin{cases}
\xi((or)_{1:n} \dmid a_{1:n}) &\text{if } \exists n.\; o_{1:n} = 1^{n-1}0
                               \text{ and } a_n = \alpha \\
                              &\text{ and } \forall t' > n.\; o_{t'} = 0 \land r_{t'} = \tfrac{1}{2}, \\
\xi((or)_{1:n} \dmid a_{1:n}) &\text{if } \exists n.\; o_{1:n} = 1^{n-1}0
                               \text{ and } a_n = \beta \\
                              &\text{ and } \forall t' > n.\; o_t = 0 \land r_t = 1 \\
                              &\text{ and } \exists k\; \eta(n-1, k), \\
\xi((or)_{1:t} \dmid a_{1:t}) &\text{if } \nexists n.\; o_{1:n} = 1^{n-1}0,
                               \text{ and} \\
0                             &\text{otherwise}.
\end{cases}
\]
See \autoref{fig:eps-AIXI-is-Sigma1-hard} for an illustration.
The environment $\nu$ mimics the universal environment $\xi$
until the observation history is $1^{n-1}0$.
Taking the action $\alpha$ next gives rewards $1/2$ forever.
Taking the action $\beta$ next gives rewards $1$ forever if $n \in A$,
otherwise the environment $\nu$ ends at some future time step.
Therefore we want to take action $\beta$ if and only if $n \in A$.
We have that $\nu \in \Mlscccs$ since
$\xi \in \Mlscccs$ and $\eta$ is quantifier-free.

We define $\xi' := \tfrac{1}{2} \nu + \tfrac{1}{8} \xi$.
By \autoref{lem:mixing-mixtures} $\xi'$ is a universal lower semicomputable semimeasure.
Let $n \in A$ be given and
let $h \in (\A \times \E)^n$ be any history
with observations $o_{1:n} = 1^{n-1}0$.
Since $\nu(1^{n-1}0 \mid a_{1:n}) = \xi(1^{n-1}0 \mid a_{1:n})$ by definition,
the posterior weights of $\nu$ and $\xi$ in $\xi'$
are equal to the prior weights,
analogously to the proof of \autoref{thm:dogmatic-prior}.
In the following,
we use the linearity of
$V^{\pi^*_{\xi'}}_\rho$ in $\rho$~(\autoref{lem:V-linear}),
and the fact that values are bounded between $0$ and $1$
(\hyperref[ass:bounded-rewards]{\autoref*{ass:aixi}\ref*{ass:bounded-rewards}}).
If there is a $k$ such that $\eta(n-1, k)$ holds,
\begin{align*}
      V^*_{\xi'}(h\beta) - V^*_{\xi'}(h\alpha)
&=    \tfrac{1}{2} V^{\pi^*_{\xi'}}_\nu(h\beta)
      - \tfrac{1}{2} V^{\pi^*_{\xi'}}_\nu(h\alpha)
      + \tfrac{1}{8} V^{\pi^*_{\xi'}}_\xi(h\beta)
      - \tfrac{1}{8} V^{\pi^*_{\xi'}}_\xi(h\alpha) \\
&\geq \tfrac{1}{2} - \tfrac{1}{4} + 0 - \tfrac{1}{8}
 =    \tfrac{1}{8},
\end{align*}
and similarly if there is no $k$ such that $\eta(n-1, k)$ holds, then
\begin{align*}
      V^*_{\xi'}(h\alpha) - V^*_{\xi'}(h\beta)
&=    \tfrac{1}{2} V^{\pi^*_{\xi'}}_\nu(h\alpha)
      - \tfrac{1}{2} V^{\pi^*_{\xi'}}_\nu(h\beta)
      + \tfrac{1}{8} V^{\pi^*_{\xi'}}_\xi(h\alpha)
      - \tfrac{1}{8} V^{\pi^*_{\xi'}}_\xi(h\beta)\\
&\geq \tfrac{1}{4} - 0 + 0 - \tfrac{1}{8}
 =    \tfrac{1}{8}.
\end{align*}
In both cases $|V^*_{\xi'}(h\beta) - V^*_{\xi'}(h\alpha)| > 1 / 9$.
Hence we pick $\varepsilon := 1/9$ and get for every
$\varepsilon$-optimal policy $\pi^\varepsilon_{\xi'}$ that
$\pi^\varepsilon_{\xi'}(h) = \beta$ if and only if $n \in A$.
\end{proof}

Note the differences between
\autoref{thm:AIXI-is-not-computable} and
\autoref{thm:eps-AIXI-is-Sigma1-hard}:
the former talks about optimal policies and shows that they are not computable,
but is agnostic towards the underlying universal Turing machine.
The latter talks about $\varepsilon$-optimal policies and gives a stronger hardness result,
at the cost of depending on one particular universal Turing machine.

\section{Iterative Value Function}
\label{sec:iterative-value-function}

Historically, AIXI's value function has been defined
slightly differently to \autoref{def:value-function},
using a limit extension of an iterative definition of the value function.
This definition is the more straightforward to come up with in AI:
it is the natural adaptation of (optimal) minimax search in zero-sum games to
the (optimal) expectimax algorithm for stochastic environments.
In this section we discuss the problems with this definition.

To avoid confusion with the recursive value function $V^\pi_\nu$,
we denote the iterative value function with $W^\pi_\nu$.\footnote{%
In \citet{LH:2015computability} the use of the symbols $V$ and $W$ is reversed.
}

\begin{definition}[{Iterative Value Function; \citealp[Def.~5.30]{Hutter:2005}}]
\label{def:iterative-value}\index{value function!iterative}
The \emph{iterative value} of a policy $\pi$ in an environment $\nu$
given history $\ae_{<t}$ is
\[
   W^\pi_\nu(\ae_{<t})
:= \frac{1}{\Gamma_t} \lim_{m \to \infty} \sum_{e_{t:m}}
     \nu( e_{1:m} \mid e_{<t} \dmid a_{1:m}) \sum_{k=t}^m \gamma(k) r_k
\]
if $\Gamma_t > 0$ and $W^\pi_\nu(\ae_{<t}) := 0$ if $\Gamma_t = 0$
where $a_i := \pi(e_{<i})$ for all $i \geq t$.
The \emph{optimal iterative value} is defined as
$W^*_\nu(h) := \sup_\pi W^\pi_\nu(h)$.
\end{definition}

Analogously to \eqref{eq:V-explicit}, we can write $W^*_\nu$
using the max-sum-operator:
\begin{equation}\label{eq:W-explicit}
  W^*_\nu(\ae_{<t})
= \frac{1}{\Gamma_t} \lim_{m \to \infty} \expectimax{\ae_{t:m}}\;
    \nu(e_{1:m} \mid e_{<t} \dmid a_{1:m}) \sum_{k=t}^m \gamma(k) r_k
\end{equation}

We use \emph{iterative AINU} for
the $\nu$-optimal policy according to the iterative value function, and
\emph{iterative AIXI} for
the $\xi$-optimal policy according to the iterative value function.
Note that iterative AIMU coincides with AIMU since $\mu$ is a measure by convention.

\begin{table}[t]
\begin{center}
\footnotesize
\renewcommand{\arraystretch}{1.2}
\setlength{\tabcolsep}{5pt}
\begin{tabular}{llll}
\toprule
Agent & Optimal & $\varepsilon$-Optimal \\
\midrule
Iterative AINU & $\Delta^0_4$, $\Sigma^0_3$-hard
               & $\Delta^0_3$, $\Pi^0_2$-hard \\
Iterative AIXI & $\Delta^0_4$, $\Pi^0_2$-hard
               & $\Delta^0_3$, $\Pi^0_2$-hard \\
Iterative AIMU 
               & $\Delta^0_2$
               & $\Delta^0_1$ \\
\bottomrule
\end{tabular}
\end{center}
\caption[Computability results for the iterative value function]{%
Computability results for different agent models
that use the iterative value function
derived in \autoref{sec:iterative-value-function}.
Hardness results for AINU are with respect to
a specific environment $\nu \in \Mlscccs$.
}
\label{tab:complexity-iterative-value}
\end{table}

Generally, our environment $\nu \in \Mlscccs$
is only a semimeasure and not a measure,
i.e., there is a history $\ae_{<t}a_t$ such that
\[
1 > \sum_{e_t \in \E} \nu(e_t \mid e_{<t} \dmid a_{1:t}).
\]
In such cases, with positive probability the environment $\nu$
does not produce a new percept $e_t$.
If this occurs, we shall use the informal interpretation that
the environment $\nu$ \emph{ended},
but our formal argument does not rely on this interpretation.

The following proposition shows that
for a semimeasure $\nu \in \Mlscccs$ that is not a measure,
iterative AINU does not maximize $\nu$-expected rewards.
Recall that $\gamma(1)$ states the discount of the first reward.
In the following, we assume without loss of generality that
$\gamma(1) > 0$, i.e.,
we are not indifferent about the reward received in time step $1$.

\begin{figure}[t]
\begin{center}
\begin{tikzpicture}[node distance=20mm, auto]
\node (start) {};
\node[agent, above of=start, node distance=10mm] (choice) {};
\node[env, left of=choice] (one) {};
\node[env, right of=choice] (eps) {};
\node[env, above of=eps] (hell) {Hell};
\node[stoch, above of=one] (void) {};
\path[transition] (start)  to (choice);
\path[transition] (choice) to node[below] {$\beta$} (eps);
\path[transition] (choice) to node[below] {$\alpha$} (one);
\path[transition] (eps)    to node[left] {$r = \varepsilon$} (hell);
\path[transition] (hell)   to[loop left] node[left] {$r = 0$} (hell);
\path[transition] (one)    to node[right] {$r = 1$} (void);
\end{tikzpicture}
\end{center}
\caption[Environment from the proof of \autoref{prop:iterative-AINU-is-not-a-reward-maximizer}]{%
The environment $\nu$ from the proof of \autoref{prop:iterative-AINU-is-not-a-reward-maximizer}.
Action $\alpha$ yields reward $1$, but subsequently the environment ends.
Action $\beta$ yields reward $\varepsilon$ and the environment continues forever.
Iterative AINU will prefer the suboptimal action $\beta$,
because it conditions on surviving forever.
}
\label{fig:AINU-is-not-a-reward-maximizer}
\end{figure}
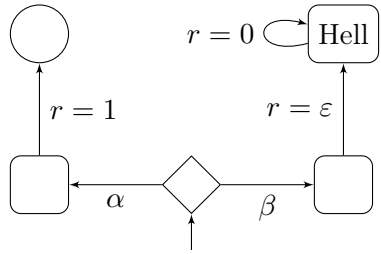

\begin{proposition}[Iterative AINU is not a $\nu$-Expected Reward Maximizer]
\label{prop:iterative-AINU-is-not-a-reward-maximizer}
\index{AINU}\index{value function!iterative}
For any $\varepsilon > 0$
there is an environment $\nu \in \Mlscccs$ that is not a measure and
a policy $\pi$ that receives a total of $\gamma(1)$ rewards in $\nu$,
but iterative AINU receives only $\varepsilon \gamma(1)$ rewards in $\nu$.
\end{proposition}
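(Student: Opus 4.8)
The plan is to construct the environment $\nu$ depicted in \autoref{fig:AINU-is-not-a-reward-maximizer}: a one-shot decision where action $\alpha$ yields reward $1$ at time step $1$ and then the environment produces no further percept (so $\nu(\cdot \mid \alpha)$ has total mass $0$ after the first step), while action $\beta$ yields reward $\varepsilon$ at time step $1$ and then continues forever emitting reward $0$ (a measure on the continuation). Formally I would fix $\A = \{\alpha,\beta\}$, ignore observations, and set $\nu(r_1 \dmid \alpha) := \one_{r_1 = 1}$ with $\nu(r_{1:t} \dmid a_{1:t}) := 0$ for all $t \geq 2$ whenever $a_1 = \alpha$, and $\nu(r_{1:t} \dmid a_{1:t}) := 1$ whenever $a_1 = \beta$, $r_1 = \varepsilon$, and $r_k = 0$ for all $2 \leq k \leq t$, with everything else mapped to $0$. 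One checks immediately that $\nu$ is a lower semicomputable chronological contextual semimeasure (indeed it is computable and a finite-state POMDP), so $\nu \in \Mlscccs$, and that it is not a measure because after action $\alpha$ the mass drops to zero.

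The key computation is to evaluate the iterative value function $W^\pi_\nu$ from \autoref{def:iterative-value} for the two deterministic policies. For the policy $\pi_\alpha$ that takes $\alpha$ first: in the formula for $W^{\pi_\alpha}_\nu(\epsilon)$, the sum over $e_{1:m}$ weights $\sum_{k=1}^m \gamma(k) r_k$ by $\nu(e_{1:m} \dmid a_{1:m})$, but for $m \geq 2$ every term has $\nu(e_{1:m} \dmid \alpha\ldots) = 0$, so only the $m = 1$ contribution survives, giving $W^{\pi_\alpha}_\nu(\epsilon) = \gamma(1) \cdot 1 / \Gamma_1 = \gamma(1)/\Gamma_1$. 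For the policy $\pi_\beta$ that takes $\beta$ first: $\nu(e_{1:m} \dmid \beta\ldots) = 1$ along the unique reward sequence $\varepsilon, 0, 0, \ldots$, so $W^{\pi_\beta}_\nu(\epsilon) = \gamma(1)\varepsilon/\Gamma_1$. Since $\gamma(1)/\Gamma_1 > \gamma(1)\varepsilon/\Gamma_1$ for $\varepsilon < 1$, iterative AINU selects action $\alpha$. Wait — that is the wrong direction; I need iterative AINU to prefer $\beta$. The resolution is that the iterative value function does \emph{not} truncate: one must instead take $m \to \infty$ \emph{before} observing that the mass vanishes, i.e., the standard reading of \autoref{def:iterative-value} is that along the $\alpha$-branch there are no continuations $e_{2:m}$ at all, so the inner sum $\sum_{e_{2:m}}$ over an empty set is $0$, making $W^{\pi_\alpha}_\nu(\epsilon) = 0$, whereas along the $\beta$-branch the unique infinite continuation contributes $\gamma(1)\varepsilon/\Gamma_1 > 0$. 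I would state this carefully: the limit $m \to \infty$ in \autoref{def:iterative-value} effectively conditions on the environment surviving forever, so a branch that terminates contributes nothing. Thus $W^{\pi_\beta}_\nu(\epsilon) = \gamma(1)\varepsilon/\Gamma_1 > 0 = W^{\pi_\alpha}_\nu(\epsilon)$, so iterative AINU takes $\beta$ and collects $\varepsilon\gamma(1)$ total $\nu$-reward, while $\pi_\alpha$ collects $\gamma(1)$; rescaling by $\Gamma_1$ (or just reading off the raw discounted reward sums) gives the statement as phrased.

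I expect the main obstacle to be pinning down precisely how \autoref{def:iterative-value} handles terminating branches — the paper's formula has an inner sum $\sum_{e_{t:m}} \nu(e_{1:m} \mid e_{<t} \dmid a_{1:m}) \sum_{k=t}^m \gamma(k) r_k$ where the semimeasure deficiency means the weights do not sum to one, and I must argue convincingly that the $m \to \infty$ limit suppresses the finite-horizon reward that $\alpha$ actually delivers. The cleanest way is to observe that for $a_1 = \alpha$ and any $m \geq 2$, the set of $e_{1:m}$ with $\nu(e_{1:m} \dmid a_{1:m}) > 0$ is empty, hence the entire inner sum is $0$ for all $m \geq 2$, so the limit is $0$; the reward of $1$ received at step $1$ is "seen" only at horizon $m = 1$ but washed out in the limit. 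I would then contrast this explicitly with the recursive value function $V^{\pi_\alpha}_\nu(\epsilon) = \gamma(1)/\Gamma_1$ (via the Bellman equation, since the semimeasure gap just contributes a $\gamma(t) r_t$ term with the convention that missing percepts contribute nothing further), which correctly reflects that $\pi_\alpha$ is the reward-maximizing policy — this is exactly the discrepancy the proposition is meant to expose, and it motivates the paper's switch to the recursive definition.
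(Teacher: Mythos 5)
Your construction and argument are essentially identical to the paper's proof: the same two-action environment (action $\alpha$ gives reward $1$ and then the semimeasure mass vanishes, action $\beta$ gives $\varepsilon$ and continues forever), and the same key observation that $\nu(r_{1:m} \dmid \alpha a_{2:m}) = 0$ for all $m \geq 2$ forces the iterative value of the $\alpha$-branch to $0$ in the limit $m \to \infty$, so iterative AINU prefers $\beta$ and collects only $\varepsilon\gamma(1)$ while a policy taking $\alpha$ collects $\gamma(1)$. Your mid-proof self-correction lands exactly on the paper's reasoning, so the proposal is correct and follows the same route.
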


Informally, the environment $\nu$ is defined as follows.
In the first time step,
the agent chooses between the two actions $\alpha$ and $\beta$.
Taking action $\alpha$ gives a reward of $1$,
and subsequently the environment ends.
Action $\beta$ gives a reward of $\varepsilon$,
but the environment continues forever.
There are no other rewards in this environment.
See \autoref{fig:AINU-is-not-a-reward-maximizer}.
From the perspective of $\nu$-expected reward maximization,
it is better to take action $\alpha$,
however iterative AINU takes action $\beta$.

\begin{proof}[Proof of \autoref{prop:iterative-AINU-is-not-a-reward-maximizer}]
Let $\varepsilon > 0$.
We ignore observations and
set $\E := \{ 0, \varepsilon, 1 \}$, $\A := \{ \alpha, \beta \}$.
The environment $\nu$ is formally defined by
\[
\nu(r_{1:t} \dmid a_{1:t}) :=
\begin{cases}
1 &\text{if } a_1 = \alpha
   \text{ and } r_1 = 1
   \text{ and } t = 1 \\
1 &\text{if } a_1 = \beta
   \text{ and } r_1 = \varepsilon
   \text{ and } r_k = 0\; \forall 1 < k \leq t \\
0 &\text{otherwise}.
\end{cases}
\]
Taking action $\alpha$ first,
we have $\nu(r_{1:t} \dmid \alpha a_{2:t}) = 0$ for $t > 1$
(the environment $\nu$ ends in time step $2$ given history $\alpha$).
Hence we conclude
\[
  V^*_\nu(\alpha)
= \frac{1}{\Gamma_t} \lim_{m \to \infty} \sum_{r_{1:m}}
     \nu( r_{1:m} \dmid \alpha a_{2:m}) \sum_{k=1}^m \gamma(k) r_k
= 0.
\]
Taking action $\beta$ first we get
\[
  V^*_\nu(\beta)
= \frac{1}{\Gamma_t} \lim_{m \to \infty} \sum_{r_{1:m}}
     \nu( r_{1:m} \dmid \beta a_{2:m}) \sum_{k=1}^m \gamma(k) r_k
= \frac{\gamma(1)}{\Gamma_1} \varepsilon.
\]
Since $\gamma(1) > 0$ and $\varepsilon > 0$,
we have $V^*_\nu(\beta) > V^*_\nu(\alpha)$,
and thus iterative AINU will use a policy that plays action $\beta$ first,
receiving a total discounted reward of $\varepsilon \gamma(1)$.
In contrast,
any policy $\pi$ that takes action $\alpha$ first
receives a larger total discounted reward of $\gamma(1)$.
\end{proof}

Whether it is reasonable to assume
that our environment has a nonzero probability of ending
is a philosophical debate we do not want to engage in here;
see \citet{MEH:2016death} for a discussion.
Instead,
we have a different motivation to use the recursive over the iterative value function:
the latter has worse computability properties.
Concretely,
we show that $\varepsilon$-optimal iterative AIXI has to solve $\Pi^0_2$-hard problems
and that there is an environment $\nu \in \Mlscccs$
such that iterative AINU has to solve $\Sigma^0_3$-hard problems.
In contrast, using the recursive value function,
$\varepsilon$-optimal AIXI is $\Delta^0_2$ according to \autoref{cor:complexity-eps-ainu} and
AINU is $\Delta^0_3$ according to \autoref{cor:complexity-ainu}.

The central difference between $V^\pi_\nu$ and $W^\pi_\nu$ is that
for $V^\pi_\nu$ all obtained rewards matter,
but for $W^\pi_\nu$ only the rewards in timelines that continue indefinitely.
In this sense the value function $W^\pi_\nu$ conditions on surviving forever.
If the environment $\mu$ is a measure,
then the history is infinite with probability one,
and so $V^\pi_\nu$ and $W^\pi_\nu$ coincide.
Hence this distinction is not relevant for AIMU,
only for AINU and AIXI.

\begin{lemma}[Complexity of $W^*_\nu$]
\label{lem:complexity-W}
\index{value function!iterative}
For every $\nu \in \Mlscccs$,
the function $W^*_\nu$ is $\Pi^0_2$-computable.
\end{lemma}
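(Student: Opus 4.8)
The plan is to expose $W^*_\nu$ as a $\Delta^0_2$-computable quantity (the limit in $m$ of a computable sequence) composed with one $\sup$ over a finite-branching tree, and to track how this $\sup$ raises the arithmetical complexity by exactly one quantifier alternation, landing at $\Pi^0_2$. Concretely, I would start from the explicit max-sum expression \eqref{eq:W-explicit} for $W^*_\nu$. Write
\[
  g_m(\ae_{<t}) := \expectimax{\ae_{t:m}}\;
    \nu(e_{1:m} \mid e_{<t} \dmid a_{1:m}) \sum_{k=t}^m \gamma(k) r_k,
\]
so that $\Gamma_t \cdot \nu(e_{<t}\dmid a_{<t}) \cdot W^*_\nu(\ae_{<t}) = \lim_{m \to \infty} g_m(\ae_{<t})$. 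Here $\nu$ is lower semicomputable and $\gamma$ is computable, so for each fixed $m$ the function $g_m$ is lower semicomputable (a finite max-sum of lower semicomputable functions is lower semicomputable). The key structural point, unlike the recursive case in \autoref{lem:complexity-V}, is that the sequence $(g_m)_m$ is \emph{not} monotone in $m$: because $\nu$ is only a semimeasure, appending another time step can decrease the summed probability mass, so we cannot simply conclude lower semicomputability of the limit.

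The main obstacle is therefore exactly this non-monotonicity. To handle it, I would bound the tail explicitly: since rewards lie in $[0,1]$ and $\nu(e_{1:m}\dmid a_{1:m}) \leq \nu(e_{1:t}\dmid a_{1:t}) \le 1$, the difference between $g_m$ and its limit is controlled by $\Gamma_{m+1}$ (the discount tail) — any contribution from steps beyond $m$ is at most $\Gamma_{m+1} \to 0$, and the max-sum over the first $m$ steps changes by at most $\Gamma_{m+1}$ as well when we extend the horizon, because the extra factors $\nu(\cdot)$ and extra rewards only affect terms weighted by $\gamma(k)$ for $k > m$. Thus $|g_m(\ae_{<t}) - \lim_{m'\to\infty} g_{m'}(\ae_{<t})| \le \Gamma_{m+1}$, and since $\gamma$ is computable this gives a computable modulus of convergence. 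Having a computable modulus together with computable-from-below approximations $\phi(\cdot, k) \uparrow g_m(\cdot)$ lets me limit-compute $\lim_m g_m$: run the approximations long enough that both the $k$-approximation error and the $\Gamma_{m+1}$ horizon error are small. Hence $\lim_m g_m$ is $\Delta^0_2$-computable, i.e.\ limit computable.

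Finally I would assemble the pieces. The numerator of $W^*_\nu(\ae_{<t})$ (after clearing $\Gamma_t \nu(e_{<t}\dmid a_{<t})$) is $\lim_m g_m(\ae_{<t})$, which is $\Delta^0_2$-computable by the previous paragraph; the denominator $\Gamma_t \cdot \nu(e_{<t}\dmid a_{<t})$ is lower semicomputable hence $\Sigma^0_1 \subseteq \Delta^0_2$, and it is positive on any history we actually need to evaluate (if it is zero then $W^*_\nu := 0$ by convention, which is trivially computable). By \autoref{lem:computable-reals}(d) the quotient is $\Delta^0_2$-computable. But $\Delta^0_2 \subseteq \Pi^0_2$, so $W^*_\nu$ is $\Pi^0_2$-computable, as claimed. (In fact this argument shows the stronger statement that $W^*_\nu$ is limit computable; I would state it as $\Pi^0_2$-computable to match the table, since the lower-bound results elsewhere in the section show one cannot do better uniformly.) A remark worth including: the gap between this $\Delta^0_2$ bound on the value function and the $\Sigma^0_3$-hardness of iterative AINU in \autoref{tab:complexity-iterative-value} comes from the argmax tie-breaking, handled exactly as in \autoref{thm:complexity-optimal-policies}, where $\Delta^0_n$ value yields $\Delta^0_{n+1}$ optimal policy — but here the lower bound is even worse because optimality of the iterative value itself is more complex; that discrepancy is resolved in the hardness proofs rather than here.
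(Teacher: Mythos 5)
Your reduction to a single quotient of $\Delta^0_2$-computable quantities hinges on the claim that $|g_m(\ae_{<t}) - \lim_{m'\to\infty} g_{m'}(\ae_{<t})| \leq \Gamma_{m+1}$, i.e.\ that the non-monotonicity caused by the semimeasure loss admits a computable modulus of convergence. This is false, and it is exactly the point where the iterative value differs from the recursive one. When the horizon is extended from $m$ to $m+1$, the already accumulated rewards $\sum_{k=t}^{m}\gamma(k)r_k$ are reweighted by $\nu(e_{1:m+1}\dmid a_{1:m+1})$ instead of $\nu(e_{1:m}\dmid a_{1:m})$, so the drop in $g_m$ is of order (lost probability mass)$\times(\Gamma_t-\Gamma_{m+1})$, not of order $\Gamma_{m+1}$. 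Concretely, take $\nu$ deterministic with reward $1$ and full mass up to some step $N$ and zero mass thereafter: then $g_m \approx \Gamma_t - \Gamma_{m+1}$ for $m \leq N$ but $g_m = 0$ for $m > N$, so the deviation from the limit at horizon $N$ is about $\Gamma_t$ while $\Gamma_{N+1}$ is arbitrarily small, and $N$ can be arbitrarily large. Hence no computable rate exists, and your parenthetical ``stronger statement'' that $W^*_\nu$ is limit computable cannot be right: if $W^*_{\xi'}$ were $\Delta^0_2$-computable, \autoref{thm:complexity-eps-optimal-policies} would yield a $\Delta^0_2$ $\varepsilon$-optimal policy, contradicting the $\Pi^0_2$-hardness of every such policy established in \autoref{thm:iterative-eps-AIXI-is-Pi2-hard}. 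The $\Pi^0_2$ bound in the lemma is not a cosmetic weakening to match the table; it is the actual complexity class of the statement.

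The repair is to use only the \emph{existence} of the limit, not any rate. After clearing denominators as you do, $W^*_\nu(\ae_{<t}) > q$ becomes $\lim_{m\to\infty} f(m) > q'$ with $f(m) := g_m(\ae_{<t})$ and $q' := q\,\Gamma_t\,\nu(e_{<t}\dmid a_{<t})$; since the limit exists, this is equivalent to $f(m) > q'$ for infinitely many $m$, i.e.\ $\forall m_0\,\exists m \geq m_0.\; f(m) > q'$. Each $f(m)$ is lower semicomputable (your observation), so $f(m) > \psi(\ell)$ unfolds into existential quantifiers over the approximation indices, while the lower semicomputable right-hand side $q'$ contributes one more universal quantifier over its approximation $\psi(\ell)\nearrow q'$. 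The result is a genuine $\forall\exists$ formula, which is the paper's proof and lands at $\Pi^0_2$ — one unbounded universal quantifier is unavoidable here, precisely because the convergence of $f(m)$ has no computable modulus.
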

\begin{proof}
Multiplying \eqref{eq:W-explicit} with $\Gamma_t \nu(e_{<t} \dmid a_{<t})$ yields
$W^*_\nu(\ae_{<t}) > q$ if and only if
\begin{equation}\label{eq:V-multiplied}
  \lim_{m \to \infty} \expectimax{\ae_{t:m}}\;
    \nu(e_{1:m} \dmid a_{1:m}) \sum_{k=t}^m \gamma(k) r_k
> q\, \Gamma_t\, \nu(e_{<t} \dmid a_{<t}).
\end{equation}
The inequality's right side is lower semicomputable,
hence there is a computable function $\psi$ such that
$\psi(\ell) \nearrow q\, \Gamma_t\, \nu(e_{<t} \dmid a_{<t}) =: q'$
as $\ell \to \infty$.
(In contrast to the recursive value function,
this quantity is not increasing in $m$.)
For a fixed $m$, the left side is also lower semicomputable,
therefore there is a computable function $\phi$ such that
\[
         \phi(m, k)
\nearrow \expectimax{\ae_{t:m}} \nu(e_{1:m} \dmid a_{1:m}) \sum_{k=t}^m \gamma(k) r_k
=:       f(m)
\text{ as $k \to \infty$.}
\]
We already know that the limit of $f(m)$ for $m \to \infty$ exists (uniquely),
hence we can write \eqref{eq:V-multiplied} as
\begin{align*}
                      &\lim_{m \to \infty} f(m) > q' \\
\Longleftrightarrow~~ &\forall m_0\, \exists m \geq m_0.\; f(m) > q' \\
\Longleftrightarrow~~ &\forall m_0\, \exists m \geq m_0\, \exists k.\;
                         \phi(m, k) > q' \\
\Longleftrightarrow~~ &\forall \ell\, \forall m_0\, \exists m \geq m_0\,
                         \exists k.\; \phi(m, k) > \psi(\ell),
\end{align*}
which is a $\Pi^0_2$-formula.
\end{proof}

Note that in the finite horizon case where $m$ is fixed,
the value function $W^*_\nu$
is $\Delta^0_2$-computable by \autoref{lem:computable-reals}d,
since $W^*_\nu(\ae_{<t}) = f(m) / q'$.
In this case, we get the same computability results for iterative AINU
as we did in \autoref{ssec:upper-bounds}.

\begin{corollary}[Complexity of Iterative AINU]
\label{cor:complexity-iterative-ainu}
\index{value function!iterative}\index{AINU}
For any environment $\nu \in \Mlscccs$,
iterative AINU is $\Delta^0_4$ and
there is an $\varepsilon$-optimal iterative AINU that is $\Delta^0_3$.
\end{corollary}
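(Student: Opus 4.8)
The plan is to combine the complexity bound on the iterative value function from \autoref{lem:complexity-W} with the machinery for turning value-function complexity into policy complexity that was developed in \autoref{ssec:upper-bounds}. By \autoref{lem:complexity-W}, for every $\nu \in \Mlscccs$ the function $W^*_\nu$ is $\Pi^0_2$-computable, and in particular $\Delta^0_3$-computable (since $\Pi^0_2 \subseteq \Delta^0_3$). Now I would invoke \autoref{thm:complexity-optimal-policies}: if $W^*_\nu$ is $\Delta^0_n$-computable, then there is an optimal policy that is $\Delta^0_{n+1}$. The only subtlety is that \autoref{thm:complexity-optimal-policies} is phrased in terms of the recursive value function $V^*_\nu$, but inspecting its proof shows it only uses that $V^*_\nu$ is a $\Delta^0_n$-computable real-valued function from which the tie-breaking argmax policy \eqref{eq:p*} is defined; the same construction applies verbatim with $W^*_\nu$ in place of $V^*_\nu$. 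Taking $n = 3$ gives that iterative AINU is $\Delta^0_4$.

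For the $\varepsilon$-optimal part, I would similarly apply \autoref{thm:complexity-eps-optimal-policies}, which states that if the value function is $\Delta^0_n$-computable then there is an $\varepsilon$-optimal policy that is $\Delta^0_n$ (note: no increase in level, since no exact equality test is needed — we only need to find an action within $\varepsilon/2$ of the best on an $\varepsilon/2$-grid). Again the proof of \autoref{thm:complexity-eps-optimal-policies} uses only that the value function is $\Delta^0_n$-computable, so it carries over to $W^*_\nu$. Since $W^*_\nu$ is $\Pi^0_2$-computable and hence $\Delta^0_3$-computable, we conclude that there is an $\varepsilon$-optimal iterative AINU that is $\Delta^0_3$.

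The main obstacle — and it is a mild one — is to make sure the transfer of \autoref{thm:complexity-optimal-policies} and \autoref{thm:complexity-eps-optimal-policies} to the iterative value function is clean. Both theorems are already stated in the generic form ``if $V^*_\nu$ is $\Delta^0_n$-computable, then \dots,'' and their proofs invoke only the arithmetic-hierarchy closure properties of \autoref{lem:computable-reals} applied to the value function as a black-box real-valued function. Since $W^*_\nu$ is just another real-valued function with a known complexity bound, the same reasoning applies; the iterative-versus-recursive distinction is irrelevant for the policy-extraction step. So the proof reduces to: quote \autoref{lem:complexity-W}, observe $\Pi^0_2 \subseteq \Delta^0_3$, and apply the two policy-complexity theorems with $n=3$. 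The only thing worth spelling out in the write-up is that \autoref{thm:complexity-optimal-policies} and \autoref{thm:complexity-eps-optimal-policies} are agnostic to whether their input value function is recursive or iterative.

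Thus the proof is short: \textbf{Proof.} By \autoref{lem:complexity-W}, $W^*_\nu$ is $\Pi^0_2$-computable, hence $\Delta^0_3$-computable. Applying \autoref{thm:complexity-optimal-policies} with $n = 3$ (whose proof uses only the $\Delta^0_n$-computability of the value function and the closure properties of \autoref{lem:computable-reals}, and therefore applies equally to the iterative value function $W^*_\nu$) yields an optimal iterative policy that is $\Delta^0_4$. Applying \autoref{thm:complexity-eps-optimal-policies} with $n = 3$ similarly yields an $\varepsilon$-optimal iterative policy that is $\Delta^0_3$. $\hfill\qed$
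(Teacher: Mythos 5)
Your proposal matches the paper's own proof, which cites exactly \autoref{lem:complexity-W}, \autoref{thm:complexity-optimal-policies}, and \autoref{thm:complexity-eps-optimal-policies}; your extra remark that the two policy-extraction theorems treat the value function as a black-box $\Delta^0_n$-computable function (so the recursive-vs-iterative distinction is irrelevant there) is precisely the implicit justification the paper relies on. Correct, and essentially the same argument.
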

\begin{proof}
From \autoref{thm:complexity-optimal-policies},
\autoref{thm:complexity-eps-optimal-policies}, and
\autoref{lem:complexity-W}.
\end{proof}

\begin{figure}[t]
\begin{center}
\begin{tikzpicture}[node distance=20mm, auto]
\node (start) {};
\node[stoch, right of=start, node distance=10mm] (init) {};
\node[env, above of=init] (one) {$n$\texttt{++}};
\node[env, right of=init] (zero) {};
\node[agent, right of=zero] (choice) {};
\node[env, above of=choice] (hell) {Hell};
\node[stoch, below of=choice] (gate) {};
\node[env, left of=gate, node distance=40mm] (cheaven) {Conditional Heaven};
\path[transition] (start)  to (init);
\path[transition] (init)   to[bend right] node[right] {$1/2$} (one);
\path[transition] (one)    to[bend right] node[left] {$\begin{array}{c} o = 1 \\ r = 0 \end{array}$} (init);
\path[transition] (init)   to node {$1/2$} (zero);
\path[transition] (zero)   to node {$\begin{array}{c} o = 0 \\ r = 0 \end{array}$} (choice);
\path[transition] (choice) to node[right] {$\alpha$} (hell);
\path[transition] (choice) to node {$\beta$} (gate);
\path[transition] (gate)   to[bend left]
	node[below] {$\one_{\forall t' \leq t\, \exists k\; \eta(n, i, t', k)}$} (cheaven);
\path[transition] (hell)   to[loop left]
	node {$\begin{array}{c} o = 0 \\ r = 0 \end{array}$} (hell);
\path[transition] (cheaven) to
	node[above] {$\begin{array}{c} o = 0 \\ r = 1 \end{array}$} (gate);
\end{tikzpicture}
\end{center}
\caption[Environment from the proof of \autoref{thm:iterative-AINU-is-Sigma3-hard}]{%
The environment $\rho_i$ from the proof of \autoref{thm:iterative-AINU-is-Sigma3-hard}.
The mixture $\nu$ over class of environments
$\M := \{ \rho_0, \rho_1, \ldots \} \subset \Mlscccs$
forces iterative AINU to solve $\Sigma^0_3$-hard problems.
`Conditional Heaven' is a node that yields reward $1$ until
$\neg\exists k\; \eta(n, i, t, k)$,
at which point the environment ends.
Hence action $\beta$ is preferred in environment $\rho_i$ iff
conditional heaven lasts forever
(because otherwise $\nu(\ldots) = 0$ and hence $V^*_\nu(\ldots) = 0$)
which is the case iff
$\forall t\; \exists k\; \eta(n, i, t, k)$.
}
\label{fig:iterative-AINU-is-Sigma3-hard}
\end{figure}

We proceed to show corresponding lower bounds
as in \autoref{ssec:lower-bounds}.
For the rest of this section we assume $\Gamma_t > 0$ for all $t$.

\begin{theorem}[Iterative AINU is $\Sigma^0_3$-hard]
\label{thm:iterative-AINU-is-Sigma3-hard}
\index{value function!iterative}\index{AINU}
There is an environment $\nu \in \Mlscccs$ such that
iterative AINU is $\Sigma^0_3$-hard.
\end{theorem}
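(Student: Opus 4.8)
The plan is to mirror the structure of the proof of \autoref{thm:AINU-is-Pi2-hard}, but now encode a $\Sigma^0_3$-set via a family of environments whose semimeasure loss (the probability of ``ending'') is what carries the third quantifier. Recall the essential asymmetry exploited by \autoref{prop:iterative-AINU-is-not-a-reward-maximizer}: the iterative value $W^\pi_\nu$ conditions on surviving forever, so an action that keeps the environment alive indefinitely is strictly preferred over one that gives more reward but kills the environment — and crucially, whether a timeline survives forever is a $\Pi^0_1$ statement ``$\forall t\,\exists k\,\eta(\dots)$'' that, when we also mix over an index $i$, becomes $\Pi^0_2$, and when the agent's action must be correct for \emph{every} such $n$, becomes $\Sigma^0_3$ overall. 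Let $A$ be an arbitrary $\Sigma^0_3$-set and fix a quantifier-free $\eta$ with $n \in A \Leftrightarrow \exists i\,\forall t\,\exists k\,\eta(n,i,t,k)$.

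First I would define, for each $i \in \mathbb{N}$, the environment $\rho_i$ illustrated in \autoref{fig:iterative-AINU-is-Sigma3-hard}: a geometric ``counter'' phase in which observations $1$ are emitted with probability $1/2$ each (incrementing $n$) until a $0$ appears, after which the agent faces the binary choice $\alpha$ (go to Hell, reward $0$ forever, but the environment is a measure from here on) versus $\beta$ (enter ``Conditional Heaven'': emit reward $1$ at each step $t$ provided $\exists k\,\eta(n,i,t,k)$, and assign probability $0$ to any continuation once this fails). Each $\rho_i$ is a lower semicomputable chronological contextual semimeasure because $\eta$ is quantifier-free and the survival condition is a computable predicate checked step by step; the semimeasure loss occurs exactly at the first $t$ for which $\neg\exists k\,\eta(n,i,t,k)$. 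Then I would set $\nu := \sum_{i} 2^{-i-1}\rho_i \in \Mlscccs$, with the weights arbitrary but positive, and $\alpha$ the action favored in ties (so here I actually want $\beta$ to win only when it is \emph{strictly} better, matching the structure that $\beta$ corresponds to membership).

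The key computation is then: given the counter produced observations $1^n 0$, in environment $\rho_i$ taking $\beta$ yields $W^*_{\rho_i}(1^n0\beta) = \Gamma_{n+2}$ if $\forall t\,\exists k\,\eta(n,i,t,k)$ (survival forever, all rewards $1$), and $W^*_{\rho_i}(1^n0\beta) = 0$ otherwise (the environment ends, so the conditioned-on-survival value collapses exactly as in \autoref{prop:iterative-AINU-is-not-a-reward-maximizer}), while $W^*_{\rho_i}(1^n0\alpha) = 0$ always. Using linearity of the value function in the environment (\autoref{lem:V-linear}, which holds equally for $W$ since the multiplicative weights pass through the iterative limit), $W^*_\nu(1^n0\beta) > 0$ iff \emph{some} $\rho_i$ contributes a positive term, i.e.\ iff $\exists i\,\forall t\,\exists k\,\eta(n,i,t,k)$, i.e.\ iff $n \in A$; and $W^*_\nu(1^n0\alpha) = 0$ regardless. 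Hence the iterative-$\nu$-optimal policy $\pi$ satisfies $\pi(1^n0) = \beta \Leftrightarrow n \in A$ (since $\alpha$ is favored in ties, $\beta$ is taken precisely when strictly better), giving a many-one reduction of $A$ to $\pi$.

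The main obstacle I anticipate is verifying that the iterative value function really does ``collapse to zero'' on the non-surviving branch in the presence of the mixture — that is, making rigorous that in $W^*_\nu(1^n0\beta)$ the contribution of a $\rho_i$ whose Conditional Heaven fails at some finite $t_0$ is truly $0$ and not merely small, since $\nu(e_{1:m} \dmid a_{1:m})$ for that $\rho_i$ is literally $0$ for $m \geq t_0$ once the failing percept is demanded, so the $\sum_{e_{t:m}}$ term drops it entirely in the $m\to\infty$ limit. A secondary subtlety is ensuring the counter phase is handled correctly: because the observation string $1^n0$ has $\nu$-probability $2^{-n-1}$ and this factor is common to $\alpha$ and $\beta$, it cancels in the comparison, but one must be careful that the iterative definition's normalization by $\Gamma_t\nu(e_{<t}\dmid a_{<t})$ (as in the proof of \autoref{lem:complexity-W}) does not reintroduce a spurious dependence — it does not, since that denominator is positive and identical for both actions. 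I would also double-check that $A$ being an arbitrary $\Sigma^0_3$-set, not a $\Sigma^0_3$-complete one, suffices for the ``$\Sigma^0_3$-hard'' conclusion in the sense of \autoref{ssec:arithmetical hierarchy}'s definition, which it does by the standard many-one reduction argument.
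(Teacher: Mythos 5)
Your proposal is correct and follows essentially the same route as the paper's proof: the same family $\rho_i$ with a counter phase, an action $\alpha$ giving sure reward $0$ and an action $\beta$ leading to a conditional-heaven branch that dies (semimeasure loss) as soon as $\exists k\,\eta(n,i,t,k)$ fails, the same mixture $\nu = \sum_i 2^{-i-1}\rho_i$ with $\alpha$ favored in ties, and the same conclusion $\pi^*_\nu(1^n0)=\beta \Leftrightarrow n\in A$ exploiting that the iterative value conditions on surviving forever. The only nitpick is that the per-step survival condition $\exists k\,\eta(n,i,t,k)$ is semi-decidable rather than computable, but this does not affect the argument since, as you also note, lower semicomputability of each $\rho_i$ already follows from $\eta$ being quantifier-free.
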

\begin{proof}
The proof is analogous to the proof of \autoref{thm:AINU-is-Pi2-hard}.
Let $A$ be any $\Sigma^0_3$ set, then
there is a quantifier-free formula $\eta$ such that
\[
n \in A
\;\Longleftrightarrow\;
\exists i\; \forall t\; \exists k\; \eta(n, i, t, k).
\]
We define the environments $\rho_i$ similar to
the proof of \autoref{thm:AINU-is-Pi2-hard},
except for two changes:
\begin{itemize}
\item We replace $\exists k\; \eta(n, i, k)$
	with $\forall t' \leq t\; \exists k\; \eta(n, i, t', k)$.
\item We switch actions $\alpha$ and $\beta$:
	action $\beta$ `checks' the formula $\eta$ and
	action $\alpha$ gives a sure reward of $0$.
\end{itemize}
Formally,
\[
\rho_i((or)_{1:t} \dmid a_{1:t}) :=
\begin{cases}
2^{-t}   &\text{if } o_{1:t} = 1^t
          \text{ and } \forall t' \leq t.\; r_{t'} = 0, \\
2^{-n-1} &\text{if } \exists n.\; 1^n0 \sqsubseteq o_{1:t} \sqsubseteq 1^n 0^\infty
          \text{ and } a_{n+2} = \alpha \\
         &\text{ and } \forall t' \leq t.\; r_{t'} = 0, \\
2^{-n-1} &\text{if } \exists n.\; 1^n0 \sqsubseteq o_{1:t} \sqsubseteq 1^n 0^\infty
          \text{ and } a_{n+2} = \beta \\
         &\text{ and } \forall t' \leq t.\; r_{t'} = \one_{t' > n+1} \\
         &\text{ and } \forall t' \leq t\, \exists k\; \eta(n, i, t', k),
          \text{ and} \\
0        &\text{otherwise}.
\end{cases}
\]
See \autoref{fig:iterative-AINU-is-Sigma3-hard}
for an illustration of the environment $\rho_i$.
Every $\rho_i$ is a chronological conditional semimeasure by definition,
so $\M := \{ \rho_0, \rho_1, \ldots \} \subseteq \Mlscccs$.
Furthermore,
every $\rho_i$ is lower semicomputable since $\eta$ is quantifier-free.

We define our environment $\nu$ as a mixture over $\M$,
\[
\nu := \sum_{i \in \mathbb{N}} 2^{-i-1} \rho_i;
\]
the choice of the weights on the environments $\rho_i$ is arbitrary but positive.
We get for the $\nu$-optimal policy $\pi^*_\nu$
analogously to the proof of \autoref{thm:AINU-is-Pi2-hard}
\[
\pi^*_\nu(1^n0) = \beta
~~\Longleftrightarrow~~
\exists i\, \forall t' \leq t\, \exists k\; \eta(n, i, t', k)
~~\Longleftrightarrow~~
n \in A,
\]
since action $\alpha$ is preferred in ties.
\end{proof}

Analogously to \autoref{thm:AIXI-is-not-computable},
we can show that
iterative AIXI is not computable.
We also get the following lower bound.

\begin{figure}[t]
\begin{center}
\begin{tikzpicture}[node distance=20mm, auto]
\node (start) {};
\node[env, right of=start, node distance=10mm] (xi) {$\xi$};
\node[agent, right of=xi, node distance=30mm] (choice) {};
\node[env, above of=choice] (sheaven) {Semi-Heaven};
\node[stoch, below of=choice] (gate) {};
\node[env, left of=gate, node distance=35mm] (cheaven) {Conditional Heaven};
\path[transition] (start)  to (xi);
\path[transition] (xi)     to[loop above] node {$\ast$} (xi);
\path[transition] (xi)   to node {$o = 1^n0$} (choice);
\path[transition] (choice) to node[right] {$\alpha$} (sheaven);
\path[transition] (choice) to node[right] {$\beta$} (gate);
\path[transition] (gate) to[bend left]
	node[below] {$\one_{\forall t' < t\, \exists k\; \eta(n, i, t', k)}$} (cheaven);
\path[transition] (sheaven)   to[loop left]
	node[left] {$\begin{array}{c} o = 0 \\ r = 1/2 \end{array}$} (sheaven);
\path[transition] (cheaven)   to
	node {$\begin{array}{c} o = 0 \\ r = 1 \end{array}$} (gate);
\end{tikzpicture}
\end{center}
\caption[Environment from the proof of \autoref{thm:iterative-eps-AIXI-is-Pi2-hard}]{%
The environment $\nu$ from the proof of \autoref{thm:iterative-eps-AIXI-is-Pi2-hard},
which forces $\varepsilon$-optimal iterative AIXI to solve $\Pi^0_2$-hard problems.
It functions just like $\xi$ until the observation history is $1^n0$.
Then, action $\alpha$ is preferred iff
conditional heaven never ends, i.e.,
iff $\forall t\, \exists k\; \eta(n, t, k)$.
}
\label{fig:iterative-eps-AIXI-is-Pi2-hard}
\end{figure}
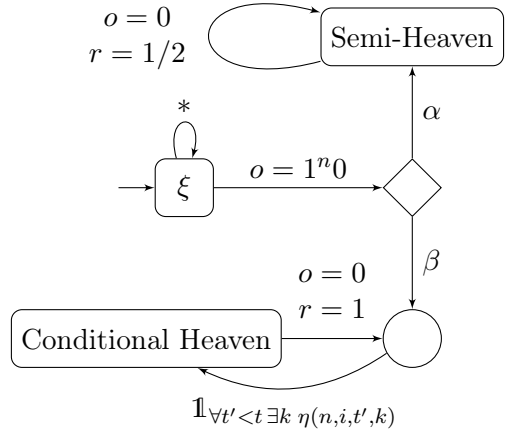

\begin{theorem}[Some $\varepsilon$-optimal iterative $\AIXI$ are $\Pi^0_2$-hard]
\label{thm:iterative-eps-AIXI-is-Pi2-hard}
\index{value function!iterative}\index{policy!e-optimal@$\eps$-optimal}
\index{AIXI}
There is a universal mixture $\xi'$ and an $\varepsilon > 0$ such that
any policy that is $\varepsilon$-optimal according to the iterative value
for environment $\xi'$ is $\Pi^0_2$-hard.
\end{theorem}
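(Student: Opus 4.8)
The plan is to mimic the proof of \autoref{thm:eps-AIXI-is-Sigma1-hard}, but upgrade the encoded set from $\Sigma^0_1$ to $\Pi^0_2$ by exploiting the fact that the iterative value function $W^\pi_\nu$ conditions on surviving forever. Concretely, let $A$ be an arbitrary $\Pi^0_2$-set with a quantifier-free formula $\eta$ such that $n \in A \Leftrightarrow \forall t\,\exists k\;\eta(n,t,k)$. I would define an environment $\nu \in \Mlscccs$ that behaves exactly like the universal mixture $\xi$ until the observation history first has the form $1^{n-1}0$ (which happens with positive $\xi$-probability for every $n$), and from that point offers a binary choice: action $\alpha$ sends the agent to ``semi-heaven'' giving reward $1/2$ forever (an honest measure branch that never ends), while action $\beta$ sends the agent to ``conditional heaven'' giving reward $1$ at each step $t$ for as long as $\forall t' < t\,\exists k\;\eta(n,t',k)$ holds, and ending the environment (assigning probability $0$ to any continuation) as soon as that fails. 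This is exactly the environment sketched in \autoref{fig:iterative-eps-AIXI-is-Pi2-hard}. Since $\eta$ is quantifier-free and $\xi \in \Mlscccs$, the environment $\nu$ is a lower semicomputable chronological contextual semimeasure.

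The key point is that, under the iterative value function, the reward-$1$ rewards in the conditional-heaven branch only count toward $W^*_{\nu}(h\beta)$ on timelines that continue indefinitely: if there is a $t$ with $\neg\exists k\;\eta(n,t,k)$, the branch ends and $\nu(\dots) = 0$ for long enough histories, so $W^*_{\nu}(h\beta) = 0$; whereas if $n \in A$, conditional heaven lasts forever and $W^*_{\nu}(h\beta)$ equals the full discounted value $\Gamma_t / \Gamma_t$ (after normalization), which exceeds the $1/2$-valued semi-heaven branch. Next I would form the mixture $\xi' := \tfrac12 \nu + \tfrac18 \xi$, which by \autoref{lem:mixing-mixtures} is again a universal lower semicomputable semimeasure. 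Exactly as in the proof of \autoref{thm:eps-AIXI-is-Sigma1-hard}, because $\nu$ coincides with $\xi$ on histories with observations $1^{n-1}0$, the posterior weights of $\nu$ and $\xi$ inside $\xi'$ equal their prior weights, so by linearity of the iterative value in the environment (the analogue of \autoref{lem:V-linear} for $W$, which holds by the same argument) one computes $W^*_{\xi'}(h\alpha) - W^*_{\xi'}(h\beta) \geq 1/8$ when $n \notin A$ and $W^*_{\xi'}(h\beta) - W^*_{\xi'}(h\alpha) \geq 1/8$ when $n \in A$. Taking $\varepsilon := 1/9$, any $\varepsilon$-optimal iterative policy for $\xi'$ must pick $\beta$ at history $h$ iff $n \in A$, so deciding membership in $A$ reduces computably to querying this policy; hence any such policy is $\Pi^0_2$-hard.

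The steps in order: (1) fix the $\Pi^0_2$-representation of $A$ via $\eta$; (2) define $\nu$ precisely as a semimeasure over percept sequences, making careful that $\nu$ mimics $\xi$ before the decision point and that the $\beta$-branch formally assigns probability $0$ beyond the step where the $\exists k$ condition first fails; (3) verify $\nu \in \Mlscccs$; (4) compute $W^*_{\nu}(h\alpha)$ and $W^*_{\nu}(h\beta)$, using that $W$ conditions on infinite survival; (5) form $\xi'$ via \autoref{lem:mixing-mixtures}; (6) argue posterior weights are frozen and apply linearity of $W$ to get the value gap; (7) choose $\varepsilon = 1/9$ and read off the many-one reduction from $A$. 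The main obstacle I anticipate is step (4) together with the subtlety in step (6): one must be scrupulous about how the normalization factor $\nu(e_{<t}\dmid a_{<t})\Gamma_t$ in \eqref{eq:W-explicit} interacts with the ``conditioning on survival'' phenomenon — in particular, that the iterative value genuinely ignores the finite-reward-then-death timelines rather than averaging them in — and one must confirm that linearity of the iterative value in $\nu$ holds in the same form used in the $\Sigma^0_1$ proof, since the excerpt only states \autoref{lem:V-linear} for the recursive value $V$. This is a routine but delicate adaptation rather than a genuinely new difficulty.
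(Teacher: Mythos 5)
Your proposal is correct and follows essentially the same route as the paper's own proof: the same environment $\nu$ mimicking $\xi$ until observations $1^{n-1}0$ with a semi-heaven/conditional-heaven choice, the same mixture $\xi' = \tfrac12\nu + \tfrac18\xi$ via \autoref{lem:mixing-mixtures}, the same use of linearity of $W$ (which the paper also only invokes ``analogously to \autoref{lem:V-linear}''), and the same $1/8$ value gap with $\varepsilon = 1/9$.
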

\begin{proof}
Let $A$ be a $\Pi^0_2$-set and $\eta$ a quantifier-free formula such that
\[
n \in A
\;\Longleftrightarrow\;
\forall t\, \exists k\; \eta(n, t, k).
\]
We proceed analogous to the proof of \autoref{thm:eps-AIXI-is-Sigma1-hard}
except that we choose $\forall t' \leq t\, \exists k\; \eta(n, t, k)$
as a condition for reward $1$ after playing action $\beta$.

Define the environment
\[
\nu((or)_{1:t} \dmid a_{1:t}) :=
\begin{cases}
\xi((or)_{1:n+1} \dmid a_{1:n+1})
  &\text{if } \exists n.\; 1^n0 \sqsubseteq o_{1:t} \sqsubseteq 1^n0^\infty \\
  &\text{ and } a_{n+1} = \alpha \\
  &\text{ and } \forall n + 1 < k \leq t.\; r_k = 1/2, \\
\xi((or)_{1:n+1} \dmid a_{1:n+1})
  &\text{if } \exists n.\; 1^n0 \sqsubseteq o_{1:t} \sqsubseteq 1^n0^\infty \\
  &\text{ and } a_{n+1} = \beta \\
  &\text{ and } \forall n + 1 < k \leq t.\; r_k = 1 \\
  &\text{ and } \forall t' \leq t\, \exists k\; \eta(n, t, k), \\
\xi((or)_{1:t} \dmid a_{1:t})
  &\text{if } \nexists n.\; 1^n0 \sqsubseteq o_{1:t} \sqsubseteq 1^n0^\infty,
   \text{ and} \\
0 &\text{otherwise}.
\end{cases}
\]
See \autoref{fig:iterative-eps-AIXI-is-Pi2-hard}
for an illustration of the environment $\nu$.
The environment $\nu$ mimics the universal environment $\xi$
until the observation history is $1^n0$.
The next action $\alpha$ always gives rewards $1/2$ forever,
while action $\beta$ gives rewards $1$ forever iff $n \in A$.
We have that
$\nu$ is a lower semicomputable semimeasure since
$\xi$ is a lower semicomputable semimeasure and $\eta$ is quantifier-free.
We define $\xi' = \tfrac{1}{2} \nu + \tfrac{1}{8} \xi$.
By \autoref{lem:mixing-mixtures},
$\xi'$ is a universal lower semicomputable semimeasure.
Let $n \in A$ be given and
let $h \in (\A \times \O)^{x+1}$ be any history
with observations $o_{1:n+1} = 1^n0$.
In the following,
we use the linearity of $W^*_\rho$ in $\rho$~
(analogously to \autoref{lem:V-linear}).
If $\forall t \exists k\; \eta(n, t, k)$, then
\begin{align*}
      W^*_{\xi'}(h\beta) - W^*_{\xi'}(h\alpha)
&=    \tfrac{1}{2} W^*_\nu(h\beta) - \tfrac{1}{2} W^*_\nu(h\alpha)
      + \tfrac{1}{8} W^*_\xi(h\beta) - \tfrac{1}{8} W^*_\xi(h\alpha)\\
&\geq \tfrac{1}{2} - \tfrac{1}{4} + 0 - \tfrac{1}{8}
 =    \tfrac{1}{8},
\end{align*}
and similarly if $\neg \forall t \exists k\; \eta(n, t, k)$, then
\begin{align*}
      W^*_{\xi'}(h\alpha) - W^*_{\xi'}(h\beta)
&=    \tfrac{1}{2} W^*_\nu(h\alpha) - \tfrac{1}{2} W^*_\nu(h\beta)
      + \tfrac{1}{8} W^*_\xi(h\alpha) - \tfrac{1}{8} W^*_\xi(h\beta)\\
&\geq \tfrac{1}{4} - 0 + 0 - \tfrac{1}{8}
 =    \tfrac{1}{8}.
\end{align*}
In both cases $|W^*_{\xi'}(h\beta) - W^*_{\xi'}(h\alpha)| > 1 / 9$,
hence with $\varepsilon := 1/9$
we have for an $\varepsilon$-optimal policy $\pi^\varepsilon_{\xi'}$ that
$\pi^\varepsilon_{\xi'}(h) = \beta$ if and only if $n \in A$.
\end{proof}

\section{The Complexity of Knowledge-Seeking}
\label{sec:complexity-knowledge-seeking}

Recall the definition of the optimal entropy-seeking value $V^{*,m}_{\Ent}$
and the optimal infor\-ma\-tion-seeking value $V^{*,m}_{\IG}$
from \autoref{ssec:knowledge-seeking-agents}.
Using the results from \autoref{sec:complexity-aixi}
we can show that $\varepsilon$-optimal
knowledge-seeking agents are limit computable, and
optimal knowledge-seeking agents are $\Delta^0_3$.

\begin{corollary}[Computability of Knowledge-Seeking Values]
\label{cor:complexity-knowledge-seeking-values}
\index{value function!entropy-seeking}\index{value function!information-seeking}
For fixed $m$, the value functions $V^{*,m}_\Ent$ and $V^{*,m}_\IG$
are limit computable.
\end{corollary}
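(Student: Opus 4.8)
The plan is to reduce the computability of the knowledge-seeking value functions to the tools already developed in \autoref{sec:complexity-aixi} for the reward-seeking value function. Recall from \autoref{def:V-entropy} that
\[
V^{*,m}_\Ent(\ae_{<t})
= \sup_\pi \EE^\pi_\xi \big[ -\log_2 \xi(e_{1:m} \mid e_{<t} \dmid a_{1:m}) \mid \ae_{<t} \big],
\]
which is an expectimax over a finite horizon $m$ of the bounded quantity $-\log_2 \xi$, and from \autoref{def:V-information} that
\[
V^{*,m}_\IG(\ae_{<t})
= \sup_\pi \sum_{\nu \in \M} w(\nu \mid \ae_{<t}) \KL_m(\nu^\pi, \xi^\pi \mid \ae_{<t}).
\]
Both are structurally identical to the explicit form \eqref{eq:V-explicit} of the reward-seeking value function, except that the per-step reward $r_k$ is replaced by an entropy or KL term, and the horizon is finite. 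So the approach is: write each value function explicitly as a finite max-sum over histories $\ae_{t:m}$, argue that the summand is limit computable, and then invoke \autoref{lem:computable-reals} to conclude that the finite combination is limit computable.

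The key steps, in order. First, I would handle $V^{*,m}_\Ent$. The weights $\xi(e_{1:m} \mid e_{<t} \dmid a_{1:m})$ appearing inside the expectation are ratios of values of $\xi$, which is lower semicomputable (and for the Solomonoff-prior case, $\xi \in \Mlscccs$); since $\xi(e_{<t} \dmid a_{<t}) > 0$, \autoref{lem:computable-reals}(d) makes these conditional probabilities limit computable, and \autoref{lem:computable-reals}(e) makes $-\log_2 \xi$ limit computable (using that, under the standing assumption of \autoref{ssec:knowledge-seeking-agents}, $\xi$ is a measure, so the argument of the logarithm is strictly positive). Then $V^{*,m}_\Ent(\ae_{<t})$ is a finite sum of products of limit-computable functions followed by finitely many $\max$ operations; each operation preserves $\Delta^0_2$ by \autoref{lem:computable-reals}(a)--(c). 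Second, for $V^{*,m}_\IG$, I would expand $\KL_m(\nu^\pi, \xi^\pi \mid \ae_{<t})$ using \autoref{def:KL-divergence} as a finite sum over $(\A\times\E)^{m}$ of terms $\nu^\pi(\cdot)\log(\nu^\pi(\cdot)/\xi^\pi(\cdot))$, note these are limit computable for each $\nu$ (again by parts (d) and (e) of \autoref{lem:computable-reals}, together with the fact that $\nu^\pi$ and $\xi^\pi$ are products of limit-computable conditionals), weight by the limit-computable posterior $w(\nu \mid \ae_{<t}) = w(\nu)\nu(e_{<t}\dmid a_{<t})/\xi(e_{<t}\dmid a_{<t})$ from \eqref{eq:posterior-weight}, and take the finite expectimax.

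The main obstacle is the infinite sum over $\nu \in \M$ in the information-seeking value: unlike the finite sums over histories, this is a genuinely infinite series, so one must show the partial sums converge at a computable (or at least $\Delta^0_2$-controllable) rate. The tail $\sum_{\nu : K(\nu) > N} w(\nu)\KL_m(\nu^\pi,\xi^\pi\mid\ae_{<t})$ is bounded: $w(\nu)$ decays like $2^{-K(\nu)}$ and $\KL_m$ on a finite horizon over a finite alphabet is finite but not uniformly bounded, so I would need to bound $\KL_m(\nu^\pi,\xi^\pi\mid\ae_{<t})$ by something like $-\log \xi(\text{something}) \le O(K(\xi) + m\log\#\E)$ uniformly, or more carefully use $\nu^\pi \le \xi^\pi/w(\nu)$ to get $\KL_m \le m\log(1/w(\nu)) + O(1)$, so that the weighted tail is $\sum_{K(\nu)>N} 2^{-K(\nu)}(mK(\nu) + O(1)) \to 0$ effectively in $N$. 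Combining this effective tail bound with the limit-computable partial sums gives a double-limit that collapses to a single $\Delta^0_2$ computation. For the entropy-seeking case this issue does not arise since there is no infinite sum, so that half is a routine application of \autoref{lem:complexity-V}'s proof technique adapted to a finite horizon.
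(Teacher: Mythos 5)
Your overall skeleton (finite expectimax over $(\A\times\E)^{m-t+1}$ plus \autoref{lem:computable-reals}(c)--(e)) is exactly the paper's route --- the paper's proof is literally that one-line citation --- and your treatment of $V^{*,m}_\Ent$ is fine. The gap is in the step you yourself single out as the main obstacle: the tail control for the infinite sum over $\M$ in $V^{*,m}_\IG$. First, the bound $\KL_m(\nu^\pi,\xi^\pi\mid\ae_{<t}) \leq m\log(1/w(\nu))+O(1)$ is not valid at a fixed history: dominance only gives $\xi^\pi(\,\cdot\mid\ae_{<t}) \geq w(\nu\mid\ae_{<t})\,\nu^\pi(\,\cdot\mid\ae_{<t})$, so the correct pointwise bound is $\KL_m(\nu^\pi,\xi^\pi\mid\ae_{<t}) \leq \log\bigl(1/w(\nu\mid\ae_{<t})\bigr)$, and the posterior weight can be arbitrarily small compared to $w(\nu)$. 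Second, and fatally, even the bound you want would not help: for a Solomonoff-type prior $w(\nu)\propto 2^{-K(\nu)}$ the series $\sum_\nu 2^{-K(\nu)}K(\nu)$ diverges (most indices $i$ have $K(i)$ close to $\log i$ while $2^{-K(i)}$ is of order $1/(i\log^2 i)$), so your claimed tail $\sum_{K(\nu)>N}2^{-K(\nu)}\bigl(mK(\nu)+O(1)\bigr)$ is the tail of a divergent positive series and does not tend to $0$. Entropy-of-the-prior tail control therefore cannot work for precisely the priors of interest; note that $V^{*,m}_\IG$ is nonetheless finite, but for a different reason than the one your bound would supply.

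The repair is to split the KL into cross-entropy minus entropy. Since the policy factors cancel inside the logarithm, one gets $V^{\pi,m}_\IG(\ae_{<t}) = \sum_{\ae_{t:m}}\xi^\pi(\ae_{t:m}\mid\ae_{<t})\bigl(-\log\xi(e_{t:m}\mid e_{<t}\dmid a_{1:m})\bigr) - \sum_{\nu\in\M} w(\nu\mid\ae_{<t}) H_\nu$ with $H_\nu := -\sum_{\ae_{t:m}}\nu^\pi(\ae_{t:m}\mid\ae_{<t})\log\nu(e_{t:m}\mid e_{<t}\dmid a_{1:m})$. The first part is a finite sum of limit-computable quantities. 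In the second part each $H_\nu$ is uniformly bounded by $(m-t+1)\log\#\E$, and the posterior weights sum to exactly $1$, so the tail beyond the first $N$ environments is at most $(m-t+1)\log\#\E\cdot\bigl(1-\sum_{i\leq N}w(\nu_i\mid\ae_{<t})\bigr)$. Relative to a halting oracle all quantities involved are computable uniformly in $N$, so one can find $N$ making this tail small and then evaluate the finite sum to the desired precision; hence $V^{\pi,m}_\IG$ is $0'$-computable, i.e.\ limit computable, and taking the supremum over the finitely many deterministic policies on the depth-$(m-t+1)$ tree preserves $\Delta^0_2$. This is the content hiding behind the paper's terse proof: your instinct that the infinite sum needs an argument is correct, but the argument must exploit the uniform boundedness of the per-environment entropy terms and the unit total posterior mass, not a bound of the form $\sum_\nu w(\nu)\log(1/w(\nu))$, which would additionally require a finite-entropy prior and fails for the Solomonoff prior.
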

\begin{proof}
This follows from \autoref{lem:computable-reals} (c-e)
since $\xi$, $\nu$, and $w$ are lower semicomputable.
\end{proof}

\begin{corollary}[Computability of Knowledge-Seeking Policies]
\label{cor:complexity-knowledge-seeking-policies}
\index{value function!entropy-seeking}\index{value function!information-seeking}
For entropy-seeking and information-seeking agents
there are limit-computable $\varepsilon$-optimal policies and
$\Delta^0_3$-computable optimal policies.
\end{corollary}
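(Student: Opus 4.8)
\textbf{Proof plan for \autoref{cor:complexity-knowledge-seeking-policies}.}
The plan is to reduce everything to the general machinery already established in \autoref{sec:complexity-aixi}. The key observation is that both the entropy-seeking and the information-seeking agents are ordinary reward-seeking agents with respect to their own value functions $V^{*,m}_{\Ent}$ and $V^{*,m}_{\IG}$: an optimal entropy-seeking policy is $\argmax_\pi V^\pi_\Ent$, and likewise for information-seeking. So once we know the complexity of these value functions, the complexity of the policies follows by exactly the same argument used to prove \autoref{cor:complexity-ainu} and \autoref{cor:complexity-eps-ainu}. By \autoref{cor:complexity-knowledge-seeking-values} the value functions $V^{*,m}_{\Ent}$ and $V^{*,m}_{\IG}$ are limit computable, i.e.\ $\Delta^0_2$-computable, for fixed $m$ (recall that the knowledge-seeking results are only claimed for finite horizons, and that $m$ as a function of $t$ is assumed computable).

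First I would invoke \autoref{thm:complexity-optimal-policies} with $n = 2$: since $V^{*,m}_\Ent$ and $V^{*,m}_\IG$ are $\Delta^0_2$-computable, the corresponding optimal policies — obtained by breaking argmax ties via a fixed total order on $\A$ exactly as in \eqref{eq:p*} — are $\Delta^0_{2+1} = \Delta^0_3$. This handles the optimal case. Second, for the $\varepsilon$-optimal case I would invoke \autoref{thm:complexity-eps-optimal-policies}, again with $n = 2$: because $\varepsilon$-optimal policies do not need to resolve ties, settling for an action whose value is within $\varepsilon$ of the optimum keeps us at the same level of the hierarchy, so there is a $\Delta^0_2$-computable (limit-computable) $\varepsilon$-optimal policy for each of the two knowledge-seeking agents. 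Writing this out:

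\begin{proof}
By \autoref{cor:complexity-knowledge-seeking-values}, for fixed $m$ the value functions $V^{*,m}_\Ent$ and $V^{*,m}_\IG$ are limit computable, i.e.\ $\Delta^0_2$-computable. An optimal entropy-seeking (respectively information-seeking) policy is an $\argmax$-policy with respect to $V^{*,m}_\Ent$ (respectively $V^{*,m}_\IG$), so by \autoref{thm:complexity-optimal-policies} applied with $n = 2$ there is such an optimal policy that is $\Delta^0_3$. Similarly, an $\varepsilon$-optimal entropy-seeking (respectively information-seeking) policy is an $\varepsilon$-optimal policy with respect to $V^{*,m}_\Ent$ (respectively $V^{*,m}_\IG$), so by \autoref{thm:complexity-eps-optimal-policies} applied with $n = 2$ there is such an $\varepsilon$-optimal policy that is $\Delta^0_2$, i.e.\ limit computable.
\end{proof}

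The only genuine subtlety — and the step I would flag as the main obstacle — is making sure the finite-horizon assumption is used correctly and uniformly. The value functions are only limit computable when $m$ is fixed (or, more generally, a computable function of $t$, which is assumed in \autoref{ssec:knowledge-seeking-agents}); for infinite horizons the entropy- and information-seeking values need not even converge. One must therefore confirm that \autoref{thm:complexity-optimal-policies} and \autoref{thm:complexity-eps-optimal-policies} are being applied to the finite-horizon objects, and that the ``optimal policy'' in \autoref{def:V-entropy} and \autoref{def:V-information} is exactly the kind of $\argmax$-policy those theorems handle. Everything else is a direct citation, so there is essentially no calculation to grind through.
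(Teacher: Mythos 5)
Your proof is correct and is essentially identical to the paper's: the paper's own proof is simply a citation of \autoref{cor:complexity-knowledge-seeking-values}, \autoref{thm:complexity-optimal-policies}, and \autoref{thm:complexity-eps-optimal-policies}, exactly the three results you invoke. Your added remark about the finite-horizon restriction matches the discussion the paper gives immediately after the corollary, so nothing is missing.
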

\begin{proof}
Follows from \autoref{cor:complexity-knowledge-seeking-values},
\autoref{thm:complexity-optimal-policies}, and
\autoref{thm:complexity-eps-optimal-policies}.
\end{proof}

Note that if we used an infinite horizon with discounting in
\autoref{def:V-entropy} or \autoref{def:V-information},
then we cannot retain this computability result without further assumptions:
we would need that the value functions increase monotonically as $m \to \infty$,
as they do for the recursive value function from \autoref{def:value-function}.
However, entropy is not a monotone function and may decrease if
there are events whose probability converges to something $> 1/2$.
For the entropy-seeking value function
this happens for histories drawn from a deterministic environment $\mu$
since $\xi \to \mu$, so the conditionals converge to $1$.
Similarly, for the information-seeking value function,
the posterior belief in one (deterministic) environment might become larger than $1/2$
(depending on the prior and the environment class).
Therefore we generally only get that
discounted versions of $V^*_\Ent$ and $V^*_\IG$ are $\Delta^0_3$
analogously to \autoref{lem:complexity-W}.
Hence optimal discounted entropy-seeking and optimal discounted information-seeking policies
are in $\Delta^0_4$ by \autoref{thm:complexity-optimal-policies} and
their corresponding $\varepsilon$-optimal siblings are $\Delta^0_3$
by \autoref{thm:complexity-eps-optimal-policies}.

\section{A Limit Computable Weakly Asymptotically Optimal Agent}
\label{sec:computability-wao}
\index{BayesExp}

According to \autoref{thm:AINU-is-Pi2-hard},
optimal reward-seeking policies are generally $\Pi^0_2$-hard,
and for optimal knowledge-seeking policies
\autoref{cor:complexity-knowledge-seeking-policies}
shows that they are $\Delta^0_3$.
Therefore we get that {\BayesExp} is $\Delta^0_3$:

\begin{corollary}[{\BayesExp} is $\Delta^0_3$]
\label{cor:BayesExp-is-Delta3}\index{BayesExp}
For any universal mixture $\xi$,
{\BayesExp} is $\Delta^0_3$.
\end{corollary}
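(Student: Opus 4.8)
The plan is to establish the $\Delta^0_3$ upper bound for {\BayesExp} by tracking the computability of every quantity that enters \autoref{alg:BayesExp} and arguing that the policy, viewed as a relation over $\H \times \A$, is decidable relative to a $\Sigma^0_2$-oracle. Recall that {\BayesExp} alternates between exploration phases (following an optimal information-seeking policy $\pi^*_\IG$ for $H_t(\eps_t)$ steps) and one-step exploitation phases (following a $\xi$-optimal reward-seeking policy $\pi^*_\xi$), where the branch taken at history $\ae_{<t}$ is decided by the test $V^{*,t+H_t(\eps_t)}_\IG(\ae_{<t}) > \eps_t$. The action prescribed by {\BayesExp} at a given history is therefore a finite case distinction built from: (i) the value of the bookkeeping variables recording when the current phase started and how long it lasts, all of which are computable from the history since $\eps_t$ and $H_t(\cdot)$ are computable (using \assref{ass:aixi}{ass:gamma-computable} and the fact that $H_t$ is computable from $\gamma$); (ii) the comparison $V^{*,t+H_t(\eps_t)}_\IG(\ae_{<t}) > \eps_t$; and (iii) the action chosen by $\pi^*_\IG$ or $\pi^*_\xi$.

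First I would handle the information-seeking side. During an exploration phase {\BayesExp} uses a \emph{finite}-horizon information-seeking policy $\pi^*_\IG$ with horizon $m = t + H_t(\eps_t)$, which is exactly the setting of \autoref{cor:complexity-knowledge-seeking-policies}: for finite $m$ the value $V^{*,m}_\IG$ is limit computable ($\Delta^0_2$) by \autoref{cor:complexity-knowledge-seeking-values}, and hence there is a $\Delta^0_3$-computable optimal information-seeking policy by \autoref{thm:complexity-optimal-policies}. The same $\Delta^0_2$-computability of $V^{*,m}_\IG$ makes the threshold test in (ii) a $\Delta^0_2$ question by \autoref{lem:computable-reals}(a)--(b), since $\eps_t$ is a computable rational. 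Second I would handle the reward-seeking side: the exploitation action is the one prescribed by $\pi^*_\xi$, and since $\xi \in \Mlscccs$, \autoref{cor:complexity-ainu} gives that AIXI (hence $\pi^*_\xi$) is $\Delta^0_3$. So every ingredient is $\Delta^0_3$, and because $\A$ is finite the overall case distinction defining {\BayesExp}$(\ae_{<t})$ is a Boolean combination of $\Delta^0_3$ predicates, which is again $\Delta^0_3$ by closure of $\Delta^0_3$ under finite Boolean operations; one final application of \autoref{lem:policy-is-delta_n} (in the form: a $\Delta^0_3$-definable deterministic policy relation is $\Delta^0_3$) finishes the bound.

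The one place that needs a little care — and what I expect to be the main obstacle — is verifying that the phase-bookkeeping is actually computable uniformly in the history rather than merely $\Delta^0_3$. The issue is that to know whether step $t$ lies inside an exploration phase or an exploitation phase, one has to replay the algorithm from the start, and at every past exploitation-candidate step $s < t$ one must re-evaluate the threshold test $V^{*,s+H_s(\eps_s)}_\IG(\ae_{<s}) > \eps_s$, which is only $\Delta^0_2$, not decidable. So the bookkeeping is not computable outright; it is $\Delta^0_2$ (a finite conjunction of $\Delta^0_2$ facts, one per past step up to $t$). This does not hurt the final bound since $\Delta^0_2 \subseteq \Delta^0_3$, but the write-up must make the dependency explicit: the phase structure up to time $t$ is $\Delta^0_2$-decidable, the prescribed action given the phase structure is $\Delta^0_3$, and composing these keeps us in $\Delta^0_3$. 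I would also remark briefly that no matching lower bound is claimed here, consistent with the hardness column for {\BayesExp} being left blank in \autoref{tab:complexity-agents}, because {\BayesExp}'s reliance on optimal (rather than $\eps$-optimal) policies in both phases is exactly what pushes it up from $\Delta^0_2$ to $\Delta^0_3$.
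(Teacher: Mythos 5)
Your proposal is correct and takes essentially the same route as the paper, whose proof is simply the citation of the same three ingredients you use: \autoref{cor:complexity-ainu} for the $\Delta^0_3$ exploitation policy $\pi^*_\xi$, \autoref{cor:complexity-knowledge-seeking-values} for the finite-horizon value $V^{*,m}_\IG$, and \autoref{cor:complexity-knowledge-seeking-policies} for the $\Delta^0_3$ optimal information-seeking policy. The only minor quibble is that the threshold comparison $V^{*,t+H_t(\eps_t)}_\IG(\ae_{<t}) > \eps_t$ of a $\Delta^0_2$-computable quantity against a computable one is $\Sigma^0_2$ by \autoref{lem:computable-reals}, not $\Delta^0_2$, so the phase bookkeeping sits at $\Delta^0_3$ rather than $\Delta^0_2$ — which is harmless for the stated bound.
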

\begin{proof}
From \autoref{cor:complexity-ainu},
\autoref{cor:complexity-knowledge-seeking-values}, and
\autoref{cor:complexity-knowledge-seeking-policies}.
\end{proof}

However, we do not know {\BayesExp} to be limit computable,
and we expect it not to be.
However, we can approximate it using $\varepsilon$-optimal policies
preserving weak asymptotic optimality.

\begin{theorem}[A Limit-Computable Weakly Asymptotically Optimal Agent]
\label{thm:wao-limit-computable}\index{optimality!asymptotic!weak}
\index{limit computable}
If there is a nonincreasing computable sequence of positive reals
$(\varepsilon_t)_{t \in \mathbb{N}}$ such that
$\varepsilon_t \to 0$ and
$H_t(\varepsilon_t) / (t \varepsilon_t) \to 0$
as $t \to \infty$,
then there is a limit-computable policy that is weakly asymptotically optimal
in the class of all computable stochastic environments.
\end{theorem}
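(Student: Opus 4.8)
The plan is to modify \BayesExp so that it uses $\varepsilon_t$-optimal policies (which are limit computable by \autoref{cor:complexity-eps-ainu} and \autoref{cor:complexity-knowledge-seeking-policies}) instead of exactly optimal ones, and then to re-run the proof of \autoref{thm:BayesExp-wao} with the extra $\varepsilon_t$ slack absorbed into the regret bound. Concretely, the approximating policy $\tilde\pi_{BE}$ checks at each resampling step whether the $\varepsilon_t$-optimal information-seeking value exceeds $\varepsilon_t$; if so it follows an $\varepsilon_t$-optimal information-seeking policy for $H_t(\varepsilon_t)$ steps, otherwise it follows an $\varepsilon_t$-optimal $\xi$-optimal reward-seeking policy for one step. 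Since each $\varepsilon_t$ is computable and each $\varepsilon_t$-optimal policy is $\Delta^0_2$ with an index depending computably on $t$, the overall policy is limit computable: an anytime algorithm runs the approximations of the relevant value functions and policies for longer and longer, and the control flow (a finite comparison against the rational $\varepsilon_t$) stabilizes.

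First I would make the computability argument precise. The one subtlety is that \autoref{cor:complexity-eps-ainu} produces, for each fixed $\varepsilon$, a $\Delta^0_2$ policy via the formula \eqref{eq:eps-optimal-policy}, and I need this uniformly in $t$ (since $\varepsilon_t$ varies). But the construction in the proof of \autoref{thm:complexity-eps-optimal-policies} is uniform: given a rational $\varepsilon_t = 1/k_t$ computable from $t$, the grid $Q$ and the $\Delta^0_2$-formula are computed uniformly, so the resulting policy $\tilde\pi_{BE}$ is a single $\Delta^0_2$ object, i.e. limit computable. I would also note that the $\varepsilon_t$-effective horizon $H_t(\varepsilon_t)$ is computable since $\gamma$ is computable by \assref{ass:aixi}{ass:gamma-computable}, so the phase lengths are computable.

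Next I would establish weak asymptotic optimality of $\tilde\pi_{BE}$. The strategy mirrors \autoref{thm:BayesExp-wao}: during exploitation phases, following an $\varepsilon_t$-optimal $\xi$-policy for one step costs at most $\varepsilon_t + o(1)$ more than following $\pi^*_\xi$ (using on-policy value convergence, \autoref{cor:Bayes-on-policy-value-convergence}, plus \autoref{lem:discounted-values} to control the one-step deviation); during exploration phases, each phase of length $H_t(\varepsilon_t)$ guarantees an expected information gain of at least $\varepsilon_t - \varepsilon_t = 0$ in the worst case — here I would instead keep the threshold comparison with a small buffer (e.g. trigger exploration when the $\varepsilon_t$-optimal information-seeking value exceeds $2\varepsilon_t$, guaranteeing true information gain $\geq \varepsilon_t$), so that only $O(1/\varepsilon_t)$ exploration phases can occur up to any point where the information gain has not yet been exhausted, by the finite-entropy/information-gain bookkeeping used by Lattimore. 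The total time spent exploring up to time $m$ is then $O(H_m(\varepsilon_m)/\varepsilon_m) = o(m)$ by assumption, and the per-step exploitation suboptimality is $\varepsilon_t + o(1) \to 0$; Cesàro-averaging these two contributions gives $\frac{1}{m}\sum_{k=1}^m (V^*_\mu(\ae_{<k}) - V^{\tilde\pi_{BE}}_\mu(\ae_{<k})) \to 0$ $\mu^{\tilde\pi_{BE}}$-almost surely.

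The main obstacle I expect is the bookkeeping for the exploration phases: I need to re-verify that the argument bounding the number of exploration phases (originally relying on each phase extracting a fixed quantum of information gain from a prior with finite entropy, and on the value function being an exact optimum) still goes through when the information-seeking policy is only $\varepsilon_t$-optimal and the threshold test is itself only $\varepsilon_t$-accurate. The fix is to choose the trigger buffer large enough relative to the approximation error that genuine information gain $\geq \varepsilon_t$ is extracted every exploration phase, and then bound the number of phases in $[1,m]$ by something like $\Ent(w)/\inf_{k\leq m}\varepsilon_k = O(1/\varepsilon_m)$, so that exploration occupies $O(H_m(\varepsilon_m)/\varepsilon_m) = o(m)$ steps. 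One must also ensure the approximating policy's use of $\varepsilon_t$-optimal reward-seeking steps does not interfere with on-policy value convergence — but since that convergence (\autoref{thm:on-policy-value-convergence}) holds for \emph{any} policy, this causes no difficulty.
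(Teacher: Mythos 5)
Your overall strategy is the same as the paper's: replace the exact components of {\BayesExp} by limit-computable $\eps$-optimal ones (\autoref{cor:complexity-eps-ainu}, \autoref{cor:complexity-knowledge-seeking-policies}), observe that the construction is uniform in $t$ so the resulting policy is a single $\Delta^0_2$ object, and then port the three parts of Lattimore's proof of \autoref{thm:BayesExp-wao}, absorbing the vanishing slack (the paper uses $2^{-t}$-optimal reward-seeking and $\eps_t/2$-optimal information-seeking policies; your $\eps_t$-optimal choices are equally fine since $\eps_t \to 0$). The exploitation-step estimate via on-policy value convergence and the uniformity argument for limit computability also match what the paper does.

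The one place you genuinely deviate is the exploration trigger, and that is where your sketch has a gap. The paper keeps the trigger exactly as in \autoref{alg:BayesExp}: it compares the \emph{exact} optimal information-seeking value $V^{*}_{\IG}$ --- which is itself limit computable by \autoref{cor:complexity-knowledge-seeking-values}, so no approximation is needed --- against $\eps_t$; consequently the set of exploration steps is defined by literally the same condition as for {\BayesExp}, and the density-of-exploration part of Lattimore's proof carries over verbatim (the only change needed there is that the policy actually followed during exploration is $\eps_t/2$-optimal for information gain). You instead trigger on an $\eps_t$-accurate quantity with a $2\eps_t$ buffer, which means you cannot cite Lattimore's phase-counting as a black box and must re-derive it; the substitute bound you propose, ``number of phases up to $m$ is at most $\Ent(w)/\inf_{k\le m}\eps_k$'', is vacuous for the class in the theorem, because for the universal prior $w(\nu)\propto 2^{-K(\nu)}$ over all computable stochastic environments $\Ent(w)$ need not be finite (and for the standard choice it is infinite). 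The per-phase quantum argument is fine (your buffered trigger does guarantee expected information gain $\ge \eps_t$ per exploration phase, and on exploitation steps $V^*_{\IG}\le 3\eps_t$, which suffices for part one), but the almost-sure bound on the \emph{total} information gain has to come from something like $-\log w(\mu)$ or the almost-sure convergence of the posterior, not from the prior's entropy. The simplest repair is the paper's: drop the approximation of the trigger altogether, since $V^*_{\IG}$ is already limit computable, and only approximate the two policies.
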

\begin{proof}
By \autoref{cor:complexity-ainu},
there is a limit-computable $2^{-t}$-optimal reward-seeking policy $\pi^t_\xi$
for the universal mixture $\xi$.
By \autoref{cor:complexity-knowledge-seeking-policies}
there are limit-computable $\epsilon_t/2$-optimal
information-seeking policies $\pi_\IG^t$ with horizon $t + H_t(\varepsilon_t)$.
We define a policy $\pi$ analogously to \autoref{alg:BayesExp}
with $\pi_\IG^t$ and $\pi_\xi^t$ instead of the optimal policies.
From \autoref{cor:complexity-knowledge-seeking-values}
we get that $V^*_\IG$ is limit computable, so
the policy $\pi$ is limit computable.
Furthermore, $\pi^t_\xi$ is $2^{-t}$-optimal and $2^{-t} \to 0$,
so $V^{\pi^t_\xi}_\xi(\ae_{<t}) \to V^*_\xi(\ae_{<t})$
as $t \to \infty$.

Now we can proceed analogously to the proof of \citet[Thm.~5.6]{Lattimore:2013},
which consists of three parts.
First, 
it is shown that
the value of the $\xi$-optimal reward-seeking policy $\pi^*_\xi$
converges to the optimal value
for exploitation time steps (line 6 in \autoref{alg:BayesExp})
in the sense that
$V^{\pi^*_\xi}_\mu \to V^*_\mu$.
This carries over to the $2^{-t}$-optimal policy $\pi^t_\xi$,
since the key property is that on exploitation steps,
$V^*_\IG < \varepsilon_t$;
i.e., $\pi$ only exploits if potential knowledge-seeking value is low.
In short, we get for exploitation steps
\[
    V^{\pi^t_\xi}_\xi(\ae_{<t})
\to V^{\pi^*_\xi}_\xi(\ae_{<t})
\to V^{\pi^*_\xi}_\mu(\ae_{<t})
\to V^*_\mu(\ae_{<t})
\text{ as } t \to \infty.
\]

Second, 
it is shown that the density of exploration steps vanishes.
This result carries over since the condition $V^*_\IG(\ae_{<t}) > \varepsilon_t$
that determines exploration steps is exactly the same as for {\BayesExp}
and $\pi^t_\IG$ is $\varepsilon_t/2$-optimal.

Third,
the results of part one and two are used to conclude that
$\pi$ is weakly asymptotically optimal.
This part carries over to our proof.
\end{proof}

\section{Discussion}
\label{sec:discussion}

When using Solomonoff's prior for induction,
we need to evaluate conditional probabilities.
We showed that
conditional $M$ and $M\norm$ are limit computable~(\autoref{thm:complexity-M}),
and that $\MM$ and $\MM\norm$ are not limit computable~%
(\autoref{thm:MM-is-not-Delta2} and \autoref{cor:MMnorm-is-not-2}).
\autoref{tab:complexity-induction} on page~\pageref*{tab:complexity-induction}
summarizes our computability results on various versions of Solomonoff's prior.
Theses results implies that we can approximate $M$ or $M\norm$ for prediction,
but not the measure mixture $\MM$ or $\MM\norm$.

In some cases, normalized priors have advantages.
As illustrated in \autoref{ex:ksa-unnormalized},
unnormalized priors can make the entropy-seeking agent mistake
the entropy gained from the probability assigned to finite strings
for knowledge.
From $M\norm \geq M$ we get that
$M\norm$ predicts just as well as $M$,
and by \autoref{thm:complexity-M}
we can use $M\norm$ without losing limit computability.

\autoref{tab:complexity-agents} on page~\pageref*{tab:complexity-agents}
summarizes our computability results for the agents AINU, AIXI, and AINU.
AINU is $\Delta^0_3$ and
restricting to $\varepsilon$-optimal policies decreases the level by one~%
(\autoref{cor:complexity-ainu} and \autoref{cor:complexity-eps-ainu}).
For environments from $\Mcomp$,
AIMU is limit-computable and $\varepsilon$-optimal AIMU is computable~%
(\autoref{cor:complexity-aimu}).
In \autoref{ssec:lower-bounds} we proved that
these computability bounds on AINU are generally unimprovable
(\autoref{thm:AINU-is-Pi2-hard} and \autoref{thm:eps-AIXI-is-Sigma1-hard}).
Additionally, we proved weaker lower bounds for AIXI
independent of the universal Turing machine
(\autoref{thm:AIXI-is-not-computable})
and for $\varepsilon$-optimal AIXI
for specific choices of the universal Turing machine
(\autoref{thm:eps-AIXI-is-Sigma1-hard}).

When the environment $\nu$ has nonzero probability of not producing a new percept,
the iterative definition of AINU~(\autoref{def:iterative-value})
originally given by \citet[Def.~5.30]{Hutter:2005}
fails to maximize $\nu$-expected rewards
(\autoref{prop:iterative-AINU-is-not-a-reward-maximizer}).
Moreover, the policies are one level higher in the arithmetical hierarchy~%
(see \autoref{tab:complexity-iterative-value} on page~\pageref*{tab:complexity-iterative-value}).
We proved upper~(\autoref{cor:complexity-iterative-ainu}) and
lower bounds~(\autoref{thm:iterative-AINU-is-Sigma3-hard}
and \autoref{thm:iterative-eps-AIXI-is-Pi2-hard}).
The difference between
the recursive value function $V$ and the iterative value function $W$
is readily exposed in
the difference between the universal prior $M$ and the measure mixture $\MM$:
Just like $W$ conditions on surviving forever,
so does $\MM$ eliminate the weight of programs
that do not produce infinite strings.
Both $\MM$ and $W$ are not limit computable for this reason.

We considered $\varepsilon$-optimality to avoid having to determine argmax ties.
This $\varepsilon$ does not have to be constant over time,
we may let $\varepsilon \to 0$ as $t \to \infty$ at any computable rate.
With this we retain the computability results of $\varepsilon$-optimal policies
and get that
the value of the $\varepsilon(t)$-optimal policy $\pi^{\varepsilon(t)}_\nu$
converges rapidly to the $\nu$-optimal value:
\[
V^*_\nu(\ae_{<t}) - V^{\pi^{\varepsilon(t)}_\nu}_\nu(\ae_{<t})
\to 0 \text{ as $t \to \infty$}.
\]

In \autoref{sec:general-rl} we defined the set $\Mlscccs$
as the set of all lower semicomputable chronological contextual semimeasure
over percepts with actions povided as side-infor\-ma\-tion.
When determining the probability of the next percept $e_t$ in an environment $\nu$,
we have to compute $\nu(e_{1:t} \mid e_{<t} \dmid a_{1:t})$.
Alternatively, we could have defined the environment as
a lower semicomputable mapping from histories $\ae_{<t} a_t$
to probabilities over the next percept $e_t$
(this is done in \autoref{cha:grain-of-truth-problem}).
For the proof of \autoref{lem:complexity-V} and \autoref{lem:complexity-W}
we only need that $\nu(e_{1:t} \dmid a_{1:t})$ is lower semicomputable computable.
While this new definition makes no difference for the computability of AINU,
it matters for AIXI
because in the mixture $\xi$ over all of these environments
is no longer lower semicomputable.

Any method that tries to tackle the reinforcement learning problem
has to balance between exploration and exploitation%
\index{exploration vs.\ exploitation}.
AIXI strikes this balance in the Bayesian way.
However, as we showed in \autoref{sec:bad-priors},
this may not lead to enough exploration.
To counteract this,
we can add an explorative component to the agent,
akin to knowledge-seeking agents.
In \autoref{sec:complexity-knowledge-seeking} we show that
$\varepsilon$-optimal knowledge-seeking agents are limit computable
if we use the recursive definition of the value function.

We set out with the goal of finding a perfect reinforcement learning agent
that is limit computable.
The Bayesian agent AIXI could be considered one suitable candidate,
despite its optimality problems discussed in \autoref{cha:optimality}.
Another suitable candidate are weakly asymptotically optimal agents,
which in contrast to AIXI are optimal in an objective sense%
~(see \autoref{sec:discussion-optimality}).
We discussed {\BayesExp}, which relies
on a Solomonoff prior to learn its environment and
on a information-seeking component for extra exploration.
Our results culminated in
a limit-computable weakly asymptotically optimal agent
based on {\BayesExp}~(\autoref{thm:wao-limit-computable}).
In this sense our goal has been achieved.


\chapter[The Grain of Truth Problem]{The Grain of Truth Problem\footnote{%
The idea for reflective oracles was developed by Jessica Taylor and Benya Fallenstein based on ideas by Paul Christiano~\citep{Christiano+:2013}.
Reflective oracles were first described in \citet{FST:2015}.
The proof of \autoref{thm:lc-reflective-oracle}
was sketched by Benya Fallenstein and developed by me.
Except for minor editing, everything else in this chapter is my own work.
}}
\label{cha:grain-of-truth-problem}

\falsequote{Eliezer Yudkowsky}{AIs become friendly by playing lots of Newcomblike problems.}

Consider the general setup of multiple reinforcement learning agents
interacting sequentially
in a known environment with the goal to maximize discounted reward.%
\footnote{We mostly use the terminology of reinforcement learning.
For readers from game theory we provide a dictionary in
\autoref{tab:rl-game-theory-translation}.}
Each agent knows how the environment behaves,
but does not know the other agents' behavior.
The natural (Bayesian) approach would be to define a class of possible
policies that the other agents could adopt and take a prior over this class.
During the interaction, this prior gets updated to the posterior
as our agent learns the others' behavior.
Our agent then acts optimally with respect to this posterior belief.
\citet{KL:1993} show that in infinitely repeated games
Bayesian agents converge to an $\varepsilon$-Nash equilibrium
as long as each agent assigns positive prior probability
to the other agents' policies (a \emph{grain of truth}\index{grain of truth}).

\begin{table}[t]
\begin{center}
\begin{tabular}{p{0.445\columnwidth}p{0.445\columnwidth}}
\toprule
Reinforcement learning & Game theory \\
\midrule
stochastic policy & mixed strategy \\
deterministic policy & pure strategy \\
agent & player \\
multi-agent environment & infinite extensive-form game \\
reward & payoff/utility \\
(finite) history & history \\
infinite history & path of play \\
\bottomrule
\end{tabular}
\end{center}
\caption{Terminology dictionary between reinforcement learning and game theory.}
\label{tab:rl-game-theory-translation}
\end{table}

As an example,
consider an infinitely repeated prisoners dilemma\index{prisoner's dilemma}
between two agents.
In every time step the payoff matrix is as follows,
where C means cooperate and D means defect.
\begin{center}
\begin{tabular}{l|cc}
  & C        & D \\
\hline
C & 3/4, 3/4 & 0, 1 \\
D & 1, 0     & 1/4, 1/4
\end{tabular}
\end{center}
Define the set of policies $\Pi := \{ \pi_\infty, \pi_0, \pi_1, \ldots \}$
where policy $\pi_t$ cooperates until time step $t$ or
the opponent defects (whatever happens first) and defects thereafter.
The Bayes optimal behavior is to cooperate until the posterior belief that
the other agent defects in the time step after the next
is greater than some constant (depending on the discount function)
and then defect afterwards.
Therefore Bayes optimal behavior leads to a policy from the set $\Pi$
(regardless of the prior).
If both agents are Bayes optimal with respect to some prior,
they both have a grain of truth and therefore they converge to
a Nash equilibrium:
either they both cooperate forever or
after some finite time they both defect forever.
Alternating strategies like TitForTat
(cooperate first, then play the opponent's last action)
are not part of the policy class $\Pi$,
and adding them to the class breaks the grain of truth property:
the Bayes optimal behavior is no longer in the class.
This is rather typical;
a Bayesian agent usually needs to be
more powerful than its environment~(see \autoref{sec:complexity-aixi}).
We are facing the following problem.

\begin{problem}[{Grain of Truth Problem; \citealp[Q.~5j]{Hutter:2009open}}]
\label{prob:grain-of-truth}\index{grain of truth!problem|textbf}
Find a large class of policies $\Pi$
containing Bayesian agents with positive prior over $\Pi$.
\end{problem}

Until now, classes that admit a grain of truth
were known only for small toy examples such as
the iterated prisoner's dilemma above~\citep[Ch.~7.3]{SLB:2009}.
\citet{FY:2001impossibility} and \citet{Nachbar:1997,Nachbar:2005}
prove several impossibility results on the grain of truth problem
that identify properties that cannot be simultaneously
satisfied for classes that allow a grain of truth
(see \autoref{sec:impossibility-results} for a discussion).

In this chapter
we present a formal solution to the grain of truth problem%
~(\autoref{sec:a-grain-of-truth}).
We assume that our multi-agent environment is computable,
but it does not need to be stationary/Markov, ergodic, or finite-state.
Our class of policies $\Pi$ is large enough
to contain all computable (stochastic) policies,
as well as all relevant Bayes optimal policies.
At the same time, our class is small enough to be limit computable.
This is important because
it allows our result to be computationally approximated.

In \autoref{sec:multi-agent-environments}
we consider the setting where the multi-agent environment is
unknown to the agents and has to be learned
in addition to the other agents' behavior.
A Bayes optimal agent may not learn to act optimally
in unknown multi-agent environments \emph{even though it has a grain of truth}.
This effect occurs in non-recoverable environments where
taking one wrong action can mean a permanent loss of future value.
In this case, a Bayes optimal agent avoids taking these dangerous actions
and therefore will not explore enough
to wash out the prior's bias
(using the dogmatic prior from \autoref{ssec:dogmatic-prior}).
Therefore, Bayesian agents are not \emph{asymptotically optimal}%
\index{optimality!asymptotic}, i.e.,
they do not always learn to act optimally~(\autoref{thm:AIXI-not-AO}).

However, asymptotic optimality is achieved by Thompson sampling
because the inherent randomness of Thompson sampling
leads to enough exploration to learn the entire environment class%
~(see \autoref{ssec:asymptotic-optimality-TS}).
This leads to our main result:
if all agents use Thompson sampling over our class of multi-agent environments,
then for every $\varepsilon > 0$
they converge to an $\varepsilon$-Nash equilibrium.
This is not the first time Thompson sampling is used in game theory%
~\citep{OB:2014Thompson},
but the first time to show that it achieves such general positive results.

The central idea to our construction is based on
\emph{reflective oracles}\index{oracle!reflective}
introduced by \citet{FST:2015,FTC:2015reflection}.
Reflective oracles are probabilistic oracles
similar to halting oracles that
answer whether the probability that
a given probabilistic Turing machine $T$ outputs $1$
is higher than a given rational number $p$.
The oracles are reflective in the sense that the machine $T$
may itself query the oracle,
so the oracle has to answer queries about itself.
This invites issues caused by self-referential liar paradoxes of the form
``if the oracle says that this machine return $1$ with probability $> 1/2$,
then return $0$, else return $1$.''
Reflective oracles avoid these issues by being allowed to randomize if
the machines do not halt or the rational number
is \emph{exactly} the probability to output $1$.
We introduce reflective oracles formally in \autoref{sec:reflective-oracles}
and prove that there is a limit computable reflective oracle.

For infinitely repeated games practical algorithms
rely on \emph{ficticious play}\index{ficticious play}~\citep[Ch.~2]{FL:1998}:
the agent takes a best-response action
based on the assumption that
its opponent is playing a stationary but unknown mixed strategy
estimated according to the observed empirical frequencies.
If all agents converge to a stationary policy,
then this is a Nash equilibrium.
However, convergence is not guaranteed.

The same problem occurs in multi-agent reinforcement learning~\citep{BBDS:2008}.
Reinforcement learning algorithms
typically assume a (stationary) Markov decision process.
This assumption is violated when
interacting with other reinforcement learning agents
because as these agents learn, their behavior changes
and thus they are not stationary.
Assuming convergence to a stationary policy
is a necessary criterion to  enable all agents to learn,
but the process is unstable for many reinforcement learning algorithms and
only empirical positive results are known~\citep{BV:2001}.

\section{Reflective Oracles}
\label{sec:reflective-oracles}

\subsection{Definition}
\label{ssec:reflective-oracles-def}

First we connect semimeasures as defined in \autoref{def:semimeasure}
to Turing machines.
In \autoref{cha:preliminaries} we used \emph{monotone Turing machines}%
\index{Turing machine!monotone}
which naturally correspond to
lower semicomputable semimeasures~\citep[Sec.~4.5.2]{LV:2008}
that describe the distribution that arises when piping fair coin flips
into the monotone machine.
Here we take a different route.

A \emph{probabilistic Turing machine}%
\index{Turing machine!probabilistic|textbf} is a Turing machine that
has access to an unlimited number of uniformly random coin flips.
Let $\mathcal{T}$ denote the set of all probabilistic Turing machines
that take some input in $\X^*$ and may query an oracle (formally defined below).
We take a Turing machine $T \in \mathcal{T}$ to correspond to
a semimeasure $\lambda_T$ where
$\lambda_T(a \mid x)$ is the probability that
$T$ outputs $a \in \X$ when given $x \in \X^*$ as input.
The value of $\lambda_T(x)$ is then given by the chain rule\index{chain rule}
\begin{equation}\label{eq:chain-rule}
\lambda_T(x) := \prod_{k=1}^{|x|} \lambda_T(x_k \mid x_{<k}).
\end{equation}
Thus $\mathcal{T}$ gives rise to the set of semimeasures $\Mlsc$ where
the \emph{conditionals} $\lambda(a \mid x)$ are lower semicomputable.
In contrast, in \autoref{cha:computability} we considered semimeasures
whose \emph{joint} probability \eqref{eq:chain-rule} is lower semicomputable.
This set $\Mlsc$ contains all computable measures.
However, $\Mlsc$ is a proper subset of the set of all lower semicomputable semimeasures
because the product \eqref{eq:chain-rule} is lower semicomputable,
but there are some lower semicomputable semimeasures whose conditional
is not lower semicomputable~(\autoref{thm:M-conditional-is-not-Sigma1}):
\[
\Mcomp \subset \Mlsc \subset \Mlscccs
\]

In the following we assume that our alphabet is binary,
i.e., $\X := \{ 0, 1 \}$.

\begin{definition}[Oracle]
\label{def:oracle}\index{oracle|textbf}
An \emph{oracle} is a function
$O: \mathcal{T} \times \{ 0, 1 \}^* \times \mathbb{Q} \to \Delta \{ 0, 1 \}$.
\end{definition}

Oracles are understood to be probabilistic:
they randomly return $0$ or $1$.
Let $T^O$ denote the machine $T \in \mathcal{T}$ when run with the oracle $O$,
and let $\lambda_T^O$ denote the semimeasure induced by $T^O$.
This means that drawing from $\lambda_T^O$ involves two sources of randomness:
one from the distribution induced by the probabilistic Turing machine $T$
and one from the oracle's answers.

The intended semantics of an oracle are that it takes
a \emph{query}\index{query} $(T, x, p)$ and returns $1$
if the machine $T^O$ outputs $1$
on input $x$ with probability greater than $p$ when run with the oracle $O$,
i.e., when $\lambda^O_T(1 \mid x) > p$.
Furthermore, the oracle returns $0$ if the machine $T^O$ outputs $1$
on input $x$ with probability less than $p$ when run with the oracle $O$,
i.e., when $\lambda^O_T(1 \mid x) < p$.
To fulfill this,
the oracle $O$ has to make statements about itself,
since the machine $T$ from the query may again query $O$.
Therefore we call oracles of this kind \emph{reflective oracles}%
\index{oracle!reflective}.
This has to be defined very carefully
to avoid the obvious diagonalization issues that are caused by programs
that ask the oracle about themselves.
We impose the following self-consistency constraint.

\begin{definition}[Reflective Oracle]
\label{def:reflective-oracle}\index{oracle!reflective|textbf}
An oracle $O$ is \emph{reflective} iff
for all queries
$(T, x, p) \in \mathcal{T} \times \{ 0, 1 \}^* \times \mathbb{Q}$,
\begin{enumerate}[(a)]
\item $\lambda_T^O(1 \mid x) > p$ implies $O(T, x, p) = 1$, and
\item $\lambda_T^O(0 \mid x) > 1 - p$ implies $O(T, x, p) = 0$.
\end{enumerate}
\end{definition}

If $p$ under- or overshoots the true probability of $\lambda_T^O(\,\cdot \mid x)$,
then the oracle must reveal this information.
However, in the critical case when $p = \lambda_T^O(1 \mid x)$,
the oracle is allowed to return anything and may randomize its result.
Furthermore, since $T$ might not output any symbol,
it is possible that $\lambda_T^O(0 \mid x) + \lambda_T^O(1 \mid x) < 1$.
In this case the oracle can reassign the non-halting probability mass
to $0$, $1$, or randomize; see \autoref{fig:reflective-oracle}.

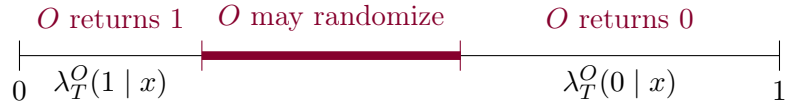
\begin{figure}[t]
\begin{center}
\begin{tikzpicture}[scale=1.0]
\draw (0,0) to (10, 0);
\draw (0, .2) to (0, -.2) node[below] {$0$};
\draw (10, .2) to (10, -.2) node[below] {$1$};

\node[below] at (7.9, 0) {$\lambda_T^O(0 \mid x)$};
\node[below] at (1.2, 0) {$\lambda_T^O(1 \mid x)$};

\draw[mycolor] (5.8, .2) to (5.8, -.2);
\draw[mycolor] (2.4, .2) to (2.4, -.2);
\filldraw[mycolor] (2.4, .05) -- (5.8, .05) -- (5.8, -.05) -- (2.4, -.05);
\node[mycolor] at (1.2, 0.5) {$O$ returns $1$};
\node[mycolor] at (4.1, 0.5) {$O$ may randomize};
\node[mycolor] at (7.9, 0.5) {$O$ returns $0$};
\end{tikzpicture}
\end{center}
\caption[Answer options of a reflective oracle]{%
Answer options of a reflective oracle $O$ for the query $(T, x, p)$;
the rational $p \in [0, 1]$ falls into one of the three regions above.
The values of $\lambda_T^O(0 \mid x)$ and $\lambda_T^O(1 \mid x)$ are depicted
as the length of the line segment under which they are written.
}\label{fig:reflective-oracle}
\end{figure}

\begin{example}[Reflective Oracles and Diagonalization]
\label{ex:diagonalization-for-reflective-oracles}\index{oracle!reflective}
Let $T \in \mathcal{T}$ be a probabilistic Turing machine that
outputs $1 - O(T, \epsilon, 1/2)$
($T$ can know its own source code by quining; \citealp[Thm.~27]{Kleene:1952}).
In other words, $T$ queries the oracle about whether it is more likely
to output $1$ or $0$, and then does whichever the oracle says is less likely.
In this case we can use an oracle $O(T, \epsilon, 1/2) := 1/2$
(answer $0$ or $1$ with equal probability),
which implies $\lambda_T^O(1 \mid \epsilon) = \lambda_T^O(0 \mid \epsilon) = 1/2$,
so the conditions of \autoref{def:reflective-oracle} are satisfied.
In fact, for this machine $T$ we must have
$O(T, \epsilon, 1/2) = 1/2$ for all reflective oracles $O$.
\end{example}

The following theorem establishes that reflective oracles exist.

\begin{theorem}[{\citealp[App.~B]{FTC:2015reflectionx}}]
\label{thm:existence-reflective-oracles}\index{oracle!reflective}
There is a reflective oracle.
\end{theorem}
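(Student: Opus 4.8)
The plan is to establish existence of a reflective oracle by a fixed-point argument over the space of oracles, viewing an oracle as a point in a suitable compact convex space and the reflectivity conditions as defining a closed set of fixed points of an appropriately defined correspondence. First I would set up the topology: an oracle is a function $O \colon \mathcal{T} \times \{0,1\}^* \times \mathbb{Q} \to \Delta\{0,1\}$, which we may identify with a point in the product space $[0,1]^{\mathcal{T} \times \{0,1\}^* \times \mathbb{Q}}$ (recording, say, the probability that $O$ returns $1$ on each query). Since the index set is countable, this is a compact metrizable space in the product topology, and it is convex. The key technical step is to show that the map sending an oracle $O$ to the induced family of conditional semimeasures $(\lambda_T^O(\,\cdot \mid x))_{T,x}$ behaves well enough — i.e., that the "best response" correspondence, which sends $O$ to the set of oracles $O'$ that answer each query $(T,x,p)$ consistently with the probabilities $\lambda_T^O(1\mid x)$ (returning $1$ when $\lambda_T^O(1\mid x) > p$, returning $0$ when $\lambda_T^O(0\mid x) > 1-p$, and allowed to randomize in the critical region as in \autoref{fig:reflective-oracle}) — has nonempty convex values and a closed graph. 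Then Kakutani's fixed-point theorem (in its infinite-dimensional form, e.g.\ the Fan–Glicksberg theorem for compact convex subsets of a locally convex space) yields a fixed point, which is exactly a reflective oracle.

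The main steps in order: (1) define the oracle space and verify compactness, convexity, and metrizability; (2) for a fixed oracle $O$, analyze how $\lambda_T^O(a \mid x)$ depends on $O$ — this requires care because $T$ may query $O$ recursively, so $\lambda_T^O(a\mid x)$ is itself defined by a limiting process over the computation of $T^O$, and we must check it is (semi)continuous in $O$; (3) define the best-response correspondence $\Phi(O)$ as the set of oracles satisfying the reflectivity constraints relative to $\lambda^O$, and show $\Phi(O)$ is nonempty (one can always pick a consistent answer for each query independently, e.g.\ by thresholding at $\lambda_T^O(1\mid x)$ and resolving the critical case arbitrarily), convex (the constraints in \autoref{def:reflective-oracle} are "one-sided" — they only force an answer outside the critical interval, and the critical interval is an interval, so the feasible set of answer-distributions for each query is convex), and closed-graph (a limit of oracles $O_n \to O$ with $O'_n \in \Phi(O_n)$, $O'_n \to O'$ has $O' \in \Phi(O)$, using the semicontinuity from step (2)); (4) apply the infinite-dimensional Kakutani/Fan–Glicksberg theorem to obtain $O^* \in \Phi(O^*)$; (5) observe that $O^* \in \Phi(O^*)$ is precisely the statement that $O^*$ is a reflective oracle, because the defining implications of \autoref{def:reflective-oracle} are exactly the constraints encoded in $\Phi$.

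The hard part will be step (2), establishing the right continuity properties of $O \mapsto \lambda_T^O(\,\cdot\mid x)$. Because $T^O$ is a probabilistic machine that can call the oracle adaptively and need not halt, $\lambda_T^O(1\mid x)$ is the supremum over finite computation stages of the probability that $T^O$ has output $1$ within that many steps; this is a countable supremum of functions each of which is a polynomial in finitely many oracle-values, hence each stage-function is continuous in $O$, and the supremum is lower semicontinuous. One then checks that lower semicontinuity of $\lambda_T^O(1\mid x)$ and of $\lambda_T^O(0\mid x)$ is enough to make $\Phi$ closed-graph: if $\lambda_{T}^{O}(1\mid x) > p$ strictly, this inequality persists for nearby oracles by lower semicontinuity, forcing the limiting answer; the only subtlety is at the boundary, and there the oracle is permitted to randomize, so no contradiction arises. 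I would also need to verify that the non-halting mass causes no problem — it simply widens the critical region where the oracle is free, which only makes $\Phi(O)$ larger and its values "more convex," so it does not obstruct the argument. Once these continuity and convexity facts are in place, the fixed-point theorem does the rest essentially mechanically; I expect the author's proof to either follow this Kakutani route or to use a transfinite/compactness construction building the oracle query-by-query, but the fixed-point approach is the cleanest to sketch.
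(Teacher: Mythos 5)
Your fixed-point argument is essentially sound, but note that the thesis itself does not prove this theorem: it is stated with an attribution to \citet[App.~B]{FTC:2015reflectionx}, whose proof the text explicitly describes as nonconstructive, and that cited proof is in substance exactly the Kakutani/Fan--Glicksberg argument you sketch (the compact convex oracle space $[0,1]^{\mathcal{T}\times\{0,1\}^*\times\mathbb{Q}}$ with the product topology, lower semicontinuity of $O \mapsto \lambda_T^O(\,\cdot\mid x)$ as an increasing limit of finite-stage probabilities, and a best-response correspondence with nonempty convex values and closed graph). One detail worth making explicit in your step (3): nonemptiness of $\Phi(O)$ rests on the observation that the two one-sided constraints can never both fire, since $\lambda_T^O(1\mid x) + \lambda_T^O(0\mid x) \le 1$; with that noted, the feasible answer set for each query is a nonempty closed subinterval of $[0,1]$ and the product structure gives nonempty, convex, compact values. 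What the paper does instead is genuinely different and serves a different goal: in \autoref{ssec:lc-reflective-oracle-proof} it introduces $k$-partial oracles, shows that restrictions of a reflective oracle are $k$-partially reflective (\autoref{lem:partially-reflective}), and runs a backtracking search through the finitely-branching tree of partially reflective partial oracles, proving that the pointwise limit of the resulting sequence is reflective (\autoref{lem:limit}); this yields the stronger \autoref{thm:lc-reflective-oracle}, that a \emph{limit computable} reflective oracle exists, which is what the paper's computational-approximation claims require. Be aware, though, that the paper's construction is not independent of the present theorem as written: \autoref{lem:infinite-partial-oracles} invokes \autoref{thm:existence-reflective-oracles} to guarantee the search tree is infinite, so a fixed-point proof like yours (or the cited one) is precisely what underwrites that step. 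In short, your route buys bare existence cleanly but gives no computability information; the paper's route converts that existence into a limit-computable witness.
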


\begin{definition}[Reflective-Oracle-Computable]
\label{def:reflective-oracle-computable}
\index{computable!reflective-oracle|textbf}
A semimeasure is called \emph{reflective-oracle-computable} iff
it is computable on a probabilistic Turing machine
with access to a reflective oracle.
\end{definition}

For any probabilistic Turing machine $T \in \mathcal{T}$
we can complete the semimeasure $\lambda_T^O(\,\cdot \mid x)$
into a reflective-oracle-computable measure
$\overline\lambda_T^O(\,\cdot \mid x)$:
Using the oracle $O$ and a binary search on the parameter $p$
we search for the crossover point $p$ where $O(T, x, p)$
goes from returning $1$ to returning $0$.
The limit point $p^*_x \in \mathbb{R}$ of the binary search is random
since the oracle's answers may be random.
But the main point is that the expectation of $p^*_x$ exists,
so $\overline\lambda_T^O(1 \mid x) = \mathbb{E}[p^*_x]
= 1 - \overline\lambda_T^O(0 \mid x)$.
Hence $\overline\lambda_T^O$ is a measure.
Moreover, if the oracle is reflective,
then $\overline\lambda_T^O(x) \geq \lambda_T^O(x)$ for all $x \in \X^*$.
In this sense the oracle $O$ can be viewed as
a way of `completing' all semimeasures $\lambda_T^O$
to measures by arbitrarily assigning the non-halting probability mass.
If the oracle $O$ is reflective this is consistent in the sense that
Turing machines who run other Turing machines will be completed in the same way.
This is especially important for a universal machine
that runs all other Turing machines
to induce a Solomonoff prior~(\autoref{ex:Solomonoff-prior}).

\subsection{A Limit Computable Reflective Oracle}
\label{ssec:lc-reflective-oracle}

The proof of \autoref{thm:existence-reflective-oracles}
given by \citet[App.~B]{FTC:2015reflectionx}
is nonconstructive and uses the axiom of choice.
In \autoref{ssec:lc-reflective-oracle-proof}
we give a new proof for the existence of reflective oracles
and provide a construction that there is a reflective oracle
that is limit computable.

\begin{theorem}[A Limit Computable Reflective Oracle]
\label{thm:lc-reflective-oracle}\index{oracle!reflective}
There is a reflective oracle that is limit computable.
\end{theorem}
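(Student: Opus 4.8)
The strategy is to construct a reflective oracle as the limit of a computable sequence of finite partial approximations, where at each stage we have committed only to finitely many oracle answers at finitely many rational probing points, and we refine these commitments while maintaining a consistency invariant. The key conceptual move is to discretize: instead of asking the oracle about every rational $p \in \mathbb{Q}$, we first observe that it suffices to nail down the completed measures $\overline\lambda_T^O$ to arbitrary finite precision, and a reflective oracle can then be read off. At stage $n$ we fix a grid of precision $2^{-n}$ on $[0,1]$, enumerate the first $n$ machines $T_1,\ldots,T_n$ from $\mathcal{T}$, and run each of them on all inputs of length $\le n$ for $n$ steps, allowing each to make at most $n$ oracle queries, each query answered using the current stage-$n$ partial oracle. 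This gives computable lower bounds on $\lambda_{T_i}^{O_n}(0\mid x)$ and $\lambda_{T_i}^{O_n}(1\mid x)$ which increase with $n$.

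**The fixed-point step.** The heart of the argument is that, within a single stage, we must solve a finite fixed-point problem: the stage-$n$ oracle's answers depend on the machines' outputs, which depend on the oracle's answers. Since everything is now finite (finitely many machines, finitely many inputs, a finite grid of $p$-values, a bounded number of query steps), the dependency is a continuous self-map on a compact finite-dimensional simplex — the space of probability-valued answers to the finitely many relevant queries — and Brouwer's (or Kakutani's) fixed-point theorem yields a solution. I would want to run a constructive/computable fixed-point search here rather than invoke Brouwer abstractly: because we only need the fixed point to precision $2^{-n}$ and the map is computable and continuous with a computable modulus of continuity, we can locate an approximate fixed point by a finite grid search, which keeps the whole construction computable. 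The stage-$n$ oracle $O_n$ is then defined from this approximate fixed point, extended arbitrarily (say, by randomizing $1/2$) on all queries not yet considered.

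**Taking the limit.** Having built a computable sequence $(O_n)_{n\in\mathbb{N}}$, I would argue that for each fixed query $(T,x,p)$ the sequence $O_n(T,x,p)$ converges. The point is that the lower-semicomputable approximations $\lambda_{T}^{O_n}(1\mid x)$ and $\lambda_{T}^{O_n}(0\mid x)$ converge monotonically to the true values $\lambda_T^O(1\mid x)$, $\lambda_T^O(0\mid x)$ under the limiting oracle $O := \lim_n O_n$, and away from the boundary case $p \in \{\lambda_T^O(1\mid x), 1-\lambda_T^O(0\mid x)\}$ the fixed-point/consistency conditions force $O_n(T,x,p)$ to stabilize at the value dictated by Definition~\ref{def:reflective-oracle}; in the boundary case any limiting behaviour is permitted, so convergence of the real-valued approximations suffices. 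One must check that the diagonal interaction — a machine querying the oracle about itself — does not obstruct convergence; this is handled because at stage $n$ the queries that machine $T$ can actually issue in $n$ steps are among those the stage-$n$ fixed point already accounts for, so self-reference is absorbed into the finite fixed-point problem rather than creating a moving target. Finally, verifying that the limit $O$ genuinely satisfies conditions (a) and (b) of Definition~\ref{def:reflective-oracle} (hence also reproving Theorem~\ref{thm:existence-reflective-oracles} constructively) follows from passing the stage-wise invariants to the limit.

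**Main obstacle.** The delicate part is making the per-stage fixed point both \emph{exist} and be \emph{computably locatable to the needed precision}, while ensuring the stage-to-stage refinements are coherent enough that the global limit exists — in particular that increasing $n$ (finer grid, more machines, more query steps, better inner approximations of the $\lambda_T$'s) does not cause oracle answers at a fixed query to oscillate forever. Controlling this requires a careful choice of invariant: roughly, that $O_n$ is "$2^{-n}$-reflective" with respect to the stage-$n$ approximate semimeasures, and that these approximations are nested, so that the only queries whose answers can still move at stage $n+1$ are those within $2^{-n}$ of a genuine crossover point — a set whose measure shrinks and which is exactly the set where the limiting oracle is allowed to be arbitrary. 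Getting that bookkeeping right, rather than any single analytic estimate, is where the real work lies.
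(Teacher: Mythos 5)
Your stage-wise discretization, the use of truncated runs with pessimistically adjusted answers to obtain finitely computable lower bounds, and the final limit argument (away from a genuine crossover the lower approximations eventually force the oracle's answer; at a crossover anything is allowed) all correspond to machinery the paper actually uses. The genuine gap is in how consecutive stages are tied together. You compute at each stage $n$ an (approximate) fixed point of a finite problem and then claim that the answers $O_n(T,x,p)$ stabilize except for queries near genuine crossover points. But these fixed points are not unique, the set of crossover queries depends on the very oracle being constructed, and a change of answer at a crossover \emph{sub-query} propagates with large magnitude into the value $\lambda_T^{O}(1 \mid x)$ of any machine $T$ that issues that sub-query: if two admissible resolutions of a sub-query give $T$ output probabilities $0.3$ and $0.7$, then the query $(T,x,1/2)$ is far from a crossover under either resolution, yet its forced answer flips. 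Nothing in your construction prevents the independently computed per-stage fixed points from alternating between such resolutions forever, so $\lim_n O_n(q)$ need not exist; your ``nestedness'' invariant is precisely the unproved statement, and it does not hold without an explicit cross-stage constraint. Likewise, the monotonicity in $n$ of the truncated lower bounds on $\lambda_T^{O_n}$ --- which your limit argument needs so that forced answers lock in --- itself fails if $O_{n+1}$ may differ arbitrarily from $O_n$ on queries already answered.

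The paper resolves exactly this point not with per-stage fixed points but with a tree search: level-$k$ nodes are $k$-partial oracles taking values in the $2^{-k}$ grid (\autoref{def:k-partial-oracle}), an edge requires the child to extend the parent within $2^{-k-1}$ on the first $k$ queries (\autoref{def:extending-partial-oracles}), so answers along any branch are Cauchy by construction; the finitely checkable property of being $k$-partially reflective (\autoref{def:k-partially-reflective}) prunes the tree; and the nonconstructive existence of some reflective oracle (\autoref{thm:existence-reflective-oracles}) guarantees an infinite branch, so a depth-first search with backtracking revisits each level only finitely often, stabilizes, and its pointwise limit is shown to be reflective. Note also that your Brouwer step is not quite right as stated: the answer-to-answer dependency is a threshold correspondence, not a continuous self-map, so Brouwer does not apply directly (Kakutani, or a smoothed map, would be needed) --- that part is repairable, and in the paper's argument no fixed-point computation is needed at all; the missing cross-stage coherence mechanism is the essential defect.
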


This theorem has the immediate consequence
that reflective oracles cannot be used as halting oracles.
At first, this result may seem surprising:
according to the definition of reflective oracles,
they make concrete statements about the output of probabilistic Turing machines.
However, the fact that the oracles may randomize some of the time
actually removes enough information such that halting can no longer be decided
from the oracle output.

\begin{corollary}[Reflective Oracles are not Halting Oracles]
\label{cor:reflective-oracles-not-halting-oracles}
\index{oracle!reflective}\index{oracle!halting}
There is no probabilistic Turing machine $T$ such that
for every prefix program $p$ and every reflective oracle $O$,
we have that $\lambda_T^O(1 \mid p) > 1/2$ if $p$ halts and
$\lambda_T^O(1 \mid p) < 1/2$ otherwise.
\end{corollary}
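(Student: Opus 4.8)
The plan is to derive this as a direct consequence of \autoref{thm:lc-reflective-oracle} together with a standard diagonalization argument. Suppose for contradiction that such a probabilistic Turing machine $T$ exists, witnessing the claimed halting-detection property: for every prefix program $p$ and every reflective oracle $O$, we have $\lambda_T^O(1 \mid p) > 1/2$ if $p$ halts and $\lambda_T^O(1 \mid p) < 1/2$ if $p$ does not halt. By \autoref{thm:lc-reflective-oracle} there is a reflective oracle $O$ that is limit computable, i.e., $O$ is $\Delta^0_2$-computable. The idea is that running $T$ with this $O$ would give a limit-computable procedure that decides the halting problem, contradicting the fact that the halting problem is $\Sigma^0_1$-complete and in particular not $\Delta^0_2$ (it is not even $\Pi^0_1$, let alone decidable on an oracle machine with a $\Sigma^0_1$ oracle --- wait, more carefully: $\mathrm{Halt}$ is $\Sigma^0_1$, so it \emph{is} $\Delta^0_2$; the contradiction must instead come from diagonalization, not from $\Delta^0_2$).

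So the argument must be internal to reflective oracles rather than appealing to limit computability. First I would construct, via quining (\citealp[Thm.~27]{Kleene:1952}, as in \autoref{ex:diagonalization-for-reflective-oracles}), a prefix program $q$ that on any input first computes its own source code, then queries the oracle at $(T, q, 1/2)$, and halts if and only if the oracle returns $0$. Formally: $q$ runs the oracle query $O(T, q, 1/2)$; if the answer is $0$, then $q$ halts; if the answer is $1$, then $q$ enters an infinite loop. Now consider any reflective oracle $O$ and analyze the two cases. If $q$ halts (under $O$), then by the assumed property of $T$ we have $\lambda_T^O(1 \mid q) > 1/2$; since $\lambda_T^O(0 \mid q) \leq 1 - \lambda_T^O(1 \mid q) < 1/2$, the reflectivity condition \autoref{def:reflective-oracle}(a) forces $O(T, q, 1/2) = 1$, which means $q$ loops --- contradiction. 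If $q$ does not halt, then $\lambda_T^O(1 \mid q) < 1/2$, so $\lambda_T^O(0 \mid q) > 1/2 = 1 - 1/2$ is \emph{not} immediate (we only know $\lambda_T^O(1\mid q) < 1/2$, and $T$ might fail to output anything), so I need to be slightly careful here: the cleanest fix is to strengthen the case analysis using that whenever $q$ does not halt, $T$ run with $O$ on input $q$ outputs $1$ with probability $< 1/2$, and then observe that $q$'s behavior is itself deterministic given the oracle's answer, so the only consistent value is $O(T,q,1/2) = 0$, which makes $q$ halt --- again a contradiction. In either case we reach a contradiction, so no such $T$ can exist, and this holds for every reflective oracle $O$, in particular establishing the stated non-existence.

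The main obstacle I anticipate is the boundary case where $\lambda_T^O(1 \mid q)$ or $\lambda_T^O(0 \mid q)$ equals exactly $1/2$, or where $T$ leaves non-halting probability mass that the oracle may redistribute arbitrarily. The assumed property of $T$ gives strict inequalities ($> 1/2$ or $< 1/2$), which is exactly what rules the boundary case out: in the halting case $\lambda_T^O(0 \mid q) < 1/2$ strictly, so \autoref{def:reflective-oracle}(a) applies with no ambiguity and pins $O(T,q,1/2) = 1$. The remaining subtlety is the non-halting case, where I must argue that $\lambda_T^O(1 \mid q) < 1/2$ combined with the reflective constraint forces $O(T,q,1/2)$ to a definite value making $q$ halt; here I would note that $q$'s halting is a deterministic function of the single oracle bit $O(T,q,1/2)$, so whichever (possibly randomized) value the oracle returns, there is positive probability of the ``halt'' branch unless $O(T,q,1/2) = 1$ deterministically --- but $O(T,q,1/2) = 1$ deterministically requires (by the contrapositive of reflectivity, since the oracle is \emph{allowed} but not \emph{forced} to return $1$ only when $\lambda_T^O(1\mid q) \geq 1/2$) that $\lambda_T^O(1 \mid q) \geq$ the relevant threshold, contradicting $\lambda_T^O(1\mid q) < 1/2$. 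Tightening this last step into a clean implication is the part that needs the most care; everything else is routine once the quine is set up correctly.
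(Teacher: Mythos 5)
Your fallback diagonalization does not close, and the step you yourself flag as ``needing the most care'' is exactly where it fails. Reflectivity (\autoref{def:reflective-oracle}) is a pair of one-directional implications: the oracle \emph{must} answer $1$ when $\lambda_T^O(1 \mid q) > 1/2$ and \emph{must} answer $0$ when $\lambda_T^O(0 \mid q) > 1/2$; it is never forbidden from answering $1$. So there is no ``contrapositive of reflectivity'' forcing $\lambda_T^O(1 \mid q) \geq 1/2$ whenever the oracle returns $1$ deterministically: the oracle may answer $1$ with probability one as soon as $\lambda_T^O(0 \mid q) \leq 1/2$, and this is perfectly compatible with $\lambda_T^O(1 \mid q) < 1/2$ because $T$ may put the remaining mass on not outputting anything (e.g.\ $\lambda_T^O(1 \mid q) = \lambda_T^O(0 \mid q) = 0.4$). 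In that scenario the oracle says $1$, your $q$ loops forever, the hypothesis on $T$ only requires $\lambda_T^O(1 \mid q) < 1/2$, and all constraints are satisfied --- no contradiction. This is precisely the escape hatch reflective oracles are built to have; \autoref{ex:diagonalization-for-reflective-oracles} shows the same naive self-referential query being defused. A further problem is that your $q$ is an oracle program, so whether it ``halts'' depends on $O$ and is in general a random event (the oracle may randomize its single answer), whereas the corollary's hypothesis quantifies over prefix programs whose halting status is the classical, oracle-independent one; it is therefore not clear the assumed dichotomy even applies to $q$.

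Ironically, your first instinct was the right one. The paper's proof does use \autoref{thm:lc-reflective-oracle}, not to claim that halting would become limit computable (as you correctly note, that is no contradiction), but to remove the randomization loophole: fix the limit-computable reflective oracle $O$; under the assumed $T$, the queries $O(T,p,1/2)$ act as a deterministic halting oracle (in the halting case determinism is forced by reflectivity); with a halting oracle one can evaluate any limit-computable quantity to arbitrary finite precision, so one builds a machine $T'$ that \emph{computes}, rather than queries, the value $O(T',\epsilon,1/2)$ to within $1/3$ and outputs the opposite bit. Because this computation is deterministic, $\lambda_{T'}^O$ puts probability one on a single symbol, so reflectivity now does pin down $O(T',\epsilon,1/2)$, and the $1/3$-accurate computation detects it --- contradiction. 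The idea your proposal is missing is exactly this determinization step: a diagonal machine that merely queries the oracle about itself can always be accommodated by the oracle's freedom to randomize or to absorb non-halting mass, so one needs the limit computability of $O$ together with the assumed halting detector to let the diagonal machine learn the oracle's answer without asking it.
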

\begin{proof}
Assume there was such a machine $T$ and
let $O$ be the limit computable oracle from \autoref{thm:lc-reflective-oracle}.
Since $O$ is reflective we can turn $T$ into a deterministic halting oracle
by calling $O(T, p, 1/2)$ which deterministically returns
$1$ if $p$ halts and $0$ otherwise.
Since $O$ is limit computable,
we can finitely compute the output of $O$ on any query
to arbitrary finite precision using our deterministic halting oracle.
We construct a probabilistic Turing machine $T'$ that uses our halting oracle
to compute (rather than query) the oracle $O$
on $(T', \epsilon, 1/2)$ to a precision of $1/3$ in finite time.
If $O(T', \epsilon, 1/2) \pm 1/3 > 1/2$, the machine $T'$ outputs $0$,
otherwise $T'$ outputs $1$.
Since our halting oracle is entirely deterministic,
the output of $T'$ is entirely deterministic as well (and $T'$ always halts),
so
$\lambda_{T'}^O(0 \mid \epsilon) = 1$ or $\lambda_{T'}^O(1 \mid \epsilon) = 1$.
Therefore $O(T', \epsilon, 1/2) = 1$ or $O(T', \epsilon, 1/2) = 0$
because $O$ is reflective.
A precision of $1/3$ is enough to tell them apart,
hence $T'$ returns $0$ if $O(T', \epsilon, 1/2) = 1$ and
$T'$ returns $1$ if $O(T', \epsilon, 1/2) = 0$.
This is a contradiction.
\end{proof}

A similar argument can also be used to show that
reflective oracles are not computable.

\subsection{Proof of \autoref*{thm:lc-reflective-oracle}}
\label{ssec:lc-reflective-oracle-proof}

The idea for the proof of \autoref{thm:lc-reflective-oracle} is to
construct an algorithm
that outputs an infinite series of \emph{partial oracles}
converging to a reflective oracle in the limit.

The set of queries is countable,
so we can assume that we have some computable enumeration of it:
\[
  \mathcal{T} \times \{ 0, 1 \}^* \times \mathbb{Q}
=: \{ q_1, q_2, \ldots \}
\]

\begin{definition}[$k$-Partial Oracle]
\label{def:k-partial-oracle}\index{oracle!partial|textbf}
A \emph{$k$-partial oracle} $\tilde O$ is function from the first $k$ queries
to the multiples of $2^{-k}$ in $[0, 1]$:
\[
\tilde O: \{ q_1, q_2, \ldots, q_k \} \to
\{ n 2^{-k} \mid 0 \leq n \leq 2^k \}
\]
\end{definition}

\begin{definition}[Approximating an Oracle]
\label{def:approx-oracle}\index{oracle!partial}
A $k$-partial oracle $\tilde O$ \emph{approximates} an oracle $O$ iff
$|O(q_i) - \tilde O(q_i)| \leq 2^{-k-1}$ for all $i \leq k$.
\end{definition}

Let $k \in \mathbb{N}$, let $\tilde O$ be a $k$-partial oracle, and
let $T \in \mathcal{T}$ be an oracle machine.
The machine $T^{\tilde O}$ that we get when
we run $T$ with the $k$-partial oracle $\tilde O$ is defined as follows
(this is with slight abuse of notation
since $k$ is taken to be understood implicitly).
\begin{enumerate}[1.]
\item Run $T$ for at most $k$ steps.
\item If $T$ calls the oracle on $q_i$ for $i \leq k$,
	\begin{enumerate}[(a)]
	\item return $1$ with probability $\tilde O(q_i) - 2^{-k-1}$,
	\item return $0$ with probability $1 - \tilde O(q_i) - 2^{-k-1}$, and
	\item halt otherwise.
	\end{enumerate}
\item If $T$ calls the oracle on $q_j$ for $j > k$, halt.
\end{enumerate}
Furthermore, we define $\lambda_T^{\tilde O}$ analogously to $\lambda_T^O$
as the distribution generated by the machine $T^{\tilde O}$.


\begin{lemma}\label{lem:approximating-an-oracle}
\index{oracle!partial}
If a $k$-partial oracle $\tilde O$ approximates a reflective oracle $O$,
then $\lambda_T^O(1 \mid x) \geq \lambda_T^{\tilde O}(1 \mid x)$ and
$\lambda_T^O(0 \mid x) \geq \lambda_T^{\tilde O}(0 \mid x)$ for all
$x \in \{ 0, 1 \}^*$ and all $T \in \mathcal{T}$.
\end{lemma}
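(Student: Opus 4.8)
\textbf{Proof plan for \autoref{lem:approximating-an-oracle}.}

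The plan is to compare the behaviour of $T^O$ and $T^{\tilde O}$ step by step along their (coupled) computations, showing that every output path of $T^{\tilde O}$ producing the symbol $a \in \{0,1\}$ corresponds to at least as much probability mass on the same output in $T^O$. First I would set up the coupling: both machines run the same underlying probabilistic Turing machine $T$, so I can couple their internal coin flips directly. The only places the two computations can diverge are (i) when $T$ calls the oracle on one of the first $k$ queries $q_i$, and (ii) when $T^{\tilde O}$ is forcibly halted — either because $T$ ran for more than $k$ steps, or because $T$ queried some $q_j$ with $j > k$, or because of the explicit ``halt otherwise'' branch in step 2(c). In case (ii) the machine $T^{\tilde O}$ produces no further symbols, so on such paths its contribution to $\lambda_T^{\tilde O}(a \mid x)$ is zero and there is nothing to prove; thus it suffices to bound the contribution of paths on which $T^{\tilde O}$ actually outputs $a$, and these are all paths where $T$ halts within $k$ steps using only queries among $q_1,\dots,q_k$.

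The key step is the per-query comparison. Suppose $T$ (run with either oracle) reaches a call to the oracle on $q_i = (T_i, x_i, p_i)$ with $i \le k$. On that path, $T^{\tilde O}$ returns $1$ with probability $\tilde O(q_i) - 2^{-k-1}$ and $0$ with probability $1 - \tilde O(q_i) - 2^{-k-1}$. Since $\tilde O$ approximates the reflective oracle $O$, we have $|O(q_i) - \tilde O(q_i)| \le 2^{-k-1}$, hence $\tilde O(q_i) - 2^{-k-1} \le O(q_i)$ and $1 - \tilde O(q_i) - 2^{-k-1} \le 1 - O(q_i)$. So at each oracle call, for each of the two answers $b \in \{0,1\}$, the probability that $T^{\tilde O}$ receives answer $b$ is at most the probability that $T^O$ receives answer $b$ (the remaining mass in $T^{\tilde O}$ goes to the spurious ``halt'' branch). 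Now I would argue by induction on the number of oracle calls made along a computation path: the probability that $T^{\tilde O}$ follows any particular finite sequence of (coin-flip outcomes, oracle answers) leading to output $a$ on input $x$ is bounded above by the probability that $T^O$ follows the same sequence, because the coin-flip factors agree and each oracle-answer factor can only shrink. Summing over all such finite paths that cause $T$ to output $a$ within $k$ steps gives $\lambda_T^{\tilde O}(a \mid x) \le \lambda_T^O(a \mid x)$ for $a \in \{0,1\}$, which is exactly the claim. (Here I use that $\lambda_T^O(a \mid x)$ includes the mass of \emph{all} halting paths, in particular those using at most $k$ steps and only the first $k$ queries, so the comparison is valid even though $\lambda_T^O$ may have additional mass from longer computations.)

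The main obstacle I anticipate is bookkeeping rather than conceptual: making the coupling and the path-wise domination rigorous requires care about the tree of computation histories, since an oracle call's answer can depend on earlier answers, and the ``$k$ steps'' cutoff interacts with the step at which oracle calls occur. I would handle this by defining, for each finite history $h$ of internal randomness and oracle responses that is \emph{consistent with $T^{\tilde O}$ halting with output $a$ on input $x$}, the product formula for its probability under each machine, and noting termwise domination; the cutoff simply restricts the set of histories under consideration and only removes mass from $\lambda_T^{\tilde O}$. One subtlety worth flagging explicitly: I do not need $O$ itself to be reflective for the inequality $\tilde O(q_i) - 2^{-k-1} \le O(q_i)$ — only the approximation property — but the lemma statement assumes reflectivity, which is harmless and is what will actually be available when this lemma is applied in the convergence argument.
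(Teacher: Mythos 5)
Your proposal is correct and follows essentially the same argument as the paper's (much terser) proof: the truncation to $k$ steps and $k$ queries can only discard probability mass, and at each answered query the approximation property gives $\tilde O(q_i)-2^{-k-1}\leq O(q_i)$ and $1-\tilde O(q_i)-2^{-k-1}\leq 1-O(q_i)$, so every output path is dominated termwise. Your added observation that reflectivity of $O$ is not actually needed, only the approximation property, is also accurate.
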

\begin{proof}
This follows from the definition of $T^{\tilde O}$:
when running $T$ with $\tilde O$ instead of $O$,
we can only lose probability mass.
If $T$ makes calls whose index is $> k$ or runs for more than $k$ steps,
then the execution is aborted
and no further output is generated.
If $T$ makes calls whose index $i \leq k$, then
$\tilde O(q_i) - 2^{-k-1} \leq O(q_i)$ since $\tilde O$ approximates $O$.
Therefore the return of the call $q_i$ is underestimated as well.
\end{proof}

\begin{definition}[$k$-Partially Reflective]
\label{def:k-partially-reflective}\index{oracle!partially reflective|textbf}
A $k$-partial oracle $\tilde O$ is \emph{$k$-partially reflective} iff
for the first $k$ queries $(T, x, p)$
\begin{itemize}
\item $\lambda_T^{\tilde O}(1 \mid x) > p$ implies $\tilde O(T, x, p) = 1$, and
\item $\lambda_T^{\tilde O}(0 \mid x) > 1 - p$ implies $\tilde O(T, x, p) = 0$.
\end{itemize}
\end{definition}

It is important to note that we can check whether a $k$-partial oracle
is $k$-partially reflective in finite time by running all machines
$T$ from the first $k$ queries for $k$ steps and tallying up the probabilities
to compute $\lambda_T^{\tilde O}$.

\begin{lemma}\label{lem:partially-reflective}
\index{oracle!reflective}
\index{oracle!partially reflective}
\index{oracle!partial}
If $O$ is a reflective oracle and
$\tilde O$ is a $k$-partial oracle that approximates $O$, then
$\tilde O$ is $k$-partially reflective.
\end{lemma}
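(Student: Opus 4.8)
The plan is to show the contrapositive-free direction directly: assuming $O$ is reflective and $\tilde O$ approximates $O$, I verify that $\tilde O$ satisfies both clauses of \autoref{def:k-partially-reflective} for each of the first $k$ queries $(T,x,p)$. The key tool is \autoref{lem:approximating-an-oracle}, which gives $\lambda_T^{\tilde O}(1\mid x) \leq \lambda_T^O(1\mid x)$ and $\lambda_T^{\tilde O}(0\mid x) \leq \lambda_T^O(0\mid x)$, together with the approximation bound $|O(q_i) - \tilde O(q_i)| \leq 2^{-k-1}$, and in particular $O(q_i) \geq \tilde O(q_i) - 2^{-k-1}$ and $O(q_i) \leq \tilde O(q_i) + 2^{-k-1}$.

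First I would handle clause (a). Suppose $\lambda_T^{\tilde O}(1\mid x) > p$. By \autoref{lem:approximating-an-oracle} we get $\lambda_T^O(1\mid x) \geq \lambda_T^{\tilde O}(1\mid x) > p$, so since $O$ is reflective, \autoref{def:reflective-oracle}(a) gives $O(T,x,p) = 1$. Because $\tilde O$ approximates $O$, we have $\tilde O(T,x,p) \geq O(T,x,p) - 2^{-k-1} = 1 - 2^{-k-1}$. But $\tilde O(T,x,p)$ is a multiple of $2^{-k}$ in $[0,1]$, and the only such multiple that is $\geq 1 - 2^{-k-1}$ is $1$ itself; hence $\tilde O(T,x,p) = 1$. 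The argument for clause (b) is symmetric: if $\lambda_T^{\tilde O}(0\mid x) > 1-p$ then $\lambda_T^O(0\mid x) > 1-p$ by \autoref{lem:approximating-an-oracle}, so $O(T,x,p) = 0$ by \autoref{def:reflective-oracle}(b), and then $\tilde O(T,x,p) \leq O(T,x,p) + 2^{-k-1} = 2^{-k-1} < 2^{-k}$ forces the multiple-of-$2^{-k}$ value $\tilde O(T,x,p)$ to be $0$.

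There is essentially no main obstacle here; the only subtlety worth stating carefully is the ``rounding'' step, namely that a multiple of $2^{-k}$ lying within $2^{-k-1}$ of an integer ($0$ or $1$) must equal that integer. I would include this as a one-line remark rather than a separate lemma. Everything else is a direct chain of the two cited results and the definition of approximation, so the write-up should be short.

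\begin{proof}
Let $(T,x,p)$ be one of the first $k$ queries $q_i$ (so $i \leq k$).

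Suppose first that $\lambda_T^{\tilde O}(1\mid x) > p$.
By \autoref{lem:approximating-an-oracle},
$\lambda_T^O(1\mid x) \geq \lambda_T^{\tilde O}(1\mid x) > p$,
so by \autoref{def:reflective-oracle}(a) we have $O(T,x,p) = 1$.
Since $\tilde O$ approximates $O$, $|O(T,x,p) - \tilde O(T,x,p)| \leq 2^{-k-1}$,
hence $\tilde O(T,x,p) \geq 1 - 2^{-k-1}$.
But $\tilde O(T,x,p)$ is a multiple of $2^{-k}$ in $[0,1]$,
and the only such multiple that is at least $1 - 2^{-k-1}$ is $1$;
therefore $\tilde O(T,x,p) = 1$.

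Suppose instead that $\lambda_T^{\tilde O}(0\mid x) > 1-p$.
By \autoref{lem:approximating-an-oracle},
$\lambda_T^O(0\mid x) \geq \lambda_T^{\tilde O}(0\mid x) > 1-p$,
so by \autoref{def:reflective-oracle}(b) we have $O(T,x,p) = 0$.
Again using $|O(T,x,p) - \tilde O(T,x,p)| \leq 2^{-k-1}$,
we get $\tilde O(T,x,p) \leq 2^{-k-1} < 2^{-k}$,
and since $\tilde O(T,x,p)$ is a nonnegative multiple of $2^{-k}$,
it follows that $\tilde O(T,x,p) = 0$.

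Both conditions of \autoref{def:k-partially-reflective} are satisfied,
so $\tilde O$ is $k$-partially reflective.
\end{proof}
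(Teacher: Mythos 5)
Your proof is correct and follows essentially the same route as the paper: apply \autoref{lem:approximating-an-oracle} to lift the inequality from $\lambda_T^{\tilde O}$ to $\lambda_T^O$, invoke reflectivity of $O$, and then use the approximation bound together with the $2^{-k}$-grid to pin down $\tilde O(T,x,p)$. The only difference is that you write out the second clause explicitly where the paper says ``analogously,'' which is fine.
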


\autoref{lem:partially-reflective} only holds
because we use semimeasures whose conditionals are lower semicomputable.

\begin{proof}
Assuming $\lambda_T^{\tilde O}(1 \mid x) > p$ we get
from \autoref{lem:approximating-an-oracle} that
$
     \lambda_T^O(1 \mid x)
\geq \lambda_T^{\tilde O}(1 \mid x)
>    p
$.
Thus $O(T, x, p) = 1$ because $O$ is reflective.
Since $\tilde O$ approximates $O$,
we get $1 = O(T, x, p) \leq \tilde O(T, x, p) + 2^{-k-1}$, and
since $\tilde O$ assigns values in a $2^{-k}$-grid,
it follows that $\tilde O(T, x, p) = 1$.
The second implication is proved analogously.
\end{proof}

\begin{definition}[Extending Partial Oracles]
\label{def:extending-partial-oracles}
\index{oracle!partial}
A $k + 1$-partial oracle $\tilde O'$ \emph{extends}
a $k$-partial oracle $\tilde O$ iff
$|\tilde O(q_i) - \tilde O'(q_i)| \leq 2^{-k-1}$ for all $i \leq k$.
\end{definition}

\begin{lemma}
\label{lem:infinite-partial-oracles}
\index{oracle!partial}
\index{oracle!partially reflective}
There is an infinite sequence of partial oracles $(\tilde O_k)_{k \in \mathbb{N}}$
such that for each $k$,
$\tilde O_k$ is a $k$-partially reflective $k$-partial oracle
and $\tilde O_{k+1}$ extends $\tilde O_k$.
\end{lemma}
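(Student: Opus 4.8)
The plan is to build the sequence $(\tilde O_k)_{k\in\mathbb{N}}$ explicitly by rounding a single fixed reflective oracle onto finer and finer dyadic grids, arranging the rounding so that consecutive approximations respect the extension constraint of \autoref{def:extending-partial-oracles}. By \autoref{thm:existence-reflective-oracles} there is a reflective oracle $O$; fix one. For each $k$ and each $i\le k$ I want to choose $\tilde O_k(q_i)$ to be a multiple of $2^{-k}$ within $2^{-k-1}$ of $O(q_i)$, i.e.\ so that $\tilde O_k$ approximates $O$ in the sense of \autoref{def:approx-oracle}, while also keeping $\tilde O_{k+1}(q_i)$ within $2^{-k-1}$ of $\tilde O_k(q_i)$.

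The construction proceeds by induction on $k$. For $\tilde O_1$ I would take $\tilde O_1(q_1)$ to be a multiple of $2^{-1}$ in $[0,1]$ closest to $O(q_1)$, which is then at distance at most $2^{-2}$ from $O(q_1)$. Given a $k$-partial oracle $\tilde O_k$ with $|\tilde O_k(q_i)-O(q_i)|\le 2^{-k-1}$ for all $i\le k$, I would define $\tilde O_{k+1}$ as follows: for the new query $q_{k+1}$, set $\tilde O_{k+1}(q_{k+1})$ to a multiple of $2^{-k-1}$ closest to $O(q_{k+1})$ (distance at most $2^{-k-2}$); and for $i\le k$, observe that $\tilde O_k(q_i)$, being a multiple of $2^{-k}$, is an even multiple of $2^{-k-1}$, so the (at most three) candidates $\tilde O_k(q_i)-2^{-k-1}$, $\tilde O_k(q_i)$, $\tilde O_k(q_i)+2^{-k-1}$ lying in $[0,1]$ are all within $2^{-k-1}$ of $\tilde O_k(q_i)$, and every point of $[\tilde O_k(q_i)-2^{-k-1},\,\tilde O_k(q_i)+2^{-k-1}]\cap[0,1]$ is within $2^{-k-2}$ of one of them. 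Since $O(q_i)$ lies in this interval by the inductive hypothesis, I would pick $\tilde O_{k+1}(q_i)$ to be the candidate closest to $O(q_i)$. This yields simultaneously $|\tilde O_{k+1}(q_i)-O(q_i)|\le 2^{-k-2}$, so $\tilde O_{k+1}$ approximates $O$, and $|\tilde O_{k+1}(q_i)-\tilde O_k(q_i)|\le 2^{-k-1}$, so $\tilde O_{k+1}$ extends $\tilde O_k$.

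Finally, each $\tilde O_k$ is a $k$-partial oracle approximating the reflective oracle $O$, so \autoref{lem:partially-reflective} shows it is $k$-partially reflective; combined with the extension property, this is exactly the claim. The only delicate point — and the step I expect to require the most care — is the dyadic bookkeeping in the inductive step: one must check that hitting accuracy $2^{-k-2}$ against $O$ and drift at most $2^{-k-1}$ against $\tilde O_k$ are jointly achievable on the grid of multiples of $2^{-k-1}$, which works precisely because $\tilde O_k$ already sits on the coarser grid of multiples of $2^{-k}$ and therefore the candidate window is centred on a grid point. I would also remark that, alternatively, the lemma follows non-constructively from König's lemma applied to the finitely-branching tree of $k$-partially reflective $k$-partial oracles ordered by extension, whose infinitude is exactly what the construction above establishes; but the explicit version is what is needed downstream to make the resulting reflective oracle limit computable.
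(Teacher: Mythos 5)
Your proposal is correct and takes essentially the same route as the paper: fix a reflective oracle $O$ from \autoref{thm:existence-reflective-oracles}, define $\tilde O_k$ by rounding $O$ onto the $2^{-k}$-grid so that each $\tilde O_k$ approximates $O$, and invoke \autoref{lem:partially-reflective} for $k$-partial reflectivity. The only difference is that the paper rounds each query directly to the nearest grid point and asserts the extension property ``by construction,'' whereas you verify it explicitly via the centred candidate window — a harmless elaboration of the same argument.
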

\begin{proof}
By \autoref{thm:existence-reflective-oracles}
there is a reflective oracle $O$.
For every $k$, there is a canonical $k$-partial oracle $\tilde O_k$ that approximates $O$:
restrict $O$ to the first $k$ queries and for any such query $q$
pick the value in the $2^{-k}$-grid which is closest to $O(q)$.
By construction, $\tilde O_{k+1}$ extends $\tilde O_k$
and by \autoref{lem:partially-reflective}, each $\tilde O_k$ is $k$-partially reflective.
\end{proof}

\begin{lemma}\label{lem:extending-oracles}
\index{oracle!partial}
If the $k+1$-partial oracle $\tilde O_{k+1}$ extends
the $k$-partial oracle $\tilde O_k$, then
$\lambda_T^{\tilde O_{k+1}}(1 \mid x) \geq \lambda_T^{\tilde O_k}(1 \mid x)$ and
$\lambda_T^{\tilde O_{k+1}}(0 \mid x) \geq \lambda_T^{\tilde O_k}(0 \mid x)$
for all $x \in \{ 0, 1 \}^*$ and all $T \in \mathcal{T}$.
\end{lemma}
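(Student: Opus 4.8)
\textbf{Proof plan for Lemma~\ref{lem:extending-oracles}.}
The plan is to argue exactly as in the proof of \autoref{lem:approximating-an-oracle}, but comparing the two \emph{partial} executions $T^{\tilde O_k}$ and $T^{\tilde O_{k+1}}$ rather than a partial execution against a full one. The key observation is that the machine $T^{\tilde O_{k+1}}$ is at least as permissive as $T^{\tilde O_k}$ in every respect that can cause probability mass to be lost. First I would recall the definition of $T^{\tilde O}$: when run with a $k$-partial oracle it is aborted after $k$ steps, aborted when it queries $q_j$ with $j > k$, and, on a query $q_i$ with $i \le k$, it returns $1$ with probability $\tilde O_k(q_i) - 2^{-k-1}$, returns $0$ with probability $1 - \tilde O_k(q_i) - 2^{-k-1}$, and otherwise halts (loses mass $2^{-k}$). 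I would then compare this, step by step, with the execution under $\tilde O_{k+1}$.

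There are three sources of lost probability mass, and I would check that each is weakly smaller under $\tilde O_{k+1}$. (i) The step bound increases from $k$ to $k+1$, so fewer runs are aborted for exceeding it. (ii) The set of ``too-large-index'' queries shrinks from $\{q_j : j > k\}$ to $\{q_j : j > k+1\}$, so fewer runs are aborted for that reason; in particular the queries $q_1,\dots,q_k$ that were answered substantively under $\tilde O_k$ are still answered substantively under $\tilde O_{k+1}$. (iii) On a query $q_i$ with $i \le k$, the probability of hitting the ``halt'' branch is $2^{-k}$ under $\tilde O_k$ and $2^{-k-1}$ under $\tilde O_{k+1}$, which is smaller; and the ``extends'' hypothesis $|\tilde O_k(q_i) - \tilde O_{k+1}(q_i)| \le 2^{-k-1}$ guarantees that the return-$1$ probability $\tilde O_{k+1}(q_i) - 2^{-k-1}$ is at least $\tilde O_k(q_i) - 2^{-k}$ (the value it had under $\tilde O_k$), and symmetrically for return-$0$. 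Hence at every point in the execution tree the distribution induced by $T^{\tilde O_{k+1}}$ stochastically dominates that of $T^{\tilde O_k}$ on both output symbols, and threading this through the chain rule \eqref{eq:chain-rule} gives $\lambda_T^{\tilde O_{k+1}}(1 \mid x) \ge \lambda_T^{\tilde O_k}(1 \mid x)$ and $\lambda_T^{\tilde O_{k+1}}(0 \mid x) \ge \lambda_T^{\tilde O_k}(0 \mid x)$ for all $x$ and all $T$.

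The main obstacle is making the coupling argument for point (iii) precise: one must simultaneously track both the increased return-to-symbol probabilities and the decreased halt probability on each oracle call, and confirm that the worst case under $\tilde O_{k+1}$ (oracle answer pushed $2^{-k-1}$ away from $\tilde O_k$'s value in the unfavorable direction) still dominates the behavior under $\tilde O_k$. The cleanest way is to exhibit an explicit coupling: on each call to $q_i$, couple the two coin flips so that whenever $T^{\tilde O_k}$ returns $1$ (resp.\ $0$), so does $T^{\tilde O_{k+1}}$, which is possible precisely because $\tilde O_{k+1}(q_i) - 2^{-k-1} \ge \tilde O_k(q_i) - 2^{-k-1} - 2^{-k-1} + 2^{-k-1}$ — i.e.\ the return-$1$ probability does not drop below its previous value — and likewise for return-$0$, while the remaining non-halting steps are handled by the larger step budget. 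Under this coupling $T^{\tilde O_{k+1}}$ produces every prefix that $T^{\tilde O_k}$ produces, with at least the same probability, which is the claim. I expect the rest (the chain-rule bookkeeping and the enumeration details) to be routine.
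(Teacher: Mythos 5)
Your proposal takes essentially the same route as the paper's proof: compare the two partial executions $T^{\tilde O_k}$ and $T^{\tilde O_{k+1}}$ directly and check that each source of lost probability mass (the step budget, the query-index cutoff, and the per-call answer probabilities) is weakly more favourable under $\tilde O_{k+1}$; the paper's proof is exactly this observation, with the same key inequality $\tilde O_{k+1}(q_i) - 2^{-k-1} \geq \tilde O_k(q_i) - 2^{-k}$ obtained from \autoref{def:extending-partial-oracles}, and your explicit coupling is just a more detailed rendering of its terse remark that the success probabilities of cases 2(a) and 2(b) do not decrease. One slip to repair: the inequality you offer to justify the coupling, $\tilde O_{k+1}(q_i) - 2^{-k-1} \ge \tilde O_k(q_i) - 2^{-k-1} - 2^{-k-1} + 2^{-k-1}$, simplifies to $\tilde O_{k+1}(q_i) \ge \tilde O_k(q_i)$, which neither follows from the extension hypothesis nor is needed; the correct chain is $\tilde O_{k+1}(q_i) \ge \tilde O_k(q_i) - 2^{-k-1}$, hence $\tilde O_{k+1}(q_i) - 2^{-k-1} \ge \tilde O_k(q_i) - 2^{-k}$, i.e.\ exactly the inequality you state in the preceding sentence. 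Also be aware that in step (iii) you compare the per-call probabilities $\tilde O_{k+1}(q_i) - 2^{-k-1}$ and $\tilde O_k(q_i) - 2^{-k}$, whereas your own recap of the definition (and the chapter's statement of it) attaches the penalty $2^{-k-1}$ to a $k$-partial oracle; this bookkeeping mismatch is inherited from the chapter's proof itself, so it does not count against your approach, but you should fix one convention and carry it through.
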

\begin{proof}
$T^{\tilde O_{k+1}}$ runs for one more step than $T^{\tilde O_k}$,
can answer one more query and has increased oracle precision.
Moreover, since $\tilde O_{k+1}$ extends $\tilde O_k$,
we have $|\tilde O_{k+1}(q_i) - \tilde O_k(q_i)| \leq 2^{-k-1}$, and thus
$\tilde O_{k+1}(q_i) - 2^{-k-1} \geq \tilde O_k(q_i) - 2^{-k}$.
Therefore the success to answers to the oracle calls (case 2(a) and 2(b))
will not decrease in probability.
\end{proof}

Now everything is in place to state the algorithm
that constructs a reflective oracle in the limit.
It recursively traverses a tree of partial oracles.
The tree's nodes are the partial oracles;
level $k$ of the tree contains all $k$-partial oracles.
There is an edge in the tree from the $k$-partial oracle $\tilde O_k$ to
the $i$-partial oracle $\tilde O_i$ if and only if
$i = k + 1$ and $\tilde O_i$ extends $\tilde O_k$.

For every $k$, there are only finitely many $k$-partial oracles,
since they are functions from finite sets to finite sets.
In particular, there are exactly two $1$-partial oracles (so the search tree has two roots).
Pick one of them to start with, and proceed recursively as follows.
Given a $k$-partial oracle $\tilde O_k$,
there are finitely many $(k + 1)$-partial oracles that extend $\tilde O_k$
(finite branching of the tree).
Pick one that is $(k + 1)$-partially reflective
(which can be checked in finite time).
If there is no $(k + 1)$-partially reflective extension, backtrack.

By \autoref{lem:infinite-partial-oracles}
our search tree is infinitely deep and thus
the tree search does not terminate.
Moreover, it can backtrack to each level only a finite number of times
because at each level there is only a finite number of possible extensions.
Therefore the algorithm will produce an infinite sequence of partial oracles,
each extending the previous.
Because of finite backtracking, the output eventually stabilizes on a sequence
of partial oracles $\tilde O_1, \tilde O_2, \ldots$.
By the following lemma, this sequence converges to a reflective oracle,
which concludes the proof of \autoref{thm:lc-reflective-oracle}.

\begin{lemma}\label{lem:limit}
\index{oracle!partially reflective}
\index{oracle!partial}
\index{oracle!reflective}
Let $\tilde O_1, \tilde O_2, \ldots$ be a sequence where
$\tilde O_k$ is a $k$-partially reflective $k$-partial oracle and
$\tilde O_{k+1}$ extends $\tilde O_k$ for all $k \in \mathbb{N}$.
Let $O := \lim_{k \to \infty} \tilde O_k$ be the pointwise limit.
Then
\begin{enumerate}[(a)]
\item $\lambda_T^{\tilde O_k}(1 \mid x) \to \lambda_T^O(1 \mid x)$ and
      $\lambda_T^{\tilde O_k}(0 \mid x) \to \lambda_T^O(0 \mid x)$ as $k \to \infty$
      for all $x \in \{ 0, 1 \}^*$ and all $T \in \mathcal{T}$, and
\item $O$ is a reflective oracle.
\end{enumerate}
\end{lemma}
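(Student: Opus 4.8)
The plan is to prove part (a) first; once (a) is in hand, part (b) is a short argument. Throughout, write $O := \lim_{k\to\infty}\tilde O_k$ for the pointwise limit. This limit is well defined: for each query $q_i$ the sequence $(\tilde O_k(q_i))_{k\ge i}$ has consecutive differences bounded by $2^{-k-1}$ (because $\tilde O_{k+1}$ extends $\tilde O_k$), hence is Cauchy with limit in $[0,1]$, so $O$ is a genuine oracle in the sense of \autoref{def:oracle}.

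For (a), fix $T\in\mathcal T$ and $x\in\X^*$ and consider $\lambda_T^{\tilde O_k}(1\mid x)$; the conditional on $0$ is identical. The key structural point is that $T^{\tilde O_k}$ is hard-capped at $k$ steps, so $\lambda_T^{\tilde O_k}(1\mid x)$ equals the probability that $T^{\tilde O_k}$ outputs $1$ within $k$ steps, a quantity determined by finitely much data. I would sandwich it between two bounds. \emph{Lower bound:} for each fixed $n$, only finitely many queries can be reached by $T^O$ within $n$ steps (at most $n$ per computation path, over the at most $2^n$ relevant random prefixes), so for $k$ large all of them have index $\le k$ and are not aborted; since moreover $\tilde O_k(q_i)-2^{-k-1}\to O(q_i)$ for each such $q_i$ and the artificial halting probability $2^{-k}$ per call vanishes, $P(T^{\tilde O_k}\text{ outputs }1\text{ within }n\text{ steps})\to P(T^O\text{ outputs }1\text{ within }n\text{ steps})$; taking also $k\ge n$ yields $\liminf_k\lambda_T^{\tilde O_k}(1\mid x)\ge\sup_n P(T^O\text{ outputs }1\text{ within }n\text{ steps})=\lambda_T^O(1\mid x)$. \emph{Upper bound:} couple the two $k$-step-truncated computations on a shared random tape; on each of the at most $k$ oracle calls the answer distribution under $\tilde O_k$ differs in total variation from the true $O$-answer by a term of order $2^{-k}$ (combining $|\tilde O_k(q_i)-O(q_i)|\le 2^{-k}$ with the $2^{-k}$ halting leak), so the truncated path distributions differ by $O(k2^{-k})=o(1)$, giving $\lambda_T^{\tilde O_k}(1\mid x)\le P(T^O\text{ outputs }1\text{ within }k\text{ steps})+o(1)\le\lambda_T^O(1\mid x)+o(1)$. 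Together these force $\lambda_T^{\tilde O_k}(1\mid x)\to\lambda_T^O(1\mid x)$; the monotonicity supplied by \autoref{lem:extending-oracles} is an independent confirmation that the limit exists.

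For (b), let $q_i=(T,x,p)$ and suppose $\lambda_T^O(1\mid x)>p$. By (a) there is $k_0$ with $\lambda_T^{\tilde O_k}(1\mid x)>p$ for all $k\ge k_0$. For any $k\ge\max(i,k_0)$ the query $q_i$ is among the first $k$ queries and $\tilde O_k$ is $k$-partially reflective, so \autoref{def:k-partially-reflective} forces $\tilde O_k(q_i)=1$; letting $k\to\infty$ gives $O(q_i)=1$. The case $\lambda_T^O(0\mid x)>1-p$ is symmetric and yields $O(q_i)=0$. Hence $O$ satisfies both clauses of \autoref{def:reflective-oracle}, i.e., $O$ is reflective, and it is limit computable because it is the pointwise limit of the stabilizing sequence of partial oracles produced by the tree-search algorithm described before the lemma. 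This completes the proof of \autoref{thm:lc-reflective-oracle}.

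The main obstacle I expect is making the upper-bound coupling in (a) fully rigorous: one must check that the per-call approximation error $|\tilde O_k(q_i)-O(q_i)|\le 2^{-k}$ — which is a factor of two weaker than the ``approximates'' bound of \autoref{def:approx-oracle} — together with the built-in halting probability $2^{-k}$ accumulates over the at most $k$ oracle calls of a $k$-step run to a quantity that is still $o(1)$, and that this truncated-computation total-variation comparison genuinely bounds $\lambda_T^{\tilde O_k}(1\mid x)$ above by $\lambda_T^O(1\mid x)$ up to a vanishing term; in effect this is a quantitative, limiting version of \autoref{lem:approximating-an-oracle}. Everything else is bookkeeping about the enumeration of queries and the finiteness of the configurations reachable within a bounded number of steps.
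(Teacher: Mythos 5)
Your proof is correct, and your part (b) coincides with the paper's own argument: once (a) gives $\lambda_T^{\tilde O_k}(1\mid x)\to\lambda_T^O(1\mid x)$, the $k$-partial reflectivity of $\tilde O_k$ for all large $k$ forces $\tilde O_k(T,x,p)=1$ (resp.\ $0$), and the pointwise limit inherits this. Where you genuinely differ is in (a). The paper shows $a_k:=\lambda_T^{\tilde O_k}(1\mid x)$ is monotone by \autoref{lem:extending-oracles}, bounds it above by $\lambda_T^O(1\mid x)$ by asserting that each $\tilde O_k$ approximates $O$ and invoking \autoref{lem:approximating-an-oracle}, and dispatches the lower bound in one sentence (``it cannot converge to anything strictly below $\lambda_T^O(1\mid x)$ by the definition of $T^O$''). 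You instead prove a two-sided sandwich: the lower bound via the finite-horizon argument (finitely many queries reachable in $n$ steps, per-call answer probabilities converging, the $2^{-k}$ halting leak vanishing) — which is precisely the content the paper leaves implicit — and the upper bound via a per-call coupling giving $\lambda_T^{\tilde O_k}(1\mid x)\le\lambda_T^O(1\mid x)+O(k2^{-k})$. Your route is slightly longer but more careful: as you observe, telescoping the extension bounds only yields $|\tilde O_k(q_i)-O(q_i)|\le 2^{-k}$, a factor of two short of \autoref{def:approx-oracle}, so the paper's appeal to \autoref{lem:approximating-an-oracle} needs a small patch (essentially your quantitative estimate, which tolerates the weaker bound), whereas your argument never uses the ``approximates'' relation at all. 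The paper's approach buys brevity and reuse of the two preceding lemmas; yours buys explicit error terms and robustness to the approximation constant. The coupling details you flag do go through: a disagreement other than an abort on a call of index $\le k$ has probability $O(2^{-k})$, aborts and out-of-range calls only suppress output of the $\tilde O_k$-run (the harmless direction for an upper bound), and summing over at most $k$ calls gives the $o(1)$ term.
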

\begin{proof}
First note that the pointwise limit must exists because
$|\tilde O_k(q_i) - \tilde O_{k+1}(q_i)| \leq 2^{-k-1}$
by \autoref{def:extending-partial-oracles}.
\begin{enumerate}[(a)]
\item Since $\tilde O_{k+1}$ extends $\tilde O_k$,
    each $\tilde O_k$ approximates $O$.
    Let $x \in \{ 0, 1 \}^*$ and $T \in \mathcal{T}$ and
    consider the sequence $a_k := \lambda_T^{\tilde O_k}(1 \mid x)$ for $k \in \mathbb{N}$.
    By \autoref{lem:extending-oracles},
    $a_k \leq a_{k+1}$, so the sequence is monotone increasing.
    By \autoref{lem:approximating-an-oracle},
    $a_k \leq \lambda_T^O(1 \mid x)$, so the sequence is bounded.
    Therefore it must converge.
    But it cannot converge to anything strictly below $\lambda_T^O(1 \mid x)$
    by the definition of $T^O$.
\item By definition, $O$ is an oracle; it remains to show that $O$ is reflective.
    Let $q_i = (T, x, p)$ be some query.
    If $p < \lambda_T^O(1 \mid x)$, then by (a)
    there is a $k$ large enough such that
    $p < \lambda_T^{\tilde O_t}(1 \mid x)$ for all $t \geq k$.
    For any $t \geq \max \{ k, i \}$,
    we have $\tilde O_t(T, x, p) = 1$
    since $\tilde O_t$ is $t$-partially reflective.
    Therefore $1 = \lim_{k \to \infty} \tilde O_k(T, x, p) = O(T, x, p)$.
    The case $1 - p < \lambda_T^O(0 \mid x)$ is analogous.
    \qedhere
\end{enumerate}
\end{proof}

\section{A Grain of Truth}
\label{sec:a-grain-of-truth}

\subsection{Reflective Bayesian Agents}
\label{ssec:reflective-Bayesian-agents}

Fix $O$ to be a reflective oracle.
From now on,
we assume that the action space $\A := \{ \alpha, \beta \}$ is binary.
We can treat computable measures over binary strings as environments:
the environment $\nu$ corresponding to
a probabilistic Turing machine $T \in \mathcal{T}$
is defined by
\[
   \nu(e_t \mid \ae_{<t} a_t)
:= \overline\lambda_T^O(y \mid x)
 = \prod_{i=1}^k \overline\lambda_T^O(y_i \mid x y_1 \ldots y_{i-1})
\]
where $y_{1:k}$ is a binary encoding of $e_t$ and
$x$ is a binary encoding of $\ae_{<t} a_t$.
The actions $a_{1:\infty}$ are only \emph{contextual},
and not part of the environment distribution.
We define $\nu(e_{<t} \dmid a_{<t})$ analogously to \eqref{eq:ccs}.

Let $T_1, T_2, \ldots$ be an enumeration of
all probabilistic Turing machines in $\mathcal{T}$ that use an oracle.
We define the \emph{class of reflective environments}
\[
   \Mrefl^O
:= \left\{ \overline\lambda_{T_1}^O, \overline\lambda_{T_2}^O, \ldots \right\}.
\]
This is the class of all environments computable on
a probabilistic Turing machine with reflective oracle $O$,
that have been completed from semimeasures to measures using $O$.

Analogously to \autoref{ssec:AIXI},
we define a Bayesian mixture over the class $\Mrefl^O$.
Let $w \in \Delta\Mrefl^O$ be
a lower semicomputable prior probability distribution on $\Mrefl^O$.
Possible choices for the prior include the \emph{Solomonoff prior}%
\index{Solomonoff!prior}
$w\big(\overline\lambda_T^O\big) := 2^{-K(T)}$, where $K(T)$ denotes
the length of the shortest input to some universal Turing machine that encodes $T$.
We define the corresponding Bayesian mixture
\begin{equation}\label{eq:Bayes-mixture}
   \xi(e_t \mid \ae_{<t} a_t)
:= \sum_{\nu \in \Mrefl^O} w(\nu \mid \ae_{<t}) \nu(e_t \mid \ae_{<t} a_t)
\end{equation}
where $w(\nu \mid \ae_{<t})$ is the (renomalized) posterior,
\begin{equation}\label{eq:posterior}
   w(\nu \mid \ae_{<t})
:= w(\nu) \frac{\nu(e_{<t} \dmid a_{<t})}{\overline\xi(e_{<t} \dmid a_{<t})}.
\end{equation}
The mixture $\xi$ is lower semicomputable on an oracle Turing machine
because the posterior $w(\,\cdot \mid \ae_{<t})$ is lower semicomputable.
Hence there is an oracle machine $T$ such that $\xi = \lambda_T^O$.
We define its completion $\overline\xi := \overline\lambda_T^O$
as the completion of $\lambda_T^O$.
This is the distribution that is used to compute the posterior.
There are no cyclic dependencies since
$\overline\xi$ is called on the shorter history $\ae_{<t}$.
We arrive at the following statement.

\begin{proposition}[Bayes is in the Class]
\label{prop:Bayes-is-in-the-class}
\index{Bayesian!mixture}\index{computable!reflective-oracle}
$\overline\xi \in \Mrefl^O$.
\end{proposition}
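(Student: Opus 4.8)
The plan is to reduce the statement to the single fact that $\xi$ from \eqref{eq:Bayes-mixture}, viewed as a conditional semimeasure, is lower semicomputable relative to the reflective oracle $O$. Granting this, the correspondence between probabilistic oracle machines in $\mathcal{T}$ and semimeasures with oracle-lower-semicomputable conditionals (the relativization of the facts recalled in \autoref{ssec:reflective-oracles-def}) produces a machine $T \in \mathcal{T}$ with $\lambda_T^O = \xi$; one builds $T$ by the standard bit-by-bit sampling construction, enumerating better and better lower bounds on $\xi(\cdot \mid x)$ and outputting a bit as soon as a uniform random variable has been exceeded. By the definition of the completion, $\overline\xi = \overline\lambda_T^O$, and since $\Mrefl^O = \{ \overline\lambda_{T_1}^O, \overline\lambda_{T_2}^O, \ldots \}$ runs over the completions of \emph{all} probabilistic oracle machines, $\overline\xi \in \Mrefl^O$ is then immediate. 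So the proposition is really a corollary of the construction, and the proof will be short.

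The first step I would carry out is to show, by induction on the history length $t$, that the posterior weights $w(\cdot \mid \ae_{<t})$ from \eqref{eq:posterior} are lower semicomputable relative to $O$, uniformly in $\nu \in \Mrefl^O$. For $t = 1$ this is the assumption that the prior $w$ is lower semicomputable. For the inductive step, $w(\nu \mid \ae_{<t}) = w(\nu)\, \nu(e_{<t} \dmid a_{<t}) / \overline\xi(e_{<t} \dmid a_{<t})$: each conditional factor of $\nu(e_{<t} \dmid a_{<t})$ is reflective-oracle-computable (obtained from the binary search on $O$ described after \autoref{def:reflective-oracle-computable}, since $\nu = \overline\lambda_{T_i}^O$), so the finite product is; the normalizer $\overline\xi(e_{<t} \dmid a_{<t})$ is the completion of $\xi$ evaluated on the strictly shorter history $\ae_{<t}$, which is available by the induction hypothesis and bottoms out at $\overline\xi(\epsilon) = 1$. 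A lower semicomputable quantity divided by a computable positive one is lower semicomputable (the relativization of \autoref{lem:computable-reals}\,(d)); on histories where the normalizer vanishes the posterior may be fixed arbitrarily. Finally, $\xi(e_t \mid \ae_{<t} a_t) = \sum_{\nu \in \Mrefl^O} w(\nu \mid \ae_{<t})\, \nu(e_t \mid \ae_{<t} a_t)$ is a countable sum of terms lower semicomputable relative to $O$ uniformly in $\nu$, hence lower semicomputable relative to $O$.

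The step I expect to require the most care is the self-referential structure of the normalizer: in \eqref{eq:posterior} the quantity $\overline\xi$ is the completion of $\xi$ itself, so the machine $T$ with $\lambda_T^O = \xi$, when run on $\ae_{<t} a_t$, must query the oracle about \emph{itself} on the histories $\ae_{<s} a_s$ with $s < t$. This is precisely the kind of self-reference that reflective oracles are designed to absorb (cf.\ \autoref{ex:diagonalization-for-reflective-oracles}), and it is well-founded because each such query strictly decreases the history length, so the dependency graph is acyclic and there is no infinite regress. The only other thing to verify is that passing to the countable mixture preserves lower semicomputability relative to $O$, which is routine: the summands are uniformly lower semicomputable and the prior's tail mass $\sum_{i > k} w(\nu_i)$ is at most $1$, so the partial sums converge monotonically from below and can be enumerated. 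I expect no genuine difficulty beyond this bookkeeping; the real purpose of the proposition is to supply the grain of truth needed in what follows.
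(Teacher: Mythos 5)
Your proposal is correct and follows essentially the same route as the paper: the paper's (in-text) argument is precisely that the posterior, and hence $\xi$, is lower semicomputable relative to $O$ with no cyclic dependency because $\overline\xi$ is only invoked on strictly shorter histories, so $\xi = \lambda_T^O$ for some $T \in \mathcal{T}$ and $\overline\xi := \overline\lambda_T^O \in \Mrefl^O$ by definition of the class. Your induction on history length and the remark about reflective oracles absorbing the self-reference just make explicit what the paper states in one line.
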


Moreover, since $O$ is reflective,
we have that $\overline\xi$ dominates all environments $\nu \in \Mrefl^O$:
\begingroup
\allowdisplaybreaks
\begin{align*}
      \overline\xi(e_{1:t} \dmid a_{1:t})
&=    \overline\xi(e_t \mid \ae_{<t} a_t) \overline\xi(e_{<t} \dmid a_{<t}) \\
&\geq \xi(e_t \mid \ae_{<t} a_t) \overline\xi(e_{<t} \mid a_{<t}) \\
&=    \overline\xi(e_{<t} \dmid a_{<t})
      \sum_{\nu \in \Mrefl^O} w(\nu \mid \ae_{<t}) \nu(e_t \mid \ae_{<t} a_t) \\
&=    \overline\xi(e_{<t} \dmid a_{<t}) \sum_{\nu \in \Mrefl^O} w(\nu) \frac{\nu(e_{<t} \dmid a_{<t})}{\overline\xi(e_{<t} \dmid a_{<t})} \nu(e_t \mid \ae_{<t} a_t) \\
&=    \sum_{\nu \in \Mrefl^O} w(\nu) \nu(e_{1:t} \dmid a_{1:t}) \\
&\geq w(\nu) \nu(e_{1:t} \dmid a_{1:t})
\end{align*}
\endgroup

Therefore we get on-policy value convergence according to
\autoref{cor:Bayes-on-policy-value-convergence}:
for all $\mu \in \Mrefl^O$ and all policies $\pi$
\begin{equation}\label{eq:Mrefl-on-policy-value-convergence}
V^\pi_{\overline\xi}(\ae_{<t}) - V^\pi_\mu(\ae_{<t}) \to 0
\text{ as $t \to \infty$ $\mu^\pi$-almost surely.}
\end{equation}

\subsection{Reflective-Oracle-Computable Policies}
\label{ssec:reflective-oracle-computable-policies}

This subsection is dedicated to the following result
that was previously stated by \citet[Alg.~6]{FST:2015} but not proved.
It contrasts results on
arbitrary semicomputable environments where optimal policies are
not limit computable~(see \autoref{sec:complexity-aixi}).

\begin{theorem}[Optimal Policies are Oracle Computable]
\label{thm:optimal-policies-are-oracle-computable}
\index{computable!reflective-oracle}
\index{policy!optimal}
For every $\nu \in \Mrefl^O$,
there is a $\nu$-optimal (stochastic) policy $\pi^*_\nu$
that is reflective-oracle-computable.
\end{theorem}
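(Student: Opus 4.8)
The plan is to exhibit a reflective-oracle-computable procedure that, given a history $\ae_{<t}$, computes a distribution over the two actions that attains the optimal $\nu$-value. The key tool is that the optimal value function $V^*_\nu$ is itself approximable with oracle access, and that a single reflective oracle call can resolve the argmax (including ties) in a self-consistent way. First I would observe that for $\nu \in \Mrefl^O$ and the computable discount function $\gamma$, the truncated optimal value $V^{*,m}_\nu(\ae_{<t})$ from \eqref{eq:V-explicit} is computable on a probabilistic machine with oracle $O$: it is a finite expectimax expression whose only non-elementary ingredients are the conditional probabilities $\nu(e_i \mid \ae_{<i} a_i) = \overline\lambda_{T}^O(\cdot \mid \cdot)$, and these are reflective-oracle-computable by the binary-search completion described in \autoref{ssec:reflective-oracles-def}. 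By \autoref{lem:truncated-values}, truncating at the $\eps$-effective horizon $m = t + H_t(\eps)$ gives $|V^{*,m}_\nu(\ae_{<t}) - V^*_\nu(\ae_{<t})| \le \eps$, and $H_t(\eps)$ is computable since $\gamma$ is computable by \assref{ass:aixi}{ass:gamma-computable}. Hence there is an oracle machine that, on input $(\ae_{<t}, k)$, outputs a rational within $2^{-k}$ of $V^*_\nu(\ae_{<t})$; call this machine's induced (Cauchy-named) value $v^*_\nu(\ae_{<t})$.

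The core step is to define the policy $\pi^*_\nu$ via a reflective oracle query that compares $v^*_\nu(\ae_{<t}\alpha)$ and $v^*_\nu(\ae_{<t}\beta)$. Concretely, I would build a probabilistic oracle machine $T_\nu$ that, on input $\ae_{<t}$, runs the two value approximations in parallel and outputs $1$ as soon as it detects $v^*_\nu(\ae_{<t}\alpha) > v^*_\nu(\ae_{<t}\beta)$ by a confirmed margin, outputs $0$ as soon as it detects the reverse, and otherwise never halts. Then I define the action distribution by a single query: $\pi^*_\nu(\alpha \mid \ae_{<t}) := \overline{\lambda}^O_{T_\nu}(1 \mid \ae_{<t})$, i.e. the completion of $T_\nu$'s output distribution. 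If one value strictly exceeds the other, $T_\nu$ halts deterministically with the correct bit, so $\pi^*_\nu$ deterministically picks the optimal action. In the tie case $v^*_\nu(\ae_{<t}\alpha) = v^*_\nu(\ae_{<t}\beta)$, the machine $T_\nu$ never halts, so its completion $\overline{\lambda}^O_{T_\nu}$ assigns the non-halting mass according to the oracle — any value in $[0,1]$ is fine, because both actions attain the optimal value. In either case $\pi^*_\nu(\cdot \mid \ae_{<t})$ is supported on $\argmax_{a} V^*_\nu(\ae_{<t}a)$, which by \eqref{eq:argmax} characterizes $\nu$-optimal policies; combined with the Bellman equation this yields $V^{\pi^*_\nu}_\nu = V^*_\nu$ on all histories, so $\pi^*_\nu$ is $\nu$-optimal, and it is reflective-oracle-computable by construction.

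The main obstacle I anticipate is the tie case and, more subtly, the comparison of two real numbers given only as Cauchy sequences — this comparison is only semidecidable, which is exactly why $T_\nu$ may fail to halt. The point to get right is that this is harmless here precisely because we are allowed to defer to the reflective oracle: the oracle's freedom to randomize on non-halting mass (\autoref{fig:reflective-oracle}) is what lets us "complete" $T_\nu$ into a genuine policy without deciding the tie. A secondary technical care point is ensuring $T_\nu$ itself is a legal element of $\mathcal{T}$ and that its self-reference through $O$ introduces no circularity: the value approximations for $V^*_\nu(\ae_{<t}a)$ only involve $\nu$ and $\gamma$, not $\pi^*_\nu$, so $T_\nu$ queries $O$ only about the fixed machine computing $\nu$'s conditionals and never about itself, avoiding any diagonalization issue. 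A final detail is the extension to $n$ actions beyond the binary $\A = \{\alpha,\beta\}$ fixed in this section: this is routine, handled by iterating pairwise reflective comparisons, but I would only remark on it rather than carry it out.
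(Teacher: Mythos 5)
Your proposal is correct in substance and rests on the same two pillars as the paper's proof: reflective-oracle-computability of $V^*_\nu$ (the paper isolates this as \autoref{lem:optimal-value-reflective-oracle-computable}, proved via monotonicity in $m$ and the tail bound $\Gamma_{m+1}$ rather than your effective-horizon truncation via \autoref{lem:truncated-values} — both give the same content), and the idea of letting the reflective oracle absorb the undecidable tie between $V^*_\nu(\ae_{<t}\alpha)$ and $V^*_\nu(\ae_{<t}\beta)$. The gadget differs, though. The paper builds a machine $T$ whose acceptance probability is \emph{exactly} $\bigl(V^*_\nu(\ae_{<t}\alpha)-V^*_\nu(\ae_{<t}\beta)+1\bigr)/2$ and defines the policy by a single query $O(T,\ae_{<t},1/2)$; reflectivity then forces the correct answer whenever the inequality is strict, with no need for the machine itself to ever certify a comparison. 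You instead build a semidecision machine $T_\nu$ that halts with the correct bit only upon a \emph{confirmed margin} and diverges on ties, and take the oracle completion $\overline\lambda^O_{T_\nu}(\cdot\mid\ae_{<t})$ as the action distribution. This is fine at the paper's level of rigor, but note that it leans on a stronger reading of \autoref{def:reflective-oracle-computable}: the completed conditionals $\overline\lambda_T^O(\cdot\mid x)$ are \emph{expectations} over the oracle's randomized answers in the binary search of \autoref{ssec:reflective-oracles-def}, so a single run yields a random point in the non-halting-mass gap rather than a certified enclosure; if your value approximations are only probabilistically accurate, $T_\nu$ could with positive probability confirm the wrong margin, and then the completion would place positive mass on a strictly suboptimal action, violating \eqref{eq:argmax}. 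The paper's single-threshold-query construction is immune to exactly this failure mode, because the comparison is performed by the oracle's reflectivity at $p=1/2$ rather than by the machine's own estimates — if you adopt your route, you should state explicitly that the approximations of $V^*_\nu$ are taken to be certified (as the paper's lemma implicitly asserts), or fall back on the acceptance-probability encoding.
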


Note that even though deterministic optimal policies always exist,
those policies are typically not reflective-oracle-computable.

To prove \autoref{thm:optimal-policies-are-oracle-computable}
we need the following lemma.

\begin{lemma}[Reflective-Oracle-Computable Optimal Value Function]
\label{lem:optimal-value-reflective-oracle-computable}
\index{computable!reflective-oracle}
\index{value function}
For every environment $\nu \in \Mrefl^O$
the optimal value function $V^*_\nu$ is reflective-oracle-computable.
\end{lemma}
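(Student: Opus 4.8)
The goal is to show that $V^*_\nu$ is reflective-oracle-computable, i.e., computable on a probabilistic Turing machine with access to the reflective oracle $O$. The strategy is to express $V^*_\nu$ as a limit of finite-horizon optimal values $V^{*,m}_\nu$, which are explicitly computable from $\nu$ by the max-sum formula \eqref{eq:V-explicit}, and to control the truncation error using \autoref{lem:truncated-values}. Since $\nu \in \Mrefl^O$ is a reflective-oracle-computable measure and $\gamma$ is computable (\assref{ass:aixi}{ass:gamma-computable}), each $V^{*,m}_\nu(\ae_{<t})$ is a finite expectimax expression over reflective-oracle-computable quantities, hence reflective-oracle-computable uniformly in $m$; and $|V^{*,m}_\nu(\ae_{<t}) - V^*_\nu(\ae_{<t})| \le \Gamma_m/\Gamma_t$ by \autoref{lem:truncated-values}, which we can drive below any desired rational $\eps$ by choosing $m = t + H_t(\eps)$ (the effective horizon is computable from $\gamma$).

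**Key steps in order.** First I would recall that for $\nu$ a measure, $V^\pi_\nu = W^\pi_\nu$ (the recursive and iterative value functions coincide since the history is infinite with probability one), so the explicit formula \eqref{eq:V-explicit} for $V^{*,m}_\nu$ applies without semimeasure complications. Second, I would observe that $\nu(e_i \mid \ae_{<i} a_i) = \overline\lambda_T^O(\cdot)$ is computable to arbitrary precision on the oracle machine — more precisely, using the binary-search-on-$p$ completion procedure described in \autoref{ssec:reflective-oracles-def}, each conditional probability can be approximated to precision $2^{-n}$ in finite time with oracle calls. Third, I would note that \eqref{eq:V-explicit} is a \emph{finite} alternating max/sum over $\ae_{t:m-1}$ — finitely many terms because $\A$ and $\E$ are finite (\assref{ass:aixi}{ass:finite-actions-and-percepts}) — of products of these conditionals with the computable discount weights $\gamma_k r_k$, divided by $\Gamma_t\, \nu(e_{<t}\dmid a_{<t})$ which is positive and reflective-oracle-computable. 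Hence $V^{*,m}_\nu$ is reflective-oracle-computable, and the approximation is uniform in $(\ae_{<t}, m)$. Fourth, given a desired precision $\eps$ and history $\ae_{<t}$, I compute $m := t + H_t(\eps/2)$ (finite, since $\gamma$ is computable so $H_t$ is), approximate $V^{*,m}_\nu(\ae_{<t})$ to within $\eps/2$, and output the result; by \autoref{lem:truncated-values} this is within $\eps$ of $V^*_\nu(\ae_{<t})$. Formally, this shows $V^*_\nu$ is the limit of a reflective-oracle-computable sequence with a computable modulus of convergence, hence reflective-oracle-computable in the sense of \autoref{def:reflective-oracle-computable}.

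**Main obstacle.** The delicate point is not the truncation argument — that is routine given the lemmas already in the excerpt — but rather handling the fact that the oracle $O$ is \emph{probabilistic}, so "computing" $\nu(e_i \mid \ae_{<i}a_i)$ via the binary search on the crossover point $p^*$ only recovers $\overline\lambda_T^O(\cdot) = \mathbb{E}[p^*]$ in expectation over the oracle's randomness. I need to argue that one can nonetheless approximate this expectation to within any rational $2^{-n}$ with high confidence, or better, that the relevant notion of reflective-oracle-computability tolerates this; the cleanest route is to observe that a probabilistic oracle machine computing $V^*_\nu$ is permitted to be randomized, and that the randomness can be averaged out by repeated sampling of the binary-search procedure, invoking a Chernoff-type concentration so that the finite-sample mean of $p^*$ lies within $2^{-n}$ of $\overline\lambda_T^O$ with probability arbitrarily close to $1$. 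One must check that the error from each of the finitely many conditional-probability approximations, propagated through the finite expectimax in \eqref{eq:V-explicit}, stays controlled — a straightforward but slightly tedious continuity/Lipschitz estimate on products and the bounded division (the denominator is bounded below along the realized history). I would state this propagation as a short sublemma rather than grinding through it, and cite \autoref{lem:computable-reals}-style reasoning adapted to the oracle setting.
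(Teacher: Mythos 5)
Your overall route is the same as the paper's: write the optimal value via the explicit expectimax formula \eqref{eq:V-explicit}, observe that for fixed $m$ every ingredient is reflective-oracle-computable, and kill the tail with the computable bound $\Gamma_m/\Gamma_t$ (the paper phrases this through monotonicity in $m$ plus $\Gamma_{m+1}\to 0$, following \autoref{cor:complexity-aimu}, while you truncate at the effective horizon via \autoref{lem:truncated-values}; these are the same argument). The one place where you genuinely diverge is your treatment of the ``main obstacle,'' and there your proposed fix does not do the job.

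The issue is that Monte-Carlo estimation of $\overline\lambda_T^O(1\mid x)=\EE[p^*_x]$ with a Chernoff bound only yields approximations that are correct \emph{with high probability}. That is strictly weaker than what the lemma is used for: the paper's proof concludes that one can enumerate rationals above and below $V^*_\nu$ (certified two-sided approximation), and \autoref{thm:optimal-policies-are-oracle-computable} then needs a machine $T$ whose output probability is \emph{exactly} $\bigl(V^*_\nu(\ae_{<t}\alpha)-V^*_\nu(\ae_{<t}\beta)+1\bigr)/2$; building such a machine by comparing a uniform random number with successive approximations requires a certified shrinking bracket around the true value, not estimates that fail with probability $\delta$ at each stage. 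The correct resolution uses reflectivity itself rather than sampling: the completion $\overline\lambda_T^O(\,\cdot\mid x)$ is the output distribution of an oracle machine $T'$ (run the binary search on $T$ and compare with a uniform random number) that halts with probability one, so $\lambda_{T'}^O(1\mid x)+\lambda_{T'}^O(0\mid x)=1$ and by \autoref{def:reflective-oracle} the oracle's answers to queries $(T',x,p)$ are \emph{deterministic} for every rational $p$ other than the single crossover point. A binary search on $p$ with such queries therefore maintains, with certainty, an interval of width $2^{-k}$ containing $\overline\lambda_T^O(1\mid x)$ -- no concentration argument needed. With the conditionals of $\nu$ deterministically approximable in this sense, your remaining steps (finite arithmetic as in a relativized \autoref{lem:computable-reals}, positivity of the denominator, computable discounts, and the tail/effective-horizon bound) go through and reproduce the paper's proof.
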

\begin{proof}
This proof follows the proof of \autoref{cor:complexity-aimu}.
We write the optimal value explicitly as in \eqref{eq:V-explicit}.
For a fixed $m$,
all involved quantities are reflective-oracle-computable.
Moreover, this quantity is monotone increasing in $m$ and
the tail sum from $m+1$ to $\infty$ is bounded by $\Gamma_{m+1}$
which is computable according to \assref{ass:aixi}{ass:gamma-computable}
and converges to $0$ as $m \to \infty$.
Therefore we can enumerate all rationals above and below $V^*_\nu$.
\end{proof}

\begin{proof}[Proof of \autoref{thm:optimal-policies-are-oracle-computable}]
According to \autoref{lem:optimal-value-reflective-oracle-computable}
the optimal value function $V^*_\nu$ is reflective-oracle-computable.
Hence there is a probabilistic Turing machine $T$ such that
\[
  \lambda_T^O(1 \mid \ae_{<t})
= \big( V^*_\nu(\ae_{<t} \alpha) - V^*_\nu(\ae_{<t} \beta) + 1 \big) / 2.
\]
We define the policy
\[
\pi(\ae_{<t}) :=
\begin{cases}
\alpha &\text{if } O(T, \ae_{<t}, 1/2) = 1, \text{ and} \\
\beta  &\text{if } O(T, \ae_{<t}, 1/2) = 0 \\
\end{cases}
\]
This policy is stochastic because the answer of the oracle $O$ is stochastic.

It remains to show that $\pi$ is a $\nu$-optimal policy.
If $V^*_\nu(\ae_{<t} \alpha) > V^*_\nu(\ae_{<t} \beta)$,
then $\lambda_T^O(1 \mid \ae_{<t}) > 1/2$,
thus $O(T, \ae_{<t}, 1/2) = 1$ since $O$ is reflective,
and hence $\pi$ takes action $\alpha$.
Conversely, if $V^*_\nu(\ae_{<t} \alpha) < V^*_\nu(\ae_{<t} \beta)$,
then $\lambda_T^O(1 \mid \ae_{<t}) < 1/2$,
thus $O(T, \ae_{<t}, 1/2) = 0$ since $O$ is reflective,
and hence $\pi$ takes action $\beta$.
Lastly, if $V^*_\nu(\ae_{<t} \alpha) = V^*_\nu(\ae_{<t} \beta)$,
then both actions are optimal and
thus it does not matter which action is returned by policy $\pi$.
(This is the case where the oracle may randomize.)
\end{proof}

\subsection{Solution to the Grain of Truth Problem}
\label{ssec:solution-to-grain-of-truth-problem}

Together,
\autoref{prop:Bayes-is-in-the-class} and
\autoref{thm:optimal-policies-are-oracle-computable}
provide the necessary ingredients to solve the grain of truth problem%
~(\autoref{prob:grain-of-truth}).

\begin{corollary}[Solution to the Grain of Truth Problem]
\label{cor:solution-to-grain-of-truth}\index{grain of truth!problem}
For every lower semicomputable prior $w \in \Delta\Mrefl^O$
the Bayes optimal policy $\pi^*_{\overline\xi}$ is reflective-oracle-computable
where $\xi$ is the Bayes-mixture corresponding to $w$
defined in \eqref{eq:Bayes-mixture}.
\end{corollary}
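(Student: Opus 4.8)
The plan is to obtain the corollary as a direct composition of the two preceding results: \autoref{prop:Bayes-is-in-the-class}, which places the completed Bayesian mixture $\overline\xi$ inside the class $\Mrefl^O$, and \autoref{thm:optimal-policies-are-oracle-computable}, which guarantees a reflective-oracle-computable optimal policy for every member of $\Mrefl^O$. Since the Bayes optimal policy $\pi^*_{\overline\xi}$ is by definition an optimal policy for the environment $\overline\xi$, instantiating the latter theorem at $\nu := \overline\xi$ yields the claim.

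First I would verify the hypothesis of \autoref{prop:Bayes-is-in-the-class} for the given prior. Because $w$ is lower semicomputable and each $\nu \in \Mrefl^O$ is reflective-oracle-computable, the posterior weights $w(\nu \mid \ae_{<t})$ from \eqref{eq:posterior} are lower semicomputable on an oracle machine, hence so is the mixture $\xi$ defined in \eqref{eq:Bayes-mixture}; its completion $\overline\xi = \overline\lambda_T^O$ is then reflective-oracle-computable, i.e.\ $\overline\xi \in \Mrefl^O$. I would also flag here that there is no circularity: the posterior at time $t$ only invokes $\overline\xi$ on the strictly shorter history $\ae_{<t}$, so the recursion defining $\overline\xi$ is well-founded.

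Second I would apply \autoref{thm:optimal-policies-are-oracle-computable} with $\nu := \overline\xi \in \Mrefl^O$. This produces a (stochastic) $\overline\xi$-optimal policy that is reflective-oracle-computable; unwinding the construction in the proof of that theorem, the policy queries the oracle about a probabilistic Turing machine encoding the value differences $V^*_{\overline\xi}(\ae_{<t}\alpha) - V^*_{\overline\xi}(\ae_{<t}\beta)$, which is itself reflective-oracle-computable by \autoref{lem:optimal-value-reflective-oracle-computable} (using \assref{ass:aixi}{ass:gamma-computable} to truncate the horizon). The resulting policy is exactly $\pi^*_{\overline\xi}$, completing the proof.

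The main obstacle does not lie in this corollary itself — it is essentially a bookkeeping composition — but in the machinery it rests on: \autoref{prop:Bayes-is-in-the-class} crucially uses that $O$ is reflective so that $\overline\xi$ dominates every $\nu \in \Mrefl^O$ and that semimeasures are completed consistently, and \autoref{thm:optimal-policies-are-oracle-computable} relies on a reflective oracle's freedom to randomize on argmax ties. Combined with the limit computability of a suitable reflective oracle (\autoref{thm:lc-reflective-oracle}), this additionally shows that $\pi^*_{\overline\xi}$ can be computationally approximated, which is the reason for phrasing the solution in terms of reflective-oracle-computability rather than as a nonconstructive existence statement.
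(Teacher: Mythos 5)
Your proposal is correct and matches the paper's own proof, which derives the corollary exactly by combining \autoref{prop:Bayes-is-in-the-class} with \autoref{thm:optimal-policies-are-oracle-computable} instantiated at $\nu := \overline\xi$. The additional bookkeeping you supply (well-foundedness of the posterior recursion, the role of the reflective oracle in tie-breaking) is consistent with the paper's surrounding discussion and does not change the argument.
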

\begin{proof}
From \autoref{prop:Bayes-is-in-the-class}
and \autoref{thm:optimal-policies-are-oracle-computable}.
\end{proof}

Hence the environment class $\Mrefl^O$ contains
any reflective-oracle-computable modification
of the Bayes optimal policy $\pi^*_{\overline\xi}$.
In particular,
this includes computable multi-agent environments
that contain other Bayesian agents over the class $\Mrefl^O$.
So any Bayesian agent over the class $\Mrefl^O$ has a grain of truth
even though the environment may contain other Bayesian agents
\emph{of equal power}.
We proceed to sketch the implications for multi-agent environments
in the next section.

\section{Multi-Agent Environments}
\label{sec:multi-agent-environments}

In a \emph{multi-agent environment}\index{multi-agent environment}
there are $n$ agents
each taking sequential actions from the finite action space $\A$.
In each time step $t = 1, 2, \ldots$,
the environment receives action $a_t^i$ from agent $i$ and outputs
$n$ percepts $e_t^1, \ldots, e_t^n \in \E$, one for each agent.
Each percept $e_t^i = (o_t^i, r_t^i)$ contains
an observation $o_t^i$ and a reward $r_t^i \in [0, 1]$.
Importantly, agent $i$ only sees
its own action $a_t^i$ and its own percept $e_t^i$
(see \autoref{fig:multi-agent-model}).
We use the shorthand notation $a_t := (a_t^1, \ldots, a_t^n)$ and
$e_t := (e_t^1, \ldots, e_t^n)$ and denote
$\ae_{<t}^i = a_1^i e_1^i \ldots a_{t-1}^i e_{t-1}^i$ and
$\ae_{<t} = a_1 e_1 \ldots a_{t-1} e_{t-1}$.
Formally, multi-agent environments are defined as follows.

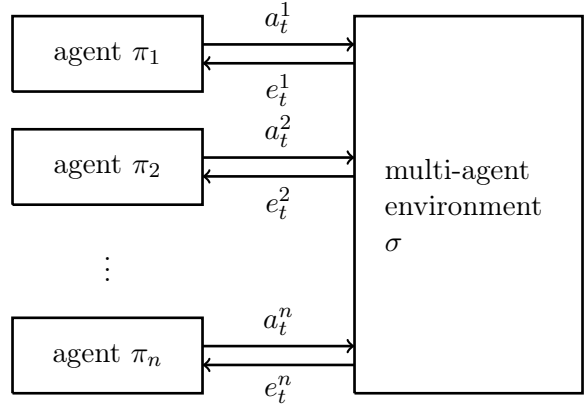
\begin{figure}[t]
\begin{center}
\begin{tikzpicture}[scale=0.25,line width=1pt] 
\draw (0,16) -- (10,16) -- (10,20) -- (0,20) -- (0,16);
\node at (5,18) {agent $\pi_1$};

\draw (0,10) -- (10,10) -- (10,14) -- (0,14) -- (0,10);
\node at (5,12) {agent $\pi_2$};

\node at (5,7) {\vdots};

\draw (0,0) -- (10,0) -- (10,4) -- (0,4) -- (0,0);
\node at (5,2) {agent $\pi_n$};

\draw (18,0) -- (30,0) -- (30,20) -- (18,20) -- (18,0);
\node at (24,10) {\begin{minipage}{22mm}
multi-agent \\ environment $\sigma$
\end{minipage}};

\draw[->] (10,18.5) to node[above] {$a_t^1$} (18,18.5);
\draw[<-] (10,17.5) to node[below] {$e_t^1$} (18,17.5);
\draw[->] (10,12.5) to node[above] {$a_t^2$} (18,12.5);
\draw[<-] (10,11.5) to node[below] {$e_t^2$} (18,11.5);
\draw[->] (10,2.5) to node[above] {$a_t^n$} (18,2.5);
\draw[<-] (10,1.5) to node[below] {$e_t^n$} (18,1.5);
\end{tikzpicture}
\end{center}
\caption[The multi-agent model]{%
Agents $\pi_1, \ldots, \pi_n$ interacting in a multi-agent environment%
\index{multi-agent environment}.
}
\label{fig:multi-agent-model}

\end{figure}

\begin{definition}[Multi-Agent Environment]
\label{def:multi-agent-environment}
\index{multi-agent environment|textbf}
\index{history!distribution|textbf}
A \emph{multi-agent environment} is a function
\[
\sigma: (\A^n \times \E^n)^* \times \A^n \to \Delta(\E^n).
\]
Together with the policies $\pi_1, \ldots, \pi_n$
the multi-agent environment $\sigma$ induces
a \emph{history distribution} $\sigma^{\pi_{1:n}}$ where
\begin{align*}
    \sigma^{\pi_{1:n}}(\epsilon) :&= 1 \\
    \sigma^{\pi_{1:n}}(\ae_{1:t})
:&= \sigma^{\pi_{1:n}}(\ae_{<t} a_t) \sigma(e_t \mid \ae_{<t} a_t) \\
    \sigma^{\pi_{1:n}}(\ae_{<t} a_t)
:&= \sigma^{\pi_{1:n}}(\ae_{<t}) \prod_{i=1}^n \pi_i(a_t^i \mid \ae_{<t}^i).
\end{align*}
\end{definition}

Agent $i$ acts
in a \emph{subjective environment}\index{subjective enviroment} $\sigma_i$
given by joining
the multi-agent environment $\sigma$
with the policies $\pi_1, \ldots, \pi_n$
and marginalizing over the histories that $\pi_i$ does not see.
Together with policy $\pi_i$,
the environment $\sigma_i$ yields a distribution over the histories of agent $i$
\[
   \sigma_i^{\pi_i}(\ae_{<t}^i)
:= \sum_{\ae_{<t}^j, j \neq i} \sigma^{\pi_{1:n}}(\ae_{<t}).
\]
We get the definition of the subjective environment $\sigma_i$ with the identity
$\sigma_i(e_t^i \mid \ae_{<t}^i a_t^i)
:= \sigma_i^{\pi_i}(e_t^i \mid \ae_{<t}^i a_t^i)$.
The subjective environment $\sigma_i$ depends on $\pi_i$
because other policies' actions may depend on the actions of $\pi_i$.
It is crucial to note that
the subjective environment $\sigma_i$ and the policy $\pi_i$
are ordinary environments and policies,
so we can use the notation from \autoref{cha:acting}.

Our definition of a multi-agent environment is very general
and encompasses most of game theory.
It allows for cooperative, competitive, and mixed games;
infinitely repeated games or any (infinite-length) extensive form games
with finitely many players.

\begin{example}[Matching Pennies]
\label{ex:matching-pennies}\index{matching pennies}
In the game of \emph{matching pennies} there are two agents ($n = 2$),
and two actions $\A = \{ \alpha, \beta \}$
representing the two sides of a penny.
In each time step
agent $1$ wins if the two actions are identical and
agent $2$ wins if the two actions are different.
The payoff matrix is as follows.
\begin{center}
\begin{tabular}{l|cc}
         & $\alpha$ & $\beta$ \\
\hline
$\alpha$ & 1,0      & 0,1 \\
$\beta$  & 0,1      & 1,0
\end{tabular}
\end{center}
We use $\E = \{ 0, 1 \}$ to be the set of rewards
(observations are vacuous) and define the multi-agent environment $\sigma$
to give reward $1$ to agent $1$ iff $a_t^1 = a_t^2$ ($0$ otherwise) and
reward $1$ to agent $2$ iff $a_t^1 \neq a_t^2$ ($0$ otherwise).
Formally,
\[
\sigma(r_t^1 r_t^2 \mid \ae_{<t} a_t) :=
\begin{cases}
1 &\text{if } r_t^1 = 1, r_t^2 = 0, a_t^1 = a_t^2, \\
1 &\text{if } r_t^1 = 0, r_t^2 = 1, a_t^1 \neq a_t^2, \text{ and} \\
0 &\text{otherwise}.
\end{cases}
\]
Let $\pi_\alpha$ denote the policy that always takes action $\alpha$.
If two agents each using policy $\pi_\alpha$ play matching pennies,
agent $1$ wins in every step.
Formally, setting $\pi_1 := \pi_2 := \pi_\alpha$
we get a history distribution that assigns probability one to the history
\[
\alpha \alpha 1 0 \alpha \alpha 1 0 \ldots.
\]
The subjective environment of agent $1$ is
\[
\sigma_1(r_t^1 \mid \ae_{<t}^1 a_t^1) =
\begin{cases}
1 &\text{if } r_t^1 = 1, a_t^1 = \alpha, \\
1 &\text{if } r_t^1 = 0, a_t^1 = \beta, \text{ and} \\
0 &\text{otherwise}.
\end{cases}
\]
Therefore policy $\pi_\alpha$ is optimal in agent $1$'s subjective environment.
\end{example}

\begin{definition}[$\varepsilon$-Best Response]
\label{def:eps-best-response}
\index{e-best response@$\eps$-best response|textbf}
A policy $\pi_i$ acting in multi-agent environment $\sigma$
with policies $\pi_1, \ldots, \pi_n$
is an \emph{$\varepsilon$-best response} after history $\ae_{<t}^i$ iff
\[
  V^*_{\sigma_i}(\ae_{<t}^i) - V^{\pi_i}_{\sigma_i}(\ae_{<t}^i)
< \varepsilon.
\]
\end{definition}

If at some time step $t$,
all agents' policies are $\varepsilon$-best responses,
we have an \emph{$\varepsilon$-Nash equilibrium}\index{Nash equilibrium}.
The property of multi-agent systems that is analogous to
asymptotic optimality is convergence to an $\varepsilon$-Nash equilibrium.

\section{Informed Reflective Agents}
\label{ssec:informed-reflective-agents}

Let $\sigma$ be a multi-agent environment
and let $\pi^*_{\sigma_1}, \ldots \pi^*_{\sigma_n}$ be such that
for each $i$ the policy $\pi^*_{\sigma_i}$ is an optimal policy
in agent $i$'s subjective environment $\sigma_i$.
At first glance this seems ill-defined:
The subjective environment $\sigma_i$ depends on each policy
$\pi^*_{\sigma_j}$,
which depends on the subjective environment $\sigma_j$,
which in turn depends on the policy $\pi^*_{\sigma_i}$.
However, this circular definition actually has a well-defined solution.

\begin{theorem}[Optimal Multi-Agent Policies]
\label{thm:informed-reflective-agents}
\index{multi-agent environment}
\index{policy!optimal}
\index{computable!reflective-oracle}
For any reflective-oracle-computable multi-agent environment $\sigma$,
the optimal policies $\pi^*_{\sigma_1}, \ldots, \pi^*_{\sigma_n}$
exist and are reflective-oracle-computable.
\end{theorem}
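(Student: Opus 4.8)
The plan is to show that the apparent circularity in the definition of $\pi^*_{\sigma_1},\ldots,\pi^*_{\sigma_n}$ is harmless: the fixed oracle $O$ already supplies the required fixed point. First I would note that once a tuple of stochastic policies $\pi_1,\ldots,\pi_n$ is fixed, each subjective environment $\sigma_i$ is a genuinely well-defined environment, because the marginalization $\sigma_i^{\pi_i}(\ae_{<t}^i)=\sum_{\ae_{<t}^j,\,j\neq i}\sigma^{\pi_{1:n}}(\ae_{<t})$ is a \emph{finite} sum (the $\A$ and $\E$ are finite by \assref{ass:aixi}{ass:finite-actions-and-percepts}), and it is reflective-oracle-computable whenever $\sigma$ and the $\pi_j$ are. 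Passing through the completion $\overline{(\cdot)}$ of the resulting semimeasure keeps $\sigma_i$ an oracle-computable measure. Then, by exactly the truncation argument of \autoref{cor:complexity-aimu} and \autoref{lem:optimal-value-reflective-oracle-computable} (write $V^*_{\sigma_i}$ explicitly as in \eqref{eq:V-explicit}, cut the horizon at an $\varepsilon$-effective horizon using that $\gamma$ is computable by \assref{ass:aixi}{ass:gamma-computable}, and exploit monotonicity in $m$), the optimal value function $V^*_{\sigma_i}$, and hence the value gap $V^*_{\sigma_i}(\ae_{<t}^i\alpha)-V^*_{\sigma_i}(\ae_{<t}^i\beta)$, is reflective-oracle-computable.

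Next I would set up a mutually referential family of machines. By the $n$-fold generalization of the recursion theorem (the quining used in \autoref{ex:diagonalization-for-reflective-oracles}), there exist probabilistic oracle machines $T_1,\ldots,T_n$, each with access to all $n$ source codes, such that $T_i$ on input $\ae_{<t}^i$ outputs $1$ with probability $\big(V^*_{\sigma_i}(\ae_{<t}^i\alpha)-V^*_{\sigma_i}(\ae_{<t}^i\beta)+1\big)/2$, where $\sigma_i$ is the subjective environment of agent $i$ obtained by joining $\sigma$ with the policies $\pi^*_{\sigma_1},\ldots,\pi^*_{\sigma_n}$ (all but the $i$-th), and $\pi^*_{\sigma_j}$ is defined from $T_j$ exactly as in \autoref{thm:optimal-policies-are-oracle-computable}: on history $h$ it plays $\alpha$ if $O(T_j,h,1/2)=1$ and $\beta$ otherwise. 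For the fixed oracle $O$ this is a legitimate definition, since $T_i$ never simulates another $T_j$ but only issues atomic oracle queries $O(T_j,\cdot,1/2)$, so the computation bottoms out at $O$; and the internal approximations of $V^*_{\sigma_i}$ (and, inside them, of $\sigma_i$ via the completions $\overline\lambda_{T_j}^O$) can be carried to any finite precision with oracle access.

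It remains to verify optimality. Because $O$ is reflective (\autoref{def:reflective-oracle}) and $\lambda_{T_i}^O(1\mid h)=\big(V^*_{\sigma_i}(h\alpha)-V^*_{\sigma_i}(h\beta)+1\big)/2$, we get $O(T_i,h,1/2)=1$ whenever $V^*_{\sigma_i}(h\alpha)>V^*_{\sigma_i}(h\beta)$ and $O(T_i,h,1/2)=0$ whenever $V^*_{\sigma_i}(h\alpha)<V^*_{\sigma_i}(h\beta)$, with the oracle free to randomize on ties. Hence $\pi^*_{\sigma_i}$ puts all its mass on actions attaining $\max_a V^*_{\sigma_i}(ha)$, which makes it $\sigma_i$-optimal by the argument in the proof of \autoref{thm:optimal-policies-are-oracle-computable}. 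Since the tuple $(\pi^*_{\sigma_1},\ldots,\pi^*_{\sigma_n})$ induces precisely the subjective environments $\sigma_1,\ldots,\sigma_n$ that each $\pi^*_{\sigma_i}$ was shown optimal for, this simultaneously establishes existence of the optimal multi-agent policies and their reflective-oracle-computability.

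The hard part will be making the mutually referential construction fully rigorous. Concretely one must (i) invoke the right $n$-fold recursion theorem so that the $T_i$ actually exist as a coupled family, (ii) argue that, for the fixed oracle $O$, the distributions $\lambda_{T_i}^O$ are well-defined and equal the intended functionals — reflexivity of $O$ is exactly what guarantees that the $\sigma_i$ implicitly used by $T_i$ (through the oracle's answers about the other $T_j$) coincides with the $\sigma_i$ induced by the policies $\pi^*_{\sigma_j}$ — and (iii) confirm that every oracle query about $T_i$ terminates, i.e., that $T_i$'s approximation of $V^*_{\sigma_i}$ does not set off an unbounded recursion. This last point is precisely why the construction must be phrased so that agents \emph{query} the oracle about each other rather than simulating each other: the dependency graph is then cut at each oracle call, and consistency is recovered only at the level of the already-fixed reflective oracle.
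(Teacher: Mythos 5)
Your proposal is correct and follows essentially the same route as the paper's own proof: combine the reflective-oracle-computability of the subjective environments (\autoref{prop:reflective-oracle-computable}) and of optimal policies via oracle queries about a value-difference machine (\autoref{thm:optimal-policies-are-oracle-computable}), then close the circular dependency with a recursion-theorem/quining construction of the coupled machines $T_1,\ldots,T_n$, noting that termination holds because agents only issue atomic oracle queries about one another and that reflectivity of $O$ yields optimality. The paper packages this as machines $T'_i$ running the composed construction on $(\sigma, T'_1,\ldots,T'_n)$, which is the same fixed-point argument you spell out.
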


To prove \autoref{thm:informed-reflective-agents},
we need the following proposition.

\begin{proposition}[Reflective-Oracle-Computability]
\label{prop:reflective-oracle-computable}
\index{multi-agent environment}
\index{computable!reflective-oracle}
If the multi-agent environment $\sigma$ and the policies
$\pi_1, \ldots, \pi_n$ are reflective-oracle-computable,
then $\sigma^{\pi_{1:n}}$ and $\sigma_i^{\pi_i}$ are reflective-oracle-computable,
and $\sigma_i \in \Mrefl^O$.
\end{proposition}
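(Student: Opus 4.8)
The plan is to realize each of $\sigma^{\pi_{1:n}}$, $\sigma_i^{\pi_i}$, and $\sigma_i$ by an explicit probabilistic oracle Turing machine obtained by composing the machines for $\sigma$ and $\pi_1,\dots,\pi_n$. The underlying principle is that reflective-oracle-computability is closed under the finitary operations appearing in Definition~\ref{def:multi-agent-environment} — finite products (the chain rule), finite sums (marginalization over histories of bounded length over a finite alphabet), and forward simulation of the interaction loop — together with the fact that sampling from a reflective-oracle-computable measure (possibly given only as the oracle-completion $\overline\lambda_T^O$ of a semimeasure) is itself reflective-oracle-computable via the binary search on the oracle that realizes the completion.

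First I would treat $\sigma^{\pi_{1:n}}$: a machine samples a length-$t$ history from $\sigma^{\pi_{1:n}}$ by running the recursion of Definition~\ref{def:multi-agent-environment} — at step $k\le t$ it draws $a_k^i\sim\pi_i(\,\cdot\mid\ae_{<k}^i)$ for each $i$ and then $e_k\sim\sigma(\,\cdot\mid\ae_{<k}a_k)$, each draw using finitely many oracle calls — so $\sigma^{\pi_{1:n}}$ is reflective-oracle-computable. Next, $\sigma_i^{\pi_i}(\ae_{<t}^i)=\sum_{\ae_{<t}^j,\,j\neq i}\sigma^{\pi_{1:n}}(\ae_{<t})$ is a finite sum of reflective-oracle-computable quantities, hence reflective-oracle-computable (equivalently: sample from $\sigma^{\pi_{1:n}}$ and project onto agent $i$'s coordinates).

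For $\sigma_i\in\Mrefl^O$ the cleanest route is to observe that $\sigma_i(e_{1:t}^i\dmid a_{1:t}^i)$ equals the distribution of agent $i$'s percepts under the \emph{forced simulation} in which agent $i$ deterministically plays $a_1^i,\dots,a_t^i$ while every other agent $j$ draws $a_k^j\sim\pi_j(\,\cdot\mid\ae_{<k}^j)$ and percepts are drawn $e_k\sim\sigma(\,\cdot\mid\ae_{<k}a_k)$; this is precisely $\sigma^{\pi_{1:n}}$ conditioned on agent $i$'s action sequence and marginalized, and the $\pi_i$-factors in the numerator and denominator of $\sigma_i(e_k^i\mid\ae_{<k}^i a_k^i)$ cancel. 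A probabilistic oracle machine then realizes the conditional semimeasure $\sigma_i$ by rejection sampling: run the forced simulation, accept iff agent $i$'s first $k-1$ percepts match $\ae_{<k}^i$, and output the $k$-th. Because $\sigma$ and the $\pi_j$ are genuine (completed) probability measures, $\sum_{e_{1:t}^i}\sigma_i(e_{1:t}^i\dmid a_{1:t}^i)=1$, so $\sigma_i$ is a measure, its oracle-completion is itself, and $\sigma_i=\overline\lambda_T^O\in\Mrefl^O$.

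I expect the only delicate point to be this last claim, and in particular the behaviour on histories $\ae_{<t}^i a_t^i$ of $\sigma_i^{\pi_i}$-probability zero — where $\sigma_i(e_t^i\mid\ae_{<t}^i a_t^i)$ has the form $0/0$ and the rejection sampler fails to halt. As with ordinary semimeasures this is harmless: the oracle-completion $\overline{(\cdot)}$ of the machine's semimeasure reassigns the missing mass using $O$, yielding a reflective-oracle-computable measure that agrees with the $\sigma_i$ of Definition~\ref{def:multi-agent-environment} wherever the latter is defined, which is all that is used downstream (Theorems~\ref{thm:informed-reflective-agents} and the grain-of-truth results). Everything else is a routine composition-of-oracle-machines argument, with the reflective-oracle-specific care confined to the observation that completed semimeasures can be sampled with oracle access.
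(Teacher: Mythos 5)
Your proposal is correct and takes essentially the same route as the paper: the paper's proof is a one-sentence closure argument --- every quantity appearing in \autoref{def:multi-agent-environment} is reflective-oracle-computable by assumption, hence so are the finite marginalizations $\sigma_i^{\pi_i}$ and the conditional $\sigma_i$ --- and your explicit forward-simulation/rejection-sampling machines are just a more detailed realization of that same composition. Your extra care about histories of $\sigma_i^{\pi_i}$-probability zero (where the conditional is $0/0$ and the oracle completion fills in the mass) is a point the paper silently glosses over, but it does not change the argument.
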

\begin{proof}
All involved quantities in the definition of $\sigma^{\pi_{1:n}}$
are reflective-oracle-computable by assumption,
therefore also their marginalizations $\sigma_i^{\pi_i}$ and $\sigma_i$.
\end{proof}

\begin{proof}[Proof of \autoref{thm:informed-reflective-agents}]
According to \autoref{thm:optimal-policies-are-oracle-computable}
the optimal policy $\pi^*_{\sigma_i}$ in agent $i$'s subjective environment
is reflective-oracle-computable if the subjective environment $\sigma_i$ is.
In particular the process that takes $\sigma_i$
in form of a probabilistic Turing machine
and returns $\pi^*_{\sigma_i}$ is reflective-oracle-computable.
Moreover, $\sigma_i$ is reflective-oracle-computable
if $\sigma$ and $\pi_1, \ldots, \pi_n$ are,
according to \autoref{prop:reflective-oracle-computable}.
Again, this construction is itself reflective-oracle-computable.
Connecting these two constructions
we get probabilistic Turing machines $T_1, \ldots, T_n \in \mathcal{T}$
where each $T_i$ takes
the multi-agent environment $\sigma$ and $\pi_1, \ldots, \pi_n$
in form of probabilistic Turing machines and
returns $\pi^*_{\sigma_i}$.
We define the probabilistic Turing machines
$T'_1, \ldots, T'_n$ where $T_i'$ runs $T^O_i$
on $(\sigma, T'_1, \ldots, T'_n)$;
hence $T'_i$ computes $\pi^*_{\sigma_i}$.
Note that this construction only works because
we relied on the reflective oracle
in the proof of \autoref{thm:optimal-policies-are-oracle-computable}.
Since the machines $T^O_i$ always halt,
so do ${T'_i}^O$ despite their infinitely recursive construction.
\end{proof}

Note the strength of \autoref{thm:informed-reflective-agents}:
each of the policies $\pi^*_{\sigma_i}$ is acting optimally
\emph{given the knowledge of everyone else's policies}.
Hence optimal policies play $0$-best responses%
\index{e-best response@$\eps$-best response} by definition,
so if every agent is playing an optimal policy, we have a Nash equilibrium.
Moreover, this Nash equilibrium is also
a \emph{subgame perfect} Nash equilibrium%
\index{Nash equilibrium!subgame perfect},
because each agent also acts optimally on the counterfactual histories
that do not end up being played.
In other words,
\autoref{thm:informed-reflective-agents}
states the existence and reflective-oracle-computability
of a subgame perfect Nash equilibrium
in any reflective-oracle-computable multi-agent environment.
The following immediate corollary states that
these subgame perfect Nash equilibria are limit computable.

\begin{corollary}[Solution to Computable Multi-Agent Environments]
\label{cor:optimal-multi-agent-policies}
\index{multi-agent environment}
\index{policy!optimal}
\index{limit computable}
For any computable multi-agent environment $\sigma$,
the optimal policies $\pi^*_{\sigma_1}, \ldots, \pi^*_{\sigma_n}$
exist and are limit computable.
\end{corollary}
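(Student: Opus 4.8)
The plan is to combine \autoref{thm:informed-reflective-agents} with \autoref{thm:lc-reflective-oracle}. First I would observe that a computable multi-agent environment $\sigma$ is, in particular, reflective-oracle-computable for \emph{any} reflective oracle $O$: it simply never queries the oracle. Hence \autoref{thm:informed-reflective-agents} applies directly and yields both the existence of the optimal policies $\pi^*_{\sigma_1}, \dots, \pi^*_{\sigma_n}$ and probabilistic oracle machines $T'_1, \dots, T'_n \in \mathcal{T}$ such that ${T'_i}^O$ computes $\pi^*_{\sigma_i}$ for every reflective oracle $O$. This already settles the existence claim. It then remains to upgrade ``reflective-oracle-computable'' to ``limit computable'', and for this I would instantiate $O$ as the limit-computable reflective oracle $O^*$ from \autoref{thm:lc-reflective-oracle}.

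The core step is the general fact that a reflective-oracle-computable real-valued function becomes limit computable once the oracle itself is limit computable. Concretely, let $f(h,a) := \pi^*_{\sigma_i}(a \mid h) = \overline\lambda^{O^*}_{T'_i}(a \mid h)$ and let $\phi$ be a computable function with $\phi(q,k) \to O^*(q)$ as $k \to \infty$. I would build a stage-$k$ approximation by simulating $T'_i$ for $k$ steps, answering each query $q_j$ it makes by returning $1$ with probability $\phi(q_j,k)$, $0$ with probability $1-\phi(q_j,k)$, and aborting otherwise (or if more than $k$ steps are needed) --- exactly the ``running $T$ with a partial oracle'' construction from \autoref{ssec:lc-reflective-oracle-proof}. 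Along each halting branch $T'_i$ makes only finitely many queries, so its output distribution is a finite multilinear --- hence continuous --- function of those finitely many oracle values; since $\phi(\cdot,k) \to O^*$ pointwise and ${T'_i}^{O^*}$ halts surely (established inside the proof of \autoref{thm:informed-reflective-agents}), the stage-$k$ values converge to $f(h,a)$. The residual bookkeeping --- recovering the completion $\overline\lambda$ from $\lambda$ by a binary search over the oracle parameter, taking the resulting expectation, and (in case $\sigma$ embeds Bayesian agents) forming the subjective marginalizations via \autoref{prop:reflective-oracle-computable} --- is closed under limit computation by \autoref{lem:computable-reals}, so diagonalizing over $k$ limit-computes $f$ as a $\Delta^0_2$-computable function in the sense of \autoref{def:computable}.

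The main obstacle is the interchange of limits buried in the previous paragraph: the number of oracle queries along a computation path of ${T'_i}^{O^*}$ is almost surely finite but a priori unbounded over the space of coin flips and oracle answers, so one must justify that jointly truncating the simulation depth and the oracle precision at $k$ actually converges to the true value rather than to something biased. I would resolve this by exploiting the explicit structure behind \autoref{thm:optimal-policies-are-oracle-computable}: the optimal policy is produced from a \emph{single} oracle call $O^*(T,\ae_{<t},1/2)$ against a machine computing $V^*_{\sigma_i}$, which by \autoref{lem:optimal-value-reflective-oracle-computable} is obtained by truncating the horizon at $\Gamma_{m+1}$ and enumerating rationals, each enumeration step being a finite computation with finitely many oracle calls. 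Thus $f(h,a)$ is genuinely a limit (over $m$ and over the rational enumeration) of finitely-many-query quantities, and a dominated-convergence argument together with the effective tail bound $\Gamma_{m+1}\to 0$ from \assref{ass:aixi}{ass:gamma-computable} closes the gap. A minor secondary point to confirm is that ``limit computable'' here is indeed meant in the sense of $\Delta^0_2$-computability of the conditional probabilities $(h,a)\mapsto\pi^*_{\sigma_i}(a\mid h)$ --- which is precisely what the above construction delivers.
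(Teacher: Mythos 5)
Your proposal is correct and takes essentially the same route as the paper: the paper's proof is exactly the combination of \autoref{thm:informed-reflective-agents} (existence and reflective-oracle-computability of the optimal policies, noting a computable $\sigma$ is reflective-oracle-computable) with \autoref{thm:lc-reflective-oracle} (instantiating the limit-computable reflective oracle). The additional detail you supply about why a reflective-oracle-computable policy becomes limit computable under a limit-computable oracle is left implicit in the paper but is consistent with its argument.
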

\begin{proof}
From \autoref{thm:informed-reflective-agents} and
\autoref{thm:lc-reflective-oracle}.
\end{proof}

\begin{example}[Nash Equilibrium in Matching Pennies]
\label{ex:matching-pennies2}\index{matching pennies}
\index{e-best response@$\eps$-best response}
Consider the matching pennies game from \autoref{ex:matching-pennies}.
The only pair of optimal policies is
the pair of two uniformly random policies
that play $\alpha$ and $\beta$ with equal probability in every time step:
if one of the agents picks a policy
that plays one of the actions with probability $> 1/2$,
then the other agent's best response is
to play the other action with probability $1$.
But now the first agent's policy is no longer a best response.
\end{example}

\section{Learning Reflective Agents}
\label{sec:learning-reflective-agents}

Since our class $\Mrefl^O$ solves the grain of truth problem,
the result by \citet{KL:1993} immediately implies that
for any Bayesian agents $\pi_1, \ldots, \pi_n$
interacting in an infinitely repeated game and
for all $\varepsilon > 0$ and all $i \in \{ 1, \ldots, n \}$
there is almost surely a $t_0 \in \mathbb{N}$ such that for all $t \geq t_0$
the policy $\pi_i$ is an $\varepsilon$-best response.%
\index{e-best response@$\eps$-best response}
However, this hinges on the important fact that
every agent has to know the game and
also that all other agents are Bayesian agents.
Otherwise the convergence to an $\varepsilon$-Nash equilibrium may fail,
as illustrated by the following example.

At the core of the construction is a \emph{dogmatic prior}%
\index{prior!dogmatic}~(\autoref{ssec:dogmatic-prior}).
A dogmatic prior assigns very high probability
to going to hell (reward $0$ forever)
if the agent deviates from a given computable policy $\pi$.
For a Bayesian agent it is thus only worth deviating from the policy $\pi$
if the agent thinks that the prospects of following $\pi$ are very poor already.
This implies that
for general multi-agent environments and
without additional assumptions on the prior,
we cannot prove any meaningful convergence result about Bayesian agents
acting in an unknown multi-agent environment.

\begin{example}[Reflective Bayesians Playing Matching Pennies]
\label{ex:reflective-Bayesians-playing-matching-pennies}
\index{matching pennies}\index{prior!dogmatic}
Consider the multi-agent environment matching pennies
from \autoref{ex:matching-pennies}.
Let $\pi_1$ be the policy that takes the action sequence
$(\alpha \alpha \beta)^\infty$ and
let $\pi_2 := \pi_\alpha$ be the policy that always takes action $\alpha$.
The average reward of policy $\pi_1$ is $2/3$ and
the average reward of policy $\pi_2$ is $1/3$.
Let $\xi$ be a universal mixture~\eqref{eq:Bayes-mixture}.
By on-policy value convergence~\eqref{eq:Mrefl-on-policy-value-convergence},
$V^{\pi_1}_{\overline\xi} \to c_1 \approx 2/3$ and
$V^{\pi_2}_{\overline\xi} \to c_2 \approx 1/3$
almost surely
when following policies $(\pi_1, \pi_2)$.
Therefore there is an $\varepsilon > 0$ such that
$V^{\pi_1}_{\overline\xi} > \varepsilon$ and
$V^{\pi_2}_{\overline\xi} > \varepsilon$
for all time steps.
Now we can apply \autoref{thm:dogmatic-prior} to conclude that
there are (dogmatic) mixtures $\xi_1'$ and $\xi_2'$ such that
$\pi^*_{\xi_1'}$ always follows policy $\pi_1$ and
$\pi^*_{\xi_2'}$ always follows policy $\pi_2$.
This does not converge to a ($\varepsilon$-)Nash equilibrium.
\end{example}

An important property required for
the construction in \autoref{ex:reflective-Bayesians-playing-matching-pennies}
is that the environment class contains environments
that threaten the agent with going to hell,
which is outside of the class of matching pennies environments.
In other words, since the agent does not know a priori that
it is playing a matching pennies game,
it might behave more conservatively than appropriate for the game.

The following theorem is our main convergence result.
It states that
for asymptotically optimal agents
we get convergence to $\varepsilon$-Nash equilibria
in any reflective-oracle-computable multi-agent environment.

\begin{theorem}[Convergence to Equilibrium]
\label{thm:convergence-to-equilibrium}\index{optimality!asymptotic!in mean}
\index{multi-agent environment}
\index{computable!reflective-oracle}
\index{e-best response@$\eps$-best response}
Let $\sigma$ be an reflective-oracle-computable multi-agent environment and
let $\pi_1, \ldots, \pi_n$ be reflective-oracle-computable policies
that are asymptotically optimal in mean in the class $\Mrefl^O$.
Then for all $\varepsilon > 0$ and all $i \in \{ 1, \ldots, n \}$
the $\sigma^{\pi_{1:n}}$-probability that
the policy $\pi_i$ is an $\varepsilon$-best response%
\index{e-best response@$\eps$-best response}
converges to $1$ as $t \to \infty$.
\end{theorem}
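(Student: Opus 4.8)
<br>

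The plan is to combine the asymptotic optimality in mean of each policy $\pi_i$ in its subjective environment $\sigma_i$ with the key observation that $\sigma_i \in \Mrefl^O$. First I would note that by \autoref{prop:reflective-oracle-computable}, since $\sigma$ and $\pi_1, \ldots, \pi_n$ are reflective-oracle-computable, each subjective environment $\sigma_i$ is in the class $\Mrefl^O$. Therefore the asymptotic optimality in mean of $\pi_i$ in the class $\Mrefl^O$ applies to $\sigma_i$ specifically: we get
\[
\EE^{\pi_i}_{\sigma_i} \big[ V^*_{\sigma_i}(\ae^i_{<t}) - V^{\pi_i}_{\sigma_i}(\ae^i_{<t}) \big] \to 0
\text{ as $t \to \infty$.}
\]
The value difference $V^*_{\sigma_i}(\ae^i_{<t}) - V^{\pi_i}_{\sigma_i}(\ae^i_{<t})$ is a nonnegative random variable bounded by $1$ (by \autoref{rem:value-function-bounded}), so convergence in mean implies convergence in probability. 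Hence for every $\varepsilon > 0$,
\[
\sigma_i^{\pi_i}\big[ V^*_{\sigma_i}(\ae^i_{<t}) - V^{\pi_i}_{\sigma_i}(\ae^i_{<t}) \geq \varepsilon \big] \to 0
\text{ as $t \to \infty$.}
\]

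Next I would translate this statement about the subjective history distribution $\sigma_i^{\pi_i}$ into a statement about the joint history distribution $\sigma^{\pi_{1:n}}$. By the definition of the subjective environment, $\sigma_i^{\pi_i}(\ae^i_{<t})$ is exactly the marginal of $\sigma^{\pi_{1:n}}(\ae_{<t})$ over the components of the history that agent $i$ does not observe; that is, $\sigma_i^{\pi_i}(\ae^i_{<t}) = \sum_{\ae^j_{<t}, j \neq i} \sigma^{\pi_{1:n}}(\ae_{<t})$. The event ``$\pi_i$ is not an $\varepsilon$-best response after $\ae^i_{<t}$'' is a function of agent $i$'s subjective history $\ae^i_{<t}$ only, so its probability under $\sigma^{\pi_{1:n}}$ equals its probability under $\sigma_i^{\pi_i}$. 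Therefore the $\sigma^{\pi_{1:n}}$-probability that $\pi_i$ fails to be an $\varepsilon$-best response at time $t$ converges to $0$, which is precisely the claim.

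The main obstacle I anticipate is making the marginalization argument fully rigorous, in particular verifying that the $\varepsilon$-best-response event $\{ V^*_{\sigma_i}(\ae^i_{<t}) - V^{\pi_i}_{\sigma_i}(\ae^i_{<t}) < \varepsilon \}$ is indeed measurable with respect to agent $i$'s subjective $\sigma$-algebra and that the subjective environment $\sigma_i$ is well-defined as an ordinary environment in the sense of \autoref{cha:acting} — this requires that $\sigma_i^{\pi_i}(\ae^i_{<t}) > 0$ wherever we condition, and some care about the dependence of $\sigma_i$ on $\pi_i$. A secondary subtlety is confirming that ``asymptotically optimal in mean in the class $\Mrefl^O$'' as hypothesized is strong enough: since $\sigma_i \in \Mrefl^O$ by \autoref{prop:reflective-oracle-computable}, this is immediate, but one should double-check that the definition of asymptotic optimality (\autoref{def:asymptotic-optimality}, with convergence in mean from \autoref{tab:asymptotic-optimality}) quantifies over \emph{all} environments in the class and uses the history distribution $\sigma_i^{\pi_i}$ generated by the agent's own policy, which matches what we need here. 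Everything else is routine: boundedness of the value function gives the implication from convergence in mean to convergence in probability, and the marginalization identity is built into \autoref{def:multi-agent-environment}.
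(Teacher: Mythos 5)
Your proposal is correct and follows essentially the same route as the paper: invoke \autoref{prop:reflective-oracle-computable} to get $\sigma_i \in \Mrefl^O$, apply asymptotic optimality in mean to the subjective environment, and use boundedness of the value function to pass from convergence in mean to convergence in probability. Your explicit marginalization step (that the $\eps$-best-response event depends only on $\ae^i_{<t}$, so its $\sigma^{\pi_{1:n}}$- and $\sigma_i^{\pi_i}$-probabilities agree) is left implicit in the paper's proof, but it is a correct and harmless elaboration rather than a different argument.
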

\begin{proof}
Let $i \in \{ 1, \ldots, n \}$.
By \autoref{prop:reflective-oracle-computable},
the subjective environment $\sigma_i$
is reflective-oracle-computable,
therefore $\sigma_i \in \Mrefl^O$.
Since $\pi_i$ is asymptotically optimal in mean in the class $\Mrefl^O$,
we get that
$\EE[ V^*_{\sigma_i}(\ae_{<t}) - V^{\pi_i}_{\sigma_i}(\ae_{<t})] \to 0$.
Convergence in mean implies
convergence in probability for bounded random variables,
hence for all $\varepsilon > 0$ we have
\[
\sigma_i^{\pi_i} [ V^*_{\sigma_i}(\ae_{<t}^i) - V^{\pi_i}_{\sigma_i}(\ae_{<t}^i) \geq \varepsilon ]
\to 0 \text{ as $t \to \infty$}.
\]
Therefore the probability
that the policy $\pi_i$ plays an $\varepsilon$-best response%
\index{e-best response@$\eps$-best response} converges to $1$ as $t \to \infty$.
\end{proof}

In contrast to \autoref{thm:informed-reflective-agents}
which yields policies that play a subgame perfect equilibrium,
this is not the case for \autoref{thm:convergence-to-equilibrium}:
the agents typically do not learn to predict off-policy and
thus will generally not play $\varepsilon$-best responses%
\index{e-best response@$\eps$-best response}
in the counterfactual histories that they never see.
This weaker form of equilibrium is unavoidable
if the agents do not know the environment because
it is impossible to learn the parts that they do not interact with.

\begin{corollary}[Convergence to Equilibrium]
\label{cor:convergence-to-equilibrium}
\index{multi-agent environment}
\index{limit computable}
\index{e-best response@$\eps$-best response}
There are limit computable policies $\pi_1, \ldots, \pi_n$ such that
for any computable multi-agent environment $\sigma$ and
for all $\varepsilon > 0$ and all $i \in \{ 1, \ldots, n \}$
the $\sigma^{\pi_{1:n}}$-probability that
the policy $\pi_i$ is an $\varepsilon$-best response%
\index{e-best response@$\eps$-best response}
converges to $1$ as $t \to \infty$.
\end{corollary}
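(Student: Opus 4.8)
The plan is to instantiate \autoref{thm:convergence-to-equilibrium} with the Thompson sampling policy over the reflective environment class $\Mrefl^O$, and then to lift the resulting reflective-oracle-computable policies to limit computable ones via \autoref{thm:lc-reflective-oracle}, exactly as in the proof of \autoref{cor:optimal-multi-agent-policies}. Note that this needs Thompson sampling rather than a Bayes optimal policy: as \autoref{ex:reflective-Bayesians-playing-matching-pennies} shows, reflective Bayesian agents may get locked into a dogmatic prior and fail to converge, whereas asymptotically optimal agents do converge by \autoref{thm:convergence-to-equilibrium}. Concretely, I would fix the limit computable reflective oracle $O$ from \autoref{thm:lc-reflective-oracle}, fix a lower semicomputable positive prior $w \in \Delta\Mrefl^O$ (e.g.\ $w(\overline\lambda_T^O) := 2^{-K(T)}$) and a fixed computable sequence $\eps_t \downarrow 0$, and set each $\pi_i := \pi_T$ to be the Thompson sampling policy of \autoref{alg:Thompson-sampling} over $\M := \Mrefl^O$. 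Crucially this definition is independent of $\sigma$, as the statement requires.

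The first step is to verify that $\pi_T$ is reflective-oracle-computable. Sampling $\rho \sim w(\,\cdot \mid \ae_{<t})$ requires the posterior weights \eqref{eq:posterior}, which are lower semicomputable relative to $O$ because $w$ is lower semicomputable and $\overline\xi$ is reflective-oracle-computable by \autoref{prop:Bayes-is-in-the-class}; sampling from such a distribution can be implemented on a probabilistic oracle machine by drawing a uniform real and comparing it against the running lower approximations. Having drawn $\rho \in \Mrefl^O$, the policy follows $\pi^*_\rho$ for $H_t(\eps_t)$ steps, which is reflective-oracle-computable by \autoref{thm:optimal-policies-are-oracle-computable}, and the effective horizon is computable by \assref{ass:aixi}{ass:gamma-computable}. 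Next, \autoref{thm:Thompson-sampling-aoim} applied to the countable class $\Mrefl^O$ (using that $\overline\xi$ dominates $\Mrefl^O$, hence on-policy value convergence \eqref{eq:Mrefl-on-policy-value-convergence} holds) shows $\pi_T$ is asymptotically optimal in mean in $\Mrefl^O$.

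Now fix any computable multi-agent environment $\sigma$; it is in particular reflective-oracle-computable, so by \autoref{prop:reflective-oracle-computable} each subjective environment $\sigma_i$ lies in $\Mrefl^O$. All hypotheses of \autoref{thm:convergence-to-equilibrium} are met, so for every $\varepsilon > 0$ and every $i$ the $\sigma^{\pi_{1:n}}$-probability that $\pi_i$ is an $\varepsilon$-best response tends to $1$. Finally, to get limit computable rather than merely oracle-computable policies, I would substitute the limit computable oracle $O$ of \autoref{thm:lc-reflective-oracle} into the construction and push the approximation through the (well-founded, halting) recursion, exactly as in the proofs of \autoref{thm:informed-reflective-agents} and \autoref{cor:optimal-multi-agent-policies}.

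The main obstacle I expect is making the first step rigorous: showing that the Thompson sampler is genuinely implementable on a probabilistic oracle machine, and that the limit-computability transfer survives the self-reference. The posterior masses are only approximable from below using $O$, so the sample must be coupled to the machine's internal coin flips carefully enough that it is drawn \emph{exactly} from $w(\,\cdot \mid \ae_{<t})$ (the uniform-real comparison trick terminates only $O$-almost surely, so one must argue this suffices). Moreover, since $\overline\xi \in \Mrefl^O$ and the prior may put positive weight on it, the sampler can sample completions of itself; as in \autoref{thm:informed-reflective-agents}, it is the reflectivity of $O$ that keeps this recursion well-founded with always-halting computations, and the oracle-computability and limit-computability arguments must be organized around exactly this structure.
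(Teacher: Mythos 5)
Your proposal is correct and follows essentially the same route as the paper: Thompson sampling over $\Mrefl^O$, asymptotic optimality in mean via \autoref{thm:Thompson-sampling-aoim}, reflective-oracle-computability of $\pi_T$ (the paper's \autoref{thm:TS-reflective-oracle-computable}), limit computability via \autoref{thm:lc-reflective-oracle}, and the conclusion via \autoref{thm:convergence-to-equilibrium}. The only divergence is the implementation detail you flag as an obstacle: rather than sampling exactly from the posterior by a uniform-real comparison, the paper computes the action probabilities of $\pi_T$ directly as the posterior-weighted mixture of the optimal policies $\pi^*_\nu$ (conditioning on the actions taken since the last resampling step), which sidesteps the exact-sampling concern entirely.
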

\begin{proof}
Pick $\pi_1, \ldots, \pi_n$ to be the Thompson sampling policy $\pi_T$
defined in \autoref{alg:Thompson-sampling} over the countable class $\Mrefl^O$.
By \autoref{thm:Thompson-sampling-aoim} these policies
are asymptotically optimal in mean.
By \autoref{thm:TS-reflective-oracle-computable} below
they are reflective-oracle-computable and
by \autoref{thm:lc-reflective-oracle} they are also limit computable.
The statement now follows from \autoref{thm:convergence-to-equilibrium}.
\end{proof}

\begin{theorem}[Thompson Sampling is Reflective-Oracle-Computable]
\label{thm:TS-reflective-oracle-computable}
\index{Thompson sampling}
\index{computable!reflective-oracle}
The policy $\pi_T$ defined in \autoref{alg:Thompson-sampling}
over the class $\Mrefl^O$
is reflective-oracle-computable.
\end{theorem}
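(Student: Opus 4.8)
The plan is to unfold the definition of $\pi_T$ from \autoref{alg:Thompson-sampling} and exhibit an oracle machine realizing it. Recall that $\pi_T$ proceeds in blocks delimited by the resampling times $t_1 = 1$ and $t_{j+1} = t_j + H_{t_j}(\eps_{t_j})$: at the start $t_j$ of a block it draws $\rho \sim w(\,\cdot \mid \ae_{<t_j})$ and then follows $\pi^*_\rho$ until time $t_{j+1}$. Since $\gamma$ is computable (\assref{ass:aixi}{ass:gamma-computable}) and $(\eps_t)_t$ is computable, the sequence $(t_j)_j$ is computable, so from a history $\ae_{<t}$ we can compute the current block start $t_j$. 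Marginalizing over the (independent) sample, this gives
\[
  \pi_T(a_t \mid \ae_{<t})
= \sum_{i \in \mathbb{N}} w(\nu_i \mid \ae_{<t_j}) \, \pi^*_{\nu_i}(a_t \mid \ae_{<t}),
\]
and it suffices to show the right-hand side is reflective-oracle-computable in $(\ae_{<t}, a_t)$ --- equivalently, to give a probabilistic oracle machine that samples $\rho = \nu_i$ from $w(\,\cdot \mid \ae_{<t_j})$ and then an action from $\pi^*_\rho(\,\cdot \mid \ae_{<t})$.

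The first ingredient is that the posterior weights $i \mapsto w(\nu_i \mid \ae_{<t_j})$ are reflective-oracle-computable, uniformly in $i$. By \eqref{eq:posterior}, $w(\nu_i \mid \ae_{<t_j}) = w(\nu_i)\,\nu_i(e_{<t_j} \dmid a_{<t_j}) / \overline\xi(e_{<t_j} \dmid a_{<t_j})$. The prior $w$ is lower semicomputable by hypothesis; each $\nu_i = \overline\lambda_{T_i}^O$ is a reflective-oracle-computable measure, uniformly in $i$, since the completion construction of \autoref{ssec:reflective-oracles-def} is just a binary search against $O$; and $\overline\xi \in \Mrefl^O$ is a reflective-oracle-computable measure that is positive on every finite history (\autoref{prop:Bayes-is-in-the-class} together with the self-consistent inductive definition of $\overline\xi$). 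Hence $w(\nu_i \mid \ae_{<t_j})$ is lower semicomputable on the oracle machine. An induction on history length shows $\overline\xi(e_{<t_j} \dmid a_{<t_j}) = \sum_{\nu} w(\nu)\,\nu(e_{<t_j} \dmid a_{<t_j})$, so $\sum_i w(\nu_i \mid \ae_{<t_j}) = 1$; being a lower-semicomputable probability measure on $\mathbb{N}$, the posterior is therefore also upper semicomputable (via $w(\nu_i \mid \ae_{<t_j}) = 1 - \sum_{k \neq i} w(\nu_k \mid \ae_{<t_j})$) and hence oracle-computable.

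Given this, sampling is routine: draw a uniform real $u \in [0,1]$ from the coin flips, compute the partial sums $S_i := \sum_{k \leq i} w(\nu_k \mid \ae_{<t_j})$ to increasing precision, and output $i$ as soon as $S_{i-1} < u < S_i$ is confirmed; this halts with probability $1$ and produces exactly the law $w(\,\cdot \mid \ae_{<t_j})$. With $\rho = \nu_i$ in hand, \autoref{thm:optimal-policies-are-oracle-computable} --- whose proof is uniform in the environment presented as an oracle machine --- yields a reflective-oracle computation of $\pi^*_{\nu_i}(\,\cdot \mid \ae_{<t})$, from which the action is sampled. Alternatively, working with the displayed sum directly: each summand is oracle-computable, the weights sum to $1$, and the tail $\sum_{i > N} w(\nu_i \mid \ae_{<t_j}) \pi^*_{\nu_i}(a_t \mid \ae_{<t}) \leq 1 - S_N$ is controlled, so the infinite sum is oracle-computable. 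Either way we obtain a reflective-oracle-computable realization of $\pi_T$, which is what \autoref{def:reflective-oracle-computable} requires.

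The step I expect to demand the most care is the uniform reflective-oracle-computability of the map $i \mapsto \big( w(\nu_i \mid \ae_{<t_j}),\, \pi^*_{\nu_i} \big)$: one must check that the completion of semimeasures, the posterior renormalization, and the optimal-value/optimal-policy construction of \autoref{thm:optimal-policies-are-oracle-computable} all go through uniformly in the index and the history, and that the ``lower-semicomputable probability measure on $\mathbb{N}$ is computable'' step genuinely applies --- which hinges on $\overline\xi$, not merely $\xi$, being oracle-computable. Everything downstream (block bookkeeping, the inverse-CDF sampler, truncating the convex sum) is then mechanical.
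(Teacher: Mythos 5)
There is a genuine gap, and it is exactly the point the paper's own proof singles out. Your marginalization
\[
\pi_T(a_t \mid \ae_{<t}) = \sum_{i} w(\nu_i \mid \ae_{<t_j}) \, \pi^*_{\nu_i}(a_t \mid \ae_{<t})
\]
is only correct at resampling steps $t = t_j$. A policy is a map from observable histories to action distributions, and the oracle machine realizing $\pi_T$ is invoked afresh with input $\ae_{<t}$ at every time step; it cannot remember which $\rho$ it drew at time $t_j$. So for $t_j < t < t_{j+1}$ the distribution of the hidden sample given the history is \emph{not} $w(\,\cdot \mid \ae_{<t_j})$: the actions $a_{t_j}, \ldots, a_{t-1}$ in $\ae_{<t}$ were generated by $\pi^*_\rho$ and hence carry information about $\rho$. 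The correct weights are proportional to $w(\nu_i \mid \ae_{<t_j}) \prod_{k=t_j}^{t-1} \pi^*_{\nu_i}(a_k \mid \ae_{<k})$, renormalized over $i$, and only then mixed with $\pi^*_{\nu_i}(a_t \mid \ae_{<t})$. Your alternative phrasing (``sample $\rho$ from $w(\,\cdot \mid \ae_{<t_j})$ and then an action from $\pi^*_\rho$'') has the same defect: executed per time step it resamples $\rho$ independently at every step of the block, which is a different policy. The difference is not cosmetic — in \autoref{ex:TS-not-sao} the true $\pi_T$ repeats the exploratory action with certainty once it has taken it (any sampled optimal policy that takes $\alpha$ once takes it again), whereas your re-marginalized policy would with positive probability switch back, so the two induce different history distributions and your construction does not realize $\pi_T$. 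This conditioning on the within-block actions is precisely the clause ``between resampling steps we need to condition the policy by the actions it has already taken since the last resampling step'' in the paper's proof.

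The remainder of your argument — computability of the resampling times from the computable $\gamma$ and $(\eps_t)$, reflective-oracle-computability of the posterior via \eqref{eq:posterior}, of the optimal policies via \autoref{thm:optimal-policies-are-oracle-computable} (uniformly in the index), and the enumeration/inverse-CDF machinery for sampling from a countable mixture — is sound and matches the paper's route; once you replace the block-start posterior by the action-likelihood-reweighted posterior above, the same machinery applies, since the extra factors $\prod_k \pi^*_{\nu_i}(a_k \mid \ae_{<k})$ are themselves reflective-oracle-computable.
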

\begin{proof}
The posterior $w(\,\cdot \mid \ae_{<t})$ is reflective-oracle-computable
by the definition \eqref{eq:posterior} and
according to \autoref{thm:optimal-policies-are-oracle-computable}
the optimal policies $\pi^*_\nu$ are reflective-oracle-computable.
On resampling steps we can compute the action probabilities of $\pi_T$ by
enumerating all $\nu \in \Mrefl^O$ and computing
$\pi^*_\nu$ weighted by the posterior $w(\nu \mid \ae_{<t})$.
Between resampling steps we need to condition the policy $\pi_T$
computed above by the actions it has already taken
since the last resampling step
(compare \autoref{ex:TS-not-sao}).
\end{proof}

Because the posterior $w(\;\cdot \mid \ae_{<t})$ is a $\overline\xi^\pi$-martingale
when acting according to the policy $\pi$,
it converges $\overline\xi^\pi$-almost surely
according to the martingale convergence theorem%
~(\autoref{thm:martingale-convergence})\index{convergence!martingale}.
Since $\overline\xi$ dominates the subjective environment $\sigma_i$,
it also converges $\sigma_i^\pi$-almost surely%
~(see \autoref{ex:posterior-martingale}).
Hence all the Thompson sampling agents
eventually `calm down' and settle on some posterior belief.

According to \autoref{thm:convergence-to-equilibrium},
the policies $\pi_1, \ldots, \pi_n$
only need to be asymptotically optimal in mean.
For Thompson sampling this is independent of the discount function
according to \autoref{thm:Thompson-sampling-aoim}
(but the discount function has to be known to the agent).
So the different agents may use different discount functions,
resample at different time steps and converge at different speeds.

\begin{example}[Thompson Samplers Playing Matching Pennies]
\label{ex:matching-pennies3}\index{matching pennies}\index{Thompson sampling}
Consider the matching pennies game from \autoref{ex:matching-pennies} and
let both agents use the Thompson sampling policy
defined in \autoref{alg:Thompson-sampling},
i.e., define $\pi_1 := \pi_T$ and $\pi_2 := \pi_T$.

The value of the uniformly random policy $\pi_R$ is always $1/2$, so
$V^*_{\sigma_i} \geq V^{\pi_R}_{\sigma_i} = 1/2$.
According to \autoref{thm:convergence-to-equilibrium},
for every $\varepsilon > 0$,
each agent will eventually always play an $\varepsilon$-best response%
\index{e-best response@$\eps$-best response},
i.e., $V^{\pi_i}_{\sigma_i} > V^*_{\sigma_i} - \varepsilon \geq 1/2 - \varepsilon$.
Since matching pennies is a zero-sum game,
$V^{\pi_1}_{\sigma_1} + V^{\pi_2}_{\sigma_2} = 1$, so
$V^{\pi_2}_{\sigma_2} = 1 - V^{\pi_1}_{\sigma_1} < 1/2 + \varepsilon$.

Therefore each agent will end up randomizing their actions;
$\pi_i(a_t \mid \ae_{<t}) \approx 1/2$ most of the time:
If one of the agents (say agent 1) does not sufficiently randomize their actions
in some time steps,
then agent 2 could exploit this by picking a deterministic
adversarial policy in those time steps.
Suppose that this way it can gain a value of $\eps$
compared to the random policy $\pi_R$, i.e.,
$V^{\pi_2}_{\sigma_2} \geq V^{\pi_R}_{\sigma_2} + \eps = 1/2 + \eps$.
But this is a contradiction
because then agent 1 is not playing an $\varepsilon$-best response:%
\index{e-best response@$\eps$-best response}
\[
       V^*_{\sigma_1} - V^{\pi_1}_{\sigma_1}
~=~    V^*_{\sigma_1} - 1 + V^{\pi_2}_{\sigma_2}
~\geq~ 1/2 - 1 + 1/2 + \eps
~=~    \eps
\qedhere
\]
\end{example}

\section{Impossibility Results}
\label{sec:impossibility-results}

Why does our solution to the grain of truth problem
not violate the impossibility results from the literature?
Assume we are playing an infinitely repeated game where
in the stage game
no agent has a weakly dominant action and
the pure action maxmin reward is strictly less then the minmax reward.
The impossibility result of \citet{Nachbar:1997,Nachbar:2005}
states that
there is no class of policies $\Pi$ such that
the following are simultaneously satisfied.
\begin{itemize}
\item \emph{Learnability.}
	Each agent learns to predict the other agent's actions.
\item \emph{Caution and Symmetry.}
	The set $\Pi$ is closed under simple policy modifications
	such as renaming actions.
\item \emph{Purity.}
	There is an $\varepsilon > 0$ such that
	for any stochastic policy $\pi \in \Pi$
	there is a deterministic policy $\pi' \in \Pi$ such that
	if $\pi'(\ae_{<t}) = a$, then $\pi(a \mid \ae_{<t}) > \varepsilon$.
\item \emph{Consistency.}
	Each agent always has an $\varepsilon$-best response available in $\Pi$.%
	\index{e-best response@$\eps$-best response}
\end{itemize}
In order to converge to an $\varepsilon$-Nash equilibrium,
each agent has to have an $\varepsilon$-best response available to them,
so consistency is our target.
Learnability is immediately satisfied for any environment in our class
if we have a dominant prior
according to \autoref{cor:Bayes-on-policy-value-convergence}.
For $\Mrefl^O$ caution and symmetry are also satisfied
since this set is closed under any computable modifications to policies.
However, our class $\Mrefl^O$ avoids this impossibility result because
it violates the purity condition:
Let $T_1, T_2, \ldots$ be an enumeration of $\mathcal{T}$.
Consider the policy $\pi$
that maps history $\ae_{<t}^i$ to the action $1 - O(T_t, \ae^i_{<t}, 1/2)$.
If $T_t$ is deterministic,
then $\pi$ will take a different action than $T_t$
for any history of length $t - 1$.
Therefore no deterministic reflective-oracle-computable policy
can take an action that
$\pi$ assigns positive probability to in every time step.

\citet{FY:2001impossibility} present a condition that
makes convergence to a Nash equilibrium impossible:
if the player's rewards are perturbed by a small real number
drawn from some continuous density $\nu$,
then for $\nu$-almost all realizations the players do not learn to
predict each other and do not converge to a Nash equilibrium.
For example, in a matching pennies game,
rational agents randomize only if the (subjective) values of both actions
are exactly equal.
But this happens only with $\nu$-probability zero, since $\nu$ is a density.
Thus with $\nu$-probability one the agents do not randomize.
If the agents do not randomize,
they either fail to learn to predict each other,
or they are not acting rationally according to their beliefs:
otherwise they would seize the opportunity to
exploit the other player's deterministic action.

But this does not contradict our convergence result:
the class $\Mrefl^O$ is countable and each $\nu \in \Mrefl^O$
has positive prior probability.
Perturbation of rewards with arbitrary real numbers is not possible.
Even more, the argument given by \citet{FY:2001impossibility}
cannot work in our setting:
the Bayesian mixture $\overline\xi$ mixes over $\lambda_T$
for all probabilistic Turing machines $T$.
For Turing machines $T$ that sometimes do not halt,
the oracle decides how to complete
$\lambda_T$ into a measure $\overline{\lambda}_T$.
Thus the oracle has enough influence on the exact values in the Bayesian mixture
that the values of two actions in matching pennies can be made exactly equal.

\section{Discussion}
\label{sec:discussion-multiagent}

This chapter introduced
the class of all reflective-oracle-computable environments $\Mrefl^O$.
This class fully solves the grain of truth problem%
~(\autoref{prob:grain-of-truth})
because it contains (any computable modification of)
Bayesian agents defined over $\Mrefl^O$:
the optimal agents and Bayes optimal agents
over the class are all reflective-oracle-computable~%
(\autoref{thm:optimal-policies-are-oracle-computable} and
\autoref{cor:solution-to-grain-of-truth}).

If the environment is unknown,
then a Bayesian agent may end up playing suboptimally~%
(\autoref{ex:reflective-Bayesians-playing-matching-pennies}).
However, if each agent uses a policy that is asymptotically optimal in mean
(such as the Thompson sampling policy from \autoref{ssec:Thompson-sampling})
then for every $\varepsilon > 0$
the agents converge to an $\varepsilon$-Nash equilibrium~%
(\autoref{thm:convergence-to-equilibrium} and
\autoref{cor:convergence-to-equilibrium}).

However, \autoref{cor:convergence-to-equilibrium} does \emph{not} imply that
two Thompson sampling policies will converge
to cooperation in an iterated prisoner's dilemma\index{prisoner's dilemma}
since always defecting is also a Nash equilibrium.
The exact outcome will depend on the priors involved
and the randomness of the policies.

Our solution to the grain of truth problem is purely theoretical.
However, \autoref{thm:lc-reflective-oracle} shows that
our class $\Mrefl^O$ allows for computable approximations.
This suggests that practical approaches can be derived from this result,
and reflective oracles have already seen
applications in one-shot games~\citep{FTC:2015reflection}.


\chapter{Conclusion}
\label{cha:conclusion}

\falsequote{Nick Bostrom}{The biggest existential risk is that future superintelligences stop simulating us. \\ \phantom{menace}}


Today computer programs exceeding humans in general intelligence
are known only from science fiction.
But research on AI has progressed steadily over the last decades
and there is good reason to believe that
we will be able to build HLAI eventually,
and even sooner than most people think~\citep{MB:2016forcast}.

The advent of strong AI would be the biggest event in human history.
Potential benefits are huge,
as the new level of automation would free us from any kind of undesirable labor.
But there is no reason to believe that humans are
at the far end of the intelligence spectrum.
Rather, humans barely cross the threshold for general intelligence
to be able to use language and do science.
Once we engineer HLAI, it seems unlikely that progress is going to stop;
why not build even smarter machines?

This could lead to an \emph{intelligence explosion}%
\index{intelligence!explosion}~\citep{Good:1965,Vinge:1993,Kurzweil:2005,
Chalmers:2010singularity,
Hutter:2012explosion,Schmidhuber:2012singularity,MS:2012explosion,
EMSS:2013singularity,Shanahan:2015singularity,Eden:2016singularity,
Walsh:2016singularity}:
a (possibly very rapid) increase in intelligence,
e.g., through self-amplification effects from
AIs improving themselves
(if doing AI research is one of humans' capabilities,
then a machine that can do everything humans can do
can also do AI research).
Once machine intelligence is above or far above human level,
machines would steer the course of history.
There is no reason to believe that machines would be adversarial to us,
but nevertheless humanity's fate might rest at the whims of the machines,
just as chimpanzees today have no say in the large-scale events on this planet.

\citet{Bostrom:2002xrisk} defines:
\begin{quote}
\emph{Existential risk}\index{existential risk} ---
One where an adverse outcome would
either annihilate Earth-originating intelligent life or
permanently and drastically curtail its potential.
\end{quote}
Existential risks are events that
have the power to extinguish human life as we know it.
Examples are cosmic events such as an asteroid colliding with Earth.
But cosmic events are unlikely on human timescales
compared to human-made existential risks
from nuclear weapons, synthetic biology, and nanotechnology.

It is possible that artificial intelligence also falls into this category.
\citet{Vinge:1993} was the first person to recognize this:
\begin{quote}
Within thirty years,
we will have the technological means to create superhuman intelligence.
Shortly after, the human era will be ended.
\end{quote}
After Vinge,
\citet{Yudkowsky:2001,Yudkowsky:2008xrisk} can be regarded as
one of the key people
to popularize the potential dangers of AI technology.
\citet{Bostrom:2003} picked up on this issue very early and
gave the topic credibility through
his well-researched and carefully written book \emph{Superintelligence}%
~\citep{Bostrom:2014}.
%
He argues that
we need to solve the \emph{Control Problem}\index{Control Problem}%
---the unique principal-agent problem
that arises with the creation of strong AI~\citep[Ch.~9]{Bostrom:2014}.
In other words:
How do we align strong AI with human values?
How do we ensure that AI remains robust and beneficial?
This research is collected under the umbrella term \emph{AI safety}%
\index{AI safety}.
Currently, we have no idea how to solve these problems even in theory.

Through Yudkowsky's and, more importantly, Bostrom's efforts,
AI-related long-term safety concerns
have now entered the mainstream media.
In 2014 high-profile scientists such as
Stephen Hawking, Max Tegmark, Stuart Russell, and Frank Wilczek
have warned against the dangers posed by AI~\citep{HTRW:2014huffington}.
(See also \citet{Alexander:2015AI} for a collection of
positions by prominent AI researchers.)
Many scientists inside and outside the field have signed an open letter
that research ensuring that AI systems remain robust and beneficial
is both important and timely~\citep{FLI:2015openletter}.
This lead entrepreneur Elon Musk to donate \$10 million
to kick-start research in this field~\citep{FLI:2015Musk};
most of this money has now been distributed across the planet
to 37 different projects~\citep{FLI:2015grants}.
Moreover,
the Future of Life Institute and
the Machine Intelligence Research Institute have formulated
concrete technical research priorities
to make AI more robust and beneficial~\citep{MIRI:2014agenda,FLI:2015}.

At the end of 2015 followed
the announcement of OpenAI,
a nonprofit organization with financial backing from
several famous silicon valley billionaires~\citep{OpenAI:2015}:
\begin{quote}
OpenAI is a non-profit artificial intelligence research company.
Our goal is to advance digital intelligence in the way
that is most likely to benefit humanity as a whole,
unconstrained by a need to generate financial return.
\end{quote}
The mission of OpenAI is
to enable everyone to benefit from AI technology.
But, despite the name,
OpenAI is not committed to publish all of their research freely,
and \citet{Bostrom:2016openness} argues that
unrestricted publication might not be the best idea.

Despite all of the recent efforts in AI safety research,
critical voices within the AI community remain.
Prominently, \citet{Davis:2014singularity},
\citet{Ng:2016singularity},
\citet{Walsh:2016singularity}, and \citet{Lawrence:2016singularity}
have proposed counterarguments
that range from `HLAI is so far away that any worry is misplaced'
to claims that `the safety problem would not be so hard'.
See \citet{SY:2014safety} for a discussion.


If AI poses an existential risk
then a formal theory of strong AI is paramount
to develop technical approaches to mitigate this risk.
Which path will ultimately lead us to HLAI
is in the realm of speculation at this time;
therefore we should make as few and as weak assumptions as possible
and abstract away from possible future implementation details.

This thesis lays some of the groundwork for this endeavor.
We built on top of Hutter's theory of
universal artificial intelligence\index{universal artificial intelligence}.
\autoref{cha:learning} discussed the formal theory of learning.
\autoref{cha:acting} presented several approaches to
acting in unknown environments
(Bayes, Thompson sampling, knowledge-seeking agents, and {\BayesExp}).
\autoref{cha:optimality} analysed these approaches and
discussed notions of optimality
and principled problems with acting under uncertainty in general environment.
\autoref{cha:computability} provided the mathematical tools
to analyze the computational properties of these models.
Finally,
\autoref{cha:grain-of-truth-problem} solved the grain of truth problem,
which lead to convergence to Nash equilibria
in unknown general multi-agent environments.

Our work is theoretical by nature
and there is still a long way to go
until these results make their way into applications.
But a solution \emph{in principle} is a crucial first step
towards solving a problem in practice.
Consider the research paper by \citet{Shannon:1950chess} on
how to solve chess in principle.
The algorithm he describes expands the full game tree of chess
(until some specified depth),
which is completely infeasible even with today's computation power.
His contribution was to show that winning at chess
is a feat that computers can achieve \emph{in principle},
which was not universally accepted at the time.
Even more, his approach already considered the correct ideas
(minimax-search over the game tree) that ultimately lead to the defeat
of the chess champion Garry Kasparov
by the computer Deep Blue 46 years later~\citep{DeepBlue}.

The theory of general reinforcement learning
can serve and has served as
a starting point for future investigation in AI safety.
In particular,
value learning~\citep{Dewey:2011},
self-reflection~\citep{Soares:2015si,FTC:2015reflection},
self-modification~\citep{OR:2011mortality,OR:2012,EFDH:2016modification},
interruptibility~\citep{OA:2016,AO:2016},
decision theory~\citep{ELH:2015sdt},
memory manipulation~\citep{OR:2012memory},
wireheading\index{wireheading}~\citep{RO:2011delusion,EH:2016wireheading}, and
questions of identity~\citep{Orseau:2014multislot,Orseau:2014teleporting}.

It is possible that HLAI is decades or centuries away.
It might also be a few years around the corner.
Whichever is the case,
we are currently completely unprepared for the consequences.
As an AI researcher,
it is tempting to devote your life to
increasing the capability of AI,
advancing it domain after domain, and
showing off with flashy demos and impressive victories over human contestants.
But every technology incurs risks,
and the more powerful the technology,
the higher the risks.
The potential power of AI technology is enormous, and
correspondingly we need to consider the risks, take them seriously,
and proceed to mitigate them.


\backmatter
\appendix
\setcounter{chapter}{1} 

\chapter{Measures and Martingales}
\label{cha:measures-martingales}

In this chapter we provide the proofs for
\autoref{thm:measure-martingale} and \autoref{thm:martingale-measure}.

\begin{proof}[Proof of \autoref{thm:measure-martingale}]
$X_t$ is only undefined if $P(\Gamma_{v_{1:t}}) = 0$.
The set
\[
\{ v \in \Sigma^\infty \mid \exists t.\; P(\Gamma_{v_{1:t}}) = 0 \}
\]
has $P$-measure $0$ and hence
$(X_t)_{t \in \mathbb{N}}$ is well-defined almost everywhere.

$X_t$ is constant on $\Gamma_u$ for all $u \in \Sigma^t$, and
$\F_t$ is generated by a collection of finitely many disjoint sets:
\[
\Sigma^\infty = \biguplus_{u \in \Sigma^t} \Gamma_u.
\]
\begin{enumerate}[(a)]
\item
Therefore $X_t$ is $\F_t$-measurable.

\item
$\Gamma_u = \biguplus_{a \in \Sigma} \Gamma_{ua}$ for all $u \in \Sigma^t$ and $v \in \Gamma_u$,
and therefore
\begin{align*}
\EE[X_{t+1} \mid \F_t](v)
&= \frac{1}{P(\Gamma_u)} \sum_{a \in \Sigma} X_{t+1}(ua) P(\Gamma_{ua})
 = \frac{1}{P(\Gamma_u)} \sum_{a \in \Sigma} \frac{Q(\Gamma_{ua})}{P(\Gamma_{ua})} P(\Gamma_{ua}) \\
&\stackrel{(\ast)}{=} \frac{1}{P(\Gamma_u)} \sum_{a \in \Sigma} Q(\Gamma_{ua})
 = \frac{Q(\Gamma_u)}{P(\Gamma_u)}
 = X_t(v).
\end{align*}
At $(\ast)$ we used the fact that
$P$ is locally absolutely continuous with respect to $Q$.
(If $P$ were not locally absolutely continuous with respect to $Q$,
then there are cases where
$P(\Gamma_u) > 0$, $P(\Gamma_{ua}) = 0$, and $Q(\Gamma_{ua}) \neq 0$.
Therefore $X_{t+1}(ua)$ does not contribute to the expectation and thus
$X_{t+1}(ua) P(\Gamma_{ua}) = 0 \neq Q(\Gamma_{ua})$.)
\end{enumerate}
$P \geq 0$ and $Q \geq 0$ by definition, thus $X_t \geq 0$.
Since $P(\Gamma_\epsilon) = Q(\Gamma_\epsilon) = 1$,
we have $\EE[X_0] = 1$.
\end{proof}

The following lemma gives a convenient condition for
the existence of a measure on $(\Sigma^\omega, \Foo)$.
It is a special case of
the Daniell-Kolmogorov Extension Theorem~\citep[Thm.~26.1]{RW:1994}.

\begin{lemma}[Extending measures]
\label{lem:semimeasure}
Let $q: \Sigma^* \to [0, 1]$ be a function such that
$q(\epsilon) = 1$ and
$\sum_{a \in \Sigma} q(ua) = q(u)$ for all $u \in \Sigma^*$.
Then there exists a unique probability measure $Q$
on $(\Sigma^\infty, \Foo)$ such that
$q(u) = Q(\Gamma_u)$ for all $u \in \Sigma^*$.
\end{lemma}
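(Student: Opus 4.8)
The plan is to prove this as a standard application of the Carathéodory extension theorem, building the measure $Q$ from the pre-measure defined on cylinder sets. First I would define, for each finite string $u \in \Sigma^*$, the value $Q(\Gamma_u) := q(u)$ and extend this by finite additivity to the algebra $\mathcal{A}$ generated by the cylinder sets. An element of $\mathcal{A}$ is a finite disjoint union of cylinder sets, and one first checks this assignment is well-defined on $\mathcal{A}$: since $\Gamma_u = \biguplus_{a \in \Sigma} \Gamma_{ua}$ and $q(u) = \sum_{a \in \Sigma} q(ua)$ by hypothesis, any two representations of the same set in $\mathcal{A}$ as disjoint unions of cylinders can be refined to a common one, and the additive assignment agrees. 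This makes $Q$ a finitely additive set function on $\mathcal{A}$ with $Q(\Sigma^\infty) = q(\epsilon) = 1$.

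The key step is to verify countable additivity of $Q$ on $\mathcal{A}$, which by Carathéodory then yields a unique extension to $\sigma(\mathcal{A}) = \Foo$. Because $\Sigma$ is finite, each cylinder set $\Gamma_u$ is compact in the product topology on $\Sigma^\infty$ (Tychonoff), and every set in $\mathcal{A}$ is both open and closed, hence compact. The standard argument: to check countable additivity it suffices to show that if $(A_n)_{n}$ is a decreasing sequence in $\mathcal{A}$ with $\bigcap_n A_n = \emptyset$, then $Q(A_n) \to 0$. But if each $A_n$ is nonempty and compact and the sequence is decreasing, then $\bigcap_n A_n \neq \emptyset$ by the finite intersection property — contradiction. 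Hence eventually $A_n = \emptyset$ and $Q(A_n) \to 0$. This is the main obstacle in the sense that it is the one genuinely non-formal point; everything else is bookkeeping.

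Finally I would invoke the Carathéodory extension theorem to obtain a probability measure $Q$ on $\Foo$ extending the pre-measure, and note uniqueness follows from the $\pi$-$\lambda$ theorem since the cylinder sets form a $\pi$-system generating $\Foo$ on which any two extending measures agree. By construction $Q(\Gamma_u) = q(u)$ for all $u \in \Sigma^*$, which is the claim. (Alternatively, one can simply cite the Daniell–Kolmogorov extension theorem, e.g.\ \citealp[Thm.~26.1]{RW:1994}, of which this is the special case where the index set is $\mathbb{N}$ and each coordinate space is the finite set $\Sigma$; the hypotheses $q(\epsilon)=1$ and $\sum_{a} q(ua) = q(u)$ are exactly the Kolmogorov consistency conditions for the finite-dimensional distributions.) I expect to present the short compactness proof rather than only citing, since it is self-contained and the finiteness of $\Sigma$ makes it clean.
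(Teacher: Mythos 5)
Your proof is correct and takes essentially the same route as the paper's: extend $q$ from the cylinder sets via Carath\'eodory's extension theorem, with countable additivity obtained for free from compactness of cylinders in the product topology on $\Sigma^\infty$ (the paper likewise notes that the lemma is a special case of the Daniell--Kolmogorov theorem you cite). The only cosmetic difference is packaging: the paper works on the semiring of cylinder sets and verifies $\sigma$-subadditivity by observing that any countable union lying in the semiring is in fact a finite union, whereas you work on the generated algebra and verify continuity at $\emptyset$ via the finite intersection property; both are standard formulations of the same compactness argument.
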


To prove this lemma, we need the following two ingredients.

\begin{definition}[Semiring]
\label{def:semiring}\index{semiring}
A set $\mathcal{R} \subseteq 2^\Omega$ is called \emph{semiring over $\Omega$} iff
\begin{enumerate}[(a)]
\item $\emptyset \in \mathcal{R}$,
\item for all $A, B \in \mathcal{R}$, the set $A \cap B \in \mathcal{R}$, and
\item for all $A, B \in \mathcal{R}$,
	there are pairwise disjoint sets $C_1, \ldots, C_n \in \mathcal{R}$
	such that $A \setminus B = \biguplus_{i=1}^n C_i$.
\end{enumerate}
\end{definition}

\begin{theorem}[{Carathéodory's Extension Theorem; \citealp[Thm.~A.1.1]{Durrett:2010}}]
\label{thm:caratheodory}\index{Carathéodory's extension theorem}
Let $\mathcal{R}$ be a semiring over $\Omega$ and
let $\mu: \mathcal{R} \to [0,1]$ be a function such that
\begin{enumerate}[(a)]
\item $\mu(\Omega) = 1$
	\emph{(normalization)},
\item $\mu(\biguplus_{i=1}^n A_i) = \sum_{i=1}^n \mu(A_i)$
	for pairwise disjoint sets $A_1, \ldots, A_n \in \mathcal{R}$
	such that $\biguplus_{i=1}^n A_i \in \mathcal{R}$
	\emph{(finite additivity)}, and
\item $\mu(\bigcup_{i \geq 0} A_i) \leq \sum_{i \geq 0} \mu(A_i)$
	for any collection $(A_i)_{i \geq 0}$ such that
	each $A_i \in \mathcal{R}$ and $\bigcup_{i \geq 0} A_i \in \mathcal{R}$
	\emph{($\sigma$-subadditivity)}.
\end{enumerate}
Then there is a unique extension $\overline{\mu}$ of $\mu$
that is a probability measure on $(\Omega, \sigma(\mathcal{R}))$ such that
$\overline{\mu}(A) = \mu(A)$ for all $A \in \mathcal{R}$.
\end{theorem}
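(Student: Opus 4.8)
The plan is to prove Carathéodory's extension theorem by the classical outer-measure construction. Note first that $\Omega \in \mathcal{R}$ (otherwise $\mu(\Omega)$ in condition (a) would be undefined). I would define an outer measure $\mu^*$ on the full power set $2^\Omega$ by covering sets with countable families drawn from the semiring:
\[
   \mu^*(E)
:= \inf \Big\{ \sum_{i \geq 0} \mu(A_i) \;\Big|\;
      E \subseteq \bigcup_{i \geq 0} A_i,\; A_i \in \mathcal{R} \Big\}.
\]
Since $\Omega \in \mathcal{R}$ every set admits a cover, so $\mu^*$ is well-defined with values in $[0,1]$. A routine check gives $\mu^*(\emptyset) = 0$, monotonicity, and countable subadditivity (the last by choosing, for each $E_n$ in a countable family, a cover exhausting $\mu^*(E_n)$ up to $\varepsilon 2^{-n}$ and amalgamating the covers). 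Thus $\mu^*$ is an outer measure.

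Next I would introduce the collection of Carathéodory-measurable sets
\[
   \mathcal{M}
:= \{ A \subseteq \Omega \mid
      \mu^*(E) = \mu^*(E \cap A) + \mu^*(E \cap A^c)
      \text{ for all } E \subseteq \Omega \},
\]
and prove Carathéodory's lemma: $\mathcal{M}$ is a $\sigma$-algebra and $\mu^*|_{\mathcal{M}}$ is a countably additive measure. Closure of $\mathcal{M}$ under complement is immediate from the symmetric defining equation; closure under finite unions follows by splitting an arbitrary test set $E$ first against $A$ and then against $B$; countable additivity and closure under countable unions then follow by first establishing finite additivity of $\mu^*$ on disjoint measurable sets and passing to the limit using monotonicity together with subadditivity.

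The crucial remaining steps are $\mathcal{R} \subseteq \mathcal{M}$ and $\mu^*|_{\mathcal{R}} = \mu$. For the inclusion, given $A \in \mathcal{R}$ and a test set $E$ with cover $(C_i) \subseteq \mathcal{R}$, I would use the semiring axioms that $C_i \cap A \in \mathcal{R}$ and $C_i \setminus A = \biguplus_j D_{ij}$ is a finite disjoint union of sets in $\mathcal{R}$, together with finite additivity (b) of $\mu$, to bound $\mu^*(E \cap A) + \mu^*(E \cap A^c) \leq \sum_i \mu(C_i)$; taking the infimum over covers yields one inequality, the reverse being automatic from subadditivity, so $A \in \mathcal{M}$. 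For agreement, $\mu^*(A) \leq \mu(A)$ is trivial (cover $A$ by itself), while $\mu^*(A) \geq \mu(A)$ is exactly where $\sigma$-subadditivity (c) enters: for any cover $A \subseteq \bigcup_i A_i$ the sets $A \cap A_i \in \mathcal{R}$ cover $A$, so $\mu(A) \leq \sum_i \mu(A \cap A_i) \leq \sum_i \mu(A_i)$. Since $\mathcal{M}$ is a $\sigma$-algebra containing $\mathcal{R}$, it contains $\sigma(\mathcal{R})$, and I set $\overline{\mu} := \mu^*|_{\sigma(\mathcal{R})}$. Uniqueness follows from a Dynkin $\pi$-$\lambda$ argument: $\mathcal{R}$ is a $\pi$-system by the intersection axiom (b) of \autoref{def:semiring}, and two probability measures agreeing on a generating $\pi$-system agree on the generated $\sigma$-algebra.

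The main obstacle is Carathéodory's lemma, that the abstractly defined $\mathcal{M}$ really is a $\sigma$-algebra carrying a countably additive measure. The splitting arguments are delicate because they manipulate $\mu^*$ of arbitrary, possibly non-measurable, test sets, and the passage from finite to countable additivity must carefully combine monotonicity with subadditivity without circular appeal to additivity. A secondary subtlety is the step $\mathcal{R} \subseteq \mathcal{M}$, which genuinely needs the full semiring structure — that set differences decompose into finite disjoint unions — rather than mere closure under intersection.
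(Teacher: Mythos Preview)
Your proof sketch is the standard outer-measure construction and is correct. However, the paper does not prove this theorem at all: it is quoted as a known result from \citet[Thm.~A.1.1]{Durrett:2010} and used as a black box in the proof of \autoref{lem:semimeasure}, so there is no paper proof to compare against.
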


\begin{proof}[Proof of \autoref{lem:semimeasure}]
We show the existence of $Q$ using
\hyperref[thm:caratheodory]{Carathéodory's Extension Theorem}.
Define $\mathcal{R} := \{ \Gamma_u \mid u \in \Sigma^* \} \cup \{ \emptyset \}$.
\begin{enumerate}[(a)]
\item $\emptyset \in \mathcal{R}$.

\item For any $\Gamma_u, \Gamma_v \in \mathcal{R}$, either
\begin{itemize}
\item $u$ is a prefix of $v$ and $\Gamma_u \cap \Gamma_v = \Gamma_v \in \mathcal{R}$, or
\item $v$ is a prefix of $u$ and $\Gamma_u \cap \Gamma_v = \Gamma_u \in \mathcal{R}$, or
\item $\Gamma_u \cap \Gamma_v = \emptyset \in \mathcal{R}$.
\end{itemize}

\item For any $\Gamma_u, \Gamma_v \in \mathcal{R}$,
\begin{itemize}
\item $\Gamma_u \setminus \Gamma_v = \biguplus_{w \in \Sigma^{|v| - |u|} \setminus \{ x \}} \Gamma_{uw}$
	if $v = ux$, i.e., $u$ is a prefix of $v$, and
\item $\Gamma_u \setminus \Gamma_v = \emptyset$ otherwise.
\end{itemize}
\end{enumerate}
Therefore $\mathcal{R}$ is a semiring.
By definition of $\mathcal{R}$, we have $\sigma(\mathcal{R}) = \Foo$.

The function $q: \Sigma^* \to [0,1]$ naturally gives rise to a function
$\mu: \mathcal{R} \to [0,1]$ with $\mu(\emptyset) := 0$ and
$\mu(\Gamma_u) := q(u)$ for all $u \in \Sigma^*$.
We will now check the prerequisites of
\hyperref[thm:caratheodory]{Carathéodory's Extension Theorem}.
\begin{enumerate}[(a)]
\item (Normalization.)
$\mu(\Sigma^\infty) = \mu(\Gamma_\epsilon) = q(\epsilon) = 1$.

\item (Finite additivity.)
Let $\Gamma_{u_1}, \ldots, \Gamma_{u_k} \in \mathcal{R}$ be pairwise disjoint sets such that
$\Gamma_w := \biguplus_{i=1}^k \Gamma_{u_i} \in \mathcal{R}$.
Let $\ell := \max \{ |u_i| \mid 1 \leq i \leq k \}$, then
$\Gamma_w = \biguplus_{v \in \Sigma^\ell} \Gamma_{wv}$.
By assumption, $\sum_{a \in \Sigma} q(ua) = q(u)$,
thus $\sum_{a \in \Sigma} \mu(\Gamma_{ua}) = \mu(\Gamma_u)$ and inductively
we have
\begin{equation}\label{eq-semimeasures-1}
\mu(\Gamma_{u_i}) = \sum_{s \in \Sigma^{\ell - |u_i|}} \mu(\Gamma_{u_i s}),
\end{equation}
and
\begin{equation}\label{eq-semimeasures-2}
\mu(\Gamma_w) = \sum_{v \in \Sigma^\ell} \mu(\Gamma_{wv}).
\end{equation}
For every string $v \in \Sigma^\ell$,
the concatenation $wv \in \Gamma_w = \biguplus_{i=1}^k \Gamma_{u_i}$,
so there is a unique $i$ such that $wv \in \Gamma_{u_i}$.
Hence there is a unique string $s \in \Sigma^{\ell - |u_i|}$ such that $wv = u_i s$.
Together with \eqref{eq-semimeasures-1} and \eqref{eq-semimeasures-2} this yields
\[
  \mu \left( \biguplus_{i=1}^k \Gamma_{u_i} \right)
= \mu(\Gamma_w)
= \sum_{v \in \Sigma^\ell} \mu(\Gamma_{wv})
= \sum_{i=1}^k \sum_{s \in \Sigma^{\ell - |u_i|}} \mu(\Gamma_{u_i s})
= \sum_{i=1}^k \mu(\Gamma_{u_i}).
\]

\item ($\sigma$-subadditivity.)
We will show that each $\Gamma_u$ is compact
with respect to the topology $\mathcal{O}$ generated by $\mathcal{R}$.
$\sigma$-subadditivity then follows from (b)
because every countable union is in fact a finite union.

We will show that the topology $\mathcal{O}$ is the product topology
of the discrete topology on $\Sigma$.
(This establishes that $(\Sigma^\omega, \mathcal{O})$ is a Cantor Space.)
Every projection $\pi_k: \Sigma^\infty \to \Sigma$ selecting the $k$-th symbol is continuous,
since $\pi_k^{-1}(a) = \bigcup_{u \in \Sigma^{k-1}} \Gamma_{ua}$ for every $a \in \Sigma$.
Moreover, $\mathcal{O}$ is the coarsest topology with this property, since we can generate
every open set $\Gamma_u \in \mathcal{R}$ in the base of the topology by
\[
\Gamma_u = \bigcap_{i=1}^{|u|} \pi_i^{-1}(\{ u_i \}).
\]

The set $\Sigma$ is finite and thus compact.
By Tychonoff's Theorem, $\Sigma^\infty$ is also compact.
Therefore $\Gamma_u$ is compact since it is homeomorphic to $\Sigma^\infty$
via the canonical map $\beta_u: \Sigma^\infty \to \Gamma_u$, $v \mapsto uv$.
\end{enumerate}
From (a), (b), and (c) \hyperref[thm:caratheodory]{Carathéodory's Extension Theorem} yields
a unique probability measure $Q$ on $(\Sigma^\infty, \Foo)$ such that
$Q(\Gamma_u) = \mu(\Gamma_u) = q(u)$ for all $u \in \Sigma^*$.
\end{proof}

Using \autoref{lem:semimeasure},
the proof of \autoref{thm:martingale-measure} is now straightforward.

\begin{proof}[Proof of \autoref{thm:martingale-measure}]
We define a function $q: \Sigma^* \to \mathbb{R}$, with
\[
q(u) := X_{|u|}(v) P(\Gamma_u)
\]
for any $v \in \Gamma_u$.
The choice of $v$ is irrelevant because $X_{|u|}$ is constant on $\Gamma_u$
since it is $\F_t$-measurable.
In the following,
we also write $X_t(u)$ if $|u| = t$ to simplify notation.

The function $q$ is non-negative because $X_t$ and $P$ are both non-negative.
Moreover, for any $u \in \Sigma^t$,
\[
     1
=    \EE[X_t]
=    \int_{\Sigma^\infty} X_t dP
\geq \int_{\Gamma_u} X_t dP
=    P(\Gamma_u) X_t(u)
=    q(u).
\]
Hence the range of $q$ is a subset of $[0, 1]$.

We have $q(\epsilon) = X_0(\epsilon) P(\Gamma_\epsilon) = \EE[X_0] = 1$
since $P$ is a probability measure and
$\F_0 = \{ \emptyset, \Sigma^\infty \}$ is the trivial $\sigma$-algebra.
Let $u \in \Sigma^t$.
\begin{align*}
   \sum_{a \in \Sigma} q(ua)
&= \sum_{a \in \Sigma} X_{t+1}(ua) P(\Gamma_{ua})
 = \int_{\Gamma_u} X_{t+1} dP \\
&= \int_{\Gamma_u} \EE[X_{t+1} \mid \F_t] dP
 = \int_{\Gamma_u} X_t dP
 = P(\Gamma_u) X_t(u)
 = q(u).
\end{align*}
By \autoref{lem:semimeasure},
there is a probability measure $Q$ on $(\Sigma^\infty, \Foo)$
such that $q(u) = Q(\Gamma_u)$ of all $u \in \Sigma^*$.
Therefore, for all $v \in \Sigma^\infty$ and
for all $t \in \mathbb{N}$ with $P(\Gamma_{v_{1:t}}) > 0$,
\[
  X_t(v)
= \frac{q(v_{1:t})}{P(\Gamma_{v_{1:t}})}
= \frac{Q(\Gamma_{v_{1:t}})}{P(\Gamma_{v_{1:t}})}.
\]
Moreover,
$P$ is locally absolutely continuous with respect to $Q$ since
$P(\Gamma_u) = 0$ implies
\[
Q(\Gamma_u) = q(u) = X_{|u|}(u) P(\Gamma_u) = 0.
\qedhere
\]
\end{proof}


\addcontentsline{toc}{chapter}{Bibliography}
\bibliography{ai,sv}


\makeatletter
\chapter*{List of Notation\@mkboth{List of Notation}{List of Notation}}
\makeatother
\addcontentsline{toc}{chapter}{List of Notation}
\label{cha:notation}

\subsubsection*{Abbreviations}

\begin{longtable}{lp{0.85\textwidth}}
AIXI & Bayesian RL agent with a Solomonoff prior, see \autoref{ssec:AIXI} \\
AI & artificial intelligence \\
HLAI & human-level artificial intelligence \\
MDL & minimum description length, see \autoref{ex:MDL} \\
MDP & Markov decision process, see \autoref{ssec:typical-environment-classes} \\
POMDP & partially observable Markov decision process,
	see \autoref{ssec:typical-environment-classes} \\
RL & reinforcement learning \\
UTM & universal Turing machine, \autoref{sec:AIT} \\
\end{longtable}

\subsubsection*{Mathematical notation}

\begin{longtable}{lp{0.77\textwidth}}
$:=$
	& defined to be equal \\
$:\in$
	& defined to be an element of \\
$A$, $B$, $\Omega$
	& sets \\
$\#A$
	& the cardinality of the set $A$, i.e., the number of elements \\
$\Delta\Omega$
	& the set of probability distributions over a finite or countable set $\Omega$ \\
$\one_x$
	& the characteristic function that is $1$ for $x$
	and $0$ otherwise. \\
$f$, $g$
	& functions \\
$f \timesgeq g$
	& there is a constant $c > 0$ such that $f \geq cg$ \\
$f \timeseq g$
	& $f \timesgeq g$ and $g \timesgeq f$ \\
$\mathbb{N}$
	& the set of natural numbers, starting with $1$ \\
$\mathbb{Q}$
	& the set of rational numbers \\
$\mathbb{R}$
	& the set of real numbers \\
$n$, $k$, $t$, $m$, $i$, $j$
	& natural numbers \\
$t$
	& (current) time step, $t \in \mathbb{N}$ \\
$k$
	& some other time step, $k \in \mathbb{N}$ \\
$q, q'$
	& rational numbers \\
$r$
	& real number \\
$\X$
	& a finite nonempty alphabet \\
$\X^*$
	& the set of all finite strings over the alphabet $\X$ \\
$\X^\infty$
	& the set of all infinite strings over the alphabet $\X$ \\
$\X^\sharp$
	& $\X^\sharp := \X^* \cup \X^\infty$,
	the set of all finite and infinite strings over the alphabet $\X$ \\
$x, y, z$
	& (typically finite) strings from $\X^\sharp$ \\
$x_{<t}$
	& the first $t - 1$ symbols of the string $x$ \\
$x \sqsubseteq y$
	& the string $x$ is a prefix of the string $y$ \\
$\mathrm{zeros}(x)$
	& the number of zeros in the binary string $x \in \{ 0, 1 \}^*$ \\
$\mathrm{ones}(x)$
	& the number of ones in the binary string $x \in \{ 0, 1 \}^*$ \\
$\phi$, $\psi$
	& computable functions \\
$\varphi$
	& formula of first-order logic \\
$\eta$
	& computable relation/quantifier-free formula \\
$T$
	& a Turing machine \\
$p, p'$
	& programs on a universal Turing machine in the form of finite binary strings \\
$|p|$
	& length of the program $p$ in bits \\
$K$
	& the Kolmogorov complexity of a string or a semimeasure \\
$\Km$
	& the monotone Kolmogorov complexity of a string \\
$\Ent$
	& entropy \\
$\KL_m$
	& KL-divergence \\
$D_m$
	& total variation distance \\
$\IG$
	& information gain \\
$F$
	& expected total variation distance \\
$\F$, $\F_t$, $\F_\infty$
	& $\sigma$-algebras \\
$\Gamma_x$
	& the cylinder set of all strings starting with $x$ \\
$A$, $H$, $E$
	& measurable sets \\
$X$, $Y$
	& real-valued random variables \\
$P$ &
	a distribution over $X^\infty$, the \emph{true} distribution \\
$Q$ &
	a distribution over $X^\infty$,
	the learning algorithm or belief distribution \\
$\Bernoulli(r)$
	& a Bernoulli process with parameter $r$ \\
$\lambda$
	& the uniform measure or Lebesgue measure \\
$\rho_L$
	& Laplace rule \\
$M$ & Solomonoff's prior \\
$\MM$
	& the measure mixture \\
$S_{Kt}$
	& the speed prior \\
$\nu$
	& a semimeasure \\
$\nu\norm$
	& the Solomonoff normalization of the semimeasure $\nu$ \\
$\gg$
	& absolute continuity \\
$\timesgeq_W$
	& weak dominance \\
$\gg_L$
	& local absolute continuity \\
$\A$
	& the finite set of possible actions \\
$\O$
	& the finite set of possible observations \\
$\E$
	& the finite set of possible percepts,
	$\E \subset \O \times \mathbb{R}$ \\
$\alpha, \beta$
	& two different actions, $\alpha, \beta \in \A$ \\
$a_t$
	& the action in time step $t$ \\
$o_t$
	& the observation in time step $t$ \\
$r_t$
	& the reward in time step $t$, bounded between $0$ and $1$ \\
$e_t$
	& the percept in time step $t$, we use $e_t = (o_t, r_t)$ implicitly \\
$\ae_{<t}$
	& the first $t - 1$ interactions,
	$a_1 e_1 a_2 e_2 \ldots a_{t-1} e_{t-1}$
	(a history of length $t - 1$) \\
$h$
	& a history, $h \in \H$ \\
$\epsilon$
	& the history of length $0$ \\
$\eps$, $\delta$
	& small positive real numbers \\
$\gamma$
	& the discount function $\gamma: \mathbb{N} \to \mathbb{R}_{\geq0}$,
	defined in \autoref{def:discounting} \\
$\Gamma_t$
	& a discount normalization factor,
	$\Gamma_t := \sum_{k=t}^\infty \gamma(k)$ \\
$m$
	& horizon of the agent (how many steps it plans ahead) \\
$H_t(\eps)$
	& an $\eps$-effective horizon \\
$\nu, \mu, \rho$
	& environments \\
$\pi, \tilde\pi$
	& policies, $\pi, \tilde\pi: \H \to \A$ \\
$\pi^*_\nu$
	& an optimal policy for environment $\nu$ \\
$\nu^\pi$
	& the history distribution generated by policy $\pi$ in environment $\nu$ \\
$\EE^\pi_\nu$
	& the expectation with respect to the history distribution $\nu^\pi$ \\
$V^\pi_\nu$
	& the $\nu$-expected value of the policy $\pi$ \\
$V^*_\nu$
	& the optimal value in environment $\nu$ \\
$W^\pi_\nu$
	& the iterative value of the policy $\pi$ in environment $\nu$ \\
$W^*_\nu$
	& the optimal iterative value in environment $\nu$ \\
$\expectimax{}$
	& the max-sum-operator \\
$R_m(\pi, \mu)$
	& regret of policy $\pi$ in environment $\mu$ for horizon $m$ \\
$\Upsilon_\xi(\pi)$
	& the Legg-Hutter intelligence of policy $\pi$
	measured in the universal mixture $\xi$ \\
$\underline\Upsilon_\xi$
	& the minimal Legg-Hutter intelligence
	measured in the universal mixture $\xi$ \\
$\overline\Upsilon_\xi$
	& the maximal Legg-Hutter intelligence
	measured in the universal mixture $\xi$ \\
$\M$
	& a class of environments \\
$\Mccs$
	& the class of all chronological contextual semimeasures \\
$\Mlscccs$
	& the class of all lower semicomputable chronological contextual semimeasures \\
$\Mcomp$
	& the class of all computable chronological contextual measures \\
$\Mrefl^O$
	& the class of all reflective-oracle-computable environments \\
$U$
	& reference universal Turing machine \\
$U'$
	& a `bad' universal Turing machine \\
$w$
	& a positive prior over the environment class \\
$w'$
	& a `bad' positive prior over the environment class \\
$\xi$
	& the universal mixture over all environments $\Mlscccs$
	given by the reference UTM $U$ \\
$\xi'$
	& a `bad' universal mixture over all environments $\Mlscccs$
	given by the `bad' UTM $U'$ \\
$\mathcal{T}$
	& the set of all probabilistic Turing machines \\
$O$
	& an oracle \\
$\tilde O$
	& a partial oracle \\
$\lambda_T$
	& the semimeasure generated by Turing machine $T$ \\
$\lambda_T^O$
	& the semimeasure generated by Turing machine $T$ running with oracle $O$ \\
$\overline\lambda_T^O$
	& the completion of $\lambda_T^O$ into a measure using oracle $O$ \\
$\sigma$
	& a multi-agent environment \\
$\sigma_i$
	& the subjective environment of agent $i$
	  acting in multi-agent environment $\sigma$ \\
\end{longtable}

\cleardoublepage
\phantomsection
\printindex

\end{document}